\theoremstyle{plain}
\newtheorem{theorem}{Theorem}[section]
\newtheorem{proposition}[theorem]{Proposition}
\newtheorem{lemma}[theorem]{Lemma}
\newtheorem{corollary}[theorem]{Corollary}
\theoremstyle{definition}
\newtheorem{definition}[theorem]{Definition}
\newtheorem{assumption}[theorem]{Assumption}
\theoremstyle{remark}
\newtheorem{remark}[theorem]{Remark}
\newtheorem{example}[theorem]{Example}
\newtheorem{fact}[theorem]{Fact}
\newcommand{\eqdef}{\coloneqq}
\newcommand{\lt}{\left}
\newcommand{\rt}{\right}
\renewcommand{\sb}[1]{\left[#1\right]}
\newcommand{\cb}[1]{\lt\{#1\rt\}}
\newcommand{\fr}{\frac}
\newcommand{\eps}{\varepsilon}
\newcommand{\Rd}{\mathbb{R}^d}
\newcommand{\prox}{\mathop{\mathrm{prox}}\nolimits}
\newcommand{\sumin}{\sum_{i=1}^n}
\newcommand{\sumjn}{\sum_{j=1}^n}
\newcommand{\avein}{\frac{1}{n}\sum_{i=1}^n}
\newcommand{\aveid}{\frac{1}{d}\sum_{i=1}^d}
\newcommand{\mbR}{\mathbb{R}}
\newcommand\ec[2][]{\ensuremath{\mathbb{E}_{#1} \left[#2\right]}}  % IMPORTANT
\newcommand\ev[1]{\ensuremath{\left \langle #1 \right \rangle}}  % IMPORTANT
\newcommand{\sqn}[1]{{\left\lVert#1\right\rVert}^2}
\newcommand{\norm}[1]{\left\| #1 \right\|}
\newcommand{\Norm}[1]{\left\|#1\right\|}
\newcommand{\sqN}[1]{\Norm{#1}^2}
\newcommand{\xstar}{x^{\star}}
\newcommand{\mP}{\mathbf{P}}
\newcommand{\mS}{\mathbf{S}}
\DeclareMathOperator*{\argmin}{arg\,min}
\DeclareMathOperator*{\minimize}{\mathrm{minimize}}
\newcommand{\green}{\color{mydarkgreen}}
\newcommand{\red}{\color{mydarkred}}
\newcommand{\cmark}{\green{\ding{51}}}%
\newcommand{\xmark}{\red{\ding{55}}}%
\newcommand{\sqnorm}[1]{\left\| #1 \right\|^2}
\newcommand{\Exp}[1]{\mathbb{E}\!\left[ #1 \right]}
\newcommand{\oma}{\omega_{\mathrm{ran}}}
\definecolor{mydarkgreen}{rgb}{0,0.42,0}
\definecolor{mydarkred}{rgb}{0.75,0,0}
\definecolor{mygreen2}{RGB}{0,120,20}
\newcommand{\algname}[1]{{\sf \texttt{\scalefont{0.99}{#1}}}\xspace}
\def\2{$^2$}			 % for raised to power 2
\def\3{$^3$}			 % for raised to power 3
\def\-2{$^{-2}$}		 % for raised to power -2
\def\-3{$^{-3}$}		 % for raised to power -2
\def\-1{$^{-1}$}		 % for raised to power -1
\newcommand{\Diag}{\mathrm{Diag}}
\def\gI{{\mathcal{I}}}
\def\gN{{\mathcal{N}}}
\newcommand{\mL}{\mathbf{L}}
\newcommand{\mI}{\mathbf{I}}
\DeclareMathOperator{\moL}{\overline{\mathbf{L}}}
\DeclareMathOperator{\moB}{\overline{\mathbf{B}}}
\DeclareMathOperator{\moW}{\overline{\mathbf{W}}}
\DeclareMathOperator{\ob}{\overline{b}}
\newcommand{\ft}{\algname{$R^2$-DSnoT}}
\newcommand{\vA}{{\mathbf{A}}}
\newcommand{\vB}{{\mathbf{B}}}
\newcommand{\vC}{{\mathbf{C}}}
\newcommand{\vD}{{\mathbf{D}}}
\newcommand{\vG}{{\mathbf{G}}}
\newcommand{\vI}{{\mathbf{I}}}
\newcommand{\vS}{{\mathbf{S}}}
\newcommand{\vW}{{\mathbf{W}}}
\newcommand{\vX}{{\mathbf{X}}}
\newcommand{\vY}{{\mathbf{Y}}}
\definecolor{bgcolor}{rgb}{0.93,0.99,1}
\definecolor{bgcolor2}{rgb}{0.8,1,0.8}
\definecolor{bgcolor3}{rgb}{0.50,0.90,0.50}
\definecolor{bgcolor4}{rgb}{0.94,0.97,1}
\definecolor{bgcolor}{rgb}{0.93,0.99,1}
\definecolor{bgcolor2}{rgb}{0.8,1,0.8}
\definecolor{bgcolor3}{rgb}{0.50,0.90,0.50}
\DeclareMathOperator*{\argmax}{arg\,max}
\patchcmd{\@makechapterhead}{50\p@}{20pt}{}{}
\patchcmd{\@makeschapterhead}{50\p@}{20pt}{}{}
\newglossaryentry{symb:Pi}{
name=$\pi$, type=symbols,
description=A mathematical constant whose value is the ratio of any circle's circumference to its diameter,
sort=symbolpi
}
\newglossaryentry{symb:Phi}{
name=$\varphi$, type=symbols,
description=An angle,
sort=symbolphi
}
\newglossaryentry{symb:Lambda}{
name=$\lambda$, type=symbols,
description=Lambda indicates usually an eigenvalue in linear algebra,
sort=symbollambda
}
\newacronym{toc}{ToC}{Table of Contents}
\newacronym{los}{LoS}{List of Symbols}
\newacronym{loa}{LoA}{List of Abbreviations}
\newacronym{phd}{PhD}{Doctoral}
\newacronym{MS}{MS}{Masters}
\newacronym{M$}{MS}{Microsoft}
\newacronym{CD}{CD}{Compact Disc}
\newacronym{kaust}{KAUST}{King Abdullah University of Science and Technology}
\newacronym{AD}{AD}{Active Directory\protect\glsadd{glos:AD}}
\newglossaryentry{glos:AD}{
name=Active Directory,
description={Active Directory is the directory service for Windows based networks, that allows central organization and administration of any network resource. It allows a single-sign-on concept independent from network topologies or network protocols. As a prerequisite you need a Windows Server acting as Domain Controller. This computer stores all necessary data, e.g.~usernames and corresponding passwords}
}
\newglossaryentry{glos:RespF}{name=response file, description={A file 
that allows unattended software installation}}
\newcommand{\mathsym}[1]{{}}
\newcommand{\unicode}[1]{{}}
\renewcommand{\thechapter}{\arabic{chapter}}
\renewcommand\bibname{\centering BIBLIOGRAPHY}
\newcommand{\orcid}{\includegraphics[width=8pt]{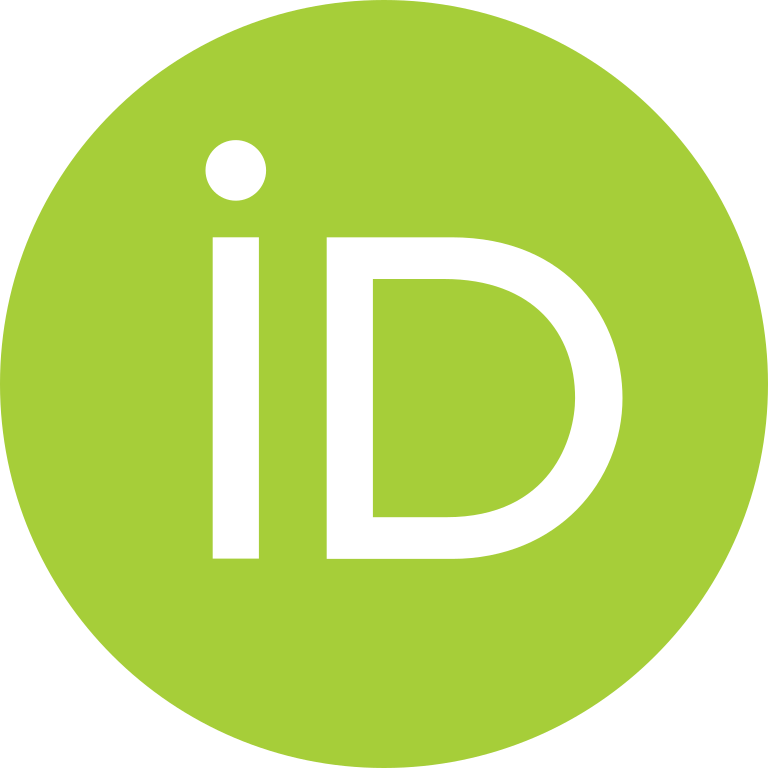}} % ORCID ID
\begin{document}
\singlespacing  % For single spacing throughout the document
% \setstretch{1.0}

% $$$$$$$$$$$$$$$$$  Start of Thesis Front matter   $$$$$$$$$$$$$$

%\vspace{10mm}  % vertical space
\thispagestyle{empty}
\addvspace{5mm}  % vertical space until length

%$$$$$$$$$$$$$$$$$$$$$$$$$$$$$$$$$$$$$$$$$$$$$$$$$$$$$$$$$$$$$$$$$$$$$$$$$$$$$$$$$
\begin{singlespace}

% make the title page
\begin{center}
\begin{doublespace}
% {\textbf{{\large Optimizing Centralized and Federated Learning: Efficient Network Compression and Communication}}}%\vfill % Write your thesis title
{\textbf{{\large Strategies for Improving Communication Efficiency\\ in Distributed and Federated Learning:\\ Compression, Local Training, and Personalization}}}%\vfill % Write your thesis title 
% {\textbf{{\large Strategies for Improving Communication Efficiency in Federated Learning:\\ Compression, Local Training, and Personalization}}}%\vfill % Write your thesis title 
\end{doublespace}

\vspace{10mm}
{Dissertation by}\\
{Kai Yi} %\vfill

\vspace{30mm}

{ In Partial Fulfillment of the Requirements}\\[12pt]
{ For the Degree of}\\[12pt]
{Doctor of Philosophy} \vfill
{King Abdullah University of Science and Technology }\\
{Thuwal, Kingdom of Saudi Arabia}
\vfill

%\maketitle
% \begin{onehalfspace}
{\copyright April, 2025}\\
Kai Yi\\               % write your First name second name
All rights reserved\\

\orcid{} \small \href{https://orcid.org/0000-0003-0415-3584}{https://orcid.org/0000-0003-0415-3584}\\
\href{https://kaiyi.me/}{kaiyi.me}
% \end{onehalfspace}

\end{center}
% end of title page
%$$$$$$$$$$$$$$$$$$$$$$$$$$$$$$$$$$$$$$$$$$$$$$$$$$$$$$$$$$$$$$$$$$$$$$$$$$$$$$$$$
\newpage

%$$$$$$$$$$$$$$$$$$$$$$$$$$$$$$$$$$$$$$$$$$$$$$$$$$$$$$$$$$$$$$$$$$$$$$$$$$$$$$$$$

%
\chaptertitlefont{\fontsize{14}{15}\selectfont\centering}  

\begin{singlespace}

% Abstract File
% Do not remove centering environment below
\begin{center}

\end{center}

\begin{center}
{{\bf\fontsize{14pt}{14.5pt}\selectfont \uppercase{ABSTRACT}}}
\end{center}

\doublespacing
\addcontentsline{toc}{chapter}{Abstract}

\begin{center}
	% \begin{doublespace}
        \begin{onehalfspace}
{\fontsize{14.5pt}{14.5pt}\selectfont {Strategies for Improving Communication Efficiency\\ in Distributed and Federated Learning:\\ Compression, Local Training, and Personalization}}\\
		{\fontsize{14.5pt}{14.5pt}\selectfont {Kai Yi}}\\
%		{\fontsize{12pt}{12.5pt}\selectfont {Ph.D. Year}}\\
	% \end{doublespace}
        \end{onehalfspace}
\end{center}

\singlespacing
Distributed and federated learning have emerged as essential paradigms for training machine learning models across decentralized data sources while preserving privacy. However, communication overhead remains a major bottleneck, particularly in large-scale, heterogeneous environments. This dissertation presents a comprehensive exploration of strategies to improve communication efficiency in distributed and federated learning systems, focusing on three key areas: model compression, local training, and personalization.

We begin by establishing a unified theoretical framework for biased and unbiased compression operators, providing convergence guarantees for both convex and non-convex settings. Building on this, we propose novel local training strategies that explicitly incorporate personalization mechanisms to accelerate convergence and mitigate client drift in federated environments. In particular, we introduce Scafflix, an adaptive local training algorithm that balances global and personalized objectives, achieving superior performance in both IID and non-IID settings.

Further, we address the challenge of communication efficiency in neural network models through federated privacy-preserving pruning frameworks that optimize global and local parameter sparsity while ensuring minimal communication costs. Our Cohort-Squeeze method extends beyond single communication rounds per cohort by leveraging hierarchical aggregation strategies, significantly reducing overall communication overhead in cross-device federated learning scenarios.

Finally, we conclude with SymWanda, a symmetric post-training pruning approach that minimizes the impact of pruning on both input activations and output layers. This strategy enhances model robustness under high sparsity and offers a training-free fine-tuning mechanism to maintain competitive performance without additional retraining.

Extensive experiments on benchmark datasets and large-scale language models demonstrate that the proposed methods consistently achieve a favorable balance between communication cost, model accuracy, and convergence speed. This dissertation provides both theoretical and practical insights for designing scalable, efficient distributed learning systems, contributing to the democratization of machine learning across diverse, resource-constrained devices.

  % include your abstract file
\end{singlespace}

\begin{singlespace}

% Acknowledgments File

\begin{center}

\end{center}
\begin{center}

{\bf\fontsize{14pt}{14.5pt}\selectfont \uppercase{Acknowledgements}}\\\vspace{1cm}
\end{center}

\addcontentsline{toc}{chapter}{Acknowledgements} % (Optional)

% Please insert brief and professional acknowledgments here. They should be no more than one page.

Time flies, and in the blink of an eye, five years have passed, bringing me to the crossroads of graduation. Completing my PhD will mark the culmination of my academic journey. As I reflect on more than 20 years of student life, especially the past five years of my master's and doctoral studies, I would like to express my deepest gratitude to everyone who has supported and guided me along the way.

First and foremost, I am profoundly grateful to my supervisor, Peter Richtárik, for his exceptional expertise, guidance, and unwavering support throughout this journey. His profound understanding, insightful feedback, and dedication to academic excellence have played a pivotal role in shaping the direction and quality of my work. I also extend my sincere thanks to my master's supervisor, Mohamed Elhoseiny, for his mentorship, his contributions to my empirical research explorations, and for giving me the opportunity to join this prestigious institute.

I am deeply appreciative of the members of my dissertation committee—Peter Richtárik, Panos Kalnis, Mikhail Moshkov and Quanquan Gu—for their valuable time and feedback. My heartfelt thanks also go to my research group colleagues and peers, whose intellectual contributions, engaging discussions, and collaborative spirit have broadened my perspectives and inspired new ideas. In particular, I am grateful to Laurent Condat, Grigory Malinovsky, Timur Kharisov, Georg Meinhardt, Konstantin Burlachenko, Egor Shulgin, Sarit Khirirat, Yury Demidovich, Kaja Gruntkowska, Artavazd Maranjyan, Hanmin Li, Abdurakhmon Sadiev, Igor Sokolov, Elnur Gasanov, Artem Riabinin, Omar Shaikh Omar, Ivan Ilin, Slavomír Hanzely, and Samuel Horváth for their support and collaboration.

Special thanks go to my internship mentors and external collaborators—Nidham Gazagnadou, Lingjuan Lyu, Yaoliang Yu, Vladimir Malinovskii, and Dan Alistarh—for their invaluable contributions to my academic growth.

Finally, I am profoundly thankful to my friends and family for their unwavering support and encouragement. Their belief in me and constant motivation have been a steady source of strength throughout this demanding journey.

% 时光荏苒，转眼已经五年过去，已经站在了毕业的分岔路口。博士也意味着学生时代的终点，回望过去二十多年的学生生涯，特别是过去五年的硕博生涯。我想对所有一路以来帮助支持我的人，说一声由衷的感谢。

% 我出生在中国武汉的农村，家庭贫困。四岁那年，爸妈带着我进城务工，像中国数以亿计漂泊在异乡的打工人一样，我已经记不清家里搬过多少地方，跨了多少个区了。小学和初中我还在浑浑噩噩，高中开始就住校了，那是我第一次和父母分别。我十分感谢中小学时代遇到的两位恩师，Jie Hu 和 Qunchao Lu，他们在我最迷茫的时候给了我无尽的帮助与支持。每周日晚上，从爸妈的住处到学校宿舍的542路公交车，说短也短的十几站，是那样漫长而又孤独的路。后面，我考上了西安交通大学，成为了全村21世纪第一个985。

% 从没有出过武汉周边的我，第一次一个人踏上了外省的旅途。我仍然记得，刚到学校的第一天，没有被褥，在床板上躺了一宿。感谢大学期间将我科研引上路的各位老师和学长学姐们，特别是 Jing Tao, Pinghui Wang, Lin Lan, Nanning Zheng, Jianji Wang, Shitao Chen。
  % include your Acknowledgement file
\end{singlespace}

% \begin{onehalfspacing}%To make the table of contents, figures, and tables single spaced, as required by the formatting guidelines, pg. 24.

%\renewcommand{\contentsname}{\textbf{{\large TABLE OF CONTENTS}}}
\tableofcontents
% Maybe we don't need this blank page.
\cleardoublepage

\printglossary[type=\acronymtype,style=long3col, title=\centerline{LIST OF ABBREVIATIONS}, toctitle=List of Abbreviations, nonumberlist=true] 

\printglossary[type=symbols,style=long3col, title=\centerline{LIST OF SYMBOLS}, toctitle=List of Symbols, nonumberlist=true]

% adding list of figures
% adding list of figures
\begin{singlespace}

\cleardoublepage
\phantomsection
\addcontentsline{toc}{chapter}{\listfigurename} % <--- change here
\renewcommand*\listfigurename{LIST OF FIGURES}
\listoffigures

% adding list of tables
\cleardoublepage
\phantomsection
\addcontentsline{toc}{chapter}{\listtablename}  % <---- and change here too.
\renewcommand*\listtablename{LIST OF TABLES}
\listoftables
\end{singlespace}

% \end{onehalfspacing}
% \printglossary[style=altlist,title=Nomenclature, toctitle=Nomenclature, nonumberlist=true] 

% $$$$$$$$$$$$$$$$$  Thesis front matter ends   $$$$$$$$$$$$$$

% $$$$$$$$$$$$$$$$$  include your separate chapters   $$$$$$$$$$$$$$
% Remove the centering of chapter name.
\chaptertitlefont{\fontsize{14}{15}\selectfont}  % <------ Add this line

\begin{singlespace}
    
% Chapter 2 File

\chapter{Introduction}
\label{chapter1}
\thispagestyle{empty}
\section{Overview}  
The rapid advancement of machine learning has led to unprecedented growth in model size and data complexity, driving the need for collaborative training paradigms. {Distributed learning} (DL) and {federated learning} (FL) have emerged as essential approaches to handle large-scale datasets and models in a decentralized fashion \citep{dean2012large, ja2016, mcmahan2017communication, liu2022distributed}. In both paradigms, multiple nodes or clients collaboratively train a shared model, with the key difference being that FL emphasizes data privacy by keeping raw data local, while DL typically assumes that data can be distributed across multiple nodes in non-private settings \citep{liu2022distributed}. However, both DL and FL face similar challenges, including high communication overhead, heterogeneous performance across nodes, and resource constraints \citep{bonawitz2019towards, kairouz2021advances}.

This dissertation focuses on \emph{improving communication efficiency} in distributed and federated learning through three interconnected strategies: \emph{model compression}, \emph{local training optimization}, and \emph{personalization}. These approaches target various aspects of the training pipeline to reduce communication costs while maintaining robust model performance.

\begin{itemize}
    \item {Model compression} techniques, such as gradient sparsification \citep{lin2017deep}, quantization \citep{hubara2018quantized}, and pruning \citep{frankle2018lottery}, aim to reduce the size of exchanged updates. While effective, they introduce trade-offs in terms of model convergence speed and accuracy, requiring careful exploration in both DL and FL settings.
    \item {Local training optimization} strategies reduce the frequency of communication rounds by increasing the number of local computations \citep{li2020federated, EF21, ProxSkip-VR}. Although this reduces communication costs, it can exacerbate model divergence in heterogeneous environments where local data distributions differ significantly.
    \item {Personalization} addresses node-level heterogeneity by tailoring the global model to individual clients or nodes \citep{fallah2020personalized, ghosh2020efficient, hanzely2021personalized}. In FL, personalization mitigates performance degradation caused by data variability, while in DL, it can improve task-specific generalization across distributed tasks.
\end{itemize}

This dissertation explores provable and efficient strategies to improve communication efficiency in distributed and federated learning systems \citep{kairouz2021advances}, balancing computational costs, communication overhead, and model performance. Next, we introduce our solutions based on these three core strategies at a high level, followed by the basic facts and notations used in subsequent sections.

\section{Distributed and federated learning}
DL and FL are two key paradigms designed to enable collaborative model training across multiple nodes or clients. This section provides an overview of their core concepts, similarities, and differences, as well as a formal definition of the problem settings considered in this dissertation.

\subsection{Distributed learning}
Distributed learning involves splitting the training process across multiple computing nodes to leverage parallelism and handle large-scale datasets or models \citep{dean2012large, verbraeken2020survey, liu2022distributed}. Each node typically has access to a partition of the training data and participates in updating the global model. The primary goal in DL is to \emph{achieve efficient parallelization, minimizing training time and ensuring that model updates are synchronized effectively}. In most cases, DL assumes that data across nodes is independently and identically distributed (i.i.d.), simplifying the aggregation process.
The global objective in distributed learning is formulated as:
\begin{equation} 
\minimize_{x\in\mathbb{R}^d}\;\underbrace{\frac{1}{n}\sum_{i=1}^n f_i(x)}_{f(x)} + R(x),\label{eqpro1}
\end{equation}
where $d\geq 1$ is the model dimension; $R:\mathbb{R}^d\rightarrow \mathbb{R}\cup\{+\infty\}$ is a  proper, closed, convex function \citep{bau17}, whose proximity operator $$\mathrm{prox}_{\gamma R} : x \mapsto \argmin_{y\in \mathbb{R}^d} \big( \gamma R(y)+\frac{1}{2}\|x-y\|^2 \big)$$ is easy to compute, for any $\gamma>0$~\citep{par14,con19,con22}; $n\geq 1$ is the number of functions; each function $f_i: \mathbb{R}^d\rightarrow \mathbb{R}$ is typically assumed to have the \emph{same smoothness $L$ and strong convexity $\mu$} across all nodes \citep{nesterov2003introductory}. In this dissertation, unless otherwise specified, we focus on the strongly convex and smooth setting.

% each function $f_i:\mathbb{R}^d \rightarrow\mathbb{R}$ is convex and $L_i$-smooth, for some $L_i>0$; that is, $f_i$ is differentiable on $\mathbb{R}^d$ and its gradient $\nabla f_i$ is $L_i$-Lipschitz continuous.%: for every $x\in\mathbb{R}^d$ and $x'\in\mathbb{R}^d$, $\|\nabla f_i(x)-\nabla f_i(x')\|\leq L_i \|x-x'\|$.

% $$
% \min_{\mathbf{w} \in \mathbb{R}^d} F(\mathbf{w}) = \frac{1}{N} \sum_{i=1}^{N} F_i(\mathbf{w}),
% $$
% where $F_i(\mathbf{w})$ is the local loss function at node $i$, $N$ is the number of nodes, and $\mathbf{w}$ represents the model parameters.
Key challenges in DL include communication bottlenecks, node synchronization, and scalability when handling extremely large models or datasets \citep{stich2018local, verbraeken2020survey, liu2022distributed}.

\subsection{Federated learning}
Federated learning extends the distributed learning framework by incorporating privacy-preserving constraints \citep{ja2016, konevcny2016federated, mcmahan2016federated, mcmahan2017communication}. In FL, raw data remains on local nodes (clients), and only model updates or gradients are shared with a central server for aggregation. This setting is particularly relevant in privacy-sensitive applications, such as mobile devices and healthcare institutions, where data cannot be centralized \citep{hard2018federated, sheller2020federated}.

The global objective in FL is the same as DL, defined in \Cref{eqpro1}, 
% Similar to DL, the global objective in FL can be expressed as:
% $$
% \min_{\mathbf{w} \in \mathbb{R}^d} F(\mathbf{w}) = \frac{1}{N} \sum_{i=1}^{N} F_i(\mathbf{w}),
% $$
but with the added constraint that the data distribution across clients may be non-i.i.d., making the optimization process more challenging \citep{ja2016, FedProx}. That is, each function $f_i:\mathbb{R}^d \rightarrow\mathbb{R}$ is convex and $L_i$-smooth, for some $L_i>0$; that is, $f_i$ is differentiable on $\mathbb{R}^d$ and its gradient $\nabla f_i$ is $L_i$-Lipschitz continuous. Here $\mu_i$ and $L_i$ is allowed to be arbitrary different. 

Similar to most FL studies, we do not include a regularizer $R$ in our formulation. Instead, we define the task as solving an empirical risk minimization (ERM) problem of the form: 

\begin{equation}\label{eq:ERM}\tag{ERM} 
    \min_{x \in \mathbb{R}^d} \left[ f(x) \eqdef \frac{1}{n} \sum_{i=1}^n f_i(x) \right], 
    \end{equation} 

where $f_i(x)$ represents the local objective for client $i$, $n$ is the total number of clients, and $x$ denotes the global model.

Key challenges in FL include communication efficiency, data heterogeneity, and client participation variability. To address these challenges, FL requires communication-efficient algorithms that balance global model performance with minimal communication overhead.

\subsection{Comparison of distributed and federated learning}
While both DL and FL aim to train a global model collaboratively, they differ in key aspects:
\begin{itemize}
    \item \textbf{Data distribution:} DL typically assumes i.i.d. data across nodes, while FL often involves non-i.i.d. data, reflecting real-world heterogeneity.
    \item \textbf{Privacy constraints:} FL enforces strict privacy by keeping data local, whereas DL generally does not impose such restrictions.
    \item \textbf{Communication frequency:} FL often has fewer communication rounds due to the high cost of transmitting updates, while DL can perform frequent communication, especially in data-center settings.
\end{itemize}

% \subsection{Problem setting}
\subsection{Dissertation focus}
In this dissertation, we consider both DL and FL settings and focus on the following three key goals, framed around the core strategies of compression, local training, and personalization:

\begin{enumerate}
    \item \textbf{Communication efficiency:} Reducing the number of transmitted bits through model compression techniques such as gradient sparsification, quantization, and pruning, while maintaining model performance.
    \item \textbf{Scalability:} Ensuring that the proposed methods effectively scale to large datasets and models by optimizing local computations and minimizing communication frequency.
    \item \textbf{Robustness to heterogeneity:} Addressing non-i.i.d. data and variable client participation by incorporating personalized components that tailor the global model to local needs.
\end{enumerate}

These goals align with the three interconnected strategies presented in this dissertation: model compression, local training optimization, and personalization.

% The next section introduces our proposed solutions for improving communication efficiency, structured around the three core strategies of model compression, local training optimization, and personalization.

\section{Core strategies}
In this section, we detail the three core strategies for improving communication efficiency in distributed and federated learning: \emph{compression}, \emph{local training}, and \emph{personalization}. Each strategy addresses different aspects of the communication bottleneck while maintaining model performance and scalability.

\subsection{Compression}
Model compression techniques aim to reduce the size of the information exchanged during the training process \citep{choudhary2020comprehensive}. In both DL and FL, communication overhead can be significantly reduced by transmitting compressed updates instead of full gradients or model parameters.

Key approaches to compression include:
\begin{itemize}
    % \item \textbf{Gradient Sparsification:} Transmitting only the largest or most important gradient components, with the remaining components set to zero.
    % \item \textbf{Sparsification and pruning:} Techniques that focus on reducing the amount of information transmitted during training:
    % \begin{itemize}
        \item \textbf{Gradient sparsification} \citep{aji2017sparse, lin2017deep, EF21, fat21} Transmitting only the most significant gradient components, with the remaining components set to zero, thereby reducing the size of updates.
        \item \textbf{Model pruning} \citep{frankle2018lottery, RigL, SRigL, Wanda, SparseGPT, RIA} Removing unimportant weights or neurons in the model to reduce the overall model size and the corresponding communication and memory costs.
    % \end{itemize}

    \item \textbf{Quantization} \citep{ali17, hubara2018quantized, AQLM, PV-Tuning} Representing model updates using fewer bits, such as using fixed-point instead of floating-point representations.
    % \item \textbf{Pruning:} Eliminating redundant or less significant weights and connections from the model to reduce its overall size.
\end{itemize}

In DL, compression reduces communication between nodes and the central parameter server, improving synchronization efficiency. In FL, it plays a crucial role in reducing the upload and download bandwidth required by clients, especially in real-world scenarios with limited communication resources. However, an important trade-off exists between compression ratios and model performance, as overly aggressive compression may slow convergence or degrade accuracy.

\subsection{Local training}\label{sec:local_training}
Local training optimization focuses on performing more computations on local data to reduce the frequency of communication rounds \citep{Povey2015, SparkNet2016, FL2017-AISTATS, Li-local-bounded-grad-norms--ICLR2020, LocalDescent2019, localGD, localSGD-AISTATS2020, SCAFFOLD, LSGDunified2020, FEDLIN, ProxSkip-VR, Scafflix}. By increasing the number of local updates before aggregation, this strategy can significantly reduce communication costs.

Popular local training methods include:
\begin{itemize}
    \item \textbf{Periodic aggregation:} Clients perform multiple local updates before sending their model updates to the server \citep{FedAvg, LocalDescent2019, localGD, FEDLIN, SCAFFOLD}.
    \item \textbf{Adaptive local updates:} The number of local updates is adjusted dynamically based on the current training progress or data heterogeneity \citep{stich2018local, ProxSkip, ProxSkip-VR, Scafflix}.
    % \item \textbf{Client selection:} Selecting a subset of clients for each communication round to balance the trade-off between communication and model performance \citep{zhao2018federated, nishio2019client}.
\end{itemize}

In DL, this strategy improves synchronization efficiency by reducing the number of gradient exchange steps. In FL, local training optimization addresses communication constraints but introduces the challenge of \textit{client drift}, where local models diverge due to differing data distributions. Properly balancing local computation and global synchronization is essential to prevent performance degradation.

Theoretical evolutions of LT in FL have been long-lasting, spanning five generations from empirical results to accelerated communication complexity. The celebrated \algname{FedAvg} algorithm proposed by \citet{FL2017-AISTATS} showed the feasibility of communication-efficient learning from decentralized data. It belongs to the first generation of LT methods, where the focus was on empirical results and practical validations \citep{Povey2015, SparkNet2016, FL2017-AISTATS}.

The second generation of studies on LT for solving (\ref{eq:ERM}) was based on homogeneity assumptions, such as bounded gradients \big($\exists c<+\infty, {\norm{\nabla f_i(x)}} \leq c, x\in\Rd$, $i\in[n]$\big) \citep{Li-local-bounded-grad-norms--ICLR2020} and bounded gradient diversity \big($\avein \sqn{\nabla f_i(x)} \leq c\sqn{\nabla f(x)}$\big) \citep{LocalDescent2019}. However, these assumptions are too restrictive and do not hold in practical FL settings~\citep{FL-big, FieldGuide2021}.

The third generation of approaches, under generic assumptions on the convexity and smoothness, %of the functions, 
exhibited sublinear convergence \citep{localGD, localSGD-AISTATS2020} or linear convergence to a neighborhood \citep{mal20}. 

Later, popular algorithms have emerged, such as \algname{Scaffold} \citep{SCAFFOLD}, \algname{S-Local-GD} \citep{LSGDunified2020}, and \algname{FedLin} \citep{FEDLIN}, successfully correcting for the client drift and enjoying linear convergence to an exact solution under standard assumptions. However, their communication complexity remains the same as with \algname{GD}, namely $\mathcal{O}(\kappa\log \epsilon^{-1})$, where $\kappa \eqdef L/\mu$ is the condition number.

Finally, \algname{Scaffnew} was proposed by \citet{ProxSkip}, with accelerated communication complexity $\mathcal{O}(\sqrt{\kappa}\log \epsilon^{-1})$.  This is a major achievement, which proves for the first time that LT is a communication acceleration mechanism. Thus,  \algname{Scaffnew} is the first algorithm in what can be considered the fifth generation of LT-based methods with accelerated convergence. 
%has inspired us by showing provable communication acceleration, where the communication complexity ($\sqrt{\kappa}\log \epsilon^{-1}$) is better than that of \algname{GD}. 
Subsequent works have further extended \algname{Scaffnew} with features such as variance-reduced stochastic gradients~\citep{ProxSkip-VR}, compression \citep{CompressedScaffnew}, partial client participation \citep{con23tam2}, asynchronous communication of different clients \citep{GradSkip}, and to a general primal--dual framework \citep{con22rp}. The fifth generation of LT-based methods also includes the \algname{5GCS} algorithm \citep{mal22b}, based on a different approach: the local steps correspond to an inner loop to compute a proximity operator inexactly.  Our proposed algorithm \algname{Scafflix} generalizes  \algname{Scaffnew} and enjoys even better accelerated communication complexity, thanks to a better dependence on the possibly different condition numbers of the functions $f_i$.

\subsection{Personalization}
Personalization addresses the challenge of heterogeneity by adapting the global model to better fit the data on individual nodes or clients \citep{fallah2020personalized, ghosh2020efficient}. In FL, where data distributions across clients are often non-i.i.d., personalization improves client-specific performance while preserving the benefits of collaborative training. In DL, personalization can improve task-specific generalization when nodes handle domain-shifted or multi-task learning problems.

% Common personalization approaches include:
% \begin{itemize}
%     \item \textbf{Fine-tuning} Allowing clients to adjust parts of the global model, such as the final layers, based on local data.
%     \item \textbf{Meta-learning:} Training a model that can quickly adapt to new client data with minimal updates.
%     \item \textbf{Clustered models:} Grouping clients with similar data distributions and training specialized models for each group.
% \end{itemize}

We can distinguish three main approaches to achieve personalization:
%Personalization is gaining increasing interest due to data heterogeneity and the need for large-scale model deployment to individual clients. There are three main approaches to achieving personalization:

\begin{itemize}\label{secper}
    \item \textbf{One-stage training of a single global model using personalization algorithms.} One common scheme is to design a suitable regularizer to balance between current and past local models~\citep{MOON} or between global and local models~\citep{FedProx, hanzely2020federated}. 
    The FLIX model~\citep{FLIX} achieves explicit personalization by balancing the local and global model using interpolation. Meta-learning is also popular in this area, as evidenced by \citet{pFedMe}, who proposed a federated meta-learning framework using Moreau envelopes and a regularizer to balance personalization and generalization.

    \item \textbf{Training a global model and fine-tuning every local client or knowledge transfer/distillation.} This approach allows knowledge transfer from a source domain trained in the FL manner to target domains~\citep{FedMD}, which is especially useful for personalization in healthcare domains~\citep{FedHealth, FedSteg}.
  
    \item \textbf{Collaborative training between the global model and local models.} The basic idea behind this approach is that each local client trains some personalized parts of a large model, such as the last few layers of a neural network. Parameter decoupling enables learning of task-specific representations for better personalization~\citep{arivazhagan2019federated, bui2019federated}, while channel sparsity encourages each local client to train the neural network with sparsity based on their limited computation resources~\citep{FjORD, FedRolex, FLANC}.
\end{itemize}

% Despite the significant progress made in FL personalization, many approaches only present empirical results. Our approach benefits from the  simplicity and efficiency of the FLIX framework and enjoys accelerated convergence.

In both DL and FL, the challenge lies in balancing model personalization with generalization. While highly personalized models may excel on individual clients, they can lose the collaborative benefits of global training. This dissertation proposes techniques that strike a balance by introducing efficient personalized updates while maintaining a shared model structure.

\section{Chapter overview and contributions}
% \subsection{Chapter 2: Unified Theory of Biased and Unbiased Compressors}
\subsection{Chapter 2: unified theory of compressors}
In distributed or federated optimization and learning, communication between the different computing units is often the bottleneck and gradient compression is widely used to reduce the number of bits sent within each communication round
of iterative methods. There are two classes of compression operators and separate algorithms making use of them. In the case of unbiased random compressors with bounded variance (e.g., rand-k), the \algname{DIANA} algorithm of \cite{DIANA}, which implements a variance reduction technique for handling the variance introduced by compression, is the current state of the art. In the case of biased and contractive compressors (e.g., top-k), the \algname{EF21} algorithm of \cite{EF21}, which instead implements an error-feedback mechanism, is the current state of the art. These two classes of compression schemes and algorithms are distinct, with different analyses and proof techniques. In this paper, we unify them into a single framework and propose a new algorithm, recovering  \algname{DIANA} and  \algname{EF21} as particular cases. Our general approach works with a new, larger class of compressors, which has two parameters, the bias and the variance, and includes unbiased and biased compressors as particular cases. This allows us to inherit the best of the two worlds: like  \algname{EF21} and unlike  \algname{DIANA}, biased compressors, like top-k, whose good performance in practice is recognized, can be used. And like DIANA and unlike EF21, independent randomness at the compressors allows to mitigate the effects of compression, with the convergence rate improving when the number of parallel workers is large. This is the first time that an algorithm with all these features is proposed. We prove its linear convergence under certain
conditions. Our approach takes a step towards better understanding of two so-far distinct worlds of communication-efficient distributed learning.

This chapter is based on:

{\small
\noindent\algname{[EF-BV]} Condat, Laurent, Kai Yi, and Peter Richtárik. ``EF-BV: A unified theory of error feedback and variance reduction mechanisms for biased and unbiased compression in distributed optimization." Advances in Neural Information Processing Systems 35 (2022): 17501-17514.
}

% \subsection{Chapter 3: Accelerated Local Training with Explicit Personalization}
\subsection{Chapter 3: personalized accelerated local training}
Federated Learning is an evolving machine learning paradigm, in which multiple clients perform computations based on their individual private data, interspersed by communication with a remote server. A common strategy to curtail communication costs is Local Training, which consists in performing multiple local stochastic gradient descent steps between successive communication rounds. However, the conventional approach to local training overlooks the practical necessity for client-specific personalization, a technique to tailor local models to individual needs. We introduce \algname{Scafflix}, a novel algorithm that efficiently integrates explicit personalization with local training. This innovative approach benefits from these two techniques, thereby achieving doubly accelerated communication, as we demonstrate both in theory and practice.

This chapter is based on:

{\small
\noindent\algname{[Scafflix]} Kai Yi, Laurent Condat, and Peter Richtárik. ``Explicit personalization and local training: Double communication acceleration in federated learning." Transactions on Machine Learning Research (TMLR), 2025.
}

\subsection{Chapter 4: personalized privacy-aware pruning}
The interest in federated learning has surged in recent research due to its unique ability to train a global model using privacy-secured information held locally on each client. This paper pays particular attention to the issue of client-side model heterogeneity, a pervasive challenge in the practical implementation of FL that escalates its complexity. Assuming a scenario where each client possesses varied memory storage, processing capabilities and network bandwidth - a phenomenon referred to as system heterogeneity - there is a pressing need to customize a unique model for each client. In response to this, we present an effective and adaptable federated framework  \algname{FedP3}, representing Federated Personalized and Privacy-friendly network Pruning, tailored for model heterogeneity scenarios. Our proposed methodology can incorporate and adapt well-established techniques to its specific instances. We offer a theoretical interpretation of  \algname{FedP3} and its locally differential-private variant, DP-FedP3, and theoretically validate their efficiencies.

This chapter is based on:

{\small
\noindent\algname{[FedP3]} Kai Yi, Nidham Gazagnadou, Peter Richtárik, and Lingjuan Lyu. ``FedP3: Federated Personalized and Privacy-friendly Network Pruning under Model Heterogeneity." In The Twelfth International Conference on Learning Representations.
}

\subsection{Chapter 5: beyond single communication round per cohort}
Virtually all FL methods, including \algname{FedAvg}, operate in the following manner: i) an orchestrating server sends the current model parameters to a cohort of clients selected via certain rule, ii) these clients then independently perform a local training procedure (e.g., via  \algname{SGD} or  \algname{Adam}) using their own training data, and iii) the resulting models are shipped to the server for aggregation. This process is repeated until a model of suitable quality is found. A notable feature of these methods is that each cohort is involved in a single communication round with the server only. In this work we challenge this algorithmic design primitive and investigate whether it is possible to ``squeeze more juice" out of each cohort than what is possible in a single communication round. Surprisingly, we find that this is indeed the case, and our approach leads to up to 74\% reduction in the total communication cost needed to train a FL model in the cross-device setting. Our method is based on a novel variant of the stochastic proximal point method (\algname{SPPM-AS}) which supports a large collection of client sampling procedures some of which lead to further gains when compared to classical client selection approaches.

This chapter is based on:

{\small
\noindent\algname{[Cohort-Squeeze]} Kai Yi, Timur Kharisov, Igor Sokolov, and Peter Richtárik. ``Cohort Squeeze: Beyond a Single Communication Round per Cohort in Cross-Device Federated Learning." arXiv preprint arXiv:2406.01115 (2024). {Oral} presentation at International Workshop on Federated Foundation Models In Conjunction with NeurIPS 2024 (FL@FM-NeurIPS'24).
}

\subsection{Chapter 6: symmetric post-training pruning}
Popular post-training pruning methods such as \algname{Wanda} \citep{Wanda} and \algname{RIA} \citep{RIA} are known for their simple, yet effective, designs that have shown exceptional empirical performance. \algname{Wanda} optimizes performance through calibrated activations during pruning, while \algname{RIA} emphasizes the relative, rather than absolute, importance of weight elements. Despite their practical success, a thorough theoretical foundation explaining these outcomes has been lacking. This paper introduces new theoretical insights that redefine the standard minimization objective for pruning, offering a deeper understanding of the factors contributing to their success. Our study extends beyond these insights by proposing complementary strategies that consider both input activations and weight significance. We validate these approaches through rigorous experiments, demonstrating substantial enhancements over existing methods. Furthermore, we introduce a novel training-free fine-tuning approach \ft that incorporates relative weight importance and a regularized decision boundary within a dynamic pruning-and-growing framework, significantly outperforming strong baselines and establishing a new state-of-the-art. 

This chapter is based on:

{\small
\noindent\algname{[SymWanda]} Kai Yi, Peter Richtárik. ``Symmetric Pruning for Large Language Models." arXiv preprint arXiv:2501.18980 (2025). ICLR 2025 Workshop on Sparsity in LLMs (SLLM).
}

\subsection{Chapter takeaway}
Each chapter in the subsequent section explores our approach to a specific challenging yet promising problem. It should be noted that the majority of our work focuses on developing strategies to enhance communication efficiency in distributed and federated learning environments. Specifically, we concentrate on three key areas: compression, local training, and personalization. In \Cref{tab:overview}, we provide a comparative overview of the main papers discussed in each chapter.

\begin{table}[!tb]
    \centering
    \begin{threeparttable}
    \caption{Comprehensive overview of discussed projects.}
    \label{tab:overview}
    % \begin{tabularx}{\textwidth}{m{2cm} X c >{\centering\arraybackslash}m{1.6cm} c c}
    % \begin{tabularx}{\textwidth}{m{2cm} X c c c c}  
    \begin{tabular}{m{2cm} m{6cm} m{1cm} >{\centering\arraybackslash}m{1.5cm} m{0.6cm} m{0.6cm}}
    % \begin{tabularx}{m{2cm} m{6cm} m{1cm} >{\centering\arraybackslash}m{1.6cm} m{1cm} m{1cm}}
    \toprule
     Paper  & Main Question & Result & Comp?\tnote{\color{blue}(a)} & LT? & Pers.? \\ \midrule
     \algname{EF-BV} (\Cref{chapter_ef_bv}) & \small Can we provide a unified theory for both biased (error feedback) and unbiased (variance reduction) compressors in distributed training? & Yes & \cmark & \xmark & \xmark \\ \midrule
     \algname{Scafflix} (\Cref{chapter_scafflix}) & \small Is it possible to achieve provable double acceleration through accelerated local training coupled with explicit personalization? & Yes & \xmark & \cmark & \cmark \\ \midrule
     \algname{FedP3} (\Cref{chapter_fedp3}) & \small Can we develop a comprehensive federated, personalized, and privacy-preserving pruning framework to enhance FL efficiency? & Yes & \cmark & \cmark & \cmark \\ \midrule
     \algname{Cohort}-\algname{Squeeze} (\Cref{chapter_cohort_squeeze}) & \small Are there provable benefits to incorporating multiple local communication rounds in cross-device FL? & Yes & \xmark & \cmark\tnote{\color{blue}(b)} & \xmark \\ \midrule
     \algname{SymWanda} (\Cref{chapter_symwanda}) & \small Can we provide theoretical support for post-training pruning methods and derive more efficient algorithms? & Yes & \cmark & \xmark & \xmark \\ \bottomrule
    \end{tabular}
    \begin{tablenotes}
    \footnotesize
    \item [{\color{blue}(a)}] ``Comp." ``LT." and ``Pers." stand for Compression, Local Training, and Personalization, respectively.
    \item [{\color{blue}(b)}] In the context of \algname{Cohort-Squeeze}, the term ``LT" deviates from the conventional definition of local training. Here, it specifically refers to multiple local communication rounds, rather than the usual multiple local computation rounds.
    \end{tablenotes}
    \end{threeparttable}
\end{table}

% \subsection{Excluded papers}
% During my PhD, I co-authored an additional 11 papers that are not included in this dissertation. Most of these papers focus on model compression and communication efficiency, closely aligning with my main research interests. Others concentrate on data-efficient model training and downstream tasks. The papers include:

% \begin{itemize}
%     \item Three works on data-efficient multimodal language models, which are \algname{DACZSL} \citep{DACZSL}, \algname{HGR-Net} \citep{HGR-Net}, and \algname{VisualGPT} \citep{VisualGPT}.
%     \item One paper on post-training compression of large language models (LLMs), with one focusing on extreme quantization (\algname{PV-Tuning}) \citep{PV-Tuning}.
%     \item A paper on accelerated sparse training in FL (\algname{SparseProxSkip}) \citep{SparseProxSkip} and another on variance-reduced accelerated local training methods (\algname{ProxSkip-VR}) \citep{ProxSkip-VR}.
%     \item Papers on generative data-efficient continual zero-shot learning (\algname{IGCZSL}) \citep{IGCZSL}, creative novel art generation (\algname{CWAN}) \citep{Jha2022CreativeWA}, and creativity-inspired generative zero-shot learning (\algname{CIZSL++}) \citep{elhoseiny2021cizsl++}.
%     \item A study on semantic image feature disentanglement (\algname{3DSpVAE}) \citep{3DSpVAE} and another on unsupervised domain alignment for open set structural recognition (\algname{MLUDA}) \citep{MLUDA}.
% \end{itemize}
\subsection{Excluded Papers}  
During my PhD, I co-authored 11 additional papers that are not included in this dissertation. Most of these works focus on model compression and communication efficiency, aligning closely with my primary research interests. Others explore data-efficient model training and downstream tasks. The list includes:

\begin{itemize}
    \item \emph{Data-efficient multimodal language models:} Three works in this area, including \algname{DACZSL} \citep{DACZSL}, \algname{HGR-Net} \citep{HGR-Net}, and \algname{VisualGPT} \citep{VisualGPT}.
    \item \emph{Post-training compression of LLMs:} One paper focusing on extreme quantization (\algname{PV-Tuning}) \citep{PV-Tuning}.
    \item \emph{Efficient and accelerated FL:} A paper on accelerated sparse training (\algname{SparseProxSkip}) \citep{SparseProxSkip} and another on variance-reduced accelerated LT methods (\algname{ProxSkip-VR}) \citep{ProxSkip-VR}.
    \item \emph{Generative and creative learning:} Papers on generative data-efficient continual zero-shot learning (\algname{IGCZSL}) \citep{IGCZSL}, creative novel art generation (\algname{CWAN}) \citep{Jha2022CreativeWA}, and creativity-inspired generative zero-shot learning (\algname{CIZSL++}) \citep{elhoseiny2021cizsl++}.
    \item \emph{Representation learning and domain adaptation:} A study on semantic image feature disentanglement (\algname{3DSpVAE}) \citep{3DSpVAE} and a paper on unsupervised domain alignment for open-set structural recognition (\algname{MLUDA}) \citep{MLUDA}.
\end{itemize}

\section{Basic facts and notations}
Before presenting the main results, we will first clarify the key notations frequently used throughout this dissertation and provide relevant theoretical background to support the subsequent analysis.

\subsection{Convexity and smoothness}\label{sec:convex_smooth}
We outline the fundamental properties including convexity and smoothness of $f_i$ and $f$ in the objective function \Cref{eqpro1}.

\begin{definition}[$\mu$-strong convexity]\label{def:convexity}  
A differentiable function $f: \mathbb{R}^d \rightarrow \mathbb{R}$ is $\mu$-strongly convex if there exists $\mu > 0$ such that  
\begin{equation}\label{eqn:convexity}  
f(y) \geq f(x) + \langle \nabla f(x), y - x \rangle + \frac{\mu}{2} \| y - x \|^2, \quad \forall x, y \in \mathbb{R}^d.  
\end{equation}  
\end{definition}  

The function $f$ is considered convex if it satisfies \eqref{eqn:convexity} with $\mu = 0$. By default, we assume that each function $f_i$ in \Cref{eqpro1} is $\mu_i$-strongly convex and $L_i$-smooth, where $\mu_i, L_i > 0$. We define $L_{\max} \eqdef \max_i L_i$ and $\Tilde{L} \eqdef \sqrt{\avein L_i^2}$. The average function $f \eqdef \avein f_i$ is $\mu$-strongly convex and $L$-smooth, where $L \leq \Tilde{L} \leq L_{\max}$. Additionally, we assume that a minimizer of $f + R$ exists.

\begin{definition}[Smoothness]\label{def:smoothness}  
    A differentiable function $f: \mathbb{R}^d \rightarrow \mathbb{R}$ is said to be $L$-smooth if  
    $$
    \|\nabla f(x) - \nabla f(y)\| \leq L\|x - y\|, \quad \forall x, y \in \mathbb{R}^d.
    $$  
\end{definition}  

% The above are standard definitions of convexity and smoothness. In Appendix A, we provide weaker versions of these notions, and also derive some consequences that are useful for the analysis of algorithms developed in this thesis.
% THe above are standard definition of strong convexity and smoothness. We will in the f

\subsection{Biased and unbiased compressors}
A compression operator is defined as a randomized map $\mathcal{C}: \mathbb{R}^d \to \mathbb{R}^d$ applicable to all $x \in \mathbb{R}^d$. Compressors can be broadly categorized based on their statistical properties into biased and unbiased types. Unbiased compressors are particularly notable for their ability to provide unbiased estimations. In the realm of biased compressors, we focus on the powerful classes known as biased contractive compressors, which offer specific advantages in data and model compression strategies.

\begin{definition}[Unbiased compressors]\label{def:unbiased_compressor}
    For every $\omega \geq 0$, we introduce the set $\mathbb{U}(\omega)$ of unbiased compressors, which are randomized operators of the form $\mathcal{C}: \mathbb{R}^d \rightarrow \mathbb{R}^d$, satisfying
    \begin{equation}\label{eqn:unbiased_compressor}
    \mathbb{E}[\mathcal{C}(x)]=x \quad \text { and } \quad \mathbb{E}\left[\|\mathcal{C}(x)-x\|^2\right] \leq \omega\|x\|^2, \quad \forall x \in \mathbb{R}^d.
    \end{equation}
    where $\mathbb{E}[\cdot]$ denotes the expectation. 
\end{definition}

The smaller $\omega$, the better, and $\omega=0$ if and only if $\mathcal{C}=\mathbf{I}_d$, the identity operator, which does not compress. We can remark that if $\mathcal{C} \in \mathbb{U}(\omega)$ is deterministic, then $\mathcal{C}=\mathbf{I}_d$. So, unbiased compressors are random ones. A classical unbiased compressor is \algname{rand- $k$}, for some $k \in \mathcal{I}_d$, which keeps $k$ elements chosen uniformly at random, multiplied by $\nicefrac{d}{k}$, and sets the other elements to 0. It is easy to see that \algname{rand-$k$} belongs to $\mathbb{U}(\omega)$ with $\omega=\nicefrac{d}{k}-1$ \citep{beznosikov2023biased}.

% \paragraph{Biased contractive compressors.}
\begin{definition}[Biased contractive compressors]\label{def:biased_contractive_compressor}
    For every $\alpha \in(0,1]$, we introduce the set $\mathbb{B}(\alpha)$ of biased contractive compressors, which are possibly randomized operators of the form $\mathcal{C}: \mathbb{R}^d \rightarrow \mathbb{R}^d$, satisfying
    \begin{equation}\label{eqn:biased_contractive_compressor}
        \mathbb{E}\left[\|\mathcal{C}(x)-x\|^2\right] \leq(1-\alpha)\|x\|^2, \quad \forall x \in \mathbb{R}^d.
    \end{equation}
\end{definition}

We use the term \emph{contractive} to reflect the fact that the squared norm in the left hand side of (\ref{eqn:biased_contractive_compressor}) is smaller, in expectation, than the one in the right hand side, since $1-\alpha<1$. This is not the case in (\ref{eqn:unbiased_compressor}), where $\omega$ can be arbitrarily large. The larger $\alpha$, the better, and $\alpha=1$ if and only if $\mathcal{C}=\mathbf{I}_d$. Biased compressors need not be random: a classical biased and deterministic compressor is \algname{top-$k$}, for some $k \in \mathcal{I}_d$, which keeps the $k$ elements with largest absolute values unchanged and sets the other elements to 0. It is easy to see that \algname{top-$k$} belongs to $\mathbb{B}(\alpha)$ with $\alpha=\nicefrac{k}{d}$ \citep{beznosikov2023biased}.

\subsection{Differential privacy}
\begin{definition}[Local differential privacy (LDP)]\label{def:ldp}
    A randomized algorithm $\mathcal{A}: \mathcal{D}\to \mathcal{F}$, where $\mathcal{D}$ is the dataset domain and $\mathcal{F}$ the domain of possible outcomes, is $(\epsilon, \delta)$-locally differentially private for client $i$ if, for all neighboring datasets ${D}_i, {D}_i^\prime\in \mathcal{D}$ on client $i$ and for all events $\mathcal{S}\in \mathcal{F}$ within the range of $\mathcal{A}$, it holds that:
    \begin{align*}
    \mathrm{Pr}{\mathcal{A}(D_i)\in \mathcal{S}} \leq e^{\epsilon} \mathrm{Pr}{\mathcal{A}(D^\prime_i) \in \mathcal{S}} + \delta.
    \end{align*}
\end{definition}

This LDP definition (\ref{def:ldp}) closely resembles the original concept of $(\epsilon, \delta)$-DP \citep{dwork2014algorithmic, dwork2006calibrating}, but in the FL context, it emphasizes each client's responsibility to safeguard its privacy. This is done by locally encoding and processing sensitive data, followed by transmitting the encoded information to the server, without any coordination or information sharing among clients.

    \chapter{Unified Theory of Biased and Unbiased Compressors}
\label{chapter_ef_bv}
\thispagestyle{empty}

\section{Introduction}
In this paper, we focus on the standard distributed optimization problem in FL, where the global objective follows the finite-sum structure defined in \Cref{eqpro1}. Specifically, we assume a convex objective with basic smoothness and regularization properties as outlined in \Cref{sec:convex_smooth}.

We propose a stochastic gradient descent (\algname{SGD})-type method that leverages possibly \emph{biased} and randomized compression operators to reduce communication costs. Our approach incorporates variance reduction~\citep{han19,gor202,gow20a}, ensuring convergence to the exact solution with fixed stepsizes under standard assumptions, without requiring additional restrictive conditions on the functions being minimized.
	
\paragraph{Algorithms and Prior Work.}
Distributed proximal \algname{SGD} solves the problem \eqref{eqpro1} by iterating
\begin{equation}
x^{t+1} \eqdef \mathrm{prox}_{\gamma R} \big(x^t -  \frac{\gamma}{n} \sum_{i=1}^n g_i^t\big), 
\end{equation}  
 
where $\gamma$ is a stepsize and the vectors $g_i^t$ are possibly stochastic estimates of the gradients $\nabla f_i(x^t)$, which are cheap to compute or communicate. Compression is typically performed by the application of a possibly randomized operator $\mathcal{C}:\mathbb{R}^d\rightarrow \mathbb{R}^d$; that is, for any $x$, $\mathcal{C}(x)$ denotes a realization of a random variable, whose probability distribution depends on $x$. Compressors have  the property that it is much easier/faster to transfer $\mathcal{C}(x)$ than the original message $x$. This can be achieved in several ways, for instance by sparsifying the input vector~\citep{ali18}, or by quantizing its entries~\citep{ali17,Cnat,gan19,may21,sah21}, or via a combination of these and other approaches~\citep{Cnat,alb20, bez20}. There are two classes of compression operators often studied in the literature: 1) unbiased compression operators, satisfying a variance bound proportional to the squared norm of the input vector, and 2) biased compression operators, whose square distortion is contractive with respect to the squared norm of the input vector; we present these two classes in Sections \ref{secun} and \ref{secbia}, respectively. \medskip
 
\paragraph{Prior work: \algname{DIANA} with unbiased compressors.}
An important contribution to the field in the recent years is the variance-reduced \algname{SGD}-type method called \algname{DIANA}~\citep{DIANA}, which uses unbiased compressors; it is shown in Fig.~\ref{fig1}.  \algname{DIANA} was analyzed and extended in several ways, including bidirectional compression and acceleration,  see, e.g., the work of \citet{hor22,mis20,con22m,phi20,li2020,gor20}, and \citet{gor202,kha20} for general theories about \algname{SGD}-type methods, including variants using unbiased compression of (stochastic) gradients.\medskip

\noindent\textbf{Prior work: Error feedback with biased contractive compressors.}\ \ 
Our understanding of distributed optimization using biased compressors is more limited. The key complication comes from the fact that their naive use within methods like  gradient descent can lead to divergence, as widely observed in practice, see also Example~1 of  \citet{bez20}. 
%Example 1
\emph{Error feedback} (\algname{EF}), also called error compensation, techniques were proposed to fix this issue and obtain convergence, initially as heuristics \citep{sei14}. Theoretical advances have been made in the recent years in the analysis of \algname{EF}, see the discussions and references in \citet{ric21} and \citet{chu22}. But the question of whether it is possible to obtain a linearly convergent \algname{EF} method in the general heterogeneous data setting, relying on biased compressors only, was still an open problem; until last year, 2021, when \citet{ric21} re-engineered the classical \algname{EF} mechanism and came up with a new algorithm, called \algname{EF21}. 
 It was then extended in several ways, including by considering server-side compression, and the support of a regularizer $R$ in \eqref{eqpro1}, by \citet{fat21}. \algname{EF21}  is shown in Fig.~\ref{fig1}.\medskip

\noindent\textbf{Motivation and challenge.}\ \ 
While \algname{EF21} resolved an important theoretical problem  in the field of distributed optimization with contractive compression, there are still several open questions. In particular,  \algname{DIANA} with independent random compressors has a $\frac{1}{n}$ factor in its iteration complexity; that is, it converges faster when the number $n$ of workers is larger. 
\algname{EF21} does not have this property: its convergence rate does not depend on $n$. Also, the convergence analysis and proof techniques for the two algorithms are different: the linear convergence analysis of  \algname{DIANA} relies on $\|x^t-x^\star\|^2$ and $\|h_i^t-\nabla f_i(x^\star)\|^2$ tending to zero, where $x^t$ is the estimate of the solution $x^\star$ at iteration $t$ and $h_i^t$ is the control variate maintained at node $i$, whereas the analysis of \algname{EF21} relies on $(f+R)(x^t)-(f+R)(x^\star)$ and $\|h_i^t-\nabla f_i(x^t)\|^2$ tending to zero, and under different assumptions. This work aims at filling this gap. 
That is, we want to address the following open problem:

\begin{table*}[t]
\caption{Desirable properties of a distributed compressed gradient descent algorithm converging to an exact solution of \eqref{eqpro1} and whether they are satisfied by the state-of-the-art algorithms \algname{DIANA} and \algname{EF21} and their currently-known analysis, and the proposed algorithm \algname{EF-BV}.}  
\label{tab1}
\centering
\resizebox{\textwidth}{!}{
\begin{tabular}{ccccc}
\toprule
& \algname{DIANA}&\algname{EF21}&\algname{EF-BV}\\
\midrule
handles unbiased compressors in $\mathbb{U}(\omega)$ for any $\omega\geq 0$ & \cmark &\cmark\color{blue}${}^{(a)}$&\cmark\\
\midrule
handles biased contractive compressors in $\mathbb{B}(\alpha)$ for any $\alpha\in (0,1]$&\xmark&\cmark&\cmark\\
\midrule
handles  compressors in $\mathbb{C}(\eta,\omega)$ for any $\eta\in [0,1)$, $\omega\geq 0$&\xmark&\cmark\color{blue}${}^{(a)}$&\cmark\\
\midrule
recovers  \algname{DIANA} and \algname{EF21} as particular cases&\xmark&\xmark&\cmark\\
\midrule
the convergence rate improves when $n$ is large&\cmark&\xmark&\cmark\\
\bottomrule
\end{tabular}}
{\footnotesize {\color{blue} ${}^{(a)}$} with pre-scaling with $\lambda<1$, so that $\mathcal{C}'= \lambda\mathcal{C}\in\mathbb{B}(\alpha)$ is used instead of $\mathcal{C}$}
\end{table*}

{\itshape
Is it possible to design an algorithm, which combines the advantages of  \algname{DIANA} and \algname{EF21}? That is, such that:
\begin{enumerate}
	\item[a.]
It deals with unbiased compressors, biased contractive compressors, and possibly even more.
	\item[b.]
It recovers  \algname{DIANA} and \algname{EF21} as particular cases.
\item[c.]
Its convergence rate improves with $n$ large.  
	\end{enumerate}%
}\medskip
	
\noindent\textbf{Contributions.}\ \ We answer positively this question and propose a new algorithm, which we name \algname{EF-BV}, for \emph{Error Feedback with Bias-Variance decomposition}, which for the first time satisfies the three aforementioned properties. This is illustrated in Tab.~\ref{tab1}. More precisely, our contributions are:
\begin{enumerate}
	\item We propose a new, larger class of compressors,
	which includes unbiased and biased contractive compressors as particular cases, and has two parameters, the \textbf{bias} $\eta$ and the \textbf{variance} $\omega$. A third parameter $\oma$ describes  the resulting variance from the parallel compressors after aggregation, and is key to getting faster convergence with large $n$, by allowing larger stepsizes than in \algname{EF21} in our framework.
	\item We propose a new algorithm, named \algname{EF-BV}, which exploits the properties of the compressors in the new class using two scaling parameters $\lambda$ and $\nu$. For particular values of $\lambda$ and $\nu$, \algname{EF21} and  \algname{DIANA} are recovered as particular cases. But by setting the values of $\lambda$ and $\nu$ optimally with respect to $\eta$, $\omega$, $\oma$ in \algname{EF-BV}, faster convergence can be obtained.
	\item We prove linear convergence of \algname{EF-BV} under a Kurdyka--{\L}ojasiewicz condition of $f+R$, which is weaker than strong convexity of $f+R$. 	Even for \algname{EF21} and  \algname{DIANA}, this is new.
	
	\item We provide new insights on \algname{EF21} and  \algname{DIANA}; for instance, we prove linear convergence of  \algname{DIANA} with biased compressors.

	\end{enumerate}%

\section{Compressors and their properties}
We introduce two of the most widely used types of compressors: unbiased compressors (\Cref{def:unbiased_compressor}) and biased contractive compressors (\Cref{def:biased_contractive_compressor}). In the subsequent section, we propose a new, more general class of compressors, which forms the foundation of our method.

\subsection{New general class of compressors}\label{sec23}

We refer to \citet{bez20}, Table 1 in \citet{saf21}, \citet{zha21}, \citet{sze22}, for examples of compressors in $\mathbb{U}(\omega)$ or $\mathbb{B}(\alpha)$, and to \citet{xu20} for a system-oriented survey.

In this work, we introduce a new, more general class of compressors, ruled by 2 parameters, to allow for a finer characterization of their properties. Indeed, with any compressor $\mathcal{C}$, we can do a {\bf bias-variance decomposition} of the compression error: 
for every $x\in\mathbb{R}^d$,
\begin{equation}
 \mathbb{E}\big[\|\mathcal{C}(x)-x\|^2\big] = {\underbrace{\big\| \mathbb{E}[\mathcal{C}(x)]-x\big\|}_{\text{bias}}}^2 + \underbrace{\mathbb{E}\Big[\big\|\mathcal{C}(x)-\mathbb{E}[\mathcal{C}(x)]\big\|^2\Big]}_{\text{variance}}.\label{eqbiva}
 \end{equation}
Therefore, to better characterize the properties of compressors, we propose to parameterize these two parts, instead of only their sum: for every  $\eta \in [0,1)$ and $\omega\geq 0$, 
we introduce the new class $\mathbb{C}(\eta,\omega)$ of possibly random and biased operators, which are randomized operators of the form $\mathcal{C}:\mathbb{R}^d\rightarrow \mathbb{R}^d$, satisfying, for every $x\in\mathbb{R}^d$, the two properties:
\begin{align*}
\mathrm{(i)}\quad &\big\| \mathbb{E}[\mathcal{C}(x)]-x\big\|\leq \eta \|x\|,\\
\mathrm{(ii)}\quad& \mathbb{E}\Big[\big\|\mathcal{C}(x)-\mathbb{E}[\mathcal{C}(x)]\big\|^2\Big]\leq \omega\|x\|^2.
\end{align*}
Thus, $\eta$ and $\omega$ control the relative bias and variance of the compressor, respectively. 
Note that $\omega$ can be arbitrarily large, but the compressors will be scaled in order to control the compression error, as we discuss  in Sect.~\eqref{secsca}. 
 On the other hand, we must have $\eta<1$, since otherwise, no scaling can keep the compressor's discrepancy under control.\medskip

We have the following properties:
\begin{enumerate}
\item  $\mathbb{C}(\eta,0)$ is the class of deterministic compressors in $\mathbb{B}(\alpha)$, with $1-\alpha=\eta^2$.

\item  $\mathbb{C}(0,\omega)=\mathbb{U}(\omega)$, for every $\omega\geq 0$. In words, if its bias $\eta$ is zero,  the compressor is unbiased with relative variance $\omega$.

\item  Because of the bias-variance decomposition \eqref{eqbiva},  if $\mathcal{C}\in\mathbb{C}(\eta,\omega)$ with $\eta^2+\omega < 1$, then $\mathcal{C}\in\mathbb{B}(\alpha)$ with 
\begin{equation}
1-\alpha = \eta^2+\omega.\label{eqalpha}
\end{equation}

\item  Conversely, if $\mathcal{C}\in\mathbb{B}(\alpha)$, one easily sees from \eqref{eqbiva} that there exist $\eta \leq \sqrt{1-\alpha}$ and $\omega \leq 1-\alpha$ such that $\mathcal{C}\in\mathbb{C}(\eta,\omega)$.
\end{enumerate}

Thus, the new class $\mathbb{C}(\eta,\omega)$  generalizes the two previously known classes $\mathbb{U}(\omega)$ and $\mathbb{B}(\alpha)$. Actually, for compressors in $\mathbb{U}(\omega)$ and $\mathbb{B}(\alpha)$, we can just use  \algname{DIANA} and  \algname{EF21}, and our proposed algorithm \algname{EF-BV} will stand out when the compressors are neither in $\mathbb{U}(\omega)$ nor in $\mathbb{B}(\alpha)$; that is why the strictly larger class $\mathbb{C}(\eta,\omega)$ is needed for our purpose.

We present new compressors in the class $\mathbb{C}(\eta,\omega)$ in Appendix~\ref{secappa}.

\subsection{Average variance of several compressors}\label{secavv}

Given $n$ compressors $\mathcal{C}_i$, $i\in \mathcal{I}_n$, we are interested in how they behave in average. Indeed distributed algorithms consist, at every iteration, in compressing vectors in parallel, and then averaging them. Thus, we 
 introduce the \textbf{average relative variance} 
$\oma\geq 0$ of the compressors,  
such that, for every $x_i\in\mathbb{R}^d$, $i\in\mathcal{I}_n$, 
\begin{equation}
\Exp{ \sqnorm{ \frac{1}{n}\sum_{i=1}^n \big(\mathcal{C}_i(x_i)-\Exp{\mathcal{C}_i(x_i)}\big)} } \leq \frac{\oma}{n} \sum_{i=1}^n \sqnorm{x_i }.
\label{eqbo}
\end{equation}
When every $\mathcal{C}_i$ is in $\mathbb{C}(\eta,\omega)$, for some $\eta \in [0,1)$ and $\omega\geq 0$, then 
 $\oma \leq \omega$; but $\oma$ can be much smaller than  $\omega$, and we will exploit this property in \algname{EF-BV}. We can also remark that $
\frac{1}{n} \sum_{i=1}^n \mathcal{C}^i \in \mathbb{C}(\eta,\oma)$.

An important property is the following: if the $\mathcal{C}_i$ are mutually independent, since the variance of a sum of random variables is the sum of their variances, then
\begin{equation*}
\oma=\frac{\omega}{n}.
\end{equation*}
There are other cases where the compressors are dependent but $\oma$ is much smaller than $\omega$. Notably, the following setting can be used to model partial participation of $m$ among $n$ workers at every iteration of a distributed algorithm. For some $m \in \mathcal{I}_n$, where $\mathcal{I}_n \eqdef \{1, \ldots, n\}$ represents the index set, the $\mathcal{C}_i$ are defined jointly as follows: for every $i \in \mathcal{I}_n$ and $x_i \in \mathbb{R}^d$,
\begin{equation*}
\mathcal{C}_i(x_i) =\begin{cases} \;\frac{n}{m} x_i & \text{ if }\;  i\in\Omega \\ 
\;0 & \text{ otherwise} \end{cases},
\end{equation*}
where $\Omega$ is a subset of $\mathcal{I}_n$ of size $m$ chosen uniformly at random. 
 This is sometimes called $m$-nice sampling \citep{ric16,gow20}. Then  every $\mathcal{C}_i$ belongs to $\mathbb{U}(\omega)$, with $\omega=\frac{n-m}{m}$, and, as shown for instance in   \citet{qia19} and Proposition~1 in \citet{con22m},  \eqref{eqbo} is satisfied with
\begin{equation*}
\oma=\frac{n-m}{m(n-1)}=\frac{\omega}{n-1}\quad\mbox{($=0$ if $n=m=1$)}. 
\end{equation*}

\subsection{Scaling compressors}\label{secsca}

A compressor $\mathcal{C}\in\mathbb{C}(\eta,\omega)$ does not necessarily belong to $\mathbb{B}(\alpha)$ for any $\alpha \in (0,1]$, since $\omega$ can be arbitrarily large. 
Fortunately, the compression error can be kept under control 
by \emph{scaling} the compressor; that is, using  $\lambda \mathcal{C}$ instead of $\mathcal{C}$, for some scaling parameter $\lambda\leq 1$. We have:
\begin{proposition}
\label{prop3}
Let $\mathcal{C}\in\mathbb{C}(\eta,\omega)$, for some  $\eta \in [0,1)$ and $\omega\geq 0$, and $\lambda\in (0,1]$. Then
$\lambda\mathcal{C}\in\mathbb{C}(\eta',\omega')$ with $\omega'= \lambda^2 \omega$ and $\eta '=\lambda\eta+1-\lambda\in(0,1]$.\end{proposition}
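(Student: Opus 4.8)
The plan is to verify directly that the operator $\lambda\mathcal{C}$ satisfies the two defining inequalities of the class $\mathbb{C}(\eta',\omega')$ stated in Section~\ref{sec23}, treating the variance bound (ii) and the bias bound (i) separately, since multiplication by $\lambda$ interacts with them differently.

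First I would dispatch the variance. By linearity of expectation $\mathbb{E}[\lambda\mathcal{C}(x)] = \lambda\,\mathbb{E}[\mathcal{C}(x)]$, hence $\lambda\mathcal{C}(x) - \mathbb{E}[\lambda\mathcal{C}(x)] = \lambda\big(\mathcal{C}(x) - \mathbb{E}[\mathcal{C}(x)]\big)$; taking squared norms and expectations pulls out a factor $\lambda^2$, and property (ii) of $\mathbb{C}(\eta,\omega)$ yields $\mathbb{E}\big[\|\lambda\mathcal{C}(x) - \mathbb{E}[\lambda\mathcal{C}(x)]\|^2\big] \leq \lambda^2\omega\|x\|^2$, i.e.\ the relative variance of $\lambda\mathcal{C}$ is at most $\omega' = \lambda^2\omega$. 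This step is immediate and uses no property of $\lambda$ beyond being a scalar.

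Next I would handle the bias, which is the one place requiring a little care. Writing $b \eqdef \mathbb{E}[\mathcal{C}(x)] - x$, property (i) gives $\|b\| \leq \eta\|x\|$, and then $\mathbb{E}[\lambda\mathcal{C}(x)] - x = \lambda(x + b) - x = (\lambda - 1)x + \lambda b$. Applying the triangle inequality, and using $\lambda \in (0,1]$ so that $|1-\lambda| = 1-\lambda$, I get $\big\|\mathbb{E}[\lambda\mathcal{C}(x)] - x\big\| \leq (1-\lambda)\|x\| + \lambda\|b\| \leq \big(1 - \lambda + \lambda\eta\big)\|x\|$, so the bias parameter of $\lambda\mathcal{C}$ is at most $\eta' = \lambda\eta + 1 - \lambda$. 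It then remains to check the range constraint: since $\eta < 1$ and $\lambda > 0$ we have $\eta' = 1 - \lambda(1-\eta) < 1$, and since $\lambda \le 1$ and $\eta \ge 0$ we have $\lambda(1-\eta) \le 1$, so $\eta' \ge 0$; thus $\eta'$ lies in the admissible range and $\lambda\mathcal{C} \in \mathbb{C}(\eta',\omega')$, completing the argument.

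I do not expect a genuine obstacle: the proposition follows from linearity of expectation plus one triangle inequality. The only pitfall to avoid is the tempting but incorrect guess $\eta' = \lambda\eta$ — shrinking toward the origin moves $\mathbb{E}[\lambda\mathcal{C}(x)]$ away from $x$ along $x$ itself, which is exactly the $(\lambda-1)x$ term, and dropping it would understate the bias; the correct bookkeeping must retain the $(1-\lambda)\|x\|$ contribution.
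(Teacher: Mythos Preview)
Your proposal is correct and follows essentially the same route as the paper's proof: both handle the variance by pulling out $\lambda^2$ via linearity of expectation, and the bias via the decomposition $\mathbb{E}[\lambda\mathcal{C}(x)]-x=\lambda(\mathbb{E}[\mathcal{C}(x)]-x)+(\lambda-1)x$ followed by the triangle inequality. Your explicit verification that $\eta'\in[0,1)$ is a nice touch the paper omits (and in fact your range is the correct one for membership in $\mathbb{C}(\eta',\omega')$; the proposition's stated interval $(0,1]$ appears to be a typo).
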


\begin{proof}Let $x\in\mathbb{R}^d$. Then 
$$\mathbb{E}\Big[\big\|\lambda\mathcal{C}(x)-\mathbb{E}[\lambda\mathcal{C}(x)]\big\|^2\Big]  = \lambda^2\mathbb{E}\Big[\big\|\mathcal{C}(x)-\mathbb{E}[\mathcal{C}(x)]\big\|^2\Big] \leq  \lambda^2\omega\|x\|^2,
$$ 
 and 
$$\big\| \mathbb{E}[\lambda\mathcal{C}(x)]-x\big\| 
 \leq  \lambda
\big\| \mathbb{E}[\mathcal{C}(x)]-x\big\|+(1-\lambda) \|x\| \leq  (\lambda\eta+1-\lambda)\|x\| .
$$
\end{proof}

So, scaling deteriorates the bias, with $\eta'\geq \eta$, but linearly, whereas it reduces the variance $\omega$ quadratically. This is key, since 
the total error factor $(\eta')^2+\omega'$ can be made smaller than 1 by choosing $\lambda$ sufficiently small:
\begin{proposition}
\label{propsmall}
Let $\mathcal{C}\in\mathbb{C}(\eta,\omega)$, for some  $\eta \in [0,1)$ and $\omega\geq 0$. There exists $\lambda\in(0,1]$ such that 
$\lambda\mathcal{C}\in \mathbb{B}(\alpha)$, for some $\alpha = 1- (1-\lambda+\lambda\eta)^2-{\lambda}^2\omega \in (0,1]$, and the best such $\lambda$, maximizing $\alpha$, is
\begin{equation*}
\lambda^\star=\min\left(\frac{1-\eta}{(1-\eta)^2+\omega},1\right).
\end{equation*}
\end{proposition}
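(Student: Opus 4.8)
The plan is to reduce the whole statement to a one-variable convex optimization. First I would invoke Proposition~\ref{prop3}: for any $\lambda\in(0,1]$ the scaled compressor $\lambda\mathcal{C}$ belongs to $\mathbb{C}(\eta',\omega')$ with $\eta'=1-\lambda+\lambda\eta$ and $\omega'=\lambda^2\omega$. Combining this with the bias--variance decomposition \eqref{eqbiva} applied to $\lambda\mathcal{C}$ gives, for every $x\in\mathbb{R}^d$,
\[
\mathbb{E}\big[\|\lambda\mathcal{C}(x)-x\|^2\big]\;\le\;\big((\eta')^2+\omega'\big)\|x\|^2\;=\;\big((1-\lambda+\lambda\eta)^2+\lambda^2\omega\big)\|x\|^2 ,
\]
so $\lambda\mathcal{C}\in\mathbb{B}(\alpha)$ exactly with $\alpha=1-(1-\lambda+\lambda\eta)^2-\lambda^2\omega$, provided this quantity lies in $(0,1]$. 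Hence the task becomes: maximize $\alpha(\lambda)$, equivalently minimize $g(\lambda)\eqdef(1-\lambda\beta)^2+\lambda^2\omega$ over $\lambda\in(0,1]$, where I write $\beta\eqdef 1-\eta>0$ (using $\eta<1$).

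Next I would expand $g(\lambda)=1-2\beta\lambda+(\beta^2+\omega)\lambda^2$, a strictly convex quadratic in $\lambda$ with positive leading coefficient and unconstrained minimizer $\lambda_0=\beta/(\beta^2+\omega)=(1-\eta)/((1-\eta)^2+\omega)>0$. By convexity, the minimizer of $g$ over $(0,1]$ is therefore $\lambda^\star=\min(\lambda_0,1)$, which is precisely the claimed formula. It then remains to check that the resulting $\alpha$ is genuinely in $(0,1]$ in both cases. If $\lambda_0\le 1$, then $g(\lambda^\star)=g(\lambda_0)=\omega/(\beta^2+\omega)$, so $\alpha=\beta^2/(\beta^2+\omega)=(1-\eta)^2/((1-\eta)^2+\omega)\in(0,1]$, with equality to $1$ iff $\omega=0$. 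If $\lambda_0>1$, then $\lambda^\star=1$ and $\alpha=1-\eta^2-\omega$; here $\alpha\le 1$ is immediate, and for positivity I would use that $\lambda_0>1$ rewrites as $\omega<\beta-\beta^2=\eta(1-\eta)$, whence $\eta^2+\omega<\eta^2+\eta(1-\eta)=\eta<1$.

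The only real subtlety — the step I would be most careful about — is this last positivity check at the boundary $\lambda^\star=1$: it is not obvious a priori that using $\lambda=1$ (i.e.\ not scaling at all) still yields a contractive compressor, and the implication $\lambda_0>1\Rightarrow\omega<\eta(1-\eta)\Rightarrow\eta^2+\omega<\eta<1$ is exactly what closes the gap. Everything else is routine: the reduction through Proposition~\ref{prop3} together with \eqref{eqbiva}, and the elementary minimization of a convex quadratic on an interval.
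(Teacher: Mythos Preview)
Your proof is correct and follows essentially the same approach as the paper: reduce via Proposition~\ref{prop3} and \eqref{eqbiva} to minimizing the strictly convex quadratic $P(\lambda)=(1-\lambda+\lambda\eta)^2+\lambda^2\omega$ over $(0,1]$, then read off the constrained minimizer. The paper's positivity argument is marginally slicker---it simply notes $P(0)=1$ and $P'(0)=\eta-1<0$, so the minimum on $[0,1]$ is automatically below $1$ without any case split---but your explicit two-case check is equally valid.
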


\begin{proof}
We define the polynomial $P:\lambda\mapsto(1-\lambda+\lambda\eta)^2+\lambda^2\omega$. 
After Proposition~\ref{prop3} and the discussion in Sect.~\ref{sec23}, we have to find $\lambda\in(0,1]$ such that $P(\lambda)<1$. Then 
$\lambda\mathcal{C}\in \mathbb{B}(\alpha)$, with $1-\alpha=P(\lambda)$. Since $P$ is a strictly convex quadratic function on $[0,1]$ with value $1$ and negative derivative $\eta-1$ at $\lambda=0$,  its minimum value on $[0,1]$ is smaller than 1 and is attained at $\lambda^\star$, which either satisfies the first-order condition $0 = P'(\lambda) = -2(1-\eta) +2\lambda\big((1-\eta)^2 + \omega\big)$, or, if this value is larger than 1, is equal to 1.
\end{proof}

In particular, if $\eta=0$, Proposition~\ref{propsmall} recovers Lemma 8 of \citet{ric21}, according to which, for  $\mathcal{C}\in\mathbb{U}(\omega)$, $\lambda^\star \mathcal{C}\in \mathbb{B}(\frac{1}{\omega+1})$, with $\lambda^\star=\frac{1}{\omega+1}$. 
For instance, the scaled \texttt{rand-}$k$ compressor, 
which keeps $k$ elements chosen uniformly at random unchanged and sets the other elements to 0, corresponds to scaling the unbiased \texttt{rand-}$k$ compressor, seen in \Cref{def:unbiased_compressor}, by $\lambda=\frac{k}{d}$.

We can remark that scaling is used to mitigate the randomness of a compressor, but cannot be used to reduce its bias: if $\omega=0$, $\lambda^\star=1$.

Our new algorithm \algname{EF-BV} will have two scaling parameters: $\lambda$, to mitigate the compression error in the control variates used for variance reduction, just like above, and $\nu$, to mitigate the error in the stochastic gradient estimate, in a similar way but with $\omega$ replaced by $\oma$, since we have seen in Sect.~\ref{secavv} that $\oma$ characterizes the randomness after averaging 
the outputs of several compressors.

\section{Proposed algorithm \algname{EF-BV}}

We propose the algorithm \algname{EF-BV}, shown in Fig.~\ref{fig1}. It makes use of compressors $\mathcal{C}_i^t \in \mathbb{C}(\eta,\omega)$, for some  $\eta \in [0,1)$ and $\omega\geq 0$, and we introduce $\oma\leq \omega$ such that \eqref{eqbo} is satisfied. That is, for any $x\in\mathbb{R}^d$, the $\mathcal{C}_i^t(x)$, for $i\in\mathcal{I}_n$ and $t\geq 0$, are distinct random variables; their laws might be the same or not, but they all lie in the class $\mathbb{C}(\eta,\omega)$. Also, $\mathcal{C}_i^t(x)$ and $\mathcal{C}_{i'}^{t'}(x')$, for $t\neq t'$, are independent. 

The compressors have the property that if their input is the zero vector, the compression error is zero, so we want to compress vectors that are close to zero, or at least converge to zero, to make the method variance-reduced. That is why each worker maintains a control variate $h_i^t$, converging, like $\nabla f_i(x^t)$, to $\nabla f_i(x^\star)$, for 
some solution $x^\star$. This way, the difference vectors $\nabla f_i(x^t)-h_i^t$ converge to zero, and these are the vectors that are going to be compressed. 
Thus,  \algname{EF-BV} takes the form of Distributed proximal \algname{SGD}, with $$g_i^t = h_i^t + \nu  \mathcal{C}_i^t\big(\nabla f_i(x^t)-h_i^t\big),$$ where the scaling parameter $\nu$ will be used to make the compression error, averaged over $i$, small; that is, to make $g^{t+1}=\frac{1}{n}\sum_{i=1}^n g_i^t $ close to $\nabla f(x^t)$. 
In parallel, the control variates are updated similarly as $$h_i^{t+1}= h_i^t + \lambda \mathcal{C}_i^t\big(\nabla f_i(x^t)-h_i^t\big),$$ where the scaling parameter $\lambda$ is used to make the compression error small, individually for each $i$; that is, to make $h_i^{t+1}$ close to $\nabla f_i(x^t)$.

\begin{figure*}[t]
\begin{minipage}{.30\textwidth}
	\begin{algorithm}[H]
	\scalefont{0.9}
		% \caption{\algname{EF-BV} \\ proposed method}
            \caption{\algname{EF-BV}}
		%\label{alg}
		\begin{algorithmic}
			\STATE
			\noindent \textbf{Input:} $x^0, h_1^0, \dots, h_n^0 \in \mathbb{R}^d$,  
			$h^0= \frac{1}{n}\sum_{i=1}^n h_i^0$, $\gamma>0$, \\
			 {\color{blue}$\lambda\in(0,1]$}, {\color{red}$\nu\in(0,1]$}
			%\STATE \textbf{Initialize:} $x_i^0 \coloneqq \xx$, for all $i =1,\ldots,M$
			\FOR{$t=0, 1, \ldots$}
			%\STATE master broadcasts $x^t$ to all workers
			\FOR{$i=1, 2, \ldots, n$ in parallel}
			\STATE $d_i^t \coloneqq \mathcal{C}_i^t\big(\nabla f_i(x^t)-h_i^t\big)$
			\STATE $h_i^{t+1} \coloneqq h_i^t + {\color{blue}\lambda} d_i^t$
			\STATE send $d_i^t$ to master
			\ENDFOR
			\STATE at master:
			\STATE $d^t \coloneqq \frac{1}{n}\sum_{i=1}^n d_i^t$
			\STATE $h^{t+1} \coloneqq h^t + {\color{blue}\lambda} d^t$
			\STATE $g^{t+1} \coloneqq h^t + {\color{red}\nu} d^t$ 
			\STATE $x^{t+1}\!\coloneqq\!\mathrm{prox}_{\gamma R}(x^t - \gamma g^{t+1})$%
			\STATE broadcast $x^{t+1}$ to all workers
			\ENDFOR
		\end{algorithmic}
	\end{algorithm}
	\end{minipage}%}
	\ \ \ \ \ \ \begin{minipage}{.30\textwidth}
	\begin{algorithm}[H]
	\scalefont{0.9}
		% \caption{\algname{EF21} \\ \citep{ric21}}
        \caption{\algname{EF21}}
		%\label{alg}
		\begin{algorithmic}
			\STATE
			\noindent \textbf{Input:} $x^0, h_1^0, \dots, h_n^0 \in \mathbb{R}^d$,  
			$h^0= \frac{1}{n}\sum_{i=1}^n h_i^0$,  $\gamma>0$, \\
			\phantom{XXX}
			\FOR{$t=0, 1, \ldots$}
			\FOR{$i=1, 2, \ldots, n$ in parallel}
			\STATE $d_i^t \coloneqq \mathcal{C}_i^t\big(\nabla f_i(x^t)-h_i^t\big)$
			\STATE $h_i^{t+1} \coloneqq h_i^t +  d_i^t$
			\STATE send $d_i^t$ to master
			\ENDFOR
			\STATE at master:
			\STATE $d^t \coloneqq \frac{1}{n}\sum_{i=1}^n d_i^t$
			\STATE $h^{t+1} \coloneqq h^t +  d^t$
			\STATE $g^{t+1} \coloneqq h^t +  d^t$ 
			\STATE $x^{t+1}\!\coloneqq\!\mathrm{prox}_{\gamma R}(x^t - \gamma g^{t+1})$%
			\STATE broadcast $x^{t+1}$ to all workers
			\ENDFOR
		\end{algorithmic}
	\end{algorithm}
	\end{minipage}
	\ \ \ \ \ \ \begin{minipage}{.30\textwidth}
	\begin{algorithm}[H]
	\scalefont{0.9}
		% \caption{\algname{DIANA} \\ \citep{DIANA}}
		\caption{\algname{DIANA}}
		%\label{alg}
		\begin{algorithmic}
			\STATE
			\noindent \textbf{Input:}  $x^0, h_1^0, \dots, h_n^0 \in \mathbb{R}^d$,  $h^0= \frac{1}{n}\sum_{i=1}^n h_i^0$, 
			  $\gamma>0$, \\
			{\color{blue}$\lambda\in(0,1]$}
			%\STATE \textbf{Initialize:} 
			\FOR{$t=0, 1, \ldots$}
			\FOR{$i=1, 2, \ldots, n$ in parallel}
			\STATE $d_i^t \coloneqq \mathcal{C}_i^t\big(\nabla f_i(x^t)-h_i^t\big)$
			\STATE $h_i^{t+1} \coloneqq h_i^t + {\color{blue}\lambda} d_i^t$
			\STATE send $d_i^t$ to master
			\ENDFOR
			\STATE at master:
			\STATE $d^t \coloneqq \frac{1}{n}\sum_{i=1}^n d_i^t$
			\STATE $h^{t+1} \coloneqq h^t + {\color{blue}\lambda} d^t$ 			
			\STATE $g^{t+1} \coloneqq h^t +  d^t$ 
			\STATE $x^{t+1}\!\coloneqq\!\mathrm{prox}_{\gamma R}(x^t - \gamma g^{t+1})$%
			\STATE broadcast $x^{t+1}$ to all workers
			\ENDFOR
		\end{algorithmic}
	\end{algorithm}
	\end{minipage}
	\caption{\label{fig1}In the three algorithms, $g^{t+1}$ is an estimate of $\nabla f(x^t)$, the $h_i^t$ are control variates converging to 
	$\nabla f_i(x^\star)$, and their average $h^t= \frac{1}{n}\sum_{i=1}^n h_i^t$ is maintained and updated by the master. 
	\algname{EF21} is a particular case of \algname{EF-BV}, when $\nu=\lambda=1$ and the compressors are in $\mathbb{B}(\alpha)$; 	then $g^{t+1}$ is simply equal to $h^{t+1}$ for every $t\geq 0$. 
	\algname{DIANA} is a particular case of \algname{EF-BV}, when $\nu=1$ and  the compressors are in $\mathbb{U}(\omega)$; then $g^{t}$ is an unbiased estimate of $\nabla f(x^t)$.}
	\end{figure*}

	\subsection{\algname{EF21} as a particular case of \algname{EF-BV}}\label{sec31}
	
	There are two ways to recover \algname{EF21} as a particular case of \algname{EF-BV}:
	
\begin{enumerate}
\item If the compressors $\mathcal{C}_i^t$ are in $\mathbb{B}(\alpha)$, for some $\alpha\in (0,1]$, there is no need for scaling the compressors, and we can use \algname{EF-BV} with $\lambda=\nu=1$. Then the variable $h^t$ in \algname{EF-BV} becomes redundant with the gradient estimate $g^t$ and we can only keep the latter, which yields  \algname{EF21}, as shown in Fig.~\ref{fig1}.
	
\item  If the scaled compressors $\lambda\mathcal{C}_i^t$ are in $\mathbb{B}(\alpha)$, for some $\alpha\in (0,1]$ and $\lambda\in (0,1)$
	(see Proposition~\ref{propsmall}), one  can simply 	use these scaled compressors in \algname{EF21}. This is equivalent to using  \algname{EF-BV} with the original compressors $\mathcal{C}_i^t$, the scaling with $\lambda$ taking place inside the algorithm. But we must have $\nu=\lambda$ for this equivalence to hold.
\end{enumerate}	
	
	Therefore, we consider thereafter that \algname{EF21} corresponds to the particular case of  \algname{EF-BV} with $\nu=\lambda \in (0,1]$ and $\lambda\mathcal{C}_i^t \in \mathbb{B}(\alpha)$, for some $\alpha\in (0,1]$, and is not only the original algorithm shown in Fig.~\ref{fig1},  which has no scaling parameter (but scaling might have been applied beforehand to make the compressors in  $\mathbb{B}(\alpha)$).

	\subsection{\algname{DIANA} as a particular case of \algname{EF-BV}}\label{sec32}
	
	\algname{EF-BV} with $\nu=1$  yields exactly  \algname{DIANA}, as shown in Fig.~\ref{fig1}.  \algname{DIANA} was only studied with unbiased compressors $\mathcal{C}_i^t\in\mathbb{U}(\omega)$, for some $\omega\geq 0$. In that case, $\Exp{g^{t+1}}=\nabla f(x^t)$, so that $g^{t+1}$ is an unbiased stochastic gradient estimate; this is not the case in \algname{EF21} and \algname{EF-BV}, in general. Also, 
	$\lambda=\frac{1}{1+\omega}$ is the usual choice in  \algname{DIANA}, which is consistent with Proposition~\ref{propsmall}.

\section{Linear convergence results}\label{sec5}

We will prove linear convergence of \algname{EF-BV} under conditions weaker than strong convexity of $f+R$.

When $R=0$, we will consider the  Polyak--{\L}ojasiewicz (P{\L}) condition on $f$: $f$ is said to satisfy the  P{\L}  condition with constant $\mu>0$ if, 
 for every $x\in\mathbb{R}^d$,
%\begin{equation*}
$\|\nabla f(x)\|^2\geq 2\mu\big(f(x)-f^\star\big)$,
%\end{equation*}
where $f^\star = f(x^\star)$, for any minimizer $x^\star$ of $f$. This holds if, for instance, $f$ is $\mu$-strongly convex; that is, $f-\frac{\mu}{2}\|\cdot\|^2$ is convex.  In the general case, we will consider the  Kurdyka--{\L}ojasiewicz (K{\L}) condition with exponent $1/2$ \citep{att09,kar16} on $f+R$: $f+R$ is said to satisfy the  K{\L}  condition with constant $\mu>0$ if,  for every $x\in\mathbb{R}^d$ and $u\in  \partial R(x)$, 
\begin{equation}
\|\nabla f(x)+u\|^2\geq 2\mu\big(f(x)+R(x)-f^\star-R^\star\big),\label{eqKL}
\end{equation} 
where $f^\star = f(x^\star)$ and $R^\star= R(x^\star)$, for any minimizer $x^\star$ of $f+R$. This holds if, for instance, $R=0$ and $f$ satisfies the P{\L}  condition with constant $\mu$, so that the K{\L}  condition generalizes the P{\L}  condition to the general case $R\neq 0$. The K{\L}  condition also holds if $f+R$ is $\mu$-strongly convex \citep{kar16}, for which it is sufficient that $f$ is $\mu$-strongly convex, or $R$ is $\mu$-strongly convex.

In the rest of this section, we assume that $\mathcal{C}_i^t \in \mathbb{C}(\eta,\omega)$, for some  $\eta \in [0,1)$ and $\omega\geq 0$, and we introduce $\oma\leq \omega$ such that \eqref{eqbo} is satisfied. 
According to the discussion in Sect.~\ref{secsca} (see also Remark~\ref{rem1} below), we define the optimal values for the scaling parameters $\lambda$ and $\nu$:
\begin{align*}
\lambda^\star\eqdef\min\left(\frac{1-\eta}{(1-\eta)^2+\omega},1\right), \qquad \nu^\star\eqdef\min\left(\frac{1-\eta}{(1-\eta)^2+\oma},1\right).
\end{align*}
Given $\lambda \in (0,1]$ and $\nu \in (0,1]$, 
we define for convenience
$r \eqdef (1-\lambda+\lambda\eta)^2+{\lambda}^2\omega$, $r_{\mathrm{av}} \eqdef (1-\nu+\nu\eta)^2+{\nu}^2\oma$, 
as well as
$s^\star \eqdef \sqrt{\frac{1+r}{2r}}-1$ and $\theta^\star \eqdef s^\star(1+s^\star)\frac{r}{r_{\mathrm{av}}}$.

Note that if $r<1$, according to Proposition~\ref{prop3} and \eqref{eqalpha}, $\lambda \mathcal{C}_i^t \in \mathbb{B}(\alpha)$, with $\alpha=1-r$.

Our linear convergence results for \algname{EF-BV} are the following: 

\begin{theorem}\label{theo1}Suppose that $R=0$ and $f$ satisfies the P{\L}  condition with some constant  $\mu>0$. 
In \algname{EF-BV}, suppose that $\nu \in (0,1]$, $\lambda \in (0,1]$ is such that $r<1$, and 
\begin{equation}
0<\gamma \leq \frac{1}{L+\tilde{L}\sqrt{\frac{r_{\mathrm{av}}}{r}}\frac{1}{s^\star}}.\label{equpb1}
\end{equation}
For every $t\geq 0$, define the Lyapunov function
\begin{equation*}
\displaystyle\Psi^t \eqdef f(x^t)-f^\star + \frac{\gamma}{2\theta^\star}  \frac{1}{n}\sum_{i=1}^n \sqnorm{\nabla f_i(x^t)-h_i^{t}}, 
\end{equation*}
where $f^\star \eqdef f(x^\star)$, for any minimizer $x^\star$ of $f$. 
Then, for every $t\geq 0$,
\begin{align}
\Exp{\Psi^{t}} 
&\leq \left(\max\left(1-\gamma\mu, {\frac{r+1}{2}}\right) \right)^t\Psi^0.\label{eqsdgerg}
\end{align}
\end{theorem}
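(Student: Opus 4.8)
The plan is to construct a Lyapunov-function descent inequality by controlling two pieces: the decrease of the function value $f(x^t)-f^\star$ coming from the (inexact, biased) gradient step, and the contraction of the averaged control-variate discrepancy $\frac1n\sum_i \sqnorm{\nabla f_i(x^t)-h_i^t}$. First I would write $x^{t+1}=x^t-\gamma g^{t+1}$ (since $R=0$, the prox is the identity) with $g^{t+1}=h^t+\nu d^t$ and $d^t=\frac1n\sum_i \mathcal{C}_i^t(\nabla f_i(x^t)-h_i^t)$, and use $L$-smoothness of $f$ to get $f(x^{t+1})\le f(x^t)-\gamma\langle\nabla f(x^t),g^{t+1}\rangle+\frac{L\gamma^2}{2}\sqn{g^{t+1}}$. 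I would then take expectation conditioned on iteration $t$, splitting $g^{t+1}$ into its conditional mean and a zero-mean fluctuation. The conditional mean of $g^{t+1}$ is $h^t+\nu\,\frac1n\sum_i\EE[\mathcal{C}_i^t(\nabla f_i(x^t)-h_i^t)]$, which by property (i) of the class $\mathbb{C}(\eta,\omega)$ is within $\nu\eta$-relative distance (after scaling, via Proposition~\ref{prop3}) of $\nabla f(x^t)$; the fluctuation has variance bounded by $\frac{\nu^2\oma}{n}\sum_i\sqnorm{\nabla f_i(x^t)-h_i^t}$ using the average-variance bound \eqref{eqbo}. This is where $r_{\mathrm{av}}=(1-\nu+\nu\eta)^2+\nu^2\oma$ enters: it is exactly the "effective contraction factor'' of the scaled aggregated compressor acting on the discrepancy vectors.

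Next I would handle the control-variate term. From $h_i^{t+1}=h_i^t+\lambda d_i^t$ one gets $\nabla f_i(x^{t+1})-h_i^{t+1}=(\nabla f_i(x^t)-h_i^t)-\lambda\mathcal{C}_i^t(\nabla f_i(x^t)-h_i^t)+(\nabla f_i(x^{t+1})-\nabla f_i(x^t))$. Taking squared norm, expectation, and using that $\lambda\mathcal{C}_i^t\in\mathbb{B}(1-r)$ (Proposition~\ref{propsmall}/\eqref{eqalpha}) gives $\EE\sqnorm{(\nabla f_i(x^t)-h_i^t)-\lambda d_i^t}\le r\sqnorm{\nabla f_i(x^t)-h_i^t}$, while the drift term $\nabla f_i(x^{t+1})-\nabla f_i(x^t)$ is controlled by $L_i\norm{x^{t+1}-x^t}=L_i\gamma\norm{g^{t+1}}$; averaging over $i$ turns $\frac1n\sum_i L_i^2$ into $\tilde L^2$. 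A Young's inequality with a free parameter — to be optimized as $s^\star$ — combines the contraction factor $r$ with the cross term; the choice $s^\star=\sqrt{\tfrac{1+r}{2r}}-1$ is precisely what makes the per-step multiplier on the control-variate term equal $\tfrac{r+1}{2}<1$ while leaving enough slack to absorb the gradient-norm contributions. The coefficient $\frac{\gamma}{2\theta^\star}$ in $\Psi^t$ with $\theta^\star=s^\star(1+s^\star)\tfrac{r}{r_{\mathrm{av}}}$ is then tuned so that the $\sqn{g^{t+1}}$ terms generated on both sides cancel, using the stepsize bound \eqref{equpb1}.

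After assembling, I would have $\EE_t[\Psi^{t+1}]\le \Psi^t - \gamma\langle\nabla f(x^t),\text{(bias-free part)}\rangle + (\text{negative definite in }\sqn{g^{t+1}})$, and the cross-term involving the bias $\nu\eta$ part must be absorbed carefully — this is where I expect the main obstacle: the bias makes $g^{t+1}$ not an unbiased gradient estimate, so one cannot simply pull out $-\gamma\sqn{\nabla f(x^t)}$. Instead one must use Young's inequality to write $-\gamma\langle\nabla f(x^t),\nabla f(x^t)+(\text{bias term})\rangle\le -\tfrac{\gamma}{2}\sqn{\nabla f(x^t)}+\tfrac{\gamma}{2}\sqnorm{\text{bias term}}$, and the bias term is itself proportional to the control-variate discrepancy (since $\mathcal{C}_i^t(\nabla f_i(x^t)-h_i^t)$ vanishes with the discrepancy), so it can be folded back into the Lyapunov term at the cost of adjusting constants. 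Once $-\tfrac{\gamma}{2}\sqn{\nabla f(x^t)}\le -\gamma\mu(f(x^t)-f^\star)$ is invoked via the P\L{} condition, collecting terms yields $\EE_t[\Psi^{t+1}]\le\max(1-\gamma\mu,\tfrac{r+1}{2})\,\Psi^t$, and iterating the tower property gives \eqref{eqsdgerg}. The delicate bookkeeping is making all the constants — $s^\star$, $\theta^\star$, the stepsize ceiling — simultaneously consistent, but each is forced by requiring a specific term to cancel or to have the right sign, so the derivation is mechanical once the decomposition is set up.
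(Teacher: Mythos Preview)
Your overall architecture matches the paper's: bound $\EE\sqnorm{g^{t+1}-\nabla f(x^t)}$ by $r_{\mathrm{av}}\cdot\frac1n\sum_i\sqnorm{\nabla f_i(x^t)-h_i^t}$ via the bias--variance decomposition of the compressors, contract the control-variate term by $r$ using a Peter--Paul parameter $s$, choose $\theta=s(1+s)\,r/r_{\mathrm{av}}$, set $s=s^\star$ so that $(1+s^\star)^2 r=(r+1)/2$, and invoke the P{\L} inequality. The roles of $r$, $r_{\mathrm{av}}$, $s^\star$, $\theta^\star$ and $\tilde L$ are all correctly identified.

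There is, however, a genuine gap in the descent step. From your smoothness inequality you carry $+\tfrac{L\gamma^2}{2}\sqnorm{g^{t+1}}$, and the control-variate drift contributes $+\tfrac{\gamma}{2\theta^\star}(1+(s^\star)^{-1})\tilde L^2\,\gamma^2\sqnorm{g^{t+1}}$. Both coefficients are \emph{positive}; no choice of $\gamma$ makes their sum nonpositive, so the claim that ``the $\sqnorm{g^{t+1}}$ terms generated on both sides cancel'' is false as written. Your Young-inequality treatment of $-\gamma\langle\nabla f(x^t),\EE[g^{t+1}]\rangle$ yields $-\tfrac{\gamma}{2}\sqnorm{\nabla f(x^t)}+\tfrac{\gamma}{2}\sqnorm{\text{bias}}$ but contributes \emph{nothing} negative in $\sqnorm{g^{t+1}}$. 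Consequently the drift cannot be absorbed, and the exact constants in the theorem (the stepsize ceiling \eqref{equpb1} and the clean factor $1-\gamma\mu$) are not recovered---you would instead get a degraded P{\L} coefficient of the form $(1-cL\gamma)\gamma\mu$ and a mismatched stepsize bound.

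The fix is precisely what the paper does: apply the polarization identity \emph{before} taking expectation,
\[
-\gamma\langle\nabla f(x^t),g^{t+1}\rangle
=-\tfrac{\gamma}{2}\sqnorm{\nabla f(x^t)}-\tfrac{\gamma}{2}\sqnorm{g^{t+1}}+\tfrac{\gamma}{2}\sqnorm{g^{t+1}-\nabla f(x^t)},
\]
so that the smoothness inequality becomes
\[
f(x^{t+1})\le f(x^t)-\tfrac{\gamma}{2}\sqnorm{\nabla f(x^t)}+\tfrac{\gamma}{2}\sqnorm{g^{t+1}-\nabla f(x^t)}+\Bigl(\tfrac{L}{2}-\tfrac{1}{2\gamma}\Bigr)\sqnorm{x^{t+1}-x^t}.
\]
Now $\bigl(\tfrac{L}{2}-\tfrac{1}{2\gamma}\bigr)<0$, and combined with the drift the total coefficient on $\sqnorm{x^{t+1}-x^t}$ is $\tfrac{L}{2}-\tfrac{1}{2\gamma}+\tfrac{\gamma}{2(s^\star)^2}\tfrac{r_{\mathrm{av}}}{r}\tilde L^2$, which is $\le 0$ exactly under \eqref{equpb1}. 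Moreover $\sqnorm{g^{t+1}-\nabla f(x^t)}$ already packages bias and variance together, so there is no need to split them: its conditional expectation is bounded by $r_{\mathrm{av}}\cdot\frac1n\sum_i\sqnorm{\nabla f_i(x^t)-h_i^t}$ directly. After this single correction, the rest of your plan goes through verbatim and delivers exactly $\max\!\big(1-\gamma\mu,\tfrac{r+1}{2}\big)$.
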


\begin{theorem}\label{theo2}
Suppose that $f+R$ satisfies the  the K{\L}  condition with some constant $\mu>0$. 
In \algname{EF-BV}, suppose that $\nu \in (0,1]$, $\lambda \in (0,1]$ is such that $r<1$, and
\begin{equation}
0<\gamma \leq \frac{1}{2L+\tilde{L}\sqrt{\frac{r_{\mathrm{av}}}{r}}\frac{1}{s^\star}}.\label{equpb2}
\end{equation}
 $\forall t\geq 0$, define the Lyapunov function
\begin{align*}
\displaystyle\Psi^t \eqdef f(x^t)+R(x^t)-f^\star - R^\star  + \frac{\gamma}{2\theta^\star}  \frac{1}{n}\sum_{i=1}^n \sqnorm{\nabla f_i(x^t)-h_i^{t}}, 
\end{align*}
where $f^\star \eqdef f(x^\star)$ and $R^\star \eqdef R(x^\star)$, for any minimizer $x^\star$ of $f+R$. 
Then, for every $t\geq 0$,
\begin{align}
\Exp{\Psi^{t}}  &\leq \left(\max\left({\frac{1}{1+\frac{1}{2}\gamma\mu}},\frac{r+1}{2}\right)\right)^t\Psi^0.\label{eqsdgerg2}
\end{align}
\end{theorem}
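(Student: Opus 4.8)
The plan is to run a Lyapunov argument tracking exactly the potential $\Psi^t$ in the statement and to close the recursion with the K\L{} inequality \eqref{eqKL}; as usual for variance-reduced compressed proximal \algname{SGD}, this reduces to (i) a one-step descent inequality for $f+R$, (ii) a mean-square bound on the gradient error, and (iii) a contraction estimate for the control-variate residual, all in conditional expectation given the iterates up to time $t$. First, write the prox step as $x^{t+1}=x^t-\gamma(g^{t+1}+u^{t+1})$ for some $u^{t+1}\in\partial R(x^{t+1})$, set $G^t\eqdef\frac1\gamma(x^t-x^{t+1})=g^{t+1}+u^{t+1}$, and combine $L$-smoothness of $f$ with the subgradient inequality $R(x^{t+1})\leq R(x^t)+\lin{u^{t+1},x^{t+1}-x^t}$ to get
\begin{equation*}
(f+R)(x^{t+1})\leq (f+R)(x^t)-\gamma\Norm{G^t}^2-\gamma\lin{\nabla f(x^t)-g^{t+1},G^t}+\tfrac{L\gamma^2}{2}\Norm{G^t}^2,
\end{equation*}
then split the cross term with Young's inequality (free parameter $\beta>0$) so as to isolate a multiple of $\Norm{\nabla f(x^t)-g^{t+1}}^2$.

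Second, I would control the two compression errors using Section~\ref{secsca}. For the gradient, the bias--variance decomposition \eqref{eqbiva} applied to the scaled operator $\nu\mathcal{C}_i^t$ (Proposition~\ref{prop3} bounds its bias by $(1-\nu+\nu\eta)\Norm{\cdot}$) together with the average-variance property \eqref{eqbo} gives
\begin{equation*}
\Exp{\Norm{\nabla f(x^t)-g^{t+1}}^2}\leq r_{\mathrm{av}}\,\gG^t,\qquad \gG^t\eqdef\frac1n\sumin\Norm{\nabla f_i(x^t)-h_i^t}^2 .
\end{equation*}
For the control variates, from $h_i^{t+1}=h_i^t+\lambda\mathcal{C}_i^t(\nabla f_i(x^t)-h_i^t)$, the fact that $\lambda\mathcal{C}_i^t\in\mathbb{B}(1-r)$ when $r<1$ (Proposition~\ref{prop3} and \eqref{eqalpha}), $L_i$-smoothness of each $f_i$, and Young's inequality with a second free parameter $s>0$, I would obtain
\begin{equation*}
\Exp{\gG^{t+1}}\leq (1+s)\,r\,\gG^t+(1+s^{-1})\,\tilde L^2\,\Exp{\Norm{x^{t+1}-x^t}^2},\qquad \tilde L^2=\frac1n\sumin L_i^2 .
\end{equation*}

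Third, plug both bounds into $\Exp{\Psi^{t+1}}$ with the weight $\frac{\gamma}{2\theta^\star}$. The coefficient multiplying $\Exp{\Norm{x^{t+1}-x^t}^2}=\gamma^2\Exp{\Norm{G^t}^2}$ then depends on $\beta$, $s$, $\gamma$; choosing $s=s^\star$ and $\beta$ at its smallest admissible value forces the coefficient of $\gG^t$ down to $\frac{r+1}{2}\cdot\frac{\gamma}{2\theta^\star}$ — this is exactly what pins down $\theta^\star=s^\star(1+s^\star)r/r_{\mathrm{av}}$, using the identity $r(1+s^\star)^2=\frac{r+1}{2}$ — while the stepsize bound \eqref{equpb2} makes the coefficient of $\Exp{\Norm{G^t}^2}$ nonpositive, leaving a genuine negative slack. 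So far this yields
\begin{equation*}
\Exp{\Psi^{t+1}}\leq (f+R)(x^t)-f^\star-R^\star-c\,\gamma\,\Exp{\Norm{G^t}^2}+\tfrac{r+1}{2}\cdot\tfrac{\gamma}{2\theta^\star}\gG^t
\end{equation*}
for some $c\geq 0$.

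Finally, I would convert the $\Norm{G^t}^2$ slack into contraction of the functional gap. Since $\nabla f(x^{t+1})+u^{t+1}=G^t+(\nabla f(x^{t+1})-\nabla f(x^t))+(\nabla f(x^t)-g^{t+1})$, the K\L{} inequality \eqref{eqKL} evaluated at $x^{t+1}$ with the subgradient $u^{t+1}$ lower-bounds $2\mu\big((f+R)(x^{t+1})-f^\star-R^\star\big)$ by a multiple of $\Norm{G^t}^2$ plus terms of the form $L^2\Norm{x^{t+1}-x^t}^2$ and $\Norm{\nabla f(x^t)-g^{t+1}}^2$ that are already controlled by the estimates above; rearranging turns the factor in front of the functional gap from $1$ into $\frac{1}{1+\frac12\gamma\mu}$, after which taking total expectation and unrolling the two-term recursion — whose contraction factor is $\max\big(\frac{1}{1+\frac12\gamma\mu},\frac{r+1}{2}\big)$ — gives \eqref{eqsdgerg2}. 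I expect the main obstacle to be the bookkeeping in the last two blocks: one must verify that the \emph{single} stepsize threshold \eqref{equpb2} simultaneously annihilates the $\Norm{x^{t+1}-x^t}^2$ term and leaves enough negativity of $\Norm{G^t}^2$ to feed the K\L{} step, and that the joint optimization over $\beta$ and $s$ really produces the stated $s^\star$ and $\theta^\star$; a related nuisance is that the K\L{} subgradient $u^{t+1}$ sits at $x^{t+1}$ rather than $x^t$, so an extra $L\Norm{x^{t+1}-x^t}$ must be absorbed by that same slack — which is precisely why the denominator in \eqref{equpb2} carries $2L$ instead of the $L$ appearing in Theorem~\ref{theo1}.
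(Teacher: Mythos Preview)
Your plan tracks the paper's proof closely in structure (descent + compression bounds + control-variate contraction + K\L{}), but there is one genuine deviation that matters for the constants, and it is exactly where you flag ``bookkeeping'' worries.

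In step (i) you propose to handle the cross term $-\gamma\lin{\nabla f(x^t)-g^{t+1},G^t}$ via Young's inequality with a free parameter $\beta$, which leaves you with $-c\gamma\Norm{G^t}^2$ as the only negative term to feed into the K\L{} step. The paper does something sharper: it uses the \emph{exact} polarization identity $-\lin{a,b}=\tfrac12\Norm{a-b}^2-\tfrac12\Norm{a}^2-\tfrac12\Norm{b}^2$ with $a=\nabla f(x^t)+u^{t+1}$ and $b=G^t$, which produces directly
\[
(f+R)(x^{t+1})\leq (f+R)(x^t)+\tfrac{\gamma}{2}\Norm{\nabla f(x^t)-g^{t+1}}^2+\big(\tfrac{L}{2}-\tfrac{1}{2\gamma}\big)\Norm{x^{t+1}-x^t}^2-\tfrac{\gamma}{2}\Norm{\nabla f(x^t)+u^{t+1}}^2,
\]
i.e.\ a negative term in $\Norm{\nabla f(x^t)+u^{t+1}}^2$, not in $\Norm{G^t}^2$. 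Then a single $-\Norm{a+b}^2\leq -\tfrac12\Norm{a}^2+\Norm{b}^2$ shifts $x^t\to x^{t+1}$, turning this into $-\tfrac{\gamma}{4}\Norm{\nabla f(x^{t+1})+u^{t+1}}^2$ at the cost of only $\tfrac{\gamma L^2}{2}\Norm{x^{t+1}-x^t}^2$; K\L{} applied to $u^{t+1}\in\partial R(x^{t+1})$ then yields the factor $(1+\tfrac12\gamma\mu)^{-1}$ on the functional gap with no further damage.

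The difference is not cosmetic. In your route, $G^t=g^{t+1}+u^{t+1}$ differs from $\nabla f(x^{t+1})+u^{t+1}$ by \emph{two} pieces: the gradient shift $\nabla f(x^{t+1})-\nabla f(x^t)$ \emph{and} the compression error $\nabla f(x^t)-g^{t+1}$. When you lower-bound $\Norm{G^t}^2$ to invoke K\L{}, you inevitably generate an extra $c\gamma\Norm{\nabla f(x^t)-g^{t+1}}^2\leq c\gamma\, r_{\mathrm{av}}\,\gG^t$ that must be added back into the $\gG^t$ coefficient \emph{after} you have already tuned $\beta$ and $s$ to make that coefficient equal $\tfrac{r+1}{2}\cdot\tfrac{\gamma}{2\theta^\star}$. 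That extra contribution breaks the exact $\tfrac{r+1}{2}$ rate (and cannot be absorbed by the $\Norm{x^{t+1}-x^t}^2$ slack, which only handles the gradient-shift piece). The paper avoids this precisely because the polarization identity lands on $\nabla f(x^t)+u^{t+1}$, which is only one shift away from the K\L{} quantity and carries \emph{no} compression error. So the fix is simple: replace your Young-with-$\beta$ step by the polarization identity; then there is no $\beta$ to optimize, the $\gG^t$ budget works out with $s=s^\star$ and $\theta=\theta^\star$ exactly as you describe, and the extra $\tfrac{\gamma L^2}{2}\Norm{x^{t+1}-x^t}^2$ from the shift is what turns $L$ into $2L$ in \eqref{equpb2} (via $\gamma L^2\leq L$), just as you anticipated.
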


\begin{remark}[choice of $\lambda$, $\nu$, $\gamma$ in \algname{EF-BV}]\label{rem1}
\normalfont In Theorems \ref{theo1} and \ref{theo2}, the rate is better if $r$ is small and $\gamma$ is large. So, we should take $\gamma$ equal to the upper bound in \eqref{equpb1} and \eqref{equpb2}, since there is no reason to choose it smaller. Also, this upper bound is large 
if $r$ and $r_{\mathrm{av}}$ are small. As discussed in Sect.~\ref{secsca}, $r$ and $r_{\mathrm{av}}$ are minimized with $\lambda=\lambda^\star$ and $\nu=\nu^\star$ (which implies that $ r_{\mathrm{av}}\leq r<1$), so this is the recommended choice. Also, with this choice of $\lambda$, $\nu$, $\gamma$, there is no parameter left to tune in the algorithm, which is a nice feature.
\end{remark}

\begin{remark}[low noise regime]\label{rem2}
When the compression error tends to zero, i.e.\ $\eta$ and $\omega$ tend to zero, and we use accordingly $\lambda \rightarrow 1$, $\nu \rightarrow 1$, such that $r_{\mathrm{av}}/r$ remains bounded, 
 then $\mathcal{C}_i^t\rightarrow\mathrm{Id}$, $r\rightarrow 0$, and $\frac{1}{s^\star}\rightarrow 0$. Hence, \algname{EF-BV} reverts to proximal gradient descent $x^{t+1} = \mathrm{prox}_{\gamma R} \big(x^t -\nabla f(x^t)\big)$. 
\end{remark}

\begin{remark}[high noise regime]\label{rem3}
When the compression error becomes large,  i.e.\ $\eta\rightarrow 1$ or $\omega\rightarrow +\infty$, then $r\rightarrow 1$ and $\frac{1}{s^\star}\sim \frac{4}{1-r}$. Hence, the asymptotic complexity of  \algname{EF-BV} to achieve $\epsilon$-accuracy, when $\gamma=\Theta\Big(\frac{1}{L+\tilde{L}\sqrt{\frac{r_{\mathrm{av}}}{r}}\frac{1}{s^\star}}\Big)$, is
\begin{equation}
%\textstyle
\mathcal{O}\left(\left(\frac{L}{\mu}+\left(\frac{\tilde{L}}{\mu}\sqrt{\frac{r_{\mathrm{av}}}{r}}+1\right)\frac{1}{1-r}\right)
\log \frac{1}{\epsilon}\right).\label{eqasy1}
\end{equation}
\end{remark}

\subsection{Implications for \algname{EF21}}

Let us assume that $\nu=\lambda$, so that  \algname{EF-BV} reverts to  \algname{EF21}, as explained in Sect.~\ref{sec31}. 
Then, if we don't assume the prior knowledge of $\oma$, or equivalently if $\oma=\omega$, Theorem~\ref{theo1} with $r=r_{\mathrm{av}}$ recovers  the linear convergence result of  \algname{EF21} due to \citet{ric21}, up to slightly different constants.

However,  in these same conditions, Theorem~\ref{theo2} is new:  linear convergence of \algname{EF21} with  $R\neq 0$ was only shown in Theorem 13 of \citet{fat21}, under the assumption that there exists $\mu>0$, such that
%assumption 5 
for every $x\in\mathbb{R}^d$, 
%\begin{align*}
$\frac{1}{\gamma^2} \sqnorm{x-\mathrm{prox}_{\gamma R}\big(x-\gamma \nabla f(x)
\big)} \geq 2\mu\big(f(x)+R(x)-f^\star-R^\star\big)$. 
%\end{align*}
This condition 
generalizes the P{\L}  condition, since it reverts to it when $R=0$, but it is different from the K{\L}  condition, and it is not clear when it is satisfied, in particular whether it is implied by strong convexity of $f+R$.

The asymptotic complexity  to achieve $\epsilon$-accuracy of \algname{EF21} with 
$\gamma=\Theta\big(\frac{1}{L+\tilde{L}/s^\star}\big)$ is
%\begin{equation}
$\mathcal{O}\big(\frac{\tilde{L}}{\mu}\frac{1}{1-r}
\log \frac{1}{\epsilon}\big)$ 
%\end{equation}
(where we recall that $1-r=\alpha$, with the scaled compressors in $\mathbb{B}(\alpha)$).
Thus,  for a given problem and compressors, the improvement of  \algname{EF-BV} over  \algname{EF21} is the factor
$\sqrt{\frac{r_{\mathrm{av}}}{r}}$ in 
\eqref{eqasy1}, which can be small if $n$ is large.

Theorems \ref{theo1} and \ref{theo2} provide a new  insight about  \algname{EF21}: if we exploit the knowledge that $\mathcal{C}_i^t \in \mathbb{C}(\eta,\omega)$ and the corresponding constant $\oma$, and if $\oma<\omega$, then $r_{\mathrm{av}}<r$, so that, based on \eqref{equpb1} and \eqref{equpb2}, $\gamma$ can be chosen larger than with the default assumption  that $r_{\mathrm{av}}=r$. As a consequence, convergence will be faster. This illustrates the interest of our new finer parameterization of compressors with $\eta$, $\omega$, $\oma$. However, it is only half the battle to make use of the factor $\frac{r_{\mathrm{av}}}{r}$ in \algname{EF21}: the property $\oma<\omega$ is only really exploited if $\nu=\nu^\star$ in \algname{EF-BV} (since $r_{\mathrm{av}}$ is minimized this way). In other words, there is no reason to set $\nu=\lambda$ in \algname{EF-BV}, when a larger value of $\nu$ is allowed in Theorems \ref{theo1} and \ref{theo2} and yields faster convergence.

\subsection{Implications for  \algname{DIANA}}

Let us assume that $\nu=1$, so that  \algname{EF-BV} reverts to   \algname{DIANA}, as explained in Sect.~\ref{sec32}. This choice is allowed in Theorems \ref{theo1} and \ref{theo2}, so that they provide new convergence results for  \algname{DIANA}. Assuming that the compressors are unbiased, i.e.\ $\mathcal{C}_i^t\in \mathbb{U}(\omega)$ for some $\omega\geq 0$, we have the following result on   \algname{DIANA} \citep[Theorem 5 with $b=\sqrt{2}$]{con22m}:

\begin{proposition}\label{propdiana}
Suppose that $f$ is $\mu$-strongly convex, for some $\mu>0$, and that in  \algname{DIANA}, $\lambda=\frac{1}{1+\omega}$, 
$0<\gamma \leq \frac{1}{L_{\max}+L_{\max}(1+\sqrt{2})^2\oma}$. For every $t\geq 0$, define the Lyapunov function
%\begin{align*}
$$\Phi^t \eqdef \sqnorm{x^t-x^\star} %\\
%&\quad
+  (2+\sqrt{2})\gamma^2\oma (1+\omega)\frac{1}{n}\sum_{i=1}^n \sqnorm{\nabla f_i(x^\star)-h_i^t},$$ %\label{eqll2}
%\end{align*}
where $x^\star$ is the minimizer of $f+R$, which exists and is unique. 
Then, for every $t\geq 0$, we have 
%\begin{align*}
$$\Exp{\Phi^{t}} 
\leq \left(\max\left(1-\gamma\mu, \frac{\frac{1}{2}+\omega}{1+\omega}\right) \right)^t\Phi^0.$$
%\end{align*}
\end{proposition}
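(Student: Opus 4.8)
The plan is to obtain Proposition~\ref{propdiana} as the specialization of the general \algname{DIANA} analysis of \citet{con22m} (their Theorem~5) to the free parameter $b=\sqrt{2}$; equivalently, it is the instance of the \algname{EF-BV} machinery with $\nu=1$ and unbiased compressors, but phrased through the classical solution-distance Lyapunov function rather than the function-value Lyapunov function of Theorems~\ref{theo1} and~\ref{theo2}. So the work is to reproduce the standard three-ingredient \algname{DIANA} argument and then fix the constants.

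First I would expand the iterate recursion. Using the (firm) nonexpansiveness of $\prox_{\gamma R}$ together with the fixed-point identity $x^\star=\prox_{\gamma R}\big(x^\star-\gamma\nabla f(x^\star)\big)$, one gets $\sqnorm{x^{t+1}-x^\star}\leq\sqnorm{x^t-\gamma g^{t+1}-x^\star+\gamma\nabla f(x^\star)}$. Taking the conditional expectation over the compressors and using $\EE[g^{t+1}]=\nabla f(x^t)$, the cross term becomes $-2\gamma\langle\nabla f(x^t)-\nabla f(x^\star),x^t-x^\star\rangle$, which $\mu$-strong convexity and $L_{\max}$-smoothness of each $f_i$ turn into a $(1-\gamma\mu)\sqnorm{x^t-x^\star}$ term plus a useful negative term, at the cost of a $\gamma^2\EE\sqnorm{g^{t+1}-\nabla f(x^t)}$ variance term. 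Second, I would bound that variance: writing $\Delta_i^t\eqdef\nabla f_i(x^t)-h_i^t$, we have $g^{t+1}-\nabla f(x^t)=\frac{1}{n}\sum_{i=1}^n\big(\mathcal{C}_i^t(\Delta_i^t)-\Delta_i^t\big)$, a zero-mean average, so by the average-variance property \eqref{eqbo}, $\EE\sqnorm{g^{t+1}-\nabla f(x^t)}\leq\frac{\oma}{n}\sum_{i=1}^n\sqnorm{\Delta_i^t}$. Third, I would set up the control-variate recursion: from $h_i^{t+1}=h_i^t+\lambda\mathcal{C}_i^t(\Delta_i^t)$, split $\EE\sqnorm{\nabla f_i(x^\star)-h_i^{t+1}}$ into its mean part and variance part ($\leq\lambda^2\omega\sqnorm{\Delta_i^t}$), and on the mean part $\sqnorm{(1-\lambda)\big(\nabla f_i(x^\star)-h_i^t\big)-\lambda\big(\nabla f_i(x^t)-\nabla f_i(x^\star)\big)}$ apply Young's inequality with parameter $b$; with $\lambda=\frac{1}{1+\omega}$ this contracts the control-variate error towards $\nabla f_i(x^\star)$ up to multiples of $\sqnorm{\Delta_i^t}$ and of $L_{\max}^2\sqnorm{x^t-x^\star}$.

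Finally I would form $\Phi^t=\sqnorm{x^t-x^\star}+c\gamma^2\cdot\frac{1}{n}\sum_{i=1}^n\sqnorm{\nabla f_i(x^\star)-h_i^t}$, add the iterate recursion to $c\gamma^2$ times the averaged control-variate recursion, and choose $c=(2+\sqrt{2})\oma(1+\omega)$ and $b=\sqrt{2}$ so that the $\frac{1}{n}\sum_i\sqnorm{\Delta_i^t}$ terms cancel, the coefficient of $\sqnorm{x^t-x^\star}$ collapses to $\max\big(1-\gamma\mu,\frac{\frac12+\omega}{1+\omega}\big)$ under the stated stepsize bound $\gamma\leq\frac{1}{L_{\max}+L_{\max}(1+\sqrt{2})^2\oma}$, and the coefficient of the control-variate term is $\frac{\frac12+\omega}{1+\omega}$ times $c\gamma^2$. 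The main obstacle is exactly this constant bookkeeping: every individual ingredient is elementary (Young's inequality, firm nonexpansiveness, unbiasedness, the strong-convexity/smoothness inequality), but forcing the $\sqnorm{\Delta_i^t}$ terms to cancel cleanly, landing on the precise rate $\frac{\frac12+\omega}{1+\omega}$ and the precise stepsize threshold, and tracking which smoothness constant ($L_{\max}$ here, not $\tilde{L}$ or $L$) enters where, is what pins down the specific choices $\lambda=\frac{1}{1+\omega}$, $b=\sqrt{2}$, $c=(2+\sqrt{2})\oma(1+\omega)$. Since this has already been carried out in \citet{con22m}, the cleanest route is simply to invoke their Theorem~5 with $b=\sqrt{2}$ and verify that the resulting constants match the statement.
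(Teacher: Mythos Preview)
Your proposal is correct and matches the paper exactly: the paper does not give an independent proof of this proposition but simply cites it as Theorem~5 of \citet{con22m} specialized to $b=\sqrt{2}$, which is precisely what you propose to do. Your sketch of the three-ingredient \algname{DIANA} argument (prox nonexpansiveness, variance bound via $\oma$, control-variate contraction with Young's parameter $b$) is the correct underlying mechanism, and your final recommendation to simply invoke the cited theorem and check the constants is exactly the paper's route.
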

Thus, noting that $r=\frac{\omega}{1+\omega}$, so that $\frac{r+1}{2}=\frac{\frac{1}{2}+\omega}{1+\omega}$, the rate is exactly the same as in Theorem~\ref{theo1}, but with a different Lyapunov function.  
Theorems \ref{theo1} and \ref{theo2} have the advantage over Proposition \ref{propdiana}, that linear convergence is guaranteed under the P{\L}  or K{\L}  assumptions, which are weaker than strong convexity of $f$. Also, the constants $L$ and $\tilde{L}$ appear instead of $L_{\max}$. This shows a better dependence with respect to the problem. However, noting that $r=\frac{\omega}{1+\omega}$, $r_{\mathrm{av}}=\oma$, $\frac{1}{s^\star}\sim 4\omega$, the factor $\sqrt{\frac{r_{\mathrm{av}}}{r}}\frac{1}{s^\star}$ scales like $\sqrt{\oma}\omega$, which is worse that $\oma$. This means that $\gamma$ can certainly be chosen larger in Proposition \ref{propdiana} than in Theorems \ref{theo1} and \ref{theo2}, leading to faster convergence.

However, Theorems \ref{theo1} and \ref{theo2} bring a major highlight: for the first time, they establish convergence of  \algname{DIANA}, which is  \algname{EF-BV} with $\nu=1$, with biased compressors. We state the results in Appendix~\ref{secappb}, by lack of space. 
In any case, 
with biased compressors, it is better to use \algname{EF-BV} than  \algname{DIANA}: there is no interest in choosing $\nu=1$ instead of $\nu=\nu^\star$, which minimizes $r_{\mathrm{av}}$ and allows for a larger $\gamma$, for faster convergence.

Finally, we can remark that for unbiased compressors with $\oma \ll 1$, for instance if $\oma \approx \frac{\omega}{n}$ with $n$ larger than $\omega$, then $\nu^\star=\frac{1}{1+\oma}\approx 1$. Thus, in this particular case,  \algname{EF-BV} with $\nu=\nu^\star$ and  \algname{DIANA} are essentially the same algorithm.  This is another sign that  \algname{EF-BV} with $\lambda=\lambda^\star$ and $\nu=\nu^\star$ is a generic and robust choice, since it recovers   \algname{EF21} and   \algname{DIANA} in settings where these algorithms shine.

\section{Sublinear convergence in the nonconvex case}

In this section, we consider the general nonconvex setting. In \eqref{eqpro1}, every function $f_i$ is supposed $L_i$-smooth, for some $L_i>0$. For simplicity, we suppose that $R=0$.
As previously, we set $\tilde{L}\eqdef\sqrt{\frac{1}{n}\sum_{i=1}^n L_i^2}$. The average function 
	%\begin{equation*}
	$f\eqdef \frac{1}{n}\sum_{i=1}^n f_i$
	%\end{equation*}
	is $L$-smooth, for some $L\leq \tilde{L}$. We also suppose that $f$ is bounded from below; that is, $f^{\inf}\eqdef \inf_{x\in \mathbb{R}^d} f(x) > -\infty$.

 Given $\lambda \in (0,1]$ and $\nu \in (0,1]$, 
we define for convenience
$r \eqdef (1-\lambda+\lambda\eta)^2+{\lambda}^2\omega$, $r_{\mathrm{av}} \eqdef (1-\nu+\nu\eta)^2+{\nu}^2\oma$, 
as well as
%$s \eqdef \sqrt{\frac{1+r}{2r}}-1$ 
$s \eqdef \frac{1}{\sqrt{r}}-1$ 
and $\theta \eqdef s(1+s)\frac{r}{r_{\mathrm{av}}}$. Our convergence result is the following:\smallskip

\begin{theorem}\label{thm:noncvx}
In \algname{EF-BV}, suppose that $\nu \in (0,1]$, $\lambda \in (0,1]$ is such that $r<1$, and 
       \begin{equation}0< \gamma \leq \frac{1}{L + \tilde{L}\sqrt{\frac{r_{\mathrm{av}}}{r}}\frac{1}{s}}.\end{equation}
     For every $t\geq 1$, let $\hat{x}^t$ be chosen from the iterates $x^0, x^1, \cdots, x^{t-1}$ uniformly at random. Then 
      \begin{equation}
         \mathbb{E}\left[\left\|\nabla f(\hat{x}^{t})\right\|^{2}\right] \leq \frac{2\big(f(x^{0})-f^{\inf}\big)}{\gamma t}+\frac{G^0}{\theta t},
         \end{equation}
      where $G^0\eqdef\frac{1}{n}\sum_{i=1}^n \sqnorm{\nabla f_i(x^0) - h_i^0}$.
   \end{theorem}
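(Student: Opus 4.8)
I would run the standard two-potential descent argument tailored to \algname{EF-BV}: a smoothness inequality for $f$, a one-step contraction for the compression-error term $G^t \eqdef \avein \sqn{\nabla f_i(x^t)-h_i^t}$, and a combination of the two calibrated exactly to the stepsize bound. Throughout, let $\EE_t$ denote expectation conditional on iteration $t$ (so the only fresh randomness is that of the $\mathcal{C}_i^t$, which for fixed $t$ satisfy \eqref{eqbo} with parameter $\oma$), and write $v_i^t \eqdef \nabla f_i(x^t)-h_i^t$, so $G^t = \avein\sqn{v_i^t}$ and $G^0$ is as in the statement. \emph{Step 1 (two compression estimates).} From $h^t=\avein h_i^t$ one gets $\nabla f(x^t)=h^t+\avein v_i^t$, hence $g^{t+1}-\nabla f(x^t)=\avein\big(\nu\mathcal{C}_i^t(v_i^t)-v_i^t\big)$. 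Bounding $\Norm{\EE_t g^{t+1}-\nabla f(x^t)}\le(1-\nu+\nu\eta)\sqrt{G^t}$ (triangle inequality, property (i) of $\mathbb{C}(\eta,\omega)$, Jensen) and $\EE_t\sqn{g^{t+1}-\EE_t g^{t+1}}\le\nu^2\oma G^t$ via \eqref{eqbo}, the bias--variance split \eqref{eqbiva} gives the key estimate $\EE_t\sqn{g^{t+1}-\nabla f(x^t)}\le r_{\mathrm{av}}G^t$. Likewise Proposition~\ref{prop3} gives $\lambda\mathcal{C}_i^t\in\mathbb{C}(1-\lambda+\lambda\eta,\lambda^2\omega)$, so \eqref{eqbiva} yields $\EE_t\sqn{v_i^t-\lambda\mathcal{C}_i^t(v_i^t)}\le r\sqn{v_i^t}$ for each $i$.

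\emph{Step 2 (descent).} Since $R=0$, $x^{t+1}=x^t-\gamma g^{t+1}$. From $L$-smoothness of $f$ and the identity $-\lin{a,b}=\tfrac12\sqn{a-b}-\tfrac12\sqn{a}-\tfrac12\sqn{b}$ with $a=\nabla f(x^t)$, $b=g^{t+1}$, \emph{keeping every term}, $f(x^{t+1})\le f(x^t)-\tfrac\gamma2\sqn{\nabla f(x^t)}+\tfrac\gamma2\sqn{g^{t+1}-\nabla f(x^t)}-\tfrac\gamma2(1-L\gamma)\sqn{g^{t+1}}$; taking $\EE_t$ and using Step~1, $\EE_t f(x^{t+1})\le f(x^t)-\tfrac\gamma2\sqn{\nabla f(x^t)}+\tfrac\gamma2 r_{\mathrm{av}}G^t-\tfrac\gamma2(1-L\gamma)\EE_t\sqn{g^{t+1}}$. \emph{Step 3 (contraction of $G$).} From $h_i^{t+1}=h_i^t+\lambda\mathcal{C}_i^t(v_i^t)$, $\nabla f_i(x^{t+1})-h_i^{t+1}=\big(\nabla f_i(x^{t+1})-\nabla f_i(x^t)\big)+\big(v_i^t-\lambda\mathcal{C}_i^t(v_i^t)\big)$. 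Applying Young's inequality $\sqn{u+w}\le(1+s)\sqn{u}+(1+s^{-1})\sqn{w}$ with $u$ the second bracket — so that $(1+s)r=\sqrt r$ since $s=r^{-1/2}-1$ and $1+s^{-1}=(1-\sqrt r)^{-1}$ — then $\sqn{\nabla f_i(x^{t+1})-\nabla f_i(x^t)}\le L_i^2\gamma^2\sqn{g^{t+1}}$, taking $\EE_t$, averaging over $i$ with $\tilde L^2=\avein L_i^2$, and using Step~1 gives $\EE_t G^{t+1}\le\sqrt r\,G^t+\tfrac{\gamma^2\tilde L^2}{1-\sqrt r}\EE_t\sqn{g^{t+1}}$.

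\emph{Step 4 (combine and telescope).} Set $\Phi^t\eqdef f(x^t)-f^{\inf}+\tfrac{\gamma}{2\theta}G^t$; since $s(1+s)=(1-\sqrt r)/r$ one has $\theta=(1-\sqrt r)/r_{\mathrm{av}}$, i.e. $\theta^{-1}=r_{\mathrm{av}}/(1-\sqrt r)$. Adding $\tfrac{\gamma}{2\theta}$ times the Step~3 bound to the Step~2 bound, the coefficient of $\EE_t\sqn{g^{t+1}}$ becomes $-\tfrac\gamma2\big[(1-L\gamma)-\gamma^2\tilde L^2 r_{\mathrm{av}}/(1-\sqrt r)^2\big]$. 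The stepsize bound $\gamma\le\big(L+\tilde L\sqrt{r_{\mathrm{av}}/r}\,/s\big)^{-1}$, rewritten using $s\sqrt r=1-\sqrt r$ as $\tilde L\gamma\sqrt{r_{\mathrm{av}}}/(1-\sqrt r)\le 1-L\gamma$, forces $\big(\tilde L\gamma\sqrt{r_{\mathrm{av}}}/(1-\sqrt r)\big)^2\le\tilde L\gamma\sqrt{r_{\mathrm{av}}}/(1-\sqrt r)\le 1-L\gamma$, so this coefficient is $\le0$ and the term is discarded; the coefficient of $G^t$ is $\tfrac\gamma2 r_{\mathrm{av}}+\tfrac\gamma{2\theta}(\sqrt r-1)$, which vanishes exactly by the choice $\theta=(1-\sqrt r)/r_{\mathrm{av}}$. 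This leaves $\EE_t\Phi^{t+1}\le\Phi^t-\tfrac\gamma2\sqn{\nabla f(x^t)}$. Taking full expectation, summing over the iterate indices $0,\dots,t-1$, using $\Phi^t\ge0$, $\Phi^0=f(x^0)-f^{\inf}+\tfrac{\gamma}{2\theta}G^0$, dividing by $\tfrac{\gamma t}{2}$, and noting $\EE\sqn{\nabla f(\hat x^t)}=\tfrac1t\sum_{\tau=0}^{t-1}\EE\sqn{\nabla f(x^\tau)}$ yields the claim.

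\emph{Main obstacle.} The only delicate point is the calibration in Step~4: one must \emph{not} throw away the $-\tfrac\gamma2(1-L\gamma)\sqn{g^{t+1}}$ term from the descent step, but instead merge it with the $\sqn{g^{t+1}}$ term generated by the $G^{t+1}$ recursion, and then verify that the precise threshold on $\gamma$, together with the algebraic identities $1+s=r^{-1/2}$, $s\sqrt r=1-\sqrt r$, and the definition of $\theta$, is exactly what makes the combined $\sqn{g^{t+1}}$ coefficient nonpositive and the $G^t$ coefficient zero. Everything else (Steps~1--3) is routine once the bias--variance bookkeeping with $\eta$, $\omega$, $\oma$ is in place.
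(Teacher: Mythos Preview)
Your proposal is correct and follows essentially the same route as the paper's proof: the same Lyapunov function $\Phi^t=f(x^t)-f^{\inf}+\tfrac{\gamma}{2\theta}G^t$, the same bias--variance bounds yielding $\EE_t\sqn{g^{t+1}-\nabla f(x^t)}\le r_{\mathrm{av}}G^t$ and $\EE_t\sqn{v_i^t-\lambda\mathcal{C}_i^t(v_i^t)}\le r\sqn{v_i^t}$, the same Young-inequality recursion for $G^{t+1}$ with parameter $s$, and the same telescoping. The only cosmetic differences are that the paper tracks $\sqn{x^{t+1}-x^t}$ rather than the equivalent $\gamma^2\sqn{g^{t+1}}$, and it handles the stepsize condition by invoking an auxiliary lemma (Lemma~5 of \citet{ric21}) for the sufficient condition $L-\tfrac1\gamma+\tfrac{\gamma}{s^2}\tfrac{r_{\mathrm{av}}}{r}\tilde L^2\le0$, whereas you verify it directly via $A\le 1-L\gamma\le1\Rightarrow A^2\le A\le 1-L\gamma$; your direct check is in fact precisely the content of that lemma.
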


\section{Experiments}
We conducted comprehensive experiments to illustrate the efficiency of  \algname{EF-BV} compared to \algname{EF21} (we use biased compressors, so we don't include  \algname{DIANA} in the comparison). The settings and results are detailed in Appendix~\ref{appexp} and some results are shown in Fig.~\ref{fig:0007}; we can see the speedup obtained with \algname{EF-BV}, which exploits the randomness of the compressors.
 
\begin{figure}[!htbp]
	\centering
	\begin{subfigure}[b]{0.24\textwidth}
		\centering
		\includegraphics[width=\textwidth]{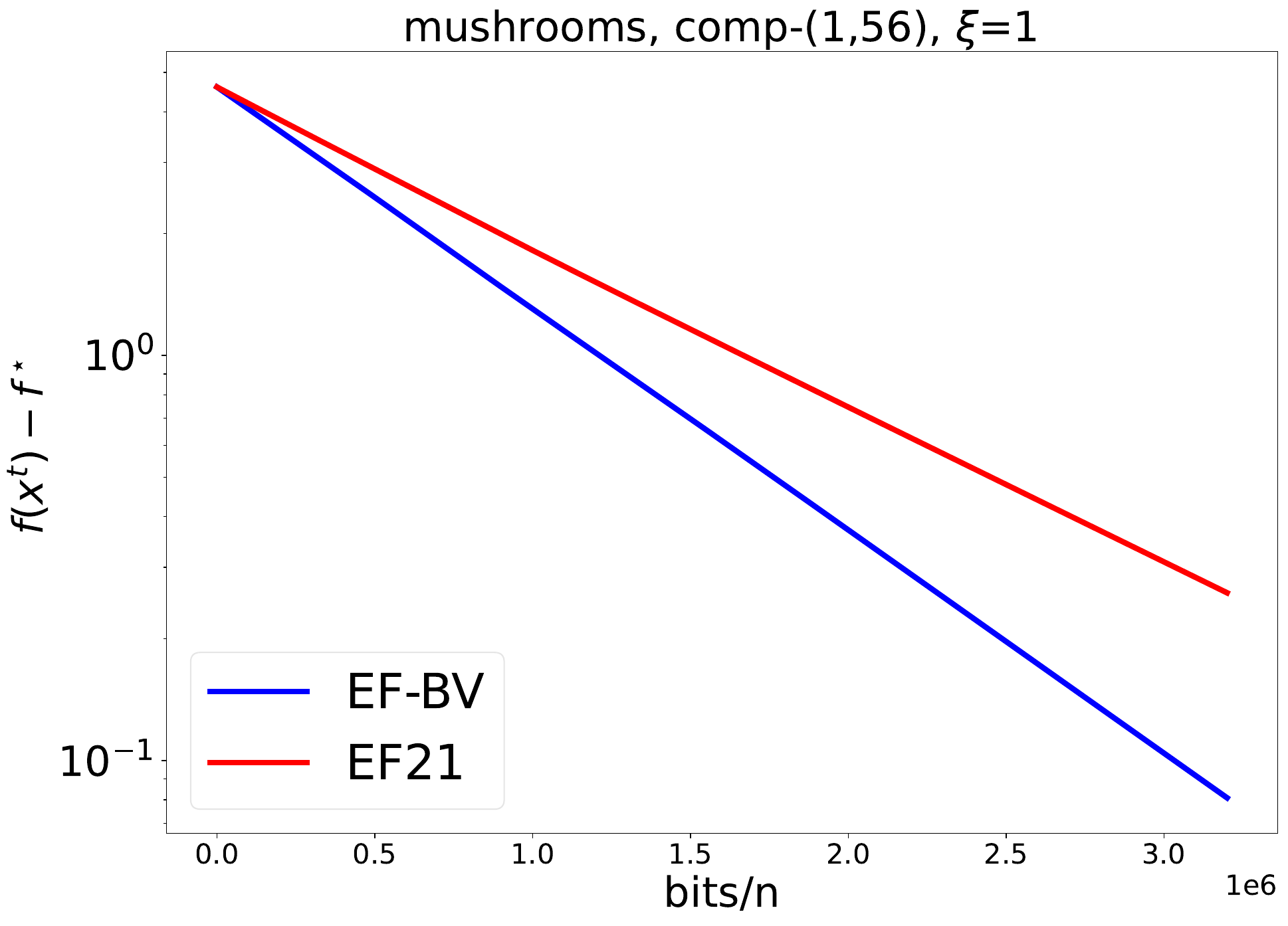}
		% \caption{}\label{fig:0008_1}
	\end{subfigure}
	\hfill 
	\begin{subfigure}[b]{0.24\textwidth}
		\centering
		\includegraphics[width=\textwidth]{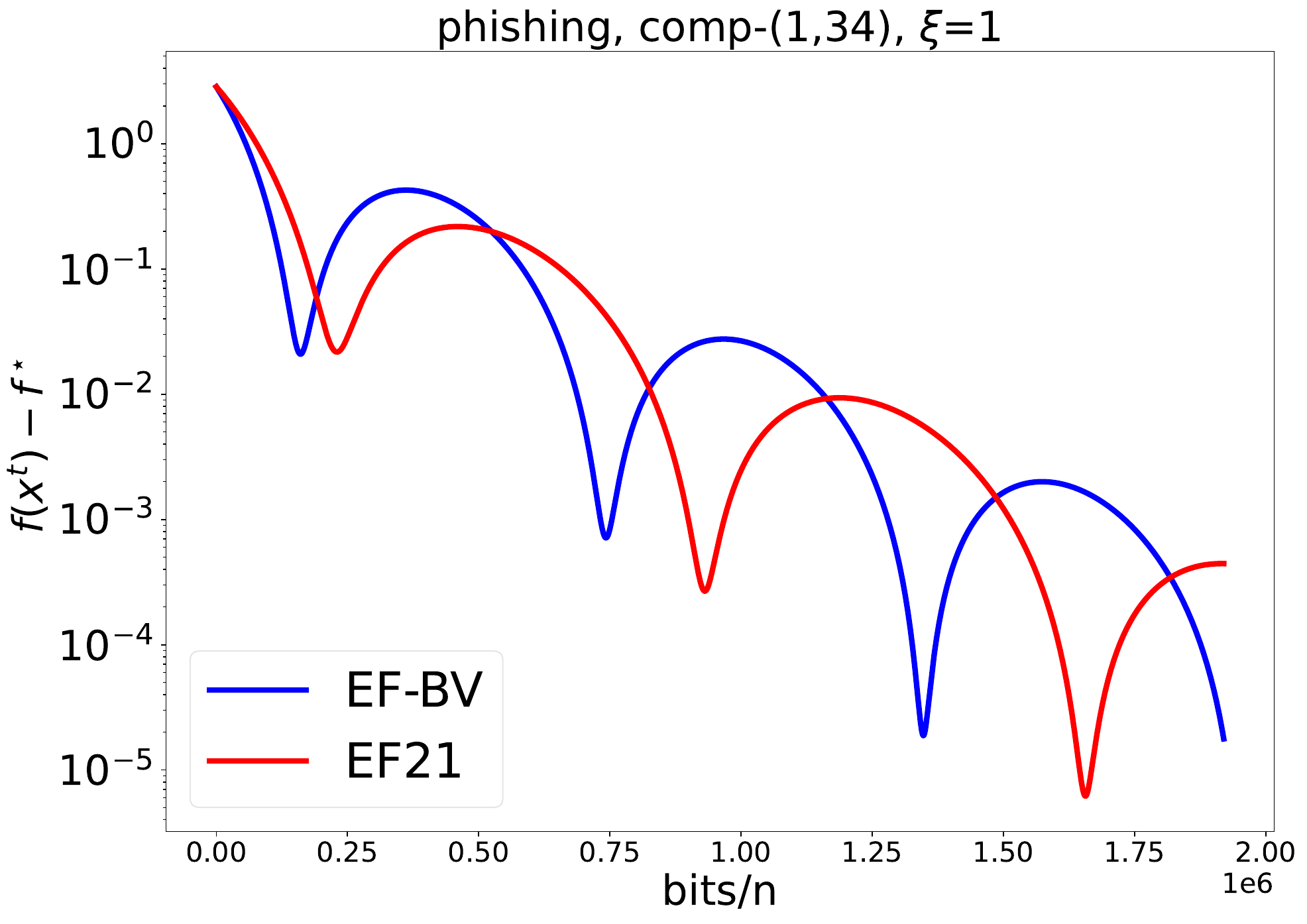}
		% \caption{phishing, n=1000, k'=$d/2$}\label{fig:0008_2}
	\end{subfigure}
	\hfill  
	\begin{subfigure}[b]{0.24\textwidth}
		\centering
		\includegraphics[width=\textwidth]{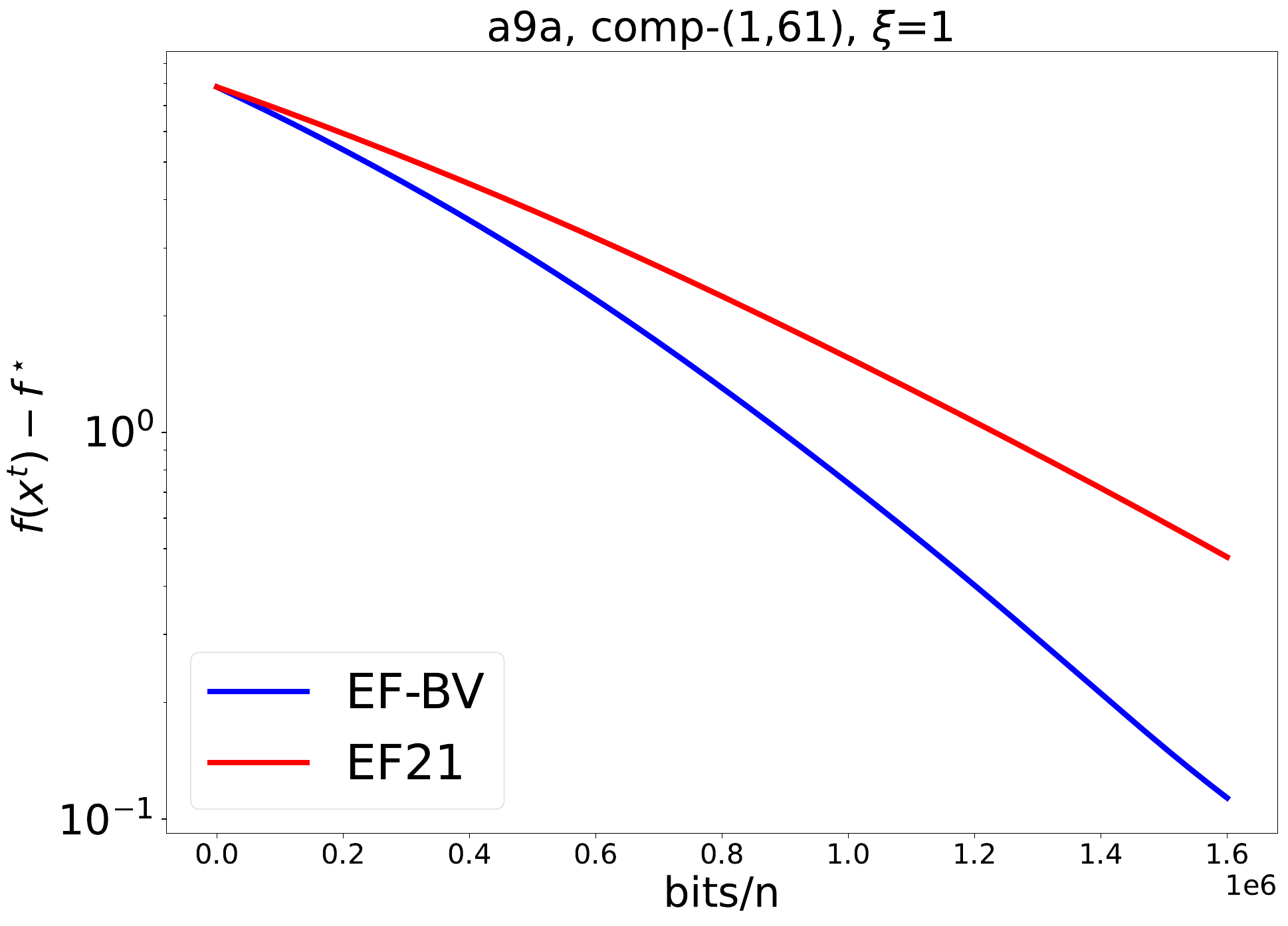}
		% \caption{a9a, n=1000, k'=$d/2$}\label{fig:0008_3}
	\end{subfigure}
	\begin{subfigure}[b]{0.24\textwidth}
		\centering
		\includegraphics[width=\textwidth]{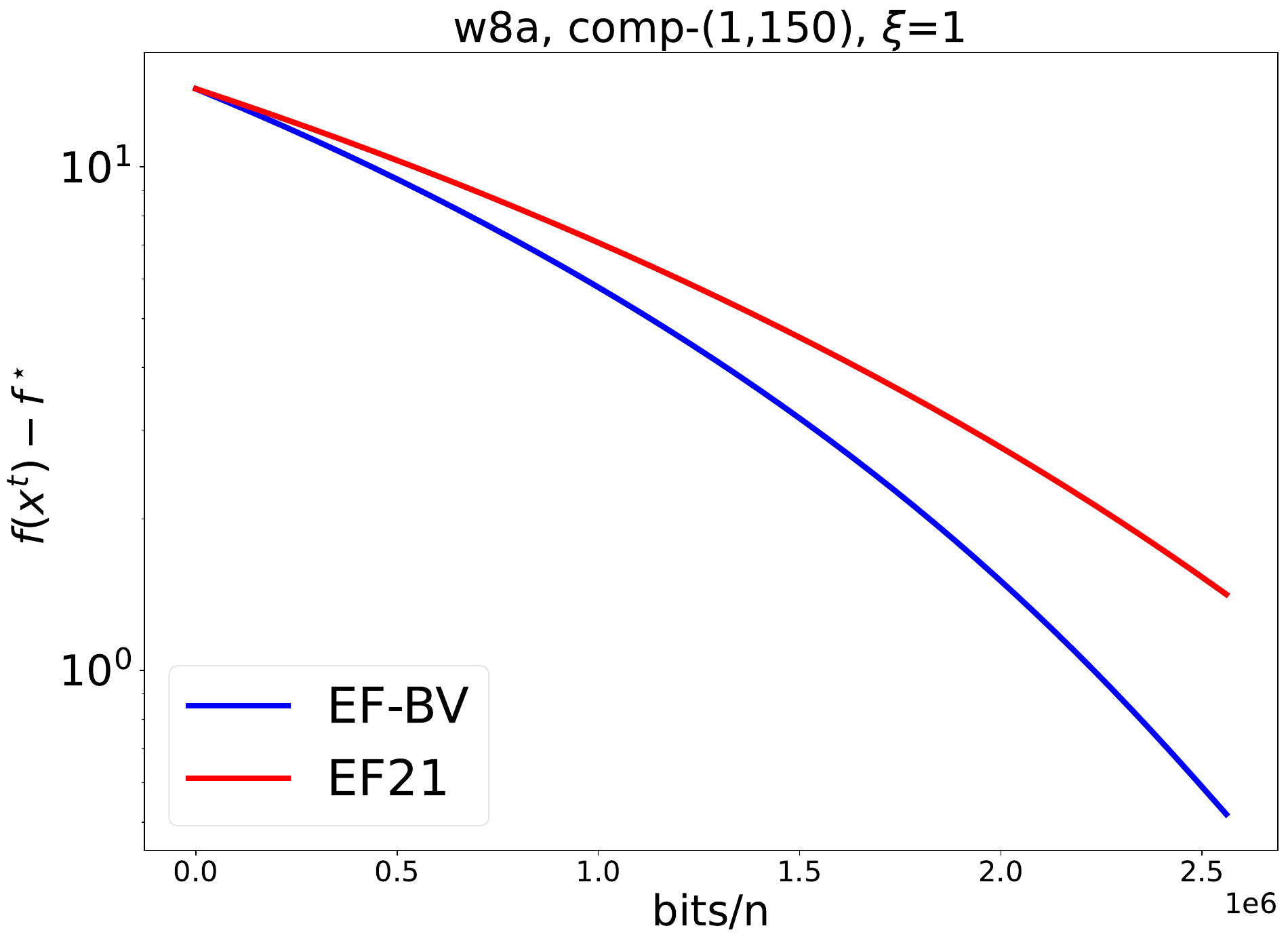}
		% \caption{w8a, n=1000, k'=$d/2$}\label{fig:0008_4}
	\end{subfigure}
	\hfill 
	\begin{subfigure}[b]{0.24\textwidth}
		\centering
		\includegraphics[width=\textwidth]{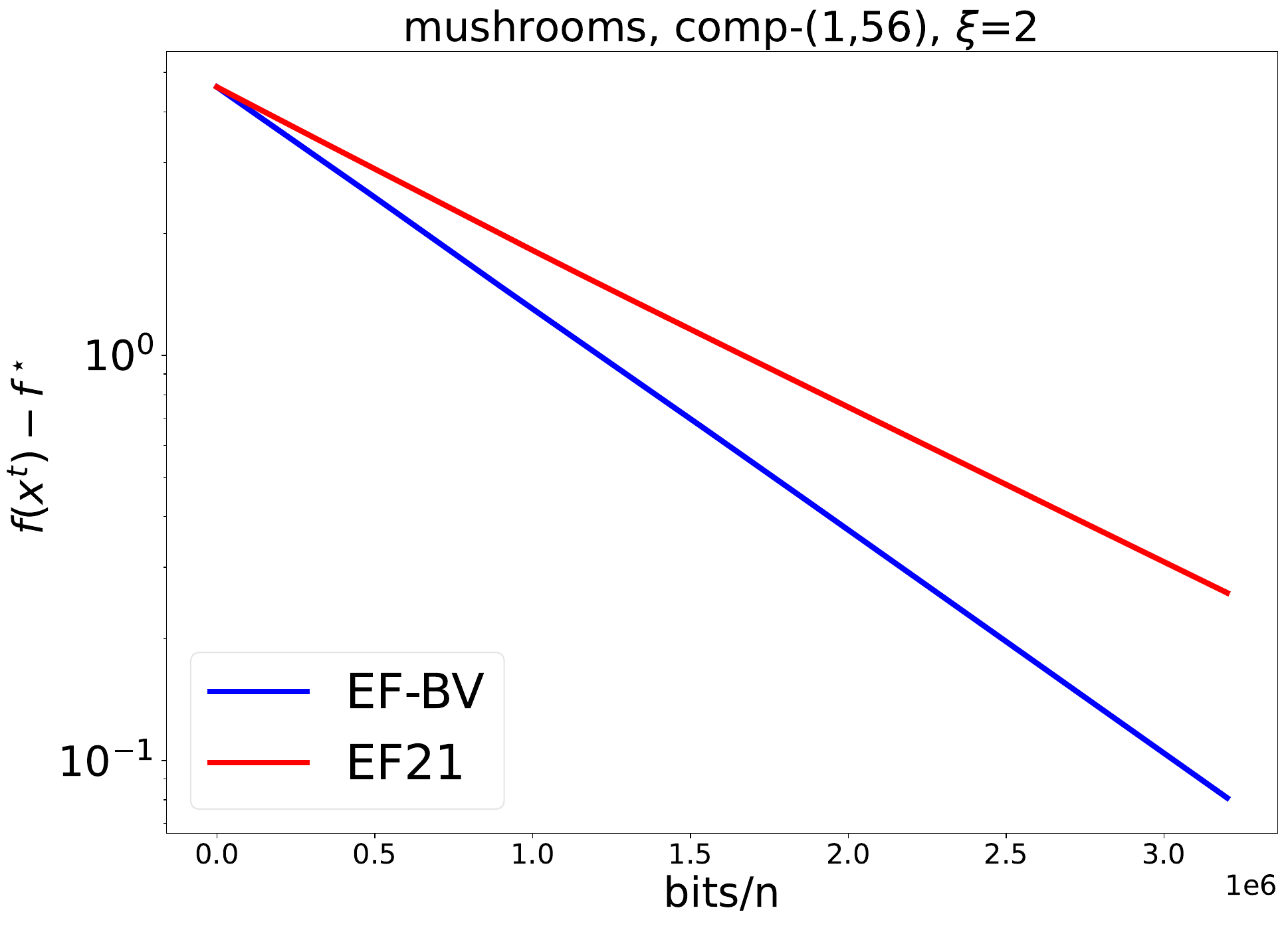}
		% \caption{mushrooms, n=1000, k'=$d/2$}\label{fig:0008_5}
	\end{subfigure}
	\hfill
	\begin{subfigure}[b]{0.24\textwidth}
		\centering
		\includegraphics[width=\textwidth]{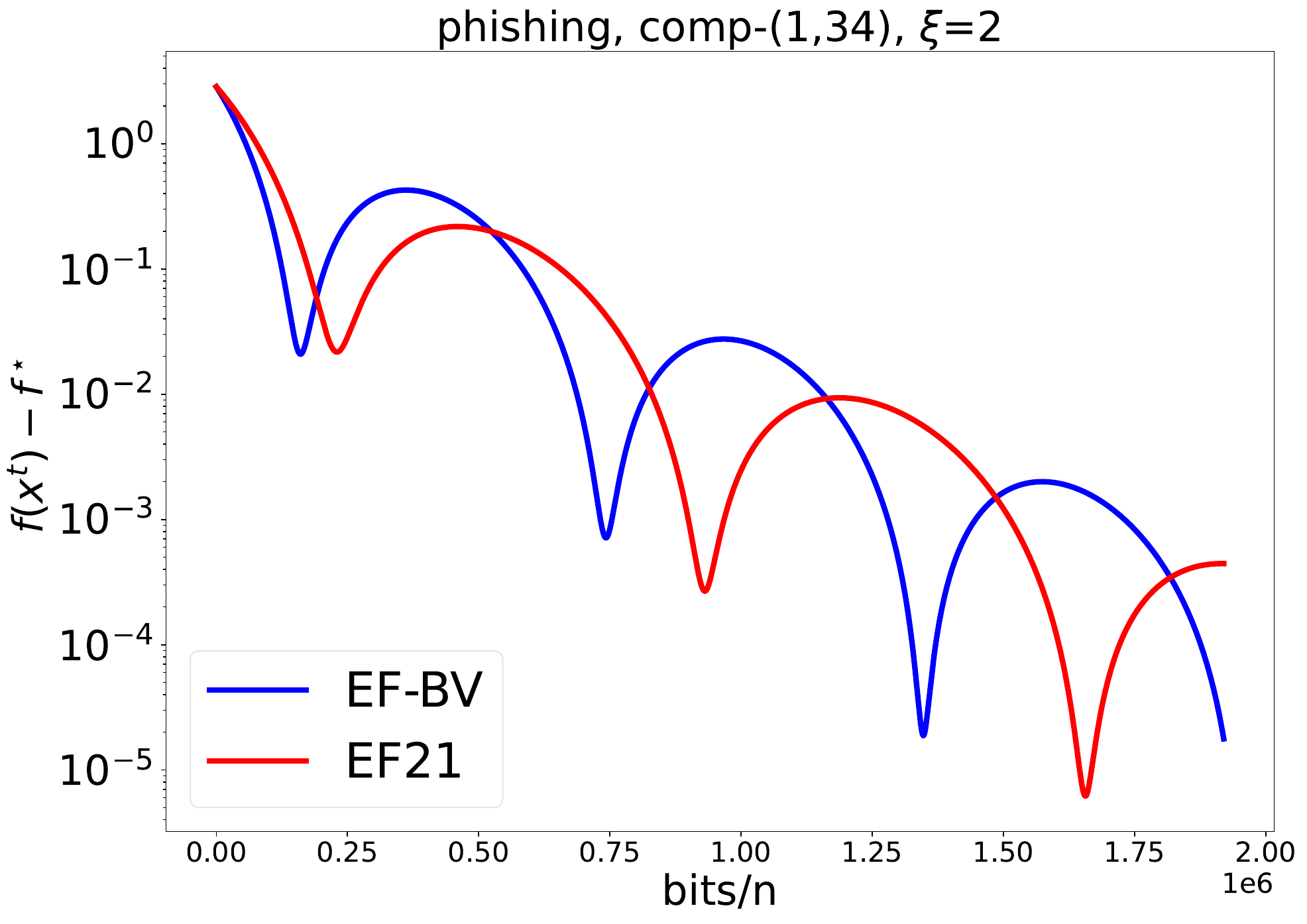}
		% \caption{phishing, n=1000, k'=$d/2$}\label{fig:0008_6}
	\end{subfigure}
	\hfill
	\begin{subfigure}[b]{0.24\textwidth}
		\centering
		\includegraphics[width=\textwidth]{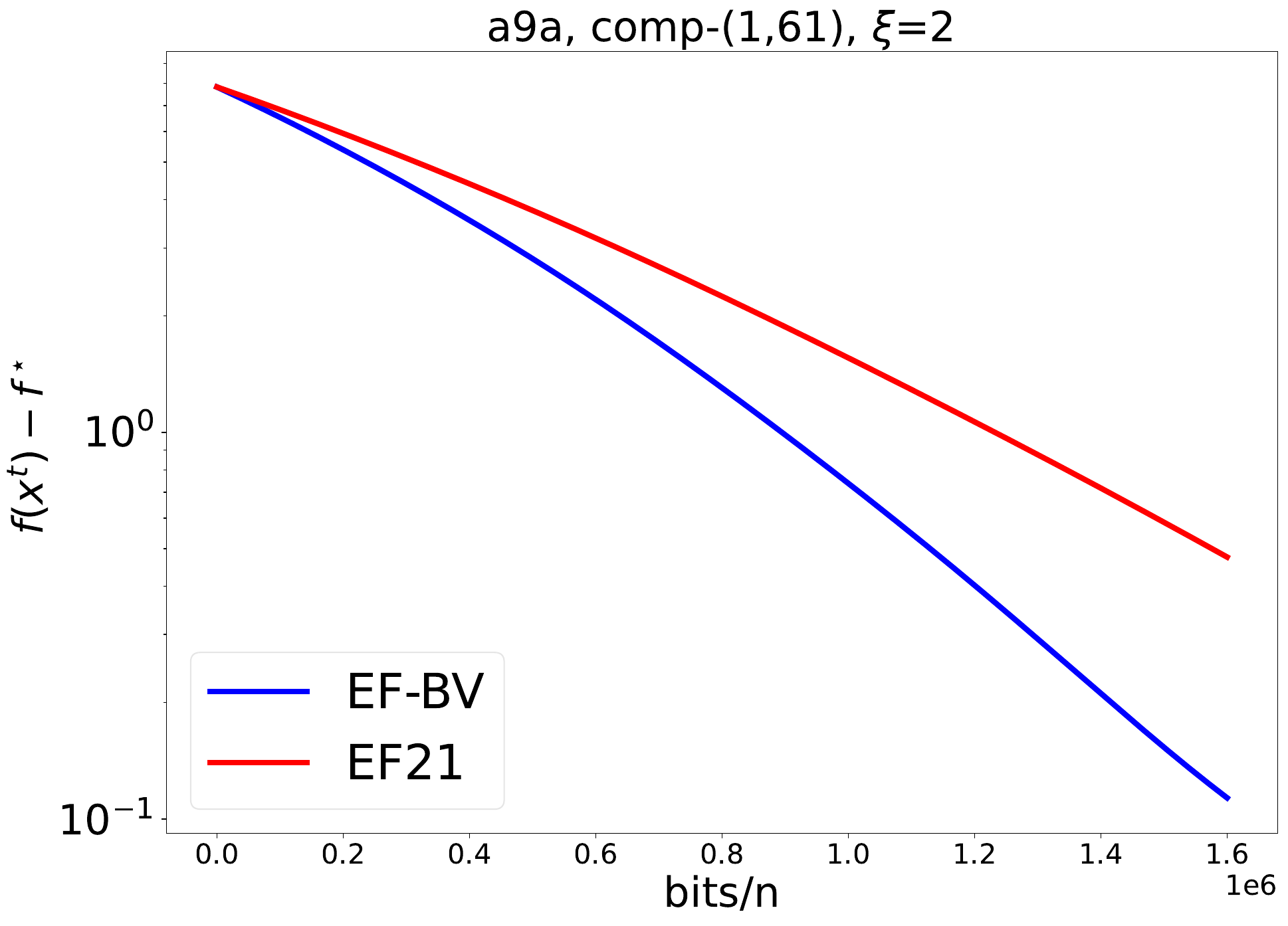}
		% \caption{a9a, n=1000, k'=$d/2$}\label{fig:0008_7}
	\end{subfigure}
	\begin{subfigure}[b]{0.24\textwidth}
		\centering
		\includegraphics[width=\textwidth]{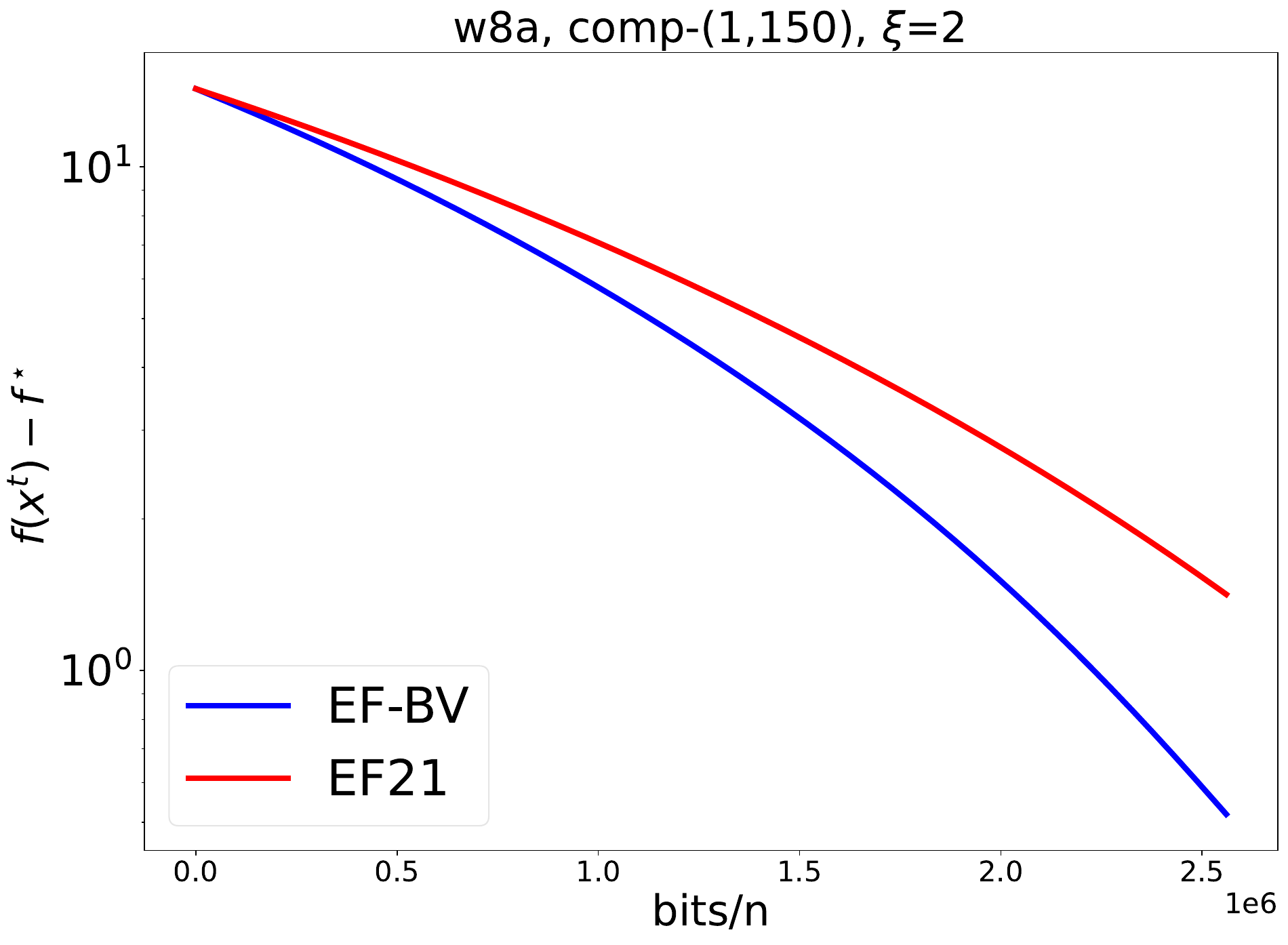}
		% \caption{w8a, n=1000, k'=$d/2$}\label{fig:0008_8}
	\end{subfigure}
		\hfill 
	\begin{subfigure}[b]{0.24\textwidth}
		\centering
		\includegraphics[width=\textwidth]{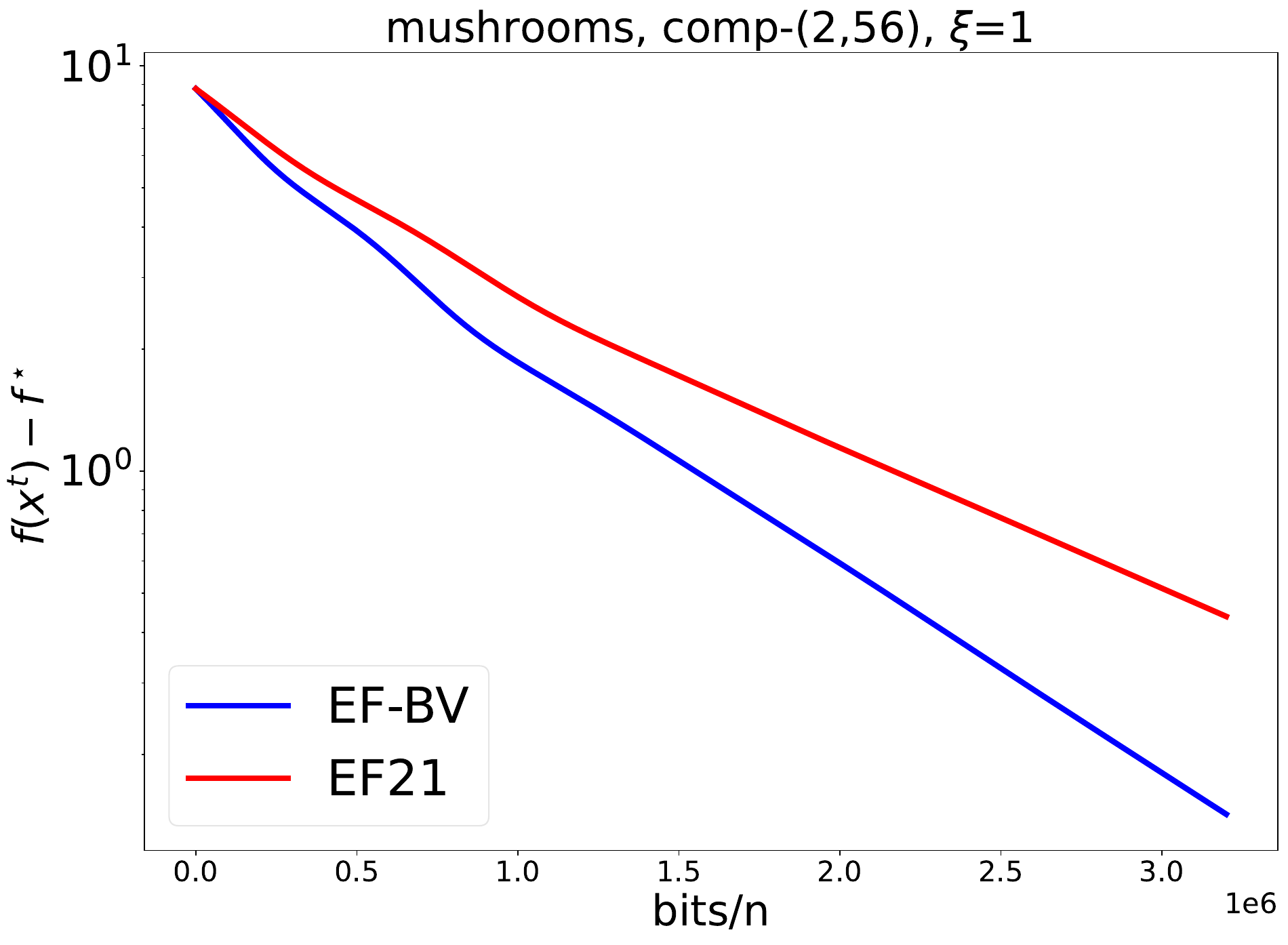}
		% \caption{mushrooms, n=1000, k'=$d/2$}\label{fig:0008_9}
	\end{subfigure}
	\hfill  
	\begin{subfigure}[b]{0.24\textwidth}
		\centering
		\includegraphics[width=\textwidth]{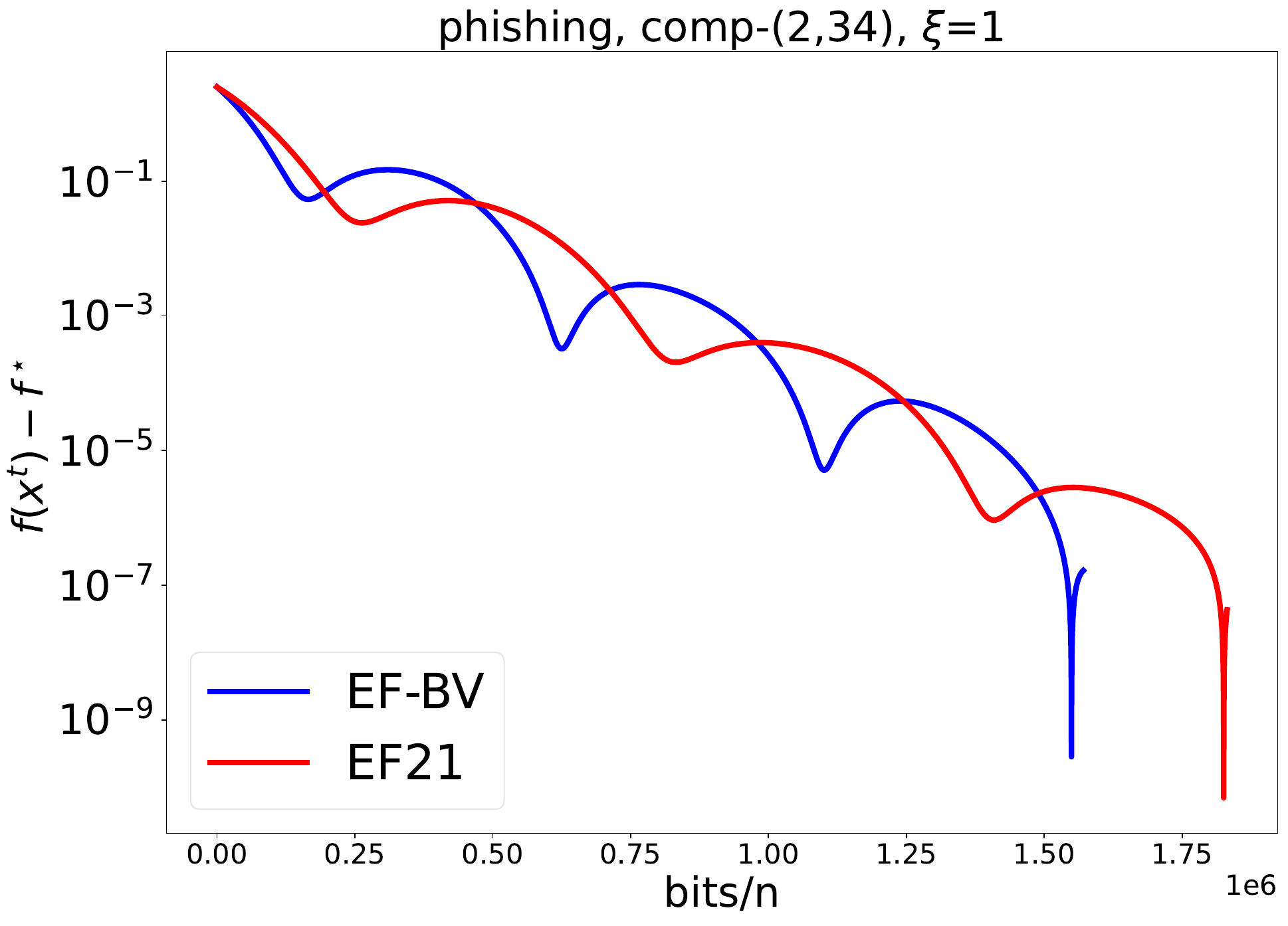}
		% \caption{phishing, n=1000, k'=$d/2$}\label{fig:0008_10}
	\end{subfigure}
	\hfill
	\begin{subfigure}[b]{0.24\textwidth}
		\centering
		\includegraphics[width=\textwidth]{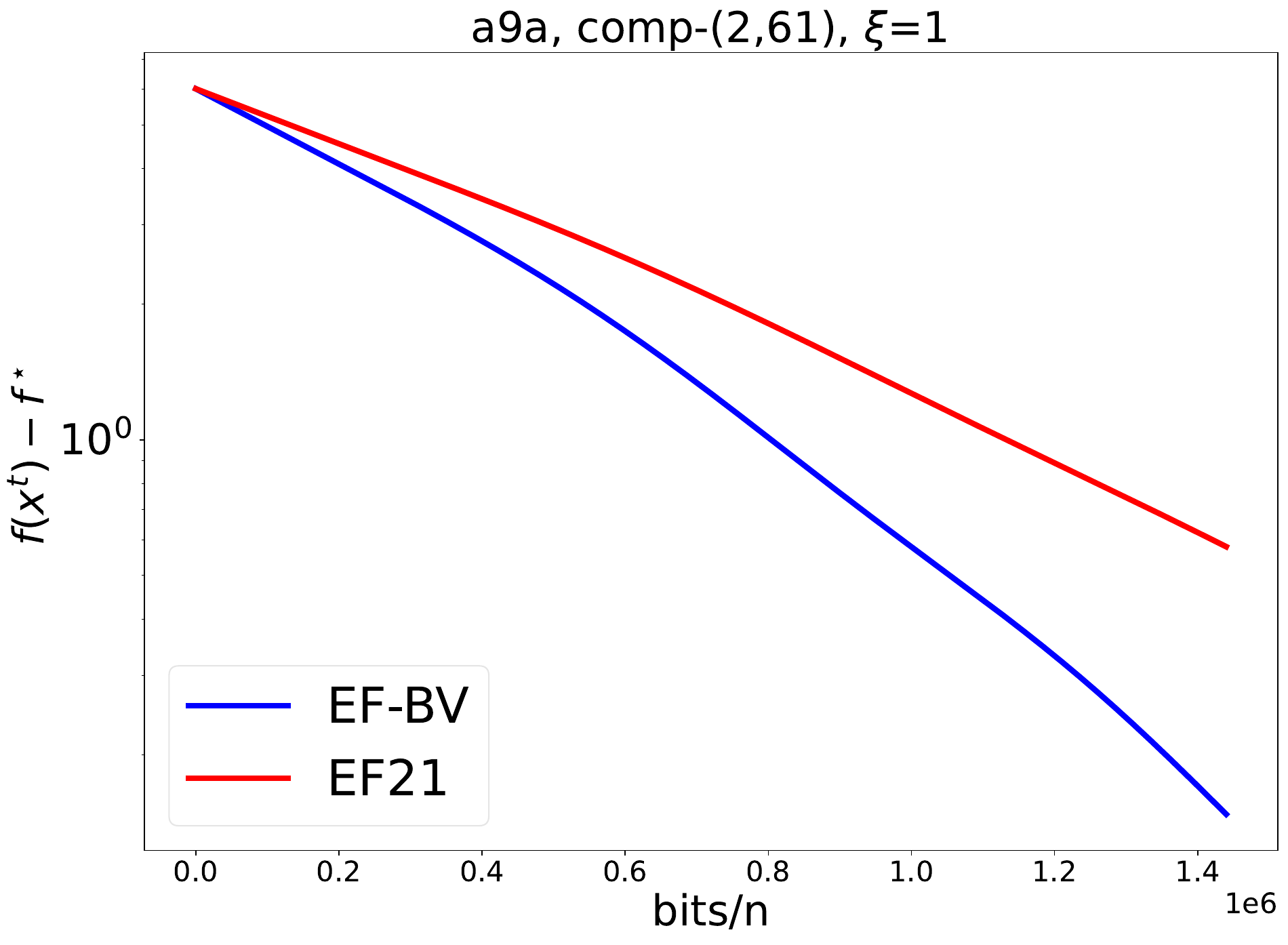}
		% \caption{a9a, n=1000, k'=$d/2$}\label{fig:0008_11}
	\end{subfigure}
	\begin{subfigure}[b]{0.24\textwidth}
		\centering
		\includegraphics[width=\textwidth]{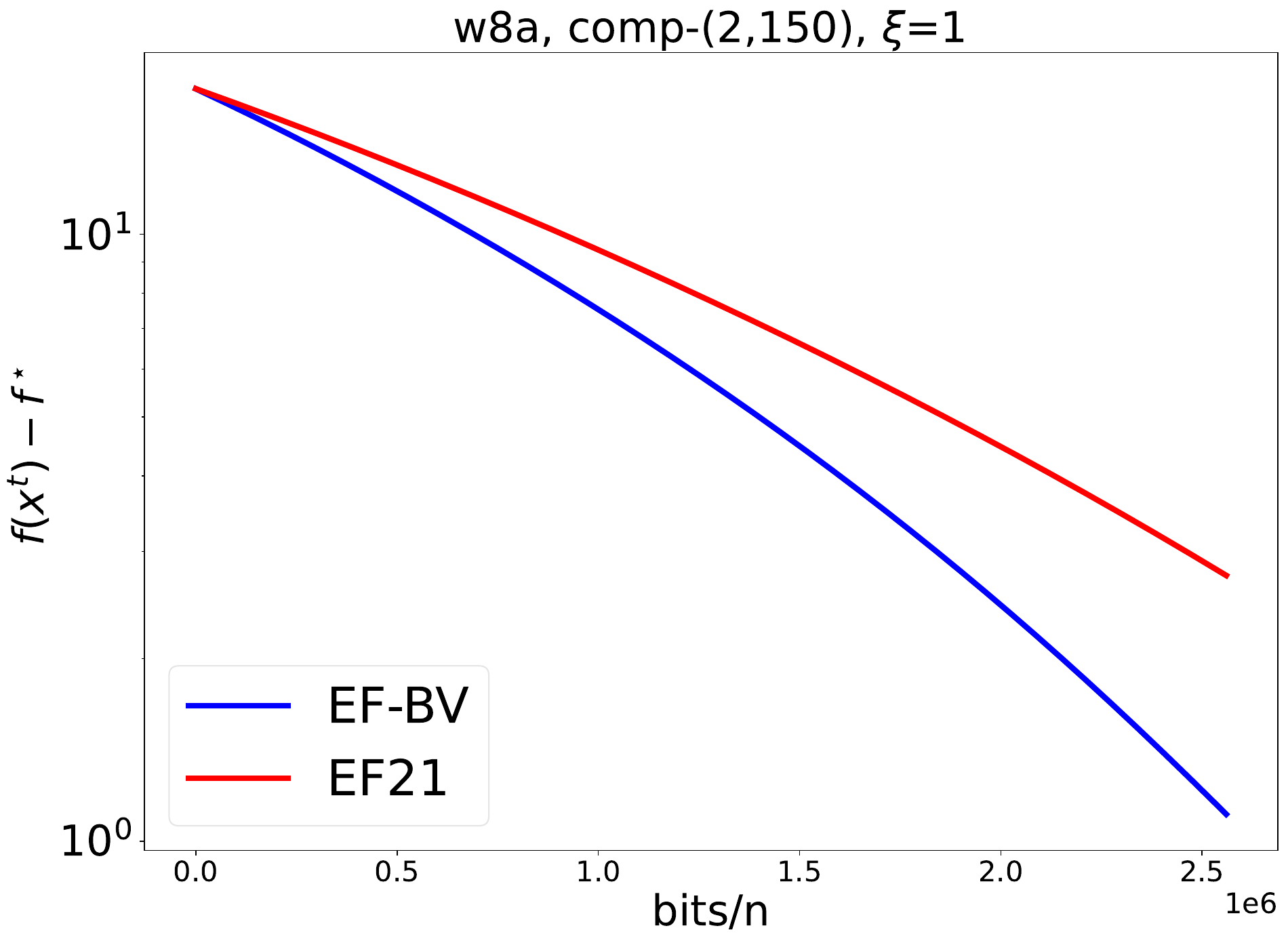}
		% \caption{w8a, n=1000, k'=$d/2$}\label{fig:0008_12}
	\end{subfigure}
	\caption{Experimental results. We plot $f(x^t)-f^\star$ with respect to the number of bits sent by each node during the learning process, which is proportional to $tk$.
	Top row: \texttt{comp-}$(1,d/2)$, overlapping $\xi=1$. Middle row: \texttt{comp-}$(1,d/2)$, overlapping $\xi=2$. Bottom row: \texttt{comp-}$(2,d/2)$, overlapping $\xi=1$.% We show the results with the number of nodes to be 1000.
}\label{fig:0007}
\end{figure}
    \chapter{Accelerated Local Training with Explicit Personalization}
\label{chapter_scafflix}
\thispagestyle{empty}

\section{Introduction}
% Due to privacy concerns and limited computing resources on edge devices, centralized training with all data first gathered in a datacenter is often impossible in many real-world applications. % of data science and artificial intelligence. As a result, 
% So, FL has gained increasing interest as a framework that enables multiple clients to do local computations, based on their personal data kept private, and to communicate back and forth with a server. FL is classically formulated as an empirical risk minimization problem of the form
% \begin{equation}\label{eq:ERM}\tag{ERM}
%     \min_{x\in \Rd} \left[ f(x) \eqdef \avein f_i(x) \right],
% \end{equation}
% where $f_i$ is the local objective on client $i$, $n$ is the total number of clients, $x$ is the global model.
FL is classically formulated as an empirical risk minimization (ERM) problem, as defined in (\ref{eq:ERM}).
Thus, the usual approach is to solve (\ref{eq:ERM}) and then to deploy the obtained globally optimal model $\xstar \eqdef {\arg\min_{x\in \Rd}} \,f(x)$  to all clients. To reduce communication costs between the server and the clients, the practice of updating the local parameters multiple times before aggregation, known as \textbf{Local Training (LT)}~\citep{Povey2015, SparkNet2016, FL2017-AISTATS, Li-local-bounded-grad-norms--ICLR2020, LocalDescent2019, localGD, localSGD-AISTATS2020, SCAFFOLD, LSGDunified2020, FEDLIN}, is widely used in FL. 
LT, in its most modern form, is a communication-acceleration mechanism, as we detail in Section~\ref{sec:local_training}.

Meanwhile, there is a growing interest in providing \textbf{personalization} to the clients, by providing them more-or-less customized models tailored to their individual needs and heterogeneous data, instead of the one-size-fits-all model $x^\star$. We review existing approaches to personalization in Section~\ref{secper}. If personalization is pushed to the extreme, every client just uses its private data to learn its own locally-optimal model
\begin{equation*}
\xstar_i \eqdef {\arg\min_{x\in \Rd}}\, f_i(x)
\end{equation*}
and no communication at all is needed. Thus, intuitively, more personalization means less communication needed to reach a given accuracy. In other words, personalization is a communication-acceleration mechanism, like LT.

Therefore, we raise the following question:
%\begin{quote}
 \emph{  
 Is it possible to achieve double communication acceleration in FL by jointly leveraging the acceleration potential of personalization and local training?}
%\end{quote}

For this purpose, we first have to formulate personalized FL as an optimization problem. 
A compelling interpretation of LT~\citep{hanzely2020federated} is that it amounts to solve an implicit personalization objective of the form:
\begin{equation}\label{eq:L2GD}
    \min_{x_1, \ldots, x_n\in \Rd} 
    \avein f_i(x_i) + \frac{\lambda}{2n}\sumin \sqn{\bar{x} - x_i},
\end{equation}
where $x_i \in \Rd$ denotes the local model at client $i\in [n]\eqdef \{1,\ldots,n\}$, $\bar{x} \eqdef \avein x_i$ is the average of these local models, and %$\mathbf{x}\eqdef (x_1, \ldots, x_n)\in \Rd$ is their $nd$-dimensional concatenation. 
$\lambda \geq 0$ is the implicit personalization parameter that controls the amount of personalization. When $\lambda$ is small, the 
local models tend to be trained locally.
%task consists more in training $\mathbf{x}$ locally. 
On the other hand, a larger $\lambda$ puts more penalty on making the local models $x_i$ close to their mean $\bar{x}$, or equivalently in making all models close to each other, by pushing towards averaging over all clients. Thus, LT is not only compatible with personalization, but can be actually used to implement it, though implicitly: there is a unique parameter $\lambda$ in \eqref{eq:L2GD} and it is difficult evaluate the amount of personalization for a given value of $\lambda$.

The more accurate FLIX model for personalized FL was proposed by \citet{FLIX}.
%
%After noticing the benefits of local training in FL, the method \algname{FLIX} proposed by \citet{FLIX} 
%leverages multiple local steps to find 
It consists for every client $i$ to first compute locally its personally-optimal model $\xstar_i$, 
%$\xstar_i$ for every client $i$, 
and then to solve the problem
\begin{equation}\label{eq:FLIX}\tag{FLIX}
%\min_{x\in \Rd} \,\tilde{f}(x) \eqdef \avein \tilde{f}_i(x),\quad \mbox{with}\quad \tilde{f}_i(x)\eqdef f_i\big(\alpha_i x+ (1 - \alpha_i)\xstar_i\big),
\min_{x\in \Rd} \,\tilde{f}(x) \eqdef \avein f_i\big(\alpha_i x+ (1 - \alpha_i)\xstar_i\big),
\end{equation}
where $\alpha_i \in [0,1]$ is the explicit and individual personalization factor for client $i$. At the end, the personalized model used by client $i$ is 
the explicit mixture 
\begin{equation*}
\tilde{x}_i^\star\eqdef \alpha_i \xstar+ (1 - \alpha_i)\xstar_i,
\end{equation*}
where $\xstar$ is the solution to (\ref{eq:FLIX}).
%In this case, (\ref{eq:FLIX}) can be seen as the explicit mixture of the global model $x$ and the optimal local models $\{\xstar_i\}_{i=1}^n$, allowing for explicit personalization. 
A smaller value of $\alpha_i$ gives more weight to $\xstar_i$, which means more personalization. On the other hand, if $\alpha_i=1$, 
%$\tilde{f}_i=f_i$ and 
the client $i$ uses the global model $\xstar$ without personalization. Thus, if all $\alpha_i$ are equal to 1, there is no personalization at all and (\ref{eq:FLIX}) reverts to (\ref{eq:ERM}). So, (\ref{eq:FLIX}) is a more general formulation of FL than (\ref{eq:ERM}). The functions  in (\ref{eq:FLIX}) inherit smoothness and strong convexity from
%the properties of 
the $f_i$, so %, see Lemma~\eqref{prop:convex_smooth} in the Appendix. So, 
every algorithm appropriate for (\ref{eq:ERM}) can also be applied to solve (\ref{eq:FLIX}). \citet{FLIX} proposed an algorithm also called \algname{FLIX} to solve (\ref{eq:FLIX}), which is simply vanilla distributed gradient descent (\algname{GD}) applied to (\ref{eq:FLIX}).
%\algname{FLIX}~\citep{FLIX} confirms that smaller values of $\alpha$ lead to less communication while achieving a small $\epsilon$-approximate minimizer of (\ref{eq:FLIX}). This finding has inspired the integration of  (\ref{eq:FLIX}) into even standard Gradient Descent (\algname{GD}), as it is orthogonal to different methods.

In this paper, we first redesign and generalize the recent \algname{Scaffnew} algorithm~\citep{ProxSkip}, which features LT and has an accelerated communication complexity, and propose Individualized-Scaffnew (\algname{i-Scaffnew}), wherein the clients can have different properties. %This generalization of  \algname{Scaffnew} is of independent interest. But since we focus in this paper on FL, 
We then apply and tune \algname{i-Scaffnew} for the problem (\ref{eq:FLIX}) and propose our new algorithm for personalized FL, which we call \algname{Scafflix}. We answer positively to the above question and prove that \algname{Scafflix} enjoys a doubly accelerated communication complexity, by jointly harnessing %combining the benefits of 
the acceleration potential of LT and personalization. That is, its communication complexity depends on the square root of the condition number of the functions $f_i$ and on the $\alpha_i$. 
In addition to establishing the new state of the art  for personalized FL with our theoretical guarantees, we show by extensive experiments that \algname{Scafflix} is efficient in real-world learning setups and outperforms existing algorithms.

Our approach is novel and its good performance is built on a solid theoretical foundation. We stress that our convergence theorem for \algname{Scafflix} holds under standard assumptions, without bounded variance or any other restriction. 
%To draw a comparative analysis, the ICML papers 
By way of comparison with recent works, \algname{pFedGate} \citep{pFedGate} bases its theorem on the bounded diversity assumption, which is often unrealistic for non-iid FL. Neither \algname{FedCR}~\citep{FedCR} nor \algname{FedGMM}~\citep{FedGMM} comes with a conventional convergence theory. \algname{pFedGraph}~\citep{pFedGraph} and \algname{FED-PUB}~\citep{FED-PUB} also lack a solid convergence analysis. % to bloster their proposed methods; 

\begin{algorithm}[t]
	\caption{\algname{Scafflix} for (\ref{eq:FLIX})}
	\label{alg:scafflix}
	\begin{algorithmic}[1]
		\STATE \textbf{input:}  stepsizes $\gamma_1>0,\ldots,\gamma_n>0$; probability $p \in (0,1]$; initial estimates $x_1^0,\ldots,x_n^0 \in \mathbb{R}^d$ and ${\red h_1^0, \ldots, h_n^0 }\in \mathbb{R}^d$ such that $\sum_{i=1}^n {\red h_i^0}=0$, personalization weights $\alpha_1,\ldots,\alpha_n$
		\STATE at the server, $\gamma \eqdef \left(\frac{1}{n}\sum_{i=1}^n \alpha_i^2\gamma_i^{-1}\right)^{-1}$ %//known by the server
		\hfill $\diamond$ {\small\color{gray} $\gamma$ is used by the server at Step 11}
		\STATE at clients in parallel, $x_i^\star\eqdef \arg\min f_i$\hfill $\diamond$ {\small\color{gray} not needed if $\alpha_i=1$}
		\FOR{$t=0,1,\ldots$}
		\STATE flip a coin $\theta^t \eqdef \{1$ with probability $p$, 0 otherwise$\}$
		\FOR{$i=1,\ldots,n$, at clients in parallel,}
		\STATE $\tilde{x}_i^t \eqdef \alpha_i x_i^t + (1-\alpha_i) x_i^\star$ \hfill $\diamond$ {\small\color{gray} estimate of the personalized model $\tilde{x}_i^\star$}
		\STATE compute an estimate $g_i^t$ of $\nabla f_i(\tilde{x}_i^t)$
\STATE $\hat{x}_i^t\eqdef x_i^t -\frac{\gamma_i}{\alpha_i} \big( g_i^t - {\red h_i^t}\big)$ %, where $g_i^t$ is an estimate of $\nabla f_i(x_i^t)$
\hfill $\diamond$ {\small\color{gray} local SGD step}
\IF{$\theta^t=1$}
\STATE send $\frac{\alpha_i^2}{\gamma_i}\hat{x}_i^t$ to the server, which aggregates $\bar{x}^t\eqdef \frac{\gamma}{n}\sum_{j=1}^n  \frac{\alpha_i^2}{\gamma_i}\hat{x}_{j}^t $ and broadcasts it to all clients \hfill $\diamond$ {\small\color{gray} communication, but only with small probability $p$}
\STATE $x_i^{t+1}\eqdef \bar{x}^{t}$
\STATE ${\red h_i^{t+1}}\eqdef {\red h_i^t} + \frac{p \alpha_i}{\gamma_i}\big(\bar{x}^{t}-\hat{x}_i^t\big)$\hfill $\diamond$ {\small\color{gray}update of the local control variate $\red h_i^t$}
\ELSE
\STATE $x_i^{t+1}\eqdef \hat{x}_i^t$
\STATE ${\red h_i^{t+1}}\eqdef {\red h_i^t}$
\ENDIF
\ENDFOR
		\ENDFOR
	\end{algorithmic}
\end{algorithm}

\section{Proposed algorithm and convergence analysis}

We generalize \algname{Scaffnew}~\citep{ProxSkip} and propose Individualized-Scaffnew (\algname{i-Scaffnew}), shown as Algorithm~\ref{alg1} in the Appendix. Its novelty with respect to \algname{Scaffnew} is to make use of different stepsizes $\gamma_i$ for the local SGD steps, in order to exploit the possibly different values of $L_i$ and $\mu_i$, as well as the different properties $A_i$ and $C_i$ of the stochastic gradients. This change is not straightforward and requires to rederive the whole proof with a different Lyapunov function and to formally endow $\mathbb{R}^d$ with a different inner product at every client. 

We then apply and tune \algname{i-Scaffnew} for the problem (\ref{eq:FLIX}) and propose our new algorithm for personalized FL, which we call \algname{Scafflix}, shown as Algorithm \ref{alg:scafflix}.

%In this section, we first analyze the key characteristics of the (\ref{eq:FLIX}) framework and then present the convergence and communication complexity analysis of developing the accelerated local training methods to solve (\ref{eq:FLIX}). %Our analysis is 
We analyze \algname{Scafflix} 
in the strongly convex case, 
because the analysis of linear convergence rates in this setting gives clear insights and allows us to deepen our theoretical understanding of LT and personalization. And to the best of our knowledge, there is no analysis of \algname{Scaffnew} in the nonconvex setting. 
%since the convergence rate in this case gives clear insights, 
But we conduct several nonconvex deep learning experiments  to show that our theoretical findings also hold in practice.%\smallskip
%To the best of our knowledge, there is no analysis of \algname{Scaffnew} or any other LT-based algorithm with accelerated convergence in the nonconvex setting. //there is FedGlomo

%$\min_{x\in \mathbb{R}^d} \tilde{f}(x) \eqdef \frac{1}{n}\sum_{i=1}^n f_i(\alpha_i x + (1-\alpha_i)x_i^\star)$

% \begin{assumption}[Smoothness and strong convexity]\label{ass:convex_smooth}
% In the problem (\ref{eq:FLIX}) (and (\ref{eq:ERM}) as the particular case $\alpha_i\equiv 1$), we assume that for every $i\in[n]$, the  function $f_i$ is $L_i$-smooth and $\mu_i$-strongly convex,\footnote{A function $f:\mathbb{R}^d\rightarrow \mathbb{R}$ is said to be $L$-smooth if it is differentiable and its gradient is Lipschitz continuous with constant $L$; that is, for every $x\in\mathbb{R}^d$ and $y\in\mathbb{R}^d$, $\|\nabla f(x)-\nabla f(y)\|\leq L \|x-y\|$, where, here and throughout the paper, the norm is the Euclidean norm. $f$ is said to be $\mu$-strongly convex if $f-\frac{\mu}{2}\|\cdot\|^2$ is convex. We refer to \citet{bau17} for such standard notions of convex analysis.} for some $L_i\geq \mu_i>0$. 
%   This implies that the problem % (\ref{eq:FLIX}) 
%   is strongly convex, so that its solution $\xstar$  exists and is unique. 
% \end{assumption}

Our work builds upon the strong convexity assumption in \cref{def:convexity} and the smoothness assumption in \cref{def:smoothness}.
We also make the two following assumptions on the stochastic gradients $g_i^t$ used in \algname{Scafflix} (and \algname{i-Scaffnew} as a particular case with $\alpha_i\equiv 1)$.
% with $\tilde{x}_i^t \eqdef \alpha_i x_i^t + (1-\alpha_i) x_i^\star$ and $\tilde{x}_i^\star \eqdef \alpha_i \xstar + (1-\alpha_i) x_i^\star$.
\begin{assumption}[Unbiasedness]\label{ass:unbiasedness}
 We assume that for every $t\geq 0$ and $i\in[n]$, $g_i^t$ is an unbiased estimate of $\nabla f_i(\tilde{x}_i^t)$; that is,
    %We assume that for every $t\geq 0$ and every $i\in[n]$, $g_{i}^{t}(x_i^{t})$ is an unbiased gradient estimator of $\nabla f_i(x_i^{t})$. That is,
%
    \begin{equation*}
        \Exp{g_{i}^t \;|\; \tilde{x}_i^{t}} = \nabla f_i (\tilde{x}_i^{t}).
    \end{equation*}
\end{assumption}

%We can characterize stochastic unbiased estimates $g_t(x^t)$ of gradients  $\nabla f(x_t)$ in SGD-type algorithms using constants $A\geq 0$ and $C\geq 0$ such that, for every $t\geq 0$, \citep{gow19,gor202}
   % $$
   % \ec[]{\sqn{g_t(x_t)- \nabla f(\xstar)}| x_t} \leq 2AD_f(x_t, \xstar) + C,
   % $$
%where $D_f(x,x')\eqdef f(x)-f(x')-\langle \nabla f(x'),x-x'\rangle \geq 0$ denotes the Bregman divergence of $f$ at points $x,x' \in\mathbb{R}^d$. 
   % We extend this characterization to the distributed setting:
    
To characterize  unbiased stochastic gradient estimates, the modern notion of \emph{expected smoothness} is well suited  \citep{gow19,gor202}:
\begin{assumption}[Expected smoothness]\label{ass:expected_smoothness}
We assume that, for every $i\in [n]$, there exist constants
 $A_i\geq L_i $
 \footnote{We can suppose $A_i\geq L_i$. Indeed, we have the bias-variance decomposition $\Exp{\sqn{g_i^{t}- \nabla f_i(\tilde{x}_i^\star)}\;|\; \tilde{x}_i^{t}} =  \sqn{\nabla f_i(\tilde{x}_i^t)- \nabla f_i(\tilde{x}_i^\star)} + \Exp{\sqn{g_i^{t}- \nabla f_i(\tilde{x}_i^t)}\;|\; \tilde{x}_i^{t}}\geq \sqn{\nabla f_i(\tilde{x}_i^t)- \nabla f_i(\tilde{x}_i^\star)}$. Assuming that $L_i$ is the best known smoothness constant of $f_i$, we cannot improve the constant $L_i$ such that for every $x\in\mathbb{R}^d$, $\sqn{\nabla f_i(x)- \nabla f_i(\tilde{x}_i^\star)}\leq 2L_i D_{f_i}(x, \tilde{x}_i^\star)$. Therefore, $A_i$ in \eqref{ass:expected_smoothness_n} has to be $\geq L_i$.
}
 and $C_i \geq 0$ such that, for every $t\geq 0$,
    \begin{equation}\label{ass:expected_smoothness_n}
%        \ec[]{\sqn{g_{i,t}(x_{i,t})- \nabla f_i(\xstar)}| x_{i,t}} \leq 2A_i D_{f_i}(x_{i, t}, \xstar) + C_i,
        \Exp{\sqn{g_i^{t}- \nabla f_i(\tilde{x}_i^\star)}\;|\; \tilde{x}_i^{t}} \leq 2A_i D_{f_i}(\tilde{x}_i^{t}, \tilde{x}_i^\star) + C_i,
    \end{equation}
    where $D_\varphi(x,x')\eqdef f(x)-f(x')-\langle \nabla f(x'),x-x'\rangle \geq 0$ denotes the Bregman divergence of a function $\varphi$ at points $x,x' \in\mathbb{R}^d$. 

%
    %We define $A \eqdef \avein A_i, C\eqdef \avein C_i$.
\end{assumption}
Thus, unlike the analysis in~\citet[Assumption 4.1]{ProxSkip}, where the same constants are assumed for all clients, 
since we consider personalization, we individualize the analysis: we consider that each client can be different and use stochastic gradients characterized by its own constants $A_i$ and $C_i$. This is more representative of practical settings. 
%Therefore, we provide an extended version of expected smoothness in distributed settings in Eq.~\eqref{ass:expected_smoothness_n}.
%
Assumption~\ref{ass:expected_smoothness} is general and covers in particular the following two important cases  \citep{gow19}:
\begin{enumerate}
\item(bounded variance)\ \ If $g_i^{t}$ is equal to $\nabla f_i(\tilde{x}_i^t)$ plus a zero-mean random error of variance $\sigma_i^2$ (this covers the case of the exact gradient $g_i^{t}=\nabla f_i(\tilde{x}_i^t)$ with $\sigma_i=0$), then Assumption~\ref{ass:expected_smoothness} is satisfied with $A_i = L_i$ and $C_i = \sigma_i^2$.
\item(sampling)\ \  If $f_i=\frac{1}{n_i}\sum_{j=1}^{n_i}f_{i,j}$ for some $L_i$-smooth functions $f_{i,j}$ and $g_i^t = \nabla f_{i,j^t}(\tilde{x}_i^t)$ for some $j^t$ chosen uniformly at random in $[n_i]$, then Assumption~\ref{ass:expected_smoothness} is satisfied with $A_i = 2L_i$ and $C_i = \big(\frac{2}{n_i}\sum_{j=1}^{n_i}\sqnorm{\nabla f_{i,j}(\tilde{x}_i^\star)}\big)-2\sqnorm{\nabla f_{i}(\tilde{x}_i^\star)}$ (this can be extended to minibatch and nonuniform sampling).
\end{enumerate}

% Now we show the theorem of \algname{Scaffnew} under data heterogeneity. 

% \begin{itemize}
%     \item If $g_i^{t}(x_t) = \nabla f_{i}(x_i^{t})$, $A_{\max} = L_{\max}$ and $C = 0$. Define $\tilde{\kappa} = \sqrt{L_{\max}/\mu_{\min}}$ and let $\gamma = 1/L_{\max}, p=1/\sqrt{\tilde{\kappa}}$, . Then the iteration complexity of Algorithm~\ref{them:scaffnew_new} is $\mathcal{O}(\tilde{\kappa}\log 1/\epsilon)$ and the communication complexity is $\mathcal{O}(\sqrt{\tilde{\kappa}}\log 1/\epsilon)$. Future work may extend to consider designing a smaller $\tilde{\kappa}$ making the communication complexity smaller. 
%     \item For the general unbiased compressor, if we choose $\gamma = \min{\frac{1}{A_{\max}}, \frac{\epsilon \mu_{\min}}{2C}}$ and $p = \sqrt{\gamma\mu_{\min}}$, then the total iterations is $\max\left\{\frac{A}{\mu_{\min}}, \frac{2C}{\epsilon\mu_{\min}^2} \right\}\log\left(\frac{2\Phi_0}{\epsilon} \right)$ to guarantee $\ec[]{\Phi_T} \leq \epsilon$. Meanwhile, the communication complexity is $\max\left\{\sqrt{\frac{A}{\mu_{\min}}}, \sqrt{\frac{2C}{\epsilon\mu_{\min}^2}} \right\}\log\left(\frac{2\Phi_0}{\epsilon} \right)$.
% \end{itemize}

%\subsection{Main theorem}

We now present our main convergence result:

\begin{theorem}[fast linear convergence]\label{theo2}
In (\ref{eq:FLIX}) and \algname{Scafflix}, suppose that Assumptions~\ref{def:convexity}, \ref{def:smoothness}, \ref{ass:unbiasedness}, \ref{ass:expected_smoothness} hold and that for every $i\in[n]$, $0<\gamma_i \leq \frac{1}{ A_i}$.
For every $t\geq 0$, define the Lyapunov function
\begin{align}
\Psi^{t}&\eqdef  
%\sum_{i=1}^n \frac{\alpha_i^2}{\gamma_i} \sqnorm{x_i^t-x^\star}+ \frac{1}{p^2}\sum_{i=1}^n \gamma_i \sqnorm{h_i^t-\nabla f_i\big(\alpha_i \xstar + (1-\alpha_i) x_i^\star\big)}
\frac{1}{n}\sum_{i=1}^n \frac{\gamma_{\min}}{\gamma_i} \sqnorm{\tilde{x}_i^t-\tilde{x}_i^\star}
+ \frac{\gamma_{\min}}{p^2}\frac{1}{n}\sum_{i=1}^n \gamma_i \sqnorm{h_i^t-\nabla f_i(\tilde{x}_i^\star)},\label{eqlya1j}
\end{align}
where $\gamma_{\min} \eqdef \min_{i\in[n]} \gamma_i$.
Then 
\algname{Scafflix}
converges linearly:  for every $t\geq 0$, 
%\begin{align}
\begin{equation}
\Exp{\Psi^{t}}\leq (1-\zeta)^t \Psi^0 + \frac{\gamma_{\min}}{\zeta} \frac{1}{n}\sum_{i=1}^n \gamma_i  C_i,\label{eqr1}
\end{equation}
%\end{align}
where 
\begin{equation}
%\mrho \eqdef \max\left(\max_{i\in[n]}(1-\gamma_i\mu_i),1-p^2\right).\label{eqrate2j}
\zeta = \min\left(\min_{i\in[n]} \gamma_i\mu_i,p^2\right).\label{eqrate2j1}
\end{equation}
%Also, for every $i\in[n]$, $x_i^t$ and $\hat{x}_i^{t}$ both converge  to $x^\star$ and $h_i^t$ converges to $\gamma_i\nabla f_i(x^\star)$, almost surely.
\end{theorem}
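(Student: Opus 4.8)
The plan is to obtain this theorem by specializing a convergence result for \algname{i-Scaffnew} to problem (\ref{eq:FLIX}), so the first step is to set up that reformulation carefully. For each $i$ put $\tilde f_i(x)\eqdef f_i(\alpha_i x+(1-\alpha_i)x_i^\star)$; then $\nabla\tilde f_i(x)=\alpha_i\nabla f_i(\alpha_i x+(1-\alpha_i)x_i^\star)$, so $\tilde f_i$ is $\alpha_i^2 L_i$-smooth and $\alpha_i^2\mu_i$-strongly convex, and (\ref{eq:FLIX}) is the minimization of $\tilde f=\frac1n\sum_i\tilde f_i$. Lifting to the product space $\mathbb{R}^{nd}$ with the consensus constraint $x_1=\cdots=x_n$, the averaging step in \algname{Scafflix} is a \emph{skipped} (i.e.\ performed only with probability $p$) proximity step for the indicator of the consensus subspace, and the vectors $h_i^t$ are the dual variables associated with that constraint, the initialization $\sum_i h_i^0=0$ keeping them in the correct subspace for all $t$. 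The per-client stepsizes $\gamma_i$ and personalization factors $\alpha_i$ are absorbed by endowing the copy of $\mathbb{R}^d$ at client $i$ with the inner product $\langle\cdot,\cdot\rangle_i\eqdef\frac{\alpha_i^2}{\gamma_i}\langle\cdot,\cdot\rangle$: the server aggregation weights $\alpha_i^2/\gamma_i$ are precisely those making $\bar x^t$ the \emph{orthogonal} projection onto the consensus subspace in this weighted geometry, and $\gamma\eqdef(\frac1n\sum_i\alpha_i^2\gamma_i^{-1})^{-1}$ is the corresponding normalization. In these coordinates the local step $\hat{x}_i^t=x_i^t-\frac{\gamma_i}{\alpha_i}(g_i^t-h_i^t)$, rewritten through $\tilde x_i^t=\alpha_i x_i^t+(1-\alpha_i)x_i^\star$, is an \algname{SGD} step on $\tilde f_i$.

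Next I would run the standard ProxSkip one-step Lyapunov estimate in this weighted space. Fix $t$, condition on $(x_i^t,h_i^t)_i$, and take expectation over the stochastic gradients first: using $\mu_i$-strong convexity (\cref{def:convexity}), $L_i$-smoothness (\cref{def:smoothness}), unbiasedness (Assumption~\ref{ass:unbiasedness}), and expected smoothness (Assumption~\ref{ass:expected_smoothness}, in its Bregman-divergence form \eqref{ass:expected_smoothness_n}) together with the hypothesis $\gamma_i\le 1/A_i$, one bounds the post-gradient-step squared distances, producing a contraction by $1-\gamma_i\mu_i$ on the iterate part $\frac1n\sum_i\frac{\gamma_{\min}}{\gamma_i}\sqnorm{\tilde x_i^t-\tilde{x}_i^\star}$ of $\Psi^t$, plus the additive term $\gamma_{\min}\frac1n\sum_i\gamma_i C_i$ coming from the constants $C_i$. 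Then take expectation over the coin flip $\theta^t$: with probability $p$ the projection-and-dual-update is applied, and the Pythagorean identity in $\langle\cdot,\cdot\rangle_i$ (exactly as in \citet{ProxSkip}) turns the distance of $\hat{x}^t$ to the consensus solution into a decrease of the control-variate part $\frac{\gamma_{\min}}{p^2}\frac1n\sum_i\gamma_i\sqnorm{h_i^t-\nabla f_i(\tilde{x}_i^\star)}$, contracting it by $1-p^2$; with probability $1-p$ that part is unchanged. Choosing the weights $\gamma_{\min}/\gamma_i$ and $\gamma_{\min}/p^2$ in \eqref{eqlya1j} balances the two rates and yields
\begin{equation*}
\Exp{\Psi^{t+1}\;|\;x^t,h^t}\ \le\ (1-\zeta)\,\Psi^t+\gamma_{\min}\frac1n\sum_{i=1}^n\gamma_i C_i,\qquad \zeta=\min\Big(\min_{i\in[n]}\gamma_i\mu_i,\ p^2\Big).
\end{equation*}
Taking total expectation and unrolling, with $\sum_{k\ge 0}(1-\zeta)^k=1/\zeta$, gives \eqref{eqr1}.

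The main obstacle I anticipate is the bookkeeping forced by the per-client stepsizes and personalization weights: unlike in \algname{Scaffnew}, the consensus projection is no longer the plain average, so one must verify that $\langle\cdot,\cdot\rangle_i$ genuinely makes $\bar x^t$ an orthogonal projection and that the dual update keeps $(h_i^t)_i$ in the range of the constraint operator — only then does the Pythagorean step deliver the clean $1-p^2$ factor, and this is exactly the place where the proof "requires to rederive the whole proof with a different Lyapunov function and to formally endow $\mathbb{R}^d$ with a different inner product at every client". A secondary delicate point is propagating the asymmetry $A_i\ge L_i$ through the gradient-step bound so that the stepsize restriction reads $\gamma_i\le 1/A_i$ while the strong-convexity contraction still reads $1-\gamma_i\mu_i$; this is handled by the bias--variance decomposition recorded in the footnote to Assumption~\ref{ass:expected_smoothness}. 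Once the weighted-geometry reformulation is in place, the remaining estimates are the by-now-standard ProxSkip computations applied block-coordinatewise, so I would isolate the reformulation and the one-step inequality as the two lemmas doing the real work and present the unrolling as a one-line corollary.
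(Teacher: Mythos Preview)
Your proposal is correct and follows essentially the same approach as the paper: the paper first proves the result for \algname{i-Scaffnew} on (\ref{eq:ERM}) using exactly the weighted inner product and Pythagorean decomposition you describe (their eq.\ \eqref{eq55} is the clean identity producing the $(1-p^2)$ factor on the control-variate part, once the server aggregation is recognized as the orthogonal projection in that geometry), and then obtains the \algname{Scafflix} statement by the substitutions $g_i^t\to\alpha_i g_i^t$, $h_i^t\to\alpha_i h_i^t$, $\gamma_i\to\alpha_i^{-2}\gamma_i$, $\mu_i\to\alpha_i^2\mu_i$, $A_i\to\alpha_i^2A_i$, $C_i\to\alpha_i^2C_i$, plus a final rescaling of the Lyapunov function by $\gamma_{\min}/n$. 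Your direct route with the inner product $\frac{\alpha_i^2}{\gamma_i}\langle\cdot,\cdot\rangle$ simply merges these two stages, and the obstacles you anticipate (verifying that $\bar{x}^t$ is the weighted projection and that $\sum_i h_i^t=0$ is preserved) are precisely the points the paper handles explicitly.
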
%\medskip

It is important to note that the range of the stepsizes $\gamma_i$, the Lyapunov function $\Psi^t$ and the convergence rate in \eqref{eqr1}--\eqref{eqrate2j1} do not depend on the personalization weights $\alpha_i$; they only play a role in the definition of the personalized models 
$\tilde{x}_i^t$ and $\tilde{x}_i^\star$. 
 Indeed, the convergence speed essentially depends on the conditioning of the functions $x\mapsto f_i\big(\alpha_i x + (1-\alpha_i) x_i^\star\big)$, which are independent from the $\alpha_i$. More precisely, let us define, for every $i\in[n]$,
\begin{equation*}
\kappa_i \eqdef \frac{L_i}{\mu_i} \geq 1\quad \mbox{and}\quad \kappa_{\max} = \max_{i\in[n]} \kappa_i,
\end{equation*}
and let us study the complexity of of \algname{Scafflix} to reach $\epsilon$-accuracy, i.e.\ $\Exp{\Psi^{t}}\leq \epsilon$.
If, for every $i\in[n]$, $C_i=0$, $A_i=\Theta(L_i)$, and $\gamma_i=\Theta(\frac{1}{A_i})=\Theta(\frac{1}{L_i})$, 
the iteration complexity of \algname{Scafflix} is
%
%We define $\kappa_i \eqdef \frac{L_i}{\mu_i} \geq 1$ for every $i\in[n]$ and $\kappa_{\max} = \max_{i\in[n]} \kappa_i$. 
%
%Thus, by choosing $\gamma_i=\frac{1}{A_i}$ in {\algname{i-Scaffnew}, the rate is
\begin{equation}
%\zeta =\min\left(\frac{1}{\kappa_{\max} },p^2\right).\label{eqrate2ja}
\mathcal{O}\left(\left(\kappa_{\max}+\frac{1}{p^2}\right)\log (\Psi^0\epsilon^{-1})
\right).
\end{equation}
And since communication occurs with probability $p$, the communication complexity of \algname{Scafflix} is
\begin{equation}
%\zeta =\min\left(\frac{1}{\kappa_{\max} },p^2\right).\label{eqrate2ja}
\mathcal{O}\left(\left(p\kappa_{\max}+\frac{1}{p}\right)\log (\Psi^0\epsilon^{-1})
\right).
\end{equation}
Note that $\kappa_{\max}$ can be much smaller than $\kappa_\mathrm{global}\eqdef\frac{\max_i L_i}{\min_i \mu_i}$, which is the condition number that appears in the rate of \algname{Scaffnew} with $\gamma = \frac{1}{\max_i A_i}$. Thus, \algname{Scafflix} is much more versatile and adapted to FL with heterogeneous data than \algname{Scaffnew}.%\smallskip

\begin{corollary}[case $C_i\equiv 0$]\label{cor1}
In the conditions of Theorem~\ref{theo2}, if $p=\Theta\big(\frac{1}{\sqrt{\kappa_{\max}}}\big)$ and, for every $i\in[n]$, $C_i=0$, $A_i=\Theta(L_i)$, and $\gamma_i=\Theta(\frac{1}{A_i})=\Theta(\frac{1}{L_i})$,  the communication complexity of \algname{Scafflix} 
is
\begin{equation}
\mathcal{O}\left(\sqrt{\kappa_{\max}}\log (\Psi^0\epsilon^{-1})
\right).
\end{equation}
\end{corollary}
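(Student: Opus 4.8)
The plan is to obtain the stated communication complexity directly from Theorem~\ref{theo2}, by first specializing its convergence estimate to the noiseless regime $C_i\equiv 0$, then reading off the iteration complexity, converting it into a communication complexity, and finally optimizing over the probability $p$.

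First I would set $C_i=0$ for every $i\in[n]$ in the bound \eqref{eqr1}. The additive term $\frac{\gamma_{\min}}{\zeta}\frac{1}{n}\sum_{i=1}^n\gamma_i C_i$ then vanishes, leaving the pure linear rate $\Exp{\Psi^t}\le(1-\zeta)^t\Psi^0$. To guarantee $\Exp{\Psi^t}\le\epsilon$ it therefore suffices to take $t\ge \zeta^{-1}\log(\Psi^0/\epsilon)$, using $\log\frac{1}{1-\zeta}\ge\zeta$ for $\zeta\in(0,1)$; hence the iteration complexity is $\mathcal{O}\big(\zeta^{-1}\log(\Psi^0\epsilon^{-1})\big)$. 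Next I would evaluate $\zeta=\min\big(\min_{i\in[n]}\gamma_i\mu_i,\,p^2\big)$ from \eqref{eqrate2j1} under the prescribed choices: with $\gamma_i=\Theta(1/A_i)=\Theta(1/L_i)$ (which is admissible, since for a suitable constant it respects $\gamma_i\le 1/A_i$), one has $\gamma_i\mu_i=\Theta(\mu_i/L_i)=\Theta(\kappa_i^{-1})$, so $\min_i\gamma_i\mu_i=\Theta(\kappa_{\max}^{-1})$ and $\zeta^{-1}=\mathcal{O}(\kappa_{\max}+p^{-2})$. Since a communication round is triggered only when $\theta^t=1$, an event of probability $p$ independent across iterations, the expected number of communication rounds after $T$ iterations is $pT$; multiplying the iteration complexity by $p$ yields the communication complexity $\mathcal{O}\big((p\kappa_{\max}+p^{-1})\log(\Psi^0\epsilon^{-1})\big)$, which is exactly the expression recorded just above the statement of the corollary.

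Finally I would minimize the factor $p\kappa_{\max}+p^{-1}$ over $p\in(0,1]$. By the AM--GM inequality, $p\kappa_{\max}+p^{-1}\ge 2\sqrt{\kappa_{\max}}$, with equality, up to constants, at $p=\Theta(\kappa_{\max}^{-1/2})$, which indeed lies in $(0,1]$ because $\kappa_{\max}\ge 1$, and which is precisely the prescribed value of $p$. Substituting it gives the claimed $\mathcal{O}\big(\sqrt{\kappa_{\max}}\log(\Psi^0\epsilon^{-1})\big)$ communication complexity. There is no genuine obstacle in this argument: all the analytical effort sits in Theorem~\ref{theo2}, and the only points requiring a little care are checking that the choice $\gamma_i=\Theta(1/L_i)$ is compatible with the stepsize restriction $\gamma_i\le 1/A_i$ in that theorem, and that the passage from iteration count to communication count multiplies the complexity by exactly $p$ in expectation.
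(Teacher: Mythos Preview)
Your proposal is correct and follows essentially the same approach as the paper: the paper derives the iteration complexity $\mathcal{O}\big((\kappa_{\max}+p^{-2})\log(\Psi^0\epsilon^{-1})\big)$ and communication complexity $\mathcal{O}\big((p\kappa_{\max}+p^{-1})\log(\Psi^0\epsilon^{-1})\big)$ in the discussion immediately preceding the corollary, and then simply plugs in $p=\Theta(\kappa_{\max}^{-1/2})$. Your explicit AM--GM justification for why this $p$ is optimal is a nice touch that the paper leaves implicit, but otherwise the arguments coincide.
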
%\smallskip

\begin{corollary}[general stochastic gradients]\label{cor2}
In the conditions of Theorem~\ref{theo2}, if $p=\sqrt{\min_{i\in[n]} \gamma_i \mu_i}$ and, for every $i\in[n]$,
\begin{equation}
\gamma_i=\min \left(\frac{1}{A_i},\frac{\epsilon \mu_{\min}}{2 C_i} \right)
\end{equation}
(or $\gamma_i\eqdef \frac{1}{A_i}$ if $C_i=0$),
where $\mu_{\min}\eqdef \min_{j\in[n]} \mu_j$, the iteration complexity of \algname{Scafflix} is
\begin{equation}
	\begin{aligned}
		&\mathcal{O}\!\left(\!\left(\max_{i\in[n]}  \max\left(\frac{A_i}{\mu_i},\frac{C_i}{\epsilon \mu_{\min}\mu_i}\right)\right)\log(\Psi^0\epsilon^{-1})\!\right)\\
		&\!=\!\mathcal{O}\!\left(\!\max\left(\max_{i\in[n]}  \frac{A_i}{\mu_i},\max_{i\in[n]} \frac{C_i}{\epsilon \mu_{\min}\mu_i}\right)\log(\Psi^0\epsilon^{-1})\!\right)
	\end{aligned}
\end{equation}
%If in addition $A_i=\Theta(L_i)$ uniformly, 
and its communication complexity is
\begin{equation}
\mathcal{O}\left(\!\max\left(\max_{i\in[n]}  \sqrt{\frac{A_i}{\mu_i}},\max_{i\in[n]} \sqrt{\frac{C_i}{\epsilon \mu_{\min}\mu_i}}\right)\log(\Psi^0\epsilon^{-1})\right).
\end{equation}
\end{corollary}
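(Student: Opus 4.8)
The plan is to obtain Corollary~\ref{cor2} as a direct consequence of Theorem~\ref{theo2}, using the standard decomposition of the right-hand side of \eqref{eqr1} into a geometrically decaying \emph{transient} term and a constant \emph{noise floor}: one chooses the stepsizes small enough that the noise floor is at most $\epsilon/2$, and then counts how many iterations are needed to push the transient term below $\epsilon/2$.

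First I would verify that the stated choices are admissible in Theorem~\ref{theo2}. Since $\gamma_i=\min\!\big(\frac{1}{A_i},\frac{\epsilon\mu_{\min}}{2C_i}\big)$ (or $\gamma_i=\frac{1}{A_i}$ when $C_i=0$), we always have $0<\gamma_i\leq\frac{1}{A_i}$, as required; moreover $A_i\geq L_i\geq\mu_i$ gives $\gamma_i\mu_i\leq\frac{\mu_i}{A_i}\leq 1$, so $p=\sqrt{\min_{i\in[n]}\gamma_i\mu_i}\in(0,1]$ is a legitimate probability. With this $p$ we have $p^2=\min_i\gamma_i\mu_i$, so $\zeta$ in \eqref{eqrate2j1} simplifies to $\zeta=\min_{i\in[n]}\gamma_i\mu_i$. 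Next I would bound the noise floor $\frac{\gamma_{\min}}{\zeta}\frac{1}{n}\sum_i\gamma_iC_i$ in \eqref{eqr1}. The key elementary observation is $\zeta=\min_i\gamma_i\mu_i\geq\gamma_{\min}\mu_{\min}$, hence $\frac{\gamma_{\min}}{\zeta}\leq\frac{1}{\mu_{\min}}$; combined with $\gamma_iC_i\leq\frac{\epsilon\mu_{\min}}{2}$ (trivially, with both sides zero, when $C_i=0$), this yields $\frac{\gamma_{\min}}{\zeta}\frac{1}{n}\sum_i\gamma_iC_i\leq\frac{1}{\mu_{\min}}\cdot\frac{\epsilon\mu_{\min}}{2}=\frac{\epsilon}{2}$. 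The transient term satisfies $(1-\zeta)^t\Psi^0\leq\frac{\epsilon}{2}$ once $t\geq\frac{1}{\zeta}\log\frac{2\Psi^0}{\epsilon}$, so $\Exp{\Psi^t}\leq\epsilon$ after $t=\mathcal{O}\!\big(\frac{1}{\zeta}\log(\Psi^0\epsilon^{-1})\big)$ iterations.

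It then remains to rewrite $\frac{1}{\zeta}$ in the quantities appearing in the statement. By definition $\frac{1}{\zeta}=\max_i\frac{1}{\gamma_i\mu_i}$ and $\frac{1}{\gamma_i}=\max\!\big(A_i,\frac{2C_i}{\epsilon\mu_{\min}}\big)$, so $\frac{1}{\gamma_i\mu_i}=\max\!\big(\frac{A_i}{\mu_i},\frac{2C_i}{\epsilon\mu_{\min}\mu_i}\big)=\Theta\!\big(\max(\frac{A_i}{\mu_i},\frac{C_i}{\epsilon\mu_{\min}\mu_i})\big)$; taking the maximum over $i$ and using $\max_i\max(a_i,b_i)=\max(\max_i a_i,\max_i b_i)$ produces both displayed forms of the iteration complexity. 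Finally, because a communication round is triggered with probability $p$ at each iteration, the expected number of communications to reach $\epsilon$-accuracy is $p$ times the iteration complexity, i.e.\ $\mathcal{O}\!\big(\frac{p}{\zeta}\log(\Psi^0\epsilon^{-1})\big)=\mathcal{O}\!\big(\frac{1}{\sqrt{\zeta}}\log(\Psi^0\epsilon^{-1})\big)$ since $p=\sqrt{\zeta}$; pushing the square root through the maxima exactly as above gives the claimed communication complexity. The argument is essentially bookkeeping; the only points requiring a little care are the inequality $\gamma_{\min}/\zeta\leq 1/\mu_{\min}$, which makes the noise floor independent of how spread out the $\gamma_i$ are, and the clean handling of the degenerate case $C_i=0$ where the second argument of the $\min$ defining $\gamma_i$ drops out.
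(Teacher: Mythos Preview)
Your proposal is correct and follows essentially the same approach as the paper: split \eqref{eqr1} into transient and noise-floor parts, set $p=\sqrt{\min_i\gamma_i\mu_i}$ so that $\zeta=\min_i\gamma_i\mu_i$, use $\gamma_i C_i\le \tfrac{\epsilon\mu_{\min}}{2}$ together with $\gamma_{\min}\mu_{\min}\le\zeta$ to cap the noise floor at $\epsilon/2$, and read off $1/\zeta=\max_i\max(A_i/\mu_i,\,2C_i/(\epsilon\mu_{\min}\mu_i))$. The paper's proof is line-for-line the same, with only cosmetic differences in the order of the noise-floor inequalities; your added checks on admissibility of $\gamma_i,p$ and the explicit $p\cdot(1/\zeta)=1/\sqrt{\zeta}$ step for the communication complexity are fine elaborations.
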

%    For the general unbiased sto. grad., if we choose $\gamma = \min\{\frac{1}{A_{\max}}, \frac{\epsilon \mu_{\min}}{2C}\}$ and $p = \sqrt{\gamma\mu_{\min}}$, then the total iterations is $\max\left\{\frac{A_{\max}}{\mu_{\min}}, \frac{2C}{\epsilon\mu_{\min}^2} \right\}\log\left(\frac{2\Phi_0}{\epsilon} \right)$ to guarantee $\ec[]{\Phi_T} \leq \epsilon$. Meanwhile, the communication complexity is $\max\left\{\sqrt{\frac{A_{\max}}{\mu_{\min}}}, \sqrt{\frac{2C}{\epsilon\mu_{\min}^2}} \right\}\log\left(\frac{2\Phi_0}{\epsilon} \right)$.

If $A_i=\Theta(L_i)$ uniformly, we have $\max_{i\in[n]}  \sqrt{\frac{A_i}{\mu_i}} = \Theta(\sqrt{\kappa_{\max}})$. Thus, we see that thanks to LT, the communication complexity of \algname{Scafflix} is accelerated, as it depends on $\sqrt{\kappa_{\max}}$ and $\frac{1}{\sqrt{\epsilon}}$.

In the expressions above, the acceleration effect of personalization is not visible: it is ``hidden'' in $\Psi^0$, because every client computes $x_i^t$ but what matters is its personalized model $\tilde{x}_i^t$, and $\sqnorm{\tilde{x}_i^t-\tilde{x}_i^\star}=\alpha_i^2 \sqnorm{x_i^t-x^\star}$.  In particular, assuming that 
$x_1^0 = \cdots = x_n^0 = x^0$ and 
$h_i^0 = \nabla f_i(\tilde{x}_i^0)$, we have 
\begin{align*}
\Psi^{0} &\leq \frac{\gamma_{\min}}{n}\sqnorm{x^0-x^\star}\sum_{i=1}^n \alpha_i^2 \left(\frac{1}{\gamma_i}+\frac{\gamma_i L_i^2 }{p^2}\right)\\
&\leq \big(\max_i \alpha_i^2\big)\frac{\gamma_{\min}}{n}\sqnorm{x^0-x^\star}\sum_{i=1}^n  \left(\frac{1}{\gamma_i}+\frac{\gamma_i L_i^2 }{p^2}\right), %\label{eqper2}
\end{align*}
and we see that the contribution of every client to the initial gap $\Psi^0$ is weighted by $\alpha_i^2$. Thus,  the smaller the $\alpha_i$, the smaller $\Psi^0$ and the faster the convergence. This is why personalization is an acceleration mechanism in our setting.
%In particular, it follows from \eqref{eqlya1j}, \eqref{eqr1}, \eqref{eqper2} that for every $i\in [n]$, 
%That is, the smaller $\alpha_i$, the smaller the 

\section{Experiments}

\begin{figure*}[t]
	\centering
	\begin{subfigure}[b]{0.32\textwidth}
		\centering
		\includegraphics[width=\textwidth]{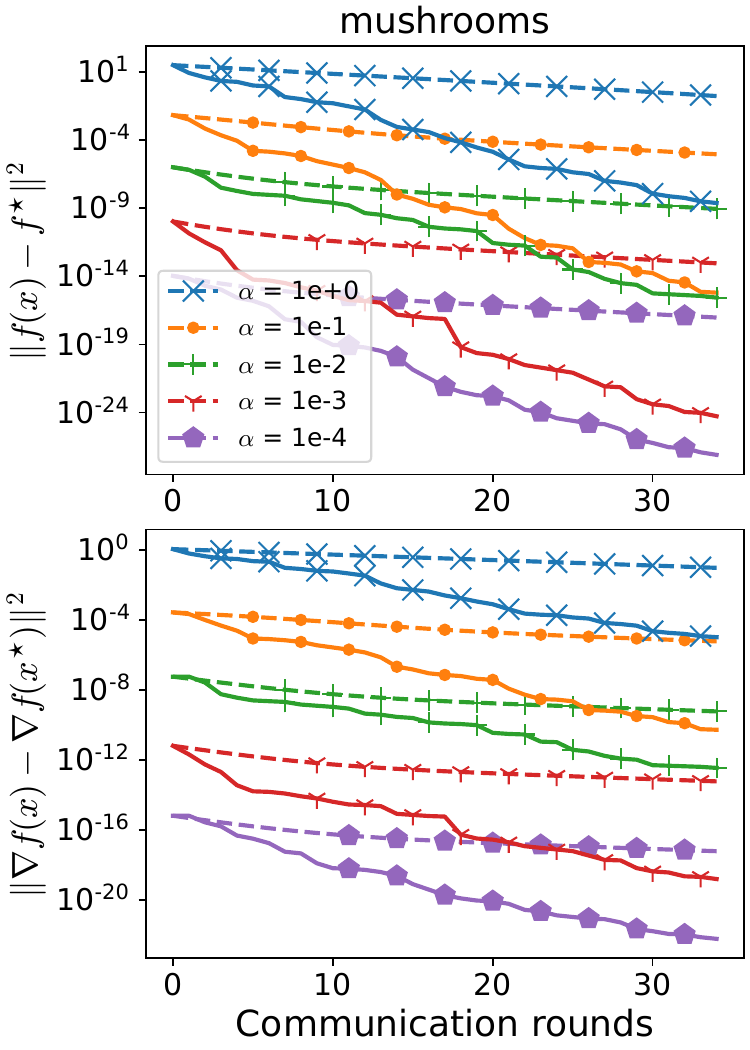}
	\end{subfigure}
	\hfill
	\begin{subfigure}[b]{0.32\textwidth}
		\centering
		\includegraphics[width=\textwidth]{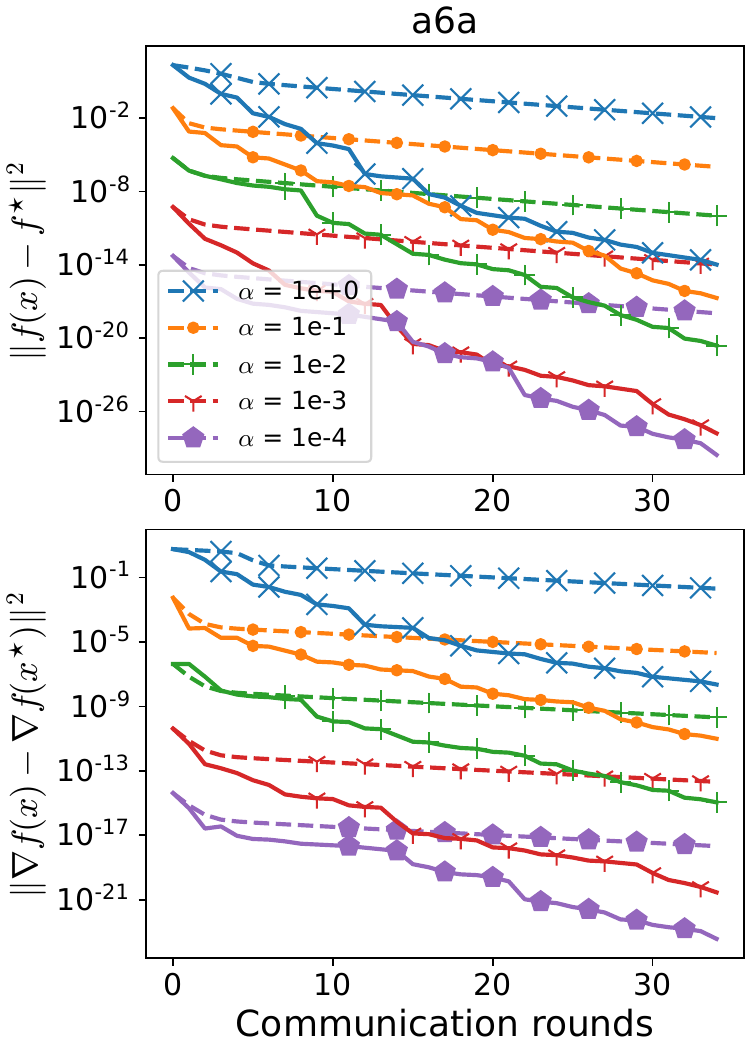}
	\end{subfigure}
	\hfill
	\begin{subfigure}[b]{0.32\textwidth}
		\centering
		\includegraphics[width=\textwidth]{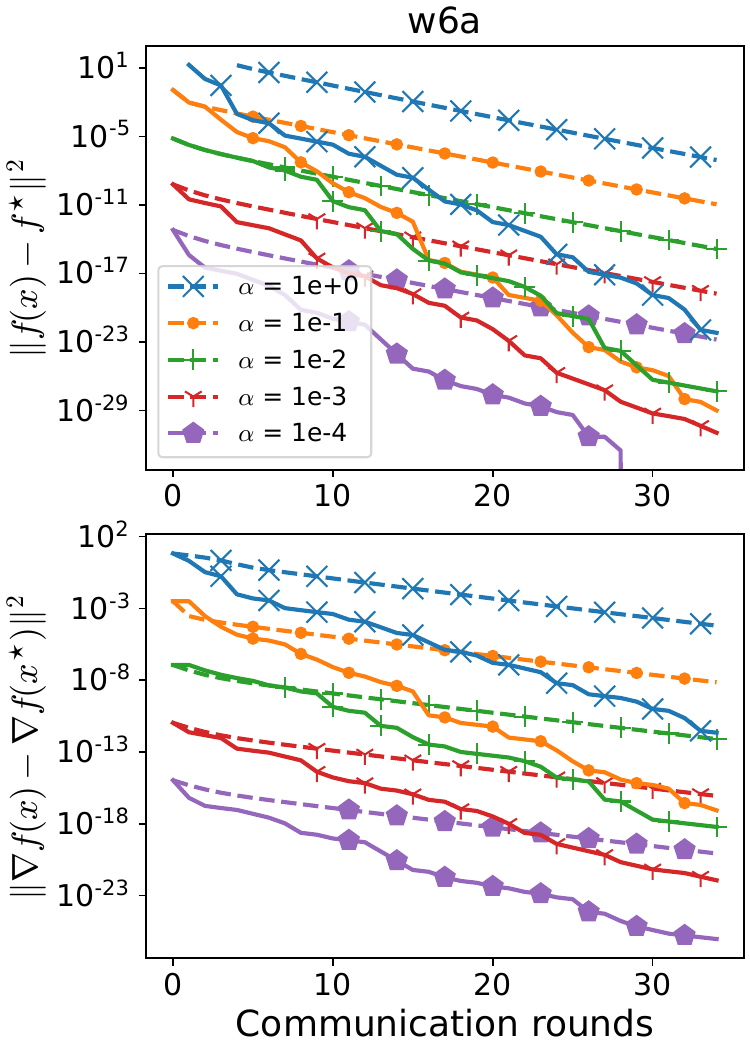}
	\end{subfigure}
	   \caption{The objective gap $f(x^k) - f^\star$ and the squared gradient norm $\sqn{\nabla f(x^k)}$ 
	   against the number $k$ of communication rounds for \algname{Scafflix} and \algname{GD} on the problem (\ref{eq:FLIX}) on class-wise non-iid FL setting. We set all $\alpha_i$ to the same value for simplicity. The dashed line represents \algname{GD}, while the solid line represents \algname{Scafflix}. We observe the double communication acceleration achieved through explicit personalization and local training. Specifically, (a) 
	   for a given algorithm,
	   %focusing solely on \algname{GD} or \algname{Scafflix}, 
	   %we observe that 
	   smaller $\alpha_i$s (i.e.\ more personalized models) lead to faster convergence; (b) comparing 
	   the two algorithms,  \algname{Scafflix} is faster than \algname{GD}, thanks to its  local training mechanism.}
	   \label{fig:tissue_figure}
\end{figure*}

We first consider a convex logistic regression problem to show that the empirical behavior of \algname{Scafflix} is in accordance with the theoretical convergence guarantees available in the convex case. Then, we make extensive experiments of training neural networks on large-scale distributed datasets. 
 
% \subsection{Prelude: convex logistic regression}
% We first evaluate our approach by considering the standard convex logistic regression problem with $l_2$ regularizer, which has the form (\ref{eq:ERM}) with
% \begin{equation}
%     {f}_i (x) \eqdef \frac{1}{n_i}\sum_{j=1}^{n_i} \log \left(1 + \exp(-b_{i, j}^T x)\right) + \frac{\mu}{2}\sqn{x},
% \end{equation}
% where $\mu$ is the regularization parameter, $n_i$ is the number of data points at client $i$, and $b_{i, j} \in \Rd$ denotes the input data. 
% $f_i$ is $\mu$-strongly convex and $L_i$-smooth with $L_i=\frac{1}{4n_i}\norm{\sum_i^{k_i} b_ib_i^T} + \mu$. 
% We set $\mu$ to $0.1$. We use the \texttt{mushrooms}, \texttt{a6a}, and \texttt{w6a} datasets of the LibSVM library~\citep{chang2011libsvm}. The data is divided evenly across all machines and we set all $\alpha_i$ to the same value. The results are  shown in Figure~\ref{fig:tissue_figure}. We can observe the double acceleration effect of our approach that combines explicit personalization and accelerated local training: smaller $\alpha_i$s, which prioritize local models and personalization, lead to faster convergence for both \algname{GD} and \algname{Scafflix}, and  \algname{Scafflix} is faster than \algname{GD}, thanks to its featured local training mechanism.

\subsection{Prelude: convex logistic regression}
We begin our evaluation by considering the standard convex logistic regression problem with an $l_2$ regularizer. This benchmark problem is takes the form (\ref{eq:ERM}) with
\begin{equation*}
{f}_i{(x)} \eqdef \frac{1}{n_i}\sum_{j=1}^{n_i} \log \left(1 + \exp(-b_{i, j}x^T a_{i, j})\right) + \frac{\mu}{2}\sqn{x},
\end{equation*}
where $\mu$ represents the regularization parameter, $n_i$ is the total number of data points present at client $i$; $a_{i, j}$ are the training vectors and the $b_{i, j} \in \{-1, 1\}$ are the corresponding labels. Every function $f_i$ is $\mu$-strongly convex and $L_i$-smooth with $L_i=\frac{1}{4n_i}\sum_{j=1}^{n_i}\sqn{a_{i,j}} + \mu$. We set $\mu$ to $0.1$ for this experiment. %We assume $f$ is $L$-smoothness with $L\eqdef \sqn{\sumin L^2_i}$.
We employ the \texttt{mushrooms}, \texttt{a6a}, and \texttt{w6a} datasets from the LibSVM library~\citep{chang2011libsvm} to conduct these tests. 
We consider several non-iid splits and present the results on feature-wise non-iid in Figure~\ref{fig:tissue_figure}. We discuss the difference among non-iid settings and complementary results in Appendix~\ref{sec:logistic_noniid}. 

The data is distributed evenly across all clients, and the $\alpha_i$  are set to the same value. The results are shown in Figure~\ref{fig:tissue_figure}. We can observe the double acceleration effect of our approach, which combines explicit personalization and accelerated local training. Lower $\alpha_i$ values, i.e.\ more personalization, yield faster convergence for both \algname{GD} and \algname{Scafflix}. Moreover, \algname{Scafflix} is much faster than \algname{GD}, thanks to its specialized local training mechanism.

% \subsection{Neural Network Training: Datasets and Baselines for Evaluation}
\subsection{Neural network datasets and baselines}
To assess the generalization capabilities of \algname{Scafflix}, we undertake a comprehensive evaluation involving the training of neural networks using two widely-recognized large-scale FL datasets.

\textbf{Datasets.} Our selection comprises two notable large-scale FL datasets: Federated Extended MNIST (FEMNIST) \citep{leaf}, and Shakespeare \citep{FL2017-AISTATS}. FEMNIST is a character recognition dataset consisting of 671,585 samples. In line with the methodology described in FedJax~\citep{ro2021fedjax}, we distributed these samples across 3,400 devices, with each device exhibiting a naturally non-IID characteristic. For all algorithms, we employ a CNN model, featuring two convolutional layers and one fully connected layer. The Shakespeare dataset, used for next character prediction tasks, contains a total of 16,068 samples, which we distribute randomly across 1,129 devices. For all algorithms applied to this dataset, we use a RNN model, comprising two LSTM layers and one fully connected layer. 
    
\begin{figure}[!tb] 
	\centering
	\begin{subfigure}[b]{0.48\textwidth}
		\centering
		\includegraphics[trim=0 0 0 0, clip, width=\textwidth]{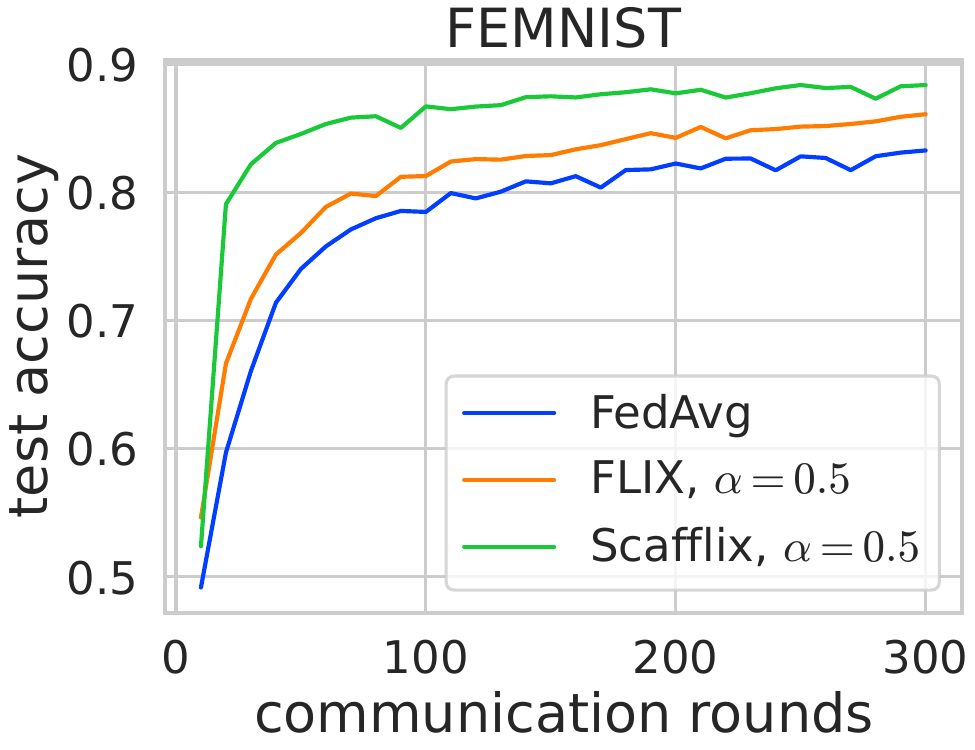}
		% \caption{}
	\end{subfigure}
	\hfill 
	\begin{subfigure}[b]{0.48\textwidth}
		\centering
		\includegraphics[trim=0 0 0 0, clip, width=\textwidth]{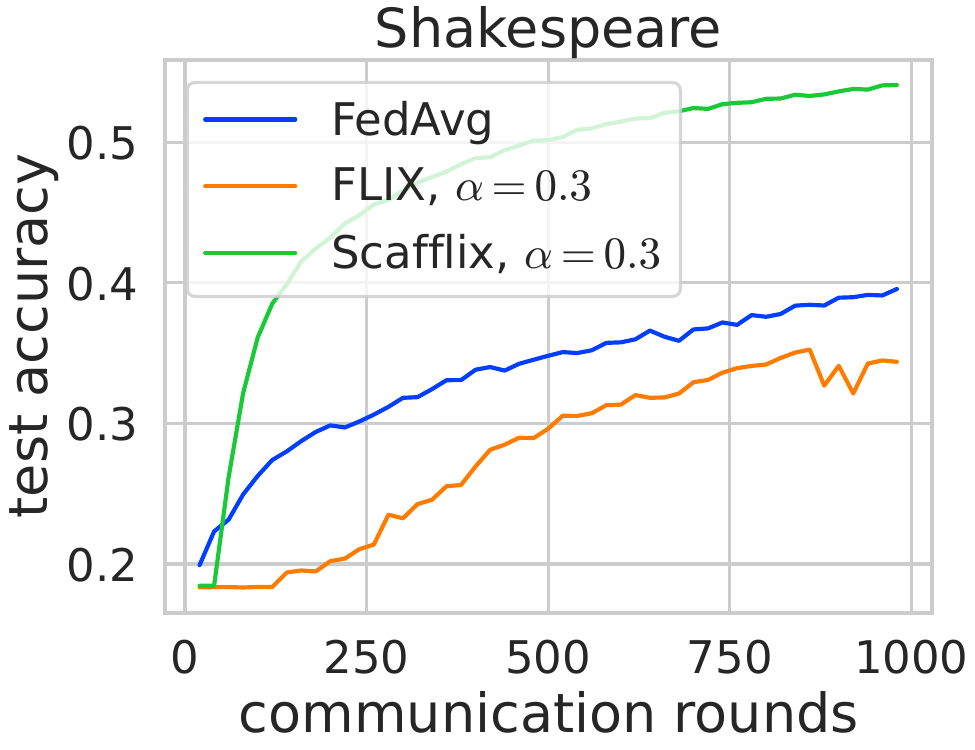}
		% \caption{$\tau=32$}
		\end{subfigure}  
	\caption{Comparative generalization analysis with baselines. We set the communication probability to $p=0.2$. The left figure corresponds to the FEMNIST dataset with $\alpha=0.5$, while the right figure corresponds to the Shakespeare dataset with $\alpha=0.3$.}\label{fig:abs01}
	\vspace{-3.5mm}
\end{figure} 

\textbf{Baselines.} The performance of our proposed \algname{Scafflix} algorithm is benchmarked against prominent baseline algorithms, specifically \algname{FLIX}\citep{FLIX} and \algname{FedAvg}\citep{FedAvg2016}. The \algname{FLIX} algorithm optimizes the \ref{eq:FLIX} objective utilizing the \algname{SGD} method, while \algname{FedAvg} is designed to optimize the \ref{eq:ERM} objective. We employ the official implementations for these benchmark algorithms. Comprehensive hyperparameter tuning is carried out for all algorithms, including \algname{Scafflix}, to ensure optimal results.
For both \algname{FLIX} and \algname{Scafflix}, local training is required to achieve the local minima for each client. By default, we set the local training batch size at $100$ and employ \algname{SGD} with a learning rate selected from the set $C_s \eqdef \{10^{-5}, 10^{-4}, \cdots, 1\}$. Upon obtaining the local optimum, we execute each algorithm with a batch size of $20$ for 1000 communication rounds. The model's learning rate is also selected from the set $C_s$. All the experiments were conducted on a single NVIDIA A100 GPU with 80GB of memory.

% We use the same hyper-parameters and settings as described in \citet{reddi2021adaptive} to train \algname{FedAvg}, and we adopt the training scheme described in \citet{FLIX} for \algname{FLIX} . We conduct extensive hyper-parameter searches for both the baselines and \algname{Scafflix}, and we report the generalization performance on the held-out validation dataset using the best set of hyper-parameters. We also apply partial client sampling by randomly selecting $\tau$ clients to participate in the training process. 

% \begin{table}[t]
%     \centering
%     \renewcommand{\arraystretch}{1.05}\caption{Stastics of compared datasets.}\label{tab:datasets_compare} 
%     \begin{adjustbox}{max width=\textwidth}
%     \begin{tabular}{c|c|cc|c}
%         \toprule
%         dataset & task & \# devices & total samples & model\\ \hline
%         FEMNIST & character recognition & 3,400 & 671, 585 & CNN (2 Conv + 2FC) \\
%         Shakespeare & next character prediction & 1,129 & 16,068 & RNN (2LSTM + 1FC)\\
%         % StackOverflow & next word prediction & 342,477 & 135,818,730 & RNN (1LSTM + 2FC) \\
%         % Reddit & next word prediction & 1,660,820 & 56,587,343\\
%         \bottomrule
%     % \end{tabular}}
%     \end{tabular}
%     \end{adjustbox}  
% \end{table}   

% \kai{TODO: we may consider more training epochs.}

\begin{figure*}[!t]
	\centering
	\begin{subfigure}[b]{0.325\textwidth}
		\centering
		\includegraphics[trim=0 0 0 0, clip, width=\textwidth]{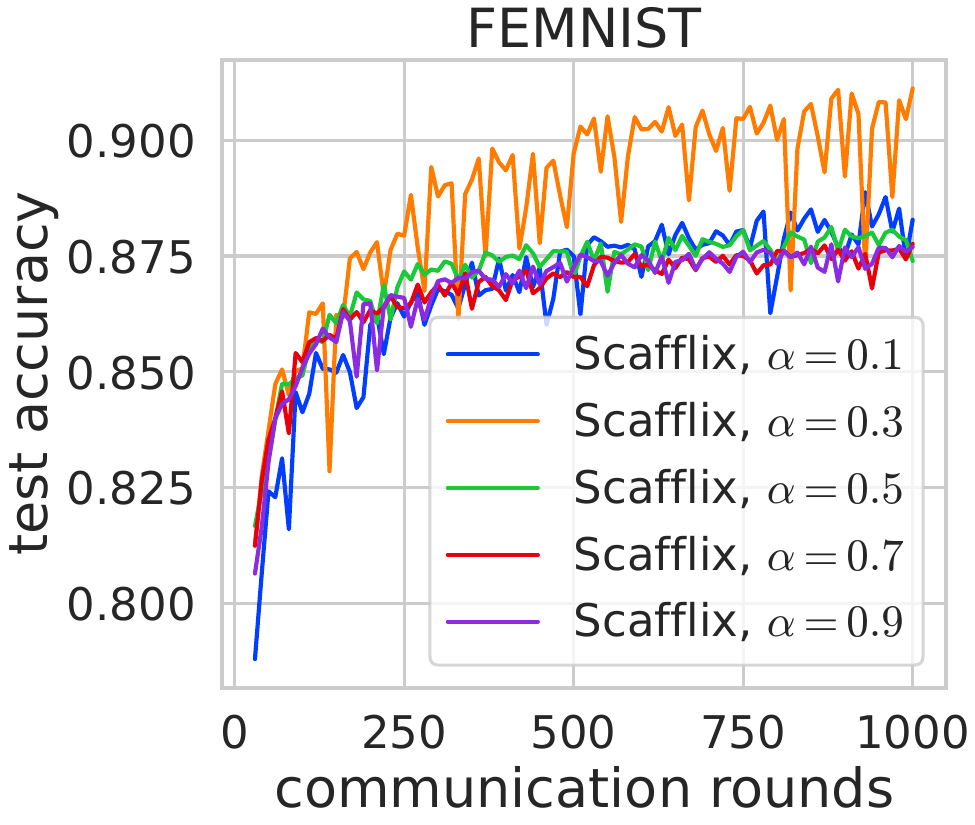}
		\caption{$\alpha$s}\label{fig:abs07_a}
	\end{subfigure}
	% \hfill 
	\begin{subfigure}[b]{0.325\textwidth}
		\centering  
		\includegraphics[trim=0 0 0 0, clip, width=\textwidth]{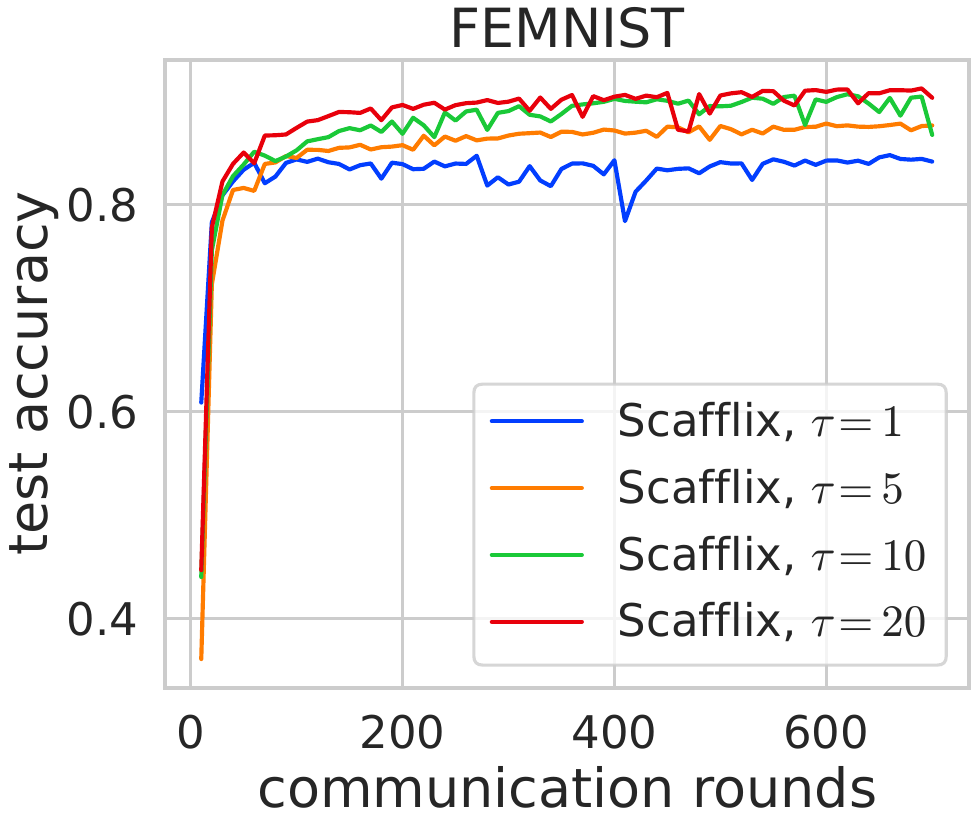}
		\caption{$\tau$s}\label{fig:abs07_b}
		\end{subfigure}
    \begin{subfigure}[b]{0.325\textwidth}
        \centering
        \includegraphics[trim=0 0 0 0, clip, width=\textwidth]{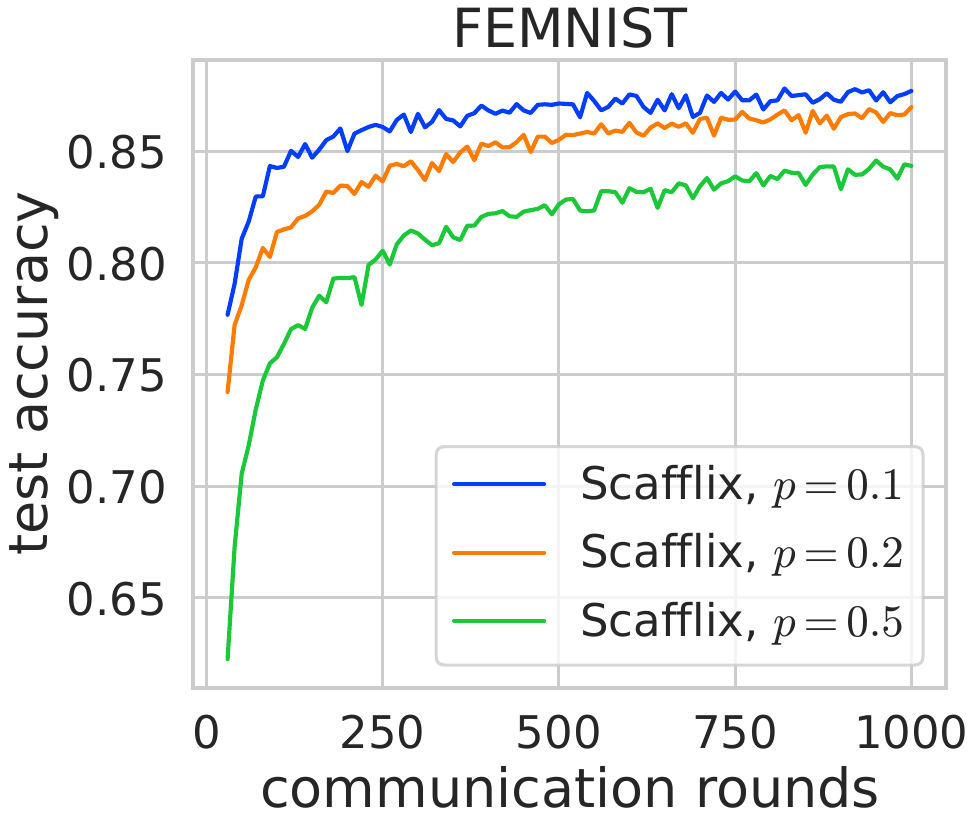}
        \caption{$p$s}\label{fig:abs07_c}
        \end{subfigure}
	% \vspace{-2mm}
	\caption{Key ablation studies: (a) evaluate the influence on personalization factor $\alpha$, (b) examinate the effect of different numbers of clients participating to communication, (c) compare different values of the communication probability $p$.} 
	\label{fig:abs07}
\end{figure*} 
  
% \subsection{Analysis of Generalization with Limited Communication Rounds}
\subsection{Generalization analysis}
In this section, we perform an in-depth examination of the generalization performance of \algname{Scafflix}, particularly in scenarios with a limited number of training epochs. This investigation is motivated by our theoretical evidence of the double acceleration property of \algname{Scafflix}. 
To that aim, we conduct experiments on both FEMNIST and Shakespeare. These two datasets offer a varied landscape of complexity, allowing for a comprehensive evaluation of our algorithm. In order to ensure a fair comparison with other baseline algorithms, we conducted an extensive search of the optimal hyperparameters for each algorithm. The performance assessment of the generalization capabilities was then carried out on a separate, held-out validation dataset. The hyperparameters that gave the best results in these assessments were selected as the most optimal set. 

In order to examine the impact of personalization, we assume that all clients have same $\alpha_i \equiv \alpha$ and we select $\alpha$ in $\{0.1, 0.3, 0.5, 0.7, 0.9\}$. 
We present the results corresponding to $\alpha=0.1$ in Figure~\ref{fig:abs01}. Additional comparative analyses with other values of $\alpha$ are available in the Appendix. As shown in Figure~\ref{fig:abs01}, it is clear that \algname{Scafflix} outperforms the other algorithms in terms of generalization on both the FEMNIST and Shakespeare datasets.
Interestingly, the Shakespeare dataset (next-word prediction) poses a greater challenge compared to the FEMNIST dataset (digit recognition). Despite the increased complexity of the task, \algname{Scafflix} not only delivers significantly better results but also achieves this faster. 
Thus, \algname{Scafflix} is superior both in speed and accuracy.   

\begin{figure*}[!t]
	\centering
	% First minipage for the first figure
	\begin{minipage}[b]{0.302\textwidth}
	  \centering
	  \begin{subfigure}[b]{1.0\textwidth}
		\centering
		\includegraphics[trim=0 20 0 0, clip, width=\textwidth]{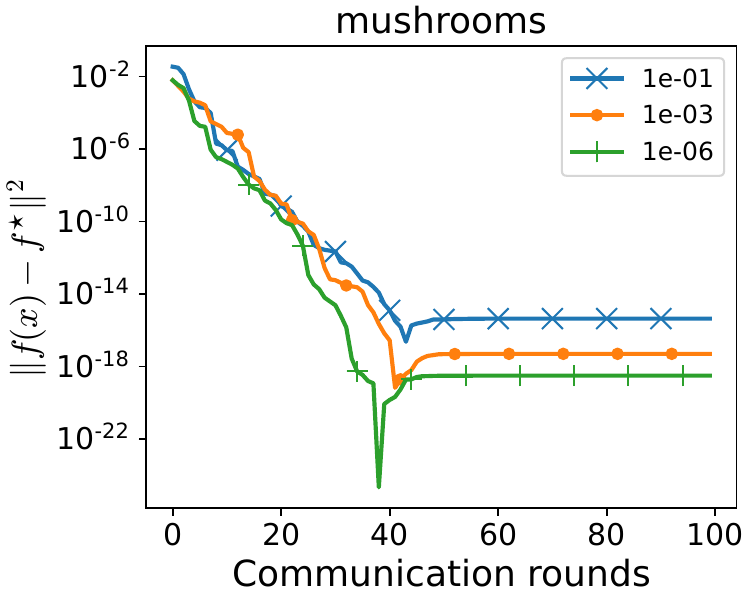}
		% \caption{$\alpha$s}\label{fig:abs10_a}
	  \end{subfigure}
	  \begin{subfigure}[b]{1.0\textwidth}
		\centering
		\includegraphics[trim=0 0 0 20, clip, width=\textwidth]{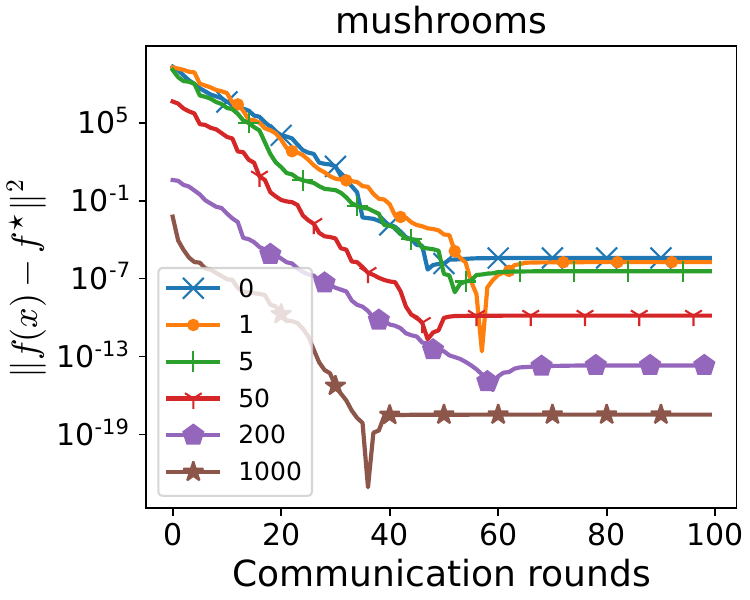}
		% \caption{$\tau$s}\label{fig:abs10_b}
	  \end{subfigure}
	  % \caption{Inexact approx. of the local optimal.}
        \caption{Inexact local optimum approx.}
	  \label{fig:abs22}
	\end{minipage}
	\hfill
	% Second minipage for the second figure
	\begin{minipage}[b]{0.67\textwidth}
	  \centering
	  \begin{subfigure}[b]{0.48\textwidth}
		\centering
		\includegraphics[trim=0 0 0 0, clip, width=\textwidth]{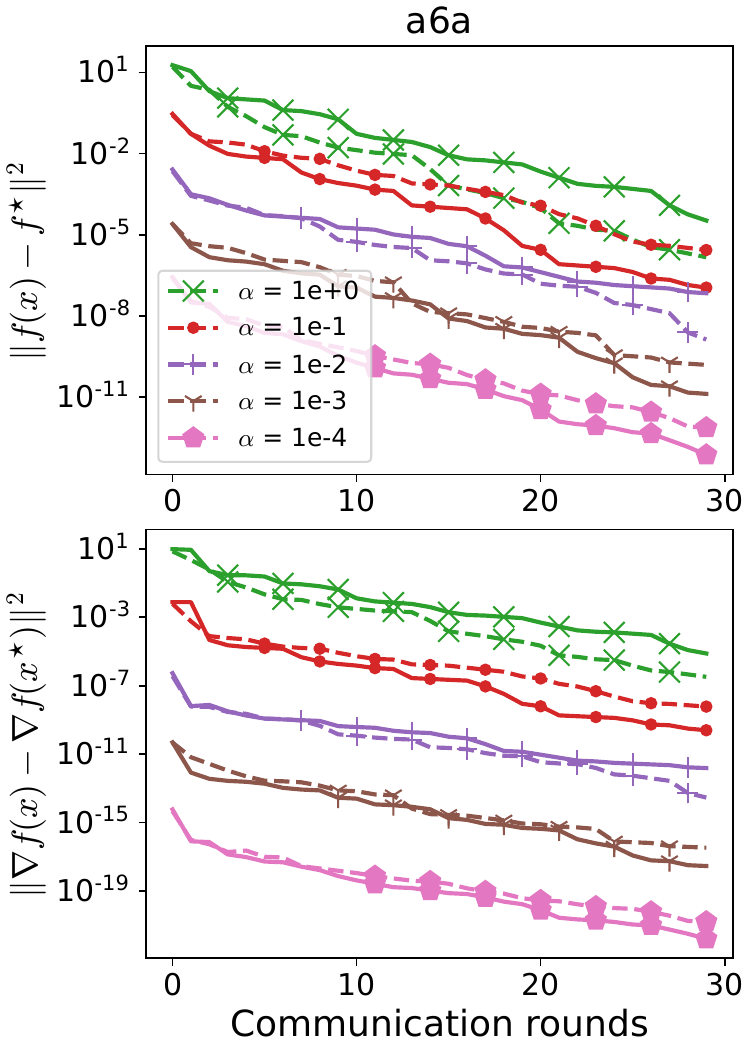}
		% \caption{$\alpha$s}\label:fig:abs10_a}
	  \end{subfigure}  
	  \hfill
	  \begin{subfigure}[b]{0.48\textwidth}
		\centering
		\includegraphics[trim=0 0 0 0, clip, width=\textwidth]{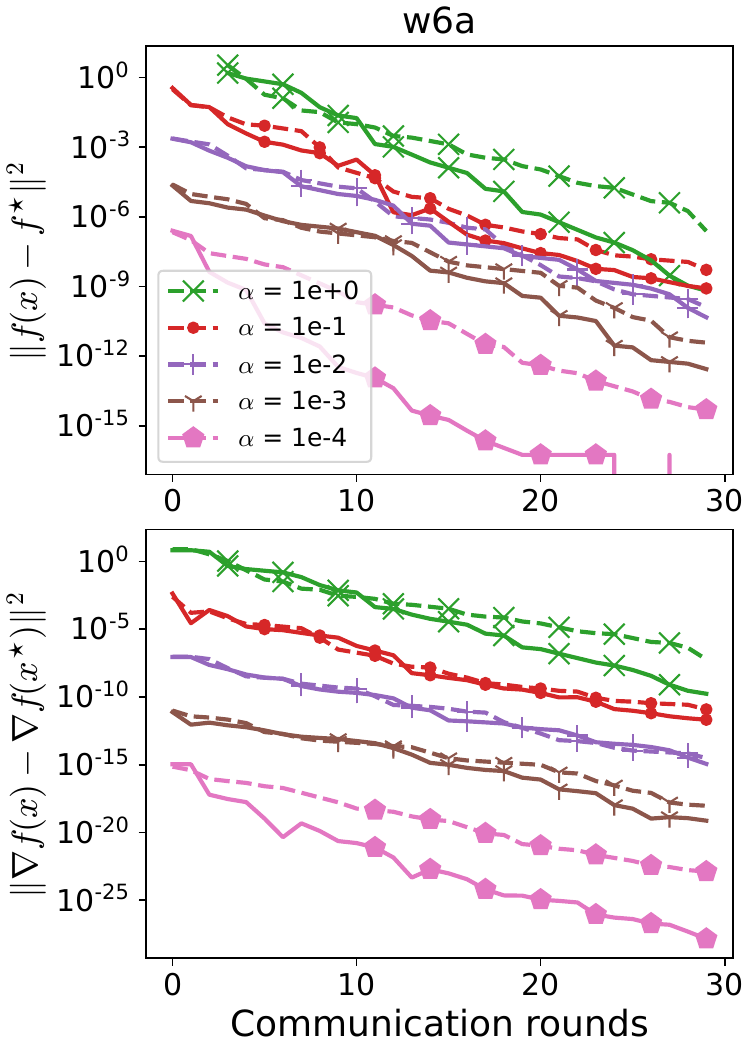}
		% \caption{$\tau$s}\label:fig:abs10_b}
	  \end{subfigure}
	  \caption{Comparison between global stepsize (dashed lines) and individual stepsizes (solid lines).}
	  \label{fig:abs10}
	\end{minipage}
  \end{figure*}

\subsection{Key ablation studies}
In this section, we conduct several critical ablation studies to verify the efficacy of our proposed \algname{Scafflix} method. These studies investigate the optimal personalization factor for \algname{Scafflix}, assess the impact of the number of clients per communication round, and examine the influence of the communication probability $p$ in \algname{Scafflix}.
  
\paragraph{Optimal personalization factor.}
% \subsubsection{Optimal Personalization Factor}
In this experiment, we explore the effect of varying personalization factors on the FEMNIST dataset. The results are presented in Figure~\ref{fig:abs07_a}. We set the batch size to 128 and determine the most suitable learning rate through a hyperparameter search. We consider linearly increasing personalization factors within the set $\{0.1, 0.3, 0.5, 0.7, 0.9\}$. An exponential scale for $\alpha$ is also considered in the Appendix, but the conclusion remains the same.

We note that the optimal personalization factor for the FEMNIST dataset is $0.3$. Interestingly, personalization factors that yield higher accuracy also display a slightly larger variance. However, the overall average performance remains superior. This is consistent with expectations as effective personalization may emphasize the representation of local data, and thus, could be impacted by minor biases in the model parameters received from the server.
   
\paragraph{Number of clients communicating per round.}
% \subsubsection{Number of Clients Communicating per Round}
In this ablation study, we examine the impact of varying the number of participating clients in each communication round within the \algname{Scafflix} framework. By default, we set this number to 10. Here, we conduct extensive experiments with different client numbers per round, choosing $\tau$ from  $\{1, 5, 10, 20\}$. The results are presented in Figure~\ref{fig:abs07_b}.
We can observe that \algname{Scafflix} shows that for larger batch sizes, specifically $\tau=10$ and $20$, demonstrate slightly improved generalization performance.

\paragraph{Selection of communication probability $p$.}
% \subsubsection{Selection of Communication Probability $p$}
In this ablation study, we explore the effects of varying the communication probability $p$ in \algname{Scafflix}.
We select $p$ from  $\{0.1, 0.2, 0.5\}$, and the corresponding results are shown in Figure~\ref{fig:abs07_c}. 
We can clearly see that a smaller value of $p$, indicating reduced communication, facilitates faster convergence and superior generalization performance. This highlights the benefits of LT, which not only makes FL faster and more communication-efficient, but also improves the learning quality.

\paragraph{Inexact local Optimal.}o
% \subsubsection{Inexact Local Optimal}
In FL, the primary challenge lies in minimizing communication overhead while effectively managing local computation times. Attaining a satisfactory local optimum (or approximation) for each client is both practical and similar to \emph{pretraining} for finding a good initialization, a common practice in fields like computer vision and natural language processing. For instance, in our study of the Shakespeare dataset, distributed across 1,129 devices with over 16,000 samples, a mere \emph{50} epochs of local training per client were necessary to achieve optimal results, as demonstrated in Figure~\ref{fig:abs01}. This efficiency stands in stark contrast to traditional methods, which often require more than 800 communication rounds, each involving multiple local updates.

We further conducted detailed ablation studies on logistic regression to assess the impact of inexact local optimum approximation. A threshold was set such that $\|\nabla f_i(x)\|<\epsilon$ indicates a client has reached its local optimum, with the default $\epsilon$ set to $1e-6$. Our investigation focused on the consequences of using higher $\epsilon$ values. Appendix Figure~\ref{fig:abs11} details the expected number of local iterations for 100 clients. Notably, an $\epsilon$ value of $1e-1$ is found to be 23.55 times more efficient than $\epsilon = 1e-6$. Additional results for 8 workers with $\alpha=0.1$ are presented in Figure~\ref{fig:abs22}, showing that $\epsilon=1e-1$ provides a satisfactory approximation. (We anticipate an even lower computational cost for finding a local optimum approximation when the data per client is smaller.) Opting for $\epsilon=1e-1$ is a viable strategy to reduce computation, while smaller $\epsilon$ values are advantageous for greater precision. To ensure that our initial $x_i^0$ is not already near the optimum, we initialized each element of $x_i^0$ to 100. Additionally, we explored the number of local iterations required for achieving the optimal setting, ranging from $[0, 1, 5, 200, 1000]$, as depicted in the right panel of Figure~\ref{fig:abs22}. These findings underscore the need for a balance between performance and computational costs. More comprehensive insights and results are provided in Appendix~\ref{sec:inexact_approx_local_optimal}.

% \begin{figure}[!t]
% 	\centering
% 	\begin{subfigure}[b]{0.238\textwidth}
% 	\centering
% 	\includegraphics[trim=0 0 0 0, clip, width=\textwidth]{img/scafflix/inexact/scafflix_quantity_noniid_mushrooms_1e-06_log_reg_mushrooms_epsilons_inexact_.pdf}
% 	% \caption{$\alpha$s}\label{fig:abs10_a}
% 	\end{subfigure}
% 	\hfill
% 	\begin{subfigure}[b]{0.238\textwidth}
% 	\centering
% 	\includegraphics[trim=0 0 0 0, clip, width=\textwidth]{img/scafflix/inexact/scafflix_quantity_noniid_mushrooms_1000_log_reg_mushrooms_epsilons_inexact_.pdf}
% 	% \caption{$\tau$s}\label{fig:abs10_b}
% 	\end{subfigure}
% 	\caption{Inexact local optimal approximation.}
% 	\label{fig:abs22}
% 	\end{figure}

\paragraph{Individual stepsizes for each client.}
%  \subsubsection{Individual Stepsizes for Each Client}
 In our experiments, we initially assumed a uniform learning rate for all clients for simplicity. However, to more accurately represent the personalized approach of our method and to align closely with Algorithm~\ref{alg:scafflix}, we explored different stepsizes for each client. Specifically, we set $\gamma_i = 1 / L_i$, where $L_i$ denotes the smoothness constant of the function $f_i$ optimizing (\ref{eq:FLIX}). The impact of this variation is demonstrated in Figure~\ref{fig:abs10}, which presents results using the mushrooms dataset. We observed that employing individual stepsizes generally enhances performance. This approach, along with a global stepsize (indicated by dashed lines in the figure), both contribute to improved outcomes.

\chapter{Federated Personalized Privacy-friendly Pruning}
\label{chapter_fedp3}
\thispagestyle{empty}

\section{Introduction}
%%%%%%%%%%%%%%%%%%%%
%%%%%%%%%%%%%%%%%%%%

% Federated learning (FL)~\citep{konevcny2016federated,mcmahan2017communication} has emerged as a significant machine learning paradigm wherein multiple clients perform computations on their private data locally and subsequently communicate their findings to a remote server. 

Standard FL is typically formulated as an optimization problem, specifically the Empirical Risk Minimization defined in \Cref{eq:ERM}. To better reflect that our focus is on neural networks, we reformulate the objective as:  
\begin{equation}\label{eqn:objective1}
    \min_{W \in \mathbb{R}^d} f(W) \eqdef \frac{1}{n} \sum_{i=1}^n f_i(W),
\end{equation}  
where \(W\) represents the shared global network parameters, \(f_i(W)\) denotes the local objective for client \(i\), and \(n\) is the total number of clients.

Distinguishing it from conventional distributed learning, FL predominantly addresses heterogeneity stemming from both data and model aspects. 
Data heterogeneity characterizes the fact that the local data distribution across clients can vary widely. 
Such variation is rooted in real-world scenarios where clients or users exhibit marked differences in their data, reflective of the variety of sensors or software~\cite{jiang2020federated}, of users' unique preferences, etc.~\cite{li2020federated2}. 
Recent works~\cite{zhao2018federated} showed how detrimental the non-iidness of the local data could be on the training of a FL model. 
This phenomenon known as client-drift, is intensively studied to develop methods limiting its impact on the performance~\citep{karimireddy2020scaffold, Gao_2022_CVPR, Mendieta_2022_CVPR}.
  
Furthermore, given disparities among clients in device resources, e.g., energy consumption, computational capacities, memory storage or network bandwidths, model heterogeneity becomes a pivotal consideration. 
To avoid restricting the global model's architecture to the largest that is compatible with all clients, recent methods aim at reducing its size differently for each client to extract the utmost of their capacities.
This can be referred to as constraint-based local model personalization~\citep{gao2022survey}.
In such a context, clients often train a pruned version of the global model~\citep{jiang2022model,HeteroFL} before transmitting it to the server for aggregation~\citep{li2021model}. 
A contemporary and influential offshoot of this is Independent Subnetwork Training (IST)~\citep{yuan2022distributed}. 
It hinges on the concept that each client trains a subset of the main server-side model, subsequently forwarding the pruned model to the server. 
Such an approach significantly trims local computational burdens in FL~\citep{dun2023efficient}. 

Our research, while aligning with the IST premise, brings to light some key distinctions. A significant observation from our study is the potential privacy implications of continuously sending the complete model back to the server. Presently, even pruned networks tend to preserve the overarching structure of the global model. In this paper, we present an innovative approach to privacy-friendly pruning. Our method involves transmitting only select segments of the global model back to the server. This technique effectively conceals the true structure of the global model, thus achieving a delicate balance between utility and confidentiality.
As highlighted in \cite{zeiler2014visualizing}, different layers within networks demonstrate varied capacities for representation and semantic interpretation. The challenge of securely transferring knowledge from client to server, particularly amidst notable model heterogeneity, is an area that has not been thoroughly explored. { It's pertinent to acknowledge that the concept of gradient pruning as a means of preserving privacy was initially popularized by the foundational work of \cite{zhu2019deep}. Following this, studies such as \cite{huang2020privacy} have further investigated the efficacy of DNN pruning in maintaining privacy.}

Besides, large language models (LLMs) have garnered significant attention and have been applied to a plethora of real-world scenarios~\citep{brown2020language, chowdhery2022palm, touvron2023llama} recently. 
However, the parameter count of modern LLMs often reaches the billion scale, making it challenging to utilize user or client information and communicate within a FL framework. 
We aim to explore the feasibility of training a more compact local model and transmitting only a subset of the global network parameters to the server, while still achieving commendable performance.
   
From a formulation standpoint, our goal is to optimize the following objective, thereby crafting a global model under conditions of model heterogeneity:

\begin{equation}
    \min_{W_1, \dotsc, W_n\in \Rd} f(W) \eqdef h\left( f_1(W_1), f_2(W_2), \dotsc, f_n(W_n)\right) \enspace,
\end{equation}
where $W_i$ denotes the model downloaded from client $i$ to the server, which can differ as we allow global pruning or other sparsification strategies. The global model $W$ is a function of $\{W_1, W_2, \dotsc, W_n\}$, $f_i$ the local objective for client $i$ and $n$ the total number of clients. 
    Function $h$ is the aggregation mapping from the clients to the server. 
In conventional FL, it's assumed that function $h$ is the average and all $W_1=\dotsc W_n = W$, which means the full global model is downloaded from the server to every client. When maintaining a global model $W$, this gives us $f(x) \eqdef \frac{1}{n} \sum_{i=1}^{n} f_i(W)$, which aligns with the standard empirical risk minimization (ERM).

\subsection{Summary of contributions}
In this paper, we introduce an efficient and adaptable federated network pruning framework tailored to address model heterogeneity. 
The main contributions of our framework, denoted as \algname{FedP3} (\textbf{Fed}erated \textbf{P}ersonalized and \textbf{P}rivacy-friendly network \textbf{P}runing) algorithm, are:

{
% \begin{itemize}
\noindent $\bullet$ \textit{Versatile framework:} %Designed for model heterogeneity, 
Our framework allows personalization based on each client's unique constraints (computational, memory, and communication).

\noindent $\bullet$ \textit{Dual-pruning method:} Incorporates both global (server to client) and local (client-specific) pruning strategies for enhanced efficiency.

\noindent $\bullet$ \textit{Privacy-friendly approach:} Ensures privacy-friendly to each client by limiting the data shared with the server to only select layers post-local training.

\noindent $\bullet$ \textit{Managing heterogeneity:} Effectively tackles data and model diversity, supporting non-iid data distributions and various client-model architectures.

\noindent $\bullet$ \textit{Theoretical interpretation:} Provides a comprehensive analysis of global pruning and personalized model aggregation. Discusses convergence theories, communication costs, and the advantages over existing methodologies.

\noindent $\bullet$ \textit{Local differential-privacy algorithm:} Introduces \algname{LDP-FedP3}, a novel local differential privacy algorithm. Outlines privacy guarantees, utility, and communication efficiency.
% \end{itemize}
}
%%%%%%%%%%%%%%%%%%%%
%%%%%%%%%%%%%%%%%%%%

\section{Approach}
%%%%%%%%%%%%%%%%%%%%
%%%%%%%%%%%%%%%%%%%%

We focus on the training of neural networks within the  FL paradigm. Consider a global model 
\begin{equation*}
    W \coloneqq \{W^0, W^1, \dots, W^L, W^{\mathrm{out}}\} \enspace,
\end{equation*}
where \(W^0\) represents the weights of the input layer, \(W^{\mathrm{out}}\) the weights of the final output layer, and \(L\) the number of hidden layers. 
Each \(W^l\), for all \(l \in \mathcal{L} \coloneqq \{0, 1, \dots, L\}\), denotes the model parameters for layer \(l\). 
We distribute the complete dataset \(X\) across \(n\) clients following a specific distribution, which can be non-iid. 
Each client then conducts local training on its local data denoted by \(X_i\). 

% Our proposed general framework, \algname{FedP3}, represents \textbf{Fed}erated \textbf{P}ersonalized and \textbf{P}rivacy-friendly network \textbf{P}runing. 
% The details of this framework are provided in Algorithm~\ref{alg:framework}.
\paragraph{Algorithmic overview.} In Algorithm~\ref{alg:framework}, we introduce the details of our proposed general framework called \textbf{Fed}erated \textbf{P}ersonalized and \textbf{P}rivacy-friendly network \textbf{P}runing (\algname{FedP3}).
For every client \(i \in [n]\), we assign predefined pruning mechanisms \(P_i\) and \(Q_i\), determined by the client's computational capacity and network bandwidth (see Line~\ref{line:pruning_mechanism}). 
Here, \(P_i\) denotes the maximum capacity of a pruned global model \(W\) sent to client \(i\), signifying server-client global pruning. On the other hand, \(Q_i\) stands for the local pruning mechanism, enhancing both the speed of local computation and the robustness (allowing more dynamics) of local network training.

\begin{algorithm*}[!t]
	\caption{\algname{FedP3}}
	\label{alg:framework}
	\begin{algorithmic}[1]
		\STATE {\bf Input:} Client $i$ has data $X_i$ for $i\in [n]$, the number of local updates $K$, the number of communication rounds $T$, initial model weights $W_t = \{W_t^0, W_t^1, \ldots, W_t^L\}$ on the server for $t=0$
            % \vspace{-\baselineskip}
            \STATE Server specifies the server pruning mechanism $P_i$, the client pruning mechanism $Q_i$, and the set of layers to train $L_i \subseteq [L]$ for each client $i\in [n]$  \label{line:pruning_mechanism}
            \vspace{-\baselineskip}
		\FOR{$t=0,1,\dotsc,T-1$}  
                % \STATE \textbf{Server:}
		      \STATE Server samples a subset of participating clients $\mathcal{C}_t\ \subset [n]$  \label{line:sample_participating_clients}
                \STATE Server sends the layer weights $W_t^l$ for $l \in L_i$ to client $i\in \mathcal{C}_t$ for training \label{line:server_sends_layers}
                \STATE Server sends the pruned weights $P_i \odot W_t^l$ for $l \notin L_i$ to client $i\in \mathcal{C}_t$  \label{line:server_sends_weigths}
                \FOR{each client $i \in \mathcal{C}_t$ in parallel}
                    \STATE Initialize $W_{t,0}^l = W_t^l$ for all $l \in [L_i]$ and $W_{t, 0}^{l} = P_i \odot W_t^{l}$ for all $l \notin [L_i]$ \label{line:local_training_begin}
                    \FOR{$k = 0, 1, \dotsc, K-1$}
                        \STATE Compute $W_{t,k+1} \leftarrow \texttt{LocalUpdate}(W_{t,k}, X_i, L_i, Q_i, k)$,\\ \quad where $W_{t, k}\eqdef \{W_{t,k}^0, W_{t,k}^1, \ldots, W_{t,k}^L\}$ 
                        %\STATE Update the weights $\left( \cup_{l \in L_i} W_t^l \right) \cup \left( \cup_{l \not \in L_i} P_{i,k} (W_t^l) \right)$ % for client $i$ using $X_i$
                    \ENDFOR
                \STATE Send $\cup_{l \in L_i} W_{t,K}^l$ to the server  \label{line:send_weights_to_server}
                \ENDFOR
                \STATE Server aggregates $W_{t+1} = \texttt{Aggregation}(\cup_{i \in [n]} \cup_{l \in L_i} W_{t,K}^l)$ %for all $l \in [L]$
                %\STATE Server aggregates $W_{t+1} = \text{Aggregation}(W_{t,K}^?, \ldots, W_{t,K}^?)$                
		\ENDFOR
            \\
            \STATE {\bf Output:} $W_T$
	\end{algorithmic}
\end{algorithm*}

In Line~\ref{line:sample_participating_clients}, we opt for partial client participation by selecting a subset of clients \(\mathcal{C}_t\) from the total pool \( [n] \). 
Unlike the independent subnetwork training approach, Lines~\ref{line:server_sends_layers}--\ref{line:server_sends_weigths} employ a personalized server-client pruning strategy. 
This aligns with the concept of collaborative training. 
Under this approach, we envision each client learning a subset of layers, sticking to smaller neural network architectures of the global model. 
Due to the efficient and privacy-friendly communication, such a method is not only practical but also paves a promising path for future research in FL-type training and large language models.

The server chooses a layer subset \(L_i\) for client \(i\) and dispatches the pruned weights, conditioned by \(P_i\), for the remaining layers. Local training spans \(K\) steps (Lines~\ref{line:local_training_begin}--\ref{line:send_weights_to_server}), detailed in Algorithm~\ref{alg:local_update}. To uphold a privacy-friendly framework, only weights \(\cup_{l \in L_i} W^l_{t, K}\) necessary for training of each client \(i\) are transmitted to the server (Line~\ref{line:send_weights_to_server}). The server concludes by aggregating the weights received from every client to forge the updated model \(W_{t+1}\), as described in Algorithm~\ref{alg:aggregation}. We also provide an intuitive pipeline in \Cref{fig:tissue_figure}.

\begin{figure}[!t]
    \centering
    \includegraphics[width=1.0\textwidth]{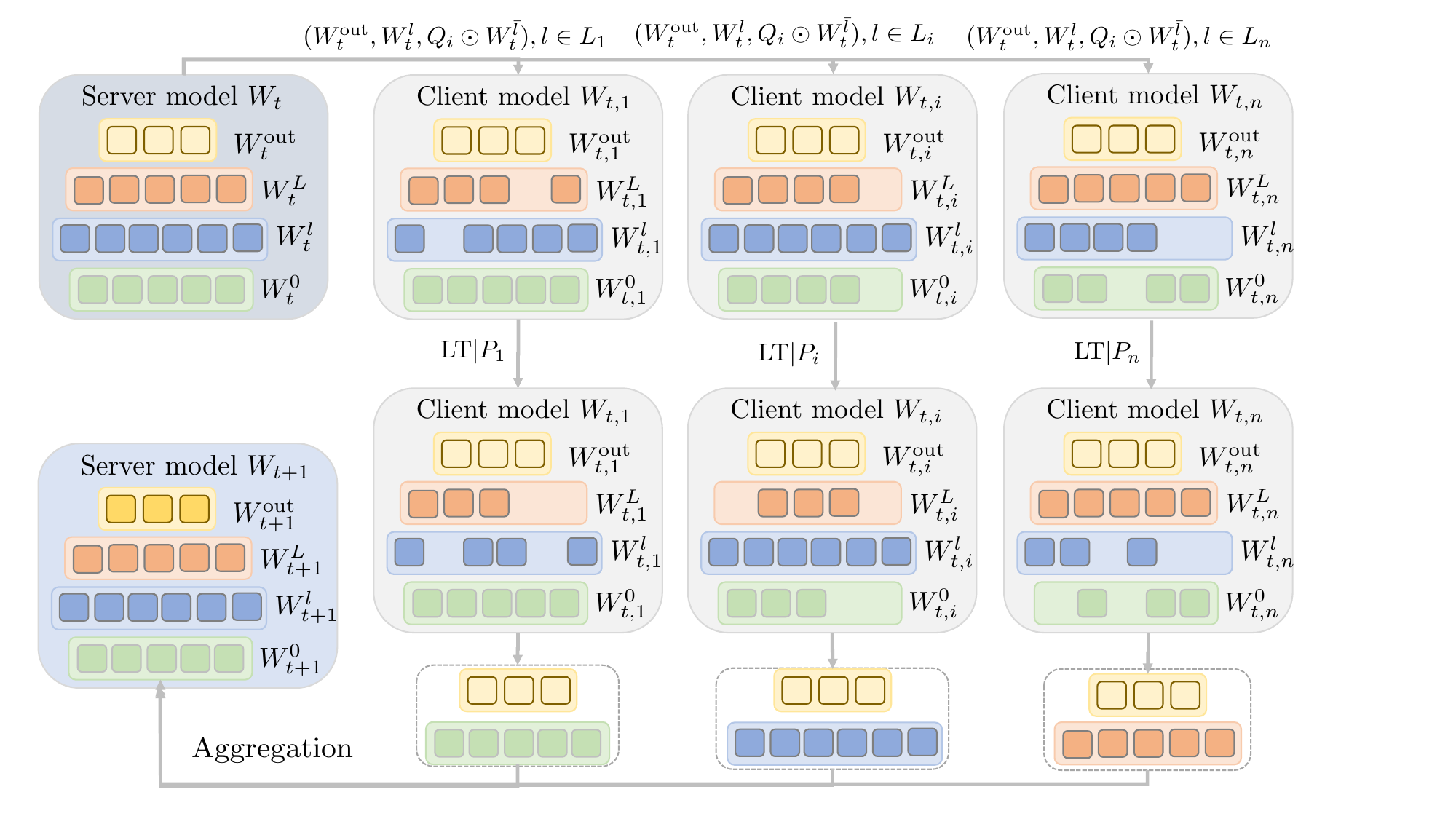}
    \caption{Pipeline illustration of our proposed framework \algname{FedP3}.}
    \label{fig:tissue_figure}
\end{figure}   

\paragraph{Local update.} Our proposed framework, \algname{FedP3}, incorporates dynamic network pruning. In addition to personalized task assignments for each client $i$, our local update mechanism supports diverse pruning strategies. Although efficient pruning strategies in FL remain an active research area~\citep{FjORD, FedRolex, Flado}, we aim to determine if our framework can accommodate various strategies and yield significant insights. In this context, we examine different local update rules as described in Algorithm~\ref{alg:local_update}. We evaluate three distinct strategies: \textit{fixed without pruning}, \textit{uniform pruning}, and \textit{uniform ordered dropout}.
% \ngd{To me, "full" in TABLE \ref{tab:resnet18_results} means no pruning. So we should call it "uniform global pruning" and rename "uniform pruning" of page 4 "uniform local pruning"}
% \ngd{"Fixed" is an unclear denomination. Maybe call it "no local pruning"}

Assuming our current focus is on $W^l_{t, k}$, where $l\notin L_i$, after procuring the pruned model conditioned on $P_i$ from the server, we denote the sparse model we obtain by $P_i\odot W^l_{t, 0}$. Here: 
    
\begin{itemize}
    \item \textit{Fixed without pruning} implies that we conduct multiple steps of the local update without additional local pruning, resulting in $P_i\odot W^l_{t, K}$.
    \item \textit{Uniform pruning} dictates that for every local iteration $k$, we randomly generate the probability $q_{i, k}$ and train the model $q_{i, k}\odot P_i\odot W^l_{t, K}$.
    \item \textit{Uniform ordered dropout} is inspired by~\cite{FjORD}. In essence, if $P_i\odot W^l_{t, 0} \in \mathbb{R}^{d_1\times d_2}$ (extendable to 4D convolutional weights; however, we reference 2D fully connected layer weights here), we retain only the subset $P_i\odot W^l_{t, 0}[:q_{i, k}d_1, :q_{i, k}d_2]$ for training purposes. $[:q_{i, k}d_1]$ represents we select the first $q_{i, k}\times d_1$ elements from the total $d_1$ elements. 
\end{itemize}

Regardless of the chosen method, the locally deployed model is given by $\left(\cup_{l\in L_i}W_{t, k}^{l}\right)\cup  \left( \cup_{l \not \in L_i} q_{i,k} \odot P_i \odot W_{t, k}^l \right)$, as highlighted in Algorithm~\ref{alg:local_update} Line~\ref{line:local_update}.

\begin{algorithm*}[!tb]
	\caption{\texttt{LocalUpdate}}
	\label{alg:local_update}
	\begin{algorithmic}[1]
		\STATE {\bf Input:} $W_{t,k}, X_i, L_i, Q_i, k$
            % \STATE \textit{TBD: preserving the sparsity of the layers not belonging to $L_i$}
            \STATE Generate the step-wise local pruning ratio $q_{i, k}$ conditioned on $P_i$ and $Q_i$
            % \STATE \quad \textit{Fixed without Pruning}
            % \STATE \quad \textit{Uniform Pruning}
            % \STATE \quad \textit{Uniform Ordered Dropout}
            \STATE Local training $\left(\cup_{l\in L_i}W_{t, k}^{l}\right)\cup  \left( \cup_{l \not \in L_i} q_{i,k} \odot P_i \odot W_t^l \right)$ using local data $X_i$  \label{line:local_update}
            \STATE {\bf Output:} $W_{t,k+1}$
	\end{algorithmic}
\end{algorithm*}
     
\paragraph{Layer-wise aggregation.} Our Algorithm~\ref{alg:framework} distinctively deviates from existing methods in Line~\ref{line:send_weights_to_server} as each client forwards only a portion of information to the server, thus prompting an investigation into optimal aggregation techniques. 
In Algorithm~\ref{alg:aggregation} we evaluate three aggregation methodologies: 
\begin{itemize}
    \item \textit{Simple averaging} computes the mean of all client contributions that include a specific layer $l$. 
    This option is presented in Line~\ref{line:simple_avg}.
    \item \textit{Weighted averaging} adopts a weighting scheme based on the number of layers client $i$ is designated to train. 
    Specifically, the weight for aggregating $W_{t, K, i}^l$ from client $i$ is given by $|L_i| / \sum_{j=1}^n |L_j|$, analogous to importance sampling.
    This option is presented in Line~\ref{line:weighted_avg}
    \item \textit{Attention-based averaging} introduces an adaptive mechanism where an attention layer is learned specifically for layer-wise aggregation.
    This option is presented in Line~\ref{line:attention_avg}.
\end{itemize}

% \paragraph{Our framework is a generalization of Existing Works.} We stress that our proposed framework, as depicted in Algorithm~\ref{alg:framework}, is more versatile and can accommodate several existing works as special cases. For instance, if we assume $\mathcal{S}_i (W^t)\equiv W^t$, implying that every client trains the complete network for iteration $t$, we will revert to standard FL. If we assume $\mathcal{S}i(W^t)\equiv \emptyset$ and send the pruned network $p_{i, k}(W^t \setminus \mathcal{S}_i(W^t))$ for each client to the server, we recapture network-pruning FL methods such as \algname{FjORD}\cite{FjORD}, \algname{FedRolex}\cite{FedRolex}, and \algname{Flado}~\cite{Flado}. The differentiation between these methods lies in their unique ways of generating $p_{i, k}$. 

\begin{algorithm*}[!tb]
	\caption{\texttt{Aggregation}}
	\label{alg:aggregation}
	\begin{algorithmic}[1]
		\STATE {\bf Input:} $\cup_{i \in [n]} \cup_{l \in L_i} W_{t,K}^l$
            \STATE {\textit{Simple Averaging:}}
            \STATE \qquad $W_{t+1}^l \leftarrow \texttt{Avg}\left( W_{t, K, i}^l\right)$ for all nodes with $l\in L_i$  \label{line:simple_avg}
            \STATE {\textit{Weighted Averaging:}}
            \STATE \qquad Construct the aggregation weighting $\alpha_i$ for each client $i$  \label{line:weighted_avg}
            \STATE \qquad $W_{t+1}^l \leftarrow \texttt{Avg}\left(\alpha_i W_{t, K, i}^l\right)$ for all nodes with $l\in L_i$
            \STATE \textit{Attention Averaging:}
            \STATE \qquad Construct an attention mapping layer annoted by function $h$
            \STATE \qquad $W_{t+1}^l \leftarrow h\left(W_{t, K, i}^l\right)$ for all nodes with $l\in L_i$  \label{line:attention_avg}
            \STATE {\bf Output:} $W_{t+1}$
	\end{algorithmic}
\end{algorithm*}

%%%%%%%%%%%%%%%%%%%%
%%%%%%%%%%%%%%%%%%%%

%%%%%%%%%%%%%%%%%%%%
%%%%%%%%%%%%%%%%%%%%

\section{Theoretical Analysis}
% Our study relates to independent subnetwork training (IST) by integrating personalization and explicit layer-level sampling, a domain still in its early stages of understanding. For instance, \cite{liao2022convergence} offers theoretical insights into IST for overparameterized single hidden layer neural networks with ReLU activations. Concurrently, \cite{shulgin2023towards} re-examines IST, analyzing it through the prism of sketch-type compression. In this section, our goal is to provide a rigorous theoretical analysis of \algname{FedP3}, inspired by the sketch-type design, yet extending beyond it in scope and depth.
% Our works enhances independent subnetwork training (IST) with personalization and layer-level sampling, a relatively unexplored area, with related work emphasized in Appendix \ref{sec:subnetwork_training}. We build on sketch-based analysis insights from \cite{shulgin2023towards}. Our aim is to conduct a detailed theoretical analysis of \algname{FedP3}, taking inspiration from sketch-type designs but expanding their scope and depth.
Our work refines independent subnetwork training (IST) by adding personalization and layer-level sampling, areas yet to be fully explored (see Appendix \ref{sec:subnetwork_training} for related work). Drawing on the sketch-based analysis from \cite{shulgin2023towards}, we aim to thoroughly analyze \algname{FedP3}, enhancing the sketch-type design concept in both scope and depth.

Consider a global model denoted as $w \in \mathbb{R}^d$. In \cite{shulgin2023towards}, a sketch $\mathcal{C}_i^k \in \mathbb{R}^{d \times d}$ represents submodel computations by weights permutations. We extend this idea to a more general case encompassing both global pruning, denoted as $\mP\in \mbR^{d\times d}$, and personalized model aggregations, denoted as $\mS\in \mbR^{d\times d}$. Now we first present the formal definitions. 

\begin{definition}[Global Pruning Sketch $\mP$]\label{def:sketch1}
    Let a random subset $\mathcal{S}$ of $[d]$ is a proper sampling such that the probability $c_j \eqdef \mathrm{Prob}(j\in S) > 0$ for all $j\in [d]$. Then the biased diagonal sketch with $\mathcal{S}$ is $\mP \eqdef \Diag(p^1_s, p^2_s, \cdots, p^d_s)$, where $p^j_s = 1$ if $j\in S$ otherwise $0$. 
\end{definition}

% Definition~\ref{def:sketch1} simulates the global pruning in which a subset of elements are selected from the whole model. Different from \cite{shulgin2023towards}, now we assume each client has independent sampling and different clients are allowed to have the overlapped weights. For simplicity, we assume that all layers are pruned from the server to the client in this context. This assumption creates a more challenging scenario compared to the actual deployment of \algname{FedP3}, as described in Algorithm~\ref{alg:framework}, where only the layers that are not communicated undergo global pruning. We provide the convergence analysis of using the global pruning sketch in Appendix~\ref{sec:theory_global_pruning}.
Unlike \cite{shulgin2023towards}, we assume client-specific sampling with potential weight overlap. For simplicity, we consider all layers pruned from the server to the client, a more challenging case than the partial pruning in \algname{FedP3} (Algorithm~\ref{alg:framework}). The convergence analysis of this global pruning sketch is in Appendix~\ref{sec:theory_global_pruning}.

\begin{definition}[Personalized Model Aggregation Sketch $\mS$]\label{def:sketch2}
    Assume $d\geq n$, $d=sn$, where $s\geq 1$ is an integer. Let $\pi = (\pi_1, \cdots, \pi_d)$ be a random permutation of the set $[d]$. The number of parameters per layer $n_l$, assume $s$ can be divided by $n_l$. Then, for all $x\in \Rd$ and each $i\in [n]$, we define $\mS$ as $\mS\eqdef n\sum^{si}_{j = s(i-1)+1}e_{\pi_j} e^\top_{\pi_j}$. 
\end{definition}
  
% The concept behind sketch $\mS$ draws inspiration from the permutation compressor technique discussed in \cite{szlendak2021permutation}. Extending this idea to scenarios where $d$ is not divisible by $n$ follows a similar approach as outlined in \cite{szlendak2021permutation}. To streamline our analysis, we adopt a uniform parameter count, $n_l$, across all layers. This approach does not negate the heterogeneity of the layers, which may vary in parameter count. For layers with fewer than $d_L$ parameters, zero-padding is applied to maintain the integrity of the multiplication operations. This means that our assumption of uniform parameter distribution per layer does not detract from the generality of our findings and helps to simplify the discussion.
Sketch $\mS$ is based on the permutation compressor technique from \cite{szlendak2021permutation}. Extending this idea to scenarios where $d$ is not divisible by $n$ follows a similar approach as outlined in \cite{szlendak2021permutation}. To facilitate analysis, we apply a uniform parameter count $n_l$ across layers, preserving layer heterogeneity. For layers with fewer parameters than $d_L$, zero-padding ensures operational consistency. This uniform distribution assumption maintains our findings' generality and simplifies the discussion. 
% We assume that $s$ is divisible by $d_l$ to facilitate the selection of layers rather than individual elements. The variable $v$ represents the number of layers selected per client. This setup paves the way for a more analysis-friendly structure of \algname{FedP3}, as presented in Algorithm~\ref{alg:IST} in the Appendix. 
Our method assumes $s$ divides $d_l$, streamlining layer selection over individual elements. The variable $v$ denotes the number of layers chosen per client, shaping a more analytically conducive framework for \algname{FedP3}, detailed in Algorithm~\ref{alg:IST} in the Appendix.

\begin{restatable}[Personalized Model Aggregation]{theorem}{modelaggregationtheorem}\label{thm:model_aggregation}
% \begin{theorem}[Personalized Model Aggregation]{theorem}{modelaggregationtheorem}\label{thm:model_aggregation}
    Let Assumption~\ref{asm:smoothness} holds. Iterations $K$, choose stepsize $\gamma \leq \left\{ \nicefrac{1}{L_{\max}}, \nicefrac{1}{\sqrt{\hat{L}L_{\max} K}}\right\} $. Denote $\Delta_0 \eqdef f(w^0) - f^{\inf}$. Then for any $K\geq 1$, the iterates ${w^k}$ of \algname{FedP3} in Algorithm~\ref{alg:IST} satisfy
    \begin{align}\label{eqn:thm_model_aggregation}
        \min_{0\leq k\leq K -1}\ec{\sqN{\nabla f(w^k)}} \leq \frac{2(1 + \bar{L}L_{\max}\gamma^2)^K}{\gamma K}\Delta_0.
    \end{align}
% \end{theorem}
\end{restatable}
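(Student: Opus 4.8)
\textbf{Proof proposal for Theorem~\ref{thm:model_aggregation}.}

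The plan is to treat \algname{FedP3} (as instantiated with the personalized model aggregation sketch $\mS$ of Definition~\ref{def:sketch2}) as a biased-compressed variant of distributed SGD and to run a standard nonconvex descent-lemma argument, tracking the one-step progress in $f$ under $L$-smoothness. First I would write the algorithm's update in the form $w^{k+1} = w^k - \gamma \vg^k$, where $\vg^k$ is the aggregated sketched gradient estimator built from the per-client permutation sketches; by the permutation-compressor property inherited from \citet{szlendak2021permutation}, the average of the $\mS_i$ over clients is the identity in expectation, so $\ec{\vg^k \mid w^k}$ relates to $\nabla f(w^k)$ in a controlled way, and the second moment $\ec{\sqN{\vg^k} \mid w^k}$ is bounded by a term of the form $\bar L L_{\max}$ times a divergence-type quantity plus $\sqN{\nabla f(w^k)}$. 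The key structural facts I would invoke are: Assumption~\ref{asm:smoothness} (per-client and average smoothness, giving the constants $L_{\max}$, $\hat L$, $\bar L$), the descent lemma $f(w^{k+1}) \le f(w^k) - \gamma\lin{\nabla f(w^k), \vg^k} + \frac{L_{\max}\gamma^2}{2}\sqN{\vg^k}$, and the sketch identities for $\mS$.

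Next I would take conditional expectation of the descent inequality, use unbiasedness of the aggregated sketch to turn the inner-product term into $-\gamma\sqN{\nabla f(w^k)}$ (up to lower-order corrections), and absorb the second-moment term. The novelty relative to a plain one-step bound is the factor $(1+\bar L L_{\max}\gamma^2)^K$ on the right-hand side of \eqref{eqn:thm_model_aggregation}: this signals that the recursion is not a clean contraction but of the form $\ec{\Phi^{k+1}} \le (1+\bar L L_{\max}\gamma^2)\ec{\Phi^k} - \gamma\,\ec{\sqN{\nabla f(w^k)}}$ for a suitable potential $\Phi^k$ (likely $\Phi^k = f(w^k) - f^{\inf}$ itself, or $f(w^k)-f^{\inf}$ plus a variance/divergence term that the sketch introduces). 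I would then unroll this recursion over $k=0,\dots,K-1$: multiplying through by the geometric factor and summing telescopically yields $\gamma\sum_{k=0}^{K-1}\ec{\sqN{\nabla f(w^k)}} \le (1+\bar L L_{\max}\gamma^2)^K \Delta_0$, and dividing by $\gamma K$ and bounding the minimum by the average gives the claimed bound. The stepsize restriction $\gamma \le \min\{1/L_{\max}, 1/\sqrt{\hat L L_{\max} K}\}$ is exactly what is needed to keep $\bar L L_{\max}\gamma^2 \le 1/K$-ish, so that $(1+\bar L L_{\max}\gamma^2)^K = \mathcal{O}(1)$ and the bound is genuinely $\mathcal{O}(\Delta_0/(\gamma K))$.

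The main obstacle I anticipate is getting the second-moment bound on the aggregated sketched gradient sharp enough to produce precisely the constants $\hat L$, $\bar L$, $L_{\max}$ appearing in the statement, rather than a cruder $L_{\max}^2$-type bound. This requires exploiting the layer-wise structure of $\mS$ — that each client selects $v$ whole layers, that the permutation is coupled across clients so the aggregate is exactly unbiased, and that the per-layer parameter count is uniform by the zero-padding convention — to separate the "across-client variance" (controlled by $\hat L$ or $\bar L$) from the "full-gradient" term (controlled by $L_{\max}$). I would handle this by first proving a lemma, analogous to the variance computation for permutation compressors in \citet{szlendak2021permutation} and the sketch analysis of \citet{shulgin2023towards}, that says $\ec{\sqN{\frac1n\sum_i \mS_i \nabla f_i(w)}} \le \bar L L_{\max}\cdot(\text{something}) + \sqN{\nabla f(w)}$, and then feed it into the descent recursion. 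A secondary technical point is correctly defining the potential $\Phi^k$ so that the cross terms between the gradient-norm decrease and the sketch-induced error close up; if $\Phi^k = f(w^k)-f^{\inf}$ suffices (which the clean form of \eqref{eqn:thm_model_aggregation} suggests), this is straightforward, but if an auxiliary term is needed I would need to check it vanishes appropriately at $k=0$ under the algorithm's initialization.
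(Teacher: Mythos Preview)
Your overall architecture matches the paper's: write $w^{k+1}=w^k-\gamma g^k$ with $g^k=\frac{1}{n}\sum_i \mS_i^k\nabla f_i(w^k)$, use unbiasedness of the permutation sketch, apply the descent lemma (with the $\bar L$-smoothness of $f$, not $L_{\max}$), bound the second moment of $g^k$, and unroll the resulting recursion $\ec{\Phi^{k+1}}\le(1+\bar L L_{\max}\gamma^2)\Phi^k-\gamma\sqN{\nabla f(w^k)}$ with $\Phi^k=f(w^k)-f^{\inf}$. Your telescoping and the role of the stepsize condition are exactly right.

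The gap is in the second-moment step, which you correctly flag as the main obstacle but mis-diagnose. You expect a bound of the shape ``$\bar L L_{\max}\cdot(\text{divergence})+\sqN{\nabla f(w)}$'' and worry about coupling the layer-wise sketch structure to separate an across-client variance from a full-gradient term. The paper's route is simpler and different: the permutation sketch $\mS$ gives the \emph{exact identity}
\[
\ec{\sqN{g^k}}=\frac{1}{n}\sum_{i=1}^n\sqN{\nabla f_i(w^k)},
\]
with no remainder to control. The missing ingredient is then the elementary smoothness consequence
\[
\sqN{\nabla f_i(w)}\le 2L_i\bigl(f_i(w)-f^{\inf}\bigr),
\]
which converts per-client gradient norms directly into function-value suboptimality and yields $\ec{\sqN{g^k}}\le 2L_{\max}(f(w^k)-f^{\inf})$. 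There is no $\sqN{\nabla f(w)}$ term and no divergence term; this is precisely the ABC assumption of \citet{khaled2020better} with $A=L_{\max}$, $B=C=0$, and the theorem is then a direct instantiation of their Theorem~2. So the potential $\Phi^k=f(w^k)-f^{\inf}$ suffices as-is, and the variance-computation lemma you propose is unnecessary.
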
    

We have achieved a total communication cost of $\mathcal{O}\left(\nicefrac{d}{\epsilon^2}\right)$, marking a significant improvement over unpruned methods. This enhancement is particularly crucial in FL for scalable deployments, especially with a large number of clients. Our approach demonstrates a reduction in communication costs by a factor of $\mathcal{O}\left(\nicefrac{n}{\epsilon}\right)$. In the deterministic setting of unpruned methods, we compute the exact gradient, in contrast to bounding the gradient as in Lemma~\ref{lem:l_smooth_bound}. Remarkably, by applying the smoothness-based bound condition (Lemma~\ref{lem:l_smooth_bound}) to both \algname{FedP3} and the unpruned method, we achieve a communication cost reduction by a factor of $\mathcal{O}(\nicefrac{d}{n})$ for free. This indicates that identifying a tighter upper gradient bound could potentially lead to even more substantial theoretical improvements in communication efficiency. A detailed analysis is available in Appendix~\ref{sec:theory_model_aggregation}. We have also presented an analysis of the locally differential-private variant of \algname{FedP3}, termed \algname{LDP-FedP3}, in Theorem~\ref{thm:convergence_dp_fedp3}.

\begin{restatable}[\algname{LDP-FedP3} Convergence]{theorem}{dpfedpconvergencetheorem}\label{thm:convergence_dp_fedp3}
    Under Assumptions~\ref{asm:smoothness} and \ref{asm:bounded_gradient}, with the use of Algorithm~\ref{alg:DP_FedP3}, consider the number of samples per client to be $m$ and the number of steps to be $K$. Let the local sampling probability be $q\equiv b/m$. For constants $c^\prime$ and $c$, and for any $\epsilon < c^\prime q^2K$ and $\delta\in (0, 1)$, \algname{LDP-FedP3} achieves $(\epsilon, \delta)$-LDP with $\sigma^2 = \frac{cKC^2\log(1/\epsilon)}{m^2\epsilon^2}$. 

    Set $K=\max\Big\{\frac{m\epsilon \sqrt{L \Delta_0}}{C\sqrt{cd\log(1/\delta)}}, \frac{m^2\epsilon^2}{cd\log(1/\delta)}\Big\}$ and $\gamma = \min\Big\{\frac{1}{L}, \frac{\sqrt{\Delta_0 cd \log(1/\delta)}}{C m\epsilon\sqrt{L}}\Big\}$, we have:
    $$
    \frac{1}{K}\sum_{k=0}^{K-1} \ec{\sqn{\nabla f(w^t)}} \leq \frac{2C\sqrt{Lcd\log(1/\sigma)}}{m\epsilon} = \mathcal{O}\left(\frac{C\sqrt{Ld\log(1/\delta)}}{m\epsilon} \right).
    $$
    Consequently, the total communication cost is:
    $$
    C_{\mathrm{LDP-FedP3}} = \mathcal{O}\left( \frac{m\epsilon \sqrt{dL \Delta_0}}{C\sqrt{\log(1/\delta)}} + \frac{m^2\epsilon^2}{\log(1/\delta)}\right).
    $$
    \end{restatable}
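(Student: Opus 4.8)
The plan is to split the proof into two essentially independent pieces: a \emph{privacy-accounting} argument that certifies the prescribed noise level $\sigma^2$, and a \emph{convergence} argument that is the usual nonconvex descent analysis for an unbiased, bounded-variance stochastic gradient perturbed by isotropic Gaussian noise. The stated formulas for $K$, $\gamma$, the gradient-norm bound, and the communication cost then drop out by balancing the two error sources subject to the feasibility constraints $\gamma\le 1/L$ and $\epsilon<c'q^2K$.

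For the privacy piece, I would first invoke Assumption~\ref{asm:bounded_gradient} ($\|\nabla f_{i,j}\|\le C$) to bound the $\ell_2$-sensitivity of a single client's per-step released update: replacing one of client $i$'s $m$ samples moves the (pruned) minibatch gradient by $O(C/b)$, so after the fixed rescaling used inside Algorithm~\ref{alg:DP_FedP3} the per-iteration mechanism is a subsampled Gaussian mechanism with sampling rate $q=b/m$ and noise standard deviation proportional to $\sigma m/C$. Composing the $K$ iterations through the moments-accountant-type composition bound for the subsampled Gaussian mechanism — valid precisely in the regime $\epsilon<c'q^2K$ — yields an $(\epsilon,\delta)$ guarantee as soon as $\sigma^2\ge \tfrac{cKC^2\log(1/\delta)}{m^2\epsilon^2}$ for a universal constant $c$; since the noise is injected locally before anything leaves the device, this is a \emph{local} $(\epsilon,\delta)$-DP guarantee in the sense of Definition~\ref{def:ldp}. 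The hard part will be this step: it requires importing the subsampled-Gaussian composition machinery, matching its constants to the normalization of Algorithm~\ref{alg:DP_FedP3}, and checking that per-client composition (rather than a single central accountant) is what the federated setting actually delivers.

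For the convergence piece, I would apply the $L$-smoothness descent inequality to $f$ along $w^{k+1}=w^k-\gamma g^k$, writing $g^k=\nabla f(w^k)+e^k+\xi^k$, where $e^k$ is the conditionally mean-zero sampling-and-pruning error with $\mathbb{E}\big[\|e^k\|^2\big]=O(C^2)$ by Assumption~\ref{asm:bounded_gradient} and Lemma~\ref{lem:l_smooth_bound}, and $\xi^k\sim\mathcal{N}(0,\sigma^2 I_d)$ so $\mathbb{E}\big[\|\xi^k\|^2\big]=d\sigma^2$. Using $\gamma\le 1/L$ to absorb the quadratic term, taking total expectation and telescoping over $k=0,\dots,K-1$ gives
\begin{equation*}
\frac{1}{K}\sum_{k=0}^{K-1}\mathbb{E}\big[\|\nabla f(w^k)\|^2\big]\;\le\;\frac{2\Delta_0}{\gamma K}+L\gamma\big(d\sigma^2+O(C^2)\big),
\end{equation*}
and substituting $\sigma^2=\tfrac{cKC^2\log(1/\delta)}{m^2\epsilon^2}$ leaves the two dominant terms $\tfrac{2\Delta_0}{\gamma K}$ and $L\gamma\,\tfrac{cKdC^2\log(1/\delta)}{m^2\epsilon^2}$.

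Finally, I would optimize over $\gamma$: setting the two terms equal gives $\gamma^\star\asymp\tfrac{m\epsilon}{CK}\sqrt{\tfrac{\Delta_0}{Lcd\log(1/\delta)}}$ and a common value of order $\tfrac{C}{m\epsilon}\sqrt{Lcd\Delta_0\log(1/\delta)}$, matching the claimed bound $\mathcal{O}\!\big(\tfrac{C\sqrt{Ld\log(1/\delta)}}{m\epsilon}\big)$. The requirement $\gamma^\star\le 1/L$ is equivalent to $K\ge \tfrac{m\epsilon\sqrt{L\Delta_0}}{C\sqrt{cd\log(1/\delta)}}$, while the privacy-validity constraint $\epsilon<c'q^2K$ forces $K$ above a second threshold of order $\tfrac{m^2\epsilon^2}{cd\log(1/\delta)}$; taking $K$ to be the maximum of the two thresholds (and $\gamma=\min\{1/L,\gamma^\star\}$ at that $K$) is the smallest admissible choice, which is exactly the one stated. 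The communication cost is then $K$ rounds, each of size $O(d)$ for the \algname{FedP3} exchange, so multiplying each threshold for $K$ by $d$ and simplifying yields $C_{\mathrm{LDP-FedP3}}=\mathcal{O}\!\big(\tfrac{m\epsilon\sqrt{dL\Delta_0}}{C\sqrt{\log(1/\delta)}}+\tfrac{m^2\epsilon^2}{\log(1/\delta)}\big)$.
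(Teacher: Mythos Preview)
Your plan coincides with the paper's proof: privacy via the subsampled-Gaussian moments accountant (imported from \algname{SoteriaFL}), the $L$-smoothness descent inequality applied to the aggregated update using the unbiasedness and second-moment properties of the sketches $\mS_i^k$ (Theorem~\ref{thm:unbiased_second_moment_aggregation_sketch}), telescoping, and then balancing $\tfrac{\Delta_0}{\gamma K}$ against the injected-noise term, with the communication cost obtained as $n\cdot\tfrac{d}{n}\cdot K=dK$.

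One slip to fix: the second lower bound $K\ge \tfrac{m^2\epsilon^2}{cd\log(1/\delta)}$ does \emph{not} come from the privacy-validity constraint $\epsilon<c'q^2K$ (that would only give $K\gtrsim \epsilon m^2/b^2$, a different quantity). In the paper it arises from requiring the privacy noise $d\sigma^2=\tfrac{cdKC^2\log(1/\delta)}{m^2\epsilon^2}$ to dominate the $O(C^2)$ minibatch-variance term, i.e.\ $1+\tfrac{cdK\log(1/\delta)}{m^2\epsilon^2}\le 2\,\tfrac{cdK\log(1/\delta)}{m^2\epsilon^2}$, which is exactly the step where you silently drop the $O(C^2)$ contribution when naming the ``two dominant terms.'' Also, the reference to Lemma~\ref{lem:l_smooth_bound} is not needed here; the paper bounds $\|\nabla f_i(w^k)\|^2\le C^2$ directly from Assumption~\ref{asm:bounded_gradient} and uses Lemma~\ref{lem:g_variance} for the minibatch variance. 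With these corrections your argument is complete.
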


% We prove the privacy guarranttee and the communication cost of our proposed \algname{LDP-FedP3}. Compared with the compressed gradient descent analysis in \cite{li2022soteriafl}, we matched their communication complexity and provide a much tight convergence bound by factor scale. 
% We provided a detailed analysis of this result and discuss the benefits over existing related works in the Appendix \ref{sec:theory_dp_aggregation}. 
We establish the privacy guarantee and communication cost of \algname{LDP-FedP3}. Our analysis aligns with the communication complexity in \cite{li2022soteriafl} while providing a more precise convergence bound. Further details and comparisons with existing work are discussed in Appendix \ref{sec:theory_dp_aggregation}.

\color{black}
\section{Experiments}
%%%%%%%%%%%%%%%%%%%%
%%%%%%%%%%%%%%%%%%%%
  
% Our experimental investigation thoroughly assesses the crucial traits of the proposed framework, which we initiated in line with established practices. 
  
%%%%%%%%%%%%%%%%%%%%
\subsection{Datasets and splitting techniques}
%%%%%%%%%%%%%%%%%%%%
% We utilize standard benchmark datasets, maintaining consistent train/test splits as observed in previous studies such as \cite{mcmahan2017communication} and \cite{li2020federated}. The datasets under examination are CIFAR10/100~\cite{CIFAR}, a segment of EMNIST labeled as EMNIST-L~\cite{EMNIST}, and FashionMNIST~\cite{FashionMNIST}. These serve as our primary datasets for assessing the efficacy of our proposed techniques. Notably, while CIFAR100 comprises 100 labels, all other datasets consist of 10 labels. The data distribution remains consistent at 70\% for training and 30\% for testing across all datasets. Further insights on train and test splits for EMNIST-L, FashionMNIST, CIFAR10, and CIFAR100 can be found in Table~\ref{tab:statistics} in the Appendix.

% To generate non-iid splits across these four datasets, we deploy two distinct strategies for segregating training samples based on classes and their allocation to clients, which are class-wise and Dirichlet non-iid with detailed in Appendix~\ref{sec:data_distributions}.

We utilize benchmark datasets CIFAR10/100~\cite{CIFAR}, a subset of EMNIST labeled EMNIST-L~\cite{EMNIST}, and FashionMNIST~\cite{FashionMNIST}, maintaining standard train/test splits as in \cite{mcmahan2017communication} and \cite{li2020federated}. While CIFAR100 has 100 labels, the others have 10, with a consistent data split of 70\% for training and 30\% for testing. Details on these splits are in Table~\ref{tab:statistics} in the Appendix. For non-iid splits in these datasets, we employ class-wise and Dirichlet non-iid strategies, detailed in Appendix~\ref{sec:data_distributions}.
   
\begin{figure}[!tb]
     \centering
     \begin{subfigure}[b]{0.48\textwidth}
         \centering
         \includegraphics[width=1.0\textwidth, trim=0 13 0 0, clip]{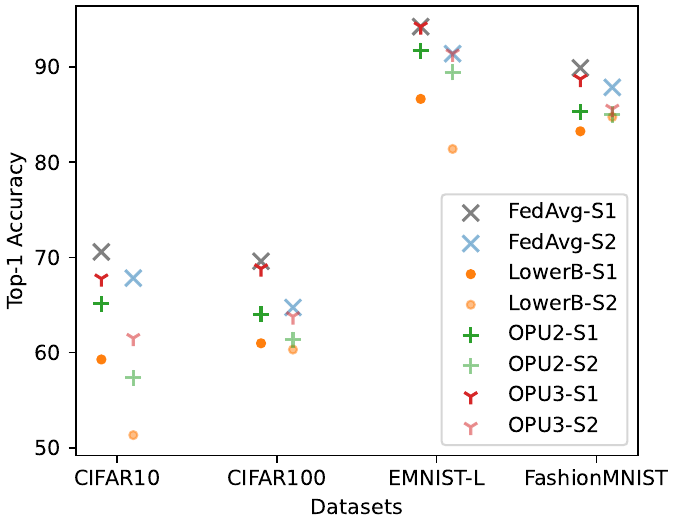}
         % \caption{$y=x$}
         % \label{fig:y equals x}
     \end{subfigure}
     \hfill
     \begin{subfigure}[b]{0.48\textwidth}
         \centering
         \includegraphics[width=1.0\textwidth, trim=0 13 0 0, clip]{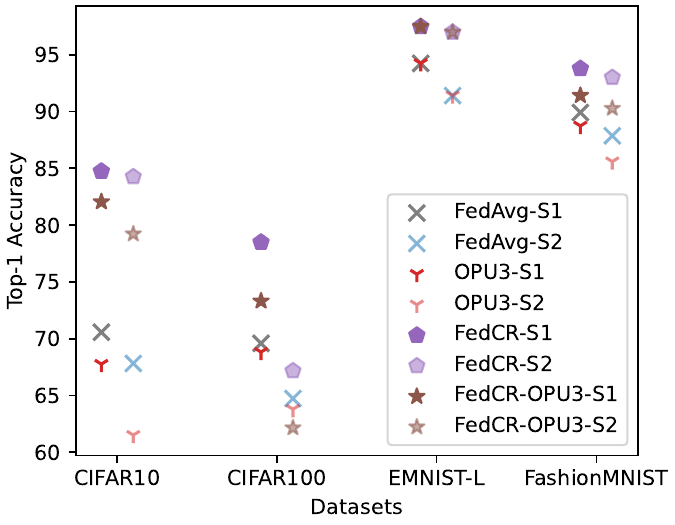}
         % \caption{$y=3\sin x$}
         % \label{fig:three sin x}
     \end{subfigure}
        \caption{ Comparative Analysis of Layer Overlap Strategies: The left figure presents a comparative study of different overlapping layer configurations across four major datasets. On the right, we extend this comparison to include the state-of-the-art personalized FL method, \algname{FedCR}. In this context, \texttt{S1} refers to a class-wise non-iid distribution, while \texttt{S2} indicates a Dirichlet non-iid distribution.}
        \label{fig:overlapping_main}
\end{figure}

\subsection{Optimal layer overlapping among clients}
\paragraph{Datasets and models specifications.} 
{ In this section, our objective is to develop a communication-efficient architecture that also preserves accuracy. We conducted extensive experiments on recognized datasets like CIFAR10/100 and FashionMNIST, using a neural network with two convolutional layers (denoted as \texttt{Conv}) and four fully-connected layers (\texttt{FC}). For EMNIST-L, our model includes four \texttt{FC} layers including the output layer. This approach simplifies the identification of optimal layer overlaps among clients. We provide the details of network architectures in Appendix~\ref{sec:network_architecture}.}

 \paragraph{Layer overlapping analysis.} {Figure~\ref{fig:overlapping_main} presents a comparison of different layer overlapping strategies. For Optional Pruning Uniformly with selection of 2 layers (\algname{OPU2})  represents the selection of two uniformly chosen layers from the entire network for training, while \algname{OPU3} involves 3 such layers. \algname{LowerB} denotes the scenario where only one layer's parameters are trained per client, serving as a potential lower bound benchmark. All clients participate in training the final \texttt{FC} layer (denoted as \texttt{FFC}). ``\texttt{S1}" and ``\texttt{S2}" signify class-wise and Dirichlet data distributions, respectively. For example, \texttt{FedAvg-S1} shows the performance of \algname{FedAvg} under a class-wise non-iid setting. Given that a few layers are randomly assigned for each client to train, we assess the communication cost on average. 
In CIFAR10/100 and FashionMNIST training, by design, 
we obtain a 20\% communication reduction for \algname{OPU3}, 40\% for \algname{OPU2}, and 60\% for \algname{LowerB}. }
{ Remarkably, \algname{OPU3} shows comparable performance to \algname{FedAvg}, with only 80\% of the parameters communicated. Computational results in the Appendix~\ref{sec:quantitative_parameters} (\Cref{fig:parameter_comp}) elucidate the outcomes of randomly sampling a single layer (\algname{LowerB}). Particularly in CIFAR10, clients training on \texttt{FC2+FFC} layers face communication costs more than 10,815 times higher than those training on \texttt{Conv1+FFC} layers, indicating significant model heterogeneity.}

{ Beyond validating \algname{FedAvg}, we compare with the state-of-the-art personalized FL method \algname{FedCR}~\cite{FedCR} (details in Appendix~\ref{sec:training_details}), as shown on the right of Figure~\ref{fig:overlapping_main}. Our method (\algname{FedCR-OPU3}), despite 20\% lower communication costs, achieves promising performance with only a 2.56\% drop on \texttt{S1} and a 3.20\% drop on \texttt{S2} across four datasets. Additionally, Figure~\ref{fig:overlapping_main} highlights the performance differences between the two non-iid data distribution strategies, \texttt{S1} and \texttt{S2}. The average performance gap across \algname{LowerB}, \algname{OPU2}, and \algname{OPU3} is 3.55\%. This minimal reduction in performance across all datasets underscores the robustness and stability of our \algname{FedP3} pruning strategy in diverse data distributions within FL.}

\paragraph{Larger network verifications.}
Our assessment extends beyond shallow networks to the more complex ResNet18 model~\cite{ResNet}, tested with CIFAR10 and CIFAR100 datasets. Figure \ref{fig:resnet18_arch} illustrates the ResNet18 architecture, composed of four blocks, each containing four layers with skip connections, plus an input and an output layer, totaling 18 layers. A key focus of our study is to evaluate the efficiency of training this heterogeneous model using only a partial set of its layers. We performed layer ablations in blocks 2 and 3 (\texttt{B2} and \texttt{B3}), as shown in Figure~\ref{tab:resnet18_results}. The notation \texttt{-B2-B3(full)} indicates complete random pruning of \texttt{B2} or \texttt{B3}, with the remaining structure sent to the server. \texttt{-B2(part)} refers to pruning the first or last two layers in \texttt{B2}. { We default the global pruning ratio from server to client at 0.9, implying that the locally deployed model is approximately 10\% smaller than the global model.} Results in Figure~\ref{tab:resnet18_results} demonstrate that dropping random layers from ResNet18 does not significantly impact performance, sometimes even enhancing it.
{ Compared with \texttt{Full}, \texttt{-B2(part)} and \texttt{-B3(part)} achieved a 6.25\% reduction in communication costs 
with only a 1.03\% average decrease in performance. Compared to the standard \algname{FedAvg} without pruning, this is a 16.63\% reduction, showcasing the efficiency of our \algname{FedP3} method. Remarkably, \texttt{-B3(part)} even surpassed the \texttt{Full} model in performance. Additionally, \texttt{-B2-B3(full)} resulted in a 12.5\% average reduction in communication costs (21.25\% less compared to unpruned \algname{FedAvg}), with just a 6.32\% performance drop on CIFAR10 and CIFAR100. These results demonstrate the potential of \algname{FedP3} for effective learning in LLMs.}    
     
\begin{table}[!tb]
\begin{minipage}[!t]{.46\linewidth}
    \centering
    \includegraphics[width=1.0\textwidth]{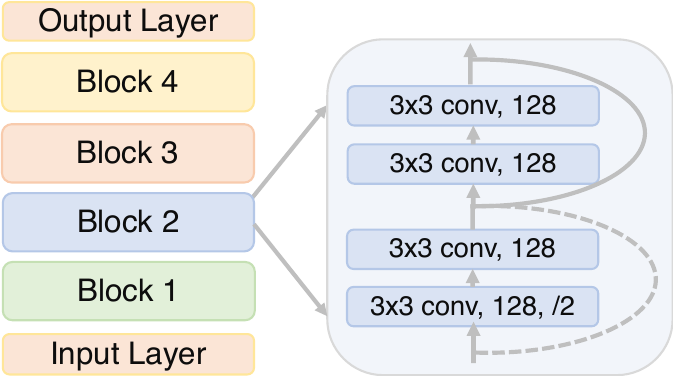}
    \captionof{figure}{ResNet18 architecture.}\label{fig:resnet18_arch}
\end{minipage}
\hfill
\begin{minipage}[!t]{.49\linewidth}
    \centering
    \captionof{table}{Performance of ResNet18 under class-wise non-iid conditions. { The global pruning ratio from server to client is maintained at 0.9 for all baseline comparisons by default.}}
    \label{tab:resnet18_results}
    \def\arraystretch{1.1}
    \begin{tabular}{l|c|c}
    \toprule
    \bf Method &{\bf CIFAR10} & {\bf CIFAR100}\\ \midrule
    Full & 73.25 & 63.33\\ \midrule
    -B2-B3 (full) & 65.68 & 58.26\\
    -B2 (part) & 72.09 & 61.11\\
    -B3 (part) & 73.47 & 62.39\\  
    \bottomrule
    \end{tabular}
\end{minipage}
\end{table}

\begin{figure}
     \centering
     \begin{subfigure}[b]{0.48\textwidth}
         \centering
         \includegraphics[width=1.0\textwidth, trim=0 0 0 0, clip]{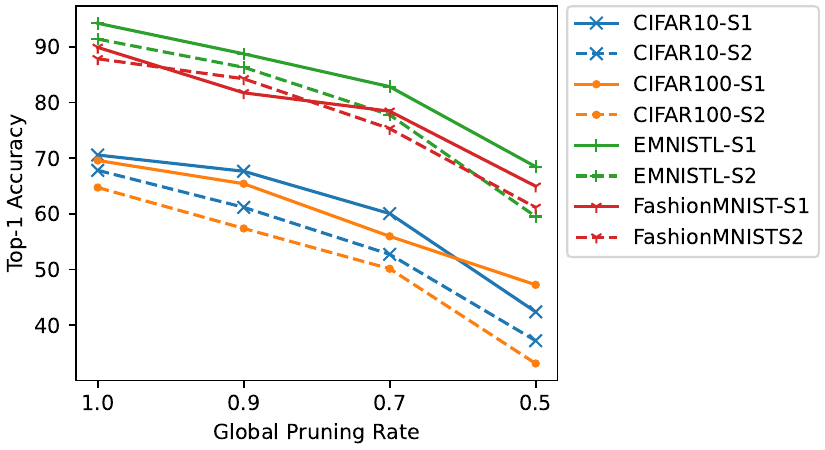}
         % \caption{$y=x$}
         % \label{fig:y equals x}
     \end{subfigure}
     \hfill  
     \begin{subfigure}[b]{0.48\textwidth}
         \centering
         \includegraphics[width=1.0\textwidth, trim=0 0 0 0, clip]{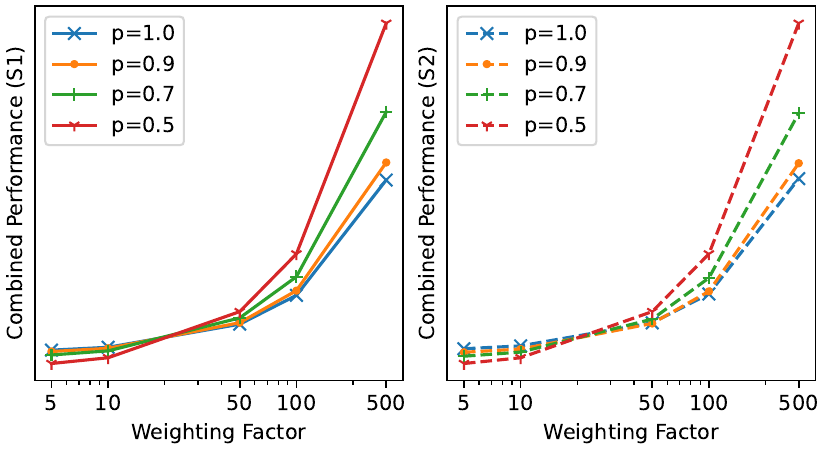}
         % \caption{$y=3\sin x$}
         % \label{fig:three sin x}
     \end{subfigure}
        \caption{ Comparative Analysis of Server to Client Global Pruning Strategies: The left portion displays Top-1 accuracy across four major datasets and two distinct non-IID distributions, varying with different global pruning rates. On the right, we quantitatively assess the trade-off between model size and accuracy.
        }
        \label{fig:global_pruning_main}
\end{figure}  
  
{
    \subsection{Key ablation studies}
    % \vspace{-1.5mm}
    Our framework, detailed in Algorithm \ref{alg:framework}, critically depends on the choice of pruning strategies. The \algname{FedP3} algorithm integrates both server-to-client global pruning and client-specific local pruning. Global pruning aims to minimize the size of the model deployed locally, while local pruning focuses on efficient training and enhanced robustness.
    
    \paragraph{Exploring server to client global pruning strategies}
    We investigate various global pruning ratios and their impacts, as shown in the left part of Figure~\ref{fig:global_pruning_main}. A global pruning rate of 0.9 implies the local model has 10\% fewer parameters than the global model. When comparing unpruned (rate 1.0) scenarios, we note an average performance drop of 5.32\% when reducing the rate to 0.9, 12.86\% to 0.7, and a significant 27.76\% to 0.5 across four major datasets and two data distributions. The performance decline is more pronounced at a 0.5 pruning ratio, indicating substantial compromises in performance for halving the model parameters.
    
    In the right part of Figure~\ref{fig:global_pruning_main}, we evaluate the trade-off between model size and accuracy. Assuming the total global model parameters as $N$ and accuracy as $\text{Acc}$, the global pruning ratio as $r$, we weigh the local model parameters against accuracy using a factor $\alpha \eqdef N/\text{Acc} > 0$, where the x-axis represents $\text{Acc}+\alpha / r$. A higher $\alpha$ indicates a focus on reducing parameter numbers for large global models, accepting some performance loss. This becomes increasingly advantageous with higher $\alpha$ values, suggesting a promising area for future exploration, especially with larger-scale models.
}

\paragraph{Exploring client-wise local pruning strategies}
% \vspace{-1.5mm}
Next, we are interested in exploring the influence of different local pruning strategies. Building upon our initial analysis, we investigate scenarios where our framework permits varying levels of local network pruning ratios. Noteworthy implementations in this domain resemble \algname{FjORD}~\citep{FjORD}, \algname{FedRolex}~\citep{FedRolex}, and \algname{Flado}~\citep{Flado}. Given that the only partially open-source code available is from \algname{FjORD}, we employ their layer-wise approach to network sparsity. The subsequent comparisons and their outcomes are presented in Table\ref{tab:abs_local_pruning}. The details of different pruning strategies, including \texttt{Fixed}, \texttt{Uniform} and \texttt{Ordered Dropout} are presented in the above Approach section. { "Fixed", "Uniform", "Ordered Dropout" represents \textit{Fixed without pruning}, \textit{Uniform pruning}, and \textit{Uniform order dropout} in the Approach section, respectively.} From the results in Table.~\ref{tab:abs_local_pruning}, we can see the difference between \texttt{Uniform} and \texttt{Ordered Dropout} strategies will be smaller with small global pruning ratio $p$ from 0.9 to 0.7. Besides, in our experiments, \texttt{Ordered Dropout} is no better than the simple \texttt{Uniform} strategy for local pruning.   

\begin{table*}[!tb]
    \centering
    % \scriptsize
    \caption{Comparison of different network local pruning strategies.  Global pruning ratio $p$ is 0.9.}
    \label{tab:abs_local_pruning}
    \begin{threeparttable}
    \def\arraystretch{1.15}
    \resizebox{0.98\textwidth}{!}{%
    \begin{tabular}{lcccc}
    \toprule
    \bf Strategies &{\bf CIFAR10} & {\bf CIFAR100} & {\bf EMNIST-L} & \bf {\bf FashionMNIST} \\ \midrule
        \algname{Fixed} & 67.65 / 61.17 & 65.41 / 57.38 & 88.75 / 86.33 & 81.75 / 84.27\\ \midrule
        \algname{Uniform} ($p=0.9$) & 65.51 / 60.10 & 64.33 / 58.20 & 85.14 / 84.29 & 78.81 / 77.24\\
        \algname{Ordered Dropout} ($p=0.9$) & 61.73 / 58.82 & 61.11 / 53.28 & 82.54 / 80.18 & 75.45 / 73.27\\ \midrule
        \algname{Uniform} ($p=0.7$) & 60.78 / 56.41 & 60.35 / 54.88 & 77.39 / 75.82 & 72.66 / 70.37\\
        \algname{Ordered Dropout} ($p=0.7$) & 58.90 / 53.38 & 59.72 / 50.03 & 72.19 / 70.30 & 70.21 / 67.58\\
        \bottomrule
    \end{tabular}}
    \end{threeparttable}
\end{table*}

% { 
\paragraph{Exploring adaptive model aggregation strategies} { In this section, we explore a range of weighting strategies, including both simple and advanced averaging methods, primarily focusing on the CIFAR10/100 datasets. We assign clients with $1-3$ layers (\texttt{OPU1-2-3}) or $2-3$ layers (\texttt{OPU2-3}) randomly. In Algorithm~\ref{alg:aggregation}, we implement two aggregation approaches: \texttt{simple} and \texttt{weighted} aggregation.}

\begin{figure}[h!]
    \centering
    \begin{minipage}[t]{0.50\textwidth}
        \raggedright
        \vspace{-65mm}
        \noindent Let $L^l$ denote the set of clients involved in training the $l$-th layer, where $l\in \mathcal{L}$. The server's received weights for layer $l$ from client $i$ are represented as $W_{t, K, i}^l$. The general form of model aggregation is thus defined as:
            $$
            W_{t+1}^l = \sum_{j=1}^{L^l} \alpha_i W_{t, K, i}^l.
            $$   
            If $\alpha_i$ is initialized as $\nicefrac{1}{|L^l|}$, this constitutes \texttt{simple} mean averaging. Considering $N_i$ as the total number of layers for client $i$ and $n$ as the total number of clients, if $\alpha_i = \nicefrac{N_i}{\sumjn N_j}$, this method is termed \texttt{weighted} averaging. 
    \end{minipage}
    \hfill
    \begin{minipage}[t]{0.45\textwidth}
        \centering
        \includegraphics[width=\textwidth]{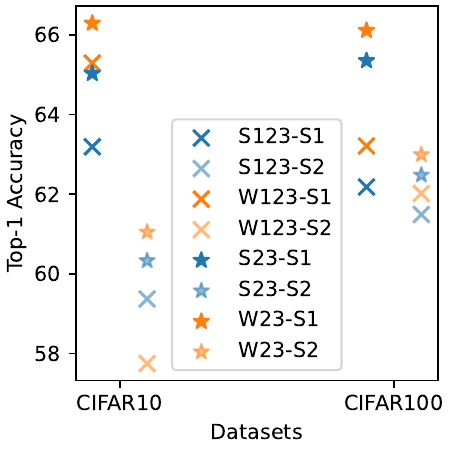}
        \caption{Comparison of various model aggregation strategies. $p=0.9$.}  % Main figure caption
        \label{fig:abs_aggregation}  % Main figure label
    \end{minipage}
\end{figure}

The underlying idea is that clients with more comprehensive network information should have greater weight in parameter contribution. A more flexible approach is \texttt{attention} averaging, where $\alpha_i$ is learnable, encompassing \texttt{simple} and \texttt{weighted} averaging as specific cases. 
{ Future research may delve into a broader range of aggregation strategies.      
Our findings, shown in \Cref{fig:abs_aggregation}, include \texttt{S123-S1} for the \texttt{OPU1-2-3} method with simple aggregation in class-wise non-iid distributions, and \texttt{W23-S2} for \texttt{OPU2-3} with weighted aggregation in Dirichlet non-iid. The data illustrates that \texttt{weighted} averaging relatively improves over \texttt{simple} averaging by 1.01\% on CIFAR10 and 1.05\% on CIFAR100. Furthermore, \texttt{OPU-2-3} consistently surpasses \texttt{OPU1-2-3} by 1.89\%, empirically validating our hypotheses.} 

% \section{Conclusion}
% In this paper, we introduce \algname{FedP3}, a nuanced solution designed to tackle both data and model heterogeneities while prioritizing privacy. We have precisely defined the concepts of personalization, privacy, and pruning as central to our analysis. The efficacy of each component is rigorously validated through comprehensive proofs and extensive experimental evaluations.
    
% Chapter 4 File

% \chapter{Cohort Squeeze: Beyond Single Communication Round per Cohort}
\chapter{Beyond Single Communication Round per Cohort}
\label{chapter_cohort_squeeze}
\thispagestyle{empty}

\section{Introduction}

In this paper, we focus on cross-device FL, which involves the coordination of millions of mobile devices by a central server for training purposes \citep{FL-big}. This setting is characterized by intermittent connectivity and limited resources. As a result, only a subset of client devices participates in each communication round. Typically, the server samples a batch of clients (referred to as a \emph{cohort} in FL), and each selected client trains the model received from the server using its local data. The server then aggregates the results sent by the selected cohort.

A key limitation of this approach is that client devices operate in a stateless regime, meaning they cannot store states between communication rounds. This restriction prevents the use of variance reduction techniques, which require memory across iterations.

To address this, we reformulate the cross-device objective by assuming a finite number of workers selected with uniform probability, as defined in \eqref{eq:ERM}. This reformulation better aligns with empirical observations and provides a clearer illustration of the underlying process.
The extension of the proposed theory to the expectation-based formulation is presented in \Cref{sec:exp-formulation}.

Current representative approaches in the cross-device setting include \algname{FedAvg} and \algname{FedProx}. In our work, we introduce a method by generalizing stochastic proximal point method with arbitray sampling and term as \algname{SPPM-AS}. This new method is inspired by the stochastic proximal point method (\algname{SPPM}), a technique notable for its ability to converge under arbitrarily large learning rates and its flexibility in incorporating various solvers to perform proximal steps. This adaptability makes \algname{SPPM} highly suitable for cross-device FL \citep{FedProx, yuan2022convergence, yuan2023sharper, SPPM, lin2024stochastic}.  
Additionally, we introduce support for an arbitrary cohort sampling strategy, accompanied by a theoretical analysis. We present novel strategies that include support for client clustering, which demonstrate both theoretical and practical improvements.

Another interesting parameter that allows for control is the number of local communications.
Two distinct types of communication, \emph{global} and \emph{local}, are considered.
A \emph{global} iteration is defined as a single round of communication between the server and all participating clients.
On the other hand, \emph{local} communication rounds are synchronizations that take place within a chosen cohort.
Additionally, we introduce the concept of total communication cost, which includes both local and global communication iterations, to measure the overall efficiency of the communication process.
The total communication cost naturally depends on several factors. These include the local algorithm used to calculate the prox, the global stepsize, and the sampling technique.
    
\begin{figure}[!tb]
	\centering
	\begin{subfigure}[b]{0.45\textwidth}
		\centering
		\includegraphics[width=\textwidth]{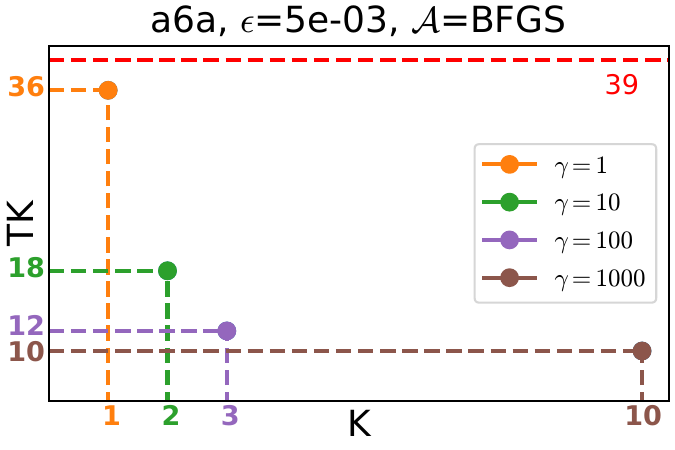}
	\end{subfigure}
	\hspace{5mm} % Ensure there's actually horizontal space needed here
	\begin{subfigure}[b]{0.45\textwidth}
		\centering
		\includegraphics[width=\textwidth]{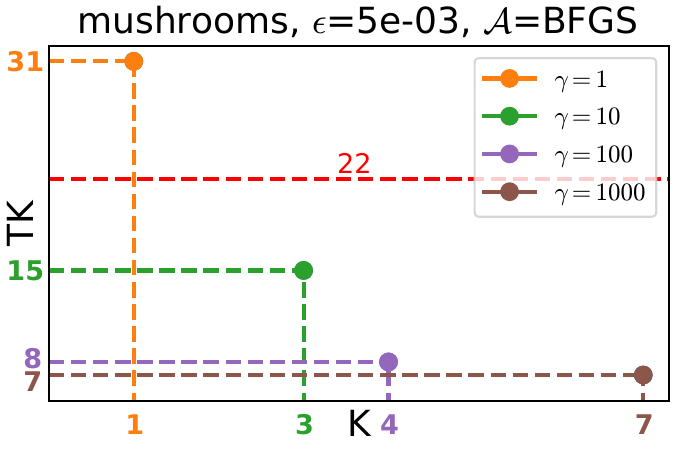}
	\end{subfigure}    
	\caption{The total communication cost (defined as $TK$) with the number of local communication rounds $K$ needed to reach the target accuracy $\epsilon$ for the chosen cohort in each global iteration. The dashed red line depicts the communication cost of the \texttt{FedAvg} algorithm. Markers indicate the $TK$ value for different learning rates $\gamma$ of our algorithm \texttt{SPPM-AS}.}
	\label{fig:tissue0}
\end{figure}

\subsection{Motivation}
  Previous results on cross-device settings consider only one local communication round for the selected cohort \citep{Li-local-bounded-grad-norms--ICLR2020, reddi2020adaptive, FedProx, wang2021novel, wang2021local, xu2021fedcm, malinovsky2023server, jhunjhunwala2023fedexp, FedSpeed, sun2024role}. 
 Our experimental findings reveal that \emph{increasing the number of local communication rounds within a chosen cohort per global iteration can indeed lower the total communication cost needed to reach a desired global accuracy level}, which we denote as $\eps$.
\Cref{fig:tissue0} illustrates the relationship between total communication costs and the number of local communication rounds.
Assume that the cost of communication per round is $1$ unit. $K$ represents the number of local communication rounds per global iteration for the selected cohort, while $T$ signifies the \emph{minimum} number of global iterations needed to achieve the accuracy threshold $\epsilon$. Then, the total cost incurred by our method can be expressed as $TK$.
For comparison, the dashed line in the figure shows the total cost for the \algname{FedAvg} algorithm, which always sets $K$ to $1$, directly equating the number of global iterations to total costs. Our results across various datasets identify the optimal $K$ for each learning rate to achieve $\epsilon$-accuracy.
 \Cref{fig:tissue0} shows that adding more local communication rounds within each global iteration can lead to a significant reduction in the overall communication cost. For example, when the learning rate is set to $1000$, the optimal cost is reached with $10$ local communication rounds, making $K=10$ a more efficient choice compared to a smaller number. On the other hand, at a lower learning rate of $100$, the optimal cost of $12$ is reached with $K=3$. This pattern indicates that as we increase the number of local communication rounds, the total cost can be reduced, and the optimal number of local communication rounds tends to increase with higher learning rates.

\subsection{Summary of contributions}
Our key \emph{contributions} are summarized as follows:
 
% \begin{itemize}
\noindent $\bullet$  We present and analyze \algname{SPPM-AS}, a novel approach within the stochastic proximal point method framework tailored for cross-device federated learning, which supports arbitrary sampling strategies. Additionally, we provide an analysis of standard sampling techniques and introduce new techniques based on clustering approaches. These novel techniques are theoretically analyzed, offering a thorough comparison between different methods.

\noindent $\bullet$  Our numerical experiments, conducted on both convex logistic regression models and non-convex neural networks, demonstrate that the introduced framework enables fine-tuning of parameters to surpass existing state-of-the-art cross-device algorithms. Most notably, we found that increasing the number of local communication rounds within the selected cohort is an effective strategy for reducing the overall communication costs necessary to achieve a specified target accuracy threshold.
	% \item We offer practical guidance on the proper selection of parameters for federated learning applications. Specifically, we examine the potential choices of solvers for proximal operations, considering both convex and non-convex optimization regimes. Our experiments compare first and second-order solvers to identify the most effective ones. Notably, our findings suggest that employing compressed methods for solving proximal operations can lead to better communication complexity outcomes.\kai{This should be verified.} This is achieved by significantly reducing the complexity of local communications.

\noindent $\bullet$  We offer practical guidance on the proper selection of parameters for federated learning applications. Specifically, we examine the potential choices of solvers for proximal operations, considering both convex and non-convex optimization regimes. Our experiments compare first-order and second-order solvers to identify the most effective ones.
   
 % Notably, our findings suggest that employing compressed methods for solving proximal operations can lead to better communication complexity outcomes. This is achieved by significantly reducing the complexity of local communications.
% \end{itemize}

\section{Related work}
\subsection{Cross-device federated learning}
In FL, two predominant settings are recognized: cross-silo and cross-device scenarios, as detailed in Table 1 of \citealp{FL-big}. The primary distinction lies in the nature of the clients: cross-silo FL typically involves various organizations holding substantial data, whereas cross-device FL engages a vast array of mobile or IoT devices.
In cross-device FL, the complexity is heightened by the inability to maintain a persistent hidden state for each client, unlike in cross-silo environments. This factor renders certain approaches impractical, particularly those reliant on stateful clients participating consistently across all rounds. Given the sheer volume of clients in cross-device FL, formulating and analyzing outcomes in an expectation form is more appropriate, but more complex than in finite-sum scenarios.
 
The pioneering and perhaps most renowned algorithm in cross-device FL is \algname{FedAvg} \citep{FedAvg} and implemented in applications like Google's mobile keyboard \citep{hard2018federated, yang2018applied, ramaswamy2019federated}. However, it is noteworthy that popular accelerated training algorithms such as \algname{Scaffold} \citep{SCAFFOLD} and \algname{ProxSkip} \citep{ProxSkip} are not aligned with our focus due to their reliance on memorizing the hidden state for each client, which is applicable for cross-device FL.
Our research pivots on a novel variant within the cross-device framework. Once the cohort are selected for each global communication round, these cohorts engage in what we term as `local communications' multiple times. The crux of our study is to investigate whether increasing the number of local communication rounds can effectively reduce the total communication cost to converge to a targeted accuracy.

\subsection{Stochastic proximal point method}
Our exploration in this paper centers on the Stochastic Proximal Point Method (\algname{SPPM}), a method extensively studied for its convergence properties. Initially termed as the incremental proximal point method by \cite{bertsekas2011incremental}, it was shown to converge nonasymptotically under the assumption of Lipschitz continuity for each $f_i$. Following this, \cite{RyuBoy:16} examined the convergence rates of \algname{SPPM}, noting its resilience to inaccuracies in learning rate settings, contrasting with the behavior of Stochastic Gradient Descent (\algname{SGD}).
Further developments in \algname{\algname{SPPM}}'s application were seen in the works of \cite{patrascu2018nonasymptotic}, who analyzed its effectiveness in constrained optimization, incorporating random projections. \cite{asi2019stochastic} expanded the scope of \algname{SPPM} by studying a generalized method, \algname{AProx}, providing insights into its stability and convergence rates under convex conditions. The research by \cite{asi2020minibatch} and \cite{chadha2022accelerated} further extended these findings, focusing on minibatching and convergence under interpolation in the \algname{AProx} framework.

In the realm of federated learning, particularly concerning non-convex optimization, \algname{SPPM} is also known as \algname{FedProx}, as discussed in works like those of \cite{FedProx} and \cite{yuan2022convergence}. However, it is noted that in non-convex scenarios, the performance of \algname{FedProx/SPPM} in terms of convergence rates does not surpass that of \algname{SGD}. Beyond federated learning, the versatility of \algname{SPPM} is evident in its application to matrix and tensor completion such as in the work of \cite{bumin2021efficient}. Moreover, \algname{SPPM} has been adapted for efficient implementation in a variety of optimization problems, as shown by \cite{shtoff2022efficient}.
While non-convex \algname{SPPM} analysis presents significant challenges, with a full understanding of its convex counterpart still unfolding, recent studies such as the one by \cite{SPPM} have reported enhanced convergence by leveraging second-order similarity. Diverging from this approach, our contribution is the development of an efficient minibatch \algname{SPPM} method \algname{SPPM-AS} that shows improved results without depending on such assumptions. Significantly, we also provide the first empirical evidence that increasing local communication rounds in finding the proximal point can lead to a reduction in total communication costs. 
   
\section{Method}\label{sec:method}
In this section, we explore efficient stochastic proximal point methods with arbitrary sampling for cross-device FL to optimize the objective \eqref{eq:ERM}. Throughout the paper, we denote $[n]\eqdef \cb{1,\dots,n}$. Our approach builds on the following assumptions.
  
\begin{assumption}\label{asm:differential}
    Function $f_{i}: \Rd \to \mbR$  is differentiable for all samples $i \in [n]$.
\end{assumption}

This implies that the function $f$ is differentiable. The order of differentiation and summation can be interchanged due to the additive property of the gradient operator.
\begin{align*}
    \nabla f(x) \stackrel{Eqn.~(\ref{eq:ERM})}{=} \nabla \left[\frac{1}{n}\sum_{i=1}^n f_{i}(x)\right] = \frac{1}{n}\sum_{i=1}^n\nabla f_{i}(x).
\end{align*}

\begin{assumption}\label{asm:strongly_convex}
    Function $f_i: \Rd \to \mbR$  is $\mu$-strongly convex for all samples $i\in \left[n\right]$, where $\mu > 0$. That is, 
    % \begin{align*}
    $
        f_{i}(y) + \ev{\nabla f_i(y), x - y} + \frac{\mu}{2}\sqn{x - y} \leq f_{i}(x),
    % \end{align*} %for all $x, y \in \Rd$.
    $ for all $x, y \in \Rd$.
\end{assumption}
This implies that $f$ is $\mu$-strongly convex and hence has a unique minimizer, which we denote by $x_\star$. We know that $\nabla f(x_\star) = 0$. Notably, we do \emph{not} assume $f$ to be $L$-smooth.

\subsection{Sampling distribution}
Let $\mathcal{S}$ be a probability distribution over the $2^n$ subsets of $[n]$. Given a random set $S\sim \mathcal{S}$, we define 
\begin{align*}
	p_i \eqdef \operatorname{Prob}(i \in {S}), \quad i \in [n].
\end{align*}

We restrict our attention to proper and nonvacuous random sets. 

\begin{assumption}\label{asm:valid_sampling}
	$\mathcal{S}$ is proper (i.e., $p_i > 0$ for all $i\in [n]$) and nonvacuous (i.e., $\operatorname{Prob}({S} = \emptyset) = 0$).
\end{assumption}

Let $C$ be the selected cohort. Given $\emptyset \neq C \subseteq[n]$ and $i \in[n]$, we define
\begin{align}\label{eqn:main_8001}
	v_i(C):= \begin{cases}\frac{1}{p_i} & i \in C \\ 0 & i \notin C\end{cases}\ \Rightarrow\ f_C(x):=\frac{1}{n} \sum_{i=1}^n v_i(C) f_i(x) {=} \sum_{i \in C} \frac{1}{n p_i} f_i(x) .
\end{align}

Note that $v_i(S)$ is a random variable and $f_S$ is a random function. By construction, $\mathrm{E}_{S \sim \mathcal{S}}\left[v_i(S)\right]=1$ for all $i \in[n]$, and hence
{
	\begin{align*}
		\mathrm{E}_{{S} \sim \mathcal{S}}\left[f_{{S}}(x)\right] =\mathrm{E}_{{S} \sim \mathcal{S}}\left[\frac{1}{n} \sum_{i=1}^n v_i(S) f_i(x)\right] =\frac{1}{n} \sum_{i=1}^n \mathrm{E}_{S \sim \mathcal{S}}\left[v_i(S)\right] f_i(x)=\frac{1}{n} \sum_{i=1}^n f_i(x)=f(x).
\end{align*}}

Therefore, the optimization problem in \Cref{eq:ERM} is equivalent to the stochastic optimization problem
\begin{align}\label{eqn:obj_exp}
	\min _{x \in \mathbb{R}^d}\left\{f(x):=\mathrm{E}_{S \sim \mathcal{S}}\left[f_S(x)\right]\right\} .
\end{align}

Further, if for each $C \subset[n]$ we let $p_C:=\operatorname{Prob}(S=C)$, then $f$ can be written in the equivalent form
{\small
	\begin{align}\label{eqn:8010}
		f(x)=\ec[S \sim \mathcal{S}]{f_S(x)}=\sum_{C \subseteq[n]} p_C f_C(x)=\sum_{C \subseteq[n], p_C>0} p_C f_C(x).
\end{align}}

\subsection{Core algorithm}\label{sec:algorithm}
\noindent
\begin{minipage}[t]{0.44\textwidth}
Applying \algname{SPPM}~\citep{SPPM} to \Cref{eqn:obj_exp}, we arrive at stochastic proximal point method with arbitrary sampling (\algname{SPPM-AS}, \Cref{alg:sppm_as}):
\[
x_{t+1}=\operatorname{prox}_{\gamma f_{S_t}}\left(x_t\right),
\]
where $S_t \sim \mathcal{S}$.
\end{minipage}%
\hfill
\begin{minipage}[t]{0.54\textwidth}
\vspace{-14mm}
\begin{algorithm}[H]
    \caption{Stochastic Proximal Point Method with Arbitrary Sampling (\algname{SPPM-AS})}\label{alg:sppm_as}
    \begin{algorithmic}
        \STATE \textbf{Input:} starting point $x^0\in \mathbb{R}^d$, distribution $\mathcal{S}$ over the subsets of $[n]$, learning rate $\gamma > 0$
        \FOR{$t = 0, 1, 2, \ldots$}
            \STATE Sample $S_t \sim \mathcal{S}$
            \STATE $x_{t+1} = \prox_{\gamma f_{S_t}}(x_t)$ 
        \ENDFOR
    \end{algorithmic}
\end{algorithm}
\end{minipage}
   
\begin{restatable}[Convergence of \algname{SPPM-AS}]{shadedtheorem}{maintheorem}\label{thm:sppm_as}
% \begin{theorem}[Convergence of \algname{SPPM-AS}]{shadedtheorem}{maintheorem}\label{thm:sppm_as}
	Let \Cref{asm:differential} (differentiability) and \Cref{asm:strongly_convex} (strong convexity) hold. Let $\mathcal{S}$ be a sampling satisfying \Cref{asm:valid_sampling}, and define
	{
		\begin{align}\label{eqn:main_8003}
			\mu_{\mathrm{AS}}:=\min _{C \subseteq[n], p_C>0} \sum_{i \in C} \frac{\mu_i}{n p_i}, \quad 			\sigma_{\star, \mathrm{AS}}^2 :=\sum_{C \subseteq[n], p_C>0} p_C\left\|\nabla f_C\left(x_{\star}\right)\right\|^2 .
	\end{align}}
	
	Let $x_0 \in \mathbb{R}^d$ be an arbitrary starting point. Then for any $t \geq 0$ and any $\gamma>0$, the iterates of \algname{SPPM-AS} (\Cref{alg:sppm_as}) satisfy
	{\small
		$$
		\mathrm{E}\left[\left\|x_t-x_{\star}\right\|^2\right] \leq\left(\frac{1}{1+\gamma \mu_{\mathrm{AS}}}\right)^{2t}\left\|x_0-x_{\star}\right\|^2+\frac{\gamma \sigma_{\star, \mathrm{AS}}^2}{\gamma \mu_{\mathrm{AS}}^2+2 \mu_{\mathrm{AS}}} .
		$$
	}
\end{restatable}
% \end{theorem}

\paragraph{Theorem interpretation.}
In the theorem presented above, there are two main terms: $\left(\nicefrac{1}{(1+\gamma \mu_{\mathrm{AS}})}\right)^{2t}$ and $\nicefrac{\gamma \sigma_{\star, \mathrm{AS}}^2}{(\gamma \mu_{\mathrm{AS}}^2+2 \mu_{\mathrm{AS}})}$, which define the convergence speed and neighborhood, respectively. Additionally, there are three hyperparameters to control the behavior: $\gamma$ (the global learning rate), $\mathrm{AS}$ (the sampling type), and $T$ (the number of global iterations). In the following paragraphs, we will explore special cases to provide a clear intuition of how the \algname{SPPM-AS} theory works.

\paragraph{Interpolation regime.} Consider the interpolation regime, characterized by $\sigma_{\star, \mathrm{AS}}^2 = 0$ . Since we can use arbitrarily large $\gamma > 0$, we obtain an arbitrarily fast convergence rate.
Indeed, $\left(\nicefrac{1}{(1+\gamma \mu_{\mathrm{AS}})} \right)^{2t}$ can be made arbitrarily small for any fixed $t \geq 1$, even $t=1$, by choosing $\gamma$ large enough. However, this is not surprising, since now $f$ and all functions $f_{\xi}$ share a single minimizer, $x_\star$, and hence it is possible to find it by sampling a small batch of functions even a single function $f_{\xi}$, and minimizing it, which is what the prox does, as long as $\gamma$ is large enough.

\paragraph{A single step travels far.} Observe that for $\gamma = \nicefrac{1}{\mu_{\mathrm{AS}}}$, we have $\nicefrac{\gamma \sigma^2_{\star, \mathrm{AS}}}{(\gamma \mu_{\mathrm{AS}}^2 + 2\mu_{\mathrm{AS}})} = \nicefrac{\sigma^2_{\star, \mathrm{AS}}}{3\mu_{\mathrm{AS}}^2}$. In fact, the convergence neighborhood $\nicefrac{\gamma \sigma^2_{\star, \mathrm{AS}}}{(\gamma \mu_{\mathrm{AS}}^2 + 2\mu_{\mathrm{AS}})}$ is bounded above by three times this quantity irrespective of the choice of the stepsize. Indeed, 
% {\small
% \begin{align*}
$
	\frac{\gamma \sigma^2_{\star, \mathrm{AS}}}{\gamma \mu_{\mathrm{AS}}^2 + 2\mu_{\mathrm{AS}}} \leq \min\left\{\frac{\sigma^2_{\star, \mathrm{AS}}}{\mu_{\mathrm{AS}}^2}, \frac{\gamma\sigma^2_{\star, \mathrm{AS}}}{\mu_{\mathrm{AS}}}\right\} \leq \frac{\sigma^2_{\star, \mathrm{AS}}}{\mu_{\mathrm{AS}}^2}.
% \end{align*}}
$
That means that no matter how far the starting point $x_0$ is from the optimal solution $x_\star$, if we choose the stepsize $\gamma$ to be large enough, then we can get a decent-quality solution after a single iteration of \algname{SPPM-AS} already! Indeed, if we choose $\gamma$ large enough so that
$
	\left(\nicefrac{1}{1+\gamma\mu_{\mathrm{AS}}} \right)^2 \sqn{x_0 - x_\star} \leq \delta, 
$
where $\delta > 0$ is chosen arbitrarily, then for $t=1$ we get 
$
	\ec{\sqn{x_1 - x_\star}} \leq \delta + \nicefrac{\sigma^2_{\star, \mathrm{AS}}}{\mu_{\mathrm{AS}}^2}.
$

\paragraph{Iteration complexity.} We have seen above that an accuracy arbitrarily close to (but not reaching) $\nicefrac{\sigma^2_{\star, \mathrm{AS}}}{\mu_{\mathrm{AS}}^2}$ can be achieved via a single step of the method, provided that the stepsize $\gamma$ is large enough. 
Assume now that we aim for $\epsilon$ accuracy, where $\epsilon \leq \nicefrac{\sigma^2_{\star, \mathrm{AS}}}{\mu_{\mathrm{AS}}^2}$. We can show that with the stepsize $\gamma= \nicefrac{\varepsilon\mu_{\mathrm{AS}}}{\sigma^2_{\star, \mathrm{AS}}}$, we get
$
\mathrm{E}\left[\left\|x_t-x_{\star}\right\|^2\right] \leq \varepsilon
$ 
provided that 
% {\small
% \begin{align*}
% 	&t \geq \left(\frac{\sigma^2_{\star, \mathrm{AS}}}{2 \varepsilon \mu_{\mathrm{AS}}^2}+\frac{1}{2}\right) \log \left(\frac{2\left\|x_0-x_{\star}\right\|^2}{\varepsilon}\right).
% \end{align*}}
$
t \geq \left(\frac{\sigma^2_{\star, \mathrm{AS}}}{2 \varepsilon \mu_{\mathrm{AS}}^2}+\frac{1}{2}\right) \log \left(\frac{2\left\|x_0-x_{\star}\right\|^2}{\varepsilon}\right).
$
% \begin{restatable}[Iteration Complexity]{shadedlemma}{lemma1}\label{lemma:iteration-complexity}
% % \begin{lemma}\label{lemma:iteration-complexity}
% 	Assume now that we aim for $\epsilon$ accuracy, where $\epsilon \leq \nicefrac{\sigma^2_{\star, \mathrm{AS}}}{\mu_{\mathrm{AS}}^2}$. We can show that with the stepsize $\gamma= \nicefrac{\varepsilon\mu_{\mathrm{AS}}}{\sigma^2_{\star, \mathrm{AS}}}$, we get
% $
% \mathrm{E}\left[\left\|x_t-x_{\star}\right\|^2\right] \leq \varepsilon
% $ 
% provided that 
% {\small
% \begin{align*}
% 	&t \geq \left(\frac{\sigma^2_{\star, \mathrm{AS}}}{2 \varepsilon \mu_{\mathrm{AS}}^2}+\frac{1}{2}\right) \log \left(\frac{2\left\|x_0-x_{\star}\right\|^2}{\varepsilon}\right) .
% \end{align*}}
% \end{restatable}
We provide the proof in \Cref{sec:proof_iteration_complexity}.
To ensure thoroughness, we present in \Cref{sec:fedavg-sppm} the lemma of the inexact formulation for \algname{SPPM-AS}, which offers greater practicality for empirical experimentation. Further insights are provided in the subsequent experimental section.

\paragraph{General framework.}
With freedom to choose arbitrary algorithms for solving the proximal operator one can see that \algname{SPPM-AS} is generalization for such renowned methods as \algname{FedProx} \citep{FedProx} and \algname{FedAvg} \citep{FedAvg2016}. A more particular overview of \algname{FedProx-SPPM-AS} is presented in further \Cref{sec:Avg-SPPM-baselines}.

\subsection{Arbitrary sampling examples}\label{sec:as}
Details on simple Full Sampling (FS) and Nonuniform Sampling (NS) are provided in \Cref{sec:samplings_table}. In this section, we focus more intently on the sampling strategies that are of particular interest to us.

\paragraph{Nice sampling (NICE).} Choose $\tau \in[n]$ and let $S$ be a random subset of $[n]$ of size $\tau$ chosen uniformly at random. Then $p_i=\nicefrac{\tau}{n}$ for all $i \in[n]$. Moreover, let $\binom{n}{\tau}$ represents the number of combinations of $n$ taken $\tau$ at a time, $p_C=\frac{1}{\binom{n}{\tau}}$ whenever $|C|=\tau$ and $p_C=0$ otherwise. So,
{\small
$$
\mu_{\mathrm{AS}}=\mu_{\mathrm{NICE}}(\tau):=\min _{C \subseteq[n], p_C>0} \sum_{i \in C} \frac{\mu_i}{n p_i}=\min _{C \subseteq[n],|C|=\tau} \frac{1}{\tau} \sum_{i \in C} \mu_i,
$$}

% and
\begin{align*}
    \sigma_{\star, \mathrm{AS}}^2&=\sigma_{\star, \mathrm{NICE}}^2(\tau):=\sum_{C \subseteq[n], p_C>0} p_C\left\|\nabla f_C\left(x_{\star}\right)\right\|^2\stackrel{Eqn.~(\ref{eqn:main_8001})}{=} \sum_{C \subseteq[n],|C|=\tau} \frac{1}{\binom{n}{\tau}}\left\|\frac{1}{\tau} \sum_{i \in C} \nabla f_i\left(x_{\star}\right)\right\|^2 .
\end{align*}

It can be shown that $\mu_{\mathrm{NICE}}(\tau)$ is a \emph{nondecreasing} function of $\tau$ (\Cref{sec:proof_nice}). So, as the minibatch size $\tau$ increases, the strong convexity constant $\mu_{\mathrm{NICE}}(\tau)$ can only improve. Since $\mu_{\mathrm{NICE}}(1)=\min _i \mu_i$ and $\mu_{\mathrm{NICE}}(n)=$ $\frac{1}{n} \sum_{i=1}^n \mu_i$, the value of $\mu_{\mathrm{NICE}}(\tau)$ interpolates these two extreme cases as $\tau$ varies between 1 and $n$. Conversely, $\sigma_{\star, \mathrm{NICE}}^2(\tau)=\frac{\nicefrac{n}{\tau}-1}{n-1}\sigma_{\star, \mathrm{NICE}}^2(1)$ is a nonincreasing function, reaching a value of $\sigma_{\star, \mathrm{NICE}}^2(n) = 0$, as explained in \Cref{sec:proof_nice}.

\paragraph{Block Sampling (BS).} Let $C_1, \ldots, C_b$ be a partition of $[n]$ into $b$ nonempty blocks. For each $i \in[n]$, let $B(i)$ indicate which block $i$ belongs to. In other words, $i \in C_j$ if $B(i)=j$. Let $S=C_j$ with probability $q_j>0$, where $\sum_j q_j=1$. Then $p_i=q_{B(i)}$, and hence \Cref{eqn:main_8003} takes on the form
\begin{align*}
    \mu_{\mathrm{AS}}=\mu_{\mathrm{BS}}:=\min _{j \in[b]} \frac{1}{n q_j} \sum_{i \in C_j} \mu_i,\quad
    \sigma_{\star, \mathrm{AS}}^2=\sigma_{\star, \mathrm{BS}}^2:=\sum_{j \in[b]} q_j\left\|\sum_{i \in C_j} \frac{1}{n p_i} \nabla f_i\left(x_{\star}\right)\right\|^2 .
\end{align*}

\emph{Considering two extreme cases:} If $b=1$, then \algname{SPPM-BS} = \algname{SPPM-FS} = \algname{PPM}. So, indeed, we recover the same rate as \algname{SPPM-FS}. If $b=n$, then \algname{SPPM-BS} = \algname{SPPM-NS}. So, indeed, we recover the same rate as \algname{SPPM-NS}. We provide the detailed analysis in \Cref{sec:appendix_extreme_bs}.

\paragraph{Stratified Sampling (SS).} Let $C_1, \ldots, C_b$ be a partition of $[n]$ into $b$ nonempty blocks, as before. For each $i \in[n]$, let $B(i)$ indicate which block does $i$ belong to. In other words, $i \in C_j$ iff $B(i)=j$. Now, for each $j \in[b]$ pick $\xi_j \in C_j$ uniformly at random, and define $S=\cup_{j \in[b]}\left\{\xi_j\right\}$. Clearly, $p_i=\frac{1}{\left|C_{B(i)}\right|}$. Let's denote $\mathbf{i}_b\eqdef (i_1, \cdots, i_b), \mathbf{C}_b \eqdef C_1\times\cdots \times C_b$. Then, \Cref{eqn:main_8003} take on the form
% $$
% \mu_{\mathrm{AS}}=\mu_{\mathrm{SS}}:=\min_{\mathbf{i}_b \in \mathbf{C}_b} \sum_{j=1}^b \frac{\mu_{i_j}\left|C_j\right|}{n}
% $$
% and
{\small
\begin{align*}
\mu_{\mathrm{AS}}=\mu_{\mathrm{SS}}:=\min_{\mathbf{i}_b \in \mathbf{C}_b} \sum_{j=1}^b \frac{\mu_{i_j}\left|C_j\right|}{n}, \quad
    \sigma_{\star, \mathrm{AS}}^2=\sigma_{\star, \mathrm{SS}}^2:=\sum_{\mathbf{i}_b \in \mathbf{C}_b}\left(\prod_{j=1}^b \frac{1}{\left|C_j\right|}\right)\left\|\sum_{j=1}^b \frac{\left|C_j\right|}{n} \nabla f_{i_j}\left(x_{\star}\right)\right\|^2.
\end{align*}
}
 
\begin{restatable}[Stratified Sampling Variance Bounds]{shadedlemma}{lemma3}\label{lem:vr_ss}
    % \begin{lemma}[Variance reduction due to stratified sampling]\label{lem:vr_ss}
        Consider the stratified sampling. For each $j \in[b]$, define
        \[
        \sigma_j^2:=\max _{i \in C_j}\left\|\nabla f_i\left(x_{\star}\right)-\frac{1}{\left|C_j\right|} \sum_{l \in C_j} \nabla f_l\left(x_{\star}\right)\right\|^2 .
        \]
    
        In words, $\sigma_j^2$ is the maximal squared distance of a gradient (at the optimum) from the mean of the gradients (at optimum) within cluster $C_j$. Then 
        % \begin{itemize}
        %     \item $\sigma_{\star, \mathrm{SS}}^2 \leq \frac{b}{n^2} \sum_{j=1}^b\left|C_j\right|^2 \sigma_j^2$,
        %     \item $\sigma_{\star, \mathrm{SS}}^2 \leq b \max \left\{\sigma_1^2, \ldots, \sigma_b^2\right\}$.
        % \end{itemize}
    \begin{align*}
        \sigma_{\star, \mathrm{SS}}^2 \leq \frac{b}{n^2} \sum_{j=1}^b\left|C_j\right|^2 \sigma_j^2 \leq b \max \left\{\sigma_1^2, \ldots, \sigma_b^2\right\}.
    \end{align*}
    \end{restatable}

\emph{Considering two extreme cases:} If $b=1$, then \algname{SPPM-SS} = \algname{SPPM-US}. So, indeed, we recover the same rate as \algname{SPPM-US}. If $b=n$, then \algname{SPPM-SS} = \algname{SPPM-FS}. So, indeed, we recover the same rate as \algname{SPPM-FS}. We provide the detailed analysis in \Cref{sec:appendix_extreme_ss}.

% In Lemma \ref{lem:vr_ss}, we demonstrate that stratified sampling might outperform block sampling due to reduced variance in some specific settings. Note that in the scenario of complete inter-cluster homogeneity, where $\sigma_j^2=0$ for all $j$, both bounds imply that $0 = \sigma^2_{\star, \mathrm{SS}} \leq \sigma^2_{\star, \mathrm{BS}}$. 
Note that Lemma~\ref{lem:vr_ss} provides insights into how the variance might be reduced through stratified sampling. For instance, in a scenario of complete inter-cluster homogeneity, where $\sigma_j^2 = 0$ for all $j$, both bounds imply that $0 = \sigma^2_{\star, \mathrm{SS}} \leq \sigma^2_{\star, \mathrm{BS}}.$
Thus, in this scenario, the convergence neighborhood of stratified sampling is better than that of block sampling.

\paragraph{Stratified sampling outperforms block sampling and nice sampling in convergence neighborhood.}

% With this assumption \ref{asm:uniform-clustering} we can conclude comparison of $\mathrm{SS}$ with $\mathrm{NICE}$ and $\mathrm{BS}$.
We theoretically compare stratified sampling with block sampling and nice sampling, advocating for stratified sampling as the superior method for future clustering experiments due to its optimal variance properties. We begin with the assumption of $b$ clusters of uniform size $b$ (\Cref{asm:uniform-clustering}), which simplifies the analysis by enabling comparisons of various sampling methods, all with the same sampling size, $b$: $b$-nice sampling, stratified sampling with $b$ clusters, and block sampling where all clusters are of uniform size $b$. Furthermore, we introduce the concept of optimal clustering for stratified sampling (noted as $\mathcal{C}_{b, \mathrm{SS}}$, \Cref{def:SS-clustering}) in response to a counterexample where block sampling and nice sampling achieve lower variance than stratified sampling (\Cref{example:SS_worse_than_BS_NICE}). Finally, we compare neighborhoods using the stated assumption.

% \begin{restatable}{shadedlemma}{lemma5}\label{lem:SS_vs_NICE}
\begin{lemma}\label{lem:SS_vs_NICE}
	Given \Cref{asm:uniform-clustering}, the following holds: $\sigma_{\star, \mathrm{SS}}^2\left(\mathcal{C}_{b, \mathrm{SS}}\right) \leq \sigma_{\star, \mathrm{NICE}}^2$ for arbitrary $b$. Moreover, the variance within the convergence neighborhood of stratified sampling is less than or equal to that of nice sampling: $\frac{\gamma \sigma^2_{\star, \mathrm{SS}}}{\gamma \mu^2_{\mathrm{SS}} + 2\mu_{\mathrm{SS}}}\left(\mathcal{C}_{b, \mathrm{SS}}\right) \leq \frac{\gamma \sigma^2_{\star, \mathrm{NICE}}}{\gamma \mu^2_{\mathrm{NICE}} + 2\mu_{\mathrm{NICE}}}.$
% \end{restatable}
\end{lemma}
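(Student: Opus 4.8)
The plan is to establish the two inequalities separately, with the variance comparison $\sigma_{\star,\mathrm{SS}}^2(\mathcal{C}_{b,\mathrm{SS}}) \leq \sigma_{\star,\mathrm{NICE}}^2$ doing most of the work, since the neighborhood comparison will follow by combining it with the already-established monotonicity facts about $\mu_{\mathrm{NICE}}$ and $\mu_{\mathrm{SS}}$. First I would write out both quantities under Assumption~\ref{asm:uniform-clustering} (so $n = b^2$, clusters of size $b$). Using the stratified-sampling formula from Section~\ref{sec:as}, $\sigma_{\star,\mathrm{SS}}^2 = \sum_{\mathbf{i}_b\in\mathbf{C}_b}\big(\prod_j \tfrac{1}{b}\big)\big\|\sum_j \tfrac{b}{n}\nabla f_{i_j}(x_\star)\big\|^2 = \tfrac{1}{b^2}\,\mathbb{E}\big\|\tfrac{1}{b}\sum_{j=1}^b g_{i_j}\big\|^2$ where $g_i \eqdef \nabla f_i(x_\star)$ and the $i_j$ are drawn independently and uniformly from the respective clusters $C_j$. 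For $b$-nice sampling, $\sigma_{\star,\mathrm{NICE}}^2(b) = \tfrac{1}{b^2}\,\mathbb{E}\big\|\tfrac{1}{b}\sum_{i\in S} g_i\big\|^2$ where $S$ is a uniformly random $b$-subset of $[n]$. So after factoring out $1/b^2$ the claim reduces to comparing $\mathbb{E}\big\|\tfrac{1}{b}\sum_j g_{i_j}\big\|^2$ (independent, one sample per cluster) against $\mathbb{E}\big\|\tfrac{1}{b}\sum_{i\in S} g_i\big\|^2$ (sampling $b$ indices without replacement from the whole pool).

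The key observation I would use is that both sampling schemes are \emph{unbiased for the full mean}: $\mathbb{E}[\tfrac{1}{b}\sum_j g_{i_j}] = \tfrac{1}{n}\sum_{i=1}^n g_i = \nabla f(x_\star) = 0$ (using strong convexity / optimality), and likewise for nice sampling. Hence each $\mathbb{E}\|\cdot\|^2$ is a pure variance, and I would expand via bias–variance: for the stratified estimator, independence across clusters gives $\mathbb{E}\big\|\tfrac{1}{b}\sum_j g_{i_j}\big\|^2 = \tfrac{1}{b^2}\sum_{j=1}^b \mathrm{Var}_{i_j\sim C_j}(g_{i_j})$ (cross terms vanish because each cluster's deviation from its cluster-mean has mean zero and the clusters' own means sum to zero — here I should be careful and track the cluster-mean terms precisely). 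For nice sampling, the standard sampling-without-replacement variance formula gives $\mathbb{E}\big\|\tfrac{1}{b}\sum_{i\in S}g_i\big\|^2 = \tfrac{1}{b}\cdot\tfrac{n-b}{n-1}\cdot\tfrac{1}{n}\sum_{i=1}^n\|g_i\|^2$ (again using that the overall mean is zero). The definition of the \emph{optimal} clustering $\mathcal{C}_{b,\mathrm{SS}}$ (Definition~\ref{def:SS-clustering}) is what rescues the inequality: it is the partition minimizing the stratified variance, equivalently the one making within-cluster gradients most homogeneous, so I would invoke that $\mathcal{C}_{b,\mathrm{SS}}$ achieves at least as small a value as a clustering that "mimics" nice sampling's statistics, and conclude $\sigma_{\star,\mathrm{SS}}^2(\mathcal{C}_{b,\mathrm{SS}}) \leq \sigma_{\star,\mathrm{NICE}}^2(b)$. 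An ANOVA-style total-variance decomposition — total spread equals within-cluster plus between-cluster — is the clean way to see that any clustering's stratified variance is bounded by the total variance times the finite-population correction, matching the nice-sampling expression.

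For the neighborhood inequality I would combine three ingredients: (i) the variance bound just proved, $\sigma_{\star,\mathrm{SS}}^2 \leq \sigma_{\star,\mathrm{NICE}}^2$; (ii) the fact that $\mu_{\mathrm{SS}} \geq \mu_{\mathrm{NICE}}(b)$ for the optimal clustering — this should follow from the same averaging logic as the $\mu_{\mathrm{NICE}}$ monotonicity argument referenced in Section~\ref{sec:as}, since $\mu_{\mathrm{SS}}$ is a minimum of averages of $b$ values (one per cluster) and optimal clustering can only help; and (iii) monotonicity of the scalar map $(\sigma^2,\mu)\mapsto \tfrac{\gamma\sigma^2}{\gamma\mu^2+2\mu}$, which is increasing in $\sigma^2$ and decreasing in $\mu$ for $\gamma,\mu>0$. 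Chaining these gives $\tfrac{\gamma\sigma_{\star,\mathrm{SS}}^2}{\gamma\mu_{\mathrm{SS}}^2+2\mu_{\mathrm{SS}}} \leq \tfrac{\gamma\sigma_{\star,\mathrm{NICE}}^2}{\gamma\mu_{\mathrm{NICE}}^2+2\mu_{\mathrm{NICE}}}$ as claimed. The main obstacle I anticipate is step (i): making the cross-term cancellation in the stratified variance rigorous requires carefully separating each $g_{i_j}$ into (cluster mean) plus (within-cluster deviation), checking the within-cluster deviations are mean-zero and independent across clusters, and verifying the cluster means aggregate correctly to the zero global mean — and then arguing, via the optimality of $\mathcal{C}_{b,\mathrm{SS}}$ and the total-variance identity, that the residual within-cluster term cannot exceed nice sampling's variance. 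The $\mu$-comparison in step (ii) is a secondary subtlety that hinges on whether the optimal SS clustering is defined purely by variance or jointly with strong-convexity constants; if only the former, a short additional argument (or an extra hypothesis) is needed to pin down $\mu_{\mathrm{SS}} \geq \mu_{\mathrm{NICE}}(b)$.
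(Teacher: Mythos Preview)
Your plan for the neighborhood inequality (step (iii), monotonicity of $(\sigma^2,\mu)\mapsto \gamma\sigma^2/(\gamma\mu^2+2\mu)$) is fine, and your step (ii) is actually easier than you fear: under Assumption~\ref{asm:uniform-clustering} every stratified sample $(i_1,\dots,i_b)\in C_1\times\cdots\times C_b$ consists of $b$ \emph{distinct} indices, hence is itself a $b$-subset of $[n]$. Thus the feasible set for the SS minimum is contained in the feasible set for the NICE minimum, and $\mu_{\mathrm{SS}}=\min_{\mathbf{i}_b\in\mathbf{C}_b}\tfrac{1}{b}\sum_j\mu_{i_j}\ge \min_{|C|=b}\tfrac{1}{b}\sum_{i\in C}\mu_i=\mu_{\mathrm{NICE}}(b)$ for \emph{every} clustering, optimal or not. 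No extra hypothesis is needed.

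The real gap is in step (i). Your final sentence of that paragraph asserts that ``any clustering's stratified variance is bounded by the total variance times the finite-population correction, matching the nice-sampling expression.'' This is false. Under Assumption~\ref{asm:uniform-clustering}, expanding as you propose gives $\sigma^2_{\star,\mathrm{SS}}(\mathcal{C}_b)=\tfrac{1}{b}\cdot\tfrac{1}{n}\sum_i\|g_i\|^2-\tfrac{1}{b^2}\sum_{j}\|\bar g_j\|^2$, while $\sigma^2_{\star,\mathrm{NICE}}(b)=\tfrac{1}{b+1}\cdot\tfrac{1}{n}\sum_i\|g_i\|^2$; if the clustering happens to have all cluster means $\bar g_j=0$, the SS variance equals $\tfrac{1}{b}\cdot\tfrac{1}{n}\sum_i\|g_i\|^2>\sigma^2_{\star,\mathrm{NICE}}(b)$. (This is exactly the phenomenon in Example~\ref{example:SS_worse_than_BS_NICE}, so you cannot hope to bound SS by NICE clustering-by-clustering.) The ANOVA decomposition only tells you that within-cluster variation is at most total variation, which gives the wrong constant $\tfrac{1}{b}$ rather than $\tfrac{1}{b+1}$. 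The paper instead uses a symmetry/averaging argument that bypasses all variance formulas: every $b$-subset of $[n]$ arises as a stratified sample for the same number of uniform clusterings, so averaging $\sigma^2_{\star,\mathrm{SS}}(\mathcal{C}_b)$ over \emph{all} clusterings into $b$ blocks of size $b$ returns exactly $\sigma^2_{\star,\mathrm{NICE}}(b)$, and then the minimum over clusterings---which is $\sigma^2_{\star,\mathrm{SS}}(\mathcal{C}_{b,\mathrm{SS}})$ by Definition~\ref{def:SS-clustering}---is at most this average. Your phrase about a clustering that ``mimics nice sampling's statistics'' is pointing in the right direction, but the concrete mechanism is this averaging identity, not an ANOVA bound.
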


\Cref{lem:SS_vs_NICE} demonstrates that, under specific conditions, the stratified sampling neighborhood is preferable to that of nice sampling. One might assume that, under the same assumptions, a similar assertion could be made for showing that block sampling is inferior to stratified sampling . However, this has only been verified for the simplified case where both the block size and the number of blocks are $b=2$, as detailed in  \Cref{sec:ss_vs_bs_nice}.

\section{Experiments}
\begin{table}[!b]
	\centering
	\caption{$KT(\epsilon, \mathcal{S}, \gamma, \mathcal{A}\left(K\right))$.}
	\label{tab:alg_comparison}
	\resizebox{0.9\textwidth}{!}{%
		\begin{threeparttable}
            \centering
			\begin{tabular}{c m{0.5\textwidth} m{0.25\textwidth} c}
				% \hline
                    \toprule
				HP & Control & $KT(\cdots)$ & Experiment\\ 
				\hline
                \addlinespace
				\multirow{2}{*}{$\gamma$} & $\gamma\uparrow$ & $KT\downarrow$, $\epsilon\uparrow\tnote{\color{blue}(a)} $ & \ref{sec:trade_off} \\  \cline{2-4} \addlinespace
				~ & optimal $\left(\gamma, K\right)\uparrow$ & $\downarrow$  & \ref{sec:exp1} \\ 
				\hline \addlinespace
				\multirow{3}{*}{$\mathcal{A}$} & $\mu$-convex + BFGS/CG & $\downarrow$ compared to \algname{LocalGD} & \ref{sec:exp1}\\  \cline{2-4} \addlinespace
				~ & NonCVX and Hierarchical FL + ADAM with tuned lr & $\downarrow$ compared to \algname{LocalGD} & \ref{sec:nn}\\  
				\bottomrule
                % \hline
			\end{tabular}
			\begin{tablenotes}
				\item  [{\color{blue}(a)}] $\epsilon$ is a convergence neigbourhood or accuracy. 
			\end{tablenotes}
	\end{threeparttable}}
\end{table}
  
\paragraph{Practical decision-making with \algname{SPPM-AS}.}
In our analysis of \algname{SPPM-AS}, guided by theoretical foundations of Theorem \ref{thm:sppm_as} and empirical evidence summarized in \Cref{tab:alg_comparison}, we explore practical decision-making for varying scenarios. This includes adjustments in hyperparameters within the framework $KT(\epsilon, \mathcal{S}, \gamma, \mathcal{A}\left(K\right))$. Here, $\epsilon$ represents accuracy goal, $\mathcal{S}$ represents the sampling distribution, $\gamma$ is representing global learning rate (proximal operator parameter), $\mathcal{A}$ denotes the proximal optimization algorithm, while $K$ denotes the number of local communication rounds. In table \ref{tab:alg_comparison} we summarize how changes on following hyperparameters will influence target metric. With increasing learning rate $\gamma$ one achieves faster convergence with smaller accuracy, also noted as accuracy-rate tradeoff. Our primary observation that with an increase in both the learning rate, $\gamma$, and the number of local steps, $K$, leads to an improvement in the convergence rate. Employing various local solvers for proximal operators also shows an improvement in the convergence rate compared to \algname{FedAvg} in both convex and non-convex cases. 

\paragraph{Objective and datasets.}
Our analysis begins with logistic regression with a convex $l_2$ regularizer, which can be represented as:
  
$$
f_i(x) \eqdef \frac{1}{n_i}\sum_{j=1}^{n_i} \log \left(1 + \exp(-b_{i, j}x^T a_{i, j})\right) + \frac{\mu}{2}\|x\|^2,
$$

where $\mu$ is the regularization parameter, $n_i$ denotes the total number of data points at client $i$, $a_{i, j}$ are the feature vectors, and $b_{i, j} \in \{-1, 1\}$ are the corresponding labels. Each function $f_i$ exhibits $\mu$-strong convexity and $L_i$-smoothness, with $L_i$ computed as $\frac{1}{4n_i}\sum_{j=1}^{n_i}\|a_{i,j}\|^2 + \mu$. For our experiments, we set $\mu$ to 0.1.

Our study utilized datasets from the LibSVM repository \citep{chang2011libsvm}, including \texttt{mushrooms}, \texttt{a6a}, \texttt{ijcnn1.bz2}, and \texttt{a9a}. We divided these into feature-wise heterogeneous non-iid splits for FL, detailed in \Cref{sec:data_generation}, with a default cohort size of $10$.
We primarily examined logistic regression, finding results consistent with our theoretical framework, as discussed extensively in \Cref{sec:exp1} through \Cref{sec:trade_off}. Additional neural network experiments are detailed in \Cref{sec:nn} and \Cref{sec:nn_additional}. 

\begin{figure*}[!tb]
    \centering
    \begin{subfigure}[b]{0.24\textwidth}
        \centering
        \includegraphics[width=\textwidth, trim=0 0 0 0, clip]{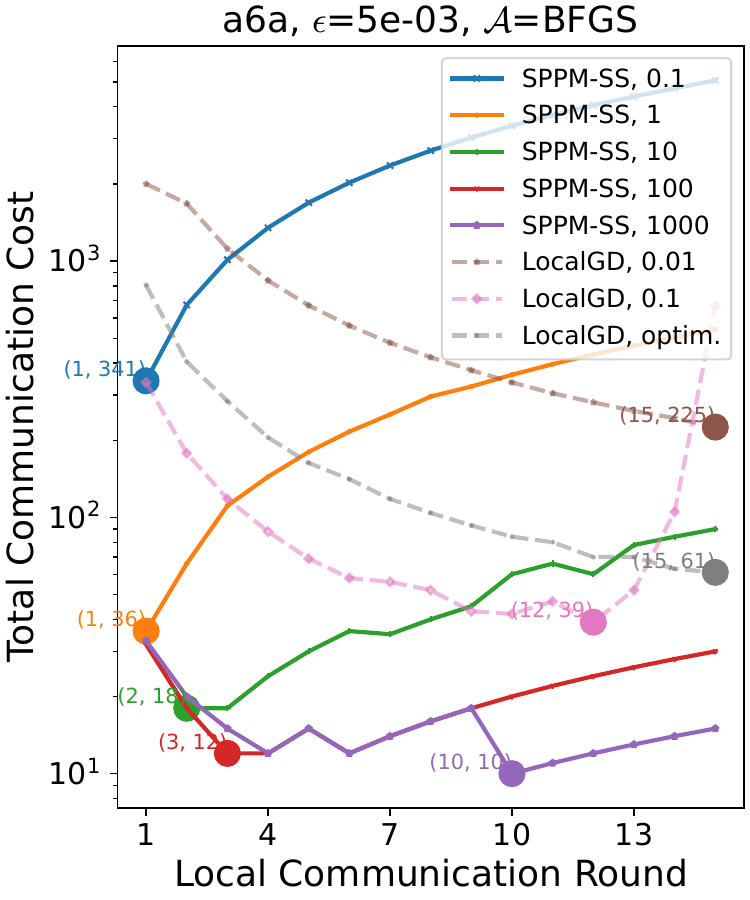}
        \caption{base}\label{fig:abs1_base}
    \end{subfigure}
    \hfill
    \begin{subfigure}[b]{0.24\textwidth}
        \centering
        \includegraphics[width=\textwidth, trim=0 0 0 0, clip]{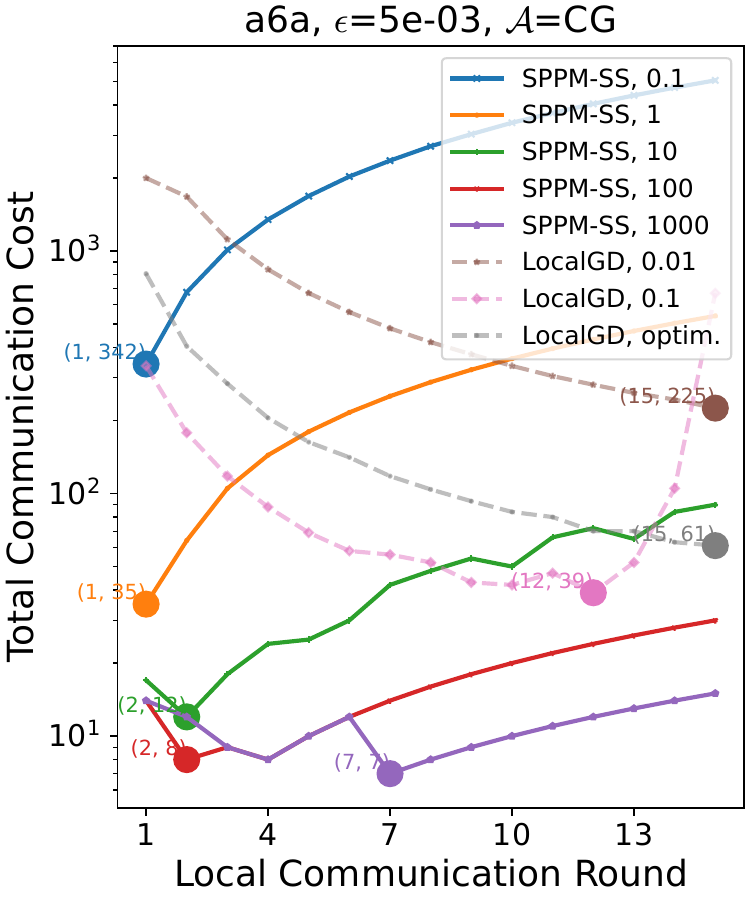}
        \caption{diff. prox solver}\label{fig:abs1_solver}
    \end{subfigure}
    \hfill
    \begin{subfigure}[b]{0.24\textwidth}
        \centering
        \includegraphics[width=\textwidth, trim=0 0 0 0, clip]{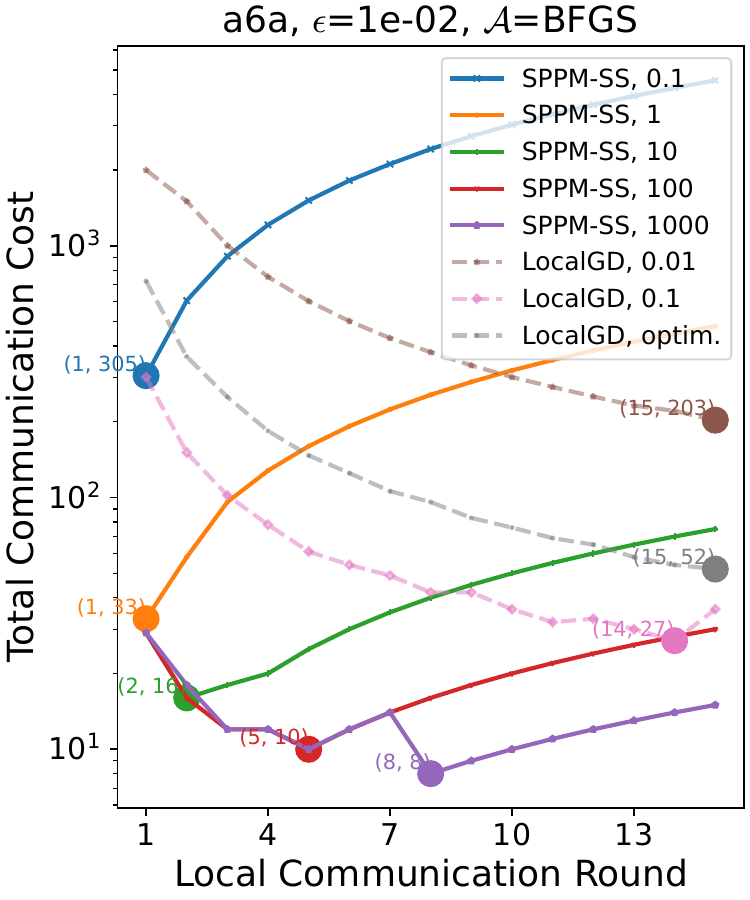}
        \caption{varying $\epsilon$}\label{fig:abs1_epsilon}
    \end{subfigure}
    \begin{subfigure}[b]{0.24\textwidth}
        \centering
        \includegraphics[width=\textwidth, trim=0 0 0 0, clip]{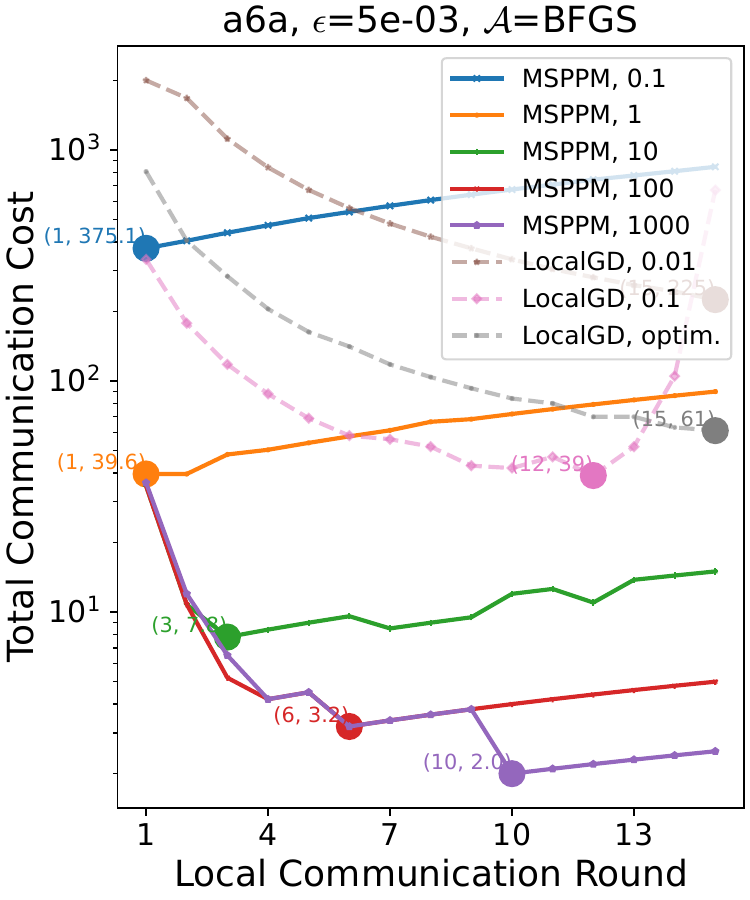}
        % \caption{hierarchical: $c_1=0.1$}\label{fig:abs1_hierarchical}
        \caption{hierarchical}\label{fig:abs1_hierarchical}
    \end{subfigure}
    \caption{Analysis of total communication costs against local communication rounds for computing the proximal operator. For \algname{LocalGD}, we align the x-axis to the total local iterations, highlighting the absence of local communication. The aim is to minimize total communication for achieving a predefined global accuracy $\epsilon$, where $\sqn{x_T - x_\star}<\epsilon$. The optimal step size and minibatch sampling setup for \algname{LocalGD} are denoted as \algname{LocalGD, optim}. This showcases a comparison across varying $\epsilon$ values and proximal operator solvers (\texttt{CG} and \texttt{BFGS}).}
    \label{fig:abs1}
\end{figure*}

\subsection{On choosing sampling strategy}
As shown in Section \ref{sec:as}, multiple sampling techniques exist. We propose using clustering approach in conjuction with \algname{SPPM-SS} as the default sampling strategy for all our experiments. The Stratified Sampling Optimal Clustering is impractical due to the difficulty in finding $x_\star$; therefore, we employ a clustering heuristic that aligns with the concept of creating homogeneous worker groups. One such method is \algname{K-means}, which we use by default. More details on our clustering approach can be found in the Appendix \refeq{sec:data_generation}.
We compare various sampling techniques in \Cref{fig:sampling-logreg}. Extensive ablations verified the efficiency of stratified sampling over other strategies, due to variance reduction (Lemma \ref{lem:vr_ss}).

\begin{figure*}[!tb]
    \centering
    % First minipage for the first two figures from Block 1
    \begin{minipage}{0.315\textwidth}
        \centering
        % Block 1, Figure 1
        \includegraphics[width=0.94\textwidth, trim=0 0 0 0, clip]{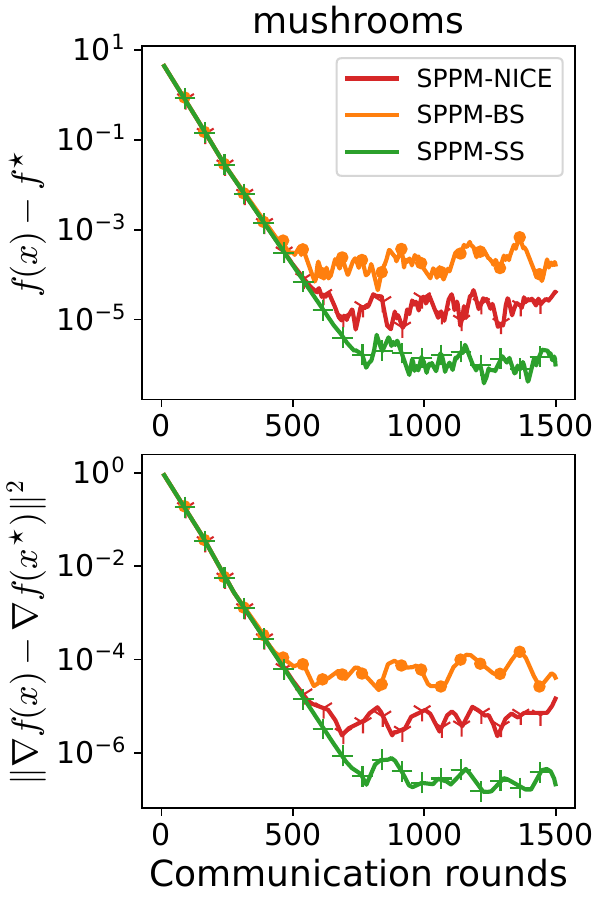}
        \hfill
        % Block 1, Figure 2
        % \includegraphics[width=0.47\textwidth, trim=0 0 0 0, clip]{img/BFGS_clusters_100_a6a_False_True_1e-06_1_0.1_minibatch_sppm_round_v2.pdf}
        % \caption{Comparison with \algname{SPPM-SS}, \algname{SPPM-BS} and \algname{SPPM-NICE} samplings. \algname{TODO}}
        \caption{Sampling method comparison.}
        \label{fig:sampling-logreg}
    \end{minipage}
    \hfill  
    % Second minipage for the first two figures from Block 2
    \begin{minipage}{0.665\textwidth}
        \centering
        % Block 2, Figure 1
        \includegraphics[width=0.47\textwidth, trim=0 0 0 0, clip]{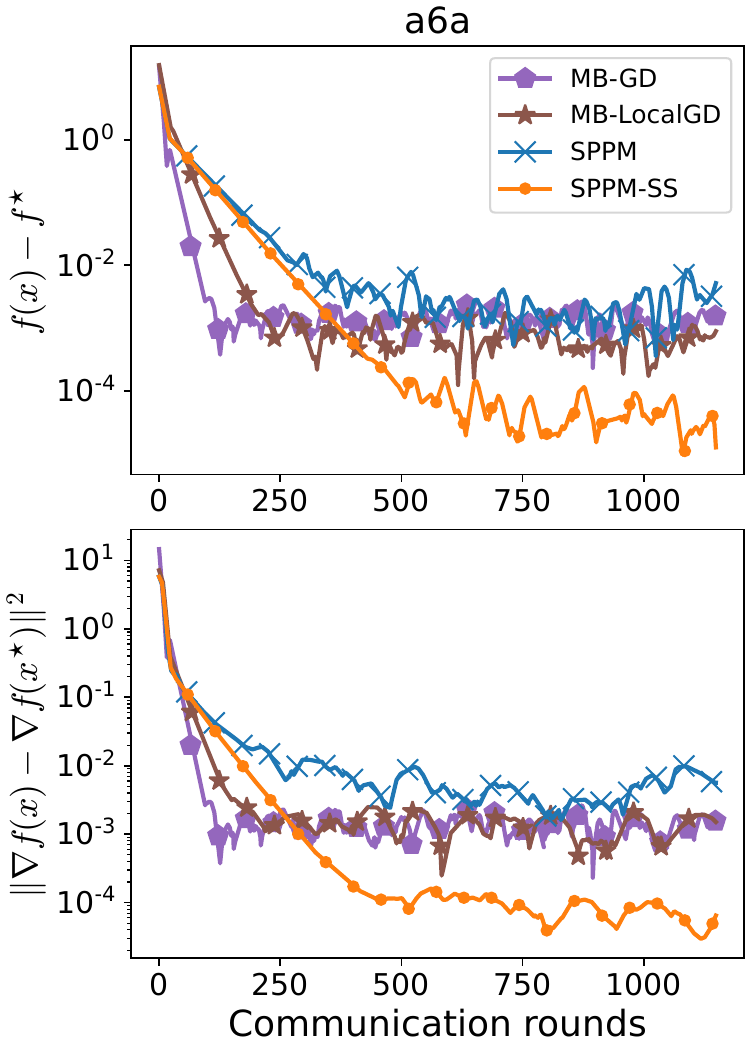}
        \hfill
        % Block 2, Figure 2
        \includegraphics[width=0.47\textwidth, trim=0 0 0 0, clip]{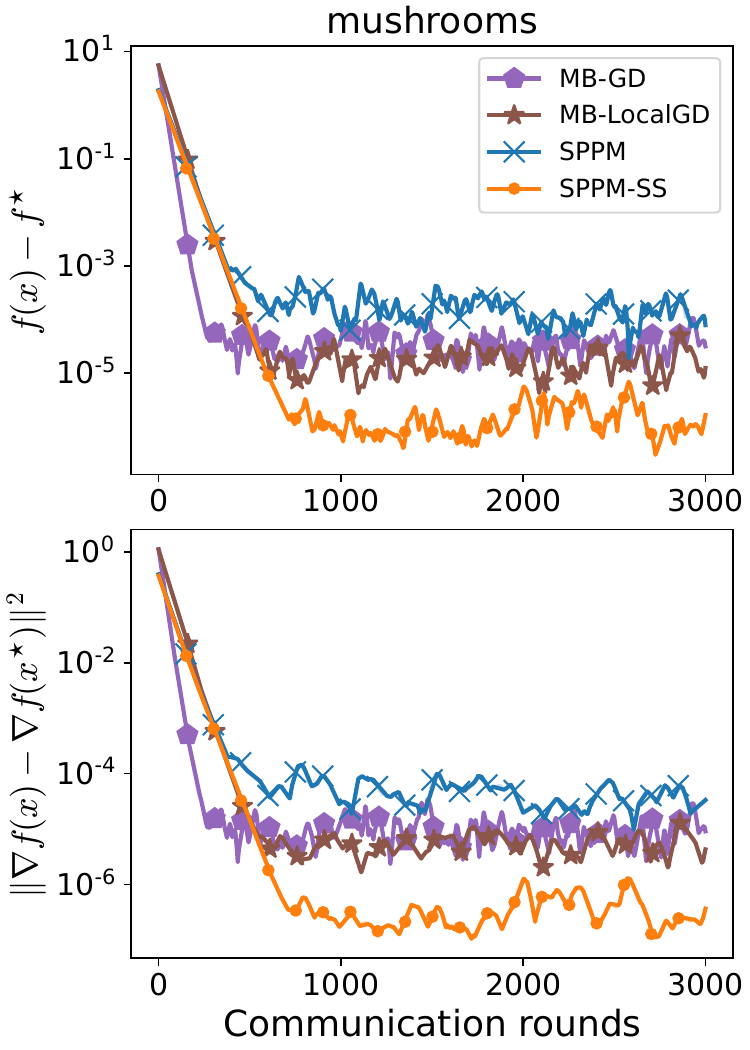}
        \caption{Convergence analysis compared to popular baselines. $\gamma=1.0$.}
        \label{fig:abs2}
    \end{minipage}
\end{figure*}

% \subsection{Communication Cost Reduction through Increased Local Communication Rounds}\label{sec:exp1}
\subsection{Reducing communication cost via local rounds}\label{sec:exp1}
In this study, we investigate whether increasing the number of local communication rounds, denoted as $K$, in our proposed algorithm \algname{SPPM-SS}, can lead to a decrease in the total communication cost required to converge to a predetermined global accuracy $\epsilon > 0$. In \Cref{fig:tissue0}, we analyzed various datasets, including \texttt{a6a} and \texttt{mushrooms}, confirming that higher local communication rounds reduce communication costs, especially with larger learning rates. Our study includes both self-ablation of \algname{SPPM-SS} across different learning rate scales and comparisons with the widely-used cross-device FL method \algname{LocalGD} (or \algname{FedAvg}) on the selected cohort. Ablation studies were conducted with a large empirical learning rate of $0.1$, a smaller rate of $0.01$, and an optimal rate as per \cite{khaled2020better}, alongside minibatch sampling following \cite{gower2019sgd}.

In \Cref{fig:abs1}, we present more extensive ablations. Specifically, we set the \texttt{base} method (\Cref{fig:abs1_base}) using the dataset a6a, a proximal solver \texttt{BFGS}, and $\epsilon=5\cdot10^{-3}$. In \Cref{fig:abs1_solver}, we explore the use of an alternative solver, \texttt{CG} (Conjugate Gradient), noting some differences in outcomes. For instance, with a learning rate $\gamma=1000$, the optimal $K$ with \texttt{CG} becomes 7, lower than 10 in the \texttt{base} setting using \texttt{BFGS}. In \Cref{fig:abs1_epsilon}, we investigate the impact of varying $\epsilon=10^{-2}$. Our findings consistently show \algname{SPPM-SS}'s significant performance superiority over \algname{LocalGD}. 
  
\subsection{Impact of different solver $\mathcal{A}$}
We further explore the impact of various solvers on optimizing the proximal operators, showcasing representative methods in Table \ref{tab:comparison-solvers}. A detailed overview and comparison of local optimizers listed in the table are provided in Section \ref{sec:local_solvers}, given the extensive range of candidate options available. To highlight critical factors, we compare the performance of first-order methods, such as the Conjugate Gradient (\algname{CG}) method \citep{conjugate_gradients}, against second-order methods, like the Broyden-Fletcher-Goldfarb-Shanno (\algname{BFGS}) algorithm \citep{broyden1967quasi, shanno1970conditioning}, in the context of strongly convex settings. For non-convex settings, where first-order methods are prevalent in deep learning experiments, we examine an ablation among popular first-order local solvers, specifically choosing \algname{Mime-Adam} \citep{MIME} and \algname{FedAdam-AdaGrad} \citep{wang2021local}. The comparisons of different solvers for strongly convex settings are presented in \Cref{fig:abs1_solver}, with the non-convex comparison included in the appendix. Upon comparing first-order and second-order solvers in strongly convex settings, we observed that \algname{CG} outperforms \algname{BFGS} for our specific problem. In neural network experiments, \algname{FedAdam-AdaGrad} was found to be more effective than \algname{Mime-Adam}. However, it is important to note that all these solvers are viable options that have led to impressive performance outcomes.

\subsection{Comparative analysis with baseline algorithms}
In this section, we conduct an extensive comparison with several established cross-device FL baseline algorithms. Specifically, we examine \algname{MB-GD} (MiniBatch Gradient Descent with partial client participation), and \algname{MB-LocalGD}, which is the local gradient descent variant of \algname{MB-GD}. We default the number of local iterations to 5 and adopt the optimal learning rate as suggested by \cite{gower2019sgd}. To ensure a fair comparison, the cohort size $\left|C\right|$ is fixed at 10 for all minibatch methods, including our proposed \algname{SPPM-SS}. The results of this comparative analysis are depicted in \Cref{fig:abs2}.
Our findings reveal that \algname{SPPM-SS} consistently achieves convergence within a significantly smaller neighborhood when compared to the existing baselines. Notably, in contrast to \algname{MB-GD} and \algname{MB-LocalGD}, \algname{SPPM-SS} is capable of utilizing arbitrarily large learning rates. This attribute allows for faster convergence, although it does result in a larger neighborhood size. 

\begin{figure}[!tb]
    \centering
    % Minipage for the figure
    \begin{minipage}{0.40\textwidth}
        \centering
        \includegraphics[width=\textwidth, trim=0 0 0 0, clip]{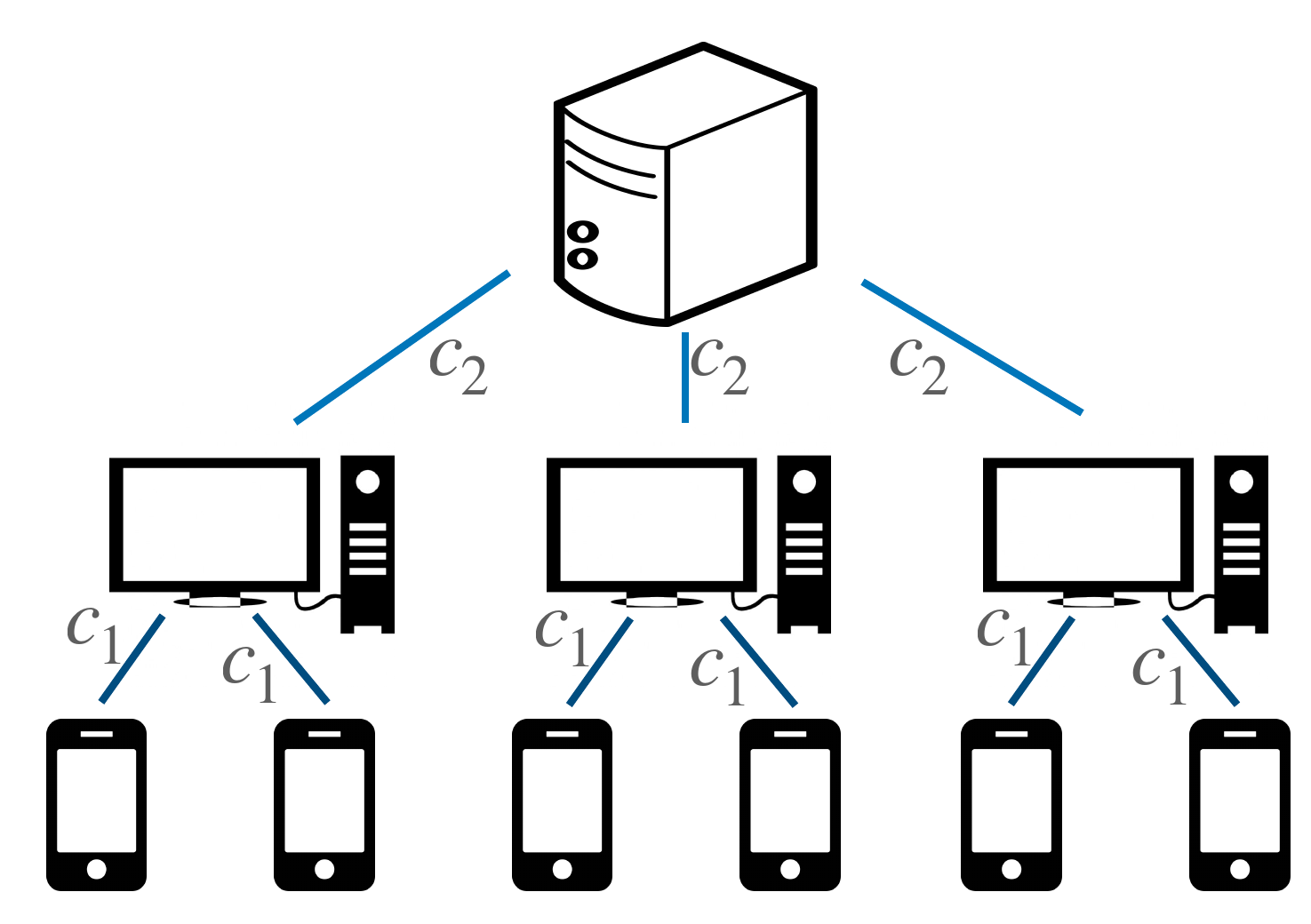}
        \caption{Server-hub-client hierarchical FL architecture.}
        \label{fig:hub}
    \end{minipage}
    \hfill % Space between minipage blocks
    % Minipage for the table
    \begin{minipage}{0.58\textwidth}
        \centering
        \captionof{table}{Local optimizers for solving the proximal subproblem.}
        \label{tab:comparison-solvers}
        \resizebox{\textwidth}{!}{% Adjust the resize box if necessary
        \begin{threeparttable}
            \begin{tabular}{lll}
                \toprule
                \bf Setting & \textbf{1st order} & \textbf{2nd order} \\
                \midrule
                Strongly-Convex & \begin{tabular}{@{}l@{}}
                        \algname{Conjugate Gradients (CG)} \\
                        \algname{Accelerated GD} \\
                        \algname{Local GD} \\
                        \algname{Scaffnew} \\
                    \end{tabular}  &  \begin{tabular}{@{}l@{}}
                        \algname{BFGS} \\
                        \algname{AICN} \\
                        \algname{LocalNewton} \\
                    \end{tabular}\\
                \midrule
                Nonconvex & \begin{tabular}{@{}l@{}}
                        \algname{Mime-Adam}\\
                        \algname{FedAdam-AdaGrad}\\
                        % \algname{Fed-LAMB}\\ 
                        \algname{FedSpeed}\\ 
                    \end{tabular}  & \begin{tabular}{@{}l@{}}
                        \algname{Apollo}\\
                        \algname{OASIS}\\
                    \end{tabular} \\
                \bottomrule
            \end{tabular}
        \end{threeparttable}
        }
    \end{minipage}
\end{figure}

\subsection{Hierarchical federated learning}
We extend our analysis to a hub-based hierarchical FL structure, as conceptualized in \Cref{fig:hub}. This structure envisions a cluster directly connected to $m$ hubs, with each hub $m_i$ serving $n_i$ clients. The clients, grouped based on criteria such as region, communicate exclusively with their respective regional hub, which in turn communicates with the central server. Given the inherent nature of this hierarchical model, the communication cost $c_1$ from each client to its hub is consistently lower than the cost $c_2$ from each hub to the server. We define communication from clients to hubs as \textit{local communication} and from hubs to the server as \textit{global communication}. Under \algname{SPPM-SS}, the total cost is expressed as $(c_1 K + c_2) T_{\operatorname{SPPM-SS}}$, while for \algname{LocalGD}, it is $(c_1 + c_2)T_{\operatorname{LocalGD}}$. As established in Section \ref{sec:exp1}, $T_{\operatorname{SPPM-SS}}$ demonstrates significant improvement in total communication costs compared to \algname{LocalGD} within a hierarchical setting. Our objective is to illustrate this by contrasting the standard FL setting, depicted in \Cref{fig:abs1_base} with parameters $c_1=1$ and $c_2=0$, against the hierarchical FL structure, which assumes $c_1=0.1$ and $c_2=1$, as shown in \Cref{fig:abs1_hierarchical}. Given the variation in $c_1$ and $c_2$ values between these settings, a direct comparison of absolute communication costs is impractical. Therefore, our analysis focuses on the ratio of communication cost reduction in comparison to \algname{LocalGD}. For the \texttt{base} setting, \algname{LocalGD}'s optimal total communication cost is 39 with 12 local iterations, whereas for \algname{SPPM-SS} ($\gamma=1000$), it is reduced to 10 with 10 local and 1 global communication rounds, amounting to a 74.36\% reduction. With the hierarchical FL structure in \Cref{fig:abs1_hierarchical}, \algname{SPPM-SS} achieves an even more remarkable communication cost reduction of 94.87\%. Further ablation studies on varying local communication cost $c_1$ in the \Cref{sec:hierarchy-fl} corroborate these findings.

\subsection{Neural network evaluations}\label{sec:nn}
Our empirical analysis includes experiments on Convolutional Neural Networks (CNNs) using the FEMNIST dataset, as described in \cite{leaf}. We designed the experiments to include a total of 100 clients, with each client representing data from a unique user, thereby introducing natural heterogeneity into our study. We employed the \algname{Nice} sampling strategy with a cohort size of 10. In contrast to logistic regression models, here we utilize training accuracy as a surrogate for the target accuracy $\epsilon$. For the optimization of the proximal operator, we selected the Adam optimizer, with the learning rate meticulously fine-tuned over a linear grid. Detailed descriptions of the training procedures and the CNN architecture are provided in the Appendix \ref{sec:nn_additional}.

Our analysis primarily focuses on the hierarchical FL structure. Initially, we draw a comparison between our proposed method, \algname{SPPM-AS}, and \algname{LocalGD}. The crux of our investigation is the total communication cost required to achieve a predetermined level of accuracy, with findings detailed in \Cref{fig:dl1}. Significantly, \algname{SPPM-AS} demonstrates enhanced performance with the integration of multiple local communication rounds. Notably, the optimal number of these rounds tends to increase alongside the parameter $\gamma$. For each configuration, the convergence patterns corresponding to the sets of optimally tuned hyperparameters are depicted in \Cref{fig:dl1-convergence}.

\begin{figure*}[!tb]
    \centering
    % First figure in a minipage
    \begin{minipage}{0.42\textwidth}
        \centering
        \includegraphics[width=\textwidth, trim=0 0 0 0, clip]{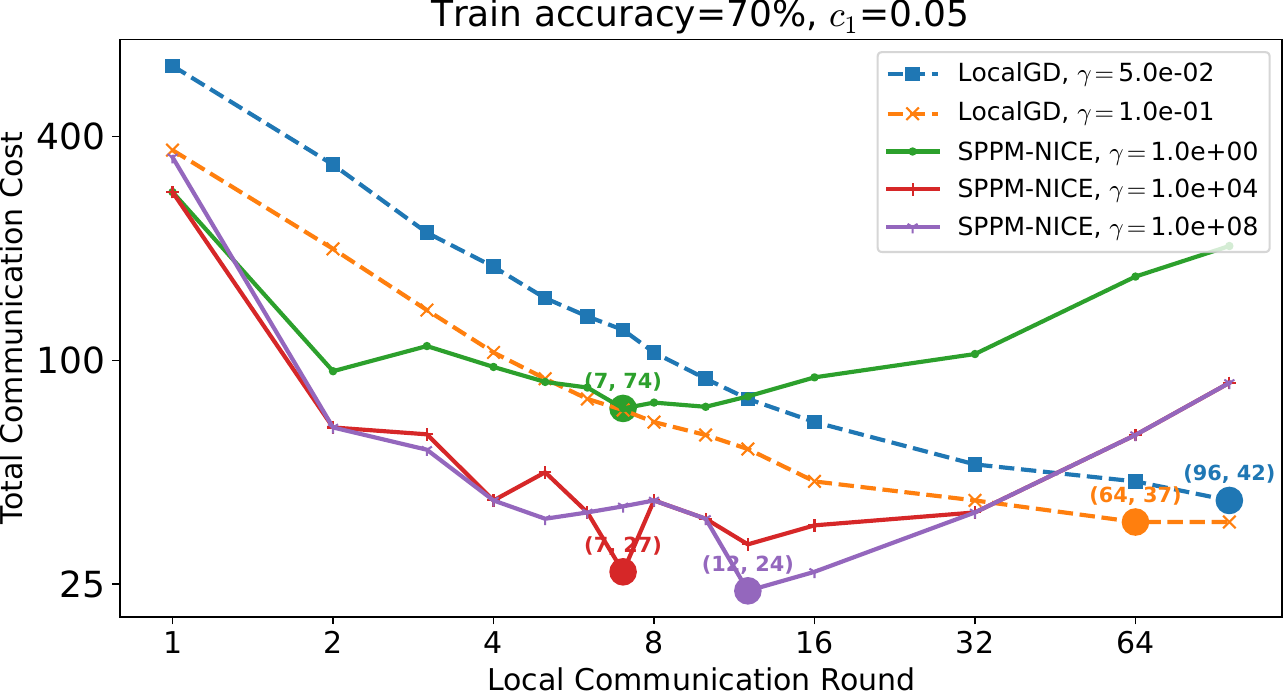}
        \caption{Communication cost for achieving 70\% accuracy in hierarchical FL ($c_1=0.05$, $c_2=1$).}
        \label{fig:dl1}
    \end{minipage}
    \hfill 
    \begin{minipage}{0.56\textwidth}
        \centering
        \includegraphics[width=\textwidth, trim=0 0 0 0, clip]{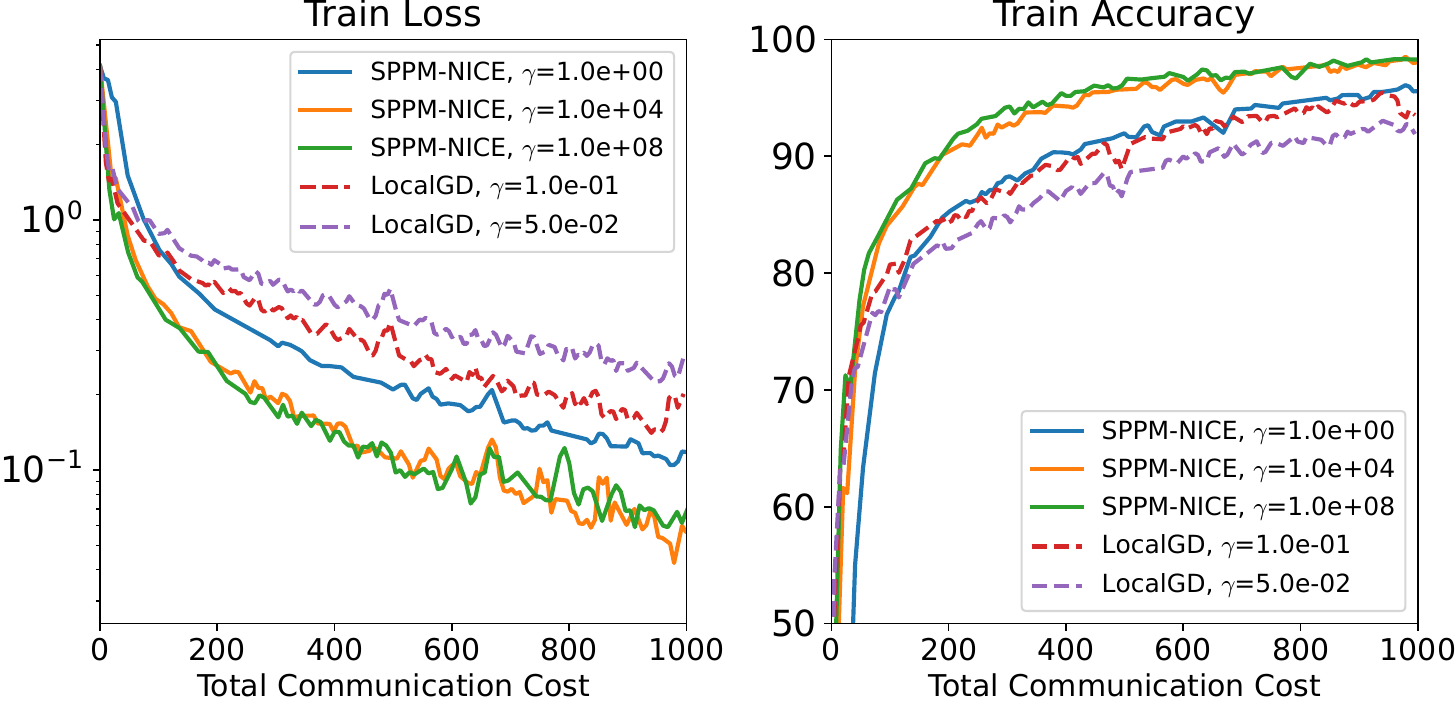}
        \caption{Convergence with optimal hyperparameters. $c_1$ is $0.05$, $c_2=1$.}
        \label{fig:dl1-convergence}
    \end{minipage}
\end{figure*}

% \section{Conclusion}
% Our research challenges the conventional single-round communication model in federated learning by presenting a novel approach where cohorts participate in multiple communication rounds. This adjustment leads to a significant 74\% reduction in communication costs, underscoring the efficacy of extending cohort engagement beyond traditional limits. Our method, \algname{SPPM-AS}, equipped with diverse client sampling procedures, contributes substantially to this efficiency. This foundational work showcases a pivotal shift in federated learning strategies. Future work could focus on improving algorithmic robustness and ensuring privacy compliance.
    \chapter{Symmetric Post-Training Compression}
\label{chapter_symwanda}
\thispagestyle{empty}

\section{Introduction} Large Language Models (LLMs) \citep{OPT, LlaMA, LlaMA2, Phi2} have demonstrated remarkable capabilities across a variety of tasks. However, their extensive size often hinders practical deployment. Interest in LLM compression has surged in recent years, driven by the need to reduce model sizes while maintaining performance \citep{SmoothQuant, SparseGPT, Wanda, RIA, PV-Tuning}. This paper focuses on LLM \textbf{post-training pruning (PTP)}, a prevalent method for reducing the footprint of pre-trained weights.

%Consider a target sparsity ratio $\epsilon$, a pre-trained model weight $\vW$, and a pruned weight $\widetilde{\vW}$. Our goal is to minimize an optimization objective $F(\widetilde{\vW} | \vW, \epsilon)$, subject to the constraint $\operatorname{Mem}(\widetilde{\vW}) \leq \epsilon \operatorname{Mem}(\vW)$. 
A common approach to pruning is magnitude-based pruning, where elements of each layer's weights with smaller absolute values are set to zero. In contrast, \algname{Wanda} \citep{Wanda} introduced an innovative method that scales the weights by the activations of each layer, demonstrating promising performance on standard benchmarks. Building upon this, \algname{RIA} \citep{RIA} further improved the approach by evaluating the relative importance of each weight across its corresponding row and column before pruning. While their empirical results are encouraging, the underlying mechanisms remain poorly understood. This leads us to our first question:

\emph{Can we provide theoretical support for post-training pruning methods and derive more efficient algorithms with minimal adaptations to the existing framework?}

To deepen our understanding of these popular PTP methods, we introduce a novel formulation—referred to as \textbf{Sym}metric \textbf{W}eight \textbf{And} \textbf{A}ctivation (\algname{SymWanda}), which aims to efficiently leverage \textit{both} the input activation of a layer and the output for that layer. This symmetric and generalized approach provides theoretical insights into the mechanisms of established empirical methods such as \algname{Wanda} and \algname{RIA}. 

Intrinsic PTP methods have demonstrated remarkable performance, as reflected by perplexity scores and zero-shot accuracy. However, their performance can degrade significantly when the sparsity ratio is high. This is due to the intrinsic reconstruction error between the pruned weights and the original pre-trained weights. Minimizing this reconstruction error is particularly important for efficient post-training pruning. Beyond LLM pruning, we explore further fine-tuning to enhance model efficiency and performance. This brings us to our second problem:

\emph{Can we fine-tune pruned LLMs without further training and outperforms state-of-the-art methods with minimal effort?}

\textbf{Dynamic sparse training (DST)} has gained attention for selectively updating and maintaining a subset of network parameters throughout the training process while dynamically adapting the sparse topology through weight operations. Its proven efficiency in enabling effective training suggests DST could be a promising approach for fine-tuning LLMs in an efficient manner. However, DST inherently requires backpropagation to train subnetworks, and its effectiveness heavily depends on a sufficient number of weight updates \citep{liu2021we}. %Moreover, studies have reported its limitations when applied to fine-tuning small-scale BERT-level models \citep{liu2023sparsity}.

Interestingly, the pruning-and-growing step within DST offers a training-free methodology, where sparse mask adaptation is based solely on weight properties such as magnitude \citep{mocanu2018scalable}. This opens up a potential alternative for addressing the challenge: Instead of relying on computationally intensive backpropagation for fine-tuning sparse LLMs, we can explore the iterative updating of sparse masks in a training-free manner. Motivated by this insight, we focus on training-free fine-tuning approaches.

\algname{DSnoT} \citep{zhang2023dynamic} introduced a straightforward yet effective method for pruning and growing weights using their values and statistical metrics (e.g., expectation and variance) for each ongoing pruning row. Inspired by \algname{Wanda}, \algname{DSnoT} achieves simplicity but falls short of fully leveraging relative weight information, particularly in scenarios where weight distributions are highly non-uniform and contain many outliers \citep{RIA}. To address these limitations, we propose incorporating relative weight importance into the growing criterion design. Furthermore, we observe that directly optimizing for reconstruction error is suboptimal. To improve performance, we introduce a regularization term that relaxes the decision boundary.
Our new designs demonstrate significant efficiency and consistently achieve promising performance, paving the way for more effective and computationally feasible fine-tuning methods for sparse LLMs.

% Our \textbf{contributions} are summarized as follows:

% $\bullet$ We propose a novel formulation, \algname{SymWanda}, which minimizes the impact of pruning on both input activations and output influences of weights. This approach provides theoretical insights into the empirical successes of methods such as \algname{Wanda} and \algname{RIA}.

% $\bullet$Building on this formulation, we introduce a series of innovative pruning strategies. Extensive experiments validate the effectiveness of our methods. Notably, we incorporate an efficient stochastic approach for manipulating relative importance, which achieves superior performance with highly reduced sampling cost.

% $\bullet$ We present a novel training-free fine-tuning method \ft that leverages relative weight importance and a regularized decision boundary within a pruning-and-growing framework. This approach significantly outperforms strong baselines, achieving remarkable results.

Our \textbf{contributions} are summarized as follows: 
\begin{itemize}
    \item We propose a novel formulation, \algname{SymWanda}, which minimizes the impact of pruning on both input activations and output influences of weights. This approach provides theoretical insights into the empirical successes of methods such as \algname{Wanda} and \algname{RIA}. 
    \item Building on this formulation, we introduce a series of innovative pruning strategies. Extensive experiments validate the effectiveness of our methods. Notably, we incorporate an efficient stochastic approach for manipulating relative importance, which achieves superior performance with highly reduced sampling cost.
    \item We present a novel training-free fine-tuning method \ft that leverages relative weight importance and a regularized decision boundary within a pruning-and-growing framework. This approach significantly outperforms strong baselines, achieving remarkable results.
\end{itemize}
\section{Related Work}
\paragraph{Traditional model pruning.} 
% \subsection{Traditional Model Pruning}
Pruning has emerged as a powerful strategy to compress and accelerate deep neural networks by removing redundant connections while preserving overall performance \citep{han2015learning, frankle2018lottery, hoefler2021sparsity}. Early works introduced iterative pruning-and-retraining approaches, which iteratively identify unimportant weights, discard them, and retrain the resulting sparse network to recover accuracy \citep{lecun1989optimal, han2015learning}. More recent dynamic sparse training techniques \citep{mocanu2018scalable, bellec2018deep, lee2018snip, mostafa2019parameter} start from a sparse initialization and continuously prune and grow connections throughout training. These methods integrate sparsification into the training loop, yielding promising trade-offs between model size and performance. A prominent line of work has leveraged learnable thresholds to realize non-uniform sparsity \citep{kusupati2020soft} or combined magnitude-based pruning with periodic connectivity updates to regrow valuable weights \citep{RigL, SRigL}. However, most of these methods still rely on standard back-propagation over the full parameter set, which can be prohibitively expensive when scaling up to LLMs.
 
% \subsection{LLM Post-Training Pruning}
% With the rise of LLMs, researchers have proposed methods to address the challenges of pruning these models \citep{li2023sparse}. Recent efforts focus on post-training pruning, which starts from a pre-trained network and removes redundant parameters without requiring end-to-end fine-tuning or retraining. \algname{SparseGPT} uses second-order information to solve a layer-wise reconstruction problem, enabling unstructured and N:M structured sparsity \citep{zhou2021learning}. Similarly, \algname{Wanda} \citep{Wanda} introduces a pruning metric that accounts for both weight magnitude and corresponding input activations, achieving perplexity results comparable to \algname{SparseGPT} \citep{SparseGPT}.
% However, \cite{jaiswal2023compressing} highlights that perplexity alone may not be a reliable metric for assessing model compression, noting that both \algname{SparseGPT} and \algname{Wanda} underperform at even modest sparsity levels (25-30\%). This raises concerns that the absence of retraining after pruning could result in significant performance degradation. To address this limitation, we propose a novel training method that emphasizes high training efficiency to mitigate these issues. Additionally, \algname{DSnoT} \citep{zhang2023dynamic} presents a simple yet effective approach for pruning and regrowing weights based on their values and statistical properties (e.g., expectation and variance) for each pruning row, achieving results without the need for retraining.
\paragraph{LLM post-training pruning.} 
% \subsection{LLM Post-Training Pruning}
The substantial computational demands of LLMs have raised the development of pruning methods tailored to reduce parameters counts without compromising performance \citep{li2023sparse, zhu2024survey}. Among these methods, post-training pruning eliminates redundant parameters in a pre-training network without requiring resource-intensive fine-tuning. For instance, \algname{SparseGPT} \citep{SparseGPT} leverages second-order information to solve layer-wise reconstruction problems, supporting both unstructured and N:M structured sparsity \citep{zhou2021learning}. \algname{Wanda} \citep{Wanda} introduces a pruning metric that incorporates both weight magnitudes and corresponding input activations, achieving perplexity performance comparable to \algname{SparseGPT} while surpassing simple magnitude-based pruning. The \algname{RIA} method \citep{RIA} builds on \algname{Wanda} by considering relative weight importance, offering performance improvements at minimal additional cost. Moreover, \algname{DSnoT} \citep{zhang2023dynamic} proposes pruning and regrowing weights based on statistical properties (e.g., mean and variance) in each pruning row, obviating the need for retraining.

\section{Symmetric Wanda}
\subsection{Prerequisites}
Post-training pruning is defined as follows: consider a target sparsity ratio $\epsilon \in [0, 1)$, a set of calibration inputs $\mathbf{X} \in \mathbb{R}^{a \times b}$, and pre-trained weights $\mathbf{W} \in \mathbb{R}^{b \times c}$. For clarity in the mathematical framework, we abstract the dimensions of inputs and weights. Specifically, in the context of large language models, let $a \eqdef C_{\text{in}}$, $b \eqdef N \times L$, and $c \equiv C_{\text{out}}$, where $N$ and $L$ denote the batch size and sequence length, respectively. The objective is to identify an optimal pruned weight matrix $\widetilde{\mathbf{W}} \in \mathbb{R}^{b \times c}$ that minimizes:

\begin{equation}\label{obj1}\tag{InpRecon}
    f(\widetilde{\mathbf{W}}) \eqdef \|\mathbf{X} (\widetilde{\mathbf{W}} - \mathbf{W})\|_F^2,
\end{equation}

where the optimization challenge is:
\begin{equation}
    \minimize f(\widetilde{\mathbf{W}}) \ \ s.t. \ \ \text{Mem}(\widetilde{\mathbf{W}}) \leq (1 - \epsilon) \text{Mem}(\mathbf{W}), \notag
\end{equation}

% where $\text{Mem}(\cdot)$ represents the memory consumption associated with a weight matrix. (\ref{obj1}) represents the input reconstruction error. 
where $\text{Mem}(\cdot)$ denotes the memory consumption associated with a weight matrix, and (\ref{obj1}) quantifies the input reconstruction error.

This formulation applies to various post-training compression techniques, including both pruning \citep{SparseGPT, Wanda, RIA} and quantization \citep{GPTQ, AQLM}. Our focus here is specifically on post-training pruning.

\begin{table*}[t]
    \centering
    \scriptsize
    \caption{\footnotesize Comparison of LLM post-training pruning algorithms.}
    \label{tab:comparison}
    \begin{threeparttable}
    \resizebox{\textwidth}{!}{
    \renewcommand{\arraystretch}{1.8}
\begin{tabular}{lllllll}
\toprule
{\bf Algorithm}  & \bf W? & \bf Act.? & {$\vX$} & $\vY$ & {\bf $\vS_{jk}$}\tnote{\color{blue}(a)} & \bf Comment \\
\midrule
\rowcolor{bgcolor4}
\algname{General Sym.} & \cmark & \cmark & $\vX$ & $\vY$ & $|\vW_{jk}| \left(\|\mathbf{X}_{:j}\|_2 + \|\mathbf{Y}_{k:}\|_2 \right)$ & \Cref{lemma:lm1}\\
\algname{Marginal} & \cmark & \xmark & \bf $\vI$ & \bf 0 & $|\vW_{jk}|$ & - \\
\algname{Wanda} & \cmark & \cmark & \bf X & \bf 0 & $|\vW_{jk}|\norm{\vX_{:j}}_2$ & \Cref{corollary1_2}\\
\rowcolor{bgcolor4}
\algname{OWanda}& \cmark & \cmark & \bf 0 & \bf Y & $|\vW_{jk}|\norm{\vY_{k:}}_2$ & \Cref{corollary1}\\
\rowcolor{bgcolor4}
\algname{Symmetric}& \cmark & \cmark & $\vW^T$ & $\vW^T$ & $|\vW_{jk}| \sqrt{\sqn{\vW_{j:}}_2 + \sqn{\vW_{:k}}}_2$ & \Cref{corollary2}\\

\algname{RI (v1)} & \multirow{1}{*}{\cmark} & \multirow{1}{*}{\xmark} & $t_j(1;, \cdots;, 1)$, $t_j = ({\sqrt{b} \norm{\vW_{j:}}}_1)^{-1}$\tnote{\color{blue}(a)} & $s_k(1, \cdots, 1)$, $s_k = \left({\sqrt{c}\norm{\vW_{:k}}_1}\right)^{-1}$ & \multirow{1}{*}{$\norm{\vW_{j:}}_1^{-1} + \norm{\vW_{:k}}_1^{-1}$} & \Cref{thm:main2}\\  
\algname{RI (v2)}  & \cmark & \xmark & $\Diag(\|\mathbf{W}_{1:}\|^{-1}_1, \ldots, \|\mathbf{W}_{b:}\|^{-1}_1)$ & $\Diag(\|\mathbf{W}_{:1}\|^{-1}_1, \ldots, \|\mathbf{W}_{:c}\|^{-1}_1)$ & $\norm{\vW_{j:}}_1^{-1} + \norm{\vW_{:k}}_1^{-1}$ & \Cref{thm:main2}\\

\algname{RIA} & \cmark & \cmark & $\delta_{u=j} \delta_{v=p} {\left\|\mathbf{C}_{: j}\right\|_2^\alpha}{\left\|\mathbf{W}_{j:}\right\|_1^{-1}}$\tnote{\color{blue}(c)} & $\delta_{u=s} \delta_{v=k} {\left\|\mathbf{C}_{: j}\right\|_2^\alpha}{\left\|\mathbf{W}_{:k}\right\|_1^{-1}}$ & $\left({\left\|\mathbf{W}_{j:}\right\|_1^{-1}}+{\left\|\mathbf{W}_{:k}\right\|_1^{-1}}\right) \left\|\mathbf{\vX}_{:j}\right\|_2^{\alpha}$ & \Cref{lem:ria}\\

\rowcolor{bgcolor4}
\algname{General (diag.)} & \cmark & \cmark & $\vA \vD_{\vX}$\tnote{\color{blue}(d)} & $\vD_{\vY}\vB$ & ${\left\|\mathbf{A}_{:j}\right\|_2}{\left\|\mathbf{W}_{j:}\right\|_1^{-1}} + {\left\|\mathbf{B}_{k:}\right\|_2}{\left\|\mathbf{W}_{:k}\right\|_1^{-1}}$ & \Cref{lem:general2}\\

\rowcolor{bgcolor4}
\algname{$\ell_p$-norm (v1)} & \cmark & \xmark\tnote{\color{blue}(e)} & ${\left\|\mathbf{W}_{j:}\right\|_p^{-1} \cdot\left\|\mathbf{W}_{j:}\right\|_2^{-1}} \cdot \mathbf{W}_{j:}^{\top}$ & ${\left\|\mathbf{W}_{:k}\right\|_p^{-1} \cdot\left\|\mathbf{W}_{:k}\right\|_2^{-1}} \cdot \mathbf{W}_{:k}^{\top}$ & \multirow{1}{*}{$|\mathbf{W}_{jk}| (\|\mathbf{W}_{j:}\|^{-1}_p + \|\mathbf{W}_{:k}\|^{-1}_p)$} &\Cref{lem:generalized_p_norm}\\

\rowcolor{bgcolor4}
\algname{$\ell_p$-norm (v2)} & \cmark & \xmark & $\left\|\mathbf{W}_{j:}\right\|_p^{-1} \cdot \mathbf{u}$ & $\left\|\mathbf{W}_{:k}\right\|_p^{-1} \cdot \mathbf{v}$ & $|\mathbf{W}_{jk}| (\|\mathbf{W}_{j:}\|^{-1}_p + \|\mathbf{W}_{:k}\|^{-1}_p)$ & \Cref{lem:random_unit_vector_scaling}\\

\rowcolor{bgcolor4}
\algname{StochRIA} & \cmark & \xmark & ${\mathbf{1}_{\{i \in S_j\}}}\left({\|\mathbf{W}_{j:S_j}\|_{1}\sqrt{\tau}}\right)^{-1}$ & ${\mathbf{1}_{\{i \in S_k\}}} \left({\|\mathbf{W}_{S_k:k}\|_{1}\sqrt{\tau}}\right)^{-1}$ & $|\mathbf{W}_{jk}| (\|\mathbf{W}_{j:S_j}\|_1^{-1} + \|\mathbf{W}_{S_k:k}\|_1^{-1})$ & \Cref{lem:stochria}\\
\bottomrule
\end{tabular}}
 %%%%%%%%%%%%%%%%%%%%%%%%%%
  \begin{tablenotes}
        {\tiny
        \item [{\color{blue}(a)}] \parbox[t]{0.58\linewidth}{Without loss of generality, we consider the elimination of a single weight, $\vW_{jk}$. The detailed explanation can be found in \Cref{lemma:lm1} and \Cref{sec:new_form}.}
        \item [{\color{blue}(b)}] \parbox[t]{0.58\linewidth}{For simplicity, instead of displaying the entire matrices $\mathbf{X}$ and $\mathbf{Y}$, we present the columns $\mathbf{X}_{:j}$ and the rows $\mathbf{Y}_{k:}$. This design is employed in the algorithms \algname{RI}, \algname{RIA}, $\ell_p$-norm, and \algname{StochRIA}.}
        \item [{\color{blue}(c)}] The Kronecker delta, denoted by $\delta_{i j}$, is a function of two indices $i$ and $j$ that equals 1 if $i=j$ and 0 otherwise.
        \item [{\color{blue}(d)}] \parbox[t]{0.58\linewidth}{$\mathbf{D}_{\mathbf{X}}$ and $\mathbf{D}_{\mathbf{Y}}$ are the diagonal matrices associated with $\mathbf{W}$, as defined in \Cref{sec:general_solution}. }
        \item [{\color{blue}(e)}] \parbox[t]{0.58\linewidth}{By default, for \algname{$\ell_p$-norm} and \algname{StochRIA}, we do not consider the input activation. However, the design is similar to the transition from \algname{RI} to \algname{RIA}, as described in \Cref{sec:ria}.} 
        }  
    \end{tablenotes}
 %%%%%%%%%%%%%%%%%%%%%%%%%%    
    \end{threeparttable}
\end{table*}
\subsection{Symmetric Wanda: new formulations}\label{sec:new_form}
Building upon the methods introduced in \algname{Wanda} \citep{Wanda}, which considered both weights and activations, and later improvements by \algname{RIA} \citep{RIA}, which analyzed the relative importance of weights by summing over corresponding rows and columns, we provide new insights by redefining our optimization objective. Apart from the previous defined input calibration $\vX$, we particularly introduce the output calibration $\vY \in \mathbb{R}^{c\times d}$. Considering both the input and output dependencies, we express the objective as:
% Building upon the methods introduced in \algname{Wanda} \citep{Wanda}, which considered both weights and activations, and later improvements by \algname{RIA} \citep{RIA}, which analyzed the relative importance of weights by summing over corresponding rows and columns, we provide new insights by redefining our optimization objective. Beyond the defined input calibration $\vX$, we particularly introduce the output calibration $\vY \in \mathbb{R}^{c\times d}$. Considering both the input and output dependencies, we express the objective as:

% \begin{equation}\label{obj2}\tag{Sym}
% \vspace{-3mm}
\begin{equation}\label{obj2}\tag{Sym}
    g(\widetilde{\mathbf{W}}) \eqdef \|\mathbf{X} (\widetilde{\mathbf{W}} - \mathbf{W})\|_F + \| (\widetilde{\mathbf{W}} - \mathbf{W})\vY\|_F,
\end{equation}

and propose to solve:
\begin{equation}
    \operatorname{minimize} \ g(\widetilde{\mathbf{W}}), \ \ s.t. \ \ \text{Mem}(\widetilde{\mathbf{W}}) \leq (1 - \epsilon) \text{Mem}(\mathbf{W}). \notag
\end{equation}

% We name the approach using the general matrix in (\ref{obj2}) without instantialization as \algname{SymWanda}, aims to minimize the reconstruction error influenced by both the input $\mathbf{X}$ and the output $\mathbf{Y}$. We elucidate the efficacy of this method and offer new theoretical insights into the performance advantages previously observed with \algname{Wanda} and \algname{RIA}.
% We refer to the approach using the general matrix in (\ref{obj2}) without instantiation as \algname{SymWanda}, which aims to minimize the reconstruction error influenced by both the input $\mathbf{X}$ and the output $\mathbf{Y}$. Noticed that here we consider the formulation without \emph{squared} Frobenius norms (we provide the version with squared Frobenius norms in \Cref{sec:squared_frobenius}). We elucidate the efficacy of this method and provide new theoretical insights into the performance advantages previously observed with \algname{Wanda} and \algname{RIA}.
We refer to the method that utilizes the general matrix in (\ref{obj2}) without instantiation as \algname{SymWanda}, which is designed to minimize the reconstruction error affected by both the input $\mathbf{X}$ and the output $\mathbf{Y}$. It is important to note that this formulation employs non-squared Frobenius norms to facilitate better theoretical interpretations. It is important to note that this formulation employs \emph{non-squared} Frobenius norms to facilitate better theoretical interpretations. A squared norm version is also provided in \Cref{sec:squared_frobenius} for comparison. We elucidate the efficacy of both approaches and provide new theoretical insights into the performance advantages previously observed with \algname{Wanda} and \algname{RIA}.

\begin{lemma}\label{lemma:lm1}
    Assume we aim to eliminate a single weight $\vW_{jk}$, setting $\widetilde{\mathbf{W}}_{jk} = 0$ and keeping all other weights unchanged. The simplified expression for $g(\widetilde{\mathbf{W}})$ becomes:

    \begin{equation}\label{eqn0}
        g(\widetilde{\mathbf{W}}) = |\vW_{jk}| \left(\|\mathbf{X}_{:j}\|_2 + \|\mathbf{Y}_{k:}\|_2 \right) \eqdef \vS_{jk}, 
    \end{equation}

    where $\mathbf{X}_{:j}$ and $\mathbf{Y}_{k:}$ represent the j-th column and k-th row of $\mathbf{X}$ and $\mathbf{Y}$, respectively. 
\end{lemma}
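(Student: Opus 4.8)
\textbf{Proof proposal for Lemma~\ref{lemma:lm1}.}

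The plan is to directly compute the two Frobenius-norm terms in the definition of $g(\widetilde{\mathbf{W}})$ after substituting the single-weight perturbation. First I would set $\mathbf{E} \eqdef \widetilde{\mathbf{W}} - \mathbf{W}$, and observe that by hypothesis $\widetilde{\mathbf{W}}_{jk} = 0$ while every other entry of $\widetilde{\mathbf{W}}$ equals the corresponding entry of $\mathbf{W}$. Hence $\mathbf{E}$ is the rank-one sparse matrix $\mathbf{E} = -\mathbf{W}_{jk}\, e_j e_k^\top$, where $e_j \in \mathbb{R}^b$ and $e_k \in \mathbb{R}^c$ are standard basis vectors. This is the key structural reduction: the perturbation matrix has exactly one nonzero entry, located in row $j$, column $k$, with value $-\mathbf{W}_{jk}$.

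Next I would evaluate $\|\mathbf{X}\mathbf{E}\|_F$. Since $\mathbf{E} = -\mathbf{W}_{jk}\,e_j e_k^\top$, we get $\mathbf{X}\mathbf{E} = -\mathbf{W}_{jk}\,(\mathbf{X} e_j) e_k^\top = -\mathbf{W}_{jk}\,\mathbf{X}_{:j}\, e_k^\top$, i.e.\ a matrix whose only nonzero column is column $k$, equal to $-\mathbf{W}_{jk}\mathbf{X}_{:j}$. Its Frobenius norm is therefore $|\mathbf{W}_{jk}|\,\|\mathbf{X}_{:j}\|_2$. Symmetrically, $\mathbf{E}\mathbf{Y} = -\mathbf{W}_{jk}\, e_j (e_k^\top \mathbf{Y}) = -\mathbf{W}_{jk}\, e_j\, \mathbf{Y}_{k:}$, a matrix whose only nonzero row is row $j$, equal to $-\mathbf{W}_{jk}\mathbf{Y}_{k:}$, so $\|\mathbf{E}\mathbf{Y}\|_F = |\mathbf{W}_{jk}|\,\|\mathbf{Y}_{k:}\|_2$. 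Adding the two contributions gives
\begin{equation*}
g(\widetilde{\mathbf{W}}) = |\mathbf{W}_{jk}|\,\|\mathbf{X}_{:j}\|_2 + |\mathbf{W}_{jk}|\,\|\mathbf{Y}_{k:}\|_2 = |\mathbf{W}_{jk}|\bigl(\|\mathbf{X}_{:j}\|_2 + \|\mathbf{Y}_{k:}\|_2\bigr),
\end{equation*}
which is the claimed identity $\vS_{jk}$.

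This lemma is essentially a bookkeeping computation, so there is no real obstacle; the only subtlety worth stating carefully is the use of the \emph{non-squared} Frobenius norm in \eqref{obj2}, which is exactly what makes the additive (rather than Pythagorean) combination $\|\mathbf{X}_{:j}\|_2 + \|\mathbf{Y}_{k:}\|_2$ appear — had the objective been squared, the cross structure would still decouple but the score would read $|\mathbf{W}_{jk}|^2(\|\mathbf{X}_{:j}\|_2^2 + \|\mathbf{Y}_{k:}\|_2^2)$, and I would flag this distinction when presenting the result. I would close by remarking that this simple closed form is precisely what enables the per-weight pruning score $\vS_{jk}$ used throughout the rest of the section, and that specializing $\mathbf{X}$ or $\mathbf{Y}$ to identity, zero, or $\mathbf{W}^\top$ recovers \algname{Wanda}, \algname{OWanda}, and \algname{Symmetric} as entries of Table~\ref{tab:comparison}.
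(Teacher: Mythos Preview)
Your proof is correct and follows essentially the same approach as the paper: both compute the two Frobenius terms after observing that $\widetilde{\mathbf{W}}-\mathbf{W}$ has a single nonzero entry. The paper expands everything into explicit index sums and cancels term by term, whereas you package the perturbation as the rank-one matrix $-\mathbf{W}_{jk}\,e_je_k^\top$ and read off the norms directly; this is a cleaner presentation of the identical computation, and your remark contrasting the non-squared and squared objectives matches the paper's own discussion in its squared-Frobenius appendix.
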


This formulation (\ref{eqn0}) underscores the impact of individual weights on the error metrics and guides the pruning process. While Lemma \ref{lemma:lm1} simplifies the formulation for pruning a single weight, the general approach can be extended to multiple weights iteratively. This method facilitates a robust pruning strategy that is backed by both empirical results and theoretical foundations, bridging the gap in understanding observed in prior studies such as \algname{Wanda} \citep{Wanda} and \algname{RIA} \citep{RIA}.

% \begin{comment}\label{thm:main1}
%     If we choose $\vX = \mathbf{0}\in \mbR^{a\times b}$, then our pruning method reduces to \emph{output} Wanda: $\vS_{jk} \eqdef |\vW_{jk}|\norm{\vY_{k:}}_2$; If we choose $\vY = \mathbf{0}\in \mbR^{a\times b}$, then our pruning method reduces to \emph{input} Wanda: $\vS_{jk} \eqdef |\vW_{jk}|\norm{\vX_{:j}}_2$. This is exactly the objective in \cite{Wanda}.
% \end{comment}

\begin{corollary}\label{corollary1_2}
    Setting $\mathbf{Y} = \mathbf{0} \in \mathbb{R}^{c \times d}$ transitions our method to \emph{input} \algname{Wanda}, described by $\mathbf{S}_{jk} \eqdef |\mathbf{W}_{jk}| \|\mathbf{X}_{:j}\|_2$. 
\end{corollary}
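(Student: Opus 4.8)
\textbf{Proof plan for Corollary~\ref{corollary1_2}.}
The plan is to obtain this as an immediate specialization of Lemma~\ref{lemma:lm1}. First I would recall the general single-weight formula established there: when we zero out a single entry $\mathbf{W}_{jk}$ and leave every other entry untouched, the symmetric objective reduces to
\begin{equation*}
    g(\widetilde{\mathbf{W}}) = |\mathbf{W}_{jk}| \left(\|\mathbf{X}_{:j}\|_2 + \|\mathbf{Y}_{k:}\|_2 \right).
\end{equation*}
The corollary concerns the choice $\mathbf{Y} = \mathbf{0} \in \mathbb{R}^{c \times d}$, so the second term in the objective (\ref{obj2}), namely $\|(\widetilde{\mathbf{W}} - \mathbf{W})\mathbf{Y}\|_F$, vanishes identically for every candidate $\widetilde{\mathbf{W}}$. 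Hence $g$ collapses to $\|\mathbf{X}(\widetilde{\mathbf{W}} - \mathbf{W})\|_F$, and the single-weight score becomes $\mathbf{S}_{jk} = |\mathbf{W}_{jk}|\,\|\mathbf{X}_{:j}\|_2$, since the $k$-th row of $\mathbf{Y}$ is the zero vector and $\|\mathbf{Y}_{k:}\|_2 = 0$.

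The key steps, in order, are: (i) substitute $\mathbf{Y} = \mathbf{0}$ into the definition of $g$ in (\ref{obj2}) and note the second summand is zero; (ii) invoke Lemma~\ref{lemma:lm1} with this $\mathbf{Y}$, so that $\|\mathbf{Y}_{k:}\|_2 = 0$ and the bracketed sum reduces to $\|\mathbf{X}_{:j}\|_2$; (iii) identify the resulting per-weight importance score $\mathbf{S}_{jk} = |\mathbf{W}_{jk}|\,\|\mathbf{X}_{:j}\|_2$ with the \algname{Wanda} metric of \citet{Wanda}, which is exactly weight magnitude scaled by the $\ell_2$-norm of the corresponding input activation column. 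One should also remark that the greedy pruning rule induced by this score — eliminate the entries with the smallest $\mathbf{S}_{jk}$ subject to the memory budget — coincides with the \algname{Wanda} selection rule, so the correspondence is not merely at the level of the objective value but at the level of the algorithm.

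I do not expect any genuine obstacle here; the statement is a definitional corollary and the only thing to be careful about is bookkeeping. The one point that deserves a sentence of justification is why setting $\mathbf{Y}=\mathbf{0}$ is legitimate within the framework: the output calibration matrix $\mathbf{Y}$ is a free modeling choice in the symmetric formulation (\ref{obj2}), and the degenerate choice $\mathbf{Y}=\mathbf{0}$ simply discards the output-side reconstruction term, recovering the purely input-driven objective (\ref{obj1}) up to the difference between the Frobenius norm and its square — a monotone transformation that does not affect the argmin or the induced ranking of weights. Thus the corollary follows, and it furnishes the promised theoretical grounding for \algname{Wanda} as the $\mathbf{Y}\to\mathbf{0}$ limit of \algname{SymWanda}.
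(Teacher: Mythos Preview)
Your proposal is correct and matches the paper's approach: the corollary is treated as an immediate specialization of Lemma~\ref{lemma:lm1}, obtained by setting $\mathbf{Y}=\mathbf{0}$ so that $\|\mathbf{Y}_{k:}\|_2=0$ and the score collapses to $|\mathbf{W}_{jk}|\,\|\mathbf{X}_{:j}\|_2$. The paper itself offers no separate proof beyond the one-line remark that this ``directly aligns with the objective in \citet{Wanda},'' so your write-up is, if anything, slightly more detailed than necessary.
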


This directly aligns with the objective in \cite{Wanda}, demonstrating that \algname{Wanda} is a specific case under our broader framework.

\begin{corollary}\label{corollary1} 
    Conversely, choosing $\mathbf{X} = \mathbf{0} \in \mathbb{R}^{a \times b}$ simplifies our pruning method to what we term \emph{output} \algname{Wanda} (denoted as \algname{OWanda}), where the score matrix becomes $\mathbf{S}_{jk} \eqdef |\mathbf{W}_{jk}| \|\mathbf{Y}_{k:}\|_2$. 
\end{corollary}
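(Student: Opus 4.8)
\textbf{Proof proposal for Corollary~\ref{corollary1}.}

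The plan is to mirror exactly the argument used to establish Corollary~\ref{corollary1_2}, exploiting the symmetry built into Lemma~\ref{lemma:lm1}. First I would recall that Lemma~\ref{lemma:lm1} already delivers, for the elimination of a single weight $\mathbf{W}_{jk}$ (with all other entries of $\widetilde{\mathbf{W}}$ unchanged), the clean closed form
\begin{equation*}
g(\widetilde{\mathbf{W}}) = |\mathbf{W}_{jk}| \left( \|\mathbf{X}_{:j}\|_2 + \|\mathbf{Y}_{k:}\|_2 \right).
\end{equation*}
The corollary's hypothesis sets $\mathbf{X} = \mathbf{0} \in \mathbb{R}^{a \times b}$, so the only work is to substitute this into the displayed expression and simplify.

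The key step is the observation that $\|\mathbf{X}_{:j}\|_2 = \|\mathbf{0}\|_2 = 0$ for every column index $j$, since the $j$-th column of the zero matrix is the zero vector. Plugging this into the formula for $g(\widetilde{\mathbf{W}})$ collapses the first summand and leaves
\begin{equation*}
g(\widetilde{\mathbf{W}}) = |\mathbf{W}_{jk}| \, \|\mathbf{Y}_{k:}\|_2 \eqdef \mathbf{S}_{jk},
\end{equation*}
which is precisely the claimed score matrix for \algname{OWanda}. I would also note, for completeness, that one can see this directly from the definition \eqref{obj2} of $g$: with $\mathbf{X} = \mathbf{0}$ the term $\|\mathbf{X}(\widetilde{\mathbf{W}} - \mathbf{W})\|_F$ vanishes identically, so $g(\widetilde{\mathbf{W}}) = \|(\widetilde{\mathbf{W}} - \mathbf{W})\mathbf{Y}\|_F$, and then the single-weight reduction of Lemma~\ref{lemma:lm1} applied to this surviving term reproduces $|\mathbf{W}_{jk}|\,\|\mathbf{Y}_{k:}\|_2$. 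Since the pruning rule ranks weights by $\mathbf{S}_{jk}$, the resulting method is exactly the ``output'' analogue of \algname{Wanda}.

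There is essentially no obstacle here: the statement is a direct specialization of Lemma~\ref{lemma:lm1}, and the only thing to be careful about is bookkeeping on dimensions — confirming that setting $\mathbf{X} = \mathbf{0} \in \mathbb{R}^{a \times b}$ is dimensionally consistent with the product $\mathbf{X}\widetilde{\mathbf{W}}$ in \eqref{obj2}, and that $\mathbf{Y}_{k:}$ denotes the $k$-th row of $\mathbf{Y} \in \mathbb{R}^{c \times d}$ so that $\|\mathbf{Y}_{k:}\|_2$ is well defined. The symmetric structure of the objective $g$ is what makes the proof of this corollary and of Corollary~\ref{corollary1_2} literally the same computation with the roles of $\mathbf{X}$ and $\mathbf{Y}$ swapped, which is precisely the point the ``Symmetric Wanda'' framework is designed to highlight.
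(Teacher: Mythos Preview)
Your proposal is correct and matches the paper's approach exactly: the corollary is stated as an immediate specialization of Lemma~\ref{lemma:lm1}, and the paper gives no separate proof beyond the implicit substitution $\mathbf{X}=\mathbf{0}$ that you carry out. Your additional remarks on the direct route via the definition of $g$ and on dimensional consistency are fine elaborations but not needed.
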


\begin{corollary}\label{corollary2}
By setting $\mathbf{X} = \mathbf{W}^\top \in \mathbb{R}^{c \times b} (a = c)$ and $\mathbf{Y} = \mathbf{W}^\top \in \mathbb{R}^{c \times b} (d=b)$, the score matrix $\mathbf{S}_{jk}$ is redefined as $|\mathbf{W}_{jk}| (\|\mathbf{W}_{j:}\|_2 + \|\mathbf{W}_{:k}\|_2)$. 
\end{corollary}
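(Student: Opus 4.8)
\textbf{Proof proposal for Corollary~\ref{corollary2}.}

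The plan is to simply specialize Lemma~\ref{lemma:lm1} to the particular choice of calibration matrices $\mathbf{X} = \mathbf{W}^\top$ and $\mathbf{Y} = \mathbf{W}^\top$. Recall that Lemma~\ref{lemma:lm1} gives, for the elimination of a single weight $\mathbf{W}_{jk}$ (i.e. setting $\widetilde{\mathbf{W}}_{jk}=0$ and leaving every other entry unchanged),
\begin{equation*}
g(\widetilde{\mathbf{W}}) = |\mathbf{W}_{jk}| \left( \|\mathbf{X}_{:j}\|_2 + \|\mathbf{Y}_{k:}\|_2 \right) \eqdef \mathbf{S}_{jk}.
\end{equation*}
So the whole task is to identify $\|\mathbf{X}_{:j}\|_2$ and $\|\mathbf{Y}_{k:}\|_2$ under the substitution. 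First I would check that the dimensions are consistent: with $\mathbf{W} \in \mathbb{R}^{b\times c}$ we have $\mathbf{W}^\top \in \mathbb{R}^{c\times b}$, which matches the shape requirement $\mathbf{X}\in\mathbb{R}^{a\times b}$ with $a=c$, and likewise $\mathbf{Y}\in\mathbb{R}^{c\times d}$ with $d=b$; this is exactly the parenthetical bookkeeping ($a=c$, $d=b$) stated in the corollary.

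Next, the key computation. The $j$-th column of $\mathbf{X}=\mathbf{W}^\top$ is, by definition of the transpose, the $j$-th row of $\mathbf{W}$, so $\mathbf{X}_{:j} = (\mathbf{W}_{j:})^\top$ and hence $\|\mathbf{X}_{:j}\|_2 = \|\mathbf{W}_{j:}\|_2$. Symmetrically, the $k$-th row of $\mathbf{Y}=\mathbf{W}^\top$ is the $k$-th column of $\mathbf{W}$, so $\mathbf{Y}_{k:} = (\mathbf{W}_{:k})^\top$ and $\|\mathbf{Y}_{k:}\|_2 = \|\mathbf{W}_{:k}\|_2$. Substituting these two identities into the expression from Lemma~\ref{lemma:lm1} yields
\begin{equation*}
\mathbf{S}_{jk} = |\mathbf{W}_{jk}| \left( \|\mathbf{W}_{j:}\|_2 + \|\mathbf{W}_{:k}\|_2 \right),
\end{equation*}
which is precisely the claimed redefinition of the score matrix.

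There is essentially no obstacle here: the corollary is a direct instantiation and the only thing to be careful about is the indexing convention (row versus column of the transpose) and the dimensional matching $a=c$, $d=b$, both of which are routine. If one wanted to be slightly more thorough, I would also remark on why this choice is natural — it makes the pruning score depend only on the pre-trained weights themselves, through the $\ell_2$ norms of the row and column containing the eliminated entry, thereby recovering a symmetric, activation-free analogue of \algname{Wanda} that treats input-side and output-side significance on equal footing. But for the proof proper, the two transpose identities plus Lemma~\ref{lemma:lm1} are all that is needed.
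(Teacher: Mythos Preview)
Your proposal is correct and matches the paper's approach: the corollary is stated as an immediate specialization of Lemma~\ref{lemma:lm1}, and the only content is the transpose identification $\|\mathbf{X}_{:j}\|_2 = \|(\mathbf{W}^\top)_{:j}\|_2 = \|\mathbf{W}_{j:}\|_2$ and $\|\mathbf{Y}_{k:}\|_2 = \|(\mathbf{W}^\top)_{k:}\|_2 = \|\mathbf{W}_{:k}\|_2$, which you carry out exactly as the paper (implicitly) does.
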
 

This configuration suggests an alternative masking approach and segues into a further analysis on how our method encompasses both \algname{Wanda} and \algname{RIA} as special cases. The following theorem provides a provable construction to recover the relative importance design in \cite{RIA}.

\begin{theorem}\label{thm:main2}
    Assuming $a = b$ and $c = d$, consider one of the following strategies:
    \begin{itemize}
        \item $\mathbf{X}_{:j} \eqdef t_{j} (1; \ldots; 1) \in \mathbb{R}^{b \times 1}$ and $\mathbf{Y}_{k:} \eqdef s_k (1, \ldots, 1) \in \mathbb{R}^{1 \times c}$, where $t_j = ({\sqrt{b} \|\mathbf{W}_{j:}\|_1})^{-1}$ and $s_k = ({\sqrt{c} \|\mathbf{W}_{:k}\|_1})^{-1}$.
        \item $\mathbf{X} = \Diag(\|\mathbf{W}_{1:}\|^{-1}_1, \ldots, \|\mathbf{W}_{b:}\|^{-1}_1)$ and $\mathbf{Y} = \Diag(\|\mathbf{W}_{:1}\|^{-1}_1, \ldots, \|\mathbf{W}_{:c}\|^{-1}_1)$.
    \end{itemize}
    For these configurations, the condition $\|\mathbf{X}_{:j}\|_2 + \|\mathbf{Y}_{k:}\|_2 = \alpha_{jk} \eqdef {\|\mathbf{W}_{j:}\|_1^{-1}} + {\|\mathbf{W}_{:k}\|_1^{-1}}$ holds for all $j, k$.
\end{theorem}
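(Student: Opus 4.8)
The statement is a direct computation: under either of the two proposed choices of $\mathbf{X}$ and $\mathbf{Y}$, the quantity $\|\mathbf{X}_{:j}\|_2 + \|\mathbf{Y}_{k:}\|_2$ collapses to the relative-importance coefficient $\alpha_{jk} = \|\mathbf{W}_{j:}\|_1^{-1} + \|\mathbf{W}_{:k}\|_1^{-1}$. The plan is to verify this for each of the two strategies separately, since they differ only in how the scaling is packaged (as repeated constant vectors versus as diagonal matrices), and then combine with Lemma~\ref{lemma:lm1} to read off the score matrix.

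First I would treat the constant-vector strategy. Here $\mathbf{X}_{:j} = t_j(1;\ldots;1) \in \mathbb{R}^{b\times 1}$ with $t_j = (\sqrt{b}\,\|\mathbf{W}_{j:}\|_1)^{-1}$, so $\|\mathbf{X}_{:j}\|_2 = |t_j|\sqrt{b} = \sqrt{b}\,/(\sqrt{b}\,\|\mathbf{W}_{j:}\|_1) = \|\mathbf{W}_{j:}\|_1^{-1}$, using that the $\ell_2$-norm of an all-ones vector of length $b$ is $\sqrt{b}$. Symmetrically, $\mathbf{Y}_{k:} = s_k(1,\ldots,1)\in\mathbb{R}^{1\times c}$ with $s_k = (\sqrt{c}\,\|\mathbf{W}_{:k}\|_1)^{-1}$ gives $\|\mathbf{Y}_{k:}\|_2 = |s_k|\sqrt{c} = \|\mathbf{W}_{:k}\|_1^{-1}$. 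Adding the two yields exactly $\alpha_{jk}$, for every $j,k$. (The factors $\sqrt{b}$ and $\sqrt{c}$ in $t_j, s_k$ are precisely chosen to cancel this dimensional factor — worth a one-line remark.)

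Next I would treat the diagonal strategy. With $\mathbf{X} = \Diag(\|\mathbf{W}_{1:}\|_1^{-1}, \ldots, \|\mathbf{W}_{b:}\|_1^{-1})$, the $j$-th column $\mathbf{X}_{:j}$ is the vector with a single nonzero entry $\|\mathbf{W}_{j:}\|_1^{-1}$ in position $j$, hence $\|\mathbf{X}_{:j}\|_2 = \|\mathbf{W}_{j:}\|_1^{-1}$; here one uses $a=b$ so that the diagonal matrix is square and indexing is consistent. Likewise $\mathbf{Y} = \Diag(\|\mathbf{W}_{:1}\|_1^{-1},\ldots,\|\mathbf{W}_{:c}\|_1^{-1})$ (square since $c=d$) has $k$-th row $\mathbf{Y}_{k:}$ equal to the standard basis vector scaled by $\|\mathbf{W}_{:k}\|_1^{-1}$, so $\|\mathbf{Y}_{k:}\|_2 = \|\mathbf{W}_{:k}\|_1^{-1}$. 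Again the sum is $\alpha_{jk}$.

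Finally, combining either computation with Lemma~\ref{lemma:lm1}, the per-weight score becomes $\mathbf{S}_{jk} = |\mathbf{W}_{jk}|(\|\mathbf{X}_{:j}\|_2 + \|\mathbf{Y}_{k:}\|_2) = |\mathbf{W}_{jk}|\,\alpha_{jk}$, which is exactly the relative-importance pruning metric of \algname{RIA} (modulo the activation factor $\|\mathbf{X}_{:j}\|_2^\alpha$, which is reinstated in the \algname{RIA} row of Table~\ref{tab:comparison} via the diagonal construction). There is no real obstacle here — the only things to be careful about are the bookkeeping of which dimension equality ($a=b$ versus $c=d$) is being invoked where, and noting explicitly that the construction recovers \algname{RIA}'s metric only up to the activation scaling, so the theorem should be read as recovering the \emph{relative importance} component rather than the full \algname{RIA} score. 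I would state that caveat in a sentence following the proof.
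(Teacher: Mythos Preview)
Your proposal is correct and follows essentially the same approach as the paper's own proof: a direct computation of $\|\mathbf{X}_{:j}\|_2$ and $\|\mathbf{Y}_{k:}\|_2$ under each construction, using that the $\ell_2$-norm of a constant vector of length $b$ is $\sqrt{b}$ for the first strategy and that diagonal matrices have single-entry columns/rows for the second. Your additional remarks (the explicit cancellation of the $\sqrt{b},\sqrt{c}$ factors, the role of $a=b$ and $c=d$, and the final link to Lemma~\ref{lemma:lm1} and the \algname{RIA} score) are helpful expository touches but do not change the underlying argument.
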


This theorem elucidates that our methodology can invariably reconstruct the framework of relative importance \algname{RI} in \citep{RIA}, validating the adaptability and breadth of our proposed pruning strategy. 
% Noticed that here for the first design in \Cref{thm:main2}, for simplexity we provide the explicit formulation of the $j$-th row of $\vX$

\subsection{From relative importance (RI) to RI activation}\label{sec:ria}
In \Cref{thm:main2}, we revisit the concept of Relative Importance (\algname{RI}). Specifically, we represent \algname{RI} by the following equation:
$$
\vS_{jk} = {|\vW_{jk}|}{\norm{\vW_{j:}}_1^{-1}} + {|\vW_{jk}|}{\norm{\vW_{:k}}_1^{-1}} \eqdef \algname{RI}_{jk}.
$$

\cite{RIA} also introduces an enhanced version of \algname{RI}, termed RI with Activation (\algname{RIA}), which incorporates the $\ell_2$-norm of activations:

\begin{equation}\label{eqn:ria}
    \algname{RIA}_{jk} = \algname{RI}_{jk} \cdot \norm{\vX_{:j}}_2^\alpha,
\end{equation}

where \(\alpha\) is controlling the strength of activations.

This section aims to explore the derivation of \(\algname{RIA}\) with theoretical grounding in \(\algname{RI}\). To clarify our notation and avoid confusion, we are aiming at finding the suitable $\vA \in \mathbb{R}^{a\times b}$ and $\vB \in \mathbb{R}^{c\times d}$ such as:
$$
\left\|\mathbf{A}_{j:}\right\|_2+\left\|\mathbf{B}_{:k}\right\|_2=\left({\left\|\mathbf{W}_{j:}\right\|_1^{-1}}+{\left\|\mathbf{W}_{:k}\right\|_1^{-1}}\right) \cdot\left\|\mathbf{C}_{:j}\right\|_2^{\alpha},
$$
where $\vC_{:j}$ will be instantiated as $\vX_{:j}$ to satisfy \Cref{eqn:ria}.

% \begin{lemma}\label{lem:ria}
%     Define \(p\) as any valid column index for \(\vA\), and set:
%     $$
%     \vA_{uv}= \begin{cases}\frac{1}{\left\|\mathbf{W}_{j:}\right\|_1} \times\left\|\mathbf{C}_{:j}\right\|_2^\alpha & \text { if } u=j, v=p \\ 0 & \text { otherwise. }\end{cases}
%     $$
    
%     Similarly, define \(s\) as any valid row index for \(\vB\), and let:
%     $$
%     \vB_{uv}= \begin{cases}\frac{1}{\left\|\mathbf{W}_{:k}\right\|_1} \times\left\|\mathbf{C}_{:j}\right\|_2^\alpha & \text { if } u=s, v=k \\ 0 & \text { otherwise. }\end{cases}
%     $$
%     We recover the relative importance activation in \Cref{eqn:ria}.
% \end{lemma}
\begin{lemma}\label{lem:ria}
Let $p$ be a valid column index for $\mathbf{A}$. Define $\mathbf{A}_{uv} = 0$ for all $(u,v)\neq (j,p)$, and 
$
\mathbf{A}_{j,p} 
= {\|\mathbf{C}_{:j}\|_2^\alpha}{\|\mathbf{W}_{j:}\|_1^{-1}}.
$
Similarly, let $s$ be a valid row index for $\mathbf{B}$. Define $\mathbf{B}_{uv} = 0$ for all $(u,v)\neq (s,k)$, and 
$
\mathbf{B}_{s,k}
= {\|\mathbf{C}_{:j}\|_2^\alpha}{\|\mathbf{W}_{:k}\|_1^{-1}}.
$
Then we recover \Cref{eqn:ria}.
\end{lemma}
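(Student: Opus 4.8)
\textbf{Proof strategy for Lemma~\ref{lem:ria}.}

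The plan is to verify directly that the proposed sparse matrices $\mathbf{A}$ and $\mathbf{B}$ satisfy the target identity
$$
\|\mathbf{A}_{j:}\|_2 + \|\mathbf{B}_{:k}\|_2 = \left(\|\mathbf{W}_{j:}\|_1^{-1} + \|\mathbf{W}_{:k}\|_1^{-1}\right)\|\mathbf{C}_{:j}\|_2^\alpha,
$$
and then to plug this into the single-weight formula from Lemma~\ref{lemma:lm1} with the substitution $\mathbf{X} \mapsto \mathbf{A}$, $\mathbf{Y} \mapsto \mathbf{B}$, $\mathbf{C}_{:j} \mapsto \mathbf{X}_{:j}$ to recover the score $\algname{RIA}_{jk} = \algname{RI}_{jk}\cdot\|\mathbf{X}_{:j}\|_2^\alpha$ of \eqref{eqn:ria}.

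First I would compute the relevant row and column norms of the two structured matrices. Since $\mathbf{A}$ is defined to vanish everywhere except at the single entry $(j,p)$, the $j$-th row $\mathbf{A}_{j:}$ has exactly one nonzero coordinate, namely $\mathbf{A}_{j,p} = \|\mathbf{C}_{:j}\|_2^\alpha \|\mathbf{W}_{j:}\|_1^{-1}$, so $\|\mathbf{A}_{j:}\|_2 = |\mathbf{A}_{j,p}| = \|\mathbf{C}_{:j}\|_2^\alpha \|\mathbf{W}_{j:}\|_1^{-1}$ (all the quantities involved are nonnegative, so the absolute value is immaterial). Symmetrically, $\mathbf{B}$ vanishes except at $(s,k)$, so the $k$-th column $\mathbf{B}_{:k}$ has the single nonzero entry $\mathbf{B}_{s,k} = \|\mathbf{C}_{:j}\|_2^\alpha \|\mathbf{W}_{:k}\|_1^{-1}$, giving $\|\mathbf{B}_{:k}\|_2 = \|\mathbf{C}_{:j}\|_2^\alpha \|\mathbf{W}_{:k}\|_1^{-1}$. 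Adding these two and factoring out $\|\mathbf{C}_{:j}\|_2^\alpha$ yields exactly the target identity.

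Next I would invoke Lemma~\ref{lemma:lm1} in the form with $\mathbf{X} = \mathbf{A}$ and $\mathbf{Y} = \mathbf{B}$: eliminating the single weight $\mathbf{W}_{jk}$ produces the score $\mathbf{S}_{jk} = |\mathbf{W}_{jk}|\left(\|\mathbf{A}_{:j}\|_2 + \|\mathbf{B}_{k:}\|_2\right)$ — here one must be careful to match the index conventions of Lemma~\ref{lemma:lm1}, where $\mathbf{X}_{:j}$ is the $j$-th column and $\mathbf{Y}_{k:}$ the $k$-th row; the construction in the lemma statement places the nonzero entry of $\mathbf{A}$ so that its $j$-th \emph{column} carries it and the nonzero entry of $\mathbf{B}$ so that its $k$-th \emph{row} carries it, so these norms are precisely the ones computed above. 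Substituting, $\mathbf{S}_{jk} = |\mathbf{W}_{jk}|\left(\|\mathbf{W}_{j:}\|_1^{-1} + \|\mathbf{W}_{:k}\|_1^{-1}\right)\|\mathbf{C}_{:j}\|_2^\alpha = \algname{RI}_{jk}\cdot\|\mathbf{C}_{:j}\|_2^\alpha$, and with $\mathbf{C}_{:j} \equiv \mathbf{X}_{:j}$ this is \eqref{eqn:ria}.

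The only real subtlety — and the step I would be most careful about — is the bookkeeping of which index ($p$ versus $s$, row versus column) the single nonzero entry of each structured matrix sits in, so that the norms appearing in Lemma~\ref{lemma:lm1} are the $j$-indexed and $k$-indexed ones respectively, and so that the auxiliary indices $p, s$ drop out (they are free, which is why the lemma says ``let $p$ be a valid column index'' and ``let $s$ be a valid row index''). Everything else is a one-line norm computation; there is no inequality or approximation involved, so the proof is an exact identity check rather than an estimate.
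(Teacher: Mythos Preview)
Your proposal is correct and mirrors the paper's proof: compute $\|\mathbf{A}_{j:}\|_2$ and $\|\mathbf{B}_{:k}\|_2$ from the single nonzero entry in each, add them, and factor out $\|\mathbf{C}_{:j}\|_2^\alpha$ to get the target identity. The paper stops there (it only verifies the displayed identity preceding the lemma), so your additional step of plugging back into Lemma~\ref{lemma:lm1} is extra; the row/column bookkeeping you flag is a genuine notational inconsistency in the paper between Lemma~\ref{lemma:lm1} (which uses $\|\mathbf{X}_{:j}\|_2+\|\mathbf{Y}_{k:}\|_2$) and Section~\ref{sec:ria} (which targets $\|\mathbf{A}_{j:}\|_2+\|\mathbf{B}_{:k}\|_2$), not a gap in your argument.
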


The nonzero element in \(\mathbf{A}\) ensures that the \(\ell_2\)-norm of the \(j\)-th row of \(\mathbf{A}\) is:
$
\left\|\mathbf{A}_{j:}\right\|_2={\left\|\mathbf{W}_{j:}\right\|_1^{-1}} \cdot \left\|\mathbf{C}_{:j}\right\|_2^\alpha.
$
Similarly, the nonzero element in \(\mathbf{B}\) ensures that the \(\ell_2\)-norm of the \(k\)-th column of \(\mathbf{B}\) is:
$
\left\|\mathbf{B}_{:k}\right\|_2={\left\|\mathbf{W}_{:k}\right\|_1^{-1}} \cdot\left\|\mathbf{C}_{:j}\right\|_2^\alpha.
$
Combining these norms fulfills the intended equation.

\subsection{General solution}\label{sec:general_solution}
In \Cref{thm:main2}, we presented two distinct strategies for recovering the relative importance as described in \cite{RIA}. Following this, in \Cref{lem:ria}, we constructed a method that accounts for both the weights and the input activations. Inspired by the diagonal design in \Cref{thm:main2}, we now propose a general variant that considers both the weights and the activations.

Given that $\mathbf{D}_{\mathbf{X}} \in \mathbb{R}^{b\times b}$ and $\mathbf{D}_{\mathbf{Y}} \in \mathbb{R}^{c\times c}$ are diagonal matrices with entries defined as $\left(\mathbf{D}_{\mathbf{X}}\right)_{ii} = x_i = \left\|\mathbf{W}_{i:}\right\|_1^{-1}$ and $\left(\mathbf{D}_{\mathbf{Y}}\right)_{ii} = y_i = \left\|\mathbf{W}_{:i}\right\|_1^{-1}$ respectively, and $\mathbf{A}\in \mathbb{R}^{a\times b}$ and $\mathbf{B}\in \mbR^{c\times d}$ are arbitrary matrices, our objective is to compute the sum of norms:
$
\left\|\left(\mathbf{A} \mathbf{D}_{\mathbf{X}}\right)_{:j}\right\|_2 + \left\|\left(\mathbf{D}_{\mathbf{Y}} \mathbf{B}\right)_{k:}\right\|_2.
$

\begin{lemma}\label{lem:general2}
    Given the above definition, we show 
    $$
    \left\|\left(\mathbf{A} \mathbf{D}_{\mathbf{X}}\right)_{:j}\right\|_2 + \left\|\left(\mathbf{D}_{\mathbf{Y}} \mathbf{B}\right)_{k:}\right\|_2 = \frac{\left\|\mathbf{A}_{:j}\right\|_2}{\left\|\mathbf{W}_{j:}\right\|_1} + \frac{\left\|\mathbf{B}_{k:}\right\|_2}{\left\|\mathbf{W}_{:k}\right\|_1}.
    $$
\end{lemma}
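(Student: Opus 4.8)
The claim in Lemma~\ref{lem:general2} is a direct computation, so the plan is simply to expand the two norms using the diagonal structure of $\mathbf{D}_{\mathbf{X}}$ and $\mathbf{D}_{\mathbf{Y}}$. First I would recall that for a matrix $\mathbf{M}$, multiplying on the right by a diagonal matrix $\mathbf{D}_{\mathbf{X}} = \Diag(x_1,\dots,x_b)$ scales the $i$-th column of $\mathbf{M}$ by $x_i$; in particular $\left(\mathbf{A}\mathbf{D}_{\mathbf{X}}\right)_{:j} = x_j \mathbf{A}_{:j}$. Likewise, left-multiplication by $\mathbf{D}_{\mathbf{Y}} = \Diag(y_1,\dots,y_c)$ scales the $k$-th row, so $\left(\mathbf{D}_{\mathbf{Y}}\mathbf{B}\right)_{k:} = y_k \mathbf{B}_{k:}$.

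Next I would take Euclidean norms and pull out the (nonnegative) scalars:
\begin{equation*}
\left\|\left(\mathbf{A}\mathbf{D}_{\mathbf{X}}\right)_{:j}\right\|_2 = |x_j|\,\|\mathbf{A}_{:j}\|_2 = \frac{\|\mathbf{A}_{:j}\|_2}{\|\mathbf{W}_{j:}\|_1}, \qquad
\left\|\left(\mathbf{D}_{\mathbf{Y}}\mathbf{B}\right)_{k:}\right\|_2 = |y_k|\,\|\mathbf{B}_{k:}\|_2 = \frac{\|\mathbf{B}_{k:}\|_2}{\|\mathbf{W}_{:k}\|_1},
\end{equation*}
where I used $x_j = \|\mathbf{W}_{j:}\|_1^{-1} \geq 0$ and $y_k = \|\mathbf{W}_{:k}\|_1^{-1} \geq 0$, so the absolute values can be dropped. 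Adding the two identities yields exactly the stated formula.

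There is essentially no obstacle here beyond bookkeeping with row/column indices; the only point requiring a word of care is the implicit assumption that $\|\mathbf{W}_{j:}\|_1$ and $\|\mathbf{W}_{:k}\|_1$ are nonzero so that $\mathbf{D}_{\mathbf{X}}$, $\mathbf{D}_{\mathbf{Y}}$ are well defined, which is harmless in the pruning setting since a zero row or column of $\mathbf{W}$ would already be fully sparse. I would state this nondegeneracy explicitly and then the proof is complete.
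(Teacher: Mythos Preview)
Your proposal is correct and follows essentially the same approach as the paper's own proof: both arguments use that right-multiplication by the diagonal $\mathbf{D}_{\mathbf{X}}$ scales the $j$-th column of $\mathbf{A}$ by $x_j$, left-multiplication by $\mathbf{D}_{\mathbf{Y}}$ scales the $k$-th row of $\mathbf{B}$ by $y_k$, and then pull the nonnegative scalars out of the $\ell_2$-norms before substituting $x_j=\|\mathbf{W}_{j:}\|_1^{-1}$ and $y_k=\|\mathbf{W}_{:k}\|_1^{-1}$. Your added remark about the nondegeneracy of $\|\mathbf{W}_{j:}\|_1$ and $\|\mathbf{W}_{:k}\|_1$ is a small extra clarification not spelled out in the paper.
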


The utilization of the diagonal matrices $\mathbf{D}_{\mathbf{X}}$ and $\mathbf{D}_{\mathbf{Y}}$ simplifies the sum of the norms to the expressions derived above, offering insights into the influence of the weight matrix $\mathbf{W}$ on the norms of matrix transformations.

\subsection{Enhanced relative importance strategies}\label{sec:strategies}
Beyond \algname{RIA}, we propose several alternative strategies for relative importance that aim to minimize $\vS_{jk}$ in \Cref{eqn0}.

\paragraph{Generalized $\ell_p$-norm.}\label{sec:p_norm}
Expanding beyond the conventional $\ell_1$-norm, we explore the utility of the $\ell_p$-norm in designing score matrices. %We propose two variations:
% $\bullet$ In one approach, the score matrix is weighted directly by the weights, analogous to the method discussed in \Cref{corollary1}. The score for each weight is calculated as:
% \begin{equation}\label{eqn:pnorm1}
%     \mathbf{S}_{jk} = |\mathbf{W}_{jk}| (\|\mathbf{W}_{j:}\|_p + \|\mathbf{W}_{:k}\|_p).
% \end{equation}
% $\bullet$ In another, mirroring the strategy outlined in Theorem \ref{thm:main2} for reconstructing \algname{RIA} outcomes, we define the score as:
In our approach, mirroring the strategy outlined in Theorem \ref{thm:main2} for reconstructing \algname{RIA} outcomes, we define the score as:

\begin{equation}\label{eqn:pnorm2}
    \mathbf{S}_{jk} = |\mathbf{W}_{jk}| (\|\mathbf{W}_{j:}\|^{-1}_p + \|\mathbf{W}_{:k}\|^{-1}_p).
\end{equation}

Next, we are interested in finding the explicit formulation of $\vX$ and $\vY$ instead of the norm representation when constructing the general $\ell_p$-norm. 

% \begin{theorem}[Generalized p-norm]\label{thm:generalized_p_norm}
%     If $\mathbf{X}_{: j}=\frac{\left\|\mathbf{W}_{j:}\right\|_p}{\left\|\mathbf{W}_{j:}\right\|_2} \cdot \mathbf{W}_{j:}^{\top} \cdot$ and $\mathbf{Y}_{k:}=\frac{\left\|\mathbf{W}_{: k}\right\|_p}{\left\|\mathbf{W}_{: k}\right\|_2} \cdot \mathbf{W}_{: k}^{\top}$; or 
% \end{theorem}

\begin{lemma}[Generalized $\ell_p$-norm]\label{lem:generalized_p_norm}
    % Let $\mathbf{X}_{: j}=\frac{\left\|\mathbf{W}_{j:}\right\|_p}{\left\|\mathbf{W}_{j:}\right\|_2} \cdot \mathbf{W}_{j:}^{\top}$ and $\mathbf{Y}_{k:}=\frac{\left\|\mathbf{W}_{: k}\right\|_p}{\left\|\mathbf{W}_{: k}\right\|_2} \cdot \mathbf{W}_{: k}^{\top}$, we show \Cref{eqn:pnorm1}.
    
    Let $\mathbf{X}_{: j}={\left\|\mathbf{W}_{j:}\right\|_p^{-1} \cdot\left\|\mathbf{W}_{j:}\right\|_2^{-1}} \cdot \mathbf{W}_{j:}^{\top}$ and $\mathbf{Y}_{k:}={\left\|\mathbf{W}_{:k}\right\|_p^{-1} \cdot\left\|\mathbf{W}_{:k}\right\|_2^{-1}} \cdot \mathbf{W}_{:k}^{\top}$, we recover \Cref{eqn:pnorm2}.
\end{lemma}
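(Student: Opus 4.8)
\textbf{Proof plan for Lemma~\ref{lem:generalized_p_norm}.}

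The plan is to verify, by direct computation, that the stated choices of $\mathbf{X}_{:j}$ and $\mathbf{Y}_{k:}$ produce exactly the score formula \eqref{eqn:pnorm2} when substituted into the single-weight expression \eqref{eqn0} of Lemma~\ref{lemma:lm1}. Recall that Lemma~\ref{lemma:lm1} tells us that eliminating the single weight $\mathbf{W}_{jk}$ (i.e.\ setting $\widetilde{\mathbf{W}}_{jk}=0$, all else unchanged) gives $g(\widetilde{\mathbf{W}}) = |\mathbf{W}_{jk}|\big(\|\mathbf{X}_{:j}\|_2 + \|\mathbf{Y}_{k:}\|_2\big)$. So the entire task reduces to showing that, with the prescribed $\mathbf{X}_{:j}$ and $\mathbf{Y}_{k:}$, we have $\|\mathbf{X}_{:j}\|_2 = \|\mathbf{W}_{j:}\|_p^{-1}$ and $\|\mathbf{Y}_{k:}\|_2 = \|\mathbf{W}_{:k}\|_p^{-1}$; the sum then matches $\alpha_{jk}$ in \eqref{eqn:pnorm2} and we conclude.

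The key step is the norm computation. For the first one, $\mathbf{X}_{:j} = \|\mathbf{W}_{j:}\|_p^{-1}\,\|\mathbf{W}_{j:}\|_2^{-1}\,\mathbf{W}_{j:}^{\top}$ is just the $j$-th row of $\mathbf{W}$, transposed into a column vector, scaled by the scalar $\|\mathbf{W}_{j:}\|_p^{-1}\,\|\mathbf{W}_{j:}\|_2^{-1}$. Taking the Euclidean norm, the scalar comes out in absolute value (it is positive), and $\|\mathbf{W}_{j:}^{\top}\|_2 = \|\mathbf{W}_{j:}\|_2$, so
\begin{equation*}
\|\mathbf{X}_{:j}\|_2 = \|\mathbf{W}_{j:}\|_p^{-1}\,\|\mathbf{W}_{j:}\|_2^{-1}\cdot\|\mathbf{W}_{j:}\|_2 = \|\mathbf{W}_{j:}\|_p^{-1}.
\end{equation*}
The factor $\|\mathbf{W}_{j:}\|_2^{-1}$ is precisely the normalization chosen to cancel the $\ell_2$-norm that the Frobenius/Euclidean norm in \eqref{eqn0} inevitably introduces, leaving only the desired $\ell_p$ factor. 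The computation for $\mathbf{Y}_{k:} = \|\mathbf{W}_{:k}\|_p^{-1}\,\|\mathbf{W}_{:k}\|_2^{-1}\,\mathbf{W}_{:k}^{\top}$ is entirely symmetric, with the $k$-th column of $\mathbf{W}$ playing the role of the row, giving $\|\mathbf{Y}_{k:}\|_2 = \|\mathbf{W}_{:k}\|_p^{-1}$.

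Substituting both into Lemma~\ref{lemma:lm1} yields $g(\widetilde{\mathbf{W}}) = |\mathbf{W}_{jk}|\big(\|\mathbf{W}_{j:}\|_p^{-1} + \|\mathbf{W}_{:k}\|_p^{-1}\big) = \mathbf{S}_{jk}$, which is \eqref{eqn:pnorm2}. There is no real obstacle here; the only subtlety worth stating explicitly is the dimensional bookkeeping — one must check that the prescribed $\mathbf{X}_{:j}$ lives in $\mathbb{R}^{b\times 1}$ (which holds since $\mathbf{W}_{j:}^{\top}\in\mathbb{R}^{b\times 1}$, using $a=b$ as in the standing assumption of Theorem~\ref{thm:main2}) and similarly $\mathbf{Y}_{k:}\in\mathbb{R}^{1\times c}$, so that the column/row slots in \eqref{eqn0} are populated consistently — and the trivial edge case $\mathbf{W}_{j:}=0$ or $\mathbf{W}_{:k}=0$, which can be excluded or handled by convention since then the corresponding relative-importance term is not well defined. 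I would present the row case in full and remark that the column case follows by the transpose symmetry of the construction.
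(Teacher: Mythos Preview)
Your proposal is correct and follows essentially the same approach as the paper: both arguments reduce to the one-line norm computation $\|\mathbf{X}_{:j}\|_2 = \|\mathbf{W}_{j:}\|_p^{-1}\|\mathbf{W}_{j:}\|_2^{-1}\cdot\|\mathbf{W}_{j:}\|_2 = \|\mathbf{W}_{j:}\|_p^{-1}$ (and symmetrically for $\mathbf{Y}_{k:}$), after which substitution into Lemma~\ref{lemma:lm1} yields \eqref{eqn:pnorm2}. Your additional remarks on the dimensional compatibility ($a=b$, $d=c$) and the degenerate case $\mathbf{W}_{j:}=0$ or $\mathbf{W}_{:k}=0$ are not spelled out in the paper's proof but are accurate and worth keeping.
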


% We have alternative choices to construct \Cref{eqn:pnorm2}. For simplexity, we focus on \Cref{eqn:pnorm2} is almost the same by considering the negative exponent, which is almost staightforward. 

Since the equation only requires $\left\|\mathbf{X}_{: j}\right\|_2=\left\|\mathbf{W}_j\right\|_p^{-1}$, \emph{any} vector with this $\ell_2$-norm will satisfy the condition. Inspired by this fact, we can consider the random unit vector scaling in the below lemma. 

\begin{lemma}[Random unit vector scaling]\label{lem:random_unit_vector_scaling}
    Choose any unit vector $\mathbf{u}, \mathbf{v}$ (i.e., $\|\mathbf{u}\|_2=1, \norm{\mathbf{v}}_2=1$) and set $\mathbf{X}_{: j}=\left\|\mathbf{W}_{j:}\right\|_p^{-1} \cdot \mathbf{u}$ and $\mathbf{Y}_{k:}=\left\|\mathbf{W}_{:k}\right\|_p^{-1} \cdot \mathbf{v}$ ensuring \Cref{eqn:pnorm2}.
\end{lemma}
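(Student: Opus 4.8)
\textbf{Proof plan for Lemma~\ref{lem:random_unit_vector_scaling}.}
The plan is to verify directly that the choice of $\mathbf{X}_{:j}$ and $\mathbf{Y}_{k:}$ produces exactly the score $\mathbf{S}_{jk}$ in \eqref{eqn:pnorm2} when we apply Lemma~\ref{lemma:lm1}. Recall that Lemma~\ref{lemma:lm1} tells us that eliminating a single weight $\mathbf{W}_{jk}$ incurs cost $\mathbf{S}_{jk} = |\mathbf{W}_{jk}|\big(\|\mathbf{X}_{:j}\|_2 + \|\mathbf{Y}_{k:}\|_2\big)$, so the entire task reduces to computing the two Euclidean norms $\|\mathbf{X}_{:j}\|_2$ and $\|\mathbf{Y}_{k:}\|_2$ under the stated definitions.

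First I would compute $\|\mathbf{X}_{:j}\|_2$. Since $\mathbf{X}_{:j} = \|\mathbf{W}_{j:}\|_p^{-1}\cdot \mathbf{u}$ with $\mathbf{u}$ a fixed unit vector, homogeneity of the norm gives $\|\mathbf{X}_{:j}\|_2 = \|\mathbf{W}_{j:}\|_p^{-1}\,\|\mathbf{u}\|_2 = \|\mathbf{W}_{j:}\|_p^{-1}$, using $\|\mathbf{u}\|_2 = 1$. Symmetrically, $\mathbf{Y}_{k:} = \|\mathbf{W}_{:k}\|_p^{-1}\cdot\mathbf{v}$ with $\|\mathbf{v}\|_2 = 1$ yields $\|\mathbf{Y}_{k:}\|_2 = \|\mathbf{W}_{:k}\|_p^{-1}$. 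Substituting these into the expression from Lemma~\ref{lemma:lm1} gives $\mathbf{S}_{jk} = |\mathbf{W}_{jk}|\big(\|\mathbf{W}_{j:}\|_p^{-1} + \|\mathbf{W}_{:k}\|_p^{-1}\big)$, which is precisely \eqref{eqn:pnorm2}. One should also note, as a sanity check, that this matches Lemma~\ref{lem:generalized_p_norm}: there the specific choice $\mathbf{X}_{:j} = \|\mathbf{W}_{j:}\|_p^{-1}\|\mathbf{W}_{j:}\|_2^{-1}\mathbf{W}_{j:}^\top$ is just the special case $\mathbf{u} = \mathbf{W}_{j:}^\top/\|\mathbf{W}_{j:}\|_2$, so the present lemma genuinely generalizes it by allowing an arbitrary unit vector.

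There is essentially no obstacle here — the result is a one-line consequence of positive homogeneity of the $\ell_2$-norm combined with Lemma~\ref{lemma:lm1}. The only point requiring a word of care is dimensional consistency: the lemma implicitly assumes $\mathbf{u}\in\mathbb{R}^b$ and $\mathbf{v}\in\mathbb{R}^c$ (or $\mathbb{R}^d$ after the transpose conventions of the table), so that $\mathbf{X}_{:j}$ and $\mathbf{Y}_{k:}$ have the shapes demanded by the factorization $\mathbf{X}\in\mathbb{R}^{a\times b}$, $\mathbf{Y}\in\mathbb{R}^{c\times d}$ with $a=b$, $c=d$ as in Theorem~\ref{thm:main2}; I would state this compatibility assumption explicitly at the start of the proof and then the computation goes through verbatim. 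I would therefore present the argument in three short sentences: invoke Lemma~\ref{lemma:lm1}, evaluate the two norms by homogeneity, and conclude.
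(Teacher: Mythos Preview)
Your proposal is correct and follows essentially the same approach as the paper: both compute $\|\mathbf{X}_{:j}\|_2$ and $\|\mathbf{Y}_{k:}\|_2$ directly via positive homogeneity of the $\ell_2$-norm and the unit-vector assumption, then plug into Lemma~\ref{lemma:lm1} to recover \eqref{eqn:pnorm2}. Your added remarks on dimensional consistency and on how this generalizes Lemma~\ref{lem:generalized_p_norm} are extra polish the paper does not include, but the core argument is identical.
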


% This lemma shows that for any unit vector, it is possible to construct feasible matrices that satisfy \Cref{eqn:pnorm2}.

\paragraph{Stochastic relative importance.}\label{sec:StochRIA}
Considering the computational and noise challenges associated with summing all elements across the full rows and columns of large matrices, we introduce a stochastic approach that involves sampling a subset of each row and column. This method assesses the effects of varying subset sizes, denoted by $\tau$, where $\tau < \min(b, c)$, on the overall performance. 

Specifically, we aim to:

a) Evaluate the sensitivity of the final performance to the size of $\tau$ when $\tau$ is reasonably large.

b) Determine if random sampling can enhance the results compared to a deterministic approach.

For this, we define the score matrix for a randomly sampled subset as:
\begin{equation}\label{eqn:stochria}
    \mathbf{S}_{jk} = |\mathbf{W}_{jk}| (\|\mathbf{W}_{j:S_j}\|_1^{-1} + \|\mathbf{W}_{S_k:k}\|_1^{-1}),
\end{equation}

where $S_j$ and $S_k$ represent the sampled indices from the $j$-th row and $k$-th column, respectively, each with a cardinality of $\tau$. This approach builds on the \algname{RIA}-inspired framework, adapting it for practical scenarios involving large-scale data.

For \algname{RIA} in each weight layer, the reweighting sampling complexity is $O(b + c)$. In LLMs, $b$ and $c$ are always very large. Let's say the selection ratio is $\beta$, then for the stochastic relative importance design, the sampling complexity can be reduced to $O(\beta \min (b, c))$, which has been highly reduced. 

% \begin{lemma}\label{lem:stochria}
%     Let 
%     $
%     \begin{cases}
%         \dfrac{1}{\left\| \mathbf{W}_{j:S_j} \right\|_1} \cdot \dfrac{1}{\sqrt{\tau}} & \text{if } i \in S_j, \\
%         0 & \text{otherwise}.
%     \end{cases}
%     $  
%     $
%     \begin{cases}
%         \dfrac{1}{\left\| \mathbf{W}_{S_k:k} \right\|_1} \cdot \dfrac{1}{\sqrt{\tau}} & \text{if } i \in S_k, \\
%         0 & \text{otherwise}.
%     \end{cases}
%     $
%     recovers \Cref{eqn:stochria}.
% \end{lemma}

% \begin{lemma}\label{lem:stochria}
% Let $S_j$ and $S_k$ be given index sets, and let $\tau > 0$. Define the vectors $\vX_{:j}$ and $\vY_{k:}$ entrywise by
% $$
% \vX_{:j}(i) =
% \begin{cases}
% \dfrac{1}{\|\mathbf{W}_{j:S_j}\|_{1}\sqrt{\tau}}, & \text{if } i \in S_j,\\[8pt]
% 0, & \text{otherwise},
% \end{cases}
% $$

% $$
% \vY_{k:}(i) =
% \begin{cases}
% \dfrac{1}{\|\mathbf{W}_{S_k:k}\|_{1}\sqrt{\tau}}, & \text{if } i \in S_k,\\[8pt]
% 0, & \text{otherwise}.
% \end{cases}
% $$
% Then these vectors satisfy \Cref{eqn:stochria}.
% \end{lemma}
\begin{lemma}\label{lem:stochria}
Let $S_j$ and $S_k$ be index sets, and let $\tau > 0$. Define the vectors $\vX_{:j}$ and $\vY_{k:}$ by
$$
\vX_{:j}(i) 
= \frac{\mathbf{1}_{\{i \in S_j\}}}{\|\mathbf{W}_{j:S_j}\|_{1}\sqrt{\tau}},
\quad
\vY_{k:}(i) 
= \frac{\mathbf{1}_{\{i \in S_k\}}}{\|\mathbf{W}_{S_k:k}\|_{1}\sqrt{\tau}}.
$$
Then these vectors satisfy \Cref{eqn:stochria}.
\end{lemma}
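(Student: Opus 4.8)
The claim in Lemma~\ref{lem:stochria} is a direct verification: we must show that the two vectors $\vX_{:j}$ and $\vY_{k:}$ defined through indicator functions on the sampled index sets produce exactly the score $\vS_{jk}$ displayed in \Cref{eqn:stochria}. Recall from Lemma~\ref{lemma:lm1} that eliminating a single weight $\vW_{jk}$ yields the general score $\vS_{jk} = |\vW_{jk}|\bigl(\|\vX_{:j}\|_2 + \|\vY_{k:}\|_2\bigr)$. So the entire task reduces to computing the two $\ell_2$-norms $\|\vX_{:j}\|_2$ and $\|\vY_{k:}\|_2$ under the stated definitions and checking they equal $\|\vW_{j:S_j}\|_1^{-1}$ and $\|\vW_{S_k:k}\|_1^{-1}$, respectively.

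First I would compute $\|\vX_{:j}\|_2^2$. Since $\vX_{:j}(i) = \mathbf{1}_{\{i\in S_j\}}\bigl(\|\vW_{j:S_j}\|_1\sqrt{\tau}\bigr)^{-1}$, every nonzero entry has the \emph{same} magnitude $\bigl(\|\vW_{j:S_j}\|_1\sqrt{\tau}\bigr)^{-1}$, and there are exactly $|S_j| = \tau$ such entries. Hence
\begin{equation*}
\|\vX_{:j}\|_2^2 = \sum_{i\in S_j}\frac{1}{\|\vW_{j:S_j}\|_1^2\,\tau} = \tau\cdot\frac{1}{\|\vW_{j:S_j}\|_1^2\,\tau} = \frac{1}{\|\vW_{j:S_j}\|_1^2},
\end{equation*}
so $\|\vX_{:j}\|_2 = \|\vW_{j:S_j}\|_1^{-1}$. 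The identical computation for $\vY_{k:}$, using $|S_k| = \tau$, gives $\|\vY_{k:}\|_2 = \|\vW_{S_k:k}\|_1^{-1}$. Substituting both into the general single-weight score from Lemma~\ref{lemma:lm1} yields $\vS_{jk} = |\vW_{jk}|\bigl(\|\vW_{j:S_j}\|_1^{-1} + \|\vW_{S_k:k}\|_1^{-1}\bigr)$, which is precisely \Cref{eqn:stochria}.

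There is no real obstacle here; the only point requiring a line of care is making explicit that $\sqrt{\tau}$ in the denominator is exactly calibrated so that the cardinality factor $\tau$ from summing $|S_j|=\tau$ equal contributions cancels against the $\tau$ inside the square, leaving a clean $\ell_1$-norm reciprocal. I would also note, for completeness, that the construction is the stochastic analogue of the diagonal/constant-vector constructions in \Cref{thm:main2} and \Cref{lem:generalized_p_norm}: replacing the full-row $\ell_1$-norm $\|\vW_{j:}\|_1$ by its subsampled version $\|\vW_{j:S_j}\|_1$ amounts to restricting the support of the calibration vector to $S_j$ and renormalizing, and the $\sqrt{\tau}$ scaling is what makes the two-norm of a $\tau$-sparse uniform vector match the target. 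This closes the proof.
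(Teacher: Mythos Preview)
Your proposal is correct and matches the paper's own proof essentially line for line: the paper introduces the unit-norm support vector $\mathbf{e}_{S_j}$ with entries $1/\sqrt{\tau}$ on $S_j$, writes $\vX_{:j}=\|\vW_{j:S_j}\|_1^{-1}\mathbf{e}_{S_j}$, and verifies $\|\vX_{:j}\|_2=\|\vW_{j:S_j}\|_1^{-1}$ exactly as you do, then repeats for $\vY_{k:}$ and sums. Your remark about the $\sqrt{\tau}$ calibration and the analogy to \Cref{thm:main2} and \Cref{lem:generalized_p_norm} is also in the spirit of the paper, which likewise records an alternative construction paralleling \Cref{lem:generalized_p_norm}.
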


\subsection{Training-free fine-tuning}\label{sec:training_free_fine_tuning}
We explore training-free fine-tuning within the context of the pruning-and-growing framework. Specifically, for the pruned weight matrix $\widetilde{\vW}$, we aim to minimize the reconstruction error as defined in (\ref{obj2}). Initially, we identify the growth index, followed by the pruning index, to maintain a consistent sparsity ratio. \algname{DSnoT} \citep{zhang2023dynamic} developed a growing criterion based on the expected change in reconstruction error when reinstating a weight. Particularly, for any given weight row $q\in [1, b]$, the index $i$ is determined as follows:
$$
i = \argmax_r \ \mathrm{sign}(\ec{\epsilon_q}) \cdot \widetilde{\vW}_{q, r} \cdot {\ec{\vX_q}}/{\mathrm{Var}(\vX_q)},
$$

where $\epsilon_q \coloneqq \vW_{q:}\vX - \widetilde{\vW}_{q:}\vX$ denotes the reconstruction error of the $q$-th row across different input activations. It is important to note that for simplicity, output activations are not considered here, which may provide an interesting avenue for future exploration. The functions $\mathrm{sign}(\cdot)$, $\ec{\cdot}$, and $\mathrm{Var}(\cdot)$ denote the standard sign function, expectation, and variance of given inputs over $N \times L$ tokens, respectively. Drawing inspiration from the \algname{Wanda} metric, the \algname{DSnoT} model defines the pruning index $j$ as:
$$
j=\underset{r: \Delta(q, r)<0}{\arg \min } |\widetilde{\vW}_{q, r}|\left\|\vX_q\right\|_2,
$$
where $\Delta(q, r) \eqdef \mathrm{sign}\!\bigl(\ec{\epsilon_q}\bigr)\,\bigl(\widetilde{\vW}_{q, r}\cdot\ec{\vX_q}\bigr)$. 

Several simple yet effective modifications have been incorporated into the pruning-and-growing framework:

\textbf{a) Relative weight importance.} Both in determining the growing index $i$ and the pruning index $j$, we incorporate global information, emphasizing the relative importance of weights in neuron selection.

\textbf{b) Squared activation.} Our extensive experiments demonstrate the widespread benefits of using squared activation, which we utilize in determining the pruning index $j$.
 
\textbf{c) Regularized objective.} The method \algname{MagR} \citep{zhang2024magr} found that adding an $\ell_{\infty}$ norm helps reduce the magnitude of weights during quantization. Here, we adopt a more general regularizer, considering a general $\ell_p$ norm and focusing on specific rows rather than entire layers to reduce communication costs.

% Define $\vD_{q, r}\eqdef \left(\left\|\widetilde{\mathbf{W}}_{q,:}\right\|_1^{-1}+\left\|\widetilde{\mathbf{W}}_{:, r}\right\|_1^{-1}\right)^{-1}$. The updated rule for identifying the growing index $i$ is formalized as:
Define $\vD_{q, r}\eqdef \|\widetilde{\mathbf{W}}_{q,:}\|_1^{-1}+\|\widetilde{\mathbf{W}}_{:, r}\|_1^{-1}$. The updated rule for identifying the growing index $i$ is formalized as:

\begin{equation}\label{eqn:ft01}
\begin{aligned}
    i &= \argmax_r \left\{ \mathrm{sign}(\mathbb{E}[\epsilon_q]) \cdot \vD_{q, r} \cdot \frac{\mathbb{E}[\mathbf{X}_q]}{\mathrm{Var}(\mathbf{X}_q)} + \gamma_1 \|\widetilde{\mathbf{W}}_q\|_p \right\},
\end{aligned}
\end{equation}

where $\gamma_1$ is the regularization parameter, striking a balance between fidelity and the $\ell_p$ regularizer. Similarly, the pruning index $j$ is now defined as:

\begin{equation}\label{eqn:ft02}
\begin{aligned}
    j&=\underset{r: \Delta(q, r)<0}{\arg \min }\left\{ |\widetilde{\vW}_{q, r}| \cdot \vD_{q,r} \cdot \left\|\vX_q\right\|_2^{\alpha} + \gamma_2 \|\widetilde{\vW}_q\|_p \right\}, 
\end{aligned}
\end{equation}

where
$\Delta(q, r) \eqdef \mathrm{sign}\!\bigl(\ec{\epsilon_q}\bigr)\,\left(\widetilde{\vW}_{q, r} \cdot \vD_{q, r} \cdot \ec{\vX_q}\right). $

This approach allows for effective fine-tuning of the network without the need for retraining, preserving computational resources while optimizing performance.

\section{Experiments}
\paragraph{Setup and configurations.}
We assess the proposed methods across a broad spectrum of popular LLMs, including LlaMA2 (7b-13b) \citep{LlaMA2}, LlaMA3-8b \citep{LlaMA3}, OPT-1.3b \citep{OPT}. We utilize publicly available model checkpoints from the HuggingFace Transformers library \citep{wolf2020transformers} for our evaluations. Each experiment, focused on post-training pruning, is conducted on an NVIDIA A100-80G GPU.
The effectiveness of each pruned model is primarily measured using the perplexity score on the Wikitext-2 dataset \citep{WikiText2}. 
For calibration, we use 128 samples from the C4 dataset \citep{C4}, with each sample comprising 2048 tokens. This approach ensures consistency with the settings used in baseline methods, enabling a fair comparison.

\subsection{Efficiency of stochastic methods}\label{sec:efficiency_stochastic_methods}
We begin by examining two key designs discussed in \Cref{sec:strategies}: the generalized $\ell_p$ norm and stochastic relative importance. The results for the $\ell_p$ norm are presented in \Cref{sec:optimal_p}, where we confirm that $p = 1$ is indeed optimal. We also compare various $\ell_p$ norm reweighting strategies, with the results presented in \Cref{sec:norm_p_reweighting}. Our primary focus, however, is on the findings related to stochastic relative importance, which, to the best of our knowledge, represents the first approach to incorporating stochasticity into LLM post-training pruning.  

We analyze the impact of stochastic relative importance, with the results summarized in \Cref{tab:stoch_res}. The \algname{stochRIA} results correspond to a sampling ratio of $\beta = 0.1$. Each reported value represents the mean performance across five trials with different random seeds. Notably, even with less than only 10\% of the samples used to estimate relative importance, the results remain sufficiently representative, leading to promising outcomes.  

In addition to unstructured pruning with a sparsity ratio of $0.5$, we also explore structured pruning using the N:M pattern \citep{zhou2021learning, zhang2022learning}. The results are presented in \Cref{tab:stoch_res}. Noticed that here for intuitive comparison between \algname{RIA} and \algname{stochRIA}, we use the plain N:M structural pruning without channel permutation. These results consistently demonstrate the benefits and efficiency of our proposed method, \algname{stochRIA}. 

\begin{table}[!tb]
    \centering
    \caption{Comparison of \algname{StochRIA} ($\beta=0.1$) and \algname{RIA} on the Wikitext-2 dataset, using perplexity scores with $\alpha=1$. For \algname{StochRIA}, the mean perplexity over 5 trials is shown in dark, with variance in {\green green}. Improvements and declines relative to \algname{RIA} are indicated in {\color{blue} blue} and {\red red}, respectively.}
    \label{tab:stoch_res}
    \resizebox{0.98\textwidth}{!}{
    \renewcommand{\arraystretch}{1.2}
    \begin{tabular}{l|lcccccc}
    % Method & LlaMA2-7b & LlaMA2-13b & LlaMA3-8b & OPT-1.3b\\ \hline
    % Dense  & 5.47 & 4.88 & 6.14 & 14.62 \\ \hline
    % SparseGPT & \\
    % Wanda & 7.79 & 6.28 & 10.81 & 22.19\\
    % \algname{RIA} & 6.88 & 5.95 & 9.44 & 18.94\\
    % \algname{RIA} $(\tau=0.1)$ & 6.91 & 5.95 & 9.46 & 18.78\\ \hline
    \toprule
    Sparsity & Method & Sampling & LlaMA2-7b & LlaMA2-13b & LlaMA3-8b & OPT-1.3b\\ \midrule
    - & Dense & - & 5.47 & 4.88 & 6.14 & 14.62 \\ \midrule
    \multirow{4}{*}{50\%} & \algname{Magnitude} & - & 16.03 & 6.83 & 205.44 & 1712.39\\
    % ~ & \algname{SparseGPT} & 7.02 & 6.02 & 9.36 & 17.46\\
    ~ & \algname{Wanda} & - & 7.79 & 6.28 & 10.81 & 22.19\\ \cmidrule{2-7}
    ~ & \algname{RIA} & Full & 6.88 & 5.95 & 9.44 & 18.94\\
    % \rowcolor{bgcolor4}
    ~ & \cellcolor{bgcolor4}\algname{stochRIA} & \cellcolor{bgcolor4}$10\%$ & \cellcolor{bgcolor4} $6.91_{{\color{red} -0.03}}^{{\green \pm 0.0032}}$ &  \cellcolor{bgcolor4} $5.95_{{\color{blue} +0}}^{{\green \pm 0.0033}}$ &  \cellcolor{bgcolor4} $9.46_{{\color{red} -0.02}}^{{\green \pm 0.025}}$ &  \cellcolor{bgcolor4} $18.78_{{\color{blue} +0.16}}^{{\green \pm 0.050}}$\\ \midrule
    \multirow{2}{*}{2:4} 
    % & \algname{Magnitude} & - & 37.77 & 8.89 & 2401.31 & 427.14\\
    % ~ & \algname{SparseGPT} & 10.82 & 8.84 & 16.17 & 23.89\\
    % ~ & \algname{Wanda} & - & 10.86 & 8.16 & 23.06 & 26.52\\ \cmidrule{2-7}
      & \algname{RIA} & Full & 11.31 & 8.40 & 22.89 & 27.43\\
    % \rowcolor{bgcolor4}
    ~ & \cellcolor{bgcolor4}\algname{stochRIA} & \cellcolor{bgcolor4}$10\%$ & \cellcolor{bgcolor4} $11.41_{{\color{red} -0.10}}^{{\green \pm 0.046}}$ & \cellcolor{bgcolor4} $8.44_{{\color{red} -0.04}}^{{\green \pm 0.016}}$ & \cellcolor{bgcolor4} $23.74_{{\color{blue} +0.15}}^{{\green \pm 0.230}}$ & \cellcolor{bgcolor4} $26.78_{{\color{blue} +0.65}}^{{\green \pm 0.127}}$\\ \midrule
    \multirow{2}{*}{4:8} 
    % & \algname{Magnitude} & - & 15.91 & 7.32 & 181.50 & 240.11\\
    % ~ & \algname{SparseGPT} & 8.49 & 7.02 & 12.13 & 20.34\\
    % ~ & \algname{Wanda} & - & 8.16 & 6.68 & 13.65 & 21.37\\ \cmidrule{2-7}
    ~ & \algname{RIA} & Full & 8.39 & 6.74 & 13.77 & 21.59\\
    % \rowcolor{bgcolor4}
    ~ & \cellcolor{bgcolor4}\algname{stochRIA} & \cellcolor{bgcolor4}$10\%$ & \cellcolor{bgcolor4} $8.44_{{\color{red} -0.05}}^{{\green \pm 0.014}}$ & \cellcolor{bgcolor4} $6.74_{{\color{blue} +0}}^{{\green \pm 0.013}}$ & \cellcolor{bgcolor4} $13.93_{{\color{red} -0.16}}^{{\green \pm 0.095}}$ & \cellcolor{bgcolor4} $21.49_{{\color{blue} +0.10}}^{{\green \pm 0.089}}$\\ \bottomrule
    \end{tabular}}
\end{table}

\begin{table*}[!tb]
\centering
\caption{Perplexity scores on Wikitext-2, accounting for various norm $\alpha$ values and column \& row sensitivity, with a sparsity ratio $50\%$.}\label{tab:norm_alpha_col_row}
\resizebox{1.0\textwidth}{!}{
\begin{tabular}{l|cccc|cccc|cccc|cccc}
\toprule
% \rowcolor{gray!30}
{Model} & \multicolumn{4}{c|}{{LlaMA2-7b}} & \multicolumn{4}{c|}{{LlaMA2-13b}} & \multicolumn{4}{c|}{{LlaMA3-8b}} & \multicolumn{4}{c}{{OPT-1.3b}} \\ \midrule
% \rowcolor{gray!30}
\textbf{$\alpha$}        & {0} & {0.5} & {1} & {2} & {0} & {0.5} & {1} & {2} & {0} & {0.5} & {1} & {2} & {0} & {0.5} & {1} & {2} \\
\midrule
Dense       & \multicolumn{4}{c}{ 5.47} & \multicolumn{4}{c}{ 4.88} & \multicolumn{4}{c}{ 6.14} & \multicolumn{4}{c}{ 14.62}\\ \midrule
\algname{Wanda}       & 16.03 & 7.60 & 7.79 & 8.66 & 6.83 & 6.17 & 6.28 & 7.15 & 205.44 & 10.66 & 10.81 & 12.98 & 1712.39 & 22.14 & 22.19 & 24.74 \\
\algname{Col-Sum}    & 11.59 & \color{blue} 6.83 & 6.91 & 7.46 & 6.39 & \bf \color{blue} 5.87 & 5.96 & 6.55 & 59.41 & 9.53 & 9.69 & 12.01 & 1062.66 & \color{blue} 18.28 & 18.41 & 22.25 \\
\algname{Row-Sum}    & 14.93 & 7.49 & 7.51 & 8.01 & 6.74 & 6.13 & 6.24 & 7.01 & 17.80 & 10.50 & 10.55 & 11.79 & 141.92 & 22.09 & 22.47 & 26.62 \\
\algname{RIA}         & 7.39 & \bf \color{blue} 6.81 & 6.88 & 7.37 & 5.95 & \color{blue} 5.93 & 5.95 & 6.56 & 12.07 & \bf \color{blue} 9.34 & \color{blue} 9.44 & 10.67 & 64.70 & \bf \color{blue} 18.08 & 18.94 & 23.39 \\
\bottomrule
\end{tabular}}
\end{table*}

% Moreover, when aggregating results across all examined models and baseline methods, \algname{stochRIA} achieves an accumulated perplexity that is 0.66 lower than \algname{RIA}, demonstrating that stochastic design can yield superior performance. %We attribute this improvement to the vast number of parameters in LLMs, which may benefit from sampling a representative subset rather than reweighting the entire row or column, thereby avoiding the dilution of salient features.  
% In our experiments, randomly sampling 10\% of the rows and columns consistently outperforms the importance-based selection method (summing over rows/columns), likely because it preserves the diversity needed to handle subpopulations that rely on lower-average-importance weights while also providing an implicit regularization effect that mitigates overfitting.
Furthermore, when aggregating results across all examined models and baselines, \algname{stochRIA} achieves an accumulated perplexity that is 0.66 lower than \algname{RIA}, demonstrating the effectiveness of a stochastic design. %We attribute this improvement to the large parameter space of LLMs, where sampling a representative subset rather than reweighting entire rows or columns helps avoid diluting salient features. 
This stochastic sampling preserves the diversity needed to handle subpopulations that rely on lower-average-importance weights while also helping preserve generalization by avoiding the dilution of salient features.

We also evaluate the performance across different sampling ratios, as shown in \Cref{sec:sampling_ratios}. Our main takeaway is that \algname{stochRIA} exhibits stable and competitive performance relative to \algname{RIA}, particularly when the sampling ratio $\tau \geq 0.05$. At or above this threshold, the performance remains robust and occasionally surpasses less noisy sampling configurations. However, at an extremely low sampling ratio of $\tau = 0.01$, a significant performance drop is observed. Consequently, we adopt $\tau = 0.1$ as the default setting for our experiments.

% \begin{table}[!tb]
%     \centering
%     \resizebox{0.5\textwidth}{!}{
%     \begin{tabular}{l|ccccccc}
%     Method & Sparsity & LlaMA2-7b & LlaMA2-13b & LlaMA3-8b & OPT-1.3b\\ \hline
%     Dense  & - & 5.47 & 4.88 & 6.14 & 14.62 \\ \hline
%     Wanda & 2:4 &\\
%     \algname{RIA} & 2:4 &\\
%     \algname{RIA} & 2:4\\ \hline
%     Wanda & 4:8 &\\
%     \algname{RIA} & 4:8 &\\
%     \algname{RIA} & 4:8&\\ \hline
%     \end{tabular}}
%     \caption{Perplexity scores on Wikitext-2 for structural N:M pattern.}
%     \label{tab:nm_structural_pruning}
% \end{table}

% \subsection{Additional Key Observations}
\subsection{Insights on sensitivity, activation, and sparsity}
\paragraph{Column and row sensitivity.}
Compared with the \algname{Wanda} design, \algname{RIA} accounts for the relative importance of both rows and columns. However, it remains unclear whether columns and rows contribute equally to \algname{RIA}'s performance improvements. To investigate this, we conducted an extensive analysis of the significance of column-wise and row-wise relative importance, with the results shown in \Cref{tab:norm_alpha_col_row}. A key finding is that the sum of the columns has more impact on performance, indicating greater importance.

To provide further insights, we visualized the heatmap of a randomly selected dense weight matrix from LLaMA2-7b, as illustrated in \Cref{fig:vis}. The heatmap displays stripe-like patterns, indicating column-specific structures where certain columns show significantly higher activations, forming distinct stripes. This observation suggests that normalizing by rows effectively balances these disparities. In cases where the rows within a specific column already exhibit relatively uniform distributions, normalization over rows may not be necessary. Thus, column normalization alone might suffice to balance the contributions of output neurons, especially when some columns dominate due to large absolute values.

\begin{figure}[!tb]
    \centering
        \includegraphics[width=0.95\linewidth, trim = 88 40 100 68, clip]{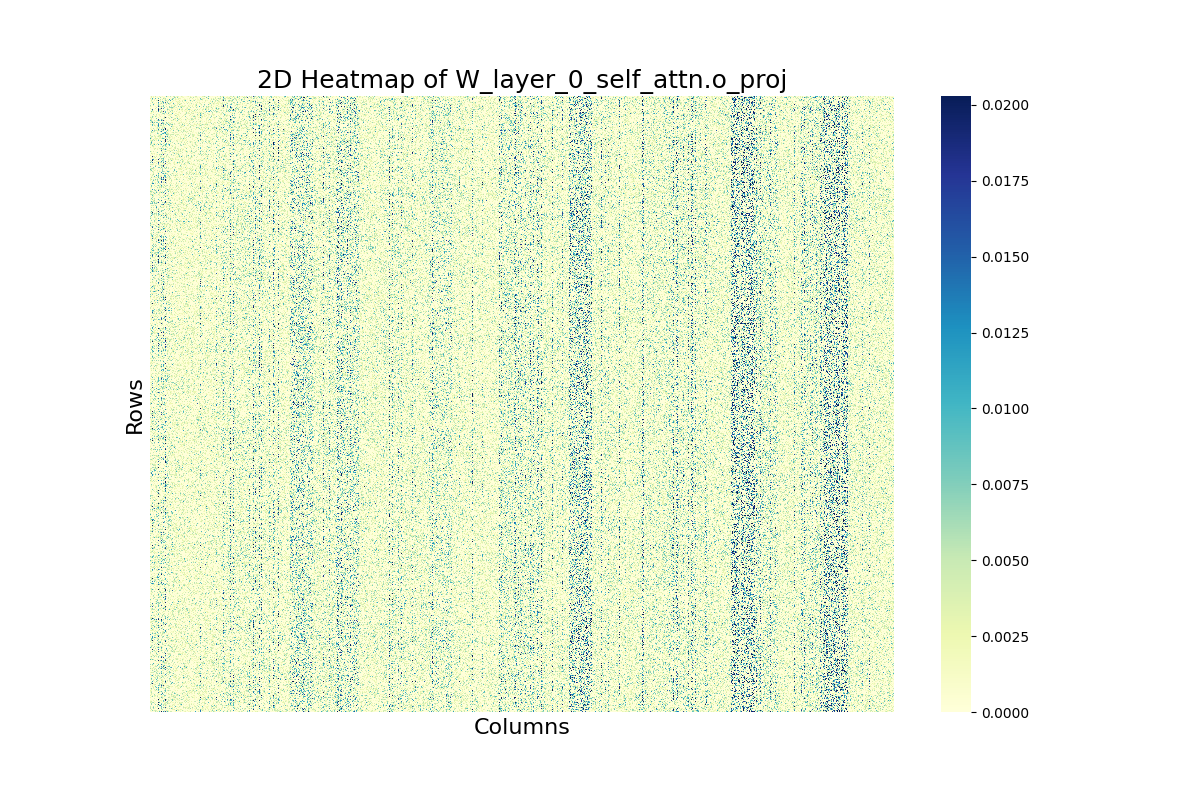}
    % \vspace{-3mm}
    \caption{Visualization of the dense weight matrix in LLaMA2-7b.}
    \label{fig:vis}
\end{figure}
% \kai{TODO: add the figure showing the pattern of weights row and column.}

\paragraph{Benefits of squared input activation.} 
In the design of \algname{Wanda} \citep{Wanda}, the power factor $\alpha$ applied to input activations is set to 1, whereas in \algname{RIA} \citep{RIA}, $\alpha$ is adjusted to $0.5$.  In this study, we systematically explore the impact of varying the power factor on input activations, with detailed results presented in \Cref{tab:norm_alpha_col_row}. An $\alpha$ value of 0 implies that no activation is considered in generating the pruning matrix. Our findings consistently show that incorporating input activation improves performance in terms of perplexity. Notably, $\alpha = 0.5$ proved optimal across various methods, underscoring the advantages of reducing the magnitude of input activations. We attribute this improvement to the mitigation of outliers in the input activations, where smoothing these values provides more meaningful guidance for pruning.

\begin{table}[!tb]
    \centering
    \caption{Perplexity on Wikitext-2 with different sparsity. $\alpha=1.0$.}
    \label{tab:various_sparsity}
    \resizebox{0.8\textwidth}{!}{
    \begin{tabular}{l|ccccccc}
    \toprule
    Sparsity & Method & Sampling & L2-7b & L2-13b & L3-8b & OPT-1.3b\\ \midrule
    Dense  & - & - & 5.47 & 4.88 & 6.14 & 14.62 \\ \hline
    \multirow{3}{*}{$50\%$} & \algname{Wanda} & - & 7.79 & 6.28 & 10.81 & 22.19\\
    ~ & \algname{RIA} & Full & \color{blue} \bf 6.88 & \color{blue} \bf 5.95 & \color{blue} \bf 9.44 & 18.94\\
    % \rowcolor{bgcolor4}
    ~ & \cellcolor{bgcolor4}\algname{stochRIA} & \cellcolor{bgcolor4}$10\%$ & \cellcolor{bgcolor4}6.91 & \cellcolor{bgcolor4}\color{blue} \bf 5.95 & \cellcolor{bgcolor4}9.46 & \cellcolor{bgcolor4}\color{blue} \bf 18.78\\ \midrule
    \multirow{3}{*}{$60\%$} & \algname{Wanda} & -  & 15.30 & 9.63 & 27.55 & 38.81\\
    ~ & \algname{RIA} & Full & \color{blue} \bf 10.39 & \color{blue} \bf 7.84 & 19.52 & 26.22\\
    % \rowcolor{bgcolor4}
    ~ & \cellcolor{bgcolor4}\algname{stochRIA} & \cellcolor{bgcolor4}$10\%$ & \cellcolor{bgcolor4}10.62 & \cellcolor{bgcolor4}7.97 & \cellcolor{bgcolor4}\color{blue} \bf 19.04 & \cellcolor{bgcolor4}\color{blue} \bf 25.93\\\midrule
    \multirow{3}{*}{$70\%$} & \algname{Wanda} & -  & 214.93 & 104.97 & 412.90 & 231.15\\ 
    ~ & \algname{RIA} & Full & \color{blue} \bf 68.75 & \color{blue} \bf 51.96 & 169.51 & 98.52\\ 
    % \rowcolor{bgcolor4}
    ~ & \cellcolor{bgcolor4}\algname{stochRIA} & \cellcolor{bgcolor4}$10\%$ & \cellcolor{bgcolor4}72.85 & \cellcolor{bgcolor4}62.15 & \cellcolor{bgcolor4}\color{blue} \bf 155.34 & \cellcolor{bgcolor4}\color{blue} \bf 93.29\\ \bottomrule
    \end{tabular}}
\end{table}

\paragraph{Various unstructured sparsity ratios.} 
We established a default unstructured sparsity ratio of 50\%. In this section, we investigate the impact of varying sparsity ratios, as detailed in \Cref{tab:various_sparsity}. For \algname{stochRIA}, we report the mean average perplexity after three trials. Given that \algname{stochRIA} has been shown to be stable, with variance examined in \Cref{tab:comparison}, we omit the variance to focus on performance. Our findings reveal that \algname{Wanda} is particularly sensitive to higher sparsity ratios, whereas both \algname{RIA} and our proposed \algname{stochRIA} demonstrate robustness to increased sparsity, maintaining stable performance across a broader range of conditions.
Interestingly, we observed that on LLaMA3-8b and OPT1.3b, \algname{stochRIA} consistently outperforms \algname{RIA}, whereas on LLaMA2-7b and LLaMA2-13b, the reverse is true. This intriguing phenomenon may be attributed to the heavy noise present in the sampling process for LLaMA3-8b and OPT1.3b. In such cases, selecting a subset of weights through \algname{stochRIA} may yield more reliable relative weight information, resulting in improved performance.

\subsection{Training-free fine-tuning comparisons} % This section examines various fine-tuning strategies, focusing on the impact of our novel approaches, specifically \emph{Relative and Regularized Dynamic Sparse no Training} (\ft) and a regularized decision boundary, detailed in (\ref{eqn:ft01}) and (\ref{eqn:ft02}). The perplexity results, presented in \Cref{tab:ft_all}, demonstrate significant performance enhancements through our strategies, which include relative reweighting and regularized boundaries, all without the need for additional training. Notably, improvements are especially significant when applied to non-fine-grained methods such as \algname{Magnitude}.

The intrinsic gap between pruned weights and the original, unpruned pretrained weights underscores the importance of minimizing reconstruction loss to achieve promising results. We introduced \ft, which incorporates relative weight reweighting and a regularized decision boundary during the dynamic sparse refinement step, all without additional training. Perplexity scores, as shown in \Cref{tab:ft_all}, reveal that our \ft approach consistently surpasses baseline methods and the previous state-of-the-art \algname{DSnoT} without fine-tuning. For instance, \algname{Magnitude} exhibited subpar perplexity scores on LlaMA2-7b and LlaMA3-8b; however, our \ft achieved perplexity reductions of 96.5\% and 96.4\%, respectively. These results not only validate \ft's efficacy but also offer guidance for scenarios involving high sparsity or underperforming pruned models, with minimal effort and no additional training.

\paragraph{Zero-shot performance.} To provide a comprehensive evaluation, we also conducted zero-shot classification tests using seven well-regarded datasets. These tests assess the pruned models' ability to accurately categorize objects or data points into previously unseen categories. We employed the methodology described by \cite{Wanda} and utilized tasks from the EleutherAI LM Harness \citep{gao2021framework}, including BoolQ \citep{clark2019boolq}, RTE \citep{wang2018glue}, HellaSwag \citep{zellers2019hellaswag}, WinoGrande \citep{sakaguchi2021winogrande}, ARC (Easy and Challenge) \citep{clark2018think}, and OpenbookQA \citep{mihaylov2018can}. The results, presented in \Cref{tab:zsl_per_task_with_wandag_noDenseCompare}, show that \ft consistently outperforms \algname{DSnoT} in zero-shot tasks, confirming its effectiveness. %To the best of our knowledge, \ft sets a new benchmark for training-free pruning and fine-tuning methods in terms of zero-shot performance.
To the best of our knowledge, \ft establishes a new state-of-the-art for training-free pruning and fine-tuning methods in zero-shot performance.

\begin{table}[!tb]
    \centering
    \caption{Perplexity scores on Wikitext-2 after training-free fine-tuning. The sparsity ratio is set to $60\%$ and $\alpha = 0.5$.}  
    \label{tab:ft_all}
    % \resizebox{0.8\textwidth}{!}{
    \begin{tabular}{lcccccc}
    \toprule
    Base & FT & LlaMA2-7b & LlaMA2-13b & LlaMA3-8b\\ \midrule
    \algname{Dense} & - & 5.47 & 4.88 & 6.14\\ \hline
    \algname{Magnitude} & - & 6.9e3 & 10.10 & 4.05e5\\
    \algname{Magnitude} & \algname{DSnoT} & 4.1e3 & 10.19 & 4.18e4\\
    \rowcolor{bgcolor4}
    \algname{Magnitude} & \ft & \color{blue} \bf2.4e2 & \color{blue} \bf10.09 & \color{blue} \bf1.44e4\\ \midrule
    % \algname{Wanda} & - & 10.79 & 8.40 & 21.36\\
    \algname{Wanda} & - & \color{blue} \bf 9.72 & 7.75 & 21.36\\
    \algname{Wanda} & \algname{DSnoT} & 10.23 & \color{blue} \bf 7.69 & 20.70\\
    \rowcolor{bgcolor4}
    \algname{Wanda} & \ft & 10.08 & \color{blue} \bf7.69 & \color{blue} \bf20.50\\ \midrule
    \algname{RIA} & - & 10.29 & 7.85 & 21.09\\
    \algname{RIA} & \algname{DSnoT} & 9.97 & 7.82 & 19.51\\
    \rowcolor{bgcolor4}
    \algname{RIA} & \ft & \color{blue} \bf9.96 & \color{blue} \bf7.78 & \color{blue} \bf18.99\\ \bottomrule
    \end{tabular}
\end{table}

\begin{table*}[!tb]
\centering
\caption{Accuracies (\%) for LLaMA2 models on 7 zero-shot tasks at 60\% unstructured sparsity.}
\label{tab:zsl_per_task_with_wandag_noDenseCompare}
\resizebox{1.0\textwidth}{!}{
\begin{tabular}{llcccccccc}
\toprule
Params & Method & BoolQ & RTE & HellaSwag & WinoGrande & ARC-e & ARC-c & OBQA & \cellcolor{bgcolor} Mean \\
\midrule
\multirow{7}{*}{LlaMA2-{7b}}
  %--- DENSE (unchanged) ---
 & Dense      
   & 77.7 & 62.8 & 57.2 & 69.2 & 76.4 & 43.4 & 31.4 & \cellcolor{bgcolor} 57.9 \\ \cmidrule{2-10}
 & \algname{Magnitude} & 41.2 & 51.3 & 37.0 & 55.7 & 50.0 & 27.0 & 16.2 & \cellcolor{bgcolor} 39.3\\
 & \algname{w. DSnoT} & 43.2 & \color{blue}54.2 & 38.4 & 56.4 & 53.3 & 27.7 & 20.6 & \cellcolor{bgcolor}  41.1 \\
 % \rowcolor{bgcolor4}
 &\cellcolor{bgcolor4}\bf \algname{w. $R^2$-DSnoT} & \cellcolor{bgcolor4}\color{blue}50.9 & \cellcolor{bgcolor4} 52.0 & \cellcolor{bgcolor4}\color{blue}39.8& \cellcolor{bgcolor4}\color{blue}56.8 & \cellcolor{bgcolor4}\color{blue}\color{blue}56.6 & \cellcolor{bgcolor4}\color{blue}28.3 & \cellcolor{bgcolor4}\color{blue}23.4 & \cellcolor{bgcolor}\color{blue} \bf43.4\\ \cmidrule{2-10}
 % & & 50.9 & 52.0 & 39.8 & 56.8 & 56.6 & 28.3 & 23.4 & 43.4 \\  
 % & \algname{Wanda} & 67.30& 53.10& 45.60& 64.80& 65.80& 32.80& 27.20& 50.90 \\
 % & \algname{w. DSnoT} & 65.20 & 53.40 & 44.10 & 63.90 & 65.00 & 31.40 & 26.00 & 49.90 \\
 % \rowcolor{bgcolor4}
 % &\cellcolor{bgcolor4}\bf \algname{w. $R^2$-DSnoT} & 65.20 & 53.10 & 44.20 & 64.00 & 65.60 & 31.80 & 25.60 & 49.90 \\ \cmidrule{2-10}
 & \algname{RIA} & \color{blue} 66.1 & 53.1 & 43.5 & 63.2 & 64.6 & 30.2 & 26.0 & \cellcolor{bgcolor} 49.5 \\
  & \algname{w. DSnoT} & 65.5 & 53.4 & \color{blue} 44.7 & 64.6 & \color{blue} 65.3 & \color{blue} 31.7 & 26.4 & \cellcolor{bgcolor} 50.2 \\
 % \rowcolor{bgcolor4}
 &\cellcolor{bgcolor4}\bf \algname{w. $R^2$-DSnoT} & \cellcolor{bgcolor4}65.2 & \cellcolor{bgcolor4}\color{blue} 53.8 & \cellcolor{bgcolor4}\color{blue} 44.7 & \cellcolor{bgcolor4}\color{blue} 65.1 & \cellcolor{bgcolor4}65.0 & \cellcolor{bgcolor4}31.6 & \cellcolor{bgcolor4}\color{blue} 27.0 & \cellcolor{bgcolor4}\cellcolor{bgcolor} \color{blue} \bf 50.3 \\ 
\hline
% \multirow{9}{*}{LlaMA2-{13b}}
%   %--- DENSE (unchanged) ---
%  & Dense      
%    &\\ \hline
%  & \algname{Wanda} & 76.91 & 56.32 & 50.53 & 67.88 & 70.75 & 38.23 & 28.80 & 55.63 \\
%  & \algname{w. DSnoT} & 76.73 & 55.96 & 49.88 & 67.48 & 70.79 & 35.58 & 27.40 & 54.83 \\
%  % \rowcolor{bgcolor4}
%  &\cellcolor{bgcolor4}\bf \algname{w. $R^2$-DSnoT}& 76.73 & 55.96 & 49.88 & 67.48 & 70.79 & 35.58 & 27.40 & 54.83 \\\cmidrule{2-10}
%  & \algname{RIA} & 77.40 & 58.48 & 50.02 & 68.11 & 70.54 & 36.77 & 28.80 & 55.73 \\
%   & \algname{w. DSnoT} & 76.54 & 59.57 & 49.22 & 67.40 & 70.20 & 36.01 & 27.80 & 55.25 \\
%  % \rowcolor{bgcolor4}
%  &\cellcolor{bgcolor4}\bf \algname{w. $R^2$-DSnoT} & 76.88 & 57.40 & 49.49 & 67.48 & 70.54 & 35.58 & 28.00 & 55.05 \\
%  \hline
 \multirow{7}{*}{LlaMA3-{8b}} 
  %--- DENSE (unchanged) ---
 & Dense      
   & 81.3 & 69.7 & 60.1 & 73.0 & 80.1 & 50.4 & 34.8 & \cellcolor{bgcolor} 64.2 \\ \cmidrule{2-10}
 & \algname{Magnitude} & 37.8 & 52.7 & 30.7 & 51.0 & 39.7 & 23.4 & 14.4 & \cellcolor{bgcolor} 35.7 \\
 & \algname{w. DSnoT} & 37.8 & 52.7 & \color{blue}33.4 & 49.9 & 43.5 & 23.0 & \color{blue}14.8 & \cellcolor{bgcolor} 36.4 \\
 % \rowcolor{bgcolor4}
 &\cellcolor{bgcolor4}\bf \algname{w. $R^2$-DSnoT} & \cellcolor{bgcolor4}37.8 & \cellcolor{bgcolor4}52.7 & \cellcolor{bgcolor4}33.1 & \cellcolor{bgcolor4}\color{blue}52.1 & \cellcolor{bgcolor4}\color{blue}43.9 & \cellcolor{bgcolor4}\color{blue}23.6 & \cellcolor{bgcolor4}\color{blue}14.8 & \cellcolor{bgcolor} \color{blue} \bf 37.1\\ \cmidrule{2-10}
 % && 37.8 & 52.7 & 33.1 & 52.1 & 43.9 & 23.6 & 14.8 & 37.1 \\  
 % & \algname{Wanda} & 70.31 & 53.43 & 39.51 & 62.04 & 59.85 & 28.92 & 21.60 & 47.95 \\
 % & \algname{w. DSnoT} & 69.79 & 53.43 & 39.52 & 60.85 & 61.66 & 28.24 & 21.60 & 47.87 \\
 % \rowcolor{bgcolor4}
 % &\cellcolor{bgcolor4}\bf \algname{w. $R^2$-DSnoT} & 69.54 & 53.43 & 39.74 & 61.33 & 60.65 & 28.67 & 20.60 & 47.71 \\ \cmidrule{2-10}
 & \algname{RIA} & 70.2 & 53.4 & 39.7 & 61.7 & 61.1 & \color{blue} 28.6 & 20.4 & \cellcolor{bgcolor} 47.9 \\
  & \algname{w. DSnoT} & \color{blue} 70.7 & 53.4 & \color{blue} 40.3 & 61.3 & \color{blue} 61.7 & 28.0 & 20.0 & \cellcolor{bgcolor} 47.9 \\
 % \rowcolor{bgcolor4}
 &\cellcolor{bgcolor4}\bf \algname{w. $R^2$-DSnoT} & \cellcolor{bgcolor4}70.4 & \cellcolor{bgcolor4}53.4 & \cellcolor{bgcolor4}\color{blue} \cellcolor{bgcolor4}40.3 & \cellcolor{bgcolor4}\color{blue} 61.9 & \cellcolor{bgcolor4}61.2 & \cellcolor{bgcolor4}28.3 & \cellcolor{bgcolor4}\color{blue} 21.0 & \cellcolor{bgcolor4} \cellcolor{bgcolor}\color{blue} \bf 48.1 \\
\bottomrule
\end{tabular}
} 
\end{table*}

% \begin{table}[!tb]
%     \centering
%     \resizebox{0.5\textwidth}{!}{
%     \begin{tabular}{l|ccccccccc}
%     Index & Base & FT & Reg. & LlaMA2-7b & LlaMA2-13b & LlaMA3-8b & OPT-1.3b\\ \hline
%     a) & Dense & \xmark & \xmark & 5.47 & 4.88 & 6.14 & 14.62\\ \hline
%     b) & \algname{Wanda} & \xmark & \xmark & 7.79 & 6.28 & 10.81 & 22.19\\
%     b1) & \algname{Wanda} & DSnoT & \xmark &\\
%     b2) (ours) & \algname{Wanda} & $R^2$-DSnoT & \xmark &\\
%     b3) (ours) & \algname{Wanda} & DSnoT & \cmark &\\
%     b4) (ours) & \algname{Wanda} & $R^2$-DSnoT & \cmark &\\ \hline
%     c) & \algname{RIA} & \xmark & \xmark & 6.81  & 5.93 & 9.34 & 18.08\\
%     c1) (ours) & \algname{RIA} & DSnoT & \xmark &\\
%     c2) (ours) & \algname{RIA} & $R^2$-DSnoT & \xmark &\\
%     c3) (ours) & \algname{RIA} & DSnoT & \cmark &\\
%     c4) (ours) & \algname{RIA} & $R^2$-DSnoT & \cmark &\\ \hline
%     \end{tabular}}
%     \caption{Perplexity scores on Wikitext-2 after training-free fine-tuning. The sparsity ratio is set to $0.5$ and $\alpha = 0.5$.}
%     \label{tab:ft_all}
% \end{table}

\section{Discussion and Future Work}
\textbf{Beyond pruning.} We initiated our exploration by assessing the efficacy of \algname{Wanda} and \algname{RIA}, introducing the symmetric objective in (\ref{obj2}). Although initially aimed at post-training pruning for LLMs, our approach can extend to post-training quantization and training-aware compression \citep{GPTQ, AQLM, PV-Tuning}, promising areas for future exploration.

\textbf{Better sampling.} In \Cref{sec:efficiency_stochastic_methods}, we showed that selective sampling of matrix rows and columns enhances performance and efficiency over full sampling. This improvement is credited to stochastic sampling maintaining diversity in lower-importance weights and preventing loss of key features. Future research could investigate asymmetric or non-uniform sampling within the (\ref{obj2}) framework to further optimize performance.

% \textbf{Training-free fine-tuning theory.} We observed that \ft enhances perplexity and zero-shot capabilities, yet the theoretical foundations of this observation remain unexplored. Delving into the theoretical aspects of \ft represents a valuable direction for upcoming research.

\textbf{Exploring symmetric designs.} %\Cref{tab:comparison} introduced general and diagonal-specific symmetric designs for LLM compression. These initial findings encourage further exploration of symmetric designs in weights and activations to enhance LLM compression techniques. Extending into distributed and federated setting \citep{kai2023fedp3, ye2024fedllm} could also be promising. 
\Cref{tab:comparison} introduces general and diagonal-specific symmetric designs for LLM compression. These initial findings underscore the potential benefits of further exploring symmetric designs in weights and activations to enhance LLM compression techniques. Extending these approaches into distributed and federated settings \citep{kai2023fedp3, ye2024fedllm} could also prove promising.

\end{singlespace}

% $$$$$$$$$$$$$$$$$  Reference style  Starts $$$$$$$$$$$$$$
  
% \begin{onehalfspacing}
	\renewcommand*\bibname{\centerline{REFERENCES}} 
    \phantomsection
	\addcontentsline{toc}{chapter}{References}
	\newcommand{\BIBdecl}{\setlength{\itemsep}{0pt}}%To control space between bibliography entries
            \bibliographystyle{plainnat}
		\bibliography{References}
% \end{onehalfspacing}

% $$$$$$$$$$$$$$$$$  Reference style Ends  $$$$$$$$$$$$$$

% $$$$$$$$$$$$$$$$$  include Appendices   $$$$$$$$$$$$$$
  
\appendix
		\newpage
		\begingroup
			\let\clearpage\relax
			\begin{center}
			\vspace*{2\baselineskip}
			{ \textbf{{\large APPENDICES}}} 
            % Fixes the numbering of "new" section created with "\addcontentsline."
            % You need to add the "\phantomsection" before *every* "\addcontentsline."
            \phantomsection
			% \addcontentsline{toc}{chapter}{Appendices} 
			\end{center}
            \chapter{Appendix to Chapter \ref{chapter_ef_bv}}
\label{chapter_appendix_ef_bv}
\thispagestyle{empty}

\section{New compressors}\label{secappa}

We propose new compressors in our class $\mathbb{C}(\eta,\omega)$.

\subsection{\texttt{mix-}(k,k'): Mixture of \texttt{top-}k and \texttt{rand-}k}

Let $k\in \mathcal{I}_d$ and $k'\in \mathcal{I}_d$, with $k+k'\leq d$. We propose the compressor \texttt{mix-}$(k,k')$. It maps $x\in\mathbb{R}^d$ to $x'\in\mathbb{R}^d$, defined as follows. 
Let $i_1,\ldots,i_k$ be distinct indexes in $\mathcal{I}_d$ such that $|x_{i_1}|,\ldots,|x_{i_k}|$ are the $k$ largest elements of $|x|$ (if this selection is not unique, we can choose any one). These coordinates are kept: $x'_{i_j}=x_{i_j}$, $j=1,\ldots,k$. In addition, $k'$ other coordinates chosen at random in the remaining ones are kept: $x'_{i_j}=x_{i_j}$, $j=k+1,\ldots,k+k'$, where $\{i_j : j=k+1,\ldots,k+k'\}$ is a subset of size $k'$ of $\mathcal{I}_d \backslash \{i_1,\ldots,i_k\}$ chosen uniformly at random. The other coordinates of $x'$ are set to zero.

\begin{proposition}
\label{prop1}\emph{\texttt{mix-}}$(k,k')\in \mathbb{C}(\eta,\omega)$ with $\eta =\frac{d-k-k'}{\sqrt{(d-k)d}}$ and $\omega=\frac{k'(d-k-k')}{(d-k)d}$.
\end{proposition}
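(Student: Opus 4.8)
The plan is to compute the bias and the variance of $\texttt{mix-}(k,k')$ explicitly and then reduce both of the required bounds to a single elementary tail inequality for $\texttt{top-}k$ selection. Fix $x\in\mathbb{R}^d$. Let $T\subseteq\mathcal{I}_d$ be the deterministic set of $k$ indices carrying the largest values of $|x|$ (any valid choice), and let $R\eqdef\mathcal{I}_d\setminus T$, so $|R|=d-k$. Let $\Omega\subseteq R$ be the uniformly random subset of size $k'$ that the compressor additionally keeps. Writing $x_S$ for the vector agreeing with $x$ on $S$ and vanishing elsewhere, we have $\mathcal{C}(x)=x_T+x_\Omega$. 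Since $\mathrm{Prob}(i\in\Omega)=k'/(d-k)$ for every $i\in R$, the coordinatewise expectation is $\mathbb{E}[\mathcal{C}(x)]_i=x_i$ for $i\in T$ and $\mathbb{E}[\mathcal{C}(x)]_i=\frac{k'}{d-k}x_i$ for $i\in R$.

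For the bias, this yields $\mathbb{E}[\mathcal{C}(x)]-x=-\frac{d-k-k'}{d-k}\,x_R$, hence
\begin{equation*}
\big\|\mathbb{E}[\mathcal{C}(x)]-x\big\|^2=\Big(\tfrac{d-k-k'}{d-k}\Big)^2\|x_R\|^2 .
\end{equation*}
To recover $\eta=\frac{d-k-k'}{\sqrt{(d-k)d}}$ it is then enough to establish the tail bound $\|x_R\|^2\leq\frac{d-k}{d}\|x\|^2$. For the variance, expanding the squared norm coordinatewise and using that the compression error vanishes on $T$,
\begin{equation*}
\mathbb{E}\big[\|\mathcal{C}(x)-\mathbb{E}[\mathcal{C}(x)]\|^2\big]=\sum_{i\in R}x_i^2\,\mathrm{Var}\big(\mathbf{1}_{i\in\Omega}\big)=\frac{k'}{d-k}\Big(1-\frac{k'}{d-k}\Big)\|x_R\|^2=\frac{k'(d-k-k')}{(d-k)^2}\|x_R\|^2,
\end{equation*}
and the very same tail bound converts this into $\omega\|x\|^2$ with $\omega=\frac{k'(d-k-k')}{(d-k)d}$. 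Note that only the marginal variance of each indicator enters here, so the negative dependence between the $\mathbf{1}_{i\in\Omega}$ created by sampling without replacement is irrelevant; this is the one place where a reader might worry, and the worry dissolves once the norm is expanded as a sum of squares of components.

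The only step that is not a direct calculation is thus the tail inequality $\|x_R\|^2\leq\frac{d-k}{d}\|x\|^2$, and it is the natural candidate for the "hard part", though it is quite standard. I would prove it by sorting the squared entries of $x$ as $y_1\leq\cdots\leq y_d$, so that $\|x_R\|^2=\sum_{i=1}^{d-k}y_i$ and $\|x_T\|^2=\sum_{i=d-k+1}^{d}y_i$; since $y_{d-k+1}\geq\frac{1}{d-k}\sum_{i=1}^{d-k}y_i$ and the $k$ largest squared entries are each at least $y_{d-k+1}$, we get $\|x_T\|^2\geq\frac{k}{d-k}\|x_R\|^2$, whence $\|x\|^2\geq\frac{d}{d-k}\|x_R\|^2$. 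Combining this with the two displays above gives $\texttt{mix-}(k,k')\in\mathbb{C}(\eta,\omega)$ with the claimed parameters. Finally I would sanity-check the degenerate cases: $k'=0$ recovers $\texttt{top-}k$, for which $\omega=0$ and $\eta^2=1-k/d$, consistent with $\texttt{top-}k\in\mathbb{B}(k/d)$ via property~1 of Section~\ref{sec23}; and $d-k-k'=0$ gives $\eta=\omega=0$, consistent with $\mathcal{C}=\mathbf{I}_d$.
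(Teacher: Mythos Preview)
Your proof is correct and follows essentially the same route as the paper: both compute the bias and variance of $\texttt{mix-}(k,k')$ in terms of $\|x_R\|^2$ and then apply the tail inequality $\|x_R\|^2\le\frac{d-k}{d}\|x\|^2$ for the $\texttt{top-}k$ remainder. Your variance step via $\mathrm{Var}(\mathbf{1}_{i\in\Omega})$ is a slightly cleaner packaging of the paper's case split over $i\in\Omega$ versus $i\notin\Omega$, but the underlying argument and the key tail bound are identical.
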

As a consequence, \texttt{mix-}$(k,k')\in \mathbb{B}(\alpha)$ with $\alpha=1-\eta^2-\omega = 1-\frac{(d-k-k')^2}{(d-k)d}-\frac{k'(d-k-k')}{(d-k)d}=
\frac{k+k'}{d}$. This is the same $\alpha$ as for \texttt{top-}$(k+k')$ and scaled \texttt{rand-}$(k+k')$.

The proof is given in Appendix~\ref{secproofp4}.

\subsection{\texttt{comp-}(k,k'): Composition of \texttt{top-}k and \texttt{rand-}k}

Let $k\in \mathcal{I}_d$ and $k'\in \mathcal{I}_d$, with $k\leq k'$. We consider the compressor \texttt{comp-}$(k,k')$, proposed in \citet{bar20}, 
which is the composition of \texttt{top-}$k'$ and \texttt{rand-}$k$:
\texttt{top-}$k'$  is applied first, then \texttt{rand-}$k$ is applied to the $k'$ selected (largest) elements. 
That is,  \texttt{comp-}$(k,k')$ maps $x\in\mathbb{R}^d$ to $x'\in\mathbb{R}^d$, defined as follows. Let $i_1,\ldots,i_{k'}$ be distinct indexes in $\mathcal{I}_d$ such that $|x_{i_1}|,\ldots,|x_{i_{k'}}|$ are the $k'$ largest elements of $|x|$ (if this selection is not unique, we can choose any one). Then 
$x'_{i_j}=\frac{k'}{k} x_{i_j}$, $j=1,\ldots,k$, where $\{i_j : j=1,\ldots,k\}$ is a subset of size $k$ of $\{i_1,\ldots,i_{k'}\}$ chosen uniformly at random. The other coordinates of $x'$ are set to zero.

 \texttt{comp-}$(k,k')$ sends $k$ coordinates of its input vector, like \texttt{top-}$k$ and \texttt{rand-}$k$, whatever $k'$. We can note that  \texttt{comp-}$(k,d)={}$\texttt{rand-}$k$ and \texttt{comp-}$(k,k)={}$\texttt{top-}$k$. We have:

\begin{proposition}
\label{prop2}
 \emph{\texttt{comp-}}$(k,k')\in \mathbb{C}(\eta,\omega)$ with $\eta =\sqrt{\frac{d-k'}{d}}$ and $\omega=\frac{k'-k}{k}$.
\end{proposition}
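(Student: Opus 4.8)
The plan is to fix an arbitrary $x\in\mathbb{R}^d$ and follow the two stages of the composition explicitly. First I introduce the index set $T\subseteq\mathcal{I}_d$, $|T|=k'$, selected by \texttt{top-}$k'$ (choosing any one such set in case of ties), and write $x_T$ for the vector obtained from $x$ by zeroing all coordinates outside $T$; this is the output of the first stage. The second stage applies \texttt{rand-}$k$ on the $k'$-dimensional support $T$: it keeps a uniformly random size-$k$ subset $S\subseteq T$ and rescales by $k'/k$, so $\mathcal{C}(x)=\tfrac{k'}{k}(x_T)_S$, with $\mathrm{Prob}(i\in S)=k/k'$ for every $i\in T$.

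For property (i), I compute $\mathbb{E}[\mathcal{C}(x)]$ coordinatewise: for $i\in T$, $\mathbb{E}[\mathcal{C}(x)_i]=\tfrac{k}{k'}\cdot\tfrac{k'}{k}\,x_i=x_i$, and for $i\notin T$, $\mathbb{E}[\mathcal{C}(x)_i]=0$; hence $\mathbb{E}[\mathcal{C}(x)]=x_T$, so the bias equals $\|x-x_T\|$, the norm of the $d-k'$ discarded coordinates. I bound this by the classical \texttt{top-}$k'$ contraction estimate $\|x-x_T\|^2\le\tfrac{d-k'}{d}\|x\|^2$: sorting the squared entries as $x_{(1)}^2\ge\cdots\ge x_{(d)}^2$, one has $\|x-x_T\|^2=\sum_{j>k'}x_{(j)}^2\le(d-k')\,x_{(k'+1)}^2$ and $\|x_T\|^2=\sum_{j\le k'}x_{(j)}^2\ge k'\,x_{(k'+1)}^2$, which combine with $\|x\|^2=\|x_T\|^2+\|x-x_T\|^2$ to give the claim, and thus $\eta=\sqrt{(d-k')/d}$.

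For property (ii), I use that both $\mathcal{C}(x)$ and $\mathbb{E}[\mathcal{C}(x)]=x_T$ are supported on $T$, so $\mathbb{E}\big[\|\mathcal{C}(x)-x_T\|^2\big]=\sum_{i\in T}\mathbb{E}\big[(\mathcal{C}(x)_i-x_i)^2\big]$ by linearity of expectation. For a fixed $i\in T$ the summand is $(\tfrac{k'}{k}-1)^2x_i^2$ with probability $k/k'$ and $x_i^2$ with probability $1-k/k'$, and a one-line algebraic simplification reduces $\tfrac{k}{k'}\big(\tfrac{k'-k}{k}\big)^2+\tfrac{k'-k}{k'}$ to $\tfrac{k'-k}{k}$. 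Summing over $i\in T$ yields $\mathbb{E}\big[\|\mathcal{C}(x)-x_T\|^2\big]=\tfrac{k'-k}{k}\|x_T\|^2\le\tfrac{k'-k}{k}\|x\|^2$, i.e.\ $\omega=(k'-k)/k$.

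I do not expect a genuine obstacle: the statement is elementary and the computation is short. The only places that need care are (a) the rescaling, since \texttt{rand-}$k$ here acts on the $k'$-element support rather than on all of $\mathbb{R}^d$, so the scaling factor is $k'/k$ and the inclusion probability is $k/k'$ (not $d/k$ and $k/d$); and (b) stating the \texttt{top-}$k'$ contraction bound cleanly. As sanity checks one verifies $\texttt{comp-}(k,k)=\texttt{top-}k\in\mathbb{C}(\sqrt{(d-k)/d},0)$ and $\texttt{comp-}(k,d)=\texttt{rand-}k\in\mathbb{C}(0,(d-k)/k)$, both consistent with the formulas $\eta=\sqrt{(d-k')/d}$ and $\omega=(k'-k)/k$.
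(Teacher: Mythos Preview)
Your proof is correct and follows essentially the same approach as the paper: compute $\mathbb{E}[\mathcal{C}(x)]=x_T$, bound the bias via the standard \texttt{top-}$k'$ contraction inequality $\|x-x_T\|^2\le\frac{d-k'}{d}\|x\|^2$, and compute the variance coordinatewise on $T$ with the same algebraic simplification to $(k'-k)/k$. The only difference is that you spell out a proof of the \texttt{top-}$k'$ bound, whereas the paper simply invokes it.
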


The proof is given in Appendix~\ref{secproofp5}.

\section{New results on \algname{DIANA}}\label{secappb}

We suppose that the compressors $\mathcal{C}_i^t$ are in $\mathbb{C}(\eta,\omega)$, for some $\eta\in[0,1)$ and $\omega\geq 0$. Viewing  \algname{DIANA} as \algname{EF-BV} with $\nu=1$, we define $r$, $s^\star$, $\theta^\star$ as before, as well as 
$r_{\mathrm{av}} \eqdef \eta^2+\oma$. % and $r$, $s^\star$, $\theta^\star$ as previously:
We obtain, as corollaries of Theorems \ref{theo1} and \ref{theo2}:

\begin{theorem}\label{coro1}Suppose that $R=0$ and $f$ satisfies the P{\L}  condition with some constant  $\mu>0$. 
In  \algname{DIANA}, suppose that $\lambda \in (0,1]$ is such that $r<1$, and
\begin{equation*}
0<\gamma \leq \frac{1}{L+\tilde{L}\sqrt{\frac{r_{\mathrm{av}}}{r}}\frac{1}{s^\star}}.
\end{equation*}
For every $t\geq 0$, define the Lyapunov function
\begin{equation*}
\Psi^t \eqdef f(x^t)-f^\star + \frac{\gamma}{2\theta^\star}  \frac{1}{n}\sum_{i=1}^n \sqnorm{\nabla f_i(x^t)-h_i^{t}}, 
\end{equation*}
where $f^\star \eqdef f(x^\star)$, for any minimizer $x^\star$ of $f$. 
Then, for every $t\geq 0$,
\begin{align*}
\Exp{\Psi^{t}} 
&\leq \left(\max\left(1-\gamma\mu, {\frac{r+1}{2}}\right) \right)^t\Psi^0.
\end{align*}
\end{theorem}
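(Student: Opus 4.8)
The plan is to deduce Theorem~\ref{coro1} as a direct corollary of Theorem~\ref{theo1} by checking that \algname{DIANA}, as run in the statement, is exactly the instance of \algname{EF-BV} obtained by fixing $\nu=1$. Concretely, first I would recall from Section~\ref{sec32} that setting $\nu=1$ in \algname{EF-BV} reproduces the \algname{DIANA} updates line for line (the only change between the two pseudocodes in Fig.~\ref{fig1} is the $\nu$ appearing in the definition of $g^{t+1}$, and $\nu=1$ collapses that to the \algname{DIANA} rule $g^{t+1}=h^t+d^t$). Hence any statement proved for \algname{EF-BV} under an admissible choice of $\nu$ transfers verbatim to \algname{DIANA}. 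The second observation is that the hypotheses of Theorem~\ref{theo1} --- namely $\nu\in(0,1]$ and $\lambda\in(0,1]$ with $r<1$ --- are satisfied here: $\nu=1\in(0,1]$ trivially, and the remaining requirement on $\lambda$ is precisely the assumption imposed in Theorem~\ref{coro1}.

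Next I would reconcile the auxiliary quantities. In Section~\ref{sec5}, $r_{\mathrm{av}}$ is defined as $(1-\nu+\nu\eta)^2+\nu^2\oma$; evaluating at $\nu=1$ gives $r_{\mathrm{av}}=\eta^2+\oma$, which is exactly the definition adopted for \algname{DIANA} in this appendix. The quantities $r$, $s^\star$, and $\theta^\star$ depend only on $\lambda$, $\eta$, $\omega$ (through $r$) and on $r$, $r_{\mathrm{av}}$, so they are literally the same objects in both statements. Consequently the stepsize bound $0<\gamma\leq \big(L+\tilde L\sqrt{r_{\mathrm{av}}/r}\,(1/s^\star)\big)^{-1}$ in Theorem~\ref{coro1} is identical to \eqref{equpb1}, the Lyapunov function $\Psi^t$ is identical to the one in Theorem~\ref{theo1} (the iterates $x^t$, $h_i^t$ produced by \algname{DIANA} coincide with those of \algname{EF-BV} at $\nu=1$), and the claimed rate $\big(\max(1-\gamma\mu,(r+1)/2)\big)^t$ is the conclusion \eqref{eqsdgerg}.

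Therefore the proof is a one-line invocation: apply Theorem~\ref{theo1} with $\nu=1$, using that $\mathcal{C}_i^t\in\mathbb{C}(\eta,\omega)$ and that \eqref{eqbo} holds with the given $\oma\leq\omega$, and substitute the simplified $r_{\mathrm{av}}=\eta^2+\oma$. I would also remark, for completeness, that when the $\mathcal{C}_i^t$ happen to lie in $\mathbb{U}(\omega)$ (the classical \algname{DIANA} setting) one recovers $\eta=0$, so $r=\lambda^2\omega$, and choosing $\lambda=\lambda^\star=1/(1+\omega)$ gives $r=\omega/(1+\omega)<1$, making the hypothesis $r<1$ automatic; but for the general biased compressors in $\mathbb{C}(\eta,\omega)$ the requirement $r<1$ must be retained as an explicit assumption, exactly as stated.

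The only real subtlety --- and thus the ``main obstacle'' --- is not in the argument but in making sure the identification of the two algorithms and their parameter dictionaries is airtight: one must verify that the $h^t$ variable kept by the master in \algname{EF-BV} is genuinely redundant with (equal to) the \algname{DIANA} gradient surrogate when $\nu=1$, so that the Lyapunov function, which is phrased in terms of $\nabla f_i(x^t)-h_i^t$, refers to the same sequences in both algorithms; since the per-client control variates $h_i^t$ follow the identical recursion $h_i^{t+1}=h_i^t+\lambda d_i^t$ in both, this holds, and no independent re-derivation of the descent inequalities is needed --- everything is inherited from the already-established Theorem~\ref{theo1}.
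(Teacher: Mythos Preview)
Your proposal is correct and follows exactly the paper's approach: the paper presents Theorem~\ref{coro1} explicitly as a corollary of Theorem~\ref{theo1}, obtained by viewing \algname{DIANA} as \algname{EF-BV} with $\nu=1$ and accordingly specializing $r_{\mathrm{av}}$ to $\eta^2+\oma$. Your verification that the parameter dictionaries and Lyapunov functions coincide is more detailed than the paper's one-line justification, but the substance is identical.
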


\begin{theorem}\label{coro2}
Suppose that $f+R$ satisfies the  the K{\L}  condition with some constant $\mu>0$. 
In  \algname{DIANA}, suppose that $\lambda \in (0,1]$ is such that $r<1$, and
\begin{equation*}
0<\gamma \leq \frac{1}{2L+\tilde{L}\sqrt{\frac{r_{\mathrm{av}}}{r}}\frac{1}{s^\star}}.
\end{equation*}
$\forall t\geq 0$, define the Lyapunov function
\begin{align*}
\Psi^t \eqdef f(x^t)+R(x^t)-f^\star - R^\star  + \frac{\gamma}{2\theta^\star}  \frac{1}{n}\sum_{i=1}^n \sqnorm{\nabla f_i(x^t)-h_i^{t}},
\end{align*}
where $f^\star \eqdef f(x^\star)$ and $R^\star \eqdef R(x^\star)$, for any minimizer $x^\star$ of $f+R$. 
Then, for every $t\geq 0$,
\begin{align*}
\Exp{\Psi^{t}}  &\leq \left(\max\left({\frac{1}{1+\frac{1}{2}\gamma\mu}},\frac{r+1}{2}\right)\right)^t\Psi^0.
\end{align*}
\end{theorem}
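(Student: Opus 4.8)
The plan is to obtain Theorem~\ref{coro2} as a direct specialization of Theorem~\ref{theo2} (the K{\L} result for \algname{EF-BV}), so that essentially no new estimate is needed.

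\textbf{Step 1.} First I would invoke the identification established in Section~\ref{sec32}: running \algname{EF-BV} with the scaling parameter $\nu=1$ reproduces \algname{DIANA} exactly, since then the master update $g^{t+1}=h^t+\nu d^t$ collapses to $g^{t+1}=h^t+d^t$ while the control-variate update $h^{t+1}=h^t+\lambda d^t$ is unchanged, matching the iteration of \algname{DIANA} in Fig.~\ref{fig1}. In particular, because Theorem~\ref{theo2} imposes no restriction on $\nu$ beyond $\nu\in(0,1]$, the endpoint $\nu=1$ is admissible there, and any guarantee it gives for \algname{EF-BV} transfers verbatim to \algname{DIANA}.

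\textbf{Step 2.} Next I would check that the hypotheses and all auxiliary quantities of Theorem~\ref{coro2} are precisely those of Theorem~\ref{theo2} at $\nu=1$. Both require the K{\L} condition on $f+R$ with constant $\mu>0$, a choice $\lambda\in(0,1]$ with $r<1$ for $r=(1-\lambda+\lambda\eta)^2+\lambda^2\omega$, and a stepsize $0<\gamma\leq\bigl(2L+\tilde{L}\sqrt{r_{\mathrm{av}}/r}\,\tfrac{1}{s^\star}\bigr)^{-1}$, with $s^\star=\sqrt{(1+r)/(2r)}-1$ and $\theta^\star=s^\star(1+s^\star)\,r/r_{\mathrm{av}}$. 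The only quantity whose definition looks different is $r_{\mathrm{av}}$: in Theorem~\ref{theo2} it is $(1-\nu+\nu\eta)^2+\nu^2\oma$, whereas Appendix~\ref{secappb} sets $r_{\mathrm{av}}\eqdef\eta^2+\oma$; but evaluating the former at $\nu=1$ gives $(1-1+\eta)^2+1^2\cdot\oma=\eta^2+\oma$, so the two agree. Hence $s^\star$, $\theta^\star$, the stepsize bound, and the Lyapunov function $\Psi^t$ of Theorem~\ref{coro2} coincide with those of Theorem~\ref{theo2} under $\nu=1$.

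\textbf{Step 3.} Finally I would simply apply Theorem~\ref{theo2} with $\nu=1$ to \algname{DIANA}; it yields, for every $t\geq0$,
\begin{equation*}
\Exp{\Psi^{t}}\leq\left(\max\left(\frac{1}{1+\tfrac12\gamma\mu},\,\frac{r+1}{2}\right)\right)^t\Psi^0,
\end{equation*}
which is exactly the asserted bound. (The companion Theorem~\ref{coro1} follows in the same way from Theorem~\ref{theo1}.) The hard part here is not in this argument at all — it lies entirely in the proof of Theorem~\ref{theo2} — so the only thing one must be careful about is the bookkeeping in Steps~1--2, namely that \algname{EF-BV} at $\nu=1$ is genuinely \algname{DIANA} and that the two superficially distinct expressions for $r_{\mathrm{av}}$ match; both are immediate from the material already in the chapter.
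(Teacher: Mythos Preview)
Your proposal is correct and matches the paper's approach exactly: the paper states Theorems~\ref{coro1} and~\ref{coro2} as direct corollaries of Theorems~\ref{theo1} and~\ref{theo2}, obtained by viewing \algname{DIANA} as \algname{EF-BV} with $\nu=1$ and noting that $r_{\mathrm{av}}=\eta^2+\oma$ in that case. There is no separate proof in the paper beyond this identification, so your Steps~1--3 are precisely what is intended.
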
\smallskip

Interestingly,  \algname{DIANA}, used beyond its initial setting with compressors in $\mathbb{B}(\alpha)$ with $\lambda=1$, just reverts to (the original) 
 \algname{EF21}, as shown in Fig.~\ref{fig1}. This shows how our unified framework reveals connections between these two algorithms and  unleashes their potential.

\section{Experiments}\label{appexp}
\subsection{Datasets and experimental setup}
We consider the heterogeneous data distributed regime, which means that all parallel nodes store different data points, but use the same type of learning function. We adopt the datasets from LibSVM~\citep{chang2011libsvm} and  we split them, after random shuffling, into %$n\in\{1000, N\}$
$n\leq N$ 
blocks, where $N$ is the total number of data points (the left-out data points from the integer division of $N$ by $n$ are stored at the last node). The corresponding values are shown in Tab.~\ref{tab:logistic_datasets}. 
To make our  setting more realistic, we consider that different nodes partially share some data:
we set the overlapping factor to be $\xi\in\{1, 2\}$, where $\xi=1$ means no overlap and $\xi=2$ means that the data is partially shared among the nodes, with a redundancy factor of 2; this is achieved by  sequentially assigning 2 blocks of data to every node.
The experiments were conducted using 24 NVIDIA-A100-80G GPUs, each with  %For each GPU, we allocated 
80GB memory. 

\begin{table}[b]
 \caption{Values of $d$ and $N$ for the considered datasets.}
    \label{tab:logistic_datasets}
    \centering
    \begin{tabular}{c|c|c}
        \toprule
        \multirow{2}{*}{Dataset} & \multirow{2}{*}{$N$ (total \# of datapoints)} & \multirow{2}{*}{$d$ (\# of features)} \\
        %& $N_i$ (\# of datapoints per client)\\ 
        ~ & ~ &~ %& $n = 1000 \ |\ N$
        \\\hline
        \multirow{1}{*}{\texttt{mushrooms}} & \multirow{1}{*}{8,124} & \multirow{1}{*}{112} %& $8 \ |\ 1$
        \\ \hline
        \multirow{1}{*}{\texttt{phishing}} & \multirow{1}{*}{11,055} & \multirow{1}{*}{68} %& $11 \ |\ 1$
        \\ \hline
        \multirow{1}{*}{\texttt{a9a}} & \multirow{1}{*}{32,561} & \multirow{1}{*}{123} %& $32 \ |\ 1$
        \\ \hline
        \multirow{1}{*}{\texttt{w8a}} & \multirow{1}{*}{49,749} & \multirow{1}{*}{300} %& $49 \ |\ 1$
        \\  
        \bottomrule
    \end{tabular}
   \end{table}

We consider logistic regression, which consists in minimizing the $\mu$-strongly convex function 
\begin{equation*}
f=\frac{1}{n} \sum_{i=1}^{n}f_i,
\end{equation*}
with, for every $i\in\mathcal{I}_n$,
\begin{equation*}
        f_i(x) =\frac{1}{N_i} \sum_{j=1}^{N_i} \log\!\Big(1+\exp\!\left(-b_{i,j} x^{\top} a_{i,j}\right)\Big)+ \frac{\mu}{2} \|x\|^2, 
    \end{equation*}
where $\mu$, set to $0.1$, % $\mu\in \{0.1, 0.01\}$ 
is the strong convexity constant; $N_i$ is the number of data points at node $i$; the $a_{i,j}$ are the training vectors and the $b_{i,j} \in\{-1,1\}$ the corresponding labels. 
Note that there is no regularizer in this problem; that is, $R=0$.

We set $L=\tilde{L}=\sqrt{\sum_{i=1}^n L_i^2}$, with $L_i =\mu +  \frac{1}{4N_i} \sum_{j=1}^{N_i} \|a_{i,j}\|^2$. We use independent compressors of type \texttt{comp-}$(k,k')$ at every node, for some small $k$ and large $k'<d$. These compressors are biased ($\eta>0$) and have a variance $\omega>1$, so they are not contractive: they don't belong to $\mathbb{B}(\alpha)$ for any $\alpha$. We have $\oma=\frac{\omega}{n}$.
Thus, we place ourselves in the conditions of Theorem~\ref{theo1}, and we compare \algname{EF-BV} with 
\begin{equation*}
\lambda=\lambda^\star,\quad \nu=\nu^\star,\quad \gamma=\frac{1}{L+\tilde{L}\sqrt{\frac{r_{\mathrm{av}}}{r}}\frac{1}{s^\star}}
\end{equation*}
to \algname{EF21}, which corresponds to the particular case of \algname{EF-BV} with
\begin{equation*}
\nu = \lambda=\lambda^\star,\quad \gamma=\frac{1}{L+\tilde{L}\frac{1}{s^\star}}.
\end{equation*}

\begin{table}[t]
	\caption{Parameter values of \algname{EF-BV} and \algname{EF21} in the different settings.  $k'$ in \texttt{comp-}$(k, k')$ is set to $d/2$ and $n=1000$. In pairs of values like (1,2), the first value is $k$ and the second value is $\xi$.}\label{tab10}
	\centering
	\resizebox{1.0\textwidth}{!}{
	\begin{tabular}{c|c|ccc|ccc|ccc|ccc}
		\toprule
		\multirow{2}{*}{Method} & \multirow{2}{*}{Params} & \multicolumn{3}{c}{mushrooms} & \multicolumn{3}{c}{phishing} & \multicolumn{3}{c}{a9a} & \multicolumn{3}{c}{w8a}\\ \cmidrule{3-5} \cmidrule{6-8} \cmidrule{9-11} \cmidrule{12-14}
		% ~ & ~ & $k=1,\xi=1$ & $k=1,\xi=2$ & $k=2,\xi=1$ & $k=1,\xi=1$ & $k=1,\xi=2$ & $k=2,\xi=1$ & $k=1,\xi=1$ & $k=1,\xi=2$ & $k=2,\xi=1$ & $k=1,\xi=1$ & $k=1,\xi=2$ & $k=2,\xi=1$\\ \hline
		~ & ~ & (1,1) & (1,2) & (2,1) & (1,1) & (1,2) & (2,1) & (1,1) & (1,2) & (2,1) & (1,1) & (1,2) & (2,1)\\ \hline
		 & \multirow{2}{*}{$\eta$} & \multirow{2}{*}{0.707} & \multirow{2}{*}{0.707} & \multirow{2}{*}{0.707} & \multirow{2}{*}{0.707} & \multirow{2}{*}{0.707} & \multirow{2}{*}{0.707} & \multirow{2}{*}{0.710} & \multirow{2}{*}{0.710} & \multirow{2}{*}{0.710}  & \multirow{2}{*}{0.707} & \multirow{2}{*}{0.707} & \multirow{2}{*}{0.707} \\
		 & ~ & ~ & ~\\ \hline

		& \multirow{2}{*}{$\omega$} & \multirow{2}{*}{55} & \multirow{2}{*}{55} & \multirow{2}{*}{27} & \multirow{2}{*}{33} & \multirow{2}{*}{33} & \multirow{2}{*}{16} & \multirow{2}{*}{60} & \multirow{2}{*}{60} & \multirow{2}{*}{29.5} & \multirow{2}{*}{149} & \multirow{2}{*}{149} & \multirow{2}{*}{74}\\
		 & ~ & ~ & ~\\ \hline  

		 & \multirow{2}{*}{$\omega_{\mathrm{av}}$} & \multirow{2}{*}{0.055} & \multirow{2}{*}{0.055} & \multirow{2}{*}{0.027} & \multirow{2}{*}{0.033} & \multirow{2}{*}{0.033} & \multirow{2}{*}{0.016}  & \multirow{2}{*}{0.06} & \multirow{2}{*}{0.06} & \multirow{2}{*}{0.295} & \multirow{2}{*}{0.149} & \multirow{2}{*}{0.149} & \multirow{2}{*}{0.074}\\
		 & ~ & ~ & ~\\ \hline 

		EF-BV & \multirow{2}{*}{$\lambda$} & 5.32e-3 & 5.32e-3 & 1.08e-2 & 8.85e-3 & 8.85e-3 & 1.82e-2 & 4.83e-3 & 4.83e-3 & 9.8e-3 & 1.96e-3 & 1.96e-3 & 3.95e-3\\
		EF21 & ~ & 5.32e-3 & 5.32e-4 & 1.08e-2 & 8.85e-3 & 8.85e-3 & 1.82e-2 & 4.83e-3 & 4.83e-3 & 9.8e-3 & 1.96e-3 & 1.96e-3 & 3.95e-3\\ \hline

		EF-BV & \multirow{2}{*}{$\nu$} & 1 & 1 & 1 & 1 & 1 & 1 & 1 & 1 & 1 & 1 & 1 & 1 \\
		EF21 & ~ & 5.32e-3 & 5.32e-4 & 1.08e-2 & 8.85e-3 & 8.85e-3 & 1.82e-2 & 4.83e-3 & 4.83e-3 & 9.8e-3 & 1.96e-3 & 1.96e-3 & 3.95e-3\\ \hline

		EF-BV & \multirow{2}{*}{$r$} & 0.998 & 0.998 & 0.997& 0.997 & 0.997 & 0.994 & 0.999 & 0.999 & 0.997 & 0.999 & 0.999 & 0.999\\
		EF21 & ~ & 0.998 & 0.998 & 0.997 & 0.997 & 0.997 & 0.994 &0.999 & 0.999 & 0.997 & 0.999 & 0.999 & 0.999\\ \hline

		EF-BV & \multirow{2}{*}{$r_{\mathrm{av}}$} & 0.555 & 0.555 & 0.527 & 0.533 & 0.533 & 0.516 & 0.564 & 0.564 & 0.534 & 0.649 & 0.649 & 0.574\\
		EF21 & ~ & 0.998 & 0.998 & 0.997 & 0.997 & 0.997 & 0.994 & 0.999 & 0.999 & 0.997 & 0.999 & 0.999 & 0.999\\ \hline

		EF-BV & \multirow{2}{*}{$\sqrt{\frac{r_{\mathrm{av}}}{r}}$} & 0.746 & 0.746 & 0.727 & 0.731 & 0.731 & 0.720 & 0.752 & 0.752 & 0.731 & 0.806 & 0.806 & 0.758\\
		EF21 & ~ & 1 & 1 & 1 & 1 & 1 & 1 & 1 & 1 & 1 & 1 & 1 & 1\\ \hline  

		EF-BV & \multirow{2}{*}{$s^\star$} & 3.90e-4 & 3.90e-4 & 7.94e-4 & 6.50e-4 & 6.50e-4 & 1.34e-3 & 3.5e-4 & 3.5e-4 & 7.13e-4 & 1.44e-4 & 1.44e-4 & 2.90e-4\\
		EF21 & ~ & 3.90e-4 & 3.90e-4 & 7.94e-4 & 6.50e-4 & 6.50e-4 & 1.34e-3 & 3.5e-4 & 3.5e-4 & 7.13e-4 & 1.44e-4 & 1.44e-4 & 2.90e-4\\ \hline

		EF-BV & \multirow{2}{*}{$\gamma$} & 1.38e-4 & 1.43e-4 & 2.87e-4 & 2.33e-3 & 2.36e-3 & 4.80e-3 & 2.53e-4 & 2.58e-4 & 5.28e-4 & 1.01e-4 & 1.15e-4 & 2.15e-4\\
		EF21 & ~ & 1.03e-4 & 1.06e-4 & 2.10e-4 & 1.71e-3 & 1.73e-3 & 3.49e-3 & 1.91e-4 & 1.84e-4 & 3.87e-4 & 8.12e-5 & 9.31e-5 & 1.63e-4\\ \hline
	\end{tabular}}
\end{table}
  
\subsection{Experimental results and analysis}  
We show in Fig.~\ref{fig:0007} the results with $k=1$ or $k=2$ in the compressors  \texttt{comp-}$(k,k')$, and overlapping factor $\xi=1$ or $\xi=2$.
We chose $k'=\frac{d}{2}$ and $n=1000$. The corresponding values of $\eta$, $\omega$, $\oma$, and the parameter values used in the algorithms are shown in Tab.~\ref{tab10}. We can see that there is essentially no difference between the two choices $\xi=1$ and $\xi=2$, and the qualitative behavior for $k=1$ and $k=2$ is similar. Thus, we observe that  \algname{EF-BV} converges always faster than \algname{EF21}; this is consistent with our analysis. 

We tried other values of $n$, including the largest value $n=N$, for which there is only one data point at every node. The behavior of \algname{EF21} and \algname{EF-BV} was the same as for $n=1000$, so we don't show the results.

We tried other values of $k'$. The behavior of \algname{EF21} and \algname{EF-BV} was the same as for $k'= \frac{d}{2}$ overall, so we don't show the results. We noticed that the difference between the two algorithms was smaller when $k'$ was smaller; this is expected, since for $k'=k$, the compressors revert to \texttt{top-}$k$, for which \algname{EF21} and \algname{EF-BV} are the same algorithm.

To sum up, the experiments confirm our analysis: when $\omega$ and $n$ are large,  so that the key factor $\sqrt{\frac{r_{\mathrm{av}}}{r}}$ is small, randomness is exploited in \algname{EF-BV}, with larger values of $\nu$ and $\gamma$ allowed than in \algname{EF21}, and this yields faster convergence. 

In future work, we will design and compare other compressors in our new class $\mathbb{C}(\eta,\omega)$, performing well in both homogeneous and heterogeneous regimes.

\subsection{Additional experiments in the nonconvex setting} 
We consider the logistic regression  problem with a nonconvex regularizer:
\begin{equation}
   f(x)=\frac{1}{n} \sum_{i=1}^{n} \log \left(1+\exp \left(-y_{i} a_{i}^{\top} x\right)\right)+\lambda \sum_{j=1}^{d} \frac{x_{j}^{2}}{1+x_{j}^{2}}, 
\end{equation}
where $a_{i} \in \mathbb{R}^{d}, y_{i} \in\{-1,1\}$ are the training data, and $\lambda>0$ is the regularizer parameter. We used $\lambda=0.1$ in all experiments. We present the results in Fig.~\ref{fig13}.
   
\begin{figure}[!htbp]
   \centering
   \begin{subfigure}[b]{0.32\textwidth}
      \centering
      \includegraphics[width=\textwidth]{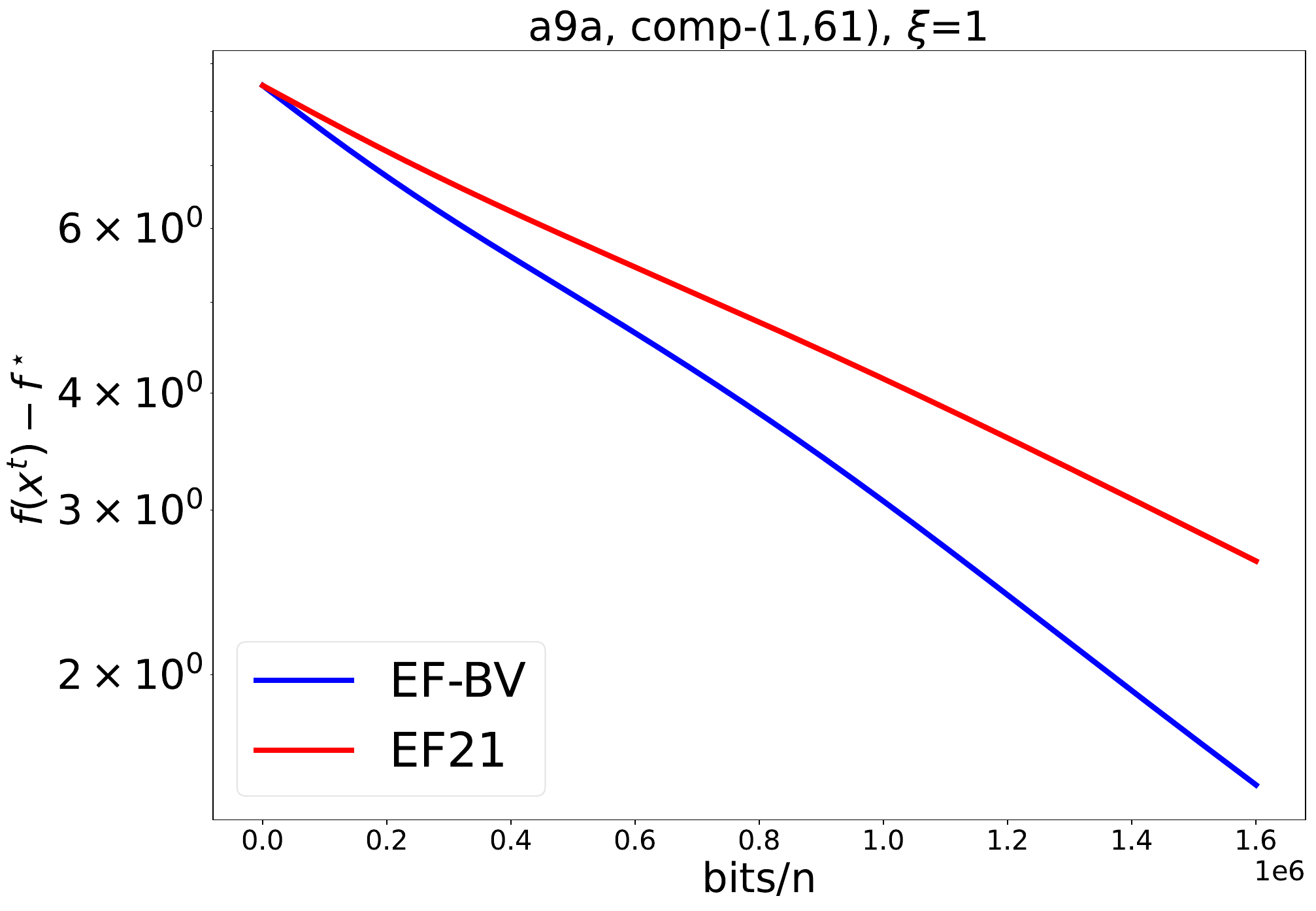}
   \end{subfigure}
   \hfill
   \begin{subfigure}[b]{0.32\textwidth}
      \centering
      \includegraphics[width=\textwidth]{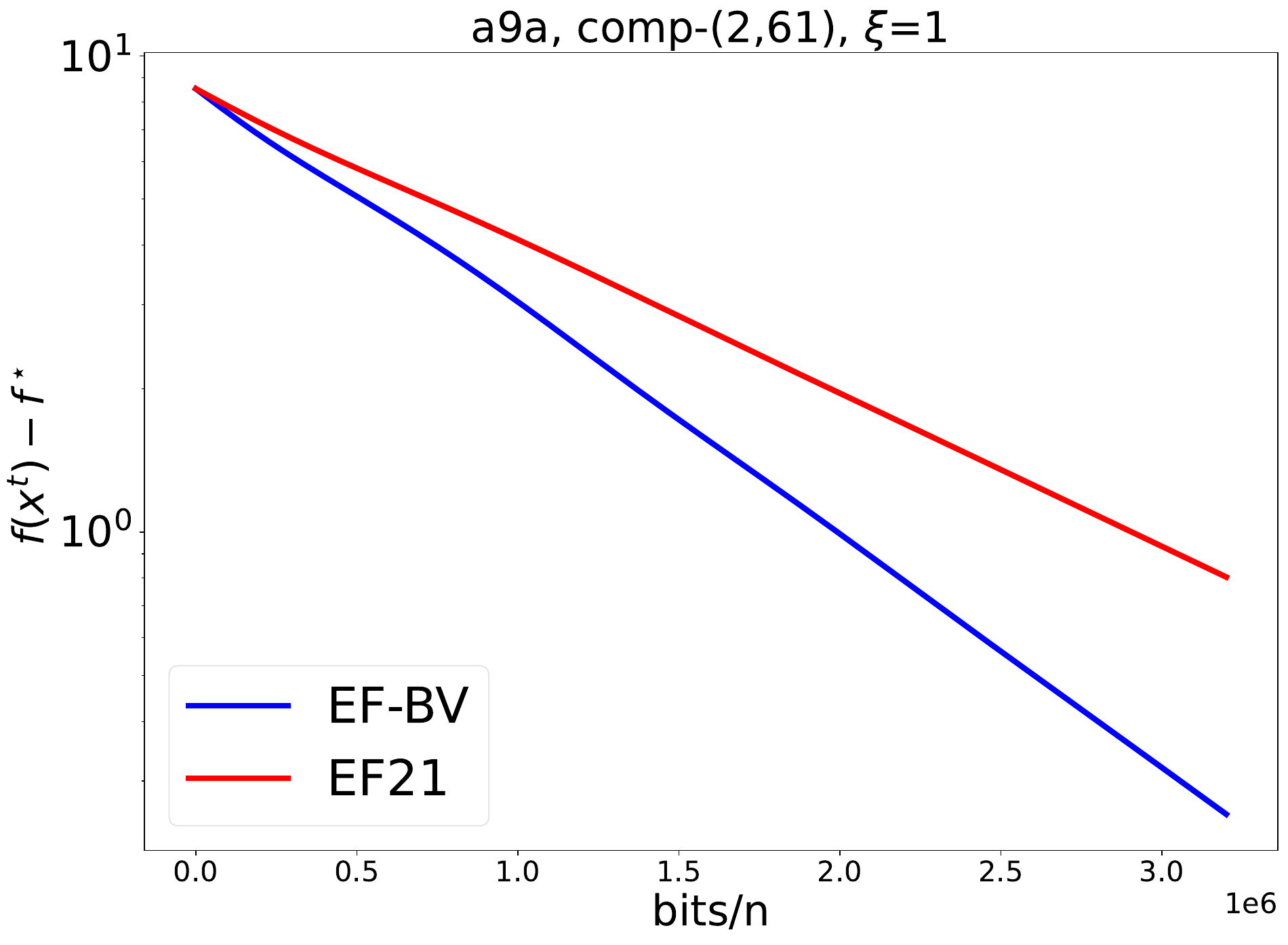}
   \end{subfigure}
   \hfill
   \begin{subfigure}[b]{0.32\textwidth}
      \centering
      \includegraphics[width=\textwidth]{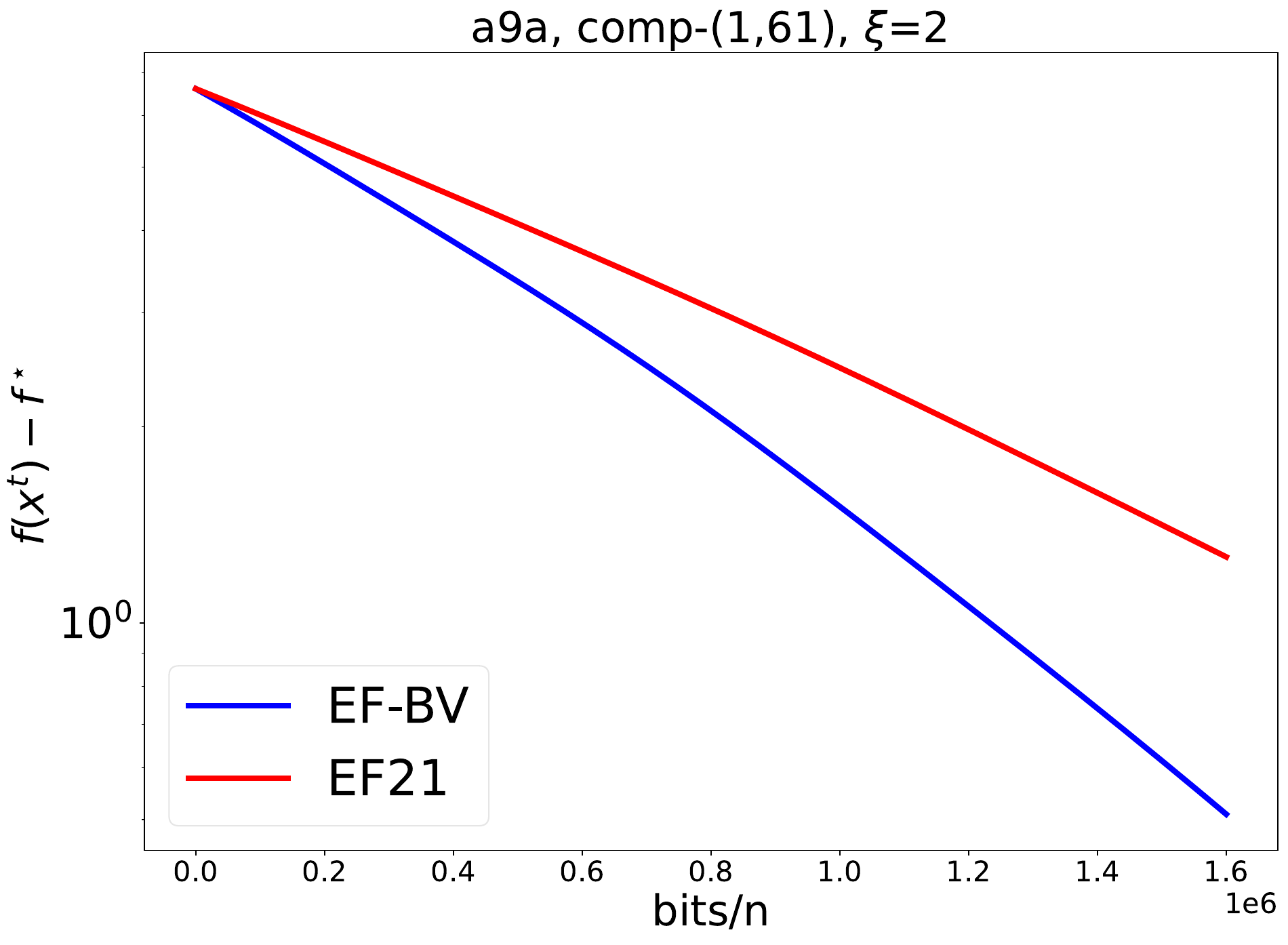}
   \end{subfigure}
   \hfill
   \begin{subfigure}[b]{0.32\textwidth}
      \centering
      \includegraphics[width=\textwidth]{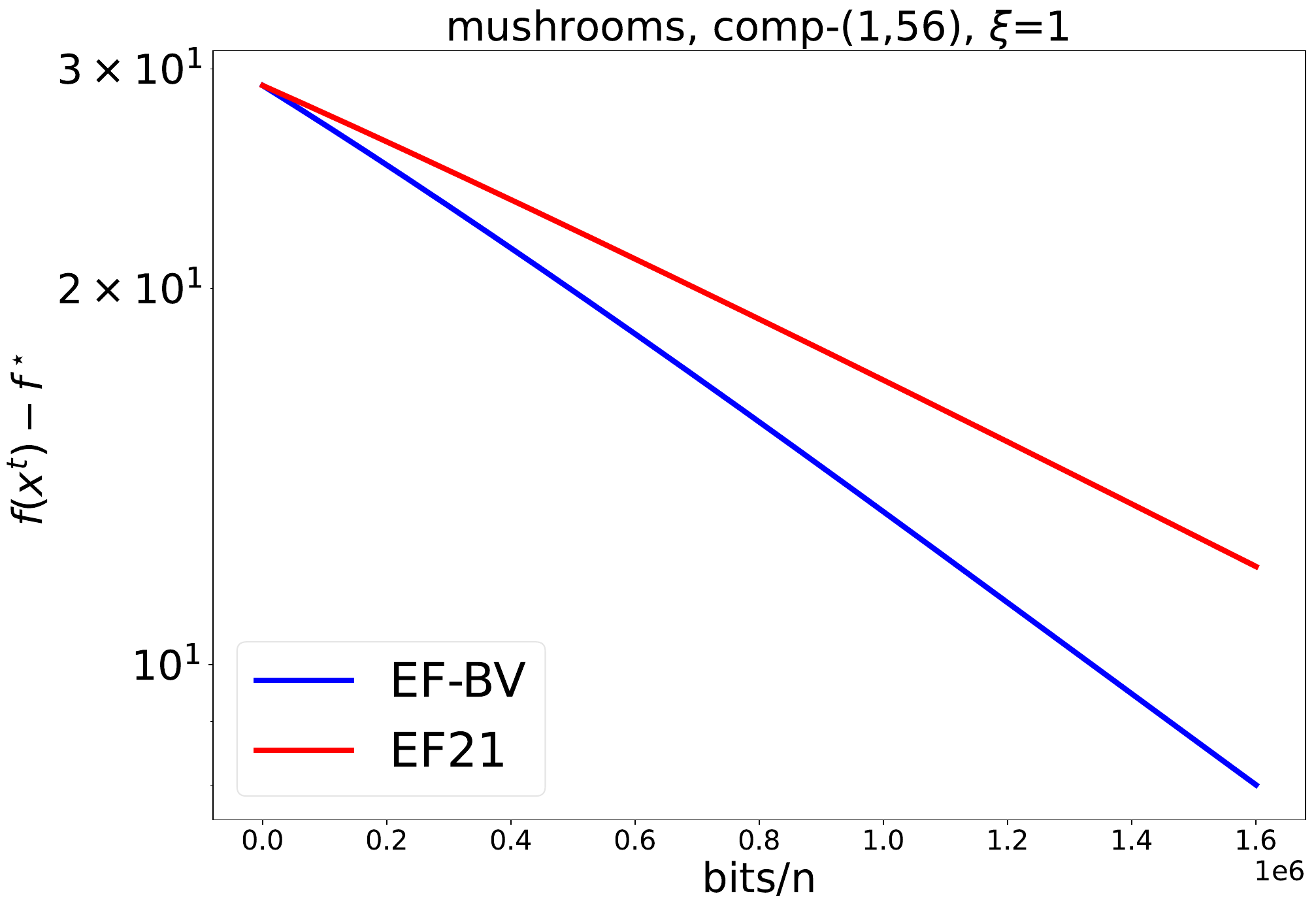}
   \end{subfigure}
   \hfill
   \begin{subfigure}[b]{0.32\textwidth}
      \centering
      \includegraphics[width=\textwidth]{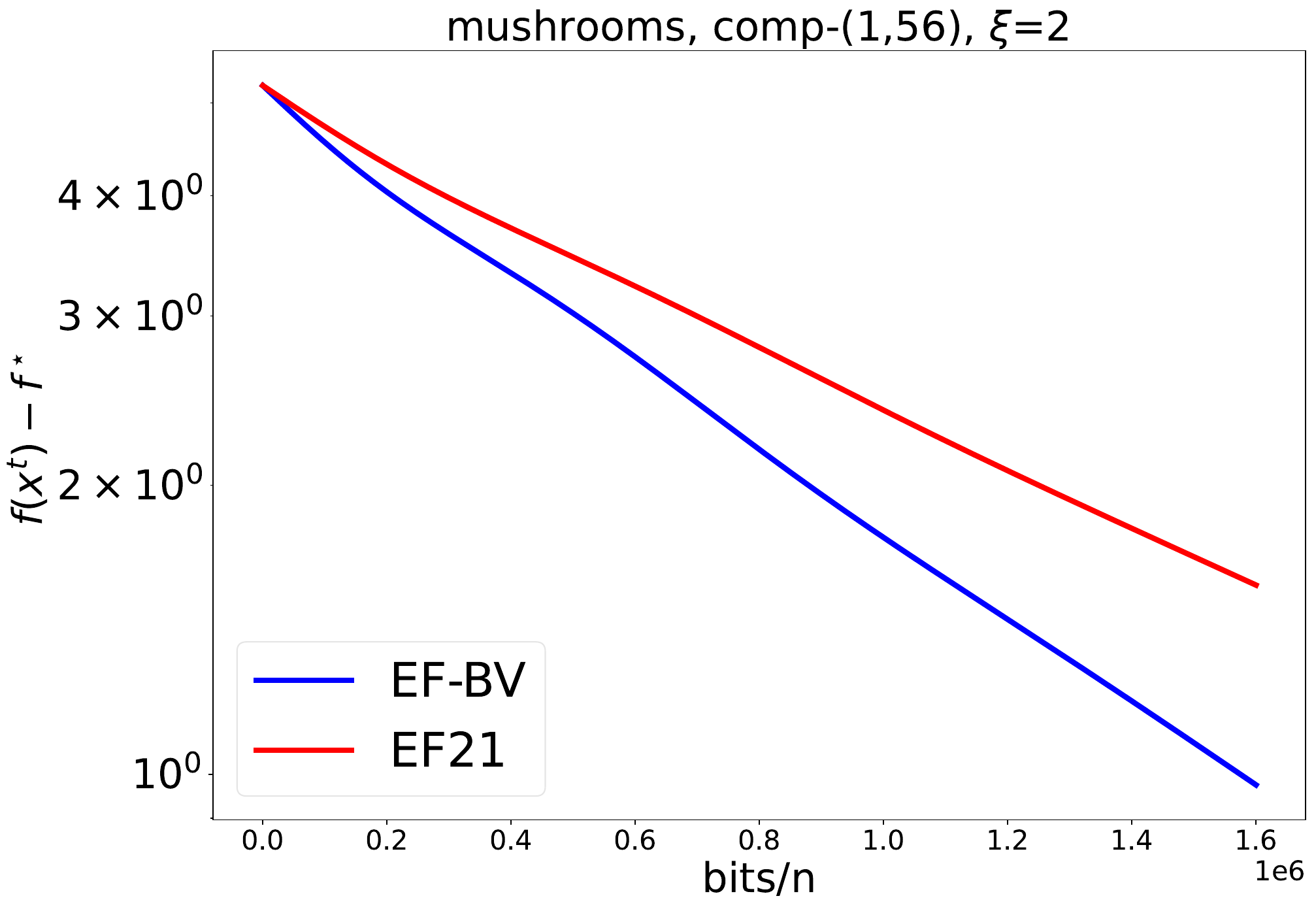}
   \end{subfigure}
   \hfill
   \begin{subfigure}[b]{0.32\textwidth}
      \centering
      \includegraphics[width=\textwidth]{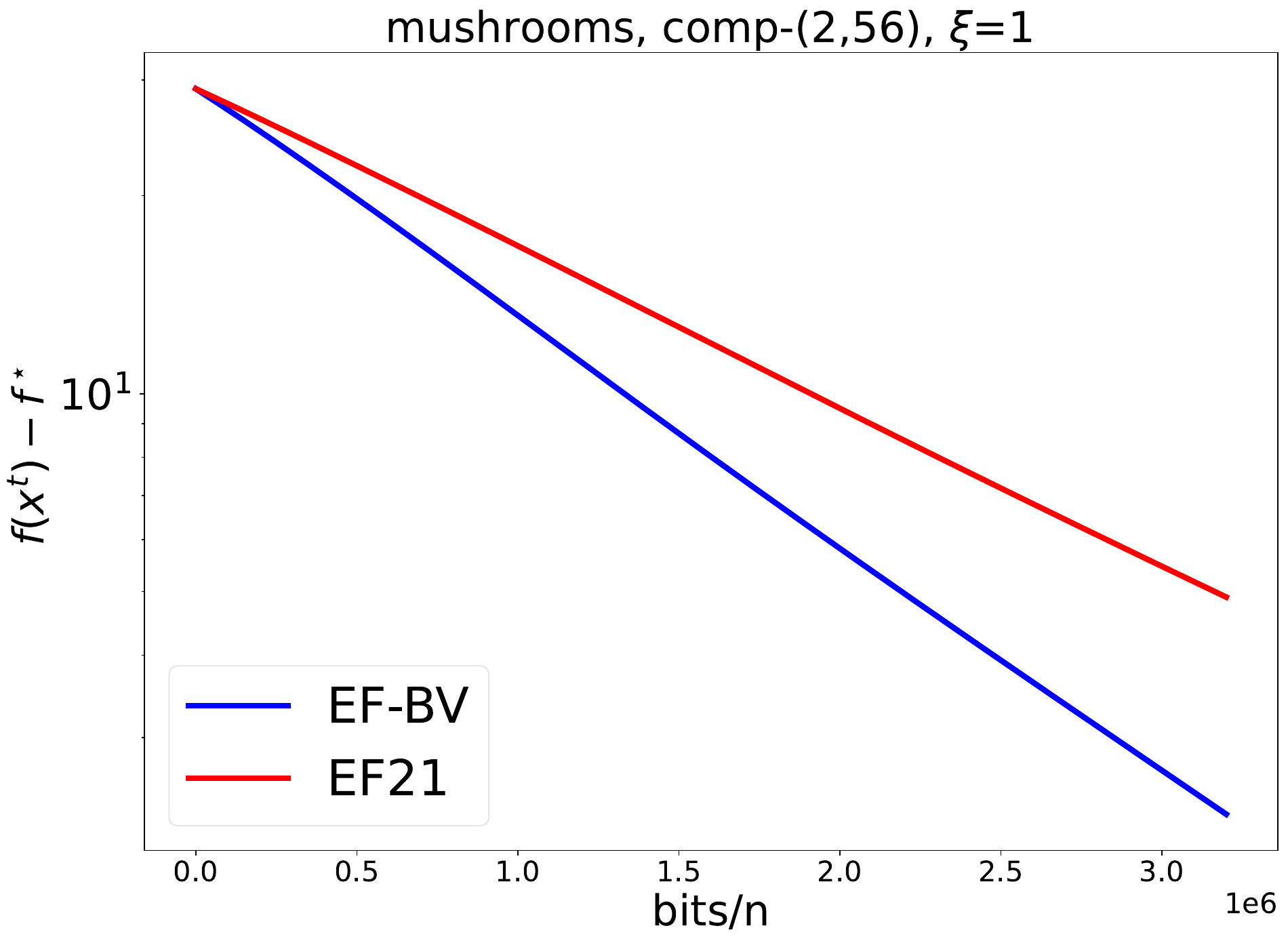}
   \end{subfigure}
   \hfill
   \begin{subfigure}[b]{0.32\textwidth}
      \centering
      \includegraphics[width=\textwidth]{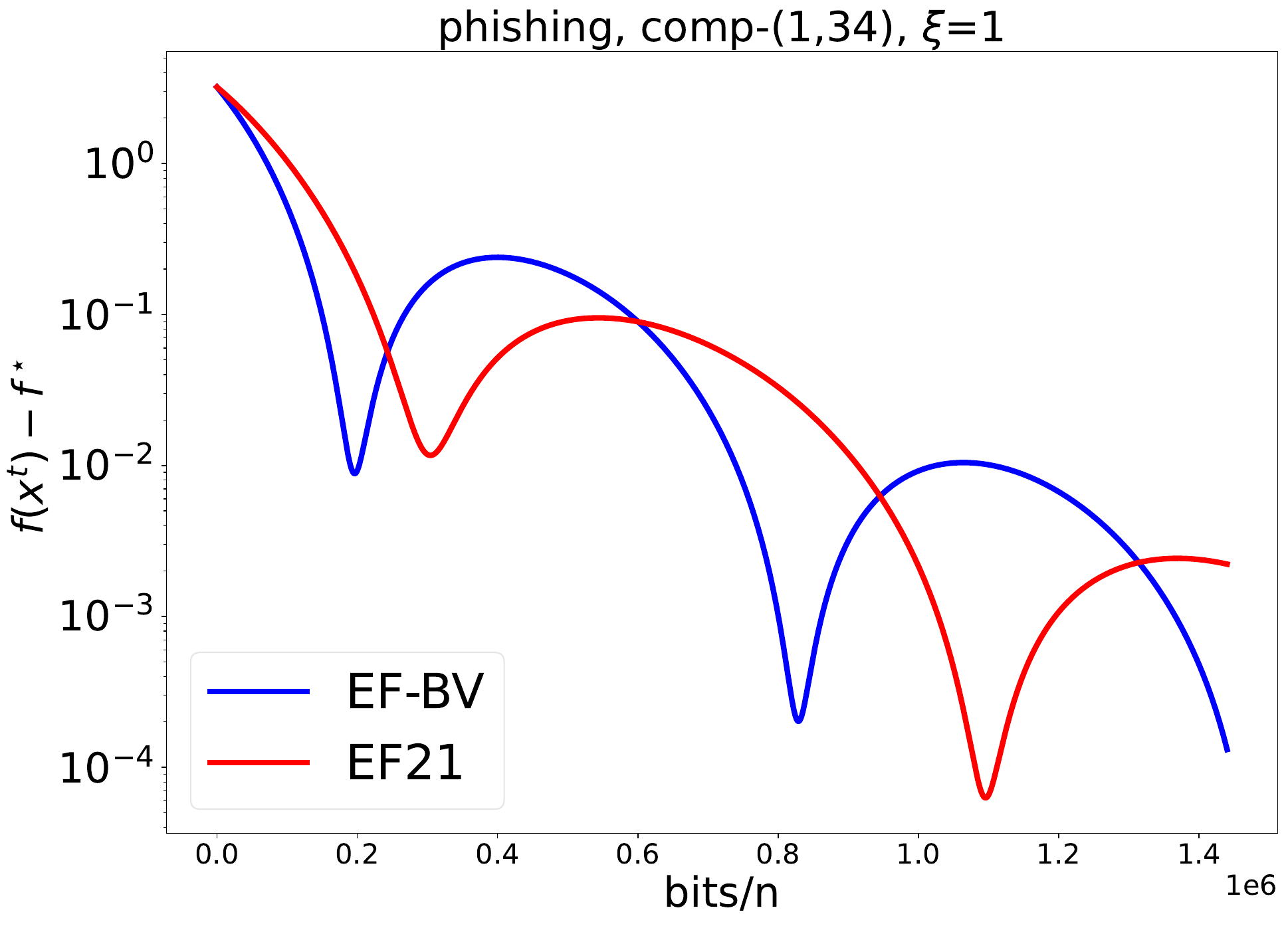}
   \end{subfigure}
   \hfill
   \begin{subfigure}[b]{0.32\textwidth}
      \centering
      \includegraphics[width=\textwidth]{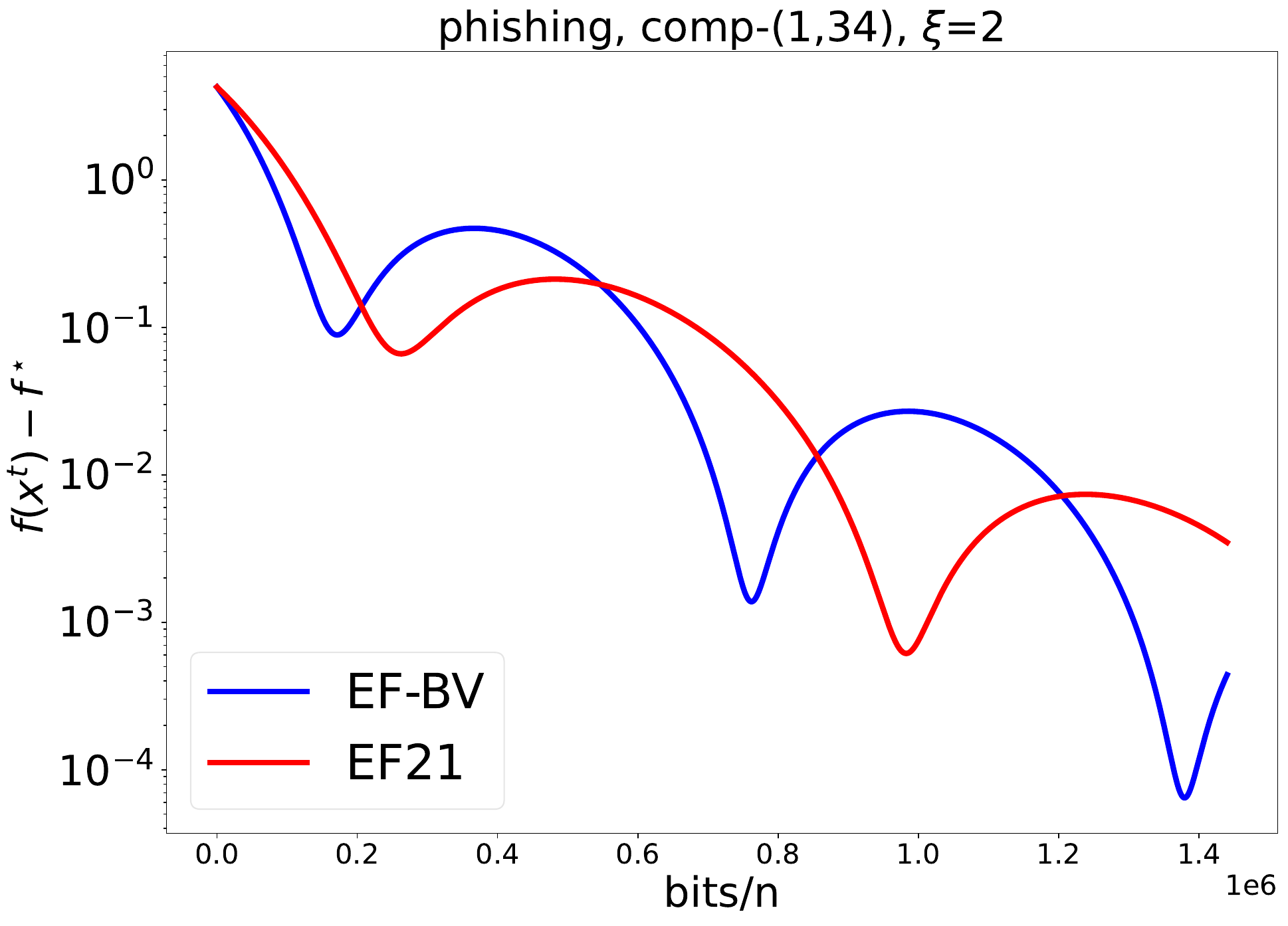}
   \end{subfigure}
   \hfill
   \begin{subfigure}[b]{0.32\textwidth}
      \centering
      \includegraphics[width=\textwidth]{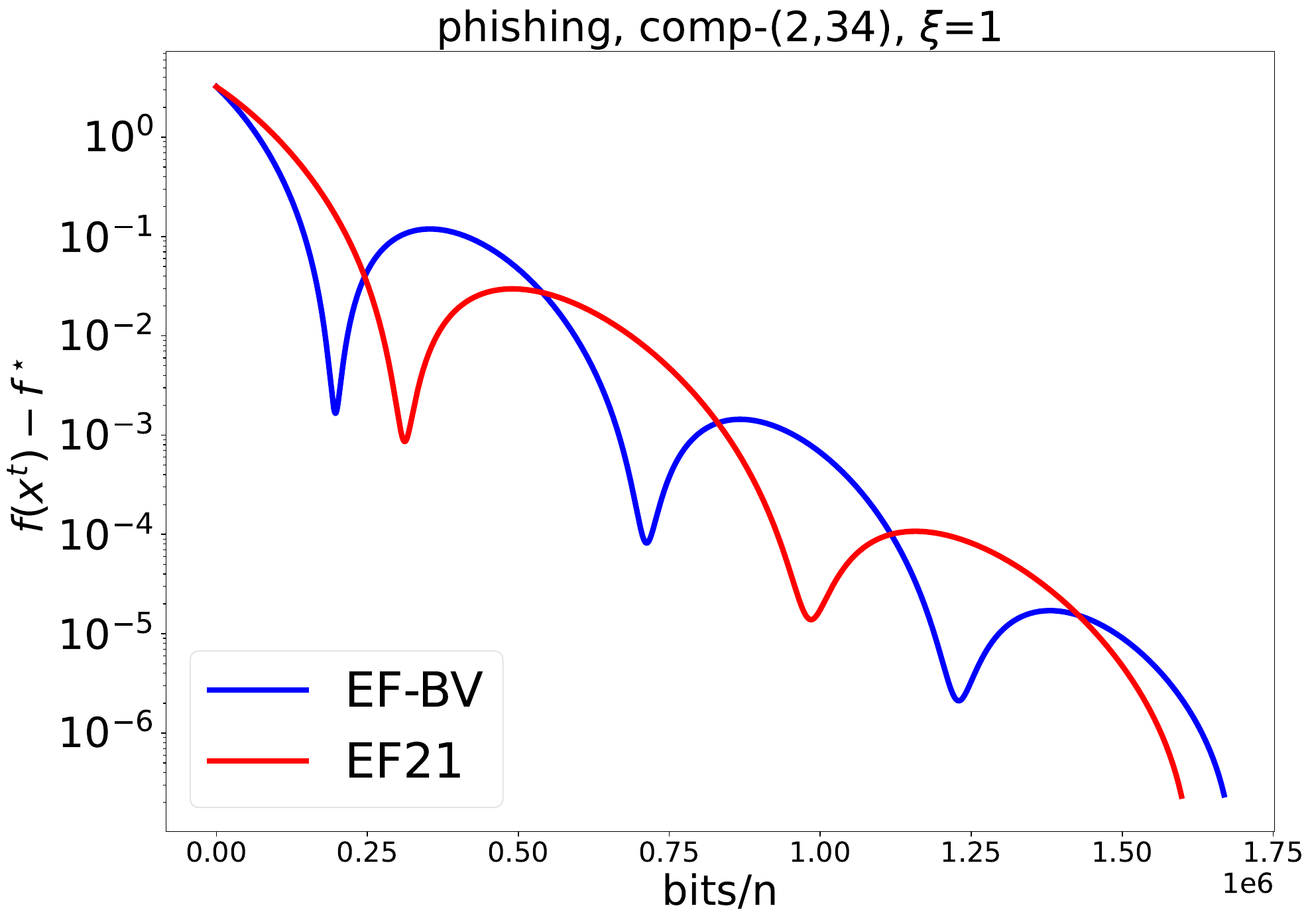}
   \end{subfigure}
      \caption{Comparison between \algname{EF21} and \algname{EF-BV} in the nonconvex setting. We see that \algname{EF-BV} outperforms \algname{EF21} for all datasets.}
      \label{fig13}
\end{figure}

\section{Proof of Proposition~\ref{prop1}}\label{secproofp4}
We first calculate $\omega$. Let $x\in\mathbb{R}^d$.
\begin{align*}
\big\|\mathcal{C}(x)- \mathbb{E}[\mathcal{C}(x)]\big\|^2&=\sum_{i\in \mathcal{I}_d\backslash \{i_1,\ldots, i_{k+k'}\}} \left(\frac{k'}{d-k}\right)^2|x_i|^2
+ \sum_{j=k+1}^{k+k'} \left(\frac{d-k-k'}{d-k}\right)^2|x_{i_j}|^2.
\end{align*}
Therefore, by taking the expectation over the random indexes $i_{k+1},\ldots,i_{2k}$,
\begin{align*}
\Exp{\big\|\mathcal{C}(x)- \mathbb{E}[\mathcal{C}(x)]\big\|^2} &=\sum_{i\in \mathcal{I}_d\backslash \{i_1,\ldots, i_{k}\}} 
\left( \frac{d-k-k'}{d-k}\left(\frac{k'}{d-k}\right)^2 +\frac{k'}{d-k} \left(\frac{d-k-k'}{d-k}\right)^2 
\right)|x_i|^2\\
&=\frac{k'(d-k-k')}{(d-k)^2}\sum_{i\in \mathcal{I}_d\backslash \{i_1,\ldots, i_{k}\}}  |x_i|^2.
\end{align*}
Moreover, since the $|x_{i_j}|$ are the largest elements of $|x|$, for every $j=1,\ldots,k$, 
\begin{equation*}
|x_{i_j}|^2\geq \frac{1}{d-k}\sum_{i\in \mathcal{I}_d\backslash \{i_1,\ldots, i_{k}\}}  |x_i|^2,
\end{equation*}
so that
\begin{equation*}
\|x\|^2 = \sum_{i\in \mathcal{I}_d}  |x_i|^2 \geq \left(1+\frac{k}{d-k}\right) \sum_{i\in \mathcal{I}_d\backslash \{i_1,\ldots, i_{k}\}}  |x_i|^2.
\end{equation*}
Hence, 
\begin{align*}
\Exp{\big\|\mathcal{C}(x)- \mathbb{E}[\mathcal{C}(x)]\big\|^2} &\leq
\frac{k'(d-k-k')}{(d-k)^2}\frac{d-k}{d}\|x\|^2 = \frac{k'(d-k-k')}{(d-k)d}\|x\|^2.
\end{align*}
Then, let us calculate $\eta$.
\begin{align*}
\big\| \mathbb{E}[\mathcal{C}(x)]-x\big\|^2 &=\sum_{i\in \mathcal{I}_d\backslash \{i_1,\ldots, i_{k}\}} \left(\frac{d-k-k'}{d-k}\right)^2|x_i|^2\\
&\leq  \frac{(d-k-k')^2}{(d-k)d}\|x\|^2.
\end{align*}
Thus, $\eta =\frac{d-k-k'}{\sqrt{(d-k)d}}$.

\section{Proof of Proposition~\ref{prop2}}\label{secproofp5}
We first calculate $\omega$. Let $x\in\mathbb{R}^d$.
\begin{align*}
\big\|\mathcal{C}(x)- \mathbb{E}[\mathcal{C}(x)]\big\|^2&=\sum_{j\in\{j_1,\ldots,j_{k}\}}  \left(\frac{k'-k}{k}\right)^2|x_{i_j}|^2+
\sum_{i\in  \{i_1,\ldots, i_{k'}\}\backslash \{i_{j_1},\ldots, i_{j_{k}}\}} |x_i|^2
\end{align*}
Therefore, by taking the expectation over the random indexes $i_{j_1},\ldots,i_{j_{k}}$,
\begin{align*}
\Exp{\big\|\mathcal{C}(x)- \mathbb{E}[\mathcal{C}(x)]\big\|^2} &=\sum_{j=1}^{k'}
\left( \frac{k}{k'}\left(\frac{k'-k}{k}\right)^2 +\frac{k'-k}{k'}\right)|x_{i_j}|^2\\
&=\frac{k'-k}{k}\sum_{j=1}^{k'}
|x_{i_j}|^2\\
&\leq\frac{k'-k}{k}\|x\|^2
\end{align*}
Then, let us calculate $\eta$:
\begin{align*}
\big\| \mathbb{E}[\mathcal{C}(x)]-x\big\|^2 =\sum_{i\in \mathcal{I}_d\backslash \{i_1,\ldots, i_{k'}\}} |x_i|^2 \leq  \frac{d-k'}{d}\|x\|^2.
\end{align*}

\section{Proof of Theorem~\ref{theo1}}
We have the descent property \citep[Lemma 4]{ric21}, for every $t\geq 0$,
\begin{align}
f(x^{t+1}) -f^\star &\leq f(x^t)  -f^\star -\frac{\gamma}{2} \sqnorm{\nabla f(x^t)} +\frac{ \gamma }{2}\sqnorm{g^{t+1}-\nabla f(x^t)}\notag\\
&\quad+ \left(\frac{ L}{2}-\frac{1}{2\gamma}\right)\sqnorm{x^{t+1}-x^t}\label{eqgergg}\\
&\leq (1-\gamma\mu) \big(f(x^t)  -f^\star\big)  +\frac{ \gamma }{2}\sqnorm{g^{t+1}-\nabla f(x^t)}+ \left(\frac{ L}{2}-\frac{1}{2\gamma}\right)\sqnorm{x^{t+1}-x^t}.\notag
\end{align}
Then, for every $t\geq 0$, conditionally on $x^t$, $h^t$ and $(h_i^t)_{i=1}^n$,
\begin{align*}
\Exp{\sqnorm{g^{t+1}-\nabla f(x^t)}} &=\Exp{\sqnorm{\frac{1}{n}\sum_{i=1}^n \Big(h_i^{t}-\nabla f_i(x^t) +\nu \mathcal{C}_i^t\big(\nabla f_i(x^t)-h_i^t\big) \Big) }}\\
&=\sqnorm{\frac{1}{n}\sum_{i=1}^n \Big(h_i^{t}-\nabla f_i(x^t) +\nu \Exp{\mathcal{C}_i^t\big(\nabla f_i(x^t)-h_i^t\big)} \Big)}\\
&\quad+\nu^2\Exp{\sqnorm{\frac{1}{n}\sum_{i=1}^n \Big( \mathcal{C}_i^t\big(\nabla f_i(x^t)-h_i^t\big)-\Exp{ \mathcal{C}_i^t\big(\nabla f_i(x^t)-h_i^t\big) } \Big) }}\\
&\leq \sqnorm{\frac{1}{n}\sum_{i=1}^n \Big(h_i^{t}-\nabla f_i(x^t) +\nu \Exp{\mathcal{C}_i^t\big(\nabla f_i(x^t)-h_i^t\big)} \Big)}\\
&\quad+\nu^2 \frac{\oma}{n}\sum_{i=1}^n \sqnorm{\nabla f_i(x^t)-h_i^t },
\end{align*}
where the last inequality follows from \eqref{eqbo}. In addition,
\begin{align*}
&\left\|\frac{1}{n}\sum_{i=1}^n \Big(h_i^{t}-\nabla f_i(x^t) +\nu \Exp{\mathcal{C}_i^t\big(\nabla f_i(x^t)-h_i^t\big)} \Big)\right\|\\
&\quad \leq \left\|\frac{1}{n}\sum_{i=1}^n \Big(\nu\big(h_i^{t}-\nabla f_i(x^t)\big) +\nu \Exp{\mathcal{C}_i^t\big(\nabla f_i(x^t)-h_i^t\big)} \Big)\right\|\\
&\quad\quad + (1-\nu)\left\|\frac{1}{n}\sum_{i=1}^n \big(h_i^{t}-\nabla f_i(x^t)\big)\right\|\\
&\quad \leq \frac{\nu}{n}\sum_{i=1}^n\left\|h_i^{t}-\nabla f_i(x^t)+ \Exp{\mathcal{C}_i^t\big(\nabla f_i(x^t)-h_i^t\big)}\right\|\\
&\quad\quad + \frac{1-\nu}{n}\sum_{i=1}^n \left\|h_i^{t}-\nabla f_i(x^t)\right\|\\
&\quad \leq  \frac{\nu\eta}{n}\sum_{i=1}^n \left\|\nabla f_i(x^t)-h_i^{t}\right\|+ \frac{1-\nu}{n}\sum_{i=1}^n \left\|\nabla f_i(x^t)-h_i^{t}\right\|\\
& \quad = \frac{1-\nu+\nu\eta}{n}\sum_{i=1}^n \left\|\nabla f_i(x^t)-h_i^{t}\right\|.
\end{align*}
Therefore, 
\begin{align*}
\sqnorm{\frac{1}{n}\sum_{i=1}^n \Big(h_i^{t}-\nabla f_i(x^t) +\nu \Exp{\mathcal{C}_i^t\big(\nabla f_i(x^t)-h_i^t\big)} \Big)} \leq \frac{(1-\nu+\nu\eta)^2}{n}\sum_{i=1}^n \sqnorm{\nabla f_i(x^t)-h_i^{t}},
\end{align*}
and, 
conditionally on $x^t$, $h^t$ and $(h_i^t)_{i=1}^n$,
\begin{align*}
\Exp{\sqnorm{g^{t+1}-\nabla f(x^t)}} &\leq 
\left((1-\nu+\nu\eta)^2+\nu^2\oma\right)\frac{1}{n}\sum_{i=1}^n \sqnorm{\nabla f_i(x^t)-h_i^{t}}.
\end{align*}

Thus, for every $t\geq 0$, conditionally on $x^t$, $h^t$ and $(h_i^t)_{i=1}^n$,
 \begin{align*}
\Exp{f(x^{t+1}) -f^\star} 
&\leq (1-\gamma\mu) \big(f(x^t)  -f^\star \big)  +\frac{ \gamma }{2}\big((1-\nu+\nu\eta)^2+\nu^2\oma\big)\frac{1}{n}\sum_{i=1}^n \sqnorm{\nabla f_i(x^t)-h_i^{t}}\\
&\quad+ \left(\frac{ L}{2}-\frac{1}{2\gamma}\right)\Exp{\sqnorm{x^{t+1}-x^t}}.
\end{align*}

Now, let us study the control variates $h_i^t$. Let $s>0$. Using the Peter--Paul inequality $\|a+b\|^2 \leq (1+s) \|a\|^2 + (1+s^{-1}) \|b\|^2$, for any vectors $a$ and $b$, 
we have, for every $t\geq 0$ and $i\in\mathcal{I}_n$,
\begin{align*}
\sqnorm{\nabla f_i(x^{t+1})-h_i^{t+1}}&=\sqnorm{h_i^{t}-\nabla f_i(x^{t+1}) +\lambda \mathcal{C}_i^t\big(\nabla f_i(x^t)-h_i^t\big)  }\\
&\leq (1+s)\sqnorm{h_i^{t}-\nabla f_i(x^{t}) +\lambda \mathcal{C}_i^t\big(\nabla f_i(x^t)-h_i^t\big)  }\\ 
&\quad+(1+s^{-1})\sqnorm{\nabla f_i(x^{t+1})-\nabla f_i(x^{t})}\\
&\leq (1+s)\sqnorm{h_i^{t}-\nabla f_i(x^{t}) +\lambda \mathcal{C}_i^t\big(\nabla f_i(x^t)-h_i^t\big)  } \\
&\quad+(1+s^{-1})L_i^2\sqnorm{x^{t+1}-x^{t}}.
\end{align*}
Moreover,  conditionally on $x^t$, $h^t$ and $(h_i^t)_{i=1}^n$, 
\begin{align*}
\Exp{\sqnorm{h_i^{t}-\nabla f_i(x^{t}) +\lambda \mathcal{C}_i^t\big(\nabla f_i(x^t)-h_i^t\big)  }}&=\sqnorm{h_i^{t}-\nabla f_i(x^{t}) +\lambda \Exp{\mathcal{C}_i^t\big(\nabla f_i(x^t)-h_i^t\big)} }\\
&\quad+\lambda^2\Exp{\sqnorm{ \mathcal{C}_i^t\big(\nabla f_i(x^t)-h_i^t\big)-\Exp{ \mathcal{C}_i^t\big(\nabla f_i(x^t)-h_i^t\big) }  }}\\
&\leq\sqnorm{h_i^{t}-\nabla f_i(x^{t}) +\lambda \Exp{\mathcal{C}_i^t\big(\nabla f_i(x^t)-h_i^t\big)} }\\
&\quad+\lambda^2\omega \sqnorm{\nabla f_i(x^t)-h_i^t}.
\end{align*}
In addition,
 \begin{align*}
\left\|h_i^{t}-\nabla f_i(x^{t}) +\lambda \Exp{\mathcal{C}_i^t\big(\nabla f_i(x^t)-h_i^t\big)}\right\|& \leq \left\|\lambda\big(h_i^{t}-\nabla f_i(x^{t})\big) +\lambda \Exp{\mathcal{C}_i^t\big(\nabla f_i(x^t)-h_i^t\big)}\right\|\notag\\
&\quad + (1-\lambda)\left\|h_i^{t}-\nabla f_i(x^t)\right\|\\
&\leq  \lambda\eta\left\|\nabla f_i(x^t)-h_i^{t}\right\|+ (1-\lambda)\left\|\nabla f_i(x^t)-h_i^{t}\right\|\notag\\
&=(1-\lambda+\lambda\eta)\left\|\nabla f_i(x^t)-h_i^t\right\|.
\end{align*}
Therefore, conditionally on $x^t$, $h^t$ and $(h_i^t)_{i=1}^n$, 
\begin{align*}
\Exp{\sqnorm{h_i^{t}-\nabla f_i(x^{t}) +\lambda \mathcal{C}_i^t\big(\nabla f_i(x^t)-h_i^t\big)  }}\leq\big((1-\lambda+\lambda\eta)^2+\lambda^2\omega\big) \sqnorm{\nabla f_i(x^t)-h_i^t}
\end{align*}
and
\begin{align*}
\Exp{\sqnorm{\nabla f_i(x^{t+1})-h_i^{t+1}}}&\leq (1+s)\big((1-\lambda+\lambda\eta)^2+\lambda^2\omega\big) \sqnorm{\nabla f_i(x^{t})-h_i^{t}}\\
&\quad+(1+s^{-1})L_i^2\Exp
{\sqnorm{x^{t+1}-x^{t}}},
\end{align*}
so that
\begin{align*}
\Exp{\frac{1}{n}\sum_{i=1}^n \sqnorm{\nabla f_i(x^{t+1})-h_i^{t+1}}}&
\leq (1+s)\big((1-\lambda+\lambda\eta)^2+\lambda^2\omega\big) \frac{1}{n}\sum_{i=1}^n \sqnorm{\nabla f_i(x^t)-h_i^{t}}\\
&\quad+(1+s^{-1})\tilde{L}^2\Exp
{\sqnorm{x^{t+1}-x^{t}}}.
\end{align*}

Let $\theta>0$; its value will be set to $\theta^\star$ later on. We introduce the Lyapunov function, for every $t\geq 0$,
\begin{equation*}
\Psi^t \eqdef f(x^t)-f^\star + \frac{\gamma}{2\theta}  \frac{1}{n}\sum_{i=1}^n \sqnorm{\nabla f_i(x^t)-h_i^{t}}.
\end{equation*}
Hence, for every $t\geq 0$, conditionally on $x^t$, $h^t$ and $(h_i^t)_{i=1}^n$, we have
\begin{align}
\Exp{\Psi^{t+1}} &\leq (1-\gamma\mu) \big(f(x^t)  -f^\star \big)\notag \\
&\quad +\frac{ \gamma }{2\theta}\Big( \theta\big((1-\nu+\nu\eta)^2+\nu^2\oma\big)\notag\\
&\quad+(1+s)\big((1-\lambda+\lambda\eta)^2+\lambda^2\omega\big)
\Big) \frac{1}{n}\sum_{i=1}^n \sqnorm{\nabla f_i(x^t)-h_i^{t}}\label{eqq1}\\
&\quad+ \left(\frac{ L}{2}-\frac{1}{2\gamma}+\frac{\gamma}{2\theta}(1+s^{-1})\tilde{L}^2\right)\!\Exp{\sqnorm{x^{t+1}-x^t}}.\notag
\end{align}
Making use of $r$ and $r_{\mathrm{av}}$ and setting 
$\theta = s(1+s)\frac{r}{r_{\mathrm{av}}}$, 
we can rewrite \eqref{eqq1} as:
\begin{align*}
\Exp{\Psi^{t+1}} &\leq (1-\gamma\mu) \big(f(x^t)  -f^\star \big) +\frac{ \gamma }{2\theta}\Big( \theta r_{\mathrm{av}}
+(1+s)r
\Big) \frac{1}{n}\sum_{i=1}^n \sqnorm{\nabla f_i(x^t)-h_i^{t}}\\
&\quad+ \left(\frac{ L}{2}-\frac{1}{2\gamma}+\frac{\gamma}{2\theta}(1+s^{-1})\tilde{L}^2\right)\!\Exp{\sqnorm{x^{t+1}-x^t}}\\
&=(1-\gamma\mu) \big(f(x^t)  -f^\star \big) +\frac{ \gamma }{2\theta} (1+s)^2 
 \frac{r}{n}\sum_{i=1}^n \sqnorm{\nabla f_i(x^t)-h_i^{t}}\\
&\quad+ \left(\frac{ L}{2}-\frac{1}{2\gamma}+\frac{\gamma}{2s^2}\frac{r_{\mathrm{av}}}{r}\tilde{L}^2\right)\!\Exp{\sqnorm{x^{t+1}-x^t}}.
\end{align*}
We now choose $\gamma$ small enough so that 
 \begin{equation}
L-\frac{1}{\gamma}+\frac{\gamma}{s^2}\frac{r_{\mathrm{av}}}{r}\tilde{L}^2 \leq 0.\label{eqgreg}
\end{equation}
A sufficient condition for \eqref{eqgreg} to hold is \citep[Lemma 5]{ric21}:
\begin{equation}
0<\gamma \leq \frac{1}{L+\tilde{L}\sqrt{\frac{r_{\mathrm{av}}}{r}}\frac{1}{s}}.\label{eqgamfek}
\end{equation}
Then, assuming that \eqref{eqgamfek} holds, we have, for every $t\geq 0$, conditionally on $x^t$, $h^t$ and $(h_i^t)_{i=1}^n$,
\begin{align*}
\Exp{\Psi^{t+1}} &\leq (1-\gamma\mu) \big(f(x^t)  -f^\star \big) +\frac{ \gamma }{2\theta} (1+s)^2 
 \frac{r}{n}\sum_{i=1}^n \sqnorm{\nabla f_i(x^t)-h_i^{t}}\\
&\leq \max\big(1-\gamma\mu,(1+s)^2 r\big)  \Psi^t.
\end{align*}

We see that $s$ must be small enough so that $(1+s)^2 r <1$; this is the case with 
$s=s^\star$, so that $(1+s^\star)^2 r = \frac{r+1}{2}<1$. 
Therefore, we set $s=s^\star$, and, accordingly, $\theta=\theta^\star$. Then, for every $t\geq 0$,
 conditionally on $x^t$, $h^t$ and $(h_i^t)_{i=1}^n$,
\begin{align*}
\Exp{\Psi^{t+1}} 
&\leq \max\big(1-\gamma\mu, {\frac{r+1}{2}}\big) \Psi^t.
\end{align*}
Unrolling the recursion using the tower rule yields \eqref{eqsdgerg}.

\section{Proof of Theorem~\ref{theo2}}
Using $L$-smoothness of $f$, we have, for every $t\geq 0$,
\begin{equation*}
f(x^{t+1})\leq f(x^t) + \langle \nabla f(x^t),x^{t+1}-x^t\rangle + \frac{L}{2}\|x^{t+1}-x^t\|^2.
\end{equation*}
Moreover, using convexity of $R$, we have, for every subgradient $u^{t+1}\in \partial R(x^{t+1})$,
\begin{equation}
R(x^t)\geq R(x^{t+1}) + \langle u^{t+1}, x^{t}-x^{t+1}\rangle.\label{khfjehgg}
\end{equation}
From the property that $\mathrm{prox}_{\gamma R}=(\mathrm{Id}+\gamma \partial R)^{-1}$~\citep{bau17}, it follows from 
$x^{t+1} = \mathrm{prox}_{\gamma R}(x^t - \gamma  g^{t+1})$ that
\begin{equation*}
0\in  \partial R(x^{t+1})+ \frac{1}{\gamma}(x^{t+1} - x^t+\gamma g^{t+1}).
\end{equation*}
So, we set $u^{t+1}\eqdef \frac{1}{\gamma}(x^{t} - x^{t+1})-g^{t+1}$. Using this subgradient in \eqref{khfjehgg} and replacing
$x^{t}-x^{t+1}$ by $\gamma(u^{t+1}+g^{t+1})$, we get, for every $t\geq 0$,
\begin{align*}
f(x^{t+1})+R(x^{t+1}) &\leq f(x^t) +R(x^t)  + \langle \nabla f(x^t)+u^{t+1},x^{t+1}-x^t\rangle + \frac{L}{2}\|x^{t+1}-x^t\|^2\\
&=f(x^t) +R(x^t)  - \gamma \langle \nabla f(x^t)+u^{t+1},g^{t+1}+u^{t+1}\rangle + \frac{L}{2}\gamma^2\|g^{t+1}+u^{t+1}\|^2\\
&=f(x^t) +R(x^t)  +\frac{ \gamma }{2}\|\nabla f(x^t)-g^{t+1}\|^2+ \left(\frac{\gamma^2 L}{2}-\frac{\gamma}{2}\right)\|g^{t+1}+u^{t+1}\|^2\\
&\quad -\frac{\gamma}{2} \|\nabla f(x^t)+u^{t+1}\|^2\\
&=f(x^t) +R(x^t)  +\frac{\gamma }{2}\|\nabla f(x^t)-g^{t+1}\|^2+ \left(\frac{ L}{2}-\frac{1}{2\gamma}\right)\|x^{t+1}-x^t\|^2\\
&\quad -\frac{\gamma}{2} \|\nabla f(x^t)+u^{t+1}\|^2
\end{align*}
Note that we recover \eqref{eqgergg} if $R=0$ and $u^t \equiv 0$.

Using the fact that for any vectors $a$ and $b$, $-\|a+b\|^2 \leq -\frac{1}{2} \|a\|^2 + \|b\|^2$, we have, for every $t\geq 0$,
\begin{align*}
-\frac{\gamma}{2} \|\nabla f(x^t)+u^{t+1}\|^2 &\leq -\frac{\gamma}{4} \|\nabla f(x^{t+1})+u^{t+1}\|^2 + \frac{\gamma}{2}  \|\nabla f(x^{t+1})-\nabla f(x^{t})\|^2\\
&\leq -\frac{\gamma}{4} \|\nabla f(x^{t+1})+u^{t+1}\|^2 + \frac{\gamma L^2}{2}  \|x^{t+1}-x^t\|^2.
\end{align*}
Hence, for every $t\geq 0$,
\begin{align*}
f(x^{t+1})+R(x^{t+1}) 
&\leq  f(x^t) +R(x^t)  +\frac{\gamma}{2}\|\nabla f(x^t)-g^{t+1}\|^2+ \left(\frac{ L}{2}-\frac{1}{2\gamma}+\frac{\gamma L^2}{2}\right)\|x^{t+1}-x^t\|^2\\
&\quad -\frac{\gamma}{4} \|\nabla f(x^{t+1})+u^{t+1}\|^2.
\end{align*}
It follows from the K{\L}  assumption \eqref{eqKL} that
\begin{align*}
f(x^{t+1})+R(x^{t+1}) -f^\star - R^\star
&\leq f(x^{t})+R(x^{t}) -f^\star - R^\star +\frac{\gamma}{2}\|\nabla f(x^t)-g^{t+1}\|^2\\
&\quad + \left(\frac{ L}{2}-\frac{1}{2\gamma}+\frac{\gamma L^2}{2}\right)\|x^{t+1}-x^t\|^2\\
&\quad-2\mu\frac{\gamma}{4}  \left(f(x^{t+1})+R(x^{t+1}) -f^\star - R^\star\right),
\end{align*}
so that
\begin{align*}
 \Big(1+\frac{\gamma\mu}{2}\Big)\left(f(x^{t+1})+R(x^{t+1}) -f^\star - R^\star\right)
&\leq f(x^{t})+R(x^{t}) -f^\star - R^\star +\frac{\gamma}{2}\|\nabla f(x^t)-g^{t+1}\|^2\\
&\quad+ \left(\frac{ L}{2}-\frac{1}{2\gamma}+\frac{\gamma L^2}{2}\right)\|x^{t+1}-x^t\|^2,
\end{align*}
and
\begin{align*}
f(x^{t+1})+R(x^{t+1}) -f^\star - R^\star
&\leq \Big(1+\frac{\gamma\mu}{2}\Big)^{-1}\big(f(x^{t})+R(x^{t}) -f^\star - R^\star\big) +\frac{\gamma}{2}\|\nabla f(x^t)-g^{t+1}\|^2\\
&\quad+ \left(\frac{ L}{2}-\frac{1}{2\gamma}+\frac{\gamma L^2}{2}\right)\|x^{t+1}-x^t\|^2.
\end{align*}
Let $s>0$. Like in the proof of Theorem~\ref{theo1}, we have
\begin{align*}
\Exp{\frac{1}{n}\sum_{i=1}^n \sqnorm{\nabla f_i(x^{t+1})-h_i^{t+1}}}&
\leq (1+s)\big((1-\lambda+\lambda\eta)^2+\lambda^2\omega\big) \frac{1}{n}\sum_{i=1}^n \sqnorm{\nabla f_i(x^t)-h_i^{t}}\\
&\quad+(1+s^{-1})\tilde{L}^2\Exp
{\sqnorm{x^{t+1}-x^{t}}}
\end{align*}
and
\begin{align*}
\Exp{\sqnorm{g^{t+1}-\nabla f(x^t)}} &\leq 
\left((1-\nu+\nu\eta)^2+\nu^2\oma\right)\frac{1}{n}\sum_{i=1}^n \sqnorm{\nabla f_i(x^t)-h_i^{t}}.
\end{align*}
We introduce the Lyapunov function, for every $t\geq 0$,
\begin{align*}
\Psi^t &\eqdef f(x^t)+R(x^t)-f^\star - R^\star + \frac{\gamma}{2\theta}  \frac{1}{n}\sum_{i=1}^n \sqnorm{\nabla f_i(x^t)-h_i^{t}},
\end{align*}
where 
$\theta = s(1+s)\frac{r}{r_{\mathrm{av}}}$.

Following the  same derivations as in the proof of Theorem~\ref{theo1}, we obtain that, for every $t\geq 0$, conditionally on $x^t$, $h^t$ and $(h_i^t)_{i=1}^n$, 
\begin{align*}
\Exp{\Psi^{t+1}} &\leq \Big(1+\frac{\gamma\mu}{2}\Big)^{-1}\big(f(x^{t})+R(x^{t}) -f^\star - R^\star\big) \\
&\quad +\frac{ \gamma }{2\theta}\Big( \theta\big((1-\nu+\nu\eta)^2+\nu^2\oma\big)\\
&\quad+(1+s)\big((1-\lambda+\lambda\eta)^2+\lambda^2\omega\big)
\Big)\frac{1}{n}\sum_{i=1}^n \sqnorm{\nabla f_i(x^{t})-h_i^{t}}\\
&\quad+ \left(\frac{ L}{2}-\frac{1}{2\gamma}+\frac{\gamma L^2}{2}+\frac{\gamma}{2\theta}(1+s^{-1})\tilde{L}^2\right)\!\Exp{\sqnorm{x^{t+1}-x^t}}\\
&= \Big(1+\frac{\gamma\mu}{2}\Big)^{-1}\big(f(x^{t})+R(x^{t}) -f^\star - R^\star\big) \\
&\quad+\frac{ \gamma }{2\theta}\Big( \theta r_{\mathrm{av}}
+(1+s)r
\Big)\frac{1}{n}\sum_{i=1}^n \sqnorm{\nabla f_i(x^{t})-h_i^{t}}\\
&\quad+ \left(\frac{ L}{2}-\frac{1}{2\gamma}+\frac{\gamma L^2}{2}+\frac{\gamma}{2\theta}(1+s^{-1})\tilde{L}^2\right)\!\Exp{\sqnorm{x^{t+1}-x^t}}\\
&=\Big(1+\frac{\gamma\mu}{2}\Big)^{-1}\big(f(x^{t})+R(x^{t}) -f^\star - R^\star\big) +\frac{ \gamma }{2\theta} (1+s)^2 
\frac{r}{n}\sum_{i=1}^n \sqnorm{\nabla f_i(x^{t})-h_i^{t}}\\
&\quad+ \left(\frac{ L}{2}-\frac{1}{2\gamma}+\frac{\gamma L^2}{2}+\frac{\gamma}{2s^2}\frac{r_{\mathrm{av}}}{r}\tilde{L}^2\right)\!\Exp{\sqnorm{x^{t+1}-x^t}}.
\end{align*}
We now choose $\gamma$ small enough so that 
 \begin{equation*}
L-\frac{1}{\gamma}+\gamma L^2+\frac{\gamma}{s^2}\frac{r_{\mathrm{av}}}{r}\tilde{L}^2 \leq 0.
\end{equation*}
If we assume $\gamma\leq \frac{1}{L}$, a sufficient condition is 
 \begin{equation}
2L-\frac{1}{\gamma}+\frac{\gamma}{s^2}\frac{r_{\mathrm{av}}}{r}\tilde{L}^2 \leq 0.\label{eqgreg3}
\end{equation}
A sufficient condition for \eqref{eqgreg3} to hold is \citep[Lemma 5]{ric21}:
\begin{equation}
0<\gamma \leq \frac{1}{2L+\tilde{L}\sqrt{\frac{r_{\mathrm{av}}}{r}}\frac{1}{s}}.\label{eqgamfek2}
\end{equation}
Then, assuming that \eqref{eqgamfek2} holds, we have, for every $t\geq 0$, conditionally on $x^t$, $h^t$ and $(h_i^t)_{i=1}^n$,
\begin{align*}
\Exp{\Psi^{t+1}} &\leq \Big(1+\frac{\gamma\mu}{2}\Big)^{-1} \big(f(x^t)  +R(x^t)-f^\star-R^\star \big) +\frac{ \gamma }{2\theta} (1+s)^2 
 \frac{r}{n}\sum_{i=1}^n \sqnorm{\nabla f_i(x^t)-h_i^{t}}\\
&\leq \max\Big({\textstyle\frac{1}{1+\frac{1}{2}\gamma\mu}},(1+s)^2 r\Big)  \Psi^t.
\end{align*}
We set $s=s^\star$ and, accordingly, $\theta=\theta^\star$, so that  $(1+s^\star)^2 r = \frac{r+1}{2}<1$. Then, 
for every $t\geq 0$, conditionally on $x^t$, $h^t$ and $(h_i^t)_{i=1}^n$,
\begin{align*}
\Exp{\Psi^{t+1}} &\leq \max\left({\frac{1}{1+\frac{1}{2}\gamma\mu}},\frac{r+1}{2}\right)  \Psi^t.
\end{align*}
Unrolling the recursion using the tower rule yields \eqref{eqsdgerg2}.

\section{Proof of Theorem~\ref{thm:noncvx}}
    Let $\theta>0$; its value will be set to the prescribed value later on. We introduce the Lyapunov function, for every $t\geq 0$,
   \begin{equation*}
   \Psi^t \eqdef f(x^t)-f^{\inf} + \frac{\gamma}{2\theta}  \frac{1}{n}\sum_{i=1}^n \sqnorm{\nabla f_i(x^t)-h_i^{t}}.
   \end{equation*}
 According to \citep[Lemma 4]{ric21}, we have, for every $t\geq 0$,
   \begin{align}
   f(x^{t+1}) -f^{\inf} &\leq f(x^t)  -f^{\inf} -\frac{\gamma}{2} \sqnorm{\nabla f(x^t)} +\frac{ \gamma }{2}\sqnorm{g^{t+1}-\nabla f(x^t)} + \left(\frac{ L}{2}-\frac{1}{2\gamma}\right)\sqnorm{x^{t+1}-x^t}.\notag
   \end{align}
   As shown in the proof of Theorem~\ref{theo1}, we have, conditionally on $x^t$, $h^t$ and $(h_i^t)_{i=1}^n$,
\begin{align*}
\Exp{\sqnorm{g^{t+1}-\nabla f(x^t)}} &\leq 
\left((1-\nu+\nu\eta)^2+\nu^2\oma\right)\frac{1}{n}\sum_{i=1}^n \sqnorm{\nabla f_i(x^t)-h_i^{t}}.
\end{align*}
   As for the control variates $h_i^t$, as shown in the proof of Theorem \ref{theo1}, we have, conditionally on $x^t$, $h^t$ and $(h_i^t)_{i=1}^n$, 
   \begin{align*}
   \Exp{\frac{1}{n}\sum_{i=1}^n \sqnorm{\nabla f_i(x^{t+1})-h_i^{t+1}}}&
   \leq (1+s)\big((1-\lambda+\lambda\eta)^2+\lambda^2\omega\big) \frac{1}{n}\sum_{i=1}^n \sqnorm{\nabla f_i(x^t)-h_i^{t}}\\
   &\quad+(1+s^{-1})\tilde{L}^2\Exp
   {\sqnorm{x^{t+1}-x^{t}}}.
   \end{align*}

    Hence, for every $t\geq 0$, conditionally on $x^t$, $h^t$ and $(h_i^t)_{i=1}^n$, we have
   \begin{align}
   \Exp{\Psi^{t+1}} &\leq f(x^t) - f^{\inf} - \frac{\gamma}{2} \sqnorm{\nabla f(x^t)}\notag \\
   &\quad +\frac{ \gamma }{2\theta}\Big( \theta\big((1-\nu+\nu\eta)^2+\nu^2\oma\big)\notag+(1+s)\big((1-\lambda+\lambda\eta)^2+\lambda^2\omega\big)
   \Big) \frac{1}{n}\sum_{i=1}^n \sqnorm{\nabla f_i(x^t)-h_i^{t}}\notag\\
   &\quad+ \left(\frac{ L}{2}-\frac{1}{2\gamma}+\frac{\gamma}{2\theta}(1+s^{-1})\tilde{L}^2\right)\!\Exp{\sqnorm{x^{t+1}-x^t}}.\label{eqq1z}
   \end{align}
   Let $r \eqdef (1 - \lambda + \lambda \eta)^2 + \lambda^2 \omega, r_{\mathrm{av}}\eqdef (1 - \nu + \nu\eta)^2 + \nu^2 \oma$. Set $\theta \eqdef s(1+s)\frac{r}{r_{\mathrm{av}}}$. We can rewrite \eqref{eqq1z} as:
   \begin{align*}
   \Exp{\Psi^{t+1}} &\leq f(x^t) - f^{\inf} - \frac{\gamma}{2} \sqnorm{\nabla f(x^t)} +\frac{ \gamma }{2\theta}\Big( \theta r_{\mathrm{av}}
   +(1+s)r
   \Big) \frac{1}{n}\sum_{i=1}^n \sqnorm{\nabla f_i(x^t)-h_i^{t}}\\
   &\quad+ \left(\frac{ L}{2}-\frac{1}{2\gamma}+\frac{\gamma}{2\theta}(1+s^{-1})\tilde{L}^2\right)\!\Exp{\sqnorm{x^{t+1}-x^t}}\\
   &=f(x^t) - f^{\inf} - \frac{\gamma}{2} \sqnorm{\nabla f(x^t)} +\frac{ \gamma }{2\theta} (1+s)^2 
   \frac{r}{n}\sum_{i=1}^n \sqnorm{\nabla f_i(x^t)-h_i^{t}}\\
   &\quad+ \left(\frac{ L}{2}-\frac{1}{2\gamma}+\frac{\gamma}{2s^2}\frac{r_{\mathrm{av}}}{r}\tilde{L}^2\right)\!\Exp{\sqnorm{x^{t+1}-x^t}}.
   \end{align*}
   We now choose $\gamma$ small enough so that 
   \begin{equation}
   L-\frac{1}{\gamma}+\frac{\gamma}{s^2}\frac{r_{\mathrm{av}}}{r}\tilde{L}^2 \leq 0.\label{eqgreg5}
   \end{equation}
   A sufficient condition for \eqref{eqgreg5} to hold is \citep[Lemma 5]{ric21}:
   \begin{equation}
   0<\gamma \leq \frac{1}{L+\tilde{L}\sqrt{\frac{r_{\mathrm{av}}}{r}}\frac{1}{s}}.\label{eqgamfek5}
   \end{equation}
   Then, assuming that \eqref{eqgamfek5} holds, we have, for every $t\geq 0$, conditionally on $x^t$, $h^t$ and $(h_i^t)_{i=1}^n$,
   \begin{align*}
   \Exp{\Psi^{t+1}} &\leq f(x^t) - f^{\inf} - \frac{\gamma}{2} \sqnorm{\nabla f(x^t)} +\frac{ \gamma }{2\theta} (1+s)^2 
   \frac{r}{n}\sum_{i=1}^n \sqnorm{\nabla f_i(x^t)-h_i^{t}}.
   \end{align*}
 We have chosen $s$ so that $(1+s)^2 r = 1$. 
 Hence, using the tower rule, we have, for every $t\geq 0$,
   \begin{align*}
      \Exp{\Psi^{t+1}} \leq \Exp{\Psi^t} - \frac{\gamma}{2}\Exp{\sqnorm{\nabla f(x^t)}}.
   \end{align*}
   Let $T\geq 1$. By summing up the inequalities for $t=0, \cdots, T-1$, we get 
   \begin{align*}
      0 \leq \Exp{\Psi^T} \leq \Psi^0 - \frac{\gamma}{2} \sum_{t=0}^{T-1} \Exp{\sqnorm{\nabla f(x^t)}}.
   \end{align*}
   Multiplying both sides by $\frac{2}{\gamma T}$ and rearranging the terms, we get
   \begin{align*}
      \frac{1}{T}\sum_{t=0}^{T-1} \Exp{\sqnorm{\nabla f(x^t)}} \leq \frac{2}{\gamma T} \Psi^0,
   \end{align*}
   where the left hand side can be interpreted as $\mathbb{E}\left[\left\|\nabla f(\hat{x}^{T})\right\|^{2}\right]$, where $\hat{x}^{T}$ is chosen from $x^{0}, x^{1}, \ldots, x^{T-1}$ uniformly at random.
  % \end{proof}

            \chapter{Appendix to Chapter \ref{chapter_scafflix}}
\label{chapter_appendix_scafflix}
\thispagestyle{empty}

\section{Proposed \algname{i-Scaffnew} algorithm}
We consider solving (\ref{eq:ERM}) with the proposed \algname{i-Scaffnew} algorithm, shown as Algorithm~\ref{alg1} (applying  \algname{i-Scaffnew} to (\ref{eq:FLIX}) yields \algname{Scafflix}, as we discuss subsequently in Section~\ref{secalg3}). %We suppose that Assumption~\ref{ass:convex_smooth} holds.

\begin{algorithm}[t]
	\caption{\algname{i-Scaffnew} for (\ref{eq:ERM})}
	\label{alg1}
	\begin{algorithmic}[1]
		\STATE \textbf{input:}  stepsizes $\gamma_1>0,\ldots,\gamma_n>0$; probability $p \in (0,1]$; initial estimates $x_1^0,\ldots,x_n^0 \in \mathbb{R}^d$ and ${\red h_1^0, \ldots, h_n^0} \in \mathbb{R}^d$ such that $\sum_{i=1}^n {\red h_i^0}=0$.
		\STATE at the server, $\gamma \eqdef \left(\frac{1}{n}\sum_{i=1}^n \gamma_i^{-1}\right)^{-1}$ %//known by the server
		\hfill $\diamond$ {\small\color{gray} $\gamma$ is used by the server for Step 9}
		\FOR{$t=0,1,\ldots$}
		\STATE flip a coin $\theta^t \eqdef \{1$ with probability $p$, 0 otherwise$\}$
		\FOR{$i=1,\ldots,n$, at clients in parallel,}
		\STATE compute an estimate $g_i^t$  of $\nabla f_i(x_i^t)$
\STATE $\hat{x}_i^t\eqdef x_i^t -\gamma_i \big(g_i^t - {\red h_i^t}\big)$
\hfill $\diamond$ {\small\color{gray} local SGD step}
\IF{$\theta^t=1$}
\STATE send $\frac{1}{\gamma_i}\hat{x}_i^t$ to the server, which aggregates $\bar{x}^t\eqdef \frac{\gamma}{n}\sum_{j=1}^n  \frac{1}{\gamma_i}\hat{x}_{j}^t $ and broadcasts it to all clients \hfill $\diamond$ {\small\color{gray} communication, but only with small probability $p$}
\STATE $x_i^{t+1}\eqdef \bar{x}^{t}$
\STATE ${\red h_i^{t+1}}\eqdef {\red h_i^t} + \frac{p}{\gamma_i}\big(\bar{x}^{t}-\hat{x}_i^t\big)$\hfill $\diamond$ {\small\color{gray}update of the local control variate $\red h_i^t$}
\ELSE
\STATE $x_i^{t+1}\eqdef \hat{x}_i^t$
\STATE ${\red h_i^{t+1}}\eqdef {\red h_i^t}$
\ENDIF
\ENDFOR
		\ENDFOR
	\end{algorithmic}
\end{algorithm}

\begin{theorem}[fast linear convergence]\label{scafflix_theo1}
In (\ref{eq:ERM}) and \algname{i-Scaffnew}, suppose that Assumptions~\ref{ass:convex_smooth}, \ref{ass:unbiasedness}, \ref{ass:expected_smoothness} hold and that for every $i\in[n]$, $0<\gamma_i \leq \frac{1}{ A_i}$.
For every $t\geq 0$, define the Lyapunov function
\begin{equation}
\Psi^{t}\eqdef  \sum_{i=1}^n \frac{1}{\gamma_i} \sqnorm{x_i^t-x^\star}+ \frac{1}{p^2}\sum_{i=1}^n \gamma_i \sqnorm{h_i^t-\nabla f_i(x^\star)}
.
\end{equation}
Then 
\algname{i-Scaffnew}
converges linearly:  for every $t\geq 0$, 
%\begin{align}
\begin{equation}
\Exp{\Psi^{t}}\leq (1-\zeta)^t \Psi^0 + \frac{1}{\zeta} \sum_{i=1}^n \gamma_i C_i,
\end{equation}
%\end{align}
where 
\begin{equation}
%\mrho \eqdef \max\left(\max_{i\in[n]}(1-\gamma_i\mu_i),1-p^2\right).\label{eqrate2j}
\zeta = \min\left(\min_{i\in[n]} \gamma_i\mu_i,p^2\right).\label{eqrate2j}
\end{equation}
%Also, for every $i\in[n]$, $x_i^t$ and $\hat{x}_i^{t}$ both converge  to $x^\star$ and $h_i^t$ converges to $\gamma_i\nabla f_i(x^\star)$, almost surely.
\end{theorem}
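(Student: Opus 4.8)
\textbf{Proof plan for Theorem~\ref{scafflix_theo1} (fast linear convergence of \algname{i-Scaffnew}).}

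The plan is to follow the proof template of \algname{Scaffnew}~\citep{ProxSkip}, but to carry the whole argument with a \emph{per-client} reweighting of the iterates. The key structural idea is to endow $\mathbb{R}^{nd}$ with the inner product $\langle (x_i),(y_i)\rangle_\gamma \eqdef \sum_{i=1}^n \gamma_i^{-1}\langle x_i,y_i\rangle$ so that the concatenated iterate $x^t \eqdef (x_1^t,\dots,x_n^t)$, the concatenated control variate $h^t \eqdef (h_1^t,\dots,h_n^t)$, and the aggregation step become, respectively, a stochastic gradient step, a dual update, and an orthogonal projection in this weighted geometry. Under this lens, one line of Algorithm~\ref{alg1} corresponds to the randomized primal--dual / \algname{ProxSkip} step: with probability $p$, project $\hat x^t$ onto the consensus space $\{(x,\dots,x)\}$ (which, in the $\gamma$-weighted inner product, gives exactly the weighted average $\bar x^t = \gamma\,\frac1n\sum_i \gamma_i^{-1}\hat x_i^t$) and update $h_i^t$ by the displacement $\frac{p}{\gamma_i}(\bar x^t-\hat x_i^t)$; with probability $1-p$, do nothing on the communication side. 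First I would write out these correspondences cleanly and record the identity $\nabla f_i(x^\star) = h_i^\star$ at the fixed point, where $\sum_i h_i^\star=0$, together with $x_i^\star = x^\star$ for all $i$.

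Next, I would set up the Lyapunov function $\Psi^t = \sum_i \gamma_i^{-1}\|x_i^t-x^\star\|^2 + p^{-2}\sum_i \gamma_i \|h_i^t-\nabla f_i(x^\star)\|^2$ and analyze one iteration conditionally on $x^t, h^t$ (and then use the tower rule). The first block is handled by the standard contraction estimate for a local SGD step on a $\mu_i$-strongly convex, $A_i$-expected-smooth $f_i$: for $0<\gamma_i\le 1/A_i$, Assumptions~\ref{ass:unbiasedness}--\ref{ass:expected_smoothness} give
\begin{equation*}
\Exp{\|\hat x_i^t - x^\star\|^2 \mid x^t,h^t} \le (1-\gamma_i\mu_i)\|x_i^t-x^\star\|^2 - (\text{useful negative term in } \|h_i^t-\nabla f_i(x^\star)\|^2) + \gamma_i^2 C_i,
\end{equation*}
which is the usual ``descent lemma'' combining strong convexity with the Bregman-divergence form of expected smoothness (this is exactly where $A_i\ge L_i$ and the $C_i$ additive noise enter). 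The second block uses that the $h$-update is a projection-type step: on the event $\theta^t=1$, one expands $\|h_i^{t+1}-\nabla f_i(x^\star)\|^2$, uses the definition $h_i^{t+1}=h_i^t+\frac{p}{\gamma_i}(\bar x^t-\hat x_i^t)$, and invokes the Pythagorean identity for the weighted projection to cancel cross terms; on the event $\theta^t=0$ the $h$ block is unchanged while $x_i^{t+1}=\hat x_i^t$. Taking expectation over $\theta^t$ mixes the two cases with weights $p$ and $1-p$, and the factor $p^{-2}$ in front of the dual term is chosen precisely so that the $p$-dependent coefficients line up and the negative terms produced in the primal estimate get absorbed, leaving a clean one-step inequality $\Exp{\Psi^{t+1}\mid x^t,h^t} \le (1-\zeta)\Psi^t + \sum_i \gamma_i C_i$ with $\zeta=\min(\min_i\gamma_i\mu_i,\,p^2)$. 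Unrolling the recursion with the tower rule and summing the geometric series $\sum_{k\ge0}(1-\zeta)^k = 1/\zeta$ yields the claimed bound.

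The main obstacle I anticipate is the bookkeeping in the second (dual) block under the \emph{heterogeneous} stepsizes: with a common $\gamma$, the $h$-update and the consensus projection live in the same Euclidean geometry and the cancellations in \algname{Scaffnew} are immediate, but here one must consistently work in the $\gamma$-weighted inner product so that $\bar x^t$ really is the orthogonal projection of $\hat x^t$ (and of $h$-shifted quantities) onto the consensus subspace; otherwise the cross terms do not vanish. Concretely, the delicate point is to verify that $\sum_i \gamma_i^{-1}(\bar x^t - \hat x_i^t) = 0$ (weighted centering), which is what makes the Pythagorean split exact, and then to check that the weights $\gamma_i^{-1}$ on the primal term and $\gamma_i$ on the dual term are the unique choice making the two blocks compatible. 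A secondary, more routine obstacle is keeping the additive $C_i$ terms correctly weighted ($\sum_i \gamma_i C_i$, not $\sum_i \gamma_i^{-1}C_i$ or $\frac1n\sum_i\gamma_i C_i$) through the projection step; this is purely a matter of tracking constants. Once the weighted-geometry viewpoint is in place, the rest parallels~\citep{ProxSkip} with $\mu\to\mu_i$, $L\to A_i$ localized per client.
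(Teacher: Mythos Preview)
Your plan is correct and matches the paper's proof essentially line for line: the paper introduces exactly the weighted inner product $\langle \mathbf{x},\mathbf{y}\rangle_{\boldsymbol\gamma}=\sum_i\gamma_i^{-1}\langle x_i,y_i\rangle$, identifies the aggregation $\bar x^t$ as the orthogonal projection onto the consensus subspace in this geometry (using the centering identity $\sum_i\gamma_i^{-1}(\bar x^t-\hat x_i^t)=0$ you singled out), and combines a Pythagorean expansion for the $h$-update with a per-client contraction from strong convexity and expected smoothness to obtain the one-step inequality $\Exp{\Psi^{t+1}\mid\mathcal F_0^t}\le(1-\zeta)\Psi^t+\sum_i\gamma_i C_i$. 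One small calibration: in the paper the primal contraction is derived for the \emph{shifted} iterate $w_i^t\eqdef x_i^t-\gamma_i g_i^t$ (so that $\Exp{\|w_i^t-w_i^\star\|^2}\le(1-\gamma_i\mu_i)\|x_i^t-x^\star\|^2+\gamma_i^2C_i$ directly), and the Lyapunov combination yields an exact \emph{identity} $\Exp{\Psi^{t+1}\mid\mathcal F^t}=\|\mathbf w^t-\mathbf w^\star\|_{\boldsymbol\gamma}^2+\tfrac{1-p^2}{p^2}\|\mathbf h^t-\mathbf h^\star\|_{\boldsymbol\gamma}^2$ rather than an inequality with a compensating negative term---so there is no ``useful negative term in $\|h_i^t-\nabla f_i(x^\star)\|^2$'' coming from the primal block, the cross terms simply vanish by design.
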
%\medskip
  
\begin{proof}  
%
%
%\subsection{Proof of Theorem~\ref{scafflix_theo1}}\label{secalg2}
%
To simplify the analysis of \algname{i-Scaffnew}, we introduce vector notations: the problem (\ref{eq:ERM}) can be written as
\begin{equation}
\mathrm{find}\ \mathbf{x}^\star =\argmin_{\mathbf{x}\in\mathcal{X}}\  \mathbf{f}(\mathbf{x})\quad\mbox{s.t.}\quad W\mathbf{x}=0,\label{eqpro2}
\end{equation}
where $\mathcal{X}\eqdef\mathbb{R}^{d\times n}$, an element 
$\mathbf{x}=(x_i)_{i=1}^n \in \mathcal{X}$ is a collection of vectors $x_i\in \mathbb{R}^d$, $\mathbf{f}:\mathbf{x}\in \mathcal{X}\mapsto \sum_{i=1}^n f_i(x_i)$, the linear operator $W:\mathcal{X}\rightarrow \mathcal{X}$ maps $\mathbf{x}=(x_i)_{i=1}^n $ to $(x_i-\frac{1}{n}\sum_{j=1}^n \frac{\gamma}{\gamma_j}x_j)_{i=1}^n$, for given values $\gamma_1>0,\ldots,\gamma_n>0$ and their harmonic mean $\gamma = \left(\frac{1}{n}\sum_{i=1}^n \gamma_i^{-1}\right)^{-1}$.
%TODO ne va pas, pas symmetrique, probablement 1/gamma_i dans W.
The constraint $W\mathbf{x}=0$ means that $\mathbf{x}$ minus its weighted average is zero; that is, $\mathbf{x}$ has identical components $x_1 = \cdots = x_n$. Thus, \eqref{eqpro2} is indeed equivalent to (\ref{eq:ERM}). $\mathbf{x}^\star\eqdef (x^\star)_{i=1}^n \in \mathcal{X}$ is the unique solution to  \eqref{eqpro2}, where $x^\star$ is the unique solution to (\ref{eq:ERM}). 
%We have $W=W^*=W^2$. 

Moreover, we introduce the weighted inner product in $\mathcal{X}$: $(\mathbf{x},\mathbf{y})\mapsto \langle \mathbf{x},\mathbf{y}\rangle_{\boldsymbol{\gamma}}\eqdef \sum_{i=1}^n \frac{1}{\gamma_i} \langle x_i,y_i\rangle$. Then, the orthogonal projector $P$ onto the hyperspace $\{\mathbf{y} \in \mathcal{X}\ :\ y_1=\cdots=y_n\}$, with respect to this weighted inner product, is  $P:\mathbf{x}\in\mathcal{X} \mapsto \mathbf{\bar{x}}=(\bar{x})_{i=1}^n$ with $\bar{x}=\frac{\gamma}{n}\sum_{i=1}^n \frac{1}{\gamma_i}  x_i$ (because $\bar{x}$ minimizes $\sqnorm{\mathbf{\bar{x}}-\mathbf{x}}_{\boldsymbol{\gamma}}$, so that $\frac{1}{n}\sum_{i=1}^n \frac{1}{\gamma_i}(\bar{x} - x_i) = 0$). Thus, $P$, as well as $W=\mathrm{Id}-P$, where $\mathrm{Id}$ denotes the identity, are self-adjoint and positive linear operators with respect to the weighted inner product. Moreover, for every $\mathbf{x}\in \mathcal{X}$,
\begin{equation*}
\sqnorm{\mathbf{x}}_{\boldsymbol{\gamma}}=\sqnorm{P\mathbf{x}}_{\boldsymbol{\gamma}}+\sqnorm{W\mathbf{x}}_{\boldsymbol{\gamma}}
=\sqnorm{\mathbf{\bar{x}}}_{\boldsymbol{\gamma}}+\sqnorm{W\mathbf{x}}_{\boldsymbol{\gamma}}
=\frac{n}{\gamma} \sqnorm{\bar{x}}+\sqnorm{W\mathbf{x}}_{\boldsymbol{\gamma}},
\end{equation*}
where $\mathbf{\bar{x}}=(\bar{x})_{i=1}^n$ and $\bar{x}=\frac{\gamma}{n}\sum_{i=1}^n \frac{1}{\gamma_i}  x_i$.

Let us introduce  further vector notations for the variables of \algname{i-Scaffnew}: for every $t\geq 0$, we define the \emph{scaled} concatenated control variate $\mathbf{h}^t\eqdef (\gamma_i h_i^t)_{i=1}^n$, $\mathbf{h}^\star\eqdef (\gamma_i h_i^\star)_{i=1}^n$, with $h_i^\star \eqdef \nabla f_i(x^\star)$, $\mathbf{\bar{x}}^t\eqdef (\bar{x}^t)_{i=1}^n$,  
$\mathbf{w}^t\eqdef (w_i^t)_{i=1}^n$, with $w_i^t\eqdef x_i^t-\gamma_i g_i^t$, $\mathbf{w}^\star\eqdef (w_i^\star)_{i=1}^n$, with $w_i^\star\eqdef x_i^\star-\gamma_i \nabla f_i(x_i^\star)$,
%$\mathbf{d}^t \eqdef W \mathbf{\hat{x}}^{t}$, 
$\mathbf{\hat{h}}^{t}\eqdef  \mathbf{h}^t - p W  \mathbf{\hat{x}}^{t}$.
%(\hat{h}_i)_{i=1}^n$ with $\hat{h}_i^t=h_i^t-p (W  \mathbf{\hat{x}}^{t})_i$, 
Finally, 
we denote by $\mathcal{F}_0^t$ the $\sigma$-algebra generated by the collection of $\mathcal{X}$-valued random variables $\mathbf{x}^0,\mathbf{h}^0,\ldots, \mathbf{x}^t,\mathbf{h}^t$ 
and by $\mathcal{F}^t$ the $\sigma$-algebra generated by these variables, as well as the stochastic gradients $g_i^t$. 

We can then rewrite the iteration of \algname{i-Scaffnew} as:
	\noindent	\begin{algorithmic}
			\STATE $\mathbf{\hat{x}}^{t} \eqdef  \mathbf{w}^t +  \mathbf{h}^t$
			\IF{$\theta^t=1$}
			\STATE $\mathbf{x}^{t+1}\eqdef \mathbf{\bar{x}}^{t}$
			\STATE $\mathbf{h}^{t+1}\eqdef \mathbf{h}^t -pW\mathbf{\hat{x}}^{t}$
			\ELSE
			\STATE $\mathbf{x}^{t+1}\eqdef \mathbf{\hat{x}}^{t}$
			\STATE $\mathbf{h}^{t+1}\eqdef \mathbf{h}^t$
			\ENDIF
		\end{algorithmic}\medskip

We suppose that $\sum_{i=1}^n h^0_i = 0$. Then, it follows from the definition of $\bar{x}^t$ that $\frac{\gamma}{n}\sum_{j=1}^n  \frac{1}{\gamma_i}(\bar{x}^t-\hat{x}_{j}^t) = 0$, so that 
 for every $t\geq 0$, 
$\sum_{i=1}^n  h^t_i = 0$; that is, $W \mathbf{h}^t=\mathbf{h}^t$.\medskip

Let $t\geq 0$. We have %, 
\begin{align*}
\Exp{\sqnorm{\mathbf{x}^{t+1}-\mathbf{x}^\star}_{\boldsymbol{\gamma}}\;|\;\mathcal{F}^t}&=
p \sqnorm{\mathbf{\bar{x}}^{t}-\mathbf{x}^\star}_{\boldsymbol{\gamma}}+(1-p)\sqnorm{\mathbf{\hat{x}}^{t}-\mathbf{x}^\star}_{\boldsymbol{\gamma}},
%\sqnorm{\Exp{\mathbf{x}^{t+1}\;|\;\mathcal{F}^t}-\mathbf{x}^\star}+\Exp{\sqnorm{\mathbf{x}^{t+1}-\Exp{\mathbf{x}^{t+1}\;|\;\mathcal{F}^t}}\;|\;\mathcal{F}^t}\\
%&\leq \sqnorm{ \mathbf{\hat{x}}^{t}-\mathbf{x}^\star-\gamma\tau W\mathbf{\hat{x}}^t}+\gamma^2\tau^2\nu\sqnorm{W \mathbf{\hat{x}}^{t}}.
\end{align*}
with 
\begin{equation*}
\sqnorm{\mathbf{\bar{x}}^{t}-\mathbf{x}^\star}_{\boldsymbol{\gamma}}=\sqnorm{\mathbf{\hat{x}}^{t}-\mathbf{x}^\star}_{\boldsymbol{\gamma}}-\sqnorm{W  \mathbf{\hat{x}}^{t}}_{\boldsymbol{\gamma}}.
\end{equation*}
Moreover,
\begin{align*}
\sqnorm{ \mathbf{\hat{x}}^{t}-\mathbf{x}^\star}_{\boldsymbol{\gamma}}
&= \sqnorm{\mathbf{w}^t-\mathbf{w}^\star}_{\boldsymbol{\gamma}}+\sqnorm{ \mathbf{h}^t- \mathbf{h}^\star}_{\boldsymbol{\gamma}}
 +2\langle \mathbf{w}^t-\mathbf{w}^\star, \mathbf{h}^t- \mathbf{h}^\star\rangle_{\boldsymbol{\gamma}}
\\
&= \sqnorm{\mathbf{w}^t-\mathbf{w}^\star}_{\boldsymbol{\gamma}}-\sqnorm{ \mathbf{h}^t- \mathbf{h}^\star}_{\boldsymbol{\gamma}}
 +2\langle\mathbf{\hat{x}}^{t}-\mathbf{x}^\star, \mathbf{h}^t- \mathbf{h}^\star\rangle_{\boldsymbol{\gamma}}\\
 &= \sqnorm{\mathbf{w}^t-\mathbf{w}^\star}_{\boldsymbol{\gamma}}-\sqnorm{ \mathbf{h}^t- \mathbf{h}^\star}_{\boldsymbol{\gamma}}
 +2\langle\mathbf{\hat{x}}^{t}-\mathbf{x}^\star, \mathbf{\hat{h}}^{t}- \mathbf{h}^\star\rangle_{\boldsymbol{\gamma}}
  -2\langle\mathbf{\hat{x}}^{t}-\mathbf{x}^\star, \mathbf{\hat{h}}^{t}- \mathbf{h}^t\rangle_{\boldsymbol{\gamma}}\\
   &= \sqnorm{\mathbf{w}^t-\mathbf{w}^\star}_{\boldsymbol{\gamma}}-\sqnorm{ \mathbf{h}^t- \mathbf{h}^\star}_{\boldsymbol{\gamma}}
 +2\langle\mathbf{\hat{x}}^{t}-\mathbf{x}^\star, \mathbf{\hat{h}}^{t}- \mathbf{h}^\star\rangle_{\boldsymbol{\gamma}}
  +2p \langle\mathbf{\hat{x}}^{t}-\mathbf{x}^\star, W\mathbf{\hat{x}}^t\rangle_{\boldsymbol{\gamma}}\\
    &= \sqnorm{\mathbf{w}^t-\mathbf{w}^\star}_{\boldsymbol{\gamma}}-\sqnorm{ \mathbf{h}^t- \mathbf{h}^\star}_{\boldsymbol{\gamma}}
 +2\langle\mathbf{\hat{x}}^{t}-\mathbf{x}^\star, \mathbf{\hat{h}}^{t}- \mathbf{h}^\star\rangle_{\boldsymbol{\gamma}}
  +2p \sqnorm{W\mathbf{\hat{x}}^t}_{\boldsymbol{\gamma}}.
 \end{align*}
 Hence,
\begin{align*}
\Exp{\sqnorm{\mathbf{x}^{t+1}-\mathbf{x}^\star}_{\boldsymbol{\gamma}}\;|\;\mathcal{F}^t}&=\sqnorm{\mathbf{\hat{x}}^{t}-\mathbf{x}^\star}_{\boldsymbol{\gamma}}-p\sqnorm{W  \mathbf{\hat{x}}^{t}}_{\boldsymbol{\gamma}}\\
&=\sqnorm{\mathbf{w}^t-\mathbf{w}^\star}_{\boldsymbol{\gamma}}-\sqnorm{ \mathbf{h}^t- \mathbf{h}^\star}_{\boldsymbol{\gamma}}
 +2\langle\mathbf{\hat{x}}^{t}-\mathbf{x}^\star, \mathbf{\hat{h}}^{t}- \mathbf{h}^\star\rangle_{\boldsymbol{\gamma}}+p\sqnorm{W\mathbf{\hat{x}}^t}_{\boldsymbol{\gamma}}.
\end{align*}

On the other hand, we have
\begin{align*}
\Exp{\sqnorm{\mathbf{h}^{t+1}- \mathbf{h}^\star}_{\boldsymbol{\gamma}}\;|\;\mathcal{F}^t}
&=p\sqnorm{\mathbf{\hat{h}}^{t} - \mathbf{h}^\star}_{\boldsymbol{\gamma}}+(1-p)\sqnorm{\mathbf{h}^t - \mathbf{h}^\star}_{\boldsymbol{\gamma}}
\end{align*}
and
\begin{align*}
\sqnorm{\mathbf{\hat{h}}^{t}- \mathbf{h}^\star}_{\boldsymbol{\gamma}}&=\sqnorm{( \mathbf{h}^t- \mathbf{h}^\star)+(\mathbf{\hat{h}}^{t}- \mathbf{h}^t)}_{\boldsymbol{\gamma}}\\
&=\sqnorm{ \mathbf{h}^t- \mathbf{h}^\star}_{\boldsymbol{\gamma}}+\sqnorm{\mathbf{\hat{h}}^{t}- \mathbf{h}^t}_{\boldsymbol{\gamma}}+2\langle  \mathbf{h}^t- \mathbf{h}^\star,\mathbf{\hat{h}}^{t}- \mathbf{h}^t\rangle_{\boldsymbol{\gamma}}\\
&=\sqnorm{ \mathbf{h}^t- \mathbf{h}^\star}_{\boldsymbol{\gamma}}- \sqnorm{\mathbf{\hat{h}}^{t}- \mathbf{h}^t}_{\boldsymbol{\gamma}}+2 \langle \mathbf{\hat{h}}^{t}- \mathbf{h}^\star,\mathbf{\hat{h}}^{t}- \mathbf{h}^t\rangle_{\boldsymbol{\gamma}} \\
&=\sqnorm{ \mathbf{h}^t- \mathbf{h}^\star}_{\boldsymbol{\gamma}} - \sqnorm{\mathbf{\hat{h}}^{t}- \mathbf{h}^t}_{\boldsymbol{\gamma}}-2 p\langle \mathbf{\hat{h}}^{t}- \mathbf{h}^\star, W( \mathbf{\hat{x}}^{t}-\mathbf{x}^\star)\rangle_{\boldsymbol{\gamma}}\\
&=\sqnorm{ \mathbf{h}^t- \mathbf{h}^\star}_{\boldsymbol{\gamma}} - p^2\sqnorm{W\mathbf{\hat{x}}^t}_{\boldsymbol{\gamma}} -2p\langle W(\mathbf{\hat{h}}^{t}- \mathbf{h}^\star),  \mathbf{\hat{x}}^{t}-\mathbf{x}^\star\rangle_{\boldsymbol{\gamma}}\\
&=\sqnorm{ \mathbf{h}^t- \mathbf{h}^\star}_{\boldsymbol{\gamma}} - p^2\sqnorm{W\mathbf{\hat{x}}^t}_{\boldsymbol{\gamma}} -2 p\langle \mathbf{\hat{h}}^{t}- \mathbf{h}^\star,  \mathbf{\hat{x}}^{t}-\mathbf{x}^\star\rangle_{\boldsymbol{\gamma}}.
\end{align*}

Hence, 
\begin{align}
\Exp{\sqnorm{\mathbf{x}^{t+1}-\mathbf{x}^\star}_{\boldsymbol{\gamma}}\;|\;\mathcal{F}^t}&+\frac{1}{p^2}\Exp{\sqnorm{\mathbf{h}^{t+1}- \mathbf{h}^\star}_{\boldsymbol{\gamma}}\;|\;\mathcal{F}^t}\notag\\
&=\sqnorm{\mathbf{w}^t-\mathbf{w}^\star}_{\boldsymbol{\gamma}}-\sqnorm{ \mathbf{h}^t- \mathbf{h}^\star}_{\boldsymbol{\gamma}}
 +2\langle\mathbf{\hat{x}}^{t}-\mathbf{x}^\star, \mathbf{\hat{h}}^{t}- \mathbf{h}^\star\rangle_{\boldsymbol{\gamma}}+p\sqnorm{W\mathbf{\hat{x}}^t}_{\boldsymbol{\gamma}}\notag\\
&\quad+\frac{1}{p^2}\sqnorm{ \mathbf{h}^t- \mathbf{h}^\star}_{\boldsymbol{\gamma}}- p\sqnorm{W\mathbf{\hat{x}}^t}_{\boldsymbol{\gamma}} -2 \langle \mathbf{\hat{h}}^{t}- \mathbf{h}^\star,  \mathbf{\hat{x}}^{t}-\mathbf{x}^\star\rangle_{\boldsymbol{\gamma}} \notag\\
&=  \sqnorm{\mathbf{w}^t-\mathbf{w}^\star}_{\boldsymbol{\gamma}}+\frac{1}{p^2}\left(1-p^2\right)\sqnorm{ \mathbf{h}^t- \mathbf{h}^\star}_{\boldsymbol{\gamma}}.\label{eq55}
\end{align}

Moreover, for every $i\in[n]$,
\begin{align*}
\sqnorm{w_i^t - w_i^\star}&=\sqnorm{x_i^t - x^\star -\gamma_i \big(g_i^t -\nabla f_i(x^\star)\big)}\\
&=\sqnorm{x_i^t - x^\star } - 2\gamma_i \langle x_i^t - x^\star, g_i^t -\nabla f_i(x^\star)\rangle + \gamma_i^2 \sqnorm{g_i^t -\nabla f_i(x^\star)},
\end{align*}
and, by unbiasedness of $g_i^t$ and Assumption~\ref{ass:expected_smoothness_n},
\begin{align*}
\Exp{\sqnorm{w_i^t - w_i^\star}\;|\;\mathcal{F}_0^t}&=\sqnorm{x_i^t - x^\star } - 2\gamma_i \langle x_i^t - x^\star, \nabla f_i(x_i^t) -\nabla f_i(x^\star)\rangle \\
&\quad + \gamma_i^2\Exp{ \sqnorm{g_i^t -\nabla f_i(x^\star)}\;|\;\mathcal{F}^t}\\
&\leq \sqnorm{x_i^t - x^\star } - 2\gamma_i \langle x_i^t - x^\star, \nabla f_i(x_i^t) -\nabla f_i(x^\star)\rangle + 2\gamma_i^2 A_i D_{f_i}(x_i^t, \xstar) \\
&\quad+ \gamma_i^2  C_i.
\end{align*}
It is easy to see that $\langle x_i^t - x^\star, \nabla f_i(x_i^t) -\nabla f_i(x^\star)\rangle = D_{f_i}(x_i^t, \xstar) + D_{f_i}(\xstar,x_i^t)$. This yields \begin{align*}
\Exp{\sqnorm{w_i^t - w_i^\star}\;|\;\mathcal{F}_0^t}
&\leq \sqnorm{x_i^t - x^\star } -2\gamma_i D_{f_i}(\xstar,x_i^t) - 2\gamma_i D_{f_i}(x_i^t, \xstar) + 2\gamma_i^2  A_i D_{f_i}(x_i^t, \xstar) \\
&\quad+ \gamma_i^2  C_i.
\end{align*}
In addition, the strong convexity of $f_i$ implies that $D_{f_i}(\xstar,x_i^t)\geq \frac{\mu_i}{2}\sqnorm{x_i^t - x^\star }$, so that 
\begin{align*}
\Exp{\sqnorm{w_i^t - w_i^\star}\;|\;\mathcal{F}_0^t}
&\leq (1-\gamma_i\mu_i)\sqnorm{x_i^t - x^\star }  - 2\gamma_i (1-\gamma_i A_i) D_{f_i}(x_i^t, \xstar) + \gamma_i^2  C_i,
\end{align*}
and since we have supposed $\gamma_i\leq \frac{1}{A_i}$,
\begin{align*}
\Exp{\sqnorm{w_i^t - w_i^\star}\;|\;\mathcal{F}_0^t}
&\leq (1-\gamma_i\mu_i)\sqnorm{x_i^t - x^\star } + \gamma_i^2  C_i.
\end{align*}
%According to \citet[Lemma 1]{con22rp}, for every $i\in[n]$,
%\begin{align*}
%\sqnorm{w_i^t - w_i^\star}&=\sqnorm{(\mathrm{Id}-\gamma_i\nabla f_i)x_i^t - (\mathrm{Id}-\gamma_i\nabla f_i)x^\star}\\
%&\leq \max(1-\gamma_i\mu_i,\gamma_i L_i-1)^2 \sqnorm{x_i^t-x^\star},
%\end{align*}
%so that
%\begin{equation*}
%\frac{1}{\gamma_i}\sqnorm{w_i^t - w_i^\star}\leq \max(1-\gamma_i\mu_i,\gamma_i L_i-1)^2 \frac{1}{\gamma_i}\sqnorm{x_i^t-x^\star}.
%\end{equation*}
Therefore,
\begin{equation*}
\Exp{\sqnorm{\mathbf{w}^t-\mathbf{w}^\star}_{\boldsymbol{\gamma}}\;|\;\mathcal{F}_0^t}\leq \max_{i\in[n]}(1-\gamma_i\mu_i)\sqnorm{\mathbf{x}^t-\mathbf{x}^\star}_{\boldsymbol{\gamma}}+\sum_{i=1}^n \gamma_i C_i
\end{equation*}
and
\begin{align}
\Exp{\Psi^{t+1}\;|\;\mathcal{F}_0^t}&=\Exp{\sqnorm{\mathbf{x}^{t+1}-\mathbf{x}^\star}_{\boldsymbol{\gamma}}\;|\;\mathcal{F}^t_0}+\frac{1}{p^2}\Exp{\sqnorm{\mathbf{h}^{t+1}- \mathbf{h}^\star}_{\boldsymbol{\gamma}}\;|\;\mathcal{F}^t_0}\notag\\
&\leq  \max_{i\in[n]}(1-\gamma_i\mu_i)\sqnorm{\mathbf{x}^t-\mathbf{x}^\star}_{\boldsymbol{\gamma}}+\frac{1}{p^2}\left(1-p^2\right)\sqnorm{ \mathbf{h}^t- \mathbf{h}^\star}_{\boldsymbol{\gamma}}+\sum_{i=1}^n \gamma_i C_i\notag\\
&\leq  (1-\zeta)\left(\sqnorm{\mathbf{x}^t-\mathbf{x}^\star}_{\boldsymbol{\gamma}}+\frac{1}{p^2}\sqnorm{ \mathbf{h}^t- \mathbf{h}^\star}_{\boldsymbol{\gamma}}\right)+\sum_{i=1}^n \gamma_i C_i\notag\\
&=(1-\zeta) \Psi^t +\sum_{i=1}^n \gamma_i C_i,\label{eqrec2b}
\end{align}
where
\begin{equation*}
\zeta = \min\left(\min_{i\in[n]} \gamma_i\mu_i,p^2\right).
\end{equation*}
%
%Therefore,
%\begin{align}
%\Exp{\Psi^{t+1}\;|\;\mathcal{F}^t}&\leq \max\left((1-\gamma\mu)^2,(\gamma L-1)^2,1-p^2\right)\Psi^t\notag\\
%\end{align}
Using the tower rule, we can unroll the recursion in \eqref{eqrec2b} to obtain the unconditional expectation of $\Psi^{t+1}$. 
\end{proof}

\section{From \algname{i-Scaffnew} to \algname{Scafflix}}\label{secalg3}

We suppose that Assumptions~\ref{ass:convex_smooth}, \ref{ass:unbiasedness}, \ref{ass:expected_smoothness} hold. 
%In the notations of (\ref{eq:FLIX}) and \algname{Scafflix}, 
%we define, for every $i\in[n]$ and $t\geq 0$, $\tilde{x}_i^t \eqdef \alpha_i x_i^t + (1-\alpha_i) x_i^\star$, $\tilde{x}_i^\star \eqdef \alpha_i \xstar + (1-\alpha_i) x_i^\star$, and 
We define for every $i\in[n]$
the function
$\tilde{f}_i:x\in\mathbb{R}^d\mapsto f_i\big(\alpha_i x+ (1 - \alpha_i)\xstar_i\big)$. Thus, (\ref{eq:FLIX}) takes the form of  (\ref{eq:ERM}) with $f_i$ replaced by $\tilde{f}_i$. 
%where the $\alpha_i$ are the personalization weights in (\ref{eq:FLIX}).

We want to derive \algname{Scafflix} from \algname{i-Scaffnew} applied to (\ref{eq:ERM}) with $f_i$ replaced by $\tilde{f}_i$. For this, we first observe that for every $i\in[n]$, $\tilde{f}_i$ is $\alpha_i^2 L_i$-smooth and $\alpha_i^2 \mu_i$-strongly convex. This follows easily from the fact that 
$\nabla \tilde{f}_i (x) = \alpha_i \nabla f_i\big(\alpha_i x+ (1 - \alpha_i)\xstar_i\big)$.

Second, for every $t\geq 0$ and $i\in[n]$, $g_i^t$ is an unbiased estimate of $\nabla f_i(\tilde{x}_i^t)=\alpha_i^{-1}\nabla \tilde{f}_i(x_i^t)$. Therefore, $\alpha_i g_i^t$ is an unbiased estimate of $\nabla \tilde{f}_i(x_i^t)$ satisfying
    \begin{equation*}
%        \ec[]{\sqn{g_{i,t}(x_{i,t})- \nabla f_i(\xstar)}| x_{i,t}} \leq 2A_i D_{f_i}(x_{i, t}, \xstar) + C_i,
        \Exp{\sqn{\alpha_i g_i^{t}- \nabla \tilde{f}_i(x^\star)}\;|\; x_i^{t}} 
        = \alpha_i^2\Exp{\sqn{g_i^{t}- \nabla f_i(\tilde{x}_i^\star)}\;|\; \tilde{x}_i^{t}} 
        \leq 2\alpha_i^2 A_i D_{f_i}(\tilde{x}_i^{t}, \tilde{x}_i^\star) + \alpha_i^2 C_i.
    \end{equation*}
Moreover, 
        \begin{align*}
D_{f_i}(\tilde{x}_i^{t}, \tilde{x}_i^\star) &= f_i(\tilde{x}_i^t)-f_i(\tilde{x}_i^\star)-\langle \nabla f_i(\tilde{x}_i^\star),\tilde{x}_i^t-\tilde{x}_i^\star\rangle
\\
&= \tilde{f}_i(x_i^t) - \tilde{f}_i(x^\star) -\langle \alpha_i^{-1}\nabla \tilde{f}_i(x^\star),\alpha_i(x_i^t-x^\star)\rangle\\
&= \tilde{f}_i(x_i^t) - \tilde{f}_i(x^\star) -\langle \nabla \tilde{f}_i(x^\star),x_i^t-x^\star\rangle\\
&=D_{\tilde{f}_i}(x_i^{t}, x^\star).
    \end{align*}

Thus, we obtain \algname{Scafflix} by applying \algname{i-Scaffnew} to solve (\ref{eq:FLIX}), viewed as  (\ref{eq:ERM}) with $f_i$ replaced by $\tilde{f}_i$, and further making the following substitutions in the algorithm: $g_i^t$ is replaced by $\alpha_i g_i^t$, $h_i^t$ is replaced by $\alpha_i h_i^t$ (so that $h_i^t$ in \algname{Scafflix} converges to  $\nabla f_i (\tilde{x}_i^\star)$ instead of $\nabla \tilde{f}_i (x^\star)=\alpha_i\nabla f_i (\tilde{x}_i^\star)$), $\gamma_i$ is replaced by $\alpha_i^{-2}\gamma_i$ (so that the $\alpha_i$ disappear in the theorem).

Accordingly, Theorem~\ref{theo2} follows from Theorem~\ref{scafflix_theo1}, with the same substitutions and with $A_i$, $C_i$ and $\mu_i$ replaced by  $\alpha_i^2 A_i$, $\alpha_i^2 C_i$ and $\alpha_i^2\mu_i$, respectively. Finally, the Lyapunov function is multiplied by $\gamma_{\min}/n$ to make it independent from $\epsilon$ when scaling the $\gamma_i$ by $\epsilon$ in Corollary~\ref{cor2}.\medskip

%\begin{lemma}\label{prop:convex_smooth}
%For every $i\in[n]$, $f_i$ is $\alpha_i^2 L_i$-smooth and $\alpha_i^2 \mu_i$-strongly convex. %$\tilde{f}$ is $\mu_{\alpha}$-strongly convex and $L_{\alpha}$-smooth with $\mua \eqdef \avein \alpha_i^2 \mu_i$ and $\la \eqdef \avein \alpha_i^2 L_i$.
%\end{lemma}
%The proof easily follows from the fact that $\nabla \tilde{f}_i (x) = \alpha_i \nabla f_i\big(\alpha_i x+ (1 - \alpha_i)\xstar_i\big)$.

We note that 
\algname{i-Scaffnew} is recovered as a particular case of \algname{Scafflix} if $\alpha_i \equiv 1$, so that \algname{Scafflix} is indeed more general.

\section{Proof of Corollary~\ref{cor2}}

We place ourselves in the conditions of Theorem~\ref{theo2}. 
Let $\epsilon>0$. We want to choose the $\gamma_i$ and the number of iterations $T\geq 0$ such that $\Exp{\Psi^T}\leq \epsilon$. For this, we bound the two terms $(1-\zeta)^T \Psi^0$ and $\frac{\gamma_{\min}}{\zeta n} \sum_{i=1}^n \gamma_i  C_i$ in \eqref{eqr1} by $\epsilon/2$.

We set $p=\sqrt{\min_{i\in[n]} \gamma_i \mu_i}$, so that $\zeta = \min_{i\in[n]} \gamma_i \mu_i$. We have
\begin{equation}
T\geq \frac{1}{\zeta}\log(2\Psi^0\epsilon^{-1}) \Rightarrow (1-\zeta)^T\Psi^0 \leq \frac{\epsilon}{2}.\label{eqgrergg}
\end{equation}
Moreover, %for every $i\in [n]$ with $C_i>0$,
\begin{equation*}
(\forall i\in[n] \mbox{ s.t.\ } C_i>0)\ \gamma_i \leq \frac{\epsilon \mu_{\min}}{2 C_i} \Rightarrow \frac{\gamma_{\min}}{\zeta n} \sum_{i=1}^n \gamma_i  C_i \leq \frac{\epsilon}{2} \frac{\left(\min_{j\in[n]} \gamma_j \right)\left( \min_{j\in[n]} \mu_j\right)}{\min_{j\in[n]} \gamma_j \mu_j }\leq \frac{\epsilon}{2}.
\end{equation*}
Therefore, we set for every $i\in[n]$
\begin{equation*}
\gamma_i\eqdef \min \left(\frac{1}{A_i},\frac{\epsilon \mu_{\min}}{2 C_i} \right)
\end{equation*}
(or $\gamma_i\eqdef \frac{1}{A_i}$ if $C_i=0$), 
and we get from \eqref{eqgrergg} that $\Exp{\Psi^T}\leq \epsilon$ after
\begin{equation*}
\mathcal{O}\left(\left(\max_{i\in[n]}  \max\left(\frac{A_i}{\mu_i},\frac{C_i}{\epsilon \mu_{\min}\mu_i}\right)\right)\log(\Psi^0\epsilon^{-1})\right)
\end{equation*}
iterations.

\section{Additional experimental results}
%\subsection{Complementary generalization results and analysis}
\begin{figure}[!htbp]
	\centering
	\begin{subfigure}[b]{0.4\textwidth}
		\centering
		\includegraphics[trim=0 0 0 0, clip, width=\textwidth]{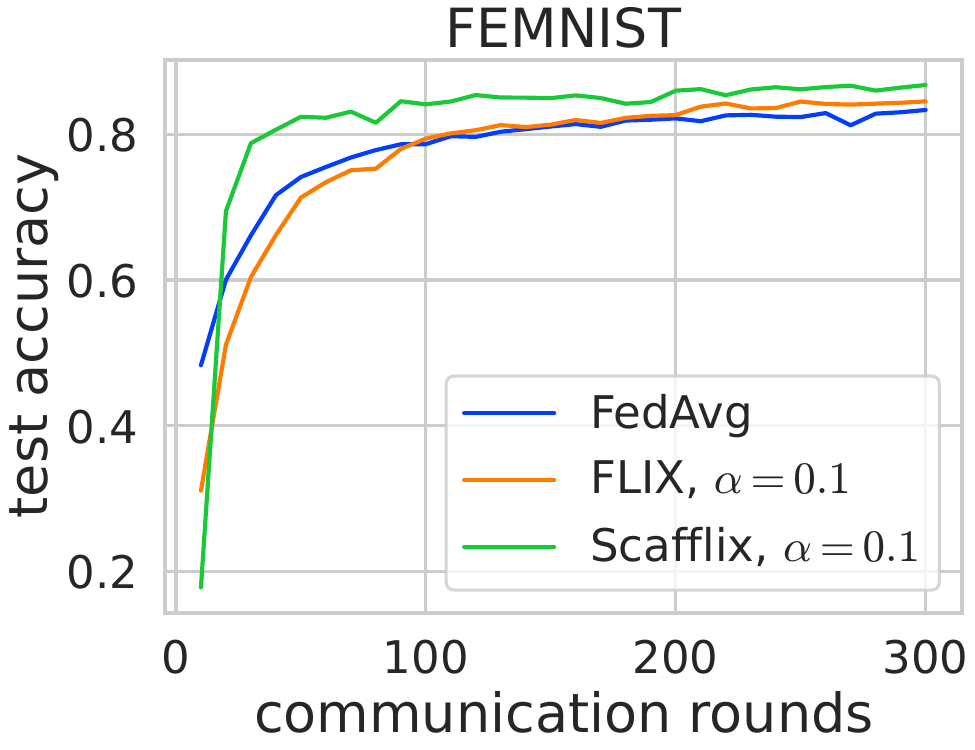}
		% \caption{}
	\end{subfigure}
	\qquad\qquad
	\begin{subfigure}[b]{0.4\textwidth}
		\centering
		\includegraphics[trim=0 0 0 0, clip, width=\textwidth]{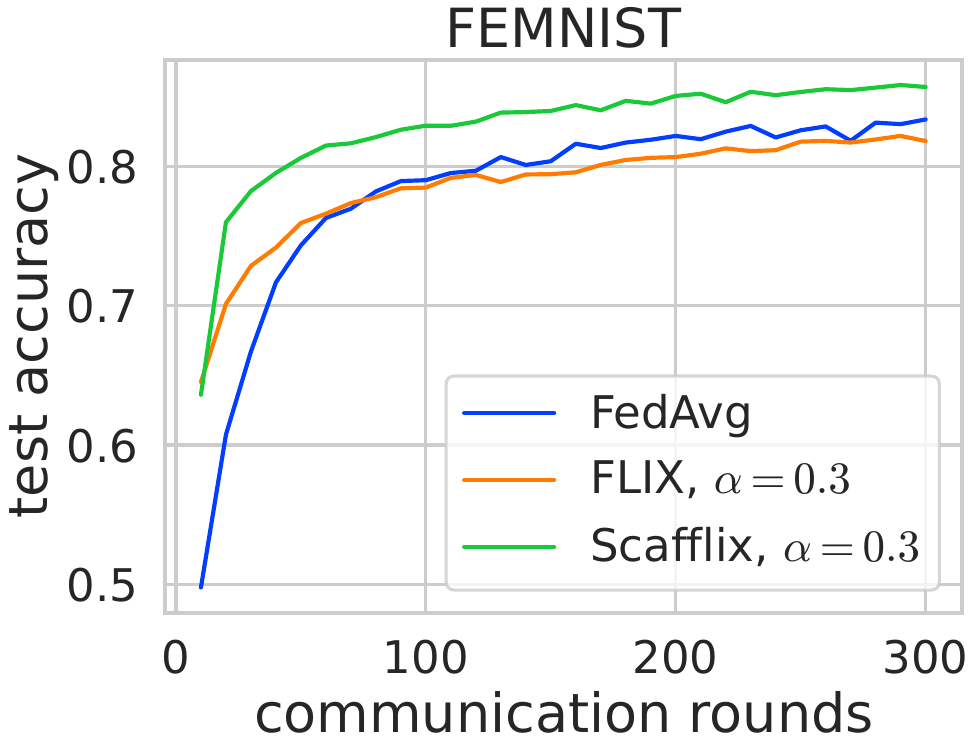}
		% \caption{$\tau=32$}
		\end{subfigure}
% 	\caption{Comparative analysis with baselines with $\alpha=0.3$ (left) and $\alpha=0.5$ (right).}
% 	\label{fig:abs02}
% \end{figure} 
% \begin{figure}[!htbp]
% 	\centering
	\begin{subfigure}[b]{0.4\textwidth}
		\centering
		\includegraphics[trim=0 0 0 0, clip, width=\textwidth]{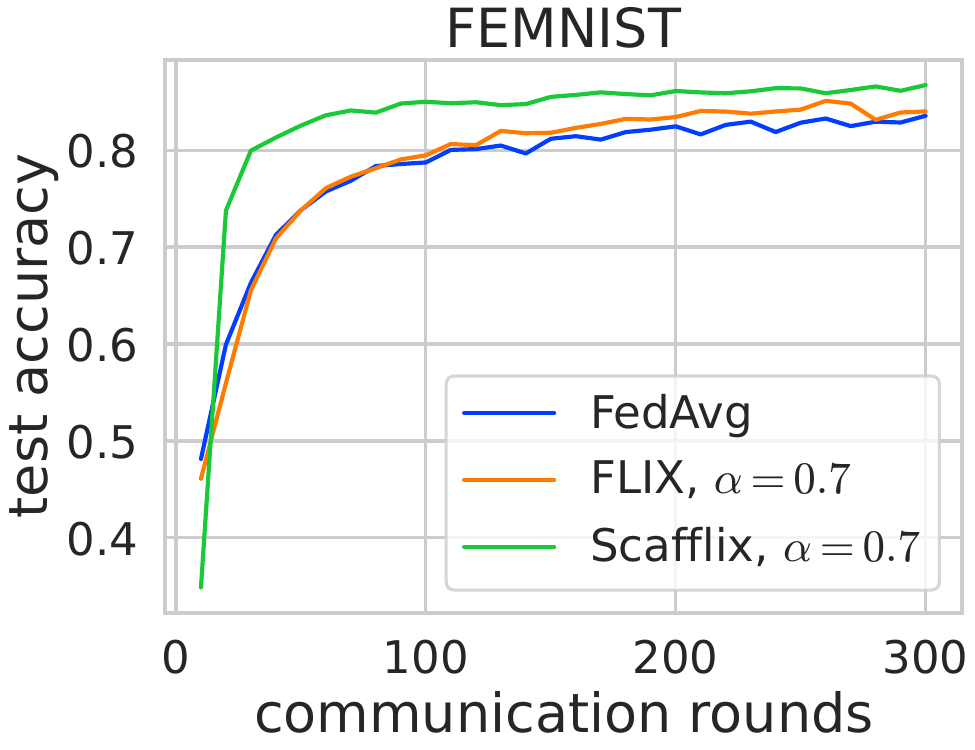}
		% \caption{}
	\end{subfigure}
	\qquad\qquad
	\begin{subfigure}[b]{0.4\textwidth}
		\centering
		\includegraphics[trim=0 0 0 0, clip, width=\textwidth]{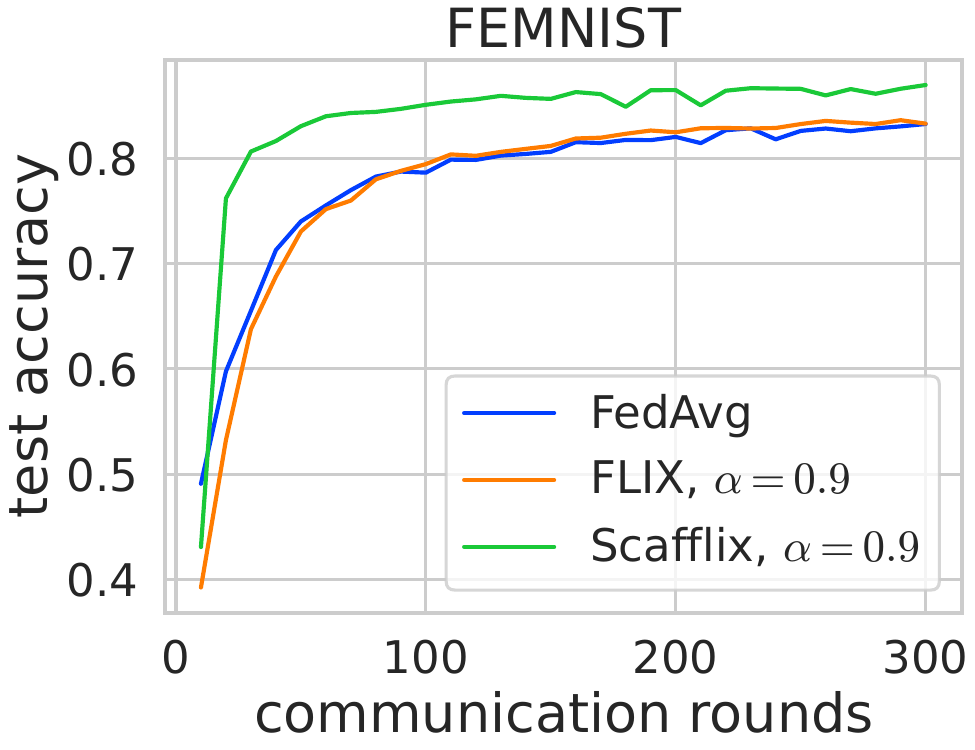}
		% \caption{$\tau=32$}
		\end{subfigure}
	\caption{As part of our experimentation on the FEMNIST dataset, we performed complementary ablations by incorporating various personalization factors, represented as $\alpha$. In the main section, we present the results obtained specifically with $\alpha = 0.5$. Furthermore, we extend our analysis by highlighting the outcomes achieved with $\alpha$ values spanning from 0.1 to 0.9.} %, inclusively.}
	\label{fig:abs03}
\end{figure}

\begin{figure}[!htbp]
	\centering
	\begin{subfigure}[b]{0.4\textwidth}
		\centering
		\includegraphics[trim=0 0 0 0, clip, width=\textwidth]{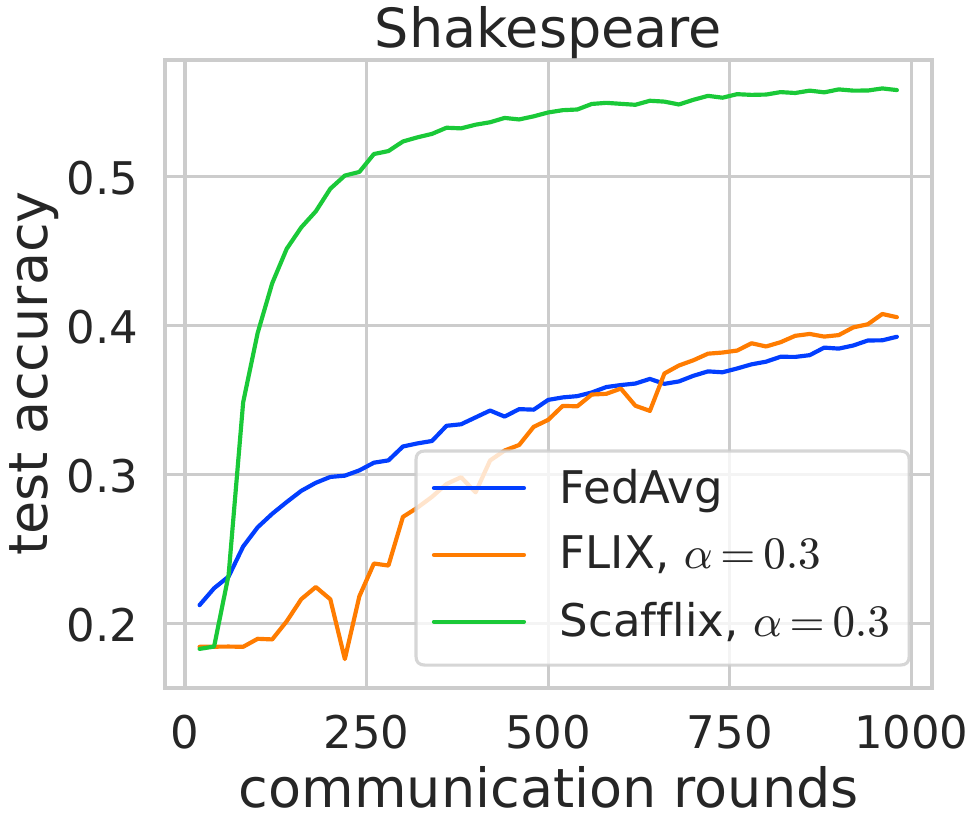}
		% \caption{}
	\end{subfigure}
	\qquad\qquad
		\begin{subfigure}[b]{0.4\textwidth}
		\centering
		\includegraphics[trim=0 0 0 0, clip, width=\textwidth]{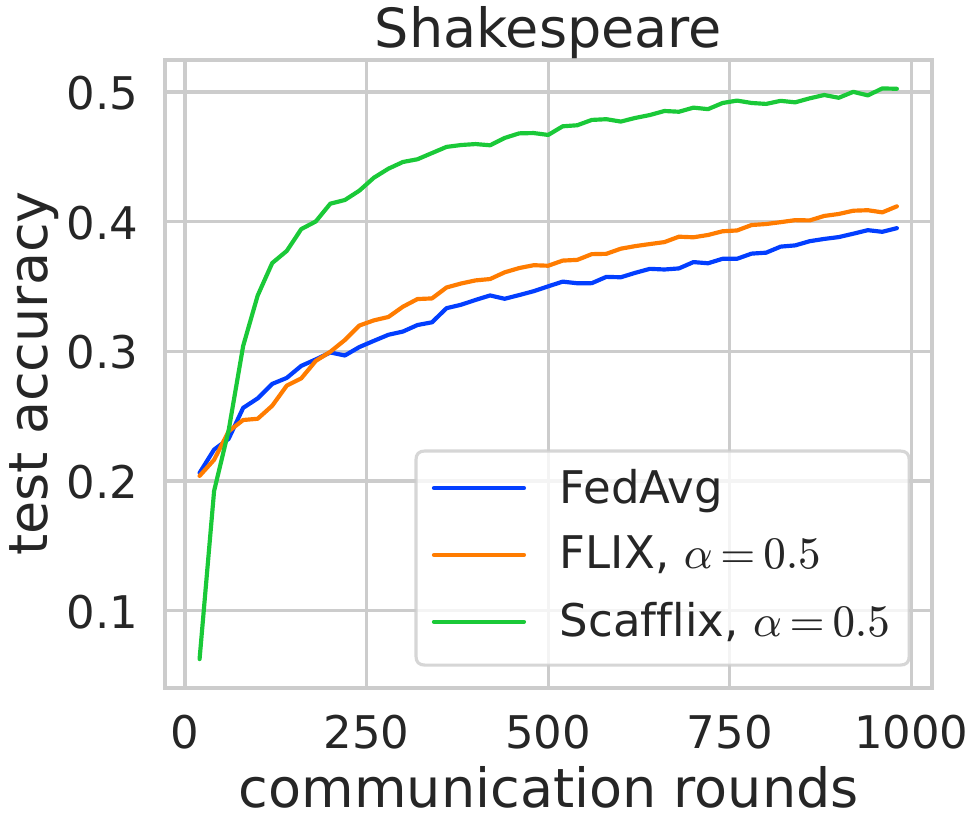}
		% \caption{$\tau=32$}
		\end{subfigure}
	\begin{subfigure}[b]{0.4\textwidth}
		\centering
		\includegraphics[trim=0 0 0 0, clip, width=\textwidth]{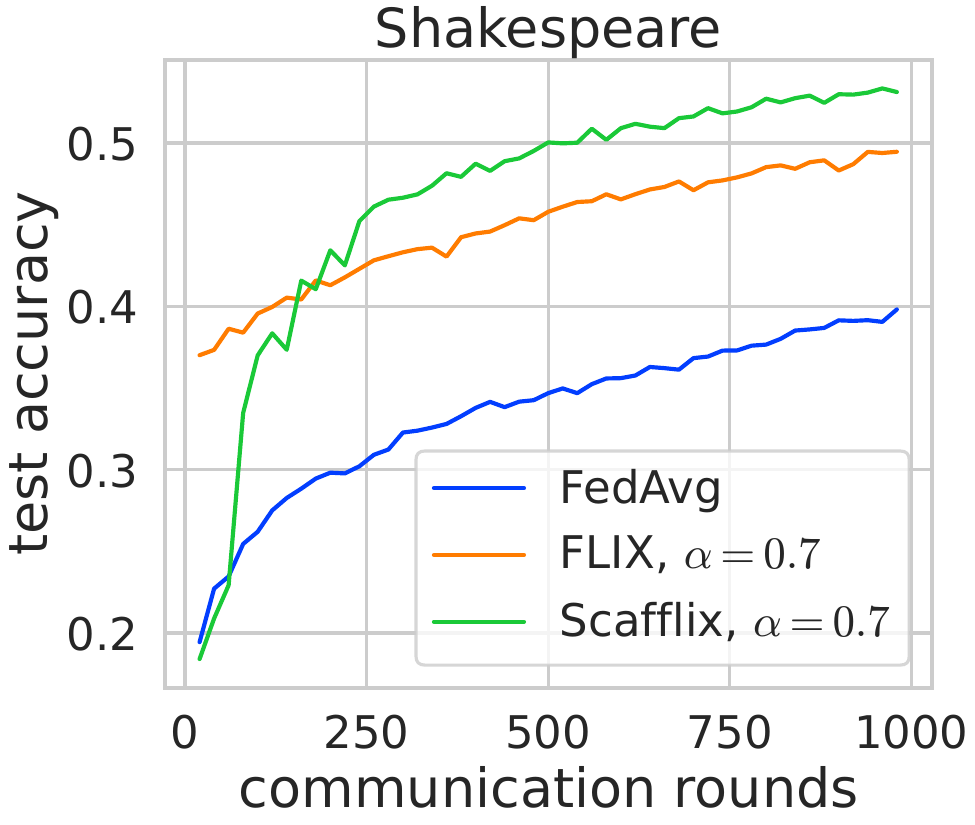}
		% \caption{}
	\end{subfigure}
	\qquad\qquad
	\begin{subfigure}[b]{0.4\textwidth}
		\centering
		\includegraphics[trim=0 0 0 0, clip, width=\textwidth]{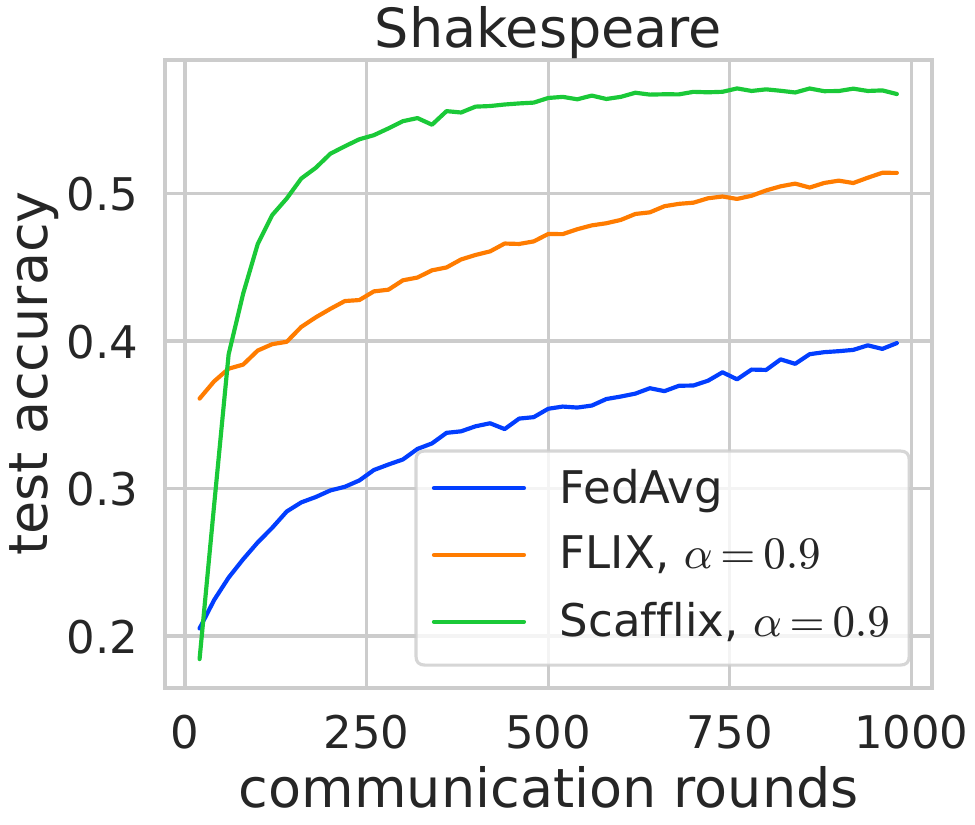}
		% \caption{$\tau=32$}
		\end{subfigure}
	\caption{In our investigation of the Shakespeare dataset, we carried out complementary ablations, considering a range of personalization factors denoted as $\alpha$. The selection strategy for determining the appropriate $\alpha$ values remains consistent with the methodology described in the above figure.} 
	\label{fig:abs09}
\end{figure} 

\begin{figure}[!htbp]
	\centering
	\begin{subfigure}[b]{0.4\textwidth}
		\centering
		\includegraphics[trim=0 0 0 0, clip, width=\textwidth]{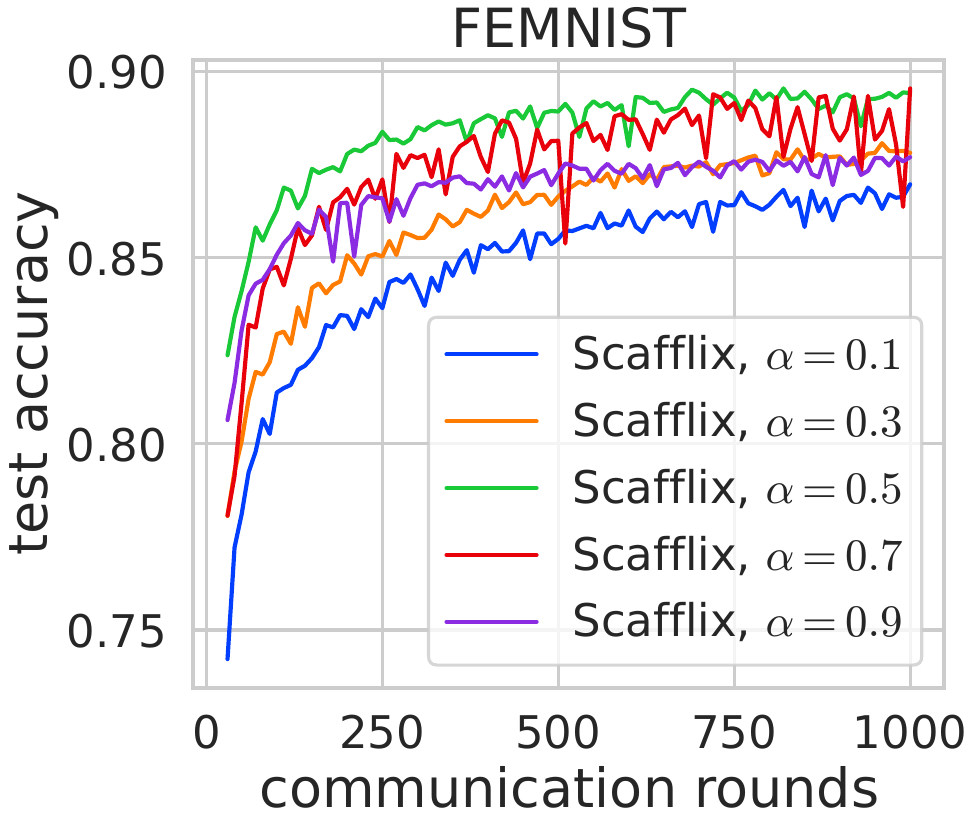}
		% \caption{}
	\end{subfigure}
	\qquad\qquad
	\begin{subfigure}[b]{0.4\textwidth}
		\centering
		\includegraphics[trim=0 0 0 0, clip, width=\textwidth]{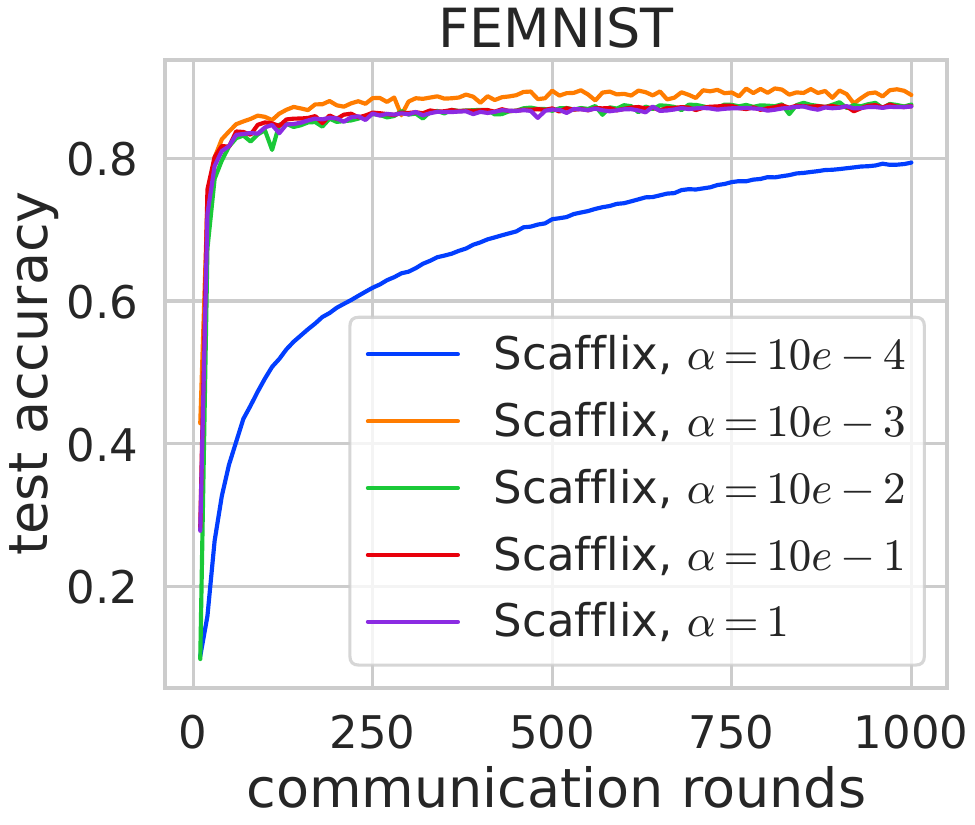}
		% \caption{$\tau=32$}
		\end{subfigure}
	\caption{Ablation studies with different values of the personalization factor $\alpha$. The left figure is the complementary experiment of linearly increasing $\alpha$ with full batch size; the right is the figure with exponentially increasing $\alpha$ with default batch size of 20. } 
	\label{fig:abs08}
\end{figure} 

\subsection{Additional baselines}
While our research primarily seeks to ascertain the impact of explicit personalization and local training on communication costs, we recognize the interest of the community for a broad comparative scope. Accordingly, we have included extensive baseline comparisons with other recent FL and particularly personalized FL (pFL) methodologies. A comparative performance analysis on popular datasets like CIFAR100 and FMNIST is presented below:
    
\begin{table}[!htbp]
    \centering
    \renewcommand{\arraystretch}{1.05}\caption{Results of additional baselines.}\label{tab:datasets_compare} 
    \begin{adjustbox}{max width=\textwidth}
    \begin{tabular}{c|c|c|c|c|c}
        \toprule
        Method & Ditto & FedSR-FT & FedPAC & FedCR & Scafflix\\ \hline
        CIFAR100 & 58.87 & 69.95 & 69.31 & 78.49 & 72.37\\ \hline
        FMNIST & 85.97 & 87.08 & 89.49 & 93.77 & 89.62\\
        \bottomrule
    % \end{tabular}}
    \end{tabular}
    \end{adjustbox}  
\end{table}     
     
We utilized the public code and adopted the optimal hyper-parameters from \algname{FedCR}~\cite{FedCR}, subsequently re-running and documenting all baseline performances under the `non-iid' setting. Our proposed \algname{Scafflix} algorithm was reported with a communication probability of $p=0.3$ and spanned $500$ communication rounds. We set the personalization factor $\alpha$ at 0.3. Based on the results, when focusing solely on the generalization (testing) performance of the final epoch, our method is on par with state-of-the-art approaches such as \algname{FedPAC}~\cite{FedPAC} and \algname{FedCR}~\cite{FedCR}. However, our primary emphasis lies in demonstrating accelerated convergence. %Thus, we will offer additional ablation studies focusing on limited training epochs to simulate restricted communication conditions.

\subsection{Logistic regression under non-IID conditions}\label{sec:logistic_noniid}
Our thorough evaluation investigates the potential for achieving double acceleration through both explicit personalization and efficient local training under varying data distributions. We consider the scenarios outlined below:

\begin{itemize}
\item \textit{IID:} Data is uniformly distributed across all clients with identical weighting factors, denoted as $\alpha_i$.
\item \textit{Label-wise Non-IID:} We induce imbalances in label distribution among clients. The data is bifurcated into positive and negative samples, followed by a tailored sampling technique that incrementally augments the ratio of positive samples relative to negative ones. We define these ratios as $r_{\mathrm{pos}} = (i + 1) / n$ and $r_{\mathrm{neg}} = 1 - r_{\mathrm{pos}}$, where $i$ represents the client index, and $n$ is the number of clients.
\item \textit{Feature-wise Non-IID:} Variations in feature distribution across clients are introduced by segmenting the features into clusters with the k-means algorithm. The number of clusters corresponds to the client count.
\item \textit{Quantity-wise Non-IID:} Data volume variance among clients is realized. The distribution of data samples per client follows a Dirichlet distribution, with a default setting of $\alpha = 0.5$. Notably, a higher $\alpha$ leads to a more uniform distribution. At $\alpha = 1$, it resembles a uniform distribution, while at $\alpha < 1$, the distribution becomes skewed, resulting in a disparate data volume across workers.
\end{itemize}

In the main text, Figure~\ref{fig:tissue_figure} illustrates the outcomes for \textit{label-wise non-IID}. For the sake of completeness, we also include results in Figure~\ref{fig:tissue_figure2}, Figure~\ref{fig:tissue_figure3}, and Figure~\ref{fig:tissue_figure4} depicting various data partitioning strategies. Across these figures, we consistently observe that \algname{Scafflix} successfully achieves double acceleration.

\begin{figure*}[!htbp]
	\centering
	\begin{subfigure}[b]{0.32\textwidth}
		\centering
		\includegraphics[width=\textwidth]{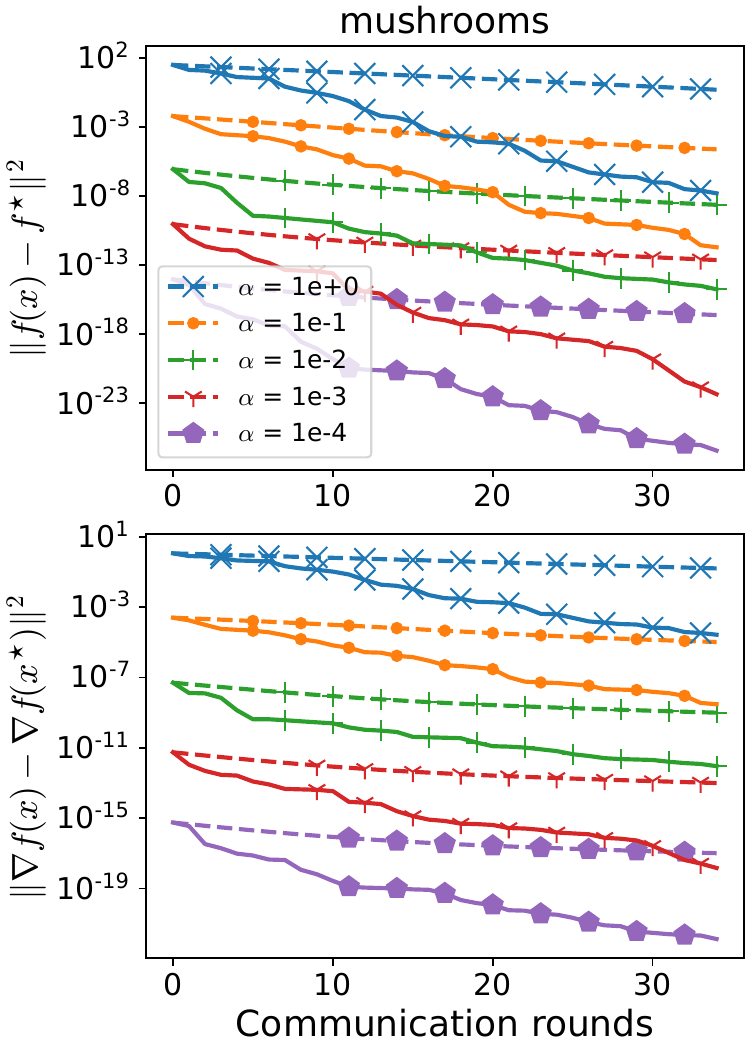}
	\end{subfigure}
	\hfill
	\begin{subfigure}[b]{0.32\textwidth}
		\centering
		\includegraphics[width=\textwidth]{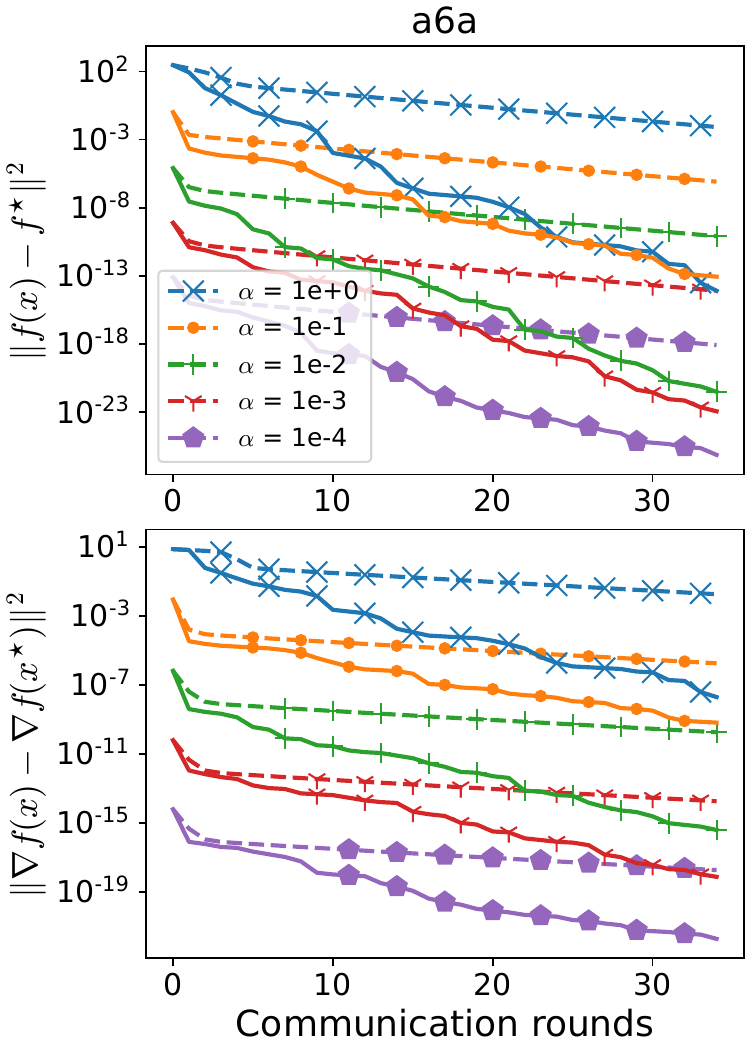}
	\end{subfigure}
	\hfill
	\begin{subfigure}[b]{0.32\textwidth}
		\centering
		\includegraphics[width=\textwidth]{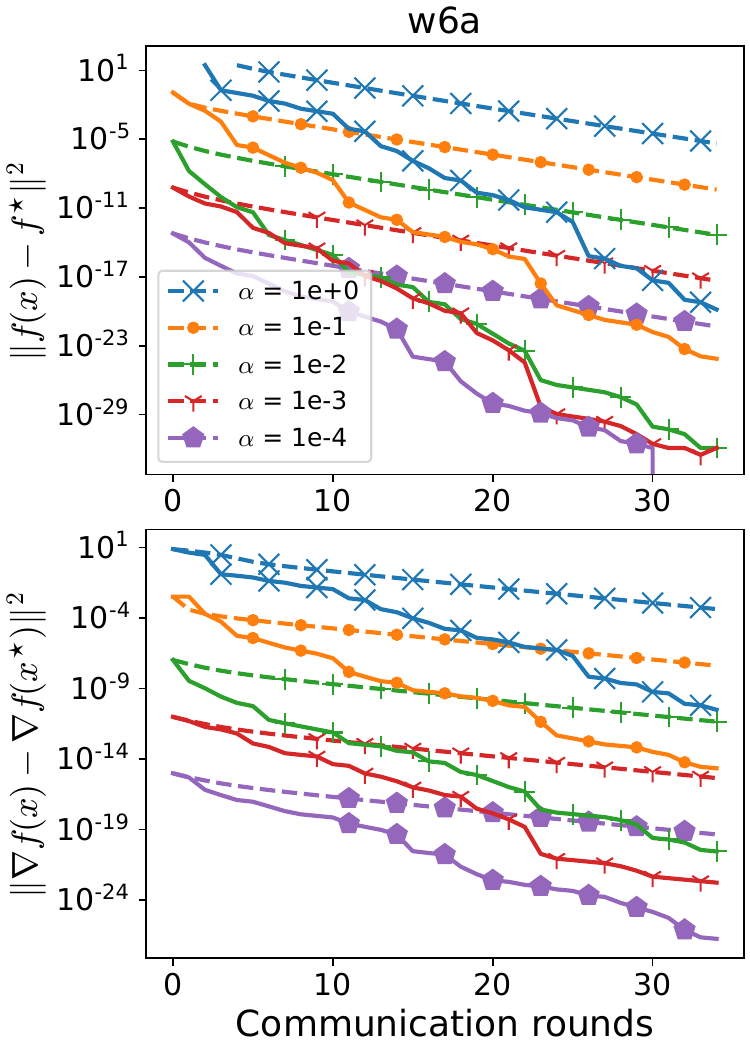}
	\end{subfigure}
	\caption{Results on IID splits.}
	\label{fig:tissue_figure2}
\end{figure*}  

\begin{figure*}[!htbp]
	\centering
	\begin{subfigure}[b]{0.32\textwidth}
		\centering
		\includegraphics[width=\textwidth]{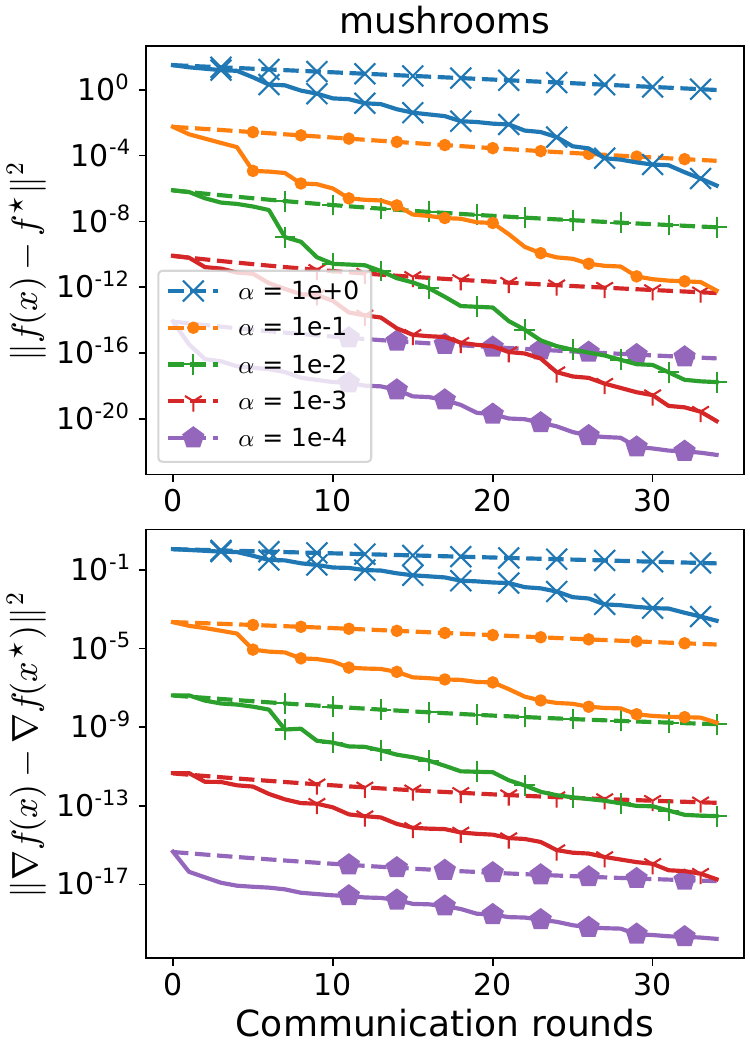}
	\end{subfigure}
	\hfill
	\begin{subfigure}[b]{0.32\textwidth}
		\centering
		\includegraphics[width=\textwidth]{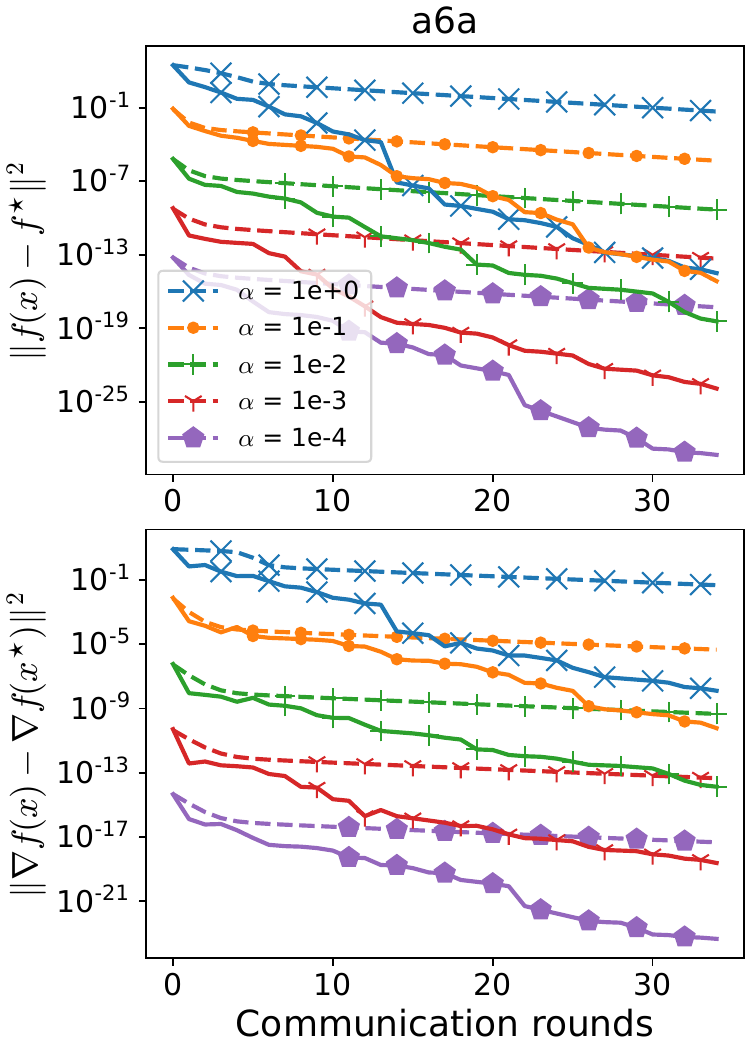}
	\end{subfigure}
	\hfill
	\begin{subfigure}[b]{0.32\textwidth}
		\centering
		\includegraphics[width=\textwidth]{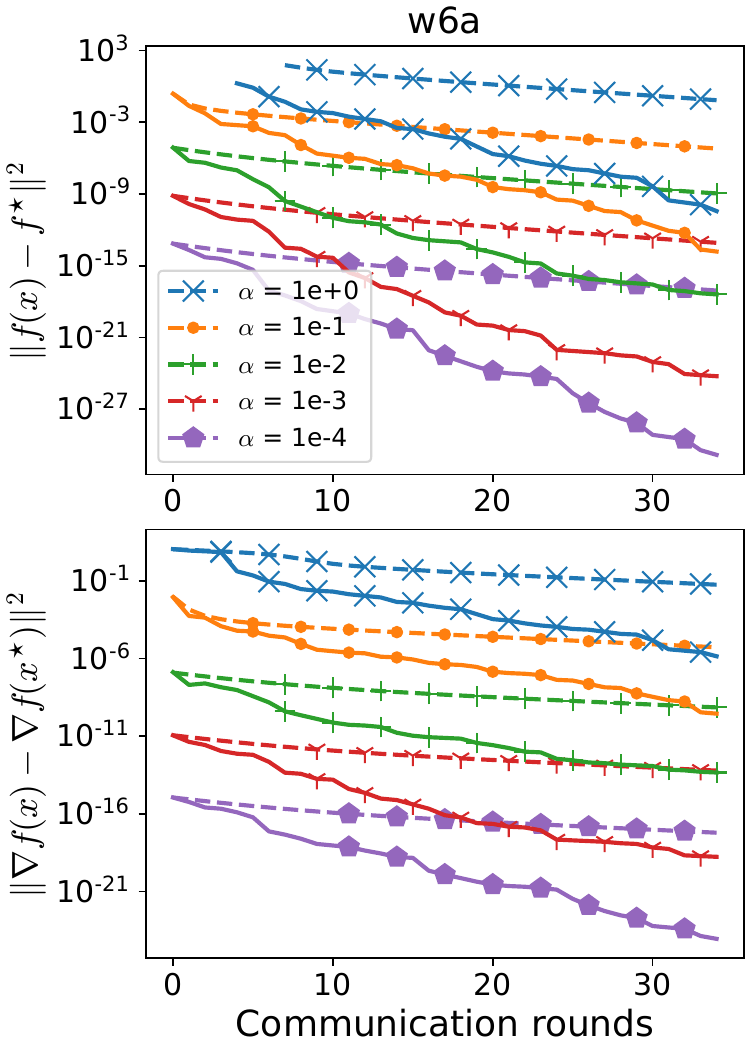}
	\end{subfigure}
	   \caption{Feature-wise non-IID.}
	   \label{fig:tissue_figure3}
\end{figure*}  
 
\begin{figure*}[!htbp]
	\centering
	\begin{subfigure}[b]{0.32\textwidth}
		\centering
		\includegraphics[width=\textwidth]{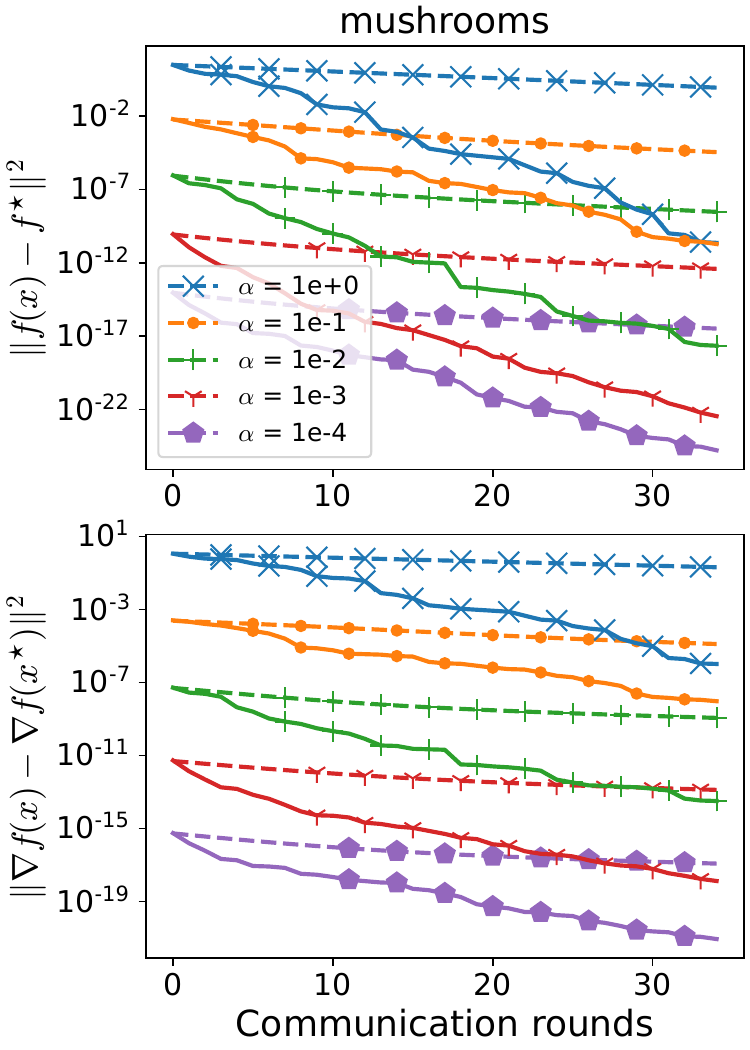}
	\end{subfigure}
	\hfill
	\begin{subfigure}[b]{0.32\textwidth}
		\centering
		\includegraphics[width=\textwidth]{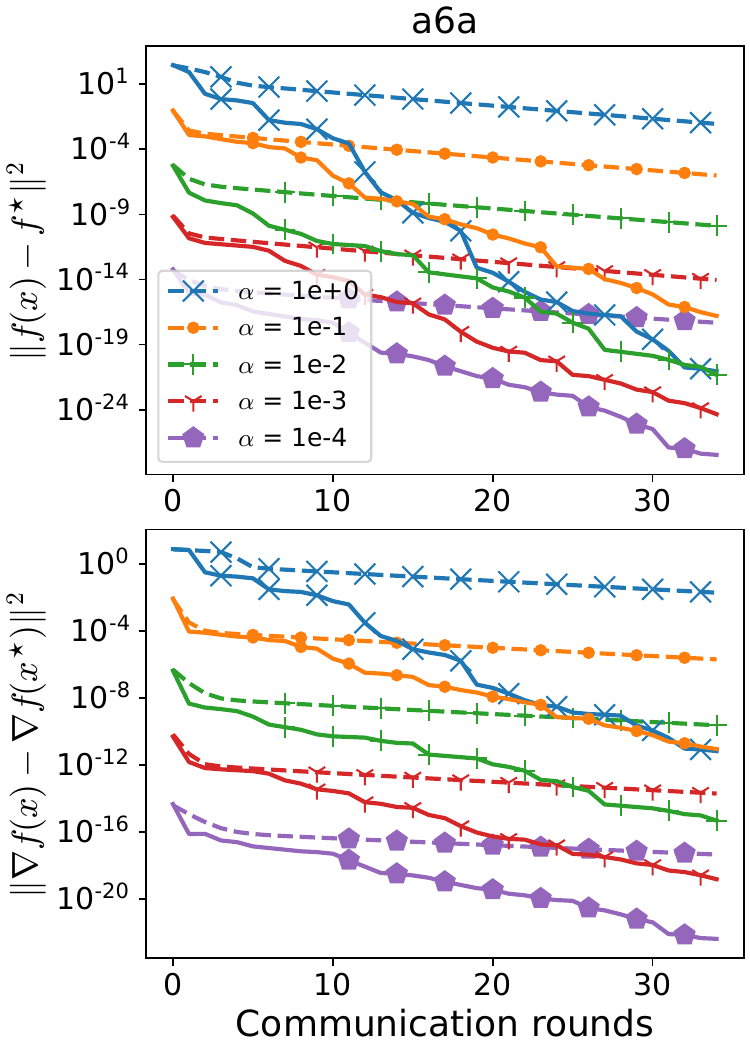}
	\end{subfigure}
	\hfill
	\begin{subfigure}[b]{0.32\textwidth}
		\centering
		\includegraphics[width=\textwidth]{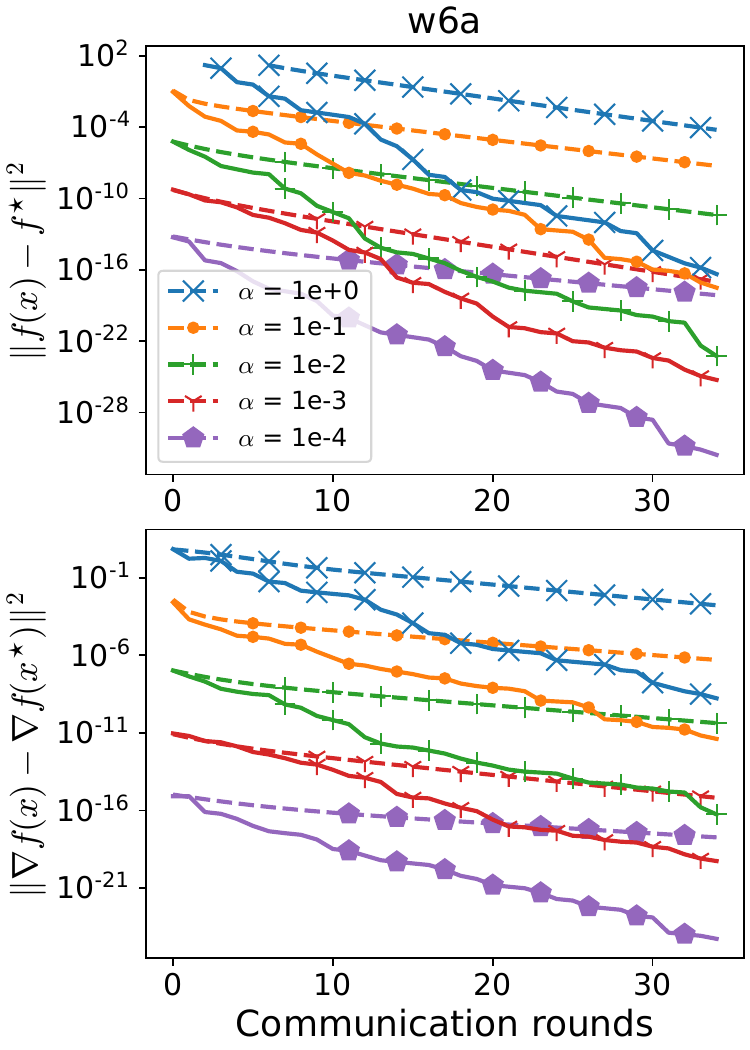}
	\end{subfigure}
	   \caption{Quantity-wise non-IID.}
	   \label{fig:tissue_figure4}
\end{figure*}  

\newpage
\clearpage
\subsection{Inexact approximation of local optimal}\label{sec:inexact_approx_local_optimal}
To visualize the cost of local communications, we present the expected number of local iterations to achieve an epsilon such that $\|\nabla f_i(x)\|<\epsilon$. We present the results in Figure~\ref{fig:abs11}. We can see there is a huge difference with respect to the different $\epsilon$. Since in FL, the communication cost is always the bottleneck, for scenarios that local computation is not that expensive, we can run more local iterations to obtain a smaller $\epsilon$. In Figure~\ref{fig:abs22}, we show on ablations that even choose $\epsilon=1e-1$, which can still provide guidance leading to acceptable neighborhood. In general, there is a neighborhood here. Since in Figure~\ref{fig:abs22}, we consider the personalization factor $\alpha=0.1$, here we conduct further ablations with $\alpha=0.01$ with the results presented in Figure~\ref{fig:abs32}.

\begin{figure}[!htbp]
	\centering
	\begin{subfigure}[b]{0.325\textwidth}
	\centering
	\includegraphics[trim=0 0 0 0, clip, width=\textwidth]{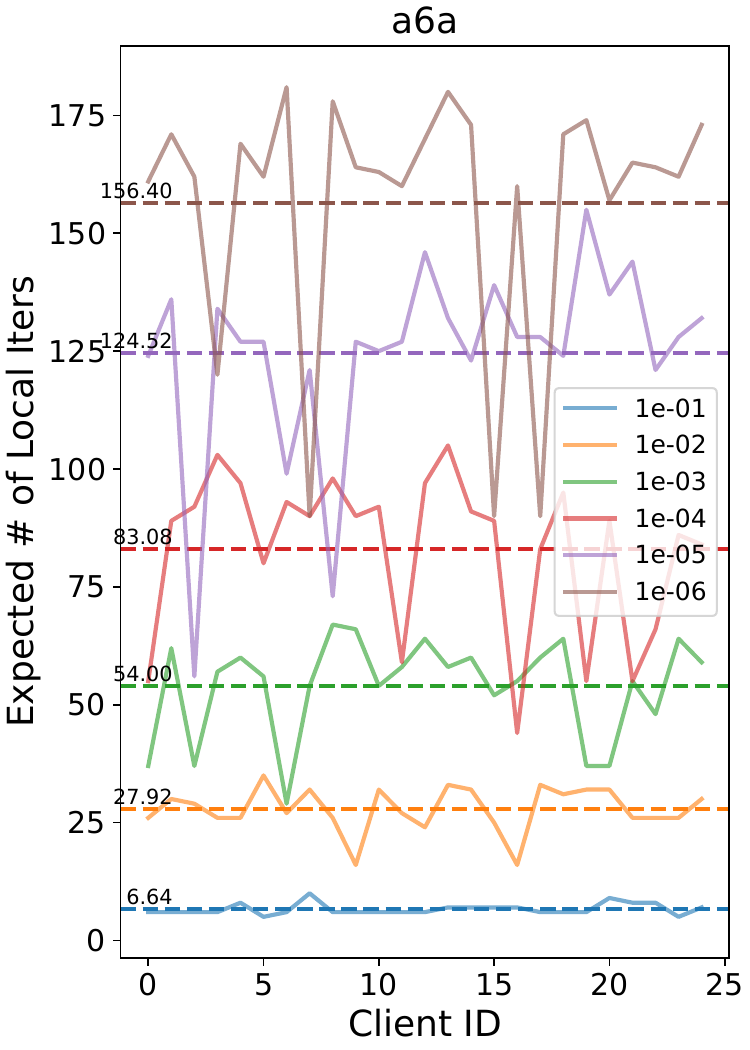}
	% \caption{$\alpha$s}\label{fig:abs10_a}
	\end{subfigure}
	\hfill
	\begin{subfigure}[b]{0.325\textwidth}
	\centering
	\includegraphics[trim=0 0 0 0, clip, width=\textwidth]{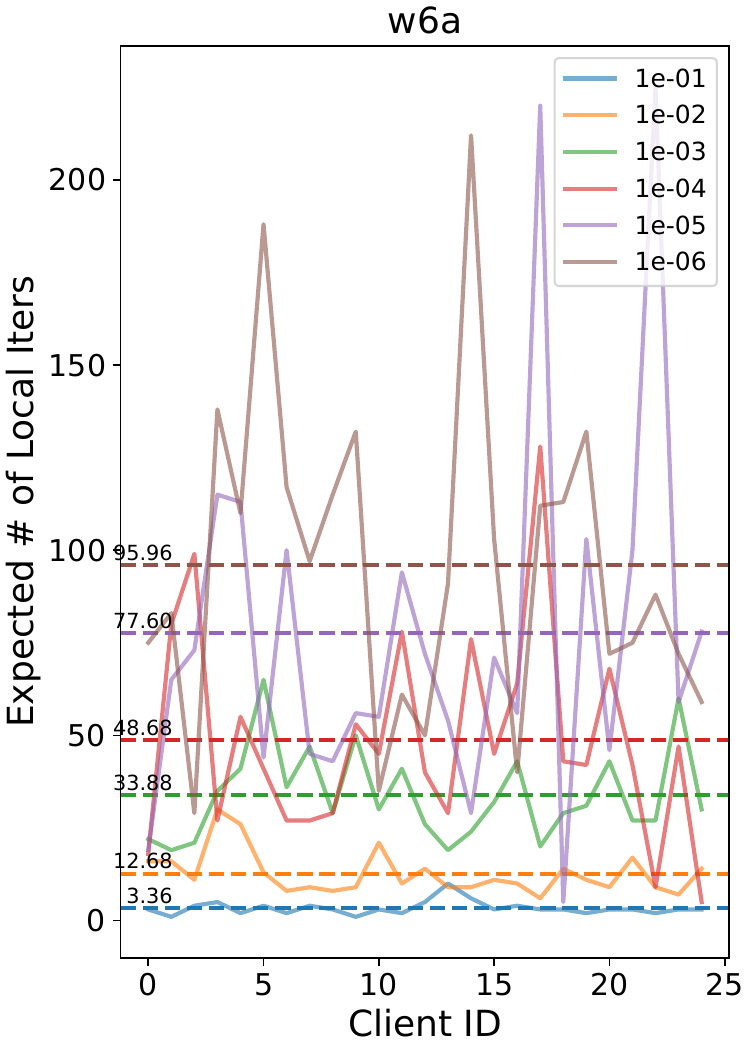}
	% \caption{$\tau$s}\label{fig:abs10_b}
	\end{subfigure}
	\hfill
	\begin{subfigure}[b]{0.325\textwidth}
		\centering
		\includegraphics[trim=0 0 0 0, clip, width=\textwidth]{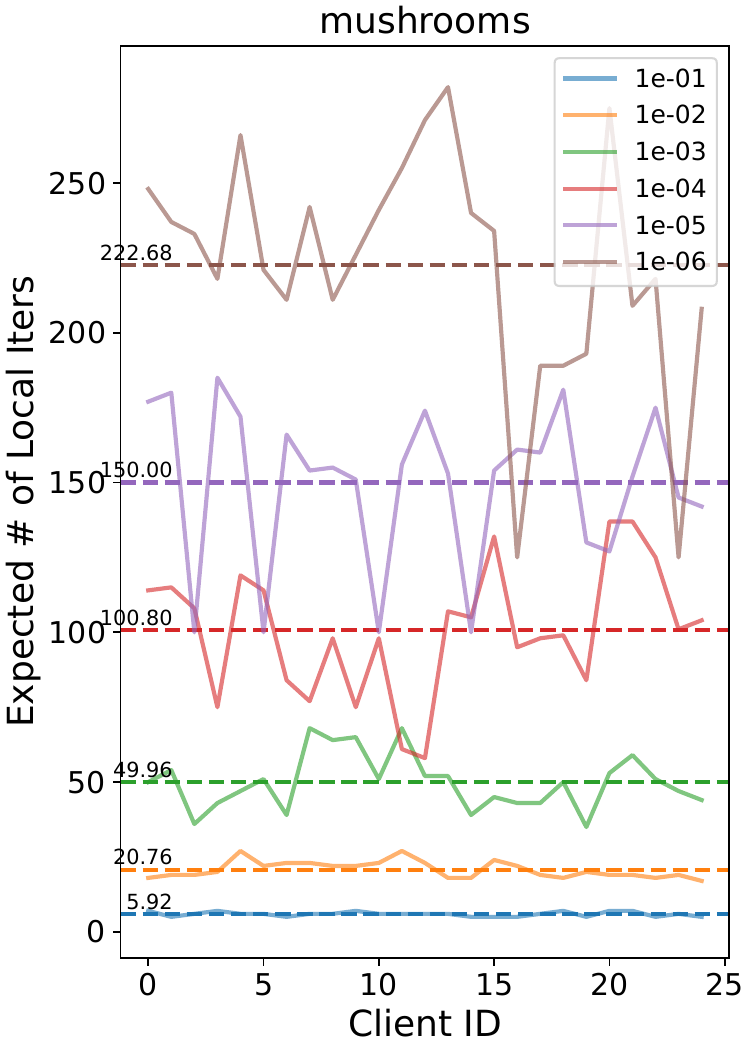}
		% \caption{$\tau$s}\label{fig:abs10_b}
		\end{subfigure}
	\caption{Number of local iterations per client for find an approximation $\bar{x}_i^\star$ of the local optimal $x_i^\star$ such that $\|\nabla f_i(x)\|<\epsilon$. The legend is $\epsilon$.}
	\label{fig:abs11}
	\end{figure} 
 
	\begin{figure}[!htbp]
		\centering
		\begin{subfigure}[b]{0.48\textwidth}
		\centering
		\includegraphics[trim=0 0 0 0, clip, width=\textwidth]{img/scafflix/inexact/scafflix_quantity_noniid_mushrooms_1e-06_log_reg_mushrooms_epsilons_inexact_.pdf}
		% \caption{$\alpha$s}\label{fig:abs10_a}
		\end{subfigure}
		\hfill  
		\begin{subfigure}[b]{0.48\textwidth}
		\centering
		\includegraphics[trim=0 0 0 0, clip, width=\textwidth]{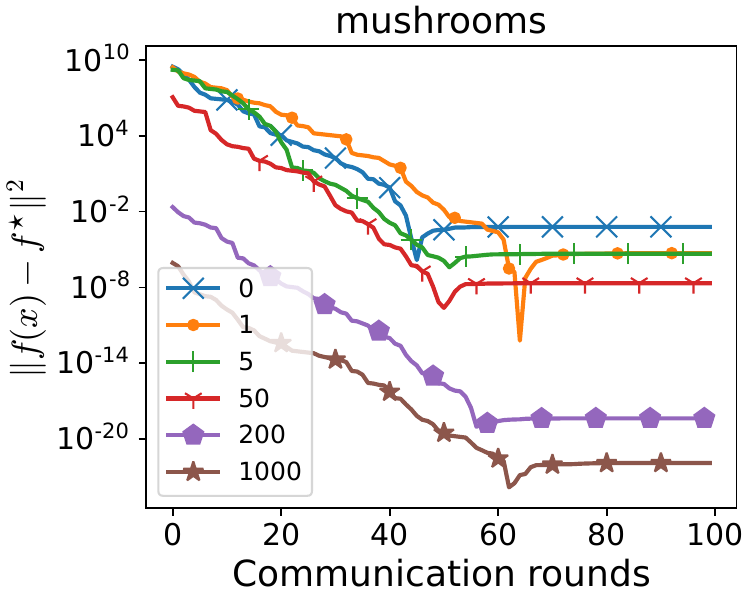}
		% \caption{$\tau$s}\label{fig:abs10_b}
		\end{subfigure}
		\caption{Inexact local optimal approximation with $\alpha=0.01$.}
		\label{fig:abs32}
		\end{figure}  

            \chapter{Appendix to Chapter \ref{chapter_fedp3}}
\label{chapter_appendix_fedp3}
\thispagestyle{empty}

\section{Extended related work}\label{ref:app_extended_related_work}
% \section{Related Work}
%%%%%%%%%%%%%%%%%%%%
%%%%%%%%%%%%%%%%%%%%

\color{black}
%%%%%%%%%%%%%%%%%%%%
\subsection{Federated network pruning}\label{sec:federated_network_pruning}
%%%%%%%%%%%%%%%%%%%%

We introduce two distinct types of network pruning within our study: 1) global pruning, which extends from server to client, and 2) local pruning, where each client's network is pruned based on its own specific data. In our setting, we assume federated pruning is the scenario with both possible global and local pruning. 
Federated network pruning, a closely related field, pursues the objective of identifying the optimal or near-optimal pruned neural network at each communication from the server to the clients, as documented in works of \citet{PruneFL} and \citet{FedTiny}, for example.

During the initial phase of global pruning, \citep{PruneFL} isolates a single potent and reliable client to initiate model pruning. The subsequent stage of local pruning incorporates all clients, advancing the adaptive pruning process. This process involves not only parameter removal but also the reintroduction of parameters, complemented by the standard {FedAvg}~\citep{mcmahan2017communication}. However, the need for substantial local memory to record the updated relevance measures of all parameters in the full-scale model poses a challenge. As a solution to this problem, \cite{FedTiny} proposes an adaptive batch normalization and progressive pruning modules that utilize sparse local computation. Yet, these methods overlook explicit considerations for constraints related to client-side computational resources and communication bandwidth.

Our primary attention gravitates towards designing distinct local pruning methods, such as \citep{FjORD}, \citep{FedRolex}, and \citep{Flado}. Instead of learning the optimal or suboptimal pruned local network, each client attempts to identify the optimal adaptive sparsity method. The work of \cite{FjORD} has been groundbreaking, as they introduced Ordered Dropout to navigate this issue, achieving commendable results. It's noteworthy that our overarching framework is compatible with these methods, facilitating straightforward integration of diverse local pruning methods.
There are other noticeable methods, such as \citep{HeteroFL}, which focuses on reducing the size of each layer in neural networks. In contrast, our approach contemplates a more comprehensive layer-wise selection and emphasizes neuron-oriented sparsity.

As of our current knowledge, no existing literature directly aligns with our approach, despite its practicality and generality. Even the standard literature regarding federated network pruning appears to be rather constrained.

%%%%%%%%%%%%%%%%%%%%
\subsection{Subnetwork training}\label{sec:subnetwork_training}
%%%%%%%%%%%%%%%%%%%%
Our research aligns with the rising interest in Independent Subnetwork Training (IST), a technique that partitions a neural network into smaller components. Each component is trained in a distributed parallel manner, and the results are subsequently aggregated to update the weights of the entire model. The decoupling in IST enables each subnetwork to operate autonomously, using fewer parameters than the complete model. 
This not only diminishes the computational cost on individual nodes but also expedites synchronization.

This approach was introduced by \cite{yuan2022distributed} for networks with fully connected layers and was later extended to ResNets \cite{dun2022resist} and Graph architectures \cite{wolfe2023gist}. Empirical evaluations have consistently posited IST as an attractive strategy, proficiently melding data and model parallelism to train expansive models even with restricted computational resources.

Further theoretical insights into IST for overparameterized single hidden layer neural networks with ReLU activations were presented by \cite{liao2022convergence}. Concurrently, \cite{shulgin2023towards} revisited IST, exploring it through the lens of sketch-type compression.

% \NGD{\sout{In the context of federated settings, our research assumes heightened relevance. }}
While acknowledging the adaptation of IST to FL using asynchronous distributed dropout techniques \cite{dun2023efficient}, our approach diverges significantly from prior works. We advocate that clients should not relay the entirety of their subnetworks to the central server—both to curb excessive networking costs and to safeguard privacy. 
Moreover, our model envisions each client akin to an assembly line component: each specializes in a fraction of the complete neural network, guided by its intrinsic resources and computational prowess.

In Section~\ref{sec:federated_network_pruning} and \ref{sec:subnetwork_training}, we compared our study with pivotal existing research, focusing on federated network pruning and subnetwork training. Responding to reviewer feedback, we have broadened the scope of our related work section to include a more extensive comparison with other significant studies.
    
\subsection{Model heterogeneity} Model heterogeneity denotes the variation in local models trained across diverse clients, as highlighted in previous research~\citep{kairouz2021advances, ye2023heterogeneous}. A seminal work by \cite{smith2017federated} extended the well-known COCOA method~\citep{jaggi2014communication, ma2015adding}, incorporating system heterogeneity by randomly selecting the number of local iterations or mini-batch sizes. However, this approach did not account for variations in client-specific model architectures or sizes. Knowledge distillation has emerged as a prominent strategy for addressing model heterogeneity in Federated Learning (FL). \cite{li2019fedmd} demonstrated training local models with distinct architectures through knowledge distillation, but their method assumes access to a large public dataset for each client, a premise not typically found in current FL scenarios. Additionally, their approach, which shares model outputs, contrasts with our method of sharing pruned local models. Building on this concept, \cite{lin2020ensemble} proposed local parameter fusion based on model prototypes, fusing outputs of clients with similar architectures and employing ensemble training on additional unlabeled datasets. \cite{tan2022fedproto} introduced an approach where clients transmit the mean values of embedding vectors for specific classes, enabling the server to aggregate and redistribute global prototypes to minimize the local-global prototype distance. \cite{he2021fednas} developed FedNAS, where clients collaboratively train a global model by searching for optimal architectures, but this requires transmitting both full network weights and additional architecture parameters. Our method diverges from these approaches by transmitting only weights from a subset of neural network layers from client to server.
 
\color{black}
\section{Experimental details}\label{sec:experimental_details}
\subsection{Statistics of datasets}\label{sec:stats_of_datasets}
We provide the statistics of our adopted datasets in Table.~\ref{tab:statistics}.

\begin{table}[!htbp]
    \centering
    \resizebox{\textwidth}{!}{ % Resize the table to fit the textwidth without altering the height proportionally
    \begin{tabular}{c|c|c|c}
    \toprule
    Dataset & \# data & \# train per client & \# test per client \\
    \midrule
    EMNIST-L~\citep{EMNIST} & 48K+8K & 392 & 168 \\
    FashionMNIST~\citep{FashionMNIST} & 60K+10K & 490 & 210 \\
    CIFAR10~\citep{CIFAR} & 50K+10K & 420 & 180 \\
    CIFAR100~\citep{CIFAR} & 50K+10K & 420 & 180 \\
    \bottomrule
    \end{tabular}
    }
    \caption{Dataset statistics, with data uniformly divided among 100 clients by default.}
    \label{tab:statistics}
\end{table}

\subsection{Data distributions}\label{sec:data_distributions}
We emulated non-iid data distribution among clients using both class-wise and Dirichlet non-iid scenarios.

\begin{itemize}
    \item Class-wise: we designate fixed classes directly to every client, ensuring uniform data volume per class. 
    As specifics, EMNIST-L, FashionMNIST, and CIFAR10 assign 5 classes per client, while CIFAR100 allocates 15 classes for each client.
    \item Dirichlet: following an approach similar to \algname{FedCR}~\citep{FedCR}, we use a Dirichlet distribution over dataset labels to create a heterogeneous dataset. 
    Each client is assigned a vector (based on the Dirichlet distribution) that corresponds to class preferences, dictating how labels--and consequently images--are selected without repetition. 
    This method continues until every data point is allocated to a client. 
    The Dirichlet factor indicates the level of data non-iidness. 
    With a Dirichlet parameter of 0.5, about 80\% of the samples for each client on EMNIST-L, FashionMNIST, and CIFAR10 are concentrated in four classes. For 
    CIFAR100, the parameter is set to 0.3. 
\end{itemize}

\subsection{Network architectures}\label{sec:network_architecture}
Our primary experiments utilize four widely recognized datasets, with detailed descriptions provided in the Experiments section. For the CIFAR10/100 and FashionMNIST experiments, we opt for CNNs comprising two convolutional layers and four fully-connected layers as our standard network architecture. In contrast, for the EMNIST-L experiments, we employ a four-layer MLP architecture. The specifics of these architectures are outlined in Table~\ref{tab:network_arch}. Additionally, the default ResNet18 network architecture is selected for our layer-overlapping experiments.
    
\begin{table}[!htbp]
    \centering
    \caption{ The top figure depicts the neural network architecture employed for the CIFAR10/100 and FashionMNIST experiments. Conversely, the bottom figure illustrates the default MLP (Multi-Layer Perceptron) architecture used specifically for the EMNIST-L experiments.}\label{tab:network_arch}
    \begin{tabular}{c|c|c}
        \bf Layer Type & \bf Size & \bf \# of Params.\\ \hline
        Conv + ReLu & $5\times 5\times 64$ & 4,864 / 1,664\\
        Max Pool & $2\times 2$ & 0\\
        Conv + ReLu & $5\times 5\times 64$ & 102, 464\\
        Max Pool & $2\times 2$ & 0\\
        FC + ReLu & $1600\times 1024$ & 1,638,400\\
        FC + ReLu & $1024\times 1024$ & 1,048,576\\
        FC + ReLu & $1024\times 10/100$ & 10,240 / 102,400\\
        \bottomrule
    \end{tabular}%\caption{Applied CNN architecture.}\label{tab:cnn_arch}
    \\
    \vspace{2mm}
    \begin{tabular}{c|c|c}
       \bf Layer Type  & \bf Size & \bf \# of Params.\\ \hline
       FC + ReLu & $784\times 1024$ & 802,816\\
       FC + ReLu & $1024\times 1024$ & 1,048,576\\
       FC + ReLu & $1024\times 1024$ & 1,048,576\\
       FC & $1024\times 10$ & 10,240\\
       \bottomrule
    \end{tabular}%\caption{Applied MLP architecture.}\label{tab:mlp_arch}
\end{table}
 
\subsection{Training details}\label{sec:training_details}
Our experiments were conducted on NVIDIA A100 or V100 GPUs, depending on their availability in our cluster. The framework was implemented in PyTorch 1.4.0 and torchvision 0.5.0 within a Python 3.8 environment. Our initial code, based on \algname{FedCR}~\citep{FedCR}, was refined to include hyper-parameter fine-tuning. A significant modification was the use of an MLP network with four \texttt{FC} layers for EMNIST-L performance evaluation. We standardized the experiments to 500 epochs with a local training batch size of 48. The number of local updates was set at 10 to assess final performance. For the learning rate, we conducted a grid search, exploring a range from $10^{-5}$ to 0.1, with a fivefold increase at each step. In adapting FedCR, we used their default settings and fine-tuned the $\beta$ parameter across values ${0.0001, 0.0005, 0.001, 0.005, 0.01}$ for all datasets.

\subsection{Quantitative analysis of reduced parameters}\label{sec:quantitative_parameters}
We provide a quantitative analysis of parameter reduction across four datasets, as shown in \Cref{fig:parameter_comp}. The x-axis represents different global pruning ratios, and the y-axis indicates the number of parameters. For simplicity, we consider a scenario where, aside from the final fully-connected layer, each client trains only one additional layer, akin to the \algname{LowerB} method used in our earlier experiments. For instance, the label \texttt{FC} refers to a condition where only \texttt{FC2} and the final layer are fully trained, with other layers being pruned during server-to-client transfer and dropped in server communication.

With a constant global pruning ratio, the left part of the figure shows the total number of parameters in the locally deployed model post server-to-client pruning, while the right part illustrates the communication cost for each scenario. The numbers atop each bar indicate the relative differences between the largest and smallest elements under various conditions. Across all datasets, we note that higher global pruning ratios result in progressively smaller deployed models. For example, at a 0.5 global pruning ratio, the model size for clients training the \texttt{Conv1} layer is 57.93\% smaller than those training \texttt{FC2}. Moreover, there is a significant disparity in communication costs among clients. The ratios of communication costs are 10815 for CIFAR10, 1522.91 for CIFAR100, 13749.46 for FashionMNIST, and 30.23 for EMNIST-L. 
 
\begin{figure}[!tb]
     \centering
     \begin{subfigure}[b]{0.48\textwidth}
         \centering
         \includegraphics[width=1.0\textwidth, trim=0 0 0 0, clip]{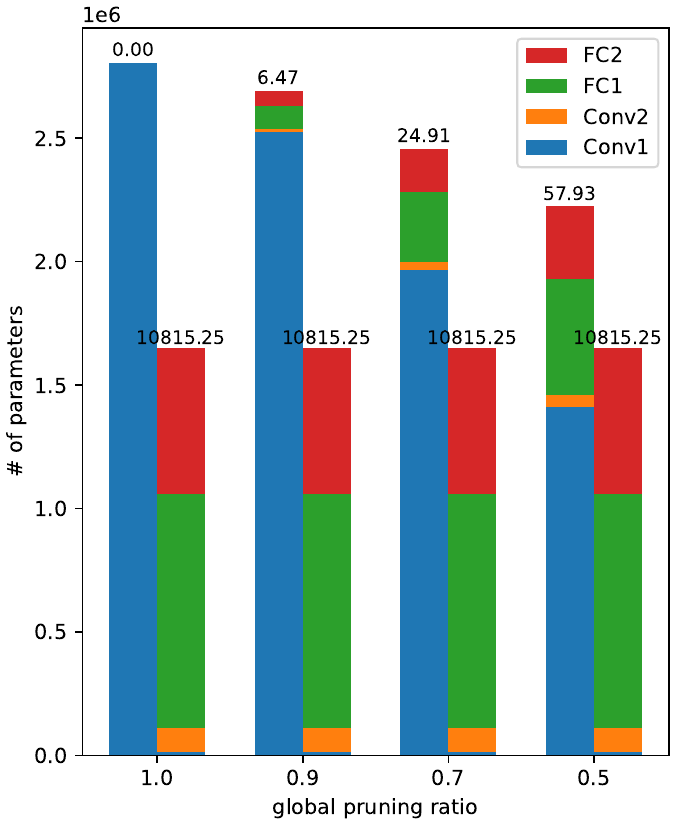}
         \caption{CIFAR10}
         % \label{fig:y equals x}
     \end{subfigure}
     \hfill
     \begin{subfigure}[b]{0.48\textwidth}
         \centering
         \includegraphics[width=1.0\textwidth, trim=0 0 0 0, clip]{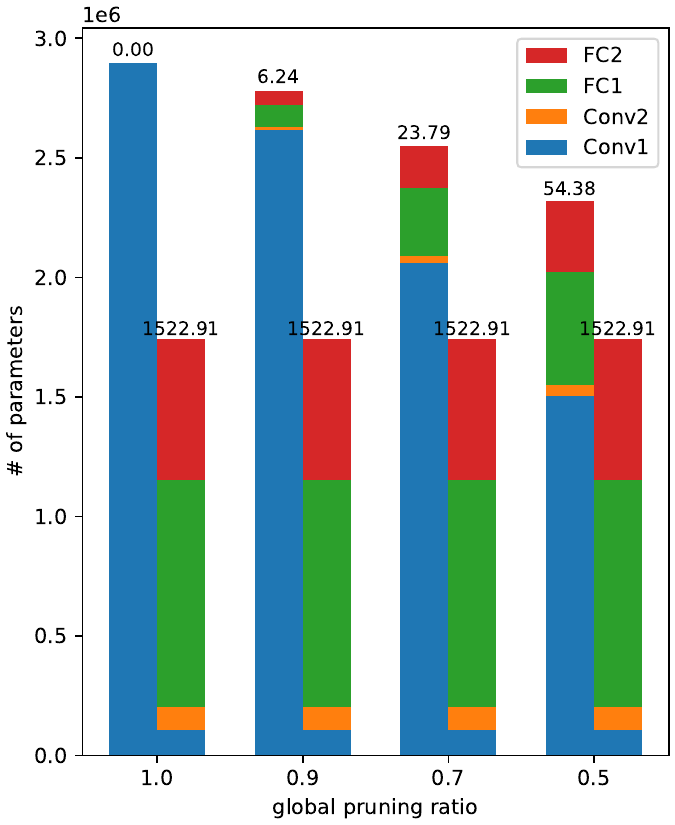}
         \caption{CIFAR100}
         % \label{fig:three sin x}
     \end{subfigure}
     \hfill
          \centering
     \begin{subfigure}[b]{0.48\textwidth}
         \centering
         \includegraphics[width=1.0\textwidth, trim=0 0 0 0, clip]{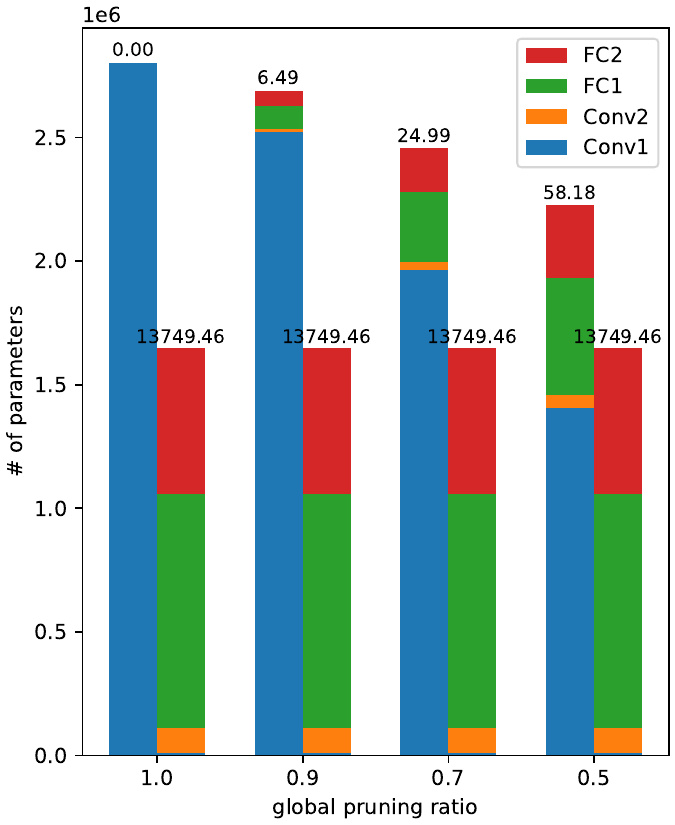}
         \caption{FashionMNIST}
         % \label{fig:y equals x}
     \end{subfigure}
     \hfill
     \begin{subfigure}[b]{0.48\textwidth}
         \centering
         \includegraphics[width=1.0\textwidth, trim=0 0 0 0, clip]{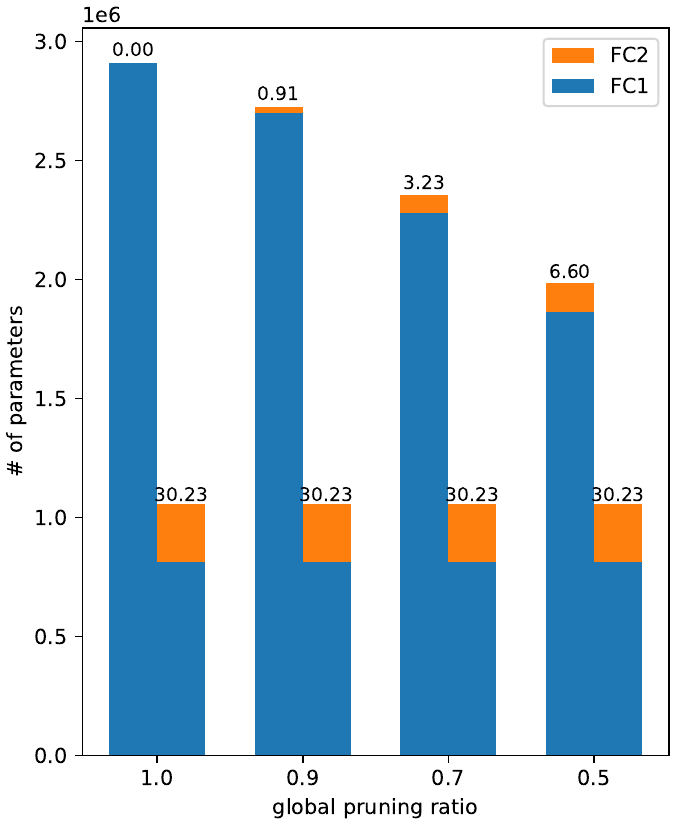}  
         \caption{EMNIST-L}
         \label{fig:parameter_comp_emnistl}
     \end{subfigure}
     \caption{
     % \NGD{What are the 2 bars representing?}
      The number of parameters across multiple layers, varying according to different global pruning ratios, spans across four distinct datasets.
     For each global pruning ratio, the left side of the bar graph shows the total number of parameters in the model after server-to-client pruning when deployed locally. Conversely, the right side details the communication cost associated with each scenario. 
     Atop each bar, we indicate the relative ratio between the layers with the largest and smallest number of parameters, \textit{i.e.,} $\mathrm{value} = \nicefrac{\mathrm{(largest - smallest)}} {\mathrm{smallest}}$. 
     For (d), since the size of parameters of FC2 and FC3 are the same, we omit plotting FC3 to avoid overlapping.} \label{fig:parameter_comp}
\end{figure}

% \newpage
% \clearpage  
 
\section{Extended theoretical analysis}
\subsection{Analysis of the general FedP3 theoretical framework}
\begin{algorithm*}[!tb]
\begin{algorithmic}[1] 
\caption{\algname{FedP3} theoretical framework} \label{alg:IST}
    \STATE \textbf{Parameters:} learning rate $\gamma>0$, number of iterations $K$, sequence of global pruning sketches $\left(\mP_1^k, \ldots, \mP_n^k\right)_{k\leq K}$, aggregation sketches $\left(\mS_1^k, \ldots, \mS_n^k\right)_{k\leq K}$; initial model $w^0 \in \mbR^d$  
    \FOR{$k = 0, 1, \cdots, K$}
        \STATE Conduct global pruning $\mP_i^k w^k$ for $i \in [n]$ and broadcast to all computing nodes
        \FOR{$i = 1, \ldots, n$ in parallel}
            \STATE Compute local (stochastic) gradient w.r.t. personalized model: $\mP_i^k \nabla f_i(\mP_i^k w^k)$
            \STATE Take (maybe multiple) gradient descent step $u_i^k = \mP_i^k w^k - \gamma \mP_i^k \nabla f_i(\mP_i^k w^k)$
            \STATE Send $v_i^k = \mS_i^k u_i^k$ to the server 
        \ENDFOR
        \STATE Aggregate received subset of layers: $w^{k+1} = \frac{1}{n} \sum_{i=1}^n v_i^k$
    \ENDFOR
\end{algorithmic}
\end{algorithm*}

We introduce the theoretical foundation of \algname{FedP3}, detailed in Algorithm~\ref{alg:IST}. Line 3 demonstrates the global pruning process, employing a biased sketch over randomized sketches $P_i$ for each client $i \in [n]$, as in \Cref{def:sketch1}. The procedure from Lines 4 to 8 details the local training methods, though we exclude further local pruning for brevity. Notably, our framework could potentially integrate various local pruning techniques, an aspect that merits future exploration.

Our approach uniquely compresses both the weights $w^k$ and their gradients $\nabla f_i(\mP^k_i w^k)$. For the sake of clarity, we assume in Line 5 that each client $i$ calculates the pruned full gradient $\mP_i^k \nabla f_i(\mP_i^k w^k)$, a concept that could be expanded to encompass stochastic gradient computations.

In alignment with Line 6, our subsequent theoretical analysis presumes that each client performs a single-step gradient descent. This assumption stems from observations that local steps have not demonstrated theoretical efficiency gains in heterogeneous environments until very recent studies, such as \cite{ProxSkip} and its extensions like \cite{ProxSkip-VR, Scafflix}, which required extra control variables not always viable in settings with limited resources.

Diverging from the method in \cite{shulgin2023towards}, our model involves explicitly sending a selected subset of layers $v_i^k$ from each client $i$ to the server. The aggregation of these layer subsets is meticulously described in Line 9.

Our expanded theoretical analysis is structured as follows: Section~\ref{sec:theory_model_aggregation} focuses on analyzing the convergence rate of our innovative model aggregation method. In Section~\ref{sec:theory_dp_aggregation}, we introduce \algname{LDP-FedP3}, a novel differential-private variant of \algname{FedP3}, and discuss its communication complexity in a local differential privacy setting. Section~\ref{sec:theory_global_pruning} then delves into the analysis of global pruning, as detailed in Algorithm~\ref{alg:IST}.
  
\subsection{Model aggregation analysis}\label{sec:theory_model_aggregation}
In this section, our objective is to examine the potential advantages of model aggregation and to present the convergence analysis of our proposed \algname{FedP3}. Our subsequent analysis adheres to the standard nonconvex optimization framework, with the goal of identifying an $\epsilon$-stationary point where:
\begin{align}\label{eqn:epsilon_stationary_point}
\ec{\sqN{\nabla f(w)}} \leq \epsilon,
\end{align}

Here, $\ec{\cdot}$ represents the expectation over the inherent randomness in $w\in \Rd$.
Moving forward, our analysis will focus primarily on the convergence rate of our innovative model aggregation strategy. To begin, we establish the smoothness assumption for each local client's model.

\begin{assumption}[Smoothness]\label{asm:smoothness}
    There exists some $L_i\geq 0$, such that for all $i\in [n]$, the function $f_i$ is $L_i$-smooth, i.e.,
    $$
    \norm{\nabla f_i(x) - \nabla f_i(y)} \leq L_i\norm{x - y}, \qquad \forall x, y \in \Rd.
    $$
\end{assumption}

This smoothness assumption is very standard for the convergence analysis~\citep{nesterov2003introductory, ghadimi2013stochastic, ProxSkip, ProxSkip-VR, li2022simple, Scafflix}. The smoothness of function $f$ is $\bar{L} =\avein L_i$, we denote $L_{\max}\eqdef \max_{i\in n} L_i$.

We demonstrate the convergence of our proposed \algname{FedP3}, with a detailed proof presented in Section~\ref{sec:proof_model_aggregation}. Here, we restate Theorem~\ref{thm:model_aggregation} for clarity:
\modelaggregationtheorem*

Next, we interpret the results. Utilizing the inequality $1+w \leq \exp(w)$ and assuming $\gamma \leq \frac{1}{\sqrt{\bar{L}L_{\max}K}}$, we derive the following:
\begin{align*}
(1+\bar{L}L_{\max} \gamma^2)^K \leq \exp(\bar{L}L_{\max} \gamma^2 K) \leq \exp(1) \leq 3.
\end{align*}

Incorporating this into the equation from Theorem~\ref{thm:model_aggregation}, we ascertain:
\begin{align*}
\min_{0\leq k\leq K-1} \ec{\sqN{\nabla f(w^k)}} \leq \frac{6}{\gamma K}\Delta_0.
\end{align*}

To ensure the right-hand side of the above equation is less than $\epsilon$, the condition becomes:
\begin{align*}
\frac{6\Delta_0}{\gamma K} \leq \epsilon \Rightarrow K\geq \frac{6\Delta_0}{\gamma \epsilon}.
\end{align*}

Given $\gamma\leq \frac{1}{\sqrt{\bar{L}L_{\max}K}}$, it follows that $K\geq \frac{36(\Delta_0)^2}{\bar{L}L_{\max}\epsilon^2} = \mathcal{O}\left(\frac{1}{\epsilon^2}\right)$.

Considering the communication cost per iteration is $n\times v = n\times \frac{d}{n} = d$, the total communication cost is:
\begin{align*}
C_{\mathrm{FedP3}} = \mathcal{O}\left(\frac{d}{\epsilon^2} \right).
\end{align*}

We compare this performance with an algorithm lacking our specific model aggregation design, namely Distributed Gradient Descent (\algname{DGD}). When \algname{DGD} satisfies Assumption~\ref{asm:abc_assumption} with $A=C=0, B=1$ as per Theorem~\ref{thm_abc_convergence}, the total iteration complexity to achieve an $\epsilon$-stationary point is $\mathcal{O}\left(\frac{1}{\epsilon}\right)$. Given that the communication cost per iteration is $nd$, the total communication cost for \algname{DGD} is:
\begin{align*}
C_{\mathrm{DGD}} = \mathcal{O}\left( \frac{nd}{\epsilon}\right).
\end{align*}

We observe that the communication cost of \algname{FedP3} is more efficient than \algname{DGD} by a factor of $\mathcal{O}(n/\epsilon)$. This is particularly advantageous in practical Federated Learning (FL) scenarios, where a large number of clients are distributed, highlighting the suitability of our method for such environments. This efficiency also opens avenues for further exploration in large language models.

Although we have demonstrated provable advantages in communication costs for large client numbers, we anticipate that our method's performance exceeds our current theoretical predictions. This expectation is based on the comparison of \algname{FedP3} and \algname{DGD} under Lemma~\ref{lem:l_smooth_bound}. For \algname{DGD}, with parameters $A=\bar{L}, B=C=0$, the iteration complexity aligns with $\mathcal{O}(\frac{1}{\epsilon^2})$, leading to a communication cost of:

\begin{align*}
C_{\mathrm{DGD}}^{\prime} = \mathcal{O}\left(\frac{nd}{\epsilon^2}\right).
\end{align*}

This indicates a significant reduction in communication costs by a factor of $n$ without additional requirements. It implies that if we could establish a tighter bound on $\sqN{\nabla f_i (w)}$, beyond the scope of Lemma~\ref{lem:l_smooth_bound}, our theoretical results could be further enhanced.

% =========================================
% =========================================
% =========================================
\subsection{Differential-private \algname{FedP3} analysis}\label{sec:theory_dp_aggregation}
The integration of gradient pruning as a privacy preservation method was first brought to prominence by \cite{zhu2019deep}. Further studies, such as \cite{huang2020privacy}, have delved into the effectiveness of DNN pruning in protecting privacy.

In our setting, we ensure that our training process focuses on extracting partial features without relying on all layers to memorize local training data. This is achieved by transmitting only a select subset of layers from the client to the server in each iteration. By transmitting fewer layers—effectively implementing greater pruning from clients to the server—we enhance the privacy-friendliness of our framework.

This section aims to provide a theoretical exploration of the "privacy-friendly" aspect of our work. Specifically, we introduce a differential-private version of our method, \algname{LDP-FedP3}, and discuss its privacy guarantees, utility, and communication cost, supported by substantial evidence and rigorous proof.

Local differential privacy is crucial in our context. We aim not only to train machine learning models with reduced communication bits but also to preserve each client's local privacy, an essential element in FL applications. Following the principles of local differential privacy (LDP) as outlined in works like \cite{andres2013geo, chatzikokolakis2013broadening, zhao2020local, li2022soteriafl}, we define two datasets ${D}$ and ${D}^\prime$ as neighbors if they differ by just one entry. We provide the following definition for LDP:

\begin{algorithm*}[!tb]
\begin{algorithmic}[1] 
\caption{Differential-Private FedP3 (\algname{LDP-FedP3})} \label{alg:DP_FedP3}
    \STATE \textbf{Parameters:} learning rate $\gamma>0$, number of iterations $K$, sequence of aggregation sketches $\left(\mS_1^k, \ldots, \mS_n^k\right)_{k\leq K}$, perturbation variance $\sigma^2$, minibatch size $b$
    \FOR{$k = 0, 1, 2 \ldots$}
        % \STATE Conduct global pruning $\mP_i^k w^k$ for $i \in [n]$ and broadcast to all computing nodes
        \STATE Server broadcasts $w^k$ to all clients
        \FOR{each client $i = 1, \ldots, n$ in parallel}
            % \STATE Compute local gradient w.r.t. personalized model: $\mP_i^k \nabla f_i(\mP_i^k w^k)$
            \STATE Sample a random minibatch $\mathcal{I}_b$ with size $b$ from lcoal dataset $D_i$
            \STATE Compute local stochastic gradient ${g}_i^k = \frac{1}{b}\sum_{j\in \gI_b} \nabla f_{i, j} (w^k)$
            % \STATE Take (maybe multiple) gradient descent step $u_i^k = \mP_i^k w^k - \gamma \mP_i^k \nabla f_i(\mP_i^k w^k)$
            \STATE Take (maybe multiple) gradient descent step $u_i^k = w^k - \gamma {g}_i^k$
            \STATE Gaussian perturbation to achieve LDP: $\tilde{u}^k_i = u_i^k + \zeta_i^k$, where $\zeta_i^k \sim \gN(\mathbf{0}, \sigma^2\vI)$
            \STATE Send $v_i^k = \mS_i^k \tilde{u}_i^k$ to the server 
        \ENDFOR
        \STATE Server aggregates received subset of layers: $w^{k+1} = \frac{1}{n} \sum_{i=1}^n v_i^k$
    \ENDFOR
\end{algorithmic}
\end{algorithm*}

\begin{definition}\label{def:ldp}
    A randomized algorithm $\mathcal{A}: \mathcal{D}\to \mathcal{F}$, where $\mathcal{D}$ is the dataset domain and $\mathcal{F}$ the domain of possible outcomes, is $(\epsilon, \delta)$-locally differentially private for client $i$ if, for all neighboring datasets ${D}_i, {D}_i^\prime\in \mathcal{D}$ on client $i$ and for all events $\mathcal{S}\in \mathcal{F}$ within the range of $\mathcal{A}$, it holds that:
    \begin{align*}
    \mathrm{Pr}{\mathcal{A}(D_i)\in \mathcal{S}} \leq e^{\epsilon} \mathrm{Pr}{\mathcal{A}(D^\prime_i) \in \mathcal{S}} + \delta.
    \end{align*}
\end{definition}

This LDP definition (Definition \ref{def:ldp}) closely resembles the original concept of $(\epsilon, \delta)$-DP \citep{dwork2014algorithmic, dwork2006calibrating}, but in the FL context, it emphasizes each client's responsibility to safeguard its privacy. This is done by locally encoding and processing sensitive data, followed by transmitting the encoded information to the server, without any coordination or information sharing among clients.

Similar to our previous analysis of \algname{FedP3}, we base our discussion here on the smoothness assumption outlined in Assumption~\ref{asm:smoothness}. For simplicity, and because our primary focus in this section is on privacy concerns, we assume uniform smoothness across all clients, i.e., $L_i \equiv L$.

Our analysis also relies on the bounded gradient assumption, which is a common consideration in differential privacy analyses:

\begin{assumption}[Bounded gradient]\label{asm:bounded_gradient}
There exists some constant $C\geq 0$, such that for all clients $i \in [n]$ and for any $x\in \Rd$, the gradient norm satisfies $\norm{\nabla f_i(x)} \leq C$.
\end{assumption}

This bounded gradient assumption aligns with standard practices in differential privacy analysis, as evidenced in works such as \citep{bassily2014private, wang2017differentially, iyengar2019towards, feldman2020private, li2022soteriafl}.

We introduce a locally differentially private version of \algname{FedP3}, termed \algname{LDP-FedP3}, with detailed algorithmic steps provided in Algorithm~\ref{alg:DP_FedP3}. This variant differs from \algname{FedP3} in Algorithm~\ref{alg:IST} primarily by incorporating the Gaussian mechanism, as per \cite{abadi2016deep}, to ensure local differential privacy (as implemented in Line 8 of Algorithm~\ref{alg:DP_FedP3}). Another distinction is the allowance for minibatch sampling per client in \algname{LDP-FedP3}. Given that our primary focus in this section is on privacy, we set aside the global pruning aspect for now, considering it orthogonal to our current analysis and not central on our privacy considerations. In Theorem~\ref{thm:convergence_dp_fedp3}, we encapsulate the following theorem:
\dpfedpconvergencetheorem*

\begin{table}[!t]
    \centering
    \caption{Comparison of communication complexity in LDP Algorithms for nonconvex problems across distributed settings with $n$ nodes.}\label{tab:results}
    \renewcommand{\arraystretch}{2}
    \resizebox{\textwidth}{!}{
    \begin{tabular}{ccc}
    \toprule
     \bf Algorithm  & \bf Privacy & \bf Communication Complexity \\ \midrule
     % Distributed DP-SRM {\small \citep{wang2023efficient}} & $(\epsilon, \delta)$-DP & $\frac{n^2m\epsilon\sqrt{d}}{\sqrt{\log(1/\delta)}}$\\ \hline
     \algname{Q-DPSGD} {\small \citep{ding2021differentially}} & $(\epsilon,\delta)$-LDP & $\frac{(1 +{n}/(m{\tilde{\sigma}^2})) m^2\epsilon^2}{d \log(1/\delta)}$\\ \midrule
     \algname{LDP SVRG/SPIDER} {\small \citep{lowy2023private}} & $(\epsilon, \delta)$-LDP & $\frac{n^{3/2}m\epsilon\sqrt{d}}{\sqrt{\log(1/\delta)}}$\\ \midrule
     \algname{SDM-DSGD} {\small \citep{zhang2020private}} & $(\epsilon, \delta)$-LDP & $\frac{{n^{7/2}} m\epsilon\sqrt{d}}{{(1+\omega)^{3/2}}\sqrt{\log(1/\delta)}} + \frac{{n} m^2\epsilon^2}{{(1+\omega)}\log(1/\delta)}$\\ \midrule
     \algname{CDP-SGD} {\small \citep{li2022soteriafl}} & $(\epsilon, \delta)$-LDP & $\frac{{n^{3/2}} m\epsilon\sqrt{d}}{{(1+\omega)^{3/2}}\sqrt{\log(1/\delta)}} + \frac{{n} m^2\epsilon^2}{{(1+\omega)}\log(1/\delta)}$\\ \midrule
     \cellcolor{bgcolor4} \algname{LDP-FedP3} (Ours) & \cellcolor{bgcolor4}$(\epsilon, \delta)$-LDP & \cellcolor{bgcolor4} $\frac{ m\epsilon\sqrt{d}}{\sqrt{\log(1/\delta)}} + \frac{m^2\epsilon^2}{\log(1/\delta)}$\\ \bottomrule
    \end{tabular}}
\end{table}

In Section~\ref{sec:proof_dp_fedp3_convergence_analysis}, we provide the proof for our analysis. This section primarily focuses on analyzing and comparing our results with existing literature. Our proof pertains to local differentially-private Stochastic Gradient Descent (SGD). We note that \cite{li2022soteriafl} offered a proof for \algname{CDP-SGD} using a specific set of compressors. However, our chosen compressor does not fall into that category, as discussed more comprehensively in \cite{szlendak2021permutation}. Considering the Rand-t compressor with $t=d/n$, it's established that:
\begin{align*}
\ec{\sqN{\mathcal{R}_t(w) - w}} \leq \omega \sqN{w}, \quad \text{where} \quad \omega = \frac{d}{t} - 1 = n - 1.\nonumber
\end{align*}

Setting the same $K$ and $\gamma$ and applying Theorem 1 from \cite{li2022soteriafl}, we obtain:
\begin{align*}
\frac{1}{K}\sum_{k=0}^{K-1} \ec{\sqn{\nabla f(w^t)}} \leq \frac{5C\sqrt{Lcd\log(1/\sigma)}}{m\epsilon} = \mathcal{O}\left(\frac{C\sqrt{Ld\log(1/\delta)}}{m\epsilon} \right),
\end{align*}

which aligns with our theoretical analysis. Interestingly, we observe that our bound is tighter by a factor of $2/5$, indicating a more efficient performance in our approach.

We also compare our proposed \algname{LDP-FedP3} with other existing algorithms in Algorithm~\ref{tab:results}. An intriguing finding is that our method's efficiency does not linearly increase with a higher number of clients, denoted as $n$. Notably, our communication complexity remains independent of $n$. This implies that in practical scenarios with a large $n$, our communication costs will not escalate. We then focus on methods with a similar structure, namely, \algname{SDM-DSGD} and \algname{CDP-SGD}. For these, the communication cost comprises two components. Considering a specific case, \texttt{Rand-t}, where $t$ is deliberately set to $d/n$, we derive $\omega = \nicefrac{d}{t} - 1 = n - 1$. This results in a communication complexity on par with \algname{CDP-SGD}, but significantly more efficient than \algname{SDM-DSGD}. Moreover, it's important to note that the compressor in \algname{LDP-FedP3} differs from that in \algname{CDP-SGD}. Our analysis introduces new perspectives and achieves comparable communication complexity to other well-established results.

% =========================================
% =========================================
% =========================================

\subsection{Global pruning analysis}\label{sec:theory_global_pruning}
Our methodology relates to independent subnetwork training (IST) but introduces distinctive features such as personalization and explicit layer-level sampling for aggregation. IST, although conceptually simple, remains underexplored with only limited studies like \cite{liao2022convergence}, which provides theoretical insights for overparameterized single hidden layer neural networks with ReLU activations, and \cite{shulgin2023towards}, which revisits IST from the perspective of sketch-type compression. In this section, we delve into the nuances of global pruning as applied in Algorithm~\ref{alg:IST}.

For our analysis here, centered on global pruning, we simplify by assuming that all personalized model aggregation sketches $\mS_i$ are identical matrices, that is, $\mS_i = \vI$. This simplification, however, does not trivialize the analysis as the pruning of both gradients and weights complicates the convergence analysis. Additionally, we adhere to the design of the global pruning sketch $\mP$ as per Definition~\ref{def:sketch1}, which results in a biased estimation, i.e., $\mathbb{E}[\mP_i w]\neq w$. Unbiased estimators, such as \texttt{Rand-t} that operates over coordinates, are more commonly studied and offer several advantages in theoretical analysis.

For \texttt{Rand-t}, consider a random subset $\mathcal{S}$ of $[d]$ representing a proper sampling with probability $c_j\eqdef \mathrm{Prob}(j\in \mathcal{S}) > 0$ for every $j\in [d]$. $\mathcal{R}_t \eqdef \Diag(r_s^1, r_s^2, \cdots, r_s^d)$, where $r_s^j = \nicefrac{1}{c_j}$ if $j\in \mathcal{S}$ and $0$ otherwise. In contrast to our case, the value on each selected coordinate in \texttt{Rand-t} is scaled by the probability $p_i$, equivalent to $|\mathcal{S}|/d$. However, the implications of using a biased estimator like ours are not as well understood.

Our theoretical focus is on FL in the context of empirical risk minimization, formulated in (\ref{eqn:objective1}) within quadratic problem frameworks. This setting involves symmetric matrices $\mL_i$, as defined in the following equation:

\begin{equation}\label{eqn:quadratic_objective}
\begin{aligned}
f(w) = \avein f_i(w), \quad \text{where} \quad f_i(w) \equiv \frac{1}{2} w^\top \mL_i w - w^\top b_i.
\end{aligned}
\end{equation}

While Equation~\ref{eqn:quadratic_objective} simplifies the loss function, the quadratic problem paradigm is extensively used in neural network analysis \citep{zhang2019algorithmic, zhu2022quadratic, shulgin2023towards}. Its inherent complexity provides valuable insights into complex optimization algorithms \citep{arjevani2020tight, cunha2022only, goujaud2022super}, thereby serving as a robust model for both theoretical examination and practical applications. In this framework, $f(x)$ is $\moL$-smooth, and $\nabla f(x) = \moL x - \ob$, where $\moL = \avein \mL_i$, and $\ob \eqdef \avein b_i$.

At this juncture, we introduce a fundamental assumption commonly applied in the theoretical analysis of coordinate descent-type methods.

\begin{assumption}[Matrix Smoothness]\label{ass:matrix-smoothness}
Consider a differentiable function \( f: \mathbb{R}^d \rightarrow \mathbb{R} \). We say that \( f \) is \( \mathbf{L} \)-smooth if there exists a positive semi-definite matrix \( \mathbf{L} \in \mathbb{R}^{d \times d} \) satisfying the following condition for all \( x, h \in \mathbb{R}^d \):
\begin{equation}\label{eq:L-matrix-smooth}
f(x + h) \leq f(x) + \langle \nabla f(x), h \rangle + \frac{1}{2} \langle \mathbf{L}h, h \rangle.
\end{equation}
\end{assumption}

The classical \( L \)-smoothness condition, where \( \mathbf{L} = L \cdot \mathbf{I} \), is a particular case of Equation~\eqref{eq:L-matrix-smooth}. The concept of matrix smoothness has been pivotal in the development of gradient sparsification methods, particularly in scenarios optimizing under communication constraints, as shown in \cite{safaryan2021smoothness, wang2022theoretically}. We then present our main theory under the interpolation regime for a quadratic problem (\ref{eqn:quadratic_objective}) with \( b_i \equiv 0 \), as detailed in Theorem~\ref{thm:global_pruning}. 

We first provide the theoretical analysis of biased global pruning as implemented in Algorithm~\ref{alg:DP_FedP3}. To the best of our knowledge, biased gradient estimators have rarely been explored in theoretical analysis. However, our approach of intrinsic submodel training or global pruning is inherently biased. \cite{shulgin2023towards} proposed using the Perm-K~\citep{szlendak2021permutation} as the global pruning sketch. Unlike their approach, which assumes a pruning connection among clients, our method considers the biased Rand-K compressor over coordinates. 

\begin{theorem}[Global pruning]\label{thm:global_pruning}
    In the interpolation regime for a quadratic problem~(\ref{eqn:quadratic_objective}) with \( \moL \succ 0 \) and \( b_i \equiv 0 \), let \( \moL^k \eqdef \avein \mP_i^k \moL \mP_i^k \). Assume that \( \moW \eqdef \frac{1}{2}\mathbb{E}[\mP^k \moL\moB^k + \mP^k\moB^k \moL] \succeq 0 \) and there exists a constant \( \theta >0 \) such that \( \mathbb{E}[\moB^k \moL\moB^k]\preceq \theta \moW \). Also, assume \( f(\mP^kw^k)\leq (1+\gamma^2 h) f(w^k) - f^{\inf} \) for some \( h>0 \). Fixing the number of iterations \( K \) and choosing the step size \( \gamma \in \min\left\{\sqrt{\frac{\log2}{hK}}, \frac{1}{\theta} \right\} \), the iterates satisfy:
    $$
    \mathbb{E}\left[\|\nabla f(w^k)\|^2_{\moL^{-1}\moW\moL^{-1}}\right] \leq  \frac{4\Delta_0}{\gamma K},
    $$
    where $\Delta_0 = f(w^0) - f^{\inf}$.
\end{theorem}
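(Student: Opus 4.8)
\textbf{Proof proposal for Theorem~\ref{thm:global_pruning}.}

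The plan is to follow the standard ``descent lemma + telescoping'' template for biased-compressed gradient methods, adapted to the quadratic setting where the loss value itself acts as a potential function (thanks to $b_i \equiv 0$, so $f(w) = \frac12 w^\top \moL w \geq 0$ and $f^{\inf} = 0$ at the interpolation point). First I would write one iteration of Algorithm~\ref{alg:IST} in the simplified setting $\mS_i = \mI$: the server forms $w^{k+1} = \frac1n \sum_i \big(\mP_i^k w^k - \gamma \mP_i^k \nabla f_i(\mP_i^k w^k)\big) = \mP^k w^k - \gamma \mP^k \nabla f(\mP^k w^k)$, where I abbreviate $\mP^k \eqdef \frac1n\sum_i \mP_i^k(\cdot)$ acting appropriately and use that the gradients are quadratic so the averaging passes through. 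Because $\nabla f_i(\mP_i^k w^k) = \mL_i \mP_i^k w^k$, the aggregated update is exactly a matrix-weighted gradient step on the pruned point, and this is what lets the quadratic structure do the heavy lifting.

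Next I would invoke matrix smoothness (Assumption~\ref{ass:matrix-smoothness}) with the update direction $h = w^{k+1} - \mP^k w^k = -\gamma \mP^k \nabla f(\mP^k w^k)$, expanding $f(w^{k+1}) \leq f(\mP^k w^k) + \langle \nabla f(\mP^k w^k), h\rangle + \frac12 \langle \moL h, h\rangle$. Then I would take conditional expectation over the sketch randomness at step $k$: the cross term produces $-\gamma \,\mathbb{E}[\langle \nabla f(\mP^k w^k), \mP^k \nabla f(\mP^k w^k)\rangle]$ and the quadratic term produces $\frac{\gamma^2}{2}\mathbb{E}[\langle \moL \mP^k g, \mP^k g\rangle]$ with $g = \nabla f(\mP^k w^k)$. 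Here the hypotheses $\moW \eqdef \frac12\mathbb{E}[\mP^k\moL\moB^k + \mP^k\moB^k\moL]\succeq 0$ and $\mathbb{E}[\moB^k\moL\moB^k]\preceq\theta\moW$ are the precise statements needed: the first controls the cross term from below by $\langle \moW \,\text{(something)}, \cdot\rangle$, and the second lets the step $\gamma\leq 1/\theta$ absorb the second-order term into a fraction of the first, yielding a bound of the form $\mathbb{E}[f(w^{k+1}) \mid \mathcal{F}^k] \leq f(\mP^k w^k) - \tfrac{\gamma}{2}\,\|\nabla f(w^k)\|^2_{\moL^{-1}\moW\moL^{-1}}$ up to identifying $g$ at $\mP^k w^k$ with $\nabla f(w^k)$ via $\moL^{-1}$-reweighting. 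The last remaining piece is the ``approximate monotonicity under pruning'' hypothesis $f(\mP^k w^k) \leq (1+\gamma^2 h)f(w^k) - f^{\inf} = (1+\gamma^2 h)f(w^k)$, which replaces $f(\mP^k w^k)$ by $(1+\gamma^2 h)f(w^k)$, so that after defining $\Delta_k \eqdef f(w^k)$ one gets $\mathbb{E}[\Delta_{k+1}] \leq (1+\gamma^2 h)\Delta_k - \tfrac{\gamma}{2}\mathbb{E}[\|\nabla f(w^k)\|^2_{\moL^{-1}\moW\moL^{-1}}]$.

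Finally I would unroll this recursion. Using $(1+\gamma^2 h)^K \leq \exp(\gamma^2 h K) \leq \exp(\log 2) = 2$ with the choice $\gamma \leq \sqrt{\log 2/(hK)}$, the multiplicative factors are uniformly bounded by $2$; telescoping the inequality from $k=0$ to $K-1$ and dropping the nonnegative $\mathbb{E}[\Delta_K]$ gives $\tfrac{\gamma}{2}\sum_{k=0}^{K-1}\mathbb{E}[\|\nabla f(w^k)\|^2_{\moL^{-1}\moW\moL^{-1}}] \leq 2\Delta_0$, and dividing by $\gamma K/2$ — after using $\min_k \leq$ average — produces exactly $\mathbb{E}[\|\nabla f(w^k)\|^2_{\moL^{-1}\moW\moL^{-1}}] \leq 4\Delta_0/(\gamma K)$. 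The main obstacle I anticipate is the bookkeeping in the second paragraph: correctly matching the weighted norm $\|\cdot\|_{\moL^{-1}\moW\moL^{-1}}$ to the cross term, which requires carefully tracking how the $\moL$ appearing in the gradient $\mL_i\mP_i^k w^k$ interacts with the $\moW$ defined through $\mP^k$ and $\moB^k = \mI - \mP^k$, and verifying that the quantity $\langle \nabla f(\mP^k w^k), \mP^k \nabla f(\mP^k w^k)\rangle$ in expectation really dominates $\|\nabla f(w^k)\|^2_{\moL^{-1}\moW\moL^{-1}}$ rather than some other reweighting; getting the $\moB^k$ versus $\mI$ identity right in the decomposition $\mathbb{E}[\mP^k\moL + \moL\mP^k] = 2\moL - \mathbb{E}[\moB^k\moL+\moL\moB^k]$ is where a factor or a sign could easily slip.
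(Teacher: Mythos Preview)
Your high-level template (descent lemma at the pruned point, absorb the second-order term via $\gamma\le 1/\theta$, then a weighted telescope controlled by $(1+\gamma^2 h)^K\le 2$) is exactly the paper's route, and your telescoping argument is a perfectly clean variant of the paper's Lemma~\ref{lem:weighted_recursion}. The gap is in the bookkeeping you yourself flagged, but it is not just a sign issue: you have misidentified what $\moB^k$ is.

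Concretely, the aggregated update is
\[
w^{k+1}=\frac{1}{n}\sum_i\big(\mP_i^k w^k-\gamma \mP_i^k\nabla f_i(\mP_i^k w^k)\big)=\mP^k w^k-\gamma\,\moB^k w^k,
\qquad \moB^k\eqdef\frac{1}{n}\sum_i \mP_i^k \mL_i \mP_i^k,
\]
because $\nabla f_i(\mP_i^k w^k)=\mL_i\mP_i^k w^k$. Your claimed simplification $\frac1n\sum_i \mP_i^k\nabla f_i(\mP_i^k w^k)=\mP^k\nabla f(\mP^k w^k)$ is false in general (averaging does \emph{not} pass through the product $\mP_i^k\mL_i\mP_i^k$), and $\moB^k$ is \emph{not} the complementary projector $\mI-\mP^k$, so the identity $\mathbb{E}[\mP^k\moL+\moL\mP^k]=2\moL-\mathbb{E}[\moB^k\moL+\moL\moB^k]$ you plan to use is meaningless here. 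With the correct $\moB^k$, matrix smoothness gives
\[
f(w^{k+1})\le f(\mP^k w^k)-\gamma\,(w^k)^\top\mP^k\moL\,\moB^k w^k+\tfrac{\gamma^2}{2}(w^k)^\top\moB^k\moL\,\moB^k w^k,
\]
and after symmetrizing the bilinear form and taking expectation, the cross term is \emph{exactly} $(w^k)^\top\moW w^k$ by definition of $\moW$; there is nothing to ``dominate''. The reweighting to $\|\nabla f(w^k)\|^2_{\moL^{-1}\moW\moL^{-1}}$ is then the one-line substitution $w^k=\moL^{-1}\nabla f(w^k)$. With your incorrect update direction $h=-\gamma\mP^k\nabla f(\mP^k w^k)$, the cross term would instead involve $(w^k)^\top\mP^k\moL\,\mP^k\moL\,\mP^k w^k$, which does not produce $\moW$ and does not match the hypothesis $\mathbb{E}[\moB^k\moL\moB^k]\preceq\theta\moW$ for the quadratic term either. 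Fixing this one identification (and dropping the $\mI-\mP^k$ decomposition entirely) makes the rest of your outline go through verbatim.
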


By employing the definition of $\gamma$, we demonstrate that the iteration complexity is $\mathcal{O}(1/\epsilon^2)$.
Compared with the analysis in~\cite{shulgin2023towards}, we allow personalization and do not constrain the global pruning per client to be dependent on other clients. Global pruning is essentially a biased estimator over the global model weights, a concept not widely understood. Our theorem provides insightful perspectives on the convergence of global pruning.

Our theory could also extend to the general case by applying the rescaling trick from Section 3.2 in \cite{shulgin2023towards}. This conversion of the biased estimator to an unbiased one leads to a general convergence theory. However, this is impractical for realistic global pruning analysis, as it involves pruning the global model without altering each weight's scale. Given that IST and biased gradient estimators are relatively new in theoretical analysis, we hope our analysis could provide some insights.

\section{Missing proofs}
\subsection{Proof of Theorem \ref{thm:model_aggregation}}\label{sec:proof_model_aggregation}
Building on the smoothness assumption of $L_i$ outlined in Assumption~\ref{asm:smoothness}, the following lemma is established:

\begin{lemma}\label{lem:l_smooth_bound}
    Given that a function $f_i$ satisfies Assumption~\ref{asm:smoothness} for each $i\in [n]$, then for any $w\in \Rd$, it holds that 
    \begin{align}\label{eqn:l_smooth_bound}
        \sqN{\nabla f_i(w)} \leq 2L_i (f_i(w) - f^{\inf}).
    \end{align}
\end{lemma}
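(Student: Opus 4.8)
\textbf{Proof proposal for Lemma~\ref{lem:l_smooth_bound}.}

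The plan is to invoke the standard consequence of $L_i$-smoothness known as the ``gradient norm squared is bounded by the suboptimality gap'' inequality, and then account for the fact that here $f^{\inf}$ is the infimum of $f$ (the average), not necessarily of $f_i$. First I would recall the classical descent-type bound: for an $L_i$-smooth (possibly nonconvex) function $f_i$, applying Assumption~\ref{asm:smoothness} in its quadratic-upper-bound form $f_i(y) \leq f_i(x) + \langle \nabla f_i(x), y - x\rangle + \frac{L_i}{2}\sqN{y-x}$ with the specific choice $y = x - \frac{1}{L_i}\nabla f_i(x)$ gives $f_i(y) \leq f_i(x) - \frac{1}{2L_i}\sqN{\nabla f_i(x)}$. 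Rearranging yields $\sqN{\nabla f_i(x)} \leq 2L_i\big(f_i(x) - f_i(y)\big) \leq 2L_i\big(f_i(x) - f_i^{\inf}\big)$, where $f_i^{\inf} \eqdef \inf_{z} f_i(z)$.

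The next step is to connect $f_i^{\inf}$ to the quantity $f^{\inf}$ appearing in the statement. Here I would note that in the convergence analysis of Algorithm~\ref{alg:IST} one typically assumes (implicitly or explicitly, e.g. via an interpolation-type or lower-bound condition) that each local infimum dominates the global infimum, i.e. $f_i^{\inf} \geq f^{\inf}$; indeed $f^{\inf} = \inf_z \frac1n\sum_j f_j(z) \geq \frac1n\sum_j f_j^{\inf}$, and under the standard working assumption that the $f_i$ share a common lower bound equal to $f^{\inf}$ (or that $f^{\inf} \leq f_i^{\inf}$ for all $i$, which is the relevant case here since $f^{\inf}$ is used as a uniform floor), we get $f_i(w) - f^{\inf} \geq f_i(w) - f_i^{\inf}$. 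Combining with the previous display gives
\begin{equation*}
\sqN{\nabla f_i(w)} \leq 2L_i\big(f_i(w) - f_i^{\inf}\big) \leq 2L_i\big(f_i(w) - f^{\inf}\big),
\end{equation*}
which is exactly \eqref{eqn:l_smooth_bound}.

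The main obstacle, and the only subtle point, is precisely this identification of $f^{\inf}$ with a valid lower bound for each $f_i$: if one reads $f^{\inf}$ strictly as $\inf f$, the inequality $f_i^{\inf} \geq f^{\inf}$ is not automatic for arbitrary heterogeneous $f_i$ (a single $f_i$ could dip below the average's infimum). I expect the cleanest fix is to state the lemma under the convention already used elsewhere in the chapter — namely that $f^{\inf}$ serves as a common lower bound, equivalently $f_i(w) \geq f^{\inf}$ for all $w$ and all $i$ — in which case the descent-lemma argument above goes through verbatim. If instead the intended reading is $f^{\inf} = \inf f_i$ for the relevant $i$ (as is sometimes the case when the $f_i$ are shifted to share a minimizer in the analysis), the proof is immediate from the first paragraph with no further assumption. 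Either way, the substantive content is the one-line consequence of $L_i$-smoothness, and the rest is bookkeeping on the reference point.
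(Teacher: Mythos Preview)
Your proposal is correct and takes essentially the same approach as the paper: the paper's proof also sets $w' = w - \frac{1}{L_i}\nabla f_i(w)$, applies the $L_i$-smoothness upper bound to get $f_i(w') \leq f_i(w) - \frac{1}{2L_i}\sqN{\nabla f_i(w)}$, and then invokes $f^{\inf} \leq f_i(w')$ to conclude. The subtlety you flag---that $f^{\inf} = \inf f$ need not lower-bound each individual $f_i$---is real, and the paper simply asserts $f^{\inf} \leq f_i(w')$ without comment; so your more careful discussion of this point is a genuine improvement over the paper's presentation, even though the underlying argument is identical.
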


\begin{proof}
    Consider $w^\prime = w - \frac{1}{L_i} \nabla f_i(w)$. By applying the $L_i$-smoothness condition of $f$ as per Assumption~\ref{asm:smoothness}, we obtain
    \begin{align*}
        f_i(w^\prime) &\leq f_i(w) + \langle \nabla f_i(w), w^\prime - w \rangle + \frac{L_i}{2}\|\nabla f_i(w)\|^2.
    \end{align*}
    Taking into account that $f^{\inf} \leq f_i(w^\prime)$, it follows that
    \begin{align*}
        f^{\inf} &\leq f_i(w^\prime) \\
        &\leq f_i(w) - \frac{1}{L_i}\|\nabla f_i(w)\|^2 + \frac{1}{2L_i}\|\nabla f_i(w)\|^2 \\
        &= f_i(w) - \frac{1}{2L_i}\|\nabla f_i(w)\|^2.
    \end{align*}
    Rearranging the terms yields the claimed result.
\end{proof}

Since in this section, we are primarily interested in exploring the convergence of our novel model aggregation design, we set $\mP_i^k \equiv \vI$ for all $i\in [n]$ and $k\in [K]$. Our analysis focuses on exploring the characteristics of $\mS$, which leads to the following theorem.

By the definition of model aggregation sketches in Definition~\ref{def:sketch2}, we have $\avein \mS_i = \vI$. Thus, the next iterate can be represented as 
\begin{align}
    w^{k+1} &= \avein \mS_i^k (w^k - \gamma \nabla f_i(w^k))\nonumber\\
    &= \avein \mS_i^k w^k - \gamma \underbrace{\avein \mS_i^k \nabla f_i(w^k)}_{g^k}\label{eqn:next_iterate}\\
    &= w^k - \gamma g^k.\nonumber
\end{align}

Bounding $g^k$ is a crucial part of our analysis. To align with existing works on non-convex optimization, numerous critical assumptions are considered. Extended reading on this can be found in \cite{khaled2020better}. Here, we choose the weakest assumption among all those listed in \cite{khaled2020better}.

\begin{assumption}[ABC Assumption]\label{asm:abc_assumption}
    For the second moment of the stochastic gradient, it holds that 
    \begin{align}\label{eqn:abc_assumption}
        \mathbb{E}\left[\|\mathbf{g}(w)\|^2\right] \leq 2A(f(w) - f^{\inf}) + B\|\nabla f(w)\|^2 + C,
    \end{align}
    for certain constants $A, B, C \geq 0$ and for all $w\in \mathbb{R}^d$.
\end{assumption}

Note that in order to accommodate heterogeneous settings, we assume a localized version of Assumption~\ref{asm:abc_assumption}. Specifically, each $g_i^k \equiv \mS_i^k \nabla f_i(w^k)$ is bounded for some constants $A_i, B_i, C_i \geq 0$ and all $w^k \in \mathbb{R}^d$.

\begin{lemma}\label{lem:individualized_abc}
    The $g^k$ defined in Eqn.~\ref{eqn:next_iterate} satisfies Assumption~\ref{asm:abc_assumption} with $A=L_{\max}$, $B=C = 0$.
\end{lemma}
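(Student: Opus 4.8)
\textbf{Proof plan for Lemma~\ref{lem:individualized_abc}.}

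The plan is to bound $\ec{\sqN{g^k}}$ where $g^k = \avein \mS_i^k \nabla f_i(w^k)$ and the sketches $\mS_i^k$ arise from a random permutation as in Definition~\ref{def:sketch2}. First I would rewrite $g^k$ using the fact that $\mS_i^k$ is (up to the scaling factor $n$) a sum of rank-one projectors $e_{\pi_j}e_{\pi_j}^\top$ over a block of indices associated with client $i$, and that the blocks for distinct clients are disjoint and together exhaust $[d]$. Because the sketches are built from a \emph{single} shared permutation, $\avein \mS_i^k$ telescopes to $n \cdot \frac{1}{n}\sum_{j=1}^d e_{\pi_j}e_{\pi_j}^\top = \vI$ (this is precisely the identity $\avein \mS_i = \vI$ already noted in the excerpt, hence $\ec{g^k \mid w^k}$-type manipulations are available). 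The key structural consequence is that $g^k$ is, coordinate-wise, a selection that assigns each coordinate $r\in[d]$ to exactly one client — namely the client whose block contains $\pi^{-1}(r)$ — and reads off the $r$-th coordinate of that client's gradient $\nabla f_i(w^k)$, scaled so the averaging gives weight one.

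Next I would compute the second moment. Writing $g^k_r = \sum_{i=1}^n \mathbf{1}_{\{r \in \text{block}_i\}}\, [\nabla f_i(w^k)]_r$ (exactly one indicator is nonzero for each $r$), we get $\sqN{g^k} = \sum_{r=1}^d \big([\nabla f_{i(r)}(w^k)]_r\big)^2$ where $i(r)$ is the random client owning coordinate $r$. Taking expectation over the permutation, each coordinate $r$ is assigned to client $i$ with probability $1/n$ (by symmetry of the uniform permutation and equal block sizes), so
\begin{align*}
\ec{\sqN{g^k}} = \sum_{r=1}^d \frac{1}{n}\sum_{i=1}^n \big([\nabla f_i(w^k)]_r\big)^2 = \frac{1}{n}\sum_{i=1}^n \sqN{\nabla f_i(w^k)}.
\end{align*}
Then I would apply Lemma~\ref{lem:l_smooth_bound}, which gives $\sqN{\nabla f_i(w^k)} \leq 2L_i\big(f_i(w^k) - f^{\inf}\big) \leq 2L_{\max}\big(f_i(w^k) - f^{\inf}\big)$, and average over $i$: since $f = \avein f_i$, we obtain $\ec{\sqN{g^k}} \leq 2L_{\max}\big(f(w^k) - f^{\inf}\big)$. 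Comparing with \eqref{eqn:abc_assumption}, this is exactly Assumption~\ref{asm:abc_assumption} with $A = L_{\max}$, $B = 0$, $C = 0$, as claimed.

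The main obstacle I anticipate is the bookkeeping in the second step: carefully justifying that the permutation-based sketches partition the coordinates into client-owned blocks with the correct normalization, and that under the uniform permutation each coordinate lands in a given client's block with probability exactly $1/n$ (this uses $d = sn$ and the uniform-size block structure, plus the layer-wise divisibility conventions from Definition~\ref{def:sketch2}, which only affect \emph{which} coordinates go together, not the marginal assignment probability). Once the marginal is pinned down, the rest is the routine chain $\ec{\sqN{g^k}} = \frac{1}{n}\sum_i \sqN{\nabla f_i(w^k)} \leq 2L_{\max}(f(w^k)-f^{\inf})$ via Lemma~\ref{lem:l_smooth_bound}. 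A minor point to handle cleanly is that Lemma~\ref{lem:l_smooth_bound} is stated with $f^{\inf} = \inf_x f(x)$ as the reference, whereas here it is applied to each $f_i$; I would either invoke it with $f^{\inf}_i \eqdef \inf_x f_i(x) \geq f^{\inf}$ (no: wrong direction) — more precisely, the proof of Lemma~\ref{lem:l_smooth_bound} only uses $f^{\inf} \leq f_i(w')$, so it holds verbatim for $f_i$ with \emph{its own} infimum, and since $f_i$ is bounded below (inherited from the global assumption / smoothness plus boundedness of $f$) and $\inf_x f_i(x)$ is the natural constant, I would absorb this into the definition, noting that replacing $f^{\inf}_i$ by the smaller global $f^{\inf}$ only weakens the bound in the needed direction.
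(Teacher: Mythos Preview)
Your proposal is correct and follows essentially the same route as the paper: establish $\ec{\sqN{g^k}} = \avein \sqN{\nabla f_i(w^k)}$, then apply Lemma~\ref{lem:l_smooth_bound} and bound $L_i$ by $L_{\max}$. The only difference is that the paper obtains the second-moment identity by (implicitly) invoking the permutation-sketch result of Theorem~\ref{thm:unbiased_second_moment_aggregation_sketch} (orthogonality of the $\mS_i$ across clients together with $\ec{\sqN{\mS_i x}} = n\sqN{x}$), whereas you derive the same equality directly via the coordinate-wise marginal assignment probability $1/n$.
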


\begin{proof}
    The proof is as follows:
    \begin{align}
    \mathbb{E}_k\left[\|g^k\|^2\right] &= \mathbb{E}_k\left[\|\avein S_i\nabla f_i(w^k)\|^2\right] \nonumber \\
    &= \avein \|\nabla f_i(w^k)\|^2 \nonumber \\
    &\leq \avein 2L_i(f_i(w^k) - f^{\inf}) \label{eqn:0020}\\
    &\leq 2L_{\max} (f(w^k) - f^{\inf}),\nonumber
    \end{align}
    where Equation~\ref{eqn:0020} follows from Lemma~\ref{lem:l_smooth_bound}. 
\end{proof}

We also recognize certain characteristics of the unbiasedness and upper bound of model aggregation sketches, as elaborated in Theorem~\ref{thm:unbiased_second_moment_aggregation_sketch}.

\begin{theorem}[Unbiasedness and Upper Bound of Model Aggregation Sketches]\label{thm:unbiased_second_moment_aggregation_sketch}
    For any vector $w \in \mathbb{R}^d$, the model aggregation sketch $\mS_i$, for each $i \in [n]$, is unbiased, meaning $\mathbb{E}[\mS_i w] = w$. Moreover, for any set of vectors $y_1, y_2, \ldots, y_n \in \mathbb{R}^d$, the following inequality is satisfied:
    \begin{align*}
        \mathbb{E}\left[\left\|\avein \mS_i y_i\right\|^2\right] \leq \avein \left\|y_i\right\|^2.
    \end{align*}
\end{theorem}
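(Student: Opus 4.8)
\textbf{Proof proposal for Theorem~\ref{thm:unbiased_second_moment_aggregation_sketch}.}

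The plan is to establish the two claims separately, both relying on the explicit structure of the permutation-based sketch $\mS_i \eqdef n\sum_{j=s(i-1)+1}^{si} e_{\pi_j}e_{\pi_j}^\top$ from Definition~\ref{def:sketch2}, where $\pi$ is a uniformly random permutation of $[d]$ and $d = sn$. First I would record the basic combinatorial fact driving everything: for a uniformly random permutation $\pi$, each index $\ell\in[d]$ lands in the block $\{s(i-1)+1,\dots,si\}$ with probability exactly $s/d = 1/n$, and moreover for any two \emph{distinct} indices $\ell\neq m$, the events ``$\ell$ in block $i$'' and ``$m$ in block $i$'' are negatively correlated; in fact, since the $n$ blocks partition $[d]$, for distinct $\ell,m$ the probability that both fall into the \emph{same} block (summed over all $n$ blocks) is $\sum_{i=1}^n \Prob(\ell,m\in\text{block }i)$. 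The key identity I expect to use is the partition-of-unity property $\avein \mS_i = \vI$, already asserted in the text, together with the ``projector-like'' idempotence: $\mS_i$ is $n$ times a coordinate projector onto the random coordinate set $B_i\eqdef\{\pi_j : s(i-1)+1\le j\le si\}$.

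For unbiasedness, I would compute $\EE[\mS_i w]$ coordinate-wise: the $\ell$-th coordinate of $\mS_i w$ equals $n\, w_\ell\,\mathbf{1}_{\{\ell\in B_i\}}$, so $\EE[(\mS_i w)_\ell] = n\,w_\ell\,\Prob(\ell\in B_i) = n\,w_\ell\cdot\tfrac1n = w_\ell$. Hence $\EE[\mS_i w]=w$. This is the routine half.

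For the second-moment bound, write $\avein \mS_i y_i$ coordinate-wise: its $\ell$-th entry is $\avein n (y_i)_\ell \mathbf{1}_{\{\ell\in B_i\}} = \sum_{i=1}^n (y_i)_\ell \mathbf{1}_{\{\ell\in B_i\}}$. Since the blocks $B_1,\dots,B_n$ partition $[d]$, for each fixed $\ell$ exactly one indicator $\mathbf{1}_{\{\ell\in B_i\}}$ is nonzero; call that (random) index $I(\ell)$. Therefore the $\ell$-th entry is simply $(y_{I(\ell)})_\ell$, a single selected term, and
\begin{align*}
\EE\!\left[\Norm{\avein \mS_i y_i}^2\right]
= \sum_{\ell=1}^d \EE\!\left[(y_{I(\ell)})_\ell^2\right]
= \sum_{\ell=1}^d \sum_{i=1}^n \Prob(\ell\in B_i)\,(y_i)_\ell^2
= \sum_{\ell=1}^d \avein (y_i)_\ell^2
= \avein \Norm{y_i}^2,
\end{align*}
so in fact equality holds. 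The main (and only real) obstacle is making rigorous the claim that the blocks partition $[d]$ for the permutation sketch — i.e.\ that the index sets $B_i$ are disjoint and cover $[d]$ — which follows directly because $\pi$ is a permutation and $\{s(i-1)+1,\dots,si\}_{i=1}^n$ partitions $[sn]=[d]$; once that is in hand the cross terms across different $i$ simply never appear, and no negative-correlation estimate is even needed. If instead one wanted a version with \emph{independent} (rather than permutation-coupled) sketches the argument would require a genuine variance decomposition, so I would flag that the partition structure is what makes the clean inequality (here, equality) possible, and this is precisely the property inherited from the permutation compressor of \cite{szlendak2021permutation}.
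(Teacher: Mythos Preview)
Your proof is correct and follows essentially the same approach as the paper: both arguments rest on the key observation that the permutation blocks $B_1,\dots,B_n$ partition $[d]$, so the sketches $\mS_i$ act on disjoint coordinate sets and the cross terms vanish, yielding equality rather than just an inequality. The only cosmetic difference is that you work coordinate-wise (writing the $\ell$-th entry as $(y_{I(\ell)})_\ell$ directly), whereas the paper first establishes the intermediate identity $\EE[\|\mS_i x\|^2]=n\|x\|^2$ and then expands the squared norm and drops the cross terms; your route is slightly more streamlined but the content is the same.
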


\begin{proof}
    Consider a vector $x \in \mathbb{R}^d$, where $x_i$ denotes the $i$-th element of $x$. We first establish the unbiasedness of the model aggregation sketch (Definition~\ref{def:sketch1}):
    
    \begin{equation}\label{eqn:unbiased_aggregation_sketch}
        \mathbb{E}[\mS_i x] = n \sum_{j=q(i-1)+1}^{qi} \mathbb{E}[x_{\pi_j}e_{\pi_j}] = n\left(\sum_{j=q(i-1)+1}^{qi} \frac{1}{d}\sum_{i=1}^d x_ie_i \right) = \frac{nq}{d} x = x.
    \end{equation}

    Next, we examine the second moment:
    $$
    \mathbb{E}\left[\|\mS_i x\|^2\right] = n^2\sum_{j=q(i-1)+1}^{qi} \aveid \left\|x_i\right\|^2 = n^2\frac{q}{d}\left\|x\right\|^2 = n\left\|x\right\|^2.
    $$
    For all vectors $y_1, y_2, \ldots, y_n \in \mathbb{R}^d$, the following inequality holds:

    \begin{equation}\label{eqn:second_moment_aggregation_sketch}
        \begin{aligned}
            \mathbb{E}\left[\left\|\avein \mS_i y_i\right\|^2\right] &= \frac{1}{n^2}\sumin \mathbb{E}\left[\left\|\mS_i y_i\right\|\right] + \sum_{i\neq j} \mathbb{E}\left[\langle \mS_i y_i, \mS_j y_j\rangle\right]\\
            &= \frac{1}{n^2}\sumin \mathbb{E}\left[\left\|\mS_i y_i\right\|\right]\\
            &= \avein \left\|y_i\right\|^2.
        \end{aligned}
    \end{equation}

    Integrating Equation~\ref{eqn:unbiased_aggregation_sketch} with Equation~\ref{eqn:second_moment_aggregation_sketch}, we also deduce:

    \begin{equation}\label{eqn:second_moment_aggregation_sketch2}
        \begin{aligned}
            \mathbb{E}\left[\left\|\avein \mS_i y_i - \avein y_i\right\|^2\right] \leq \avein \left\|y_i\right\|^2 - \left\|\avein y_i\right\|^2.
        \end{aligned}
    \end{equation}
\end{proof}

We now proceed to prove the main theorem of model aggregation, as presented in Theorem~\ref{thm:model_aggregation}. This theorem is restated below for convenience:
\modelaggregationtheorem*

Our proof draws inspiration from the analysis in Theorem 2 of \cite{khaled2020better} and is reformulated as follows:

\begin{theorem}[Theorem 2 in \cite{khaled2020better}]\label{thm_abc_convergence}
    Under the assumptions that Assumption~\ref{asm:smoothness} and \ref{asm:abc_assumption} are satisfied, let us choose a step size $\gamma > 0$ such that $\gamma \leq \frac{1}{\bar{L}B}$. Define $\Delta \equiv f(w^0) - f^{\inf}$. Then, it holds that
    \begin{align*}
        \min_{0\leq k\leq K-1}\mathbb{E}\left[\|\nabla f(w^k)\|^2\right] \leq \bar{L}C\gamma + \frac{2(1+\bar{L}\gamma^2 A)^K}{\gamma K}\Delta.
    \end{align*}
\end{theorem}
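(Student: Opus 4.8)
The plan is to establish a one-step descent recursion and then resolve it by a weighted telescoping argument. First I would invoke the $\bar L$-smoothness of $f$ (Assumption~\ref{asm:smoothness}, averaged over clients so that $f=\avein f_i$ is $\bar L$-smooth) together with the update rule $w^{k+1}=w^k-\gamma\mathbf{g}(w^k)$ to obtain the standard descent inequality $f(w^{k+1})\le f(w^k)-\gamma\langle\nabla f(w^k),\mathbf{g}(w^k)\rangle+\tfrac{\bar L\gamma^2}{2}\sqN{\mathbf{g}(w^k)}$. Taking the conditional expectation given $w^k$ and using unbiasedness of the stochastic gradient, $\mathbb{E}[\mathbf{g}(w^k)\mid w^k]=\nabla f(w^k)$ (which holds in the application by Theorem~\ref{thm:unbiased_second_moment_aggregation_sketch}, and is part of the standing SGD setup here), eliminates the cross term and leaves $-\gamma\sqN{\nabla f(w^k)}$. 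I would then apply the ABC bound (Assumption~\ref{asm:abc_assumption}) to $\mathbb{E}[\sqN{\mathbf{g}(w^k)}\mid w^k]$ and collect terms.

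After this substitution, the coefficient of $\sqN{\nabla f(w^k)}$ becomes $-\gamma\bigl(1-\tfrac{\bar L\gamma B}{2}\bigr)$. The hypothesis $\gamma\le\tfrac{1}{\bar L B}$ guarantees $1-\tfrac{\bar L\gamma B}{2}\ge\tfrac12$, so this coefficient is at most $-\tfrac{\gamma}{2}$. Writing $\delta_k\eqdef\mathbb{E}[f(w^k)-f^{\inf}]$ and taking total expectations, I obtain the recursion $\delta_{k+1}\le w\,\delta_k-\tfrac{\gamma}{2}\,\mathbb{E}\,\sqN{\nabla f(w^k)}+\tfrac{\bar L\gamma^2 C}{2}$, where $w\eqdef 1+\bar L\gamma^2 A\ge 1$. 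The main obstacle is that this recursion is \emph{not} a clean telescope: the multiplicative factor $w>1$ in front of $\delta_k$ prevents naive summation from collapsing the $\delta$-terms and instead breeds extra geometric factors.

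The key maneuver is a weighted telescope. I would rearrange to $\tfrac{\gamma}{2}\mathbb{E}\,\sqN{\nabla f(w^k)}\le w\delta_k-\delta_{k+1}+\tfrac{\bar L\gamma^2 C}{2}$ and multiply by the weight $w^{K-1-k}$, so that $w^{K-k}\delta_k-w^{K-1-k}\delta_{k+1}$ is exactly the forward difference of $a_k\eqdef w^{K-k}\delta_k$. Summing over $k=0,\dots,K-1$ then telescopes the $\delta$-contributions to $a_0-a_K\le w^K\Delta$, using $a_0=w^K\Delta$ and $a_K=\delta_K\ge 0$, while the noise contributions sum to $\tfrac{\bar L\gamma^2 C}{2}\sum_{k=0}^{K-1}w^{K-1-k}=\tfrac{\bar L\gamma^2 C}{2}\cdot\tfrac{w^K-1}{w-1}$.

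Finally I would lower-bound the weighted gradient sum by $\min_{0\le k<K}\mathbb{E}\,\sqN{\nabla f(w^k)}$ times the total weight $\sum_{k=0}^{K-1}w^{K-1-k}=\tfrac{w^K-1}{w-1}$, which yields $\min_k\mathbb{E}\,\sqN{\nabla f(w^k)}\le \tfrac{2w^K\Delta}{\gamma}\cdot\tfrac{w-1}{w^K-1}+\bar L C\gamma$. The proof closes with the elementary inequality $K(w-1)\le w^K-1$ for $w\ge 1$, immediate from $w^K-1=(w-1)\sum_{j=0}^{K-1}w^j$ and $w^j\ge 1$, which gives $\tfrac{w-1}{w^K-1}\le\tfrac1K$ and hence the claimed bound $\bar L C\gamma+\tfrac{2(1+\bar L\gamma^2 A)^K}{\gamma K}\Delta$. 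The only genuinely non-routine ingredients are the choice of telescoping weights $w^{K-1-k}$ and this final geometric-sum inequality; everything else is descent-lemma bookkeeping.
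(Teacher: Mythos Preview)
Your argument is correct. The paper does not actually prove this statement---it is quoted verbatim as Theorem~2 of \cite{khaled2020better} and then invoked as a black box to conclude the proof of Theorem~\ref{thm:model_aggregation}. That said, the weighted-telescoping maneuver you describe (multiply the recursion $X_{k+1}\le aX_k-Y_k+c$ by $a^{K-1-k}$, sum, and use $S_K=\sum_{k=0}^{K-1}a^k\ge K$) is exactly the content of the paper's Lemma~\ref{lem:weighted_recursion}, which the authors prove in the appendix and apply to Theorem~\ref{thm:global_pruning}. So your approach and the technique present in the paper coincide.
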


Careful control of the step size is crucial to prevent potential blow-up of the term and to ensure convergence to an $\epsilon$-stationary point. Our theory can be seen as a special case with $A=L_{\max}, B=0, C=0$, as established in Lemma~\ref{lem:individualized_abc}. Thus, we conclude our proof.

% ##############################################
% ##############################################
\subsection{Proof of Theorem~\ref{thm:convergence_dp_fedp3}}
\label{sec:proof_dp_fedp3_convergence_analysis}

To establish the convergence of the proposed method, we begin by presenting a crucial lemma which describes the mean and variance of the stochastic gradient. Consider the stochastic gradient \(g_i^k = \frac{1}{b}\sum_{j\in\mathcal{I}_b} \nabla f_{i, j}(w^k)\) as outlined in Line 6 of Algorithm~\ref{alg:DP_FedP3}.

\begin{lemma}[Lemma 9 in~\cite{li2022soteriafl}]
\label{lem:g_variance}
    Given Assumption~\ref{asm:bounded_gradient}, for any client \(i\), the stochastic gradient estimator \(g_i^k\) is an unbiased estimator, that is,
    \begin{align*}
        \mathbb{E}_k\left[\frac{1}{b}\sum_{j\in\mathcal{I}_b}\nabla f_{i, j}(w^k)\right] = \nabla f_i (w^k),
    \end{align*}
    where \(\mathbb{E}_k\) denotes the expectation conditioned on all history up to round \(k\). Letting \(q = \frac{b}{m}\), the following inequality holds:
    \begin{align*}
    \mathbb{E}_k \left[\left\|\frac{1}{b}\sum_{j\in\mathcal{I}_b} \nabla f_{i, j}(w^k) - \nabla f_i(w^k)\right\|^2  \right] \leq \frac{(1 - q)C^2}{b}.
    \end{align*}
\end{lemma}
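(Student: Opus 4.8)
\textbf{Proof plan for Lemma~\ref{lem:g_variance}.}
The plan is to verify the two claims — unbiasedness and the variance bound — directly from the definition of minibatch sampling, treating $\mathcal{I}_b$ as a uniformly random subset of $[m]$ of size $b$ drawn independently at each round $k$. First I would address unbiasedness: conditioning on the history up to round $k$, the randomness in $g_i^k$ comes only from the choice of $\mathcal{I}_b$. Since each index $j\in[m]$ is included in $\mathcal{I}_b$ with probability $b/m$, linearity of expectation gives $\mathbb{E}_k[g_i^k] = \frac{1}{b}\sum_{j=1}^m \mathrm{Prob}(j\in\mathcal{I}_b)\,\nabla f_{i,j}(w^k) = \frac{1}{b}\cdot\frac{b}{m}\sum_{j=1}^m \nabla f_{i,j}(w^k) = \nabla f_i(w^k)$, using $f_i = \frac1m\sum_j f_{i,j}$. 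This is the routine part.

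For the variance bound, I would recall the standard sampling-without-replacement variance identity. Writing $a_j \eqdef \nabla f_{i,j}(w^k)$ and $\bar a \eqdef \nabla f_i(w^k) = \frac1m\sum_j a_j$, the quantity $g_i^k - \nabla f_i(w^k)$ is the deviation of a size-$b$ simple-random-sample mean from the population mean. The classical formula for the variance of a without-replacement sample mean yields $\mathbb{E}_k\big[\|g_i^k - \bar a\|^2\big] = \frac{1}{b}\cdot\frac{m-b}{m-1}\cdot\frac{1}{m}\sum_{j=1}^m \|a_j - \bar a\|^2$. The finite-population correction factor $\frac{m-b}{m-1}$ is at most $\frac{m-b}{m} = 1-q$ with $q=b/m$, and the population variance term $\frac1m\sum_j\|a_j-\bar a\|^2$ is bounded by $\frac1m\sum_j\|a_j\|^2 \le C^2$ using Assumption~\ref{asm:bounded_gradient} (and the fact that subtracting the mean only decreases the sum of squared norms). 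Combining these gives $\mathbb{E}_k[\|g_i^k - \nabla f_i(w^k)\|^2] \le \frac{(1-q)C^2}{b}$, as claimed.

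The only step requiring any care is the without-replacement variance identity: one should either cite it directly (it is the content of Lemma~9 of \cite{li2022soteriafl}, which this lemma restates) or reprove it by expanding $\|g_i^k-\bar a\|^2 = \frac{1}{b^2}\sum_{j,l}\mathbf{1}_{\{j\in\mathcal{I}_b\}}\mathbf{1}_{\{l\in\mathcal{I}_b\}}\langle a_j-\bar a,\, a_l-\bar a\rangle$, taking expectations using $\mathbb{E}[\mathbf{1}_{\{j\in\mathcal{I}_b\}}] = b/m$ and $\mathbb{E}[\mathbf{1}_{\{j\in\mathcal{I}_b\}}\mathbf{1}_{\{l\in\mathcal{I}_b\}}] = \frac{b(b-1)}{m(m-1)}$ for $j\ne l$, and then using $\sum_{j,l}\langle a_j-\bar a, a_l-\bar a\rangle = \|\sum_j(a_j-\bar a)\|^2 = 0$ to cancel the cross terms. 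I expect this bookkeeping with the pairwise inclusion probabilities — getting the correction factor $\frac{m-b}{m-1}$ exactly right and then bounding it — to be the main (though still modest) obstacle; everything else is immediate. Since the statement is explicitly attributed to \cite{li2022soteriafl}, the cleanest route in the paper is to invoke that reference and give only the short argument above as a sketch.
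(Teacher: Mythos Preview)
Your proposal is correct, and in fact goes beyond what the paper does: the paper simply states this lemma as a direct citation of Lemma~9 in \cite{li2022soteriafl} and provides no proof at all. Your sketch via the without-replacement variance identity, the bound $\frac{m-b}{m-1}\le 1-q$, and Assumption~\ref{asm:bounded_gradient} is the standard argument and would serve as a self-contained justification; your closing remark that the cleanest route is to invoke the reference is exactly what the paper does.
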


Considering the definition of \(\mathcal{S}_i^k\), we observe that \(\frac{1}{n} \sum_{i=1}^n \mathcal{S}_i^k = \mathbf{I}\). According to Algorithm~\ref{alg:DP_FedP3}, the next iteration \(w^{k+1}\) of the global model is given by:
$$
w^{k+1} = \frac{1}{n} \sum_{i=1}^n \mathcal{S}_i^k \left(w^k - \gamma g^k_i + \zeta_i^k\right) = w^k - \underbrace{\frac{1}{n} \sum_{i=1}^n \mathcal{S}_i^k (\gamma g_i^k - \zeta_i^k)}_{G^k}.
$$

Employing the smoothness Assumption~\ref{asm:smoothness} and taking expectations, we derive:
\begin{equation}
\label{eqn:0000}
    \begin{aligned}
    \mathbb{E}_k[f(w^{k+1})] &\leq f(w^k) - \mathbb{E}_k \left\langle \nabla f(w^k), G^k \right\rangle + \frac{L}{2} \mathbb{E}_k\left\|G^k\right\|^2.
    \end{aligned}
\end{equation}

Given that \(\zeta_i^k \sim \mathcal{N}(\mathbf{0}, \sigma^2\mathbf{I})\), we have \(\mathbb{E}_k[\zeta_i^k] = 0\). Consequently, we can analyze \(\mathbb{E}_k \langle \nabla f(w^k), G^k \rangle\) as follows:
\begin{align}
    \mathbb{E}_k \langle \nabla f(w^k), G^k \rangle &= \mathbb{E}_k \left\langle \nabla f(w^k), \frac{1}{n} \sum_{i=1}^n \mathcal{S}_i^k(\gamma g_i^k - \zeta_i^k) \right\rangle \nonumber\\
    &\stackrel{(\ref{eqn:unbiased_aggregation_sketch})}{=} \mathbb{E}_k \left\langle \nabla f(w^k), \frac{1}{n} \sum_{i=1}^n (\gamma g_i^k - \zeta_i^k) \right\rangle \nonumber\\
    &= \mathbb{E}_k \left\langle \nabla f(w^k), \gamma \frac{1}{n} \sum_{i=1}^n g_i^k \right\rangle \nonumber\\
    &\stackrel{(\ref{lem:g_variance})}{=} \gamma \left\|\nabla f(w^k)\right\|^2. \label{eqn:0003}
\end{align}

To bound the last term \(\mathbb{E}_k \left\|G^k\right\|^2\) in Equation~\ref{eqn:0000}, we proceed as follows:

\begin{align}
    \mathbb{E}_k \left\|G^k\right\|^2 &= \mathbb{E}_k \left\|\frac{1}{n} \sum_{i=1}^n \mathcal{S}_i^k (\underbrace{\gamma g_i^k - \zeta_i^k}_{M_i^k})\right\|^2 \nonumber\\
    &\stackrel{(\ref{eqn:second_moment_aggregation_sketch})}{\leq} \frac{1}{n} \sum_{i=1}^n \mathbb{E}_k\left\|M_i^k\right\|^2 \nonumber\\
    &= \frac{1}{n} \sum_{i=1}^n \mathbb{E}_k\left\|\gamma g_i^k - \zeta_i^k\right\|^2 \nonumber\\
    &= \frac{1}{n} \sum_{i=1}^n \mathbb{E}_k\left\|\gamma g_i^k\right\|^2 + d\sigma^2 \nonumber\\
    &= \gamma^2 \frac{1}{n} \sum_{i=1}^n \mathbb{E}_k \left\|g_i^k - \nabla f_i(w^k) + \nabla f_i(w^k)\right\|^2 + d\sigma^2 \nonumber\\
    &\leq \frac{1}{n} \sum_{i=1}^n \gamma^2\left\|\nabla f_i (w^k)\right\|^2 + \gamma^2 \frac{1}{n} \sum_{i=1}^n \mathbb{E}_k\left\|g_i^k - \nabla f_i(w^k)\right\|^2 + d\sigma^2 \nonumber\\
    &\stackrel{(\ref{lem:g_variance}, \ref{asm:bounded_gradient})}{\leq} \gamma^2 C^2 + \frac{\gamma^2(1-q)C^2}{b} + d\sigma^2. \label{eqn:0002}
\end{align}

Incorporating Equations~\ref{eqn:0002} and \ref{eqn:0003} into Equation~\ref{eqn:0000}, we obtain the following inequality for the expected function value at the next iteration:

\begin{align}
    \mathbb{E}_k [f(w^{k+1})] &\leq f(w^k) - \gamma \left\|\nabla f(w^k)\right\|^2 + \frac{L}{2}\left(\gamma^2 C^2 + \frac{\gamma^2(1-q)C^2}{b} + d\sigma^2 \right).
\end{align}

Before proceeding further, it is pertinent to consider the privacy guarantees of \(\texttt{FedP3}\), which are based on the analysis of \algname{SoteriaFL} as presented in Theorem 2 of \cite{li2022soteriafl}. We reformulate this theorem as follows:

\begin{theorem}[Theorem 2 in \cite{li2022soteriafl}]
\label{thm:dp_fedp3_privacy}
    Assume each client possesses $m$ data points. Under Assumption 3 in \cite{li2022soteriafl} and given two bounding constants $C_A$ and $C_B$ for the decomposed gradient estimator, there exist constants $c$ and $c^\prime$. For any $\epsilon < c^\prime \frac{b^2 T}{m^2}$ and $\delta \in (0,1)$, \algname{SoteriaFL} satisfies $(\epsilon, \delta)$-Local Differential Privacy (LDP) if we choose
    $$
    \sigma_p^2 = \frac{c\left(C_A^2 / 4 + C_B^2\right) K \log (1 / \delta)}{m^2 \epsilon^2}.
    $$
\end{theorem}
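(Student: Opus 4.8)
The plan is to follow the moments-accountant framework of \cite{abadi2016deep}, adapted to the per-client, locally-noised setting of \algname{SoteriaFL}, so that the statement becomes a consequence of composing a subsampled Gaussian mechanism over $K$ rounds. Since each client adds fresh Gaussian noise $\zeta_i^k \sim \mathcal{N}(\mathbf{0}, \sigma^2\mathbf{I})$ to its locally processed update before transmission, the privacy analysis decouples across clients, and it suffices to certify LDP for a single fixed client $i$ whose neighboring datasets $D_i, D_i'$ differ in one record.

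First I would quantify the $\ell_2$-sensitivity of the per-round message. Using the bounded-gradient Assumption~\ref{asm:bounded_gradient} (gradient norms at most $C$) together with the decomposition of the gradient estimator into two components bounded by $C_A$ and $C_B$, I would show that swapping a single data point changes the subsampled local update by at most a quantity $\Delta$ proportional to $\sqrt{C_A^2/4 + C_B^2}/b$; the factor $1/b$ reflects averaging over the minibatch of size $b$, and the grouping $C_A^2/4 + C_B^2$ is exactly what appears inside the prescribed variance. This sensitivity is the scale against which the Gaussian noise is calibrated.

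Next I would invoke privacy amplification by subsampling. Because each step draws a minibatch $\mathcal{I}_b$ uniformly from the $m$ local points, the effective sampling ratio is $q = b/m$, as in Lemma~\ref{lem:g_variance}. The central technical object is the $\lambda$-th log-moment $\alpha(\lambda)$ of the privacy-loss random variable for one subsampled Gaussian mechanism; following \cite{abadi2016deep}, I would bound it by $\alpha(\lambda) \le \frac{q^2 \lambda(\lambda+1)}{(1-q)\sigma^2} + O\!\big(q^3\lambda^3/\sigma^3\big)$, valid in a regime where $\lambda$ is not too large relative to $\sigma$. By the composability of log-moments the total moment after $K$ rounds is at most $K\,\alpha(\lambda)$, and the standard tail-to-$(\epsilon,\delta)$ conversion gives $\delta \le \exp\!\big(K\alpha(\lambda) - \lambda\epsilon\big)$. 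Optimizing over $\lambda$ and forcing the right-hand side to equal $\delta$ yields, after rearrangement, the variance $\sigma_p^2 = c\,(C_A^2/4 + C_B^2)\,K\log(1/\delta)/(m^2\epsilon^2)$ for an absolute constant $c$; the admissibility constraint $\epsilon < c'\, b^2 T/m^2$ is precisely the condition ensuring the optimizing $\lambda$ lies inside the range where the moment bound above holds, so that the discarded higher-order terms remain negligible.

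The main obstacle I expect is the subsampled-Gaussian moment bound itself: establishing $\alpha(\lambda) \le \frac{q^2\lambda(\lambda+1)}{(1-q)\sigma^2} + \dots$ requires analyzing the moment generating function of the log-density ratio between a Gaussian and a two-component mixture of Gaussians, which is the delicate core of the moments accountant and the source of both the $q^2$ amplification factor and the range restriction on $\lambda$. Everything else---the sensitivity computation, the linear composition of moments, and the final algebraic inversion for $\sigma_p^2$---is comparatively routine once that lemma is in hand. Because the statement is quoted as Theorem~2 of \cite{li2022soteriafl}, I would cite their subsampled-moment and composition lemmas directly and restate the specialization to \algname{LDP-FedP3}, verifying only that our sensitivity constants and sampling ratio $q = b/m$ satisfy their hypotheses.
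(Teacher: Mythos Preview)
The paper does not prove this statement at all: it is quoted verbatim as Theorem~2 of \cite{li2022soteriafl} and used as a black box, immediately specialized (by collapsing $C_A,C_B$ to a single constant $C$) to obtain the noise level $\sigma^2 = cC^2K\log(1/\delta)/(m^2\epsilon^2)$ for \algname{LDP-FedP3}. Your closing remark---that since the statement is cited you would invoke the lemmas of \cite{li2022soteriafl} directly and only verify that the sensitivity and sampling ratio fit their hypotheses---is exactly what the paper does, and is all that is required here. The preceding moments-accountant sketch you give (sensitivity $\propto \sqrt{C_A^2/4+C_B^2}/b$, amplification by subsampling at rate $q=b/m$, composition of log-moments over $K$ rounds, tail-to-$(\epsilon,\delta)$ conversion) is the standard route behind such results and is consistent with the \cite{abadi2016deep} framework the paper references, but it is more than the paper itself supplies.
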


In the absence of gradient shift consideration within \algname{SoteriaFL}, the complexity of the gradient estimator can be reduced. We simplify the analysis by substituting the two bounds $C_A$ and $C_B$ with a single constant $C$. Following a similar setting, we derive the privacy guarantee for \algname{LDP-FedP3} as:
\begin{align} 
    \sigma^2 = \frac{cC^2K\log(1/\delta)}{m^2\epsilon^2},\label{eqn:dp_fedp3_privacy_guarantee}
\end{align}
which establishes that \algname{LDP-FedP3} is $(\epsilon, \delta)$-LDP compliant under the above condition.

Substituting \(\sigma\) from Equation~\ref{eqn:dp_fedp3_privacy_guarantee} and telescoping over iterations $k = 1, \ldots, K$, we can demonstrate the following convergence bound:
\begin{align*}
    \frac{1}{K}\sum_{k=1}^K\mathbb{E}\left[\left\|\nabla f(w^k)\right\|^2\right] &\leq \frac{f(w^0) - f^\star}{\gamma K} + \frac{L}{2}\left[ \gamma C^2 + \frac{\gamma(1-q)C^2}{b} + \frac{cdC^2T\log(1/\delta)}{\gamma m^2\epsilon^2} \right] \\
    &\leq \frac{\Delta_0}{\gamma K} + \frac{L}{2}\left[\frac{\gamma(b+ 1 - q)}{b}C^2 + \frac{cd C^2 K\log(1/\delta)}{\gamma m^2\epsilon^2}\right] \\
    &\leq \frac{\Delta_0}{\gamma K} + \frac{L}{2}\left[\gamma C^2 + \frac{cd C^2 K\log(1/\delta)}{\gamma m^2\epsilon^2}\right].
\end{align*}

To harmonize our analysis with existing works, such as \(\texttt{CDP-SGD}\) proposed by \cite{li2022soteriafl}, which compresses the gradient and performs aggregation on the server over the gradients instead of directly on the weights, we reframe Algorithm~\ref{alg:DP_FedP3} accordingly. The primary modification involves defining $M_i^k \eqdef \gamma g_i^k - \gamma \zeta_i^k$, where $\zeta_i^k$ is scaled by a factor of $\gamma$. This leads to the following convergence result:

\begin{align}\label{eqn:potato}
    \frac{1}{K}\sum_{k=1}^K\mathbb{E}\left[\left\|\nabla f(w^k)\right\|^2\right] &\leq \frac{\Delta_0}{\gamma K} + \frac{\gamma LC^2}{2}\left[1 + \frac{cd K\log(1/\delta)}{m^2\epsilon^2}\right].
\end{align}

Optimal choices for $K$ and $\gamma$ that align with this convergence result can be defined as:
\begin{align}
\gamma K &= \frac{m\epsilon \sqrt{\Delta_0}}{C\sqrt{Lcd\log(1/\delta)}}, \quad 
K \geq \frac{m^2\epsilon^2}{cd \log\left(1/\delta\right)}. \label{eqn:K-cdpsgd}
\end{align}

Adhering to the relationship established in Equation~\eqref{eqn:K-cdpsgd} and considering the stepsize constraint $\gamma \leq \frac{1}{L}$, we define:
\begin{align*}
    K &= \max\left\{\frac{m\epsilon \sqrt{L \Delta_0}}{C\sqrt{cd\log(1/\delta)}}, \frac{m^2\epsilon^2}{cd\log(1/\delta)}\right\}, \\
    \gamma &= \min\left\{\frac{1}{L}, \frac{\sqrt{\Delta_0 cd \log(1/\delta)}}{C m\epsilon\sqrt{L}}\right\}.
\end{align*}

Substituting these into Equation~\ref{eqn:potato}, we obtain:
\begin{align*}
    \frac{1}{K}\sum_{t=1}^K \mathbb{E}\left[\left\|\nabla f(x^t)\right\|^2\right] &\leq \frac{\Delta_0}{\gamma K} + \frac{\gamma LC^2}{2}\left[1 + \frac{cdK\log(1/\delta)}{m^2\epsilon^2} \right]\\
    &\leq \frac{\Delta_0}{\gamma K} + \frac{\gamma LC^2cdK\log(1/\delta)}{m^2\epsilon^2}\\
    &= \frac{\Delta_0}{\gamma K} + \frac{\gamma K LC^2cd\log(1/\delta)}{m^2\epsilon^2}\\
    &\leq \frac{2C\sqrt{Lcd\log(1/\delta)}}{m\epsilon}\\
    &= \mathcal{O}\left(\frac{C\sqrt{Ld\log(1/\delta)}}{m\epsilon} \right).
\end{align*}  
\label{eqn:0012}

Neglecting the constant $c$, the total communication cost for \(\texttt{LDP-FedP3}\) is computed as:
\begin{align*}
    C_{\text{LDP-FedP3}} &= n \frac{d}{n} K = dK \\
    &= \max\left\{\frac{m\epsilon \sqrt{dL \Delta_0}}{C\sqrt{\log(1/\delta)}}, \frac{m^2\epsilon^2}{\log(1/\delta)}\right\} \\
    &= \mathcal{O}\left( \frac{m\epsilon \sqrt{dL \Delta_0}}{C\sqrt{\log(1/\delta)}} + \frac{m^2\epsilon^2}{\log(1/\delta)}\right).
\end{align*}

\subsection{Proof of Theorem~\ref{thm:global_pruning}}
We consider the scenario where $\mP_i^k$ acts as a biased random sparsifier, and $\mS_i^k\equiv \mI$. In this case, the update rule is given by:
$$
w^{k+1} = \avein\left(\mP_i^kw^k  - \gamma \mP_i^k \nabla f_i(\mP_i^k w^k)\right). 
$$

Let $w\in \Rd$ and let $S$ represent the selected number of coordinates from $d$. Then, $\mP_i$ is defined as:
$$
\mP_i = \Diag(c_s^1, c_s^2, \cdots, c_s^d), \quad \text{where} \quad c_s^j = \begin{cases}
    1 & \text{if } j \in S,\\
    0 & \text{if } j \notin S.
\end{cases}
$$

Given that $\mP_i\preceq \mI$, it follows that $\avein \mP_i \preceq \mI$.

In the context where $\mP_i$ is a biased sketch, we introduce Assumption~\ref{asm:biased_bound}:

\begin{assumption}\label{asm:biased_bound}
    For any learning rate $\gamma > 0$, there exists a constant $h > 0$ such that, for any $\mP\in \mbR^{d\times d}$, $w\in \Rd$, we have:
    \begin{align*}
        f(\mP w) \leq (1+\gamma^2 h) (f(w) - f^{\inf}).
    \end{align*}
\end{assumption}

Assumption~\ref{asm:biased_bound} assumes the pruning sketch is bounded. Given that the function value should remain finite, this assumption is reasonable and applicable.

In this section, for simplicity, we focus on the interpolation case where $f_i(x) = \frac{1}{2}w^\top \mL_i w$. The extension to scenarios with $b_i \neq 0$ is left for future work. By leveraging the $\moL$-smoothness of function $f$ and the diagonal nature of $\mP_i$, we derive the following:

\begin{equation}\label{eqn:main0}
\begin{aligned}
    f(w^{k+1}) &\eqdef f\left(\avein (\mP_i^k w^k - \gamma \mP_i^k \nabla f_i(\mP_i^k w^k))\right)\\
    &= f\left(\underbrace{\avein \mP_i^k}_{\mP^k} w^k - \gamma \underbrace{\avein\mP_i^k \moL_i \mP_i^k}_{\moB^k} w^k\right)\\
    &\leq f(\mP^k w^k) - \gamma\langle\nabla f(\mP^k w^k), \moB^k w^k\rangle + \frac{\gamma^2}{2}\norm{\moB^k w^k}^2_{\moL}\\
    &\stackrel{(\ref{asm:biased_bound})}{\leq} a f(w^k) - \gamma \langle\moL \mP^k w^k, \moB^k w^k\rangle + \frac{\gamma^2}{2}\norm{\moB^k w^k}^2_{\moL}\\
    &= a f(w^k)  - \gamma(w^k)^\top \mP^k \moL\moB^k w^k + \frac{\gamma^2}{2}(w^k)^\top \moB^k \moL\moB^k w^k\\
\end{aligned}
\end{equation}

Considering the conditional expectation and its linearity, along with the transformation properties of symmetric matrices, we obtain:

$$
w^\top \moL w = \frac{1}{2}w^\top \left(\moL + \moL^\top \right) w.
$$

By defining $\moW\eqdef \frac{1}{2}\ec{\mP^k \moL\moB^k + \mP^k \moB^k\moL}$ and setting the stepsize $\gamma$ to be less than or equal to $\frac{1}{\theta}$, we can derive the following:

\begin{equation}\label{eqn:main1}
\begin{aligned}
    \ec{f(w^{k+1})|w^k} &\leq af(w^k) - \gamma(w^k)^\top\ec{\mP^k\moL\moB^k} w^k + \frac{\gamma^2}{2}(w^k)^\top \ec{\moB^k\moL\moB^k}w^k\\
    &= af(w^k) - \gamma(w^k)^\top\moW w^k + \frac{\gamma^2}{2}(w^k)^\top \ec{\moB^k\moL\moB^k}w^k\\
    &= af(w^k) - \gamma(\nabla f(w^k))^\top \moL^{-1}\moW \moL^{-1} \nabla f(w^k)\\
    & \qquad + \frac{\gamma^2}{2}(\nabla f(w^k))^\top\moL^{-1} \ec{\moB^k\moL\moB^k}\moL^{-1}\nabla f(w^k)\\
    &\leq af(w^k) - \gamma(\nabla f(w^k))^\top \moL^{-1}\moW \moL^{-1}  \nabla f(w^k)\\
    & \qquad + \frac{\gamma^2}{2}(\nabla f(w^k))^\top\moL^{-1} \theta \moW\moL^{-1}\nabla f(w^k)\\
    &= af(w^k) - \gamma \norm{\nabla f(w^k)}^2_{\moL^{-1}\moW\moL^{-1}} + \frac{\theta \gamma^2}{2}\norm{\nabla f(w^k)}^2_{\moL^{-1}\moW\moL^{-1}}\\
    &= af(w^k) - \gamma (1 - \nicefrac{\theta\gamma}{2})\norm{\nabla f(w^k)}^2_{\moL^{-1}\moW\moL^{-1}}\\
    &\leq af(w^k) - \frac{\gamma}{2}\norm{\nabla f(w^k)}^2_{\moL^{-1}\moW\moL^{-1}}.\\
\end{aligned}
\end{equation}

Our subsequent analysis relies on the following useful lemma:

\begin{lemma}\label{lem:weighted_recursion}
    Consider two sequences $\{X_k\}_{k\geq 0}$ and $\{Y_k\}_{k\geq 0}$ of nonnegative real numbers satisfying, for each $k\geq 0$, the recursion 
    \[
    X_{k+1} \leq a X_k - Y_k + c,
    \]
    where $a > 1$ and $c \geq 0$ are constants. Let $K \geq 1$ be fixed. For each $k = 0, 1, \ldots, K-1$, define the probabilities 
    \[
    p_k \eqdef \frac{a^{K-(k+1)}}{S_K}, \quad \text{where} \quad S_K \eqdef \sum_{k=0}^{K-1} a^{K-(k+1)}.
    \]
    Define a random variable $Y$ such that $Y = Y_k$ with probability $p_k$. Then 
    \[
    \mathbb{E}[Y] \leq \frac{a^KX_0 - X_K}{S_K} + c \leq \frac{a^K}{S_K} X_0 + c.
    \]
\end{lemma}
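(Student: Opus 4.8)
\textbf{Proof plan for Lemma~\ref{lem:weighted_recursion}.} The statement is a purely arithmetic/telescoping lemma about a one-step recursion of the form $X_{k+1}\leq aX_k - Y_k + c$ with $a>1$, $c\geq 0$, and nonnegative sequences, together with a weighted average of the ``gain'' terms $Y_k$. The plan is to unroll the recursion with the geometric weights $a^{K-(k+1)}$ chosen precisely so that the $X_k$ terms telescope, then interpret the resulting inequality probabilistically.

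First I would observe that multiplying the inequality at index $k$ by the nonnegative weight $a^{K-(k+1)}$ gives
\begin{align*}
a^{K-(k+1)}X_{k+1} &\leq a^{K-k}X_k - a^{K-(k+1)}Y_k + a^{K-(k+1)}c.
\end{align*}
Summing this over $k=0,1,\ldots,K-1$, the $X$-terms telescope: the sum of $a^{K-k}X_k$ over $k=0,\ldots,K-1$ minus the sum of $a^{K-(k+1)}X_{k+1}$ over the same range leaves exactly $a^KX_0 - a^0X_K = a^KX_0 - X_K$. Hence
\begin{align*}
\sum_{k=0}^{K-1} a^{K-(k+1)}Y_k \leq a^K X_0 - X_K + c\sum_{k=0}^{K-1} a^{K-(k+1)} = a^K X_0 - X_K + cS_K.
\end{align*}
Dividing through by $S_K$ and recalling $p_k = a^{K-(k+1)}/S_K$, the left side is precisely $\sum_{k=0}^{K-1}p_kY_k = \mathbb{E}[Y]$, so $\mathbb{E}[Y]\leq (a^KX_0 - X_K)/S_K + c$. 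The final bound $\mathbb{E}[Y]\leq (a^K/S_K)X_0 + c$ then follows immediately from $X_K\geq 0$.

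There is essentially no serious obstacle here; the only point requiring a little care is bookkeeping the index shift in the telescoping sum (making sure the weight $a^{K-(k+1)}$ on $X_{k+1}$ matches the weight $a^{K-k'}$ on $X_{k'}$ when $k' = k+1$), and noting that all weights and all $Y_k$ are nonnegative so that no inequality direction is disturbed when summing. I would also remark for completeness that $S_K>0$ since $a>1$ (indeed $S_K = (a^K-1)/(a-1)$), so the division and the definition of $p_k$ are legitimate, and $\sum_k p_k = 1$ so $Y$ is a genuine random variable. The lemma will then be applied in the proof of Theorem~\ref{thm:global_pruning} with $X_k = \mathbb{E}[f(w^k)] - f^{\inf}$ (or the analogous Lyapunov quantity), $Y_k = \tfrac{\gamma}{2}\mathbb{E}\|\nabla f(w^k)\|^2_{\moL^{-1}\moW\moL^{-1}}$, $a = 1+\gamma^2 h$ (from Equation~\eqref{eqn:main1} combined with Assumption~\ref{asm:biased_bound}), and $c=0$, after which the choice $\gamma\leq\sqrt{\log 2/(hK)}$ controls $a^K/S_K$ to yield the stated $\mathcal{O}(1/(\gamma K))$ rate.
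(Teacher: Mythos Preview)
Your proof is correct and follows exactly the same approach as the paper: multiply the recursion at step $k$ by $a^{K-(k+1)}$, sum over $k=0,\ldots,K-1$, telescope the $X_k$ terms to obtain $\sum_{k} a^{K-(k+1)}Y_k \leq a^K X_0 - X_K + cS_K$, divide by $S_K$, and finally drop $X_K\geq 0$. Your additional remarks on $S_K>0$ and the downstream application are accurate and go slightly beyond what the paper spells out.
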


\begin{proof}
    We start by multiplying the inequality $Y_k \leq aX_k - X_{k+1} + c$ by $a^{K- (k+1)}$ for each $k$, yielding 
    \[
    a^{K-(k+1)}Y_k \leq a^{K-k} X_k - a^{K-(k+1)}X_{k+1} + a^{K-(k+1)}c.
    \]

    Summing these inequalities for $k = 0, 1, \ldots, K-1$, we observe that many terms cancel out in a telescopic fashion, leading to
    \[
    \sum_{k=0}^{K-1}a^{K-(k+1)} Y_k \leq a^KX_0 - X_K + \sum_{k=0}^{K-1}a^{K-(k+1)}c = a^K X_0 - X_K + S_K c.
    \]
    Dividing both sides of this inequality by $S_K$, we get
    \[
    \sum_{k=0}^{K-1} p_k Y_k \leq \frac{a^KX_0 - X_K}{S_K} + c, 
    \]
    where the left-hand side represents $\mathbb{E}[Y]$. 
\end{proof}

Building upon Lemma~\ref{lem:weighted_recursion} and employing the inequality $1 + x \leq e^x$, which is valid for all $x \geq 0$, along with the fact that $S_K \geq K$, we can further refine the bound:

\begin{equation}\label{eqn:exp_blowup}
    \frac{a^K}{S_K} \leq \frac{(1 + (a - 1))^K}{K} \leq \frac{e^{(a - 1)K}}{K}.
\end{equation}

To mitigate the exponential growth observed in Eqn~\ref{eqn:exp_blowup}, we choose $a = 1 + \gamma^2 h$ for some $h > 0$. Setting the step size as
\[
    \gamma \leq \sqrt{\frac{\log 2}{hK}},
\]
ensures that $\gamma^2 hK \leq \log 2$, leading to
\[
    \frac{a^K}{S_K} \stackrel{\ref{eqn:exp_blowup}}{\leq} \frac{e^{(a - 1)K}}{K} \leq \frac{e^{\gamma^2 hK}}{K} \leq \frac{2}{K}.
\]

Incorporating Lemma~\ref{lem:weighted_recursion} into Eqn~\ref{eqn:main1} and assuming a step size $\gamma \leq \sqrt{\frac{\log 2}{hK}}$ for some $h > 0$, we establish the following result:

\begin{equation}
    \begin{aligned}
        \mathbb{E}\left[\|\nabla f(w^k)\|^2_{\moL^{-1}\moW\moL^{-1}}\right] \leq \frac{4\Delta_0}{\gamma K}.
    \end{aligned}
\end{equation}

            \chapter{Appendix to Chapter \ref{chapter_cohort_squeeze}}
\label{chapter3_appendix}
\thispagestyle{empty}

\section{Extended related work}
\subsection{Local solvers}\label{sec:local_solvers}
\begin{table}[!htb]
    \centering
    \caption{Local optimizers for solving the proximal subproblem.}
    \label{tab:comparison-solvers}
    \resizebox{0.7\textwidth}{!}{%
    \begin{threeparttable}
        \begin{tabular}{lll}
            \toprule
            \textbf{Setting} & \textbf{1st order} & \textbf{2nd order} \\
            \midrule
            Strongly-Convex & \begin{tabular}{@{}l@{}}
                    \algname{Conjugate Gradients (CG)} \\
                    \algname{Accelerated GD} \\
                    \algname{Local GD} \\
                    \algname{Scaffnew} \\
                \end{tabular}  &  \begin{tabular}{@{}l@{}}
                    \algname{BFGS} \\
                    \algname{AICN} \\
                    \algname{LocalNewton} \\
                \end{tabular}\\
            \midrule
            Nonconvex & \begin{tabular}{@{}l@{}}
                    \algname{Mime-Adam}\\
                    \algname{FedAdam-AdaGrad}\\
                    \algname{FedSpeed}\\ 
                \end{tabular}  & \begin{tabular}{@{}l@{}}
                    \algname{Apollo}\\
                    \algname{OASIS}\\
                \end{tabular} \\
            \bottomrule
        \end{tabular}
    \end{threeparttable}
    }
\end{table}

In the exploration of local solvers for the \algname{SPPM-AS} algorithm, the focus is on evaluating the performance impact of various inexact proximal solvers within federated learning settings, spanning both strongly convex and non-convex objectives. Here’s a simple summary of the algorithms discussed:

\algname{FedAdagrad-AdaGrad} \citep{wang2021local}: Adapts \algname{AdaGrad} for both client and server sides within federated learning, introducing local and global corrections to address optimizer state handling and solution bias.

\algname{BFGS} \citep{broyden1967quasi, fletcher1970new, goldfarb1970family, shanno1970conditioning}: A quasi-Newton method that approximates the inverse Hessian matrix to improve optimization efficiency, particularly effective in strongly convex settings but with limitations in distributed implementations.

\algname{AICN} \citep{hanzely2022damped}: Offers a global $O(1/k^2)$ convergence rate under a semi-strong self-concordance assumption, streamlining Newton's method without the need for line searches.

\algname{LocalNewton} \citep{bischoff2023second}: Enhances local optimization steps with second-order information and global line search, showing efficacy in heterogeneous data scenarios despite a lack of extensive theoretical grounding.

\algname{Fed-LAMB} \citep{karimi2022layerwise}: Extends the \algname{LAMB} optimizer to federated settings, incorporating layer-wise and dimension-wise adaptivity to accelerate deep neural network training.

\algname{FedSpeed} \citep{FedSpeed}: Aims to overcome non-vanishing biases and client-drift in federated learning through prox-correction and gradient perturbation steps, demonstrating effectiveness in image classification tasks.

\algname{Mime-Adam} \citep{MIME}: Mitigates client drift in federated learning by integrating global optimizer states and an \algname{SVRG}-style correction term, enhancing the adaptability of \algname{Adam} to distributed settings.

\algname{OASIS} \citep{jahani2021doubly}: Utilizes local curvature information for gradient scaling, providing an adaptive, hyperparameter-light approach that excels in handling ill-conditioned problems.

\algname{Apollo} \citep{ma2020apollo}: A quasi-Newton method that dynamically incorporates curvature information, showing improved efficiency and performance over first-order methods in deep learning applications.

Each algorithm contributes uniquely to the landscape of local solvers in federated learning, ranging from enhanced adaptivity and efficiency to addressing specific challenges such as bias, drift, and computational overhead.

\section{Theoretical overview and recommendations}
\subsection{Parameter control}
We have explored the effects of changing the hyperparameters of \algname{SPPM-AS} on its theoretical properties, as summarized in \Cref{tab:theory_parameters}. This summary shows that as the learning rate increases, the number of iterations required to achieve a target accuracy decreases, though this comes with an increase in neighborhood size. Focusing on sampling strategies, for \algname{SPPM-NICE} employing NICE sampling, an increase in the sampling size $\tau_{\mathcal{S}}$ results in fewer iterations ($T$) and a smaller neighborhood. Furthermore, given that stratified sampling outperforms both block sampling and NICE sampling, we recommend adopting stratified sampling, as advised by Lemma \ref{lem:vr_ss}.

\begin{table}[!tb]
	\centering
	\caption{Theoretical summary}
	\label{tab:theory_parameters}
	\resizebox{0.9\textwidth}{!}{%
		\begin{threeparttable}
			\begin{tabular}{cm{0.4\textwidth}cm{0.25\textwidth}cm{0.25\textwidth}cm{0.25\textwidth}}
				\toprule
				Hyperparameter & Control & Rate (T) & Neighborhood \\ 
				\midrule
				$\gamma$ & $\uparrow$ & $\downarrow$ & $\uparrow$ \\ 
				\hline
				\multirow{2}{*}{$\mathcal{S}$} & $\tau_{\mathcal{S}}\uparrow$\tnote{\color{blue}(a)} & $\downarrow$ & $\downarrow$ \\ \cline{2-4}
				~ &  Stratified sampling optimal clustering instead of $\mathrm{BS}$ or $\mathrm{NICE}$ sampling & $\downarrow$ & Lemma \ref{lem:vr_ss} \\ 
				\bottomrule
			\end{tabular}
			\begin{tablenotes}
				\item [{\color{blue}(a)}] We define $\tau_{\mathcal{S}} := \mathbb{E}_{S\sim \mathcal{S}}\sb{\left|S\right|}.$
			\end{tablenotes}
	\end{threeparttable}}
\end{table}

\subsection{Comparison of sampling strategies}\label{sec:samplings_table}
\paragraph{Full Sampling (FS).} Let $S=[n]$ with probability 1. Then \algname{SPPM-AS} applied to \Cref{eqn:obj_4} becomes \algname{PPM}~\citep{moreau1965proximite, martinet1970regularisation} for minimizing $f$. 
Moreover, in this case, we have $p_i=1$ for all $i \in[n]$ and \Cref{eqn:main_8003} takes on the form 
\begin{equation*} 
	\mu_{\mathrm{AS}}=\mu_{\mathrm{FS}}:=\frac{1}{n} \sum_{i=1}^n \mu_i, \quad \sigma_{\star, \mathrm{AS}}^2=\sigma_{\star, \mathrm{FS}}^2:=0 . 
	\end{equation*}

Note that $\mu_{\text {FS }}$ is the strong convexity constant of $f$, and that the neighborhood size is zero, as we would expect.

\paragraph{Nonuniform Sampling (NS).} Let $S=\{i\}$ with probability $p_i>0$, where $\sum_i p_i=1$. Then \Cref{eqn:main_8003} takes on the form
\begin{align*}
    \mu_{\mathrm{AS}}=\mu_{\mathrm{NS}}:=\min _i \frac{\mu_i}{n p_i}, \quad
    \sigma_{\star, \mathrm{AS}}^2=\sigma_{\star, \mathrm{NS}}^2:=\frac{1}{n} \sum_{i=1}^n \frac{1}{n p_i}\left\|\nabla f_i\left(x_{\star}\right)\right\|^2.
\end{align*}

If we take $p_i=\frac{\mu_i}{\sum_{j=1}^n \mu_j}$ for all $i \in[n]$, we shall refer to Algorithm \ref{alg:sppm_as} as \algname{SPPM} with importance sampling (\algname{SPPM-IS}). In this case,
$$
\mu_{\mathrm{NS}}=\mu_{\mathrm{IS}}:=\frac{1}{n} \sum_{i=1}^n \mu_i, \quad \sigma_{\star, \mathrm{NS}}^2=\sigma_{\star, \mathrm{IS}}^2:=\frac{\sum_{i=1}^n \mu_i}{n} \sum_{i=1}^n \frac{\left\|\nabla f_i\left(x_{\star}\right)\right\|^2}{n \mu_i} .
$$

This choice maximizes the value of $\mu_{\mathrm{NS}}$ (and hence minimizes the first part of the convergence rate) over the choice of the probabilities. 

Table \ref{tab:samplings-comparison} summarizes the parameters associated with various sampling strategies, serving as a concise overview of the methodologies discussed in the main text. This summary facilitates a quick comparison and reference.

\begin{table}[!tb]
	\centering
	\caption{Arbitrary samplings comparison.}
	\label{tab:samplings-comparison}
	\renewcommand{\arraystretch}{1.5}
	\resizebox{1.0\textwidth}{!}{%
		\begin{threeparttable}
			\begin{tabular}{ccc}
				\toprule
				Setting/Requirement & $\mu_{\mathrm{AS}}$ & $\sigma_{\star, \mathrm{AS}}$ \\
				\midrule
				Full & $\frac{1}{n}\sum_{i=1}^n\mu_i$ & 0 \\ \hline
				Non-Uniform & $\min_{i}\frac{\mu_i}{np_i}$ & $ \frac{1}{n} \sum_{i=1}^n \frac{1}{n p_i}\left\|\nabla f_i\left(x_{\star}\right)\right\|^2$ \\ \hline
				Nice & $\min_{C \subseteq[n],|C|=\tau} \frac{1}{\tau} \sum_{i \in C} \mu_i$ & $\sum_{C \subseteq[n],|C|=\tau} \frac{1}{\binom{n}{\tau}}\left\|\frac{1}{\tau} \sum_{i \in C} \nabla f_i\left(x_{\star}\right)\right\|^2$ \\ \hline
				Block & $\min_{j \in[b]} \frac{1}{n q_j} \sum_{i \in C_j} \mu_i$ & $\sum_{j \in[b]} q_j\left\|\sum_{i \in C_j} \frac{1}{n p_i} \nabla f_i\left(x_{\star}\right)\right\|^2$\\ \hline
				\multirow{2}{*}{Stratified} & $\min_{\mathbf{i}_b \in \mathbf{C}_b} \sum_{j=1}^b \frac{\mu_{i_j}\left|C_j\right|}{n}$ & $\sum_{\mathbf{i}_b \in \mathbf{C}_b}\left(\prod_{j=1}^b \frac{1}{\left|C_j\right|}\right)\left\|\sum_{j=1}^b \frac{\left|C_j\right|}{n} \nabla f_{i_j}\left(x_{\star}\right)\right\|^2$\\
				~ &~& Upper bound: $\frac{b}{n^2} \sum_{j=1}^b\left|C_j\right|^2 \sigma_j^2$\\
				\bottomrule
			\end{tabular}
%			\begin{tablenotes}
%				\item [{\color{blue}(1)}] We define $\tau_{\mathcal{S}} := \mathbb{E}_{S\sim \mathcal{S}}\sb{\left|S\right|}.$
%			\end{tablenotes}
	\end{threeparttable}}
\end{table}

\subsection{Extreme cases of block sampling and stratified sampling}
\paragraph{Extreme cases of block sampling.}\label{sec:appendix_extreme_bs}
We now consider two extreme cases:
\begin{itemize}
	\item If $b=1$, then \algname{SPPM-BS} = \algname{SPPM-FS} = \algname{PPM}. Let's see, as a sanity check, whether we recover the right rate as well. We have $q_1=1, C_1=[n], p_i=1$ for all $i \in[n]$, and the expressions for $\mu_{\mathrm{AS}}$ and $\sigma_{\star, \text { BS }}^2$ simplify to
	\begin{align*}
		\mu_{\mathrm{BS}}=\mu_{\mathrm{FS}}:=\frac{1}{n} \sum_{i=1}^n \mu_i, \sigma_{\star, \mathrm{BS}}^2=\sigma_{\star, \mathrm{FS}}^2:=0 .
	\end{align*}
	
	So, indeed, we recover the same rate as \algname{SPPM-FS}.
	
	\item  If $b=n$, then \algname{SPPM-BS} = \algname{SPPM-NS}. Let's see, as a sanity check, whether we recover the right rate as well. We have $C_i=\{i\}$ and $q_i=p_i$ for all $i \in[n]$, and the expressions for $\mu_{\mathrm{AS}}$ and $\sigma_{\star, \mathrm{BS}}^2$ simplify to
	\begin{align*}
		\mu_{\mathrm{BS}}=\mu_{\mathrm{NS}}:=\min _{i \in[n]} \frac{\mu_i}{n p_i},\quad
		\sigma_{\star, \mathrm{BS}}^2=\sigma_{\star, \mathrm{NS}}^2:=\frac{1}{n} \sum_{i=1}^n \frac{1}{n p_i}\left\|\nabla f_i\left(x_{\star}\right)\right\|^2 .
	\end{align*}
	So, indeed, we recover the same rate as \algname{SPPM-NS}.
\end{itemize}

\paragraph{Extreme cases of stratified sampling.}\label{sec:appendix_extreme_ss}

We now consider two extreme cases:
\begin{itemize}
	\item If $b=1$, then \algname{SPPM-SS} = \algname{SPPM-US}. Let's see, as a sanity check, whether we recover the right rate as well. We have $C_1=[n],\left|C_1\right|=n,\left(\prod_{j=1}^b \frac{1}{\left|C_j\right|}\right)=\frac{1}{n}$ and hence
	\begin{align*}
		\mu_{\mathrm{SS}}=\mu_{\mathrm{US}}:=\min _i \mu_i, \quad \sigma_{\star, \mathrm{SS}}^2=\sigma_{\star, \mathrm{US}}^2:=\frac{1}{n} \sum_{i=1}^n\left\|\nabla f_i\left(x_{\star}\right)\right\|^2 .
	\end{align*} 
	So, indeed, we recover the same rate as \algname{SPPM-US}.
	
	\item If $b=n$, then \algname{SPPM-SS} = \algname{SPPM-FS}. Let's see, as a sanity check, whether we recover the right rate as well. We have $C_i=\{i\}$ for all $i \in[n],\left(\prod_{j=1}^b \frac{1}{\left|C_j\right|}\right)=1$, and hence
	\begin{align*}
		\mu_{\mathrm{SS}}=\mu_{\mathrm{FS}}:=\frac{1}{n} \sum_{i=1}^n \mu_i, \quad \sigma_{\star, \mathrm{SS}}^2=\sigma_{\star, \mathrm{FS}}^2:=0 .
	\end{align*}
	So, indeed, we recover the same rate as \algname{SPPM-FS}.
\end{itemize}

\subsection{Federated averaging SPPM baselines}\label{sec:Avg-SPPM-baselines}
In this section we propose two new algorithms based on Federated Averaging principle. Since to the best of our knowledge there are no federated averaging analyses within the same assumptions, we provide analysis of modified versions of \algname{SPPM-AS}.
\paragraph{Averaging on $\prox_{\gamma f_i}$.}
We introduce \algname{FedProx-SPPM-AS} (see  \Cref{alg:FedProx-SPPM-AS}), which is inspired by the principles of \algname{FedProx} \citep{FedProx}. Unlike the traditional approach where a proximal operator is computed for the chosen cohort as a whole, in \algname{FedProx-SPPM-AS}, we compute and then average the proximal operators calculated for each member within the cohort. However, this algorithm is not a simple case of \algname{SPPM-AS} because it does not directly estimate the proximal operator at each step.

\begin{minipage}{.46\textwidth}
	\begin{algorithm}[H]
		\caption{Proximal Averaging SPPM-AS (\algname{FedProx-SPPM-AS})}\label{alg:FedProx-SPPM-AS}
		\begin{algorithmic}[1]
			\STATE \textbf{Input:} starting point \(x_{0, 0} \in \mathbb{R}^d\), arbitrary sampling distribution \(\mathcal{S}\), learning rate \(\gamma > 0\), local communication rounds \(K\).
			\FOR{$t = 0, 1, 2, \cdots, T - 1$}
			\STATE Sample $S_t \sim \mathcal{S}$
			\FOR{$k= 0, 1, 2, \cdots K - 1$}
			\STATE $x_{k + 1, t} = \sum_{i\in S_t}\frac{1}{\left|S_t\right|}\prox_{\gamma f_{i}}(x_{k, t})$ 
			\ENDFOR
			\STATE $x_{0, t + 1} \gets x_{K, t}$
			\ENDFOR
			\STATE \textbf{Output: $x_{0, T}$}
		\end{algorithmic}
	\end{algorithm}
\end{minipage}%
\hfill
\begin{minipage}{.49\textwidth}
	\begin{algorithm}[H]
		\caption{Federated Averaging SPPM-AS (\algname{FedAvg-SPPM-AS})}\label{alg:fedavg-sppm-as}
		\begin{algorithmic}[1]
			\STATE \textbf{Input:} starting point $x_{0,0}\in \mathbb{R}^d$, arbitrary sampling distribution $\mathcal{S}$, global learning rate $\gamma > 0$, local learning rate $\alpha > 0$, local communication rounds $K$
			\FOR{$t = 0, 1, 2, \cdots, T - 1$}
			\STATE Sample $S_t \sim \mathcal{S}$
			\STATE $\forall i\in S_t \ \tilde{f}_{i,t}(x)\gets f_i(x) + \frac{1}{2\gamma}\left\|x-x_t\right\|^2$
			\FOR{$k= 0, 1, 2, \cdots K - 1$}
			\STATE $x_{k + 1, t} = \sum_{i\in S_t}\frac{1}{\left|S_t\right|}\prox_{\alpha \tilde{f}_{i,t}}(x_{k, t})$ 
			\ENDFOR
			\STATE $x_{0, t + 1} \gets x_{K, t}$
			\ENDFOR
			\STATE \textbf{Output: $x_{0, T}$}
		\end{algorithmic}
	\end{algorithm}
\end{minipage}

Here, we employ a proof technique similar to that of Theorem \ref{thm:sppm_as} and obtain the following convergence.
% \begin{restatable}[FedProx-SPPM-AS convergence]{shadedtheorem}{fedproxconvergence}\label{thm:FedProx-SPPM-AS}
\begin{theorem}[FedProx-SPPM-AS convergence]\label{thm:FedProx-SPPM-AS}
	% \begin{theorem}[FedProx-SPPM-AS convergence]\label{thm:FedProx-SPPM-AS}
		Let the number of local iterations $K=1$, and assume that  \Cref{asm:differential} (differentiability) and \Cref{asm:strongly_convex} (strong convexity) hold. Let $x_0 \in \mathbb{R}^d$ be an arbitrary starting point. Then, for any $t \geq 0$ and any $\gamma > 0$, the iterates of \algname{FedProx-SPPM} (as described in \Cref{alg:FedProx-SPPM-AS}) satisfy:
		
		\[
		\ec{\sqn{x_t - x_\star}} \leq A_{\mathcal{S}}^{t}\left\|x_0-x_{\star}\right\|^2+\frac{B_{\mathcal{S}}}{1-A_{\mathcal{S}}},
		\] where $A_{\mathcal{S}}\eqdef \ec[S_t\sim\mathcal{S}]{\frac{1}{\left|S_{t}\right|}\sum_{i\in S_{t}}\frac{1}{1+\gamma \mu_{i}}}$ and $B_{\mathcal{S}}\eqdef \ec[S_t\sim\mathcal{S}]{\frac{1}{\left|S_{t}\right|}\sum_{i\in S_{t}}\frac{\gamma}{(1 + \gamma\mu_i)\mu_i}\sqn{\nabla f_{i}(x_\star))}}$.
	% \end{restatable}
    \end{theorem}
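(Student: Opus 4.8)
\textbf{Proof plan for Theorem~\ref{thm:FedProx-SPPM-AS}.}

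The plan is to mimic the proof of Theorem~\ref{thm:sppm_as}, replacing the single proximal step $\prox_{\gamma f_{S_t}}$ by the average $\frac{1}{|S_t|}\sum_{i\in S_t}\prox_{\gamma f_i}$, and to exploit the firm nonexpansiveness / contraction properties of each individual proximal operator together with strong convexity. First I would fix an iteration $t$ and condition on $S_t$. For a single index $i$, since $f_i$ is $\mu_i$-strongly convex and differentiable, the operator $\prox_{\gamma f_i}$ is a contraction: writing $u = \prox_{\gamma f_i}(x_t)$, the optimality condition $0 = \gamma\nabla f_i(u) + u - x_t$ combined with $0 = \gamma \nabla f_i(x_\star) + x_\star - x_\star^{(i)}$-type manipulations — more precisely, using that $\prox_{\gamma f_i}$ maps $x_\star$ to some point $\hat x_i^\star \eqdef \prox_{\gamma f_i}(x_\star)$ — yields the standard bound $\|\prox_{\gamma f_i}(x_t) - \hat x_i^\star\|^2 \leq \frac{1}{(1+\gamma\mu_i)^2}\|x_t - x_\star\|^2$. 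Then I would write $\hat x_i^\star = x_\star - \frac{\gamma}{1+\ldots}$; concretely $\hat x_i^\star - x_\star = -\gamma \nabla f_i(\hat x_i^\star)$, and since $\nabla f_i(x_\star)$ is what appears in $B_{\mathcal{S}}$, I would relate $\|\hat x_i^\star - x_\star\|$ to $\|\nabla f_i(x_\star)\|$ by another strong-convexity estimate, obtaining a term of order $\frac{\gamma^2}{(1+\gamma\mu_i)^2}\|\nabla f_i(x_\star)\|^2$ or, after the algebra the statement suggests, $\frac{\gamma}{(1+\gamma\mu_i)\mu_i}\|\nabla f_i(x_\star)\|^2$.

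Next I would handle the average. Using convexity of $\|\cdot\|^2$ (Jensen) over the uniform weights $\frac{1}{|S_t|}$,
\begin{equation*}
\Big\| \tfrac{1}{|S_t|}\sum_{i\in S_t}\prox_{\gamma f_i}(x_t) - x_\star\Big\|^2 \leq \tfrac{1}{|S_t|}\sum_{i\in S_t} \big\|\prox_{\gamma f_i}(x_t) - x_\star\big\|^2,
\end{equation*}
and then split $\prox_{\gamma f_i}(x_t) - x_\star = (\prox_{\gamma f_i}(x_t) - \hat x_i^\star) + (\hat x_i^\star - x_\star)$, applying Young's inequality in the form matching the contraction factor $\frac{1}{1+\gamma\mu_i}$ so that the cross term is absorbed and the first piece contracts at rate $\frac{1}{1+\gamma\mu_i}$ while the second contributes the noise term. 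Taking conditional expectation over $S_t\sim\mathcal{S}$ gives a one-step recursion $\ec{\|x_{t+1}-x_\star\|^2\mid x_t} \leq A_{\mathcal{S}}\|x_t - x_\star\|^2 + B_{\mathcal{S}}$ with exactly the claimed $A_{\mathcal{S}}$ and $B_{\mathcal{S}}$. Finally I would unroll this recursion using the tower property: $\ec{\|x_t - x_\star\|^2} \leq A_{\mathcal{S}}^t \|x_0-x_\star\|^2 + B_{\mathcal{S}}\sum_{j=0}^{t-1}A_{\mathcal{S}}^j \leq A_{\mathcal{S}}^t\|x_0-x_\star\|^2 + \frac{B_{\mathcal{S}}}{1-A_{\mathcal{S}}}$, which is the stated bound (note $A_{\mathcal{S}} < 1$ since each $\frac{1}{1+\gamma\mu_i} < 1$).

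The main obstacle I anticipate is getting the noise term into precisely the form $\frac{\gamma}{(1+\gamma\mu_i)\mu_i}\|\nabla f_i(x_\star)\|^2$ rather than a looser $\frac{\gamma^2}{(1+\gamma\mu_i)^2}$-type expression: this requires choosing the Young's-inequality split constant carefully (tying it to $\gamma\mu_i$) and using the sharp strong-convexity bound $\|\hat x_i^\star - x_\star\| \leq \frac{\gamma}{1+\gamma\mu_i}\|\nabla f_i(x_\star)\|$ in combination with the contraction estimate, then simplifying. A secondary subtlety is that, unlike plain \algname{SPPM-AS}, the averaged operator is \emph{not} itself a proximal operator, so one cannot directly invoke Theorem~\ref{thm:sppm_as}; the per-client decomposition and Jensen step are what make the argument go through, and this is why the theorem is restricted to $K=1$ (for $K\geq 2$ the inner iteration composes non-proximal averaged maps and the clean recursion breaks down).
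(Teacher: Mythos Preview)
Your proposal is correct and uses the same ingredients as the paper's proof: Jensen on the average, the prox contraction (Fact~\ref{fact:fact2}), Young's inequality with the per-client parameter $\alpha_i=\gamma\mu_i$, and the recurrence Fact~\ref{fact:fact3}. The only organizational difference is that the paper avoids your intermediate point $\hat x_i^\star=\prox_{\gamma f_i}(x_\star)$ altogether: it invokes Fact~\ref{fact:fact1} directly to write $x_\star=\prox_{\gamma f_i}(x_\star+\gamma\nabla f_i(x_\star))$, applies the contraction \emph{once} to the pair $(x_{t-1},\,x_\star+\gamma\nabla f_i(x_\star))$ to get $\frac{1}{(1+\gamma\mu_i)^2}\|x_{t-1}-x_\star-\gamma\nabla f_i(x_\star)\|^2$, and only then uses Young with $\alpha_i=\gamma\mu_i$ --- which lands immediately on the exact constants $\frac{1}{1+\gamma\mu_i}$ and $\frac{\gamma}{(1+\gamma\mu_i)\mu_i}$ you were worried about, with no second strong-convexity estimate needed.
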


	\paragraph{Federated averaging for $\prox$ approximation.}
	An alternative method involves estimating the proximal operator by averaging the proximal operators calculated for each worker's function. We call it \emph{Federated Averaging Stochastic Proximal Point Method} (\algname{FedAvg-SPPM-AS}, see \Cref{alg:fedavg-sppm-as}).
 (\algname{FedAvg-SPPM-AS}, see \Cref{alg:fedavg-sppm-as}).

After selecting and fixing a sample of workers $S_k$, the main objective is to calculate the proximal operator. This can be accomplished by approximating the proximal calculation with the goal of minimizing $\tilde{f}_S(x)=f_S(x)+\frac{2}{\gamma}\left\|x-x_t\right\|^2$. It can be observed that this approach is equivalent to \algname{FedProx-SPPM-AS}, as at each local step we calculate
\[
\prox_{\alpha\tilde{f_i}}(x_{k, t})\eqdef \argmin_{z\in\Rd}\left[\tilde{f_i}(z)+\frac{2}{\alpha}\sqn{z-x_{k, t}}\right]=\argmin_{z\in\Rd}\left[f_i(z)+\left(\frac{2}{\gamma}+\frac{2}{\alpha}\right)\sqn{z-x_{k, t}}\right].
\]
% \newpage
% \clearpage
\section{Training details}
\subsection{Non-IID Data Generation}\label{sec:data_generation}
In our study, we validate performance and compare the benefits of \algname{SPPM-AS} over \algname{SPPM} using well-known datasets such as \texttt{mushrooms}, \texttt{a6a}, \texttt{w6a}, and \texttt{ijcnn1.bz2} from LibSVM~\citep{chang2011libsvm}. To ensure relevance to our research focus, we adopt a feature-wise non-IID setting, characterized by variation in feature distribution across clients. This variation is introduced by clustering the features using the K-means algorithm, with the number of clusters set to $10$ and the number of clients per cluster fixed at $10$ for simplicity. We visualize the clustered data using t-SNE in \Cref{fig:tSNE1}, where we observe that the data are divided into 10 distinct clusters with significantly spaced cluster centers.

\begin{figure}[!tb]
	\centering
	\begin{subfigure}[b]{0.475\textwidth}
		\centering
		\includegraphics[width=1.0\textwidth, trim=90 46 30 50, clip]{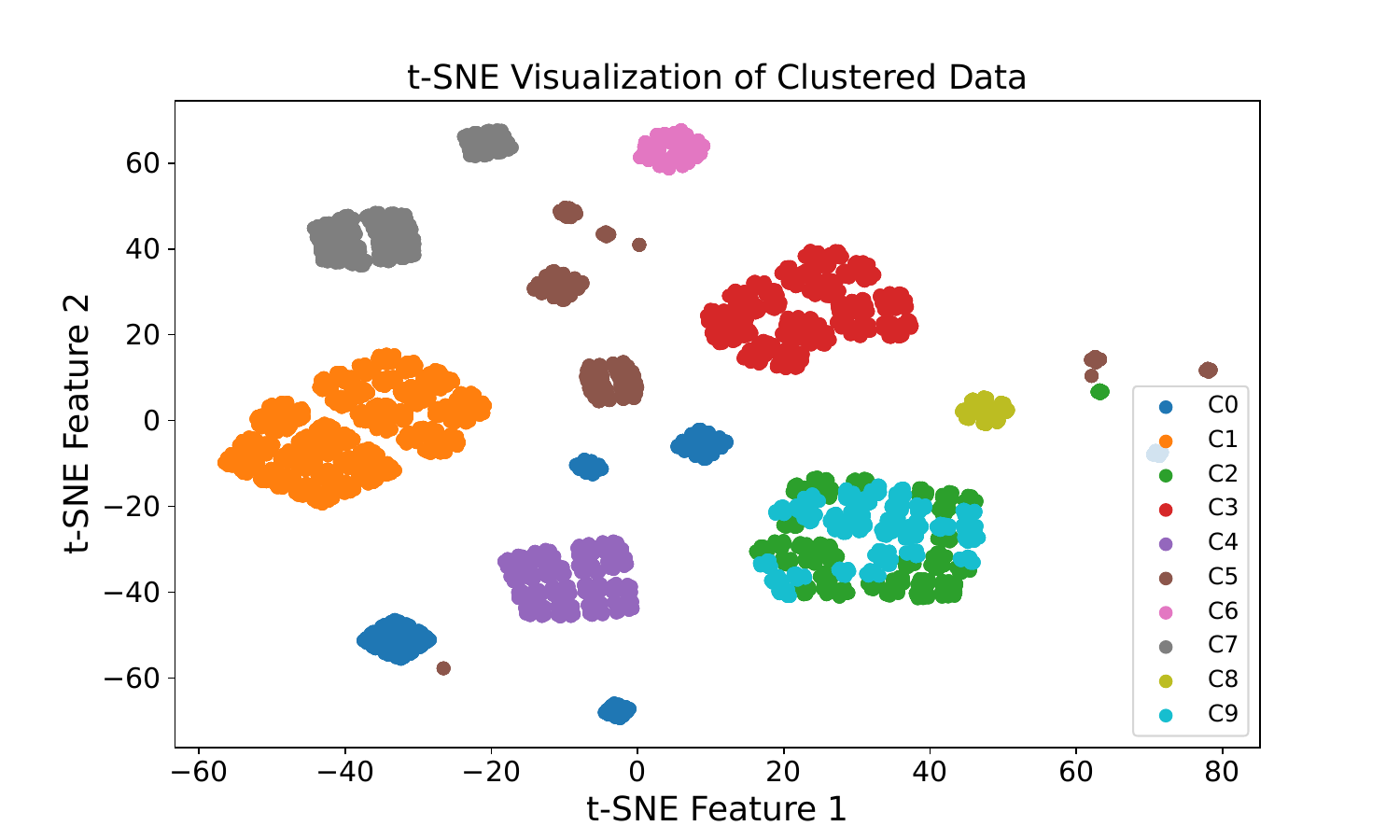}
		\caption{mushrooms, 10 clusters}
	\end{subfigure}
	% \hfill
	\begin{subfigure}[b]{0.475\textwidth}
		\centering
		\includegraphics[width=1.0\textwidth, trim=90 46 30 50, clip]{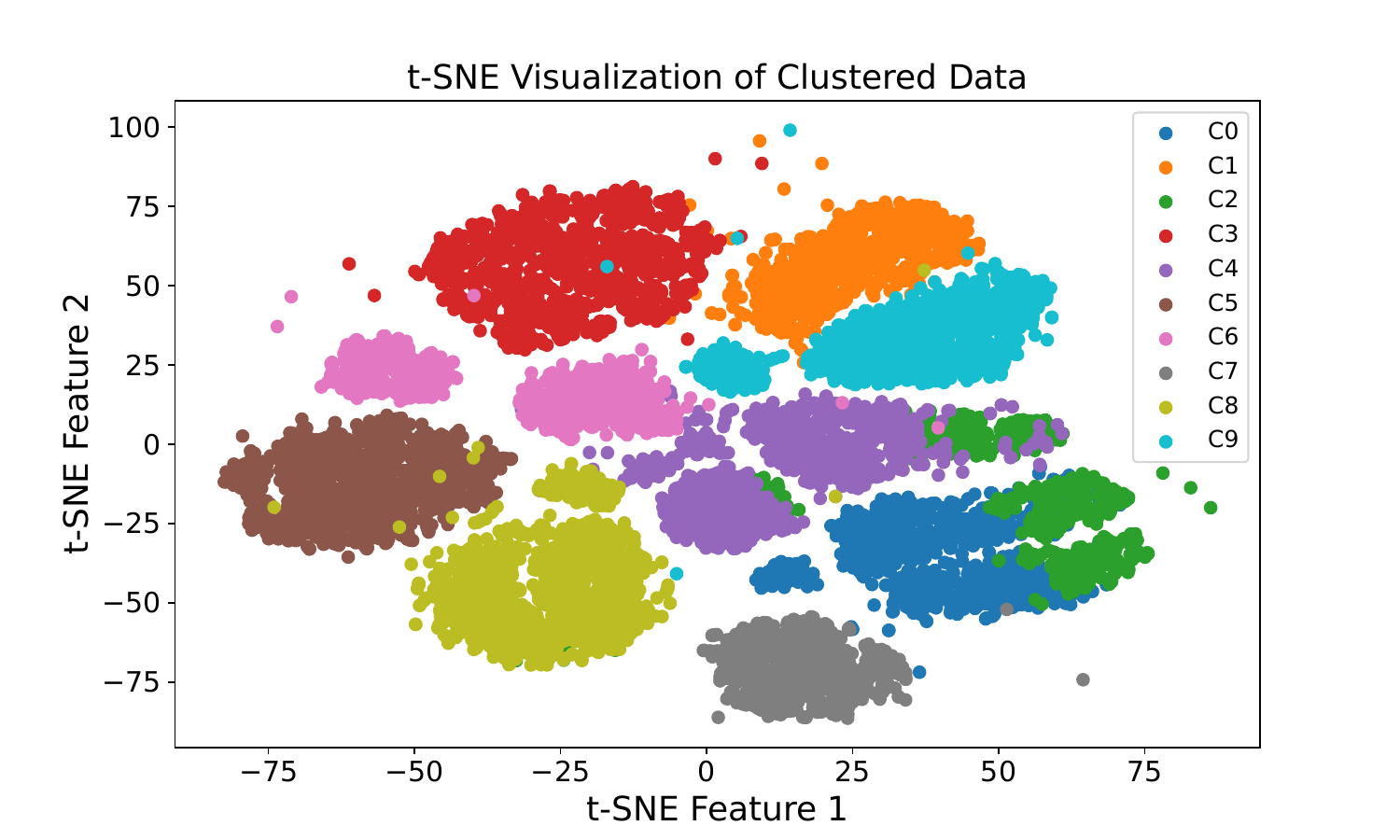}
		\caption{a6a, 10 clusters}
	\end{subfigure}
	\caption{t-SNE visualization of cluster-features across data samples on clients.} \label{fig:tSNE1}
\end{figure}

\subsection{Sampling}
To simulate random sampling among clients within these 10 clusters, where each cluster comprises 10 clients, we consider two contrasting scenarios:

\begin{itemize}
	\item \textit{Case I - \algname{SPPM-BS}:} Assuming clients within the same cluster share similar features and data distributions, sampling all clients from one cluster (i.e., $C=10$ clients) results in a homogeneous sample.
	\item \textit{Case II - \algname{SPPM-SS}:} Conversely, by traversing all 10 clusters and randomly sampling one client from each, we obtain a group of 10 clients representing maximum heterogeneity. 
\end{itemize}

We hypothesize that any random sampling from the 100 clients will yield performance metrics lying between these two scenarios. In \Cref{fig:setting_comp_1}, we examine the impact of sampling clients with varying degrees of heterogeneity using a fixed learning rate of $0.1$. Our findings indicate that heterogeneous sampling results in a significantly smaller convergence neighborhood $\sigma^2_{\star}$. This outcome is attributed to the broader global information captured through heterogeneous sampling, in contrast to homogeneous sampling, which increases the data volume without contributing additional global insights. As these two sampling strategies represent the extremes of arbitrary sampling, any random selection will fall between them in terms of performance. Given their equal cost and the superior performance of the \algname{SPPM-SS} strategy in heterogeneous FL environments, we designate \algname{SPPM-SS} as our default sampling approach.

\begin{figure}[!tb]
	\centering
	\begin{subfigure}[b]{0.31\textwidth}
		\centering
		\includegraphics[width=1.0\textwidth, trim=0 0 0 0, clip]{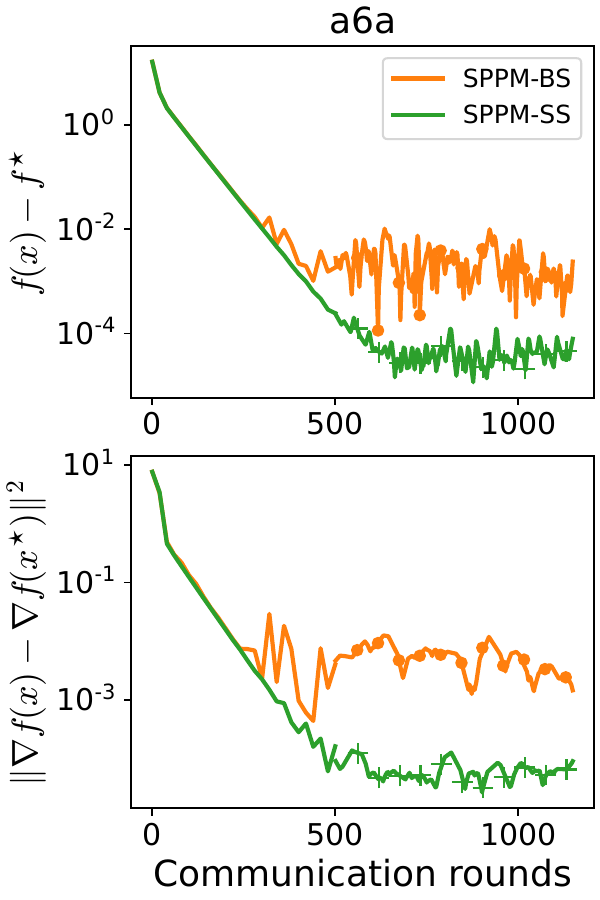}
	\end{subfigure}
	\hfill
	\begin{subfigure}[b]{0.31\textwidth}
		\centering
		\includegraphics[width=1.0\textwidth, trim=0 0 0 0, clip]{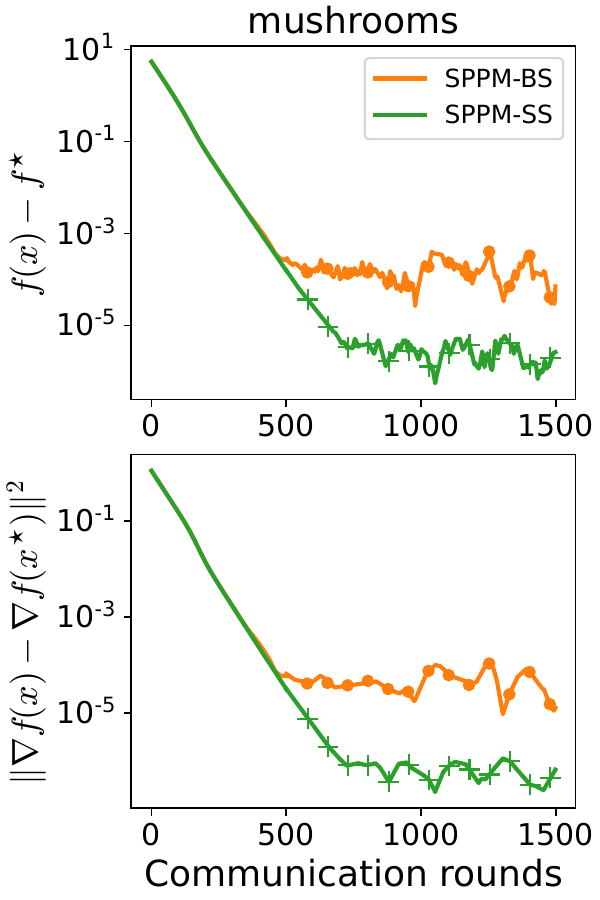}
	\end{subfigure}
	\hfill
	\begin{subfigure}[b]{0.31\textwidth}
		\centering
		\includegraphics[width=1.0\textwidth, trim=0 0 0 0, clip]{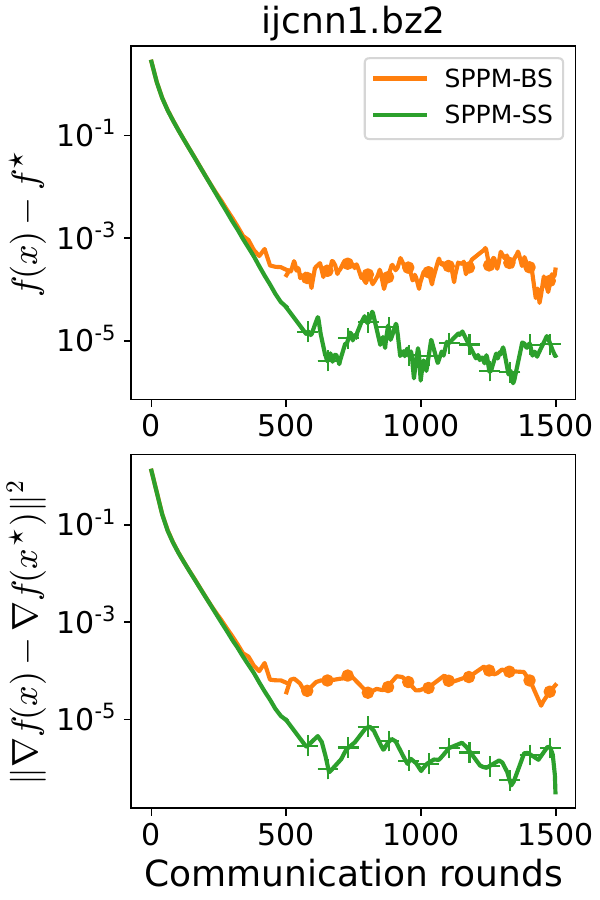}
	\end{subfigure}
	\caption{Comparison with \algname{SPPM-SS} and \algname{SPPM-BS} samplings.} \label{fig:setting_comp_1}
\end{figure}

\begin{algorithm}[!tb]
	\caption{\algname{SPPM-AS} Adaptation for Federated Learning}\label{alg:SPPMFL}
	\begin{algorithmic}[1]
		\STATE \textbf{Input:} Initial point $x^0\in \Rd$, cohort size $C\geq 1$, learning rate $\gamma > 0$, clusters $q\geq C$, local communication rounds $K$
		\FOR{$t = 0, 1, 2, \cdots$}
		% \STATE \texttt{Random}:
		% \STATE \quad Server samples $n$ clients from all the clusters uniformly to construct $n$ clients
		\STATE \algname{SPPM-BS}:
		\STATE \quad Server samples a cluster $q_i$ from $[q]$
		\STATE \quad Server samples $C$ clients, denoted as $[C]$ from cluster $q_i$
		\STATE \algname{SPPM-SS}:
		\STATE \quad Server samples $C$ clusters from $[q]$
		\STATE \quad Server sample 1 client from each selected cluster to construct $C$ clients
		\STATE Server broadcasts the model $x_t$ to each $C_i \in [C]$
		\STATE All selected clients in parallel construct $F_{\xi_t^1, \cdots, \xi_t^C} (x_t)$
		\STATE All selected clients together evaluate the prox for $K$ local communication rounds to obtain 
		\STATE $$x_{t+1} \simeq \prox_{\gamma F_{\xi_t^1, \cdots, \xi_t^C}}(x_t)$$
		\STATE All selected clients send the updated model $x_{t+1}$ to the server
		
		\ENDFOR
	\end{algorithmic}
\end{algorithm}

\subsection{{SPPM-AS} algorithm adaptation for FL}
In the main text, \Cref{alg:sppm_as} outlines the general form of \algname{SPPM-AS}. For the convenience of implementation in FL contexts and to facilitate a better understanding, we introduce a tailored version of the \algname{SPPM-AS} algorithm specific to FL, designated as \Cref{alg:SPPMFL}. Notably, as block sampling is adopted as our default method, this adaptation of the algorithm specifically addresses the nuances of the block sampling approach. We also conducted arbitrary sampling on synthetic datasets and neural networks to demonstrate the algorithm's versatility.  

% \newpage
% \clearpage
\section{Additional experiments on logistic regression}
\subsection{Communication cost on various datasets to a target accuracy}
In \Cref{fig:tissue0}, we presented the total communication cost relative to the number of rounds required to achieve the target accuracy for the selected cohort. In this section, we provide more details on how is this figure was obtained and present additional results for various datasets. 
\begin{figure*}[!htbp]
	\centering
	\begin{subfigure}[b]{0.31\textwidth}
		\centering
		\includegraphics[width=1.0\textwidth, trim=0 0 0 0, clip]{img/cohort_squeeze/main/BFGS_clusters_100_a6a_False_True_1e-06_15_1000.0_minibatch_sppm_round_0.005main.pdf}
	\end{subfigure}
	\hfill
	\centering
	\begin{subfigure}[b]{0.31\textwidth}
		\centering
		\includegraphics[width=1.0\textwidth, trim=0 0 0 0, clip]{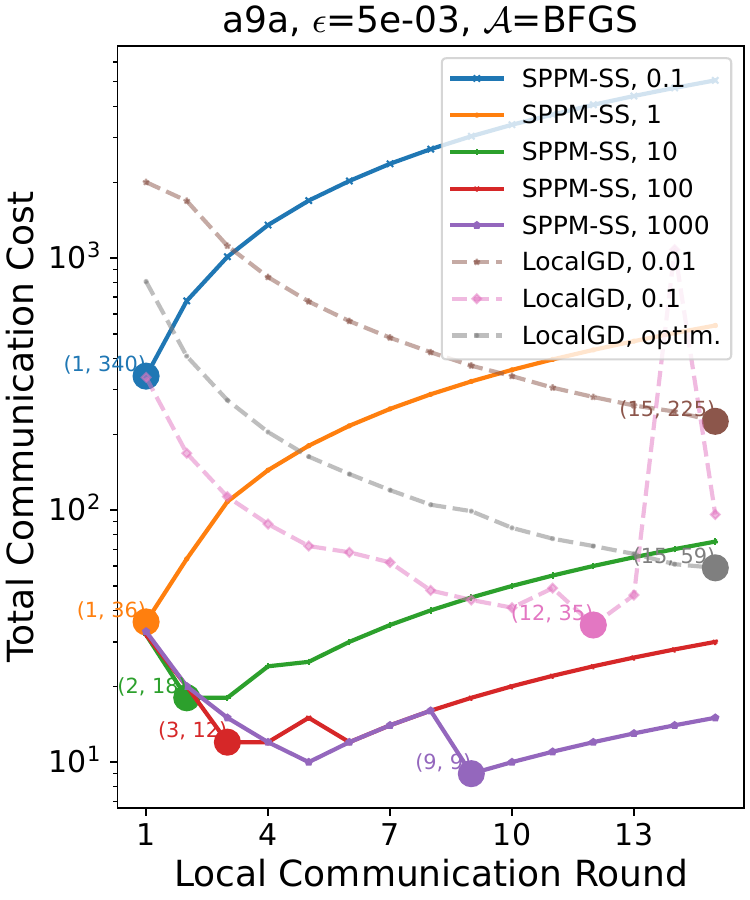}
		% \caption{mushrooms}
		% \label{fig:y equals x}
	\end{subfigure}
	\hfill
	%  \begin{subfigure}[b]{0.24\textwidth}
		%      \centering
		%      \includegraphics[width=1.0\textwidth, trim=0 0 0 0, clip]{img/main/BFGS_clusters_100_ijcnn1.bz2_False_True_1e-06_15_1000.0_minibatch_sppm_round_0.005main.pdf}
		%  \end{subfigure}
	%  \hfill
	\begin{subfigure}[b]{0.31\textwidth}
		\centering
		\includegraphics[width=1.0\textwidth, trim=0 0 0 0, clip]{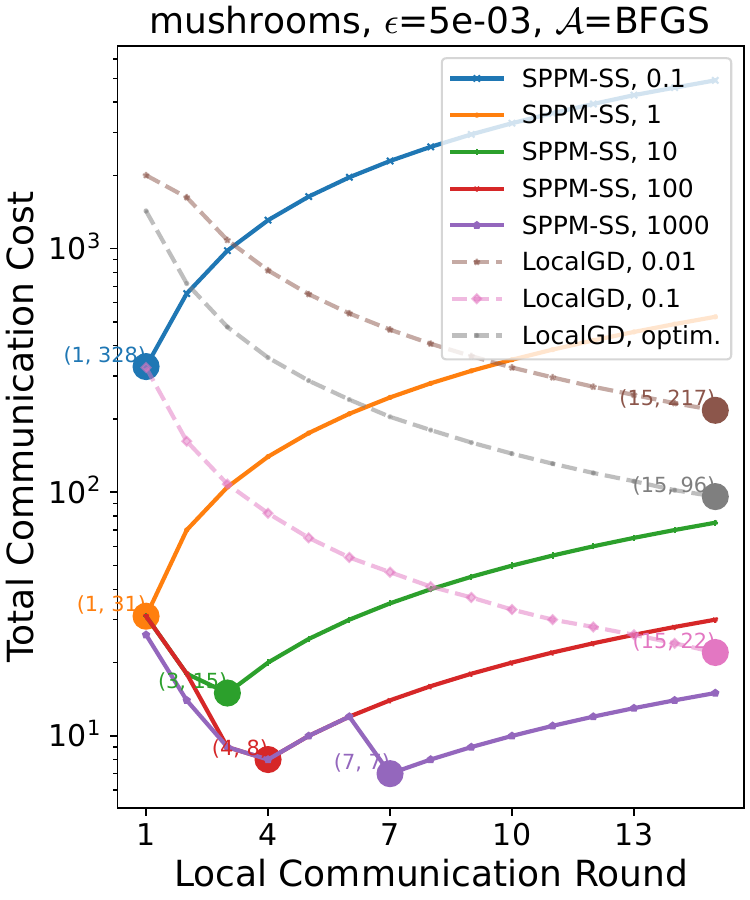}
	\end{subfigure}
	%	\vspace{-5mm}  
	\hfill
	\caption{Total communication cost with respect to the local communication round. For \algname{LocalGD}, $K$ represents the local communication round $K$ for finding the prox of the current model. For \algname{LocalGD}, we slightly abuse the x-axis, which represents the total number of local iterations, no local communication is required. We calculate the total communication cost to reach a fixed global accuracy $\epsilon$ such that $\sqn{x_t - x_\star}<\epsilon$. \algname{LocalGD, optim} represents using the theoretical optimal stepsize of \algname{LocalGD} with minibatch sampling.}
	\label{fig:tissue1_arch}
\end{figure*} 

\begin{figure}[!htbp]
	\centering
	\begin{minipage}{0.48\textwidth}
		\centering
		\begin{subfigure}[b]{0.48\textwidth}
			\centering
			\includegraphics[width=\textwidth, trim=0 0 0 0, clip]{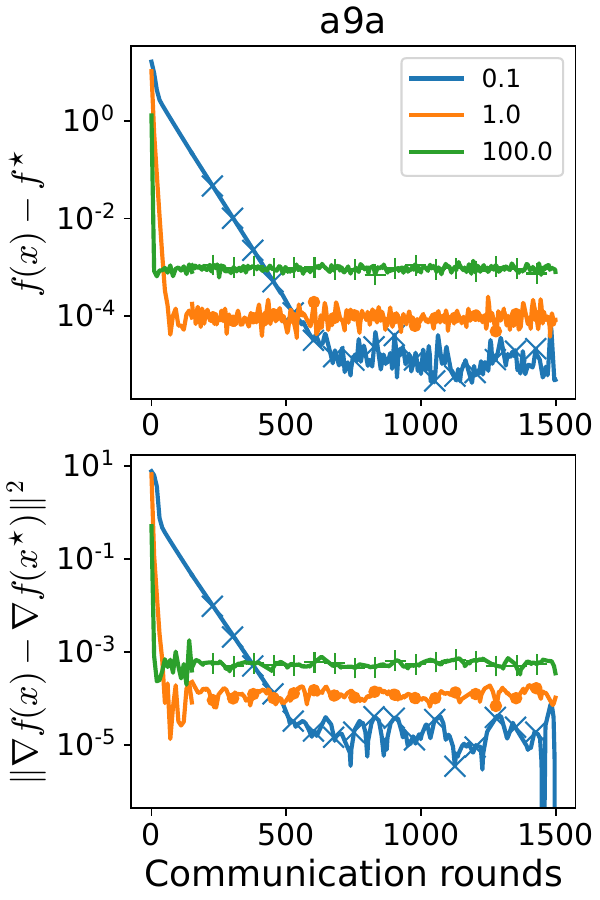}
			% \caption{First Subfigure}
		\end{subfigure}
		\hfill
		\begin{subfigure}[b]{0.48\textwidth}
			\centering
			\includegraphics[width=\textwidth, trim=0 0 0 0, clip]{img/cohort_squeeze/gammas/BFGS_clusters_100_a9a_False_True_1e-06_4_100.0_minibatch_sppm_round.pdf}
			%    \caption{Second Subfigure}
		\end{subfigure}
		\caption{$K=4$.}\label{fig:abs3_1}
	\end{minipage}
	\hfill
	\begin{minipage}{0.48\textwidth}
		\centering
		\begin{subfigure}[b]{0.48\textwidth}
			\centering
			\includegraphics[width=\textwidth, trim=0 0 0 0, clip]{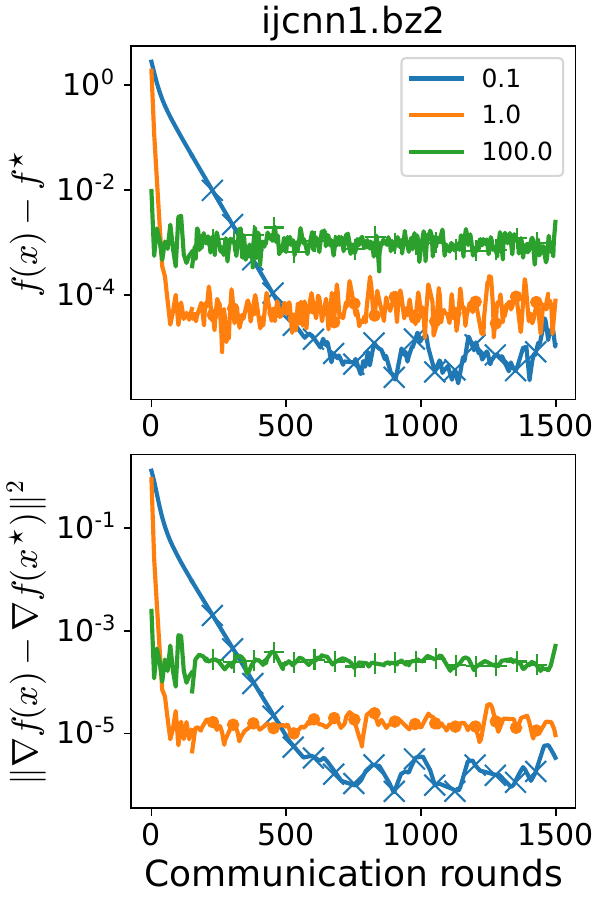}
		\end{subfigure}
		\hfill
		\begin{subfigure}[b]{0.48\textwidth}
			\centering
			\includegraphics[width=\textwidth, trim=0 0 0 0, clip]{img/cohort_squeeze/gammas/BFGS_clusters_100_ijcnn1.bz2_False_True_1e-06_16_100.0_minibatch_sppm_round.pdf}
		\end{subfigure}
		\caption{$K=16$.}\label{fig:abs3_2}
	\end{minipage}
	% \caption{Convergence speed and neighborhood trade-off.}\label{fig:abs3}
\end{figure}

% \subsection{Additional Results with Various Learning Rates under Inexact Proximal Methods}
\subsection{Convergence speed and $\sigma^2_{\star, \mathrm{SS}}$ trade-off}\label{sec:trade_off}
Unlike \algname{SGD}-type methods such as \algname{MB-GD} and \algname{MB-LocalGD}, in which the largest allowed learning rate is $1/A$, where $A$ is a constant proportion to the smoothness of the function we want to optimize~\citep{gower2019sgd}. For larger learning rate, \algname{SGD}-type method may not converge and exploding. However, for stochastic proximal point methods, they have a very descent benefit of allowing arbitrary learning rate. In this section, we verify whether our proposed method can allow arbitrary learning rate and whether we can find something interesting. We considered different learning rate scale from 1e-5 to 1e+5. We randomly selected three learning rates [0.1, 1, 100] for visual representation with the results presented in \Cref{fig:abs3_1} and \Cref{fig:abs3_2}. We found that a larger learning rate leads to a faster convergence rate but results in a much larger neighborhood, \(\sigma^2_{\star, \mathrm{SS}} / \mu^2_{\mathrm{SS}}\). This can be considered a trade-off between convergence speed and neighborhood size, \(\sigma^2_{\star, \mathrm{SS}}\). By default, we consider setting the learning rate to $1.0$ which has a good balance between the convergence speed and the neighborhood size.

In this section, we extend our analysis by providing additional results across a broader range of datasets and varying learning rates. Specifically, \Cref{fig:abs3_1} illustrates the outcomes using 4 local communication rounds ($K=4$), while \Cref{fig:abs3_2} details the results for 16 local communication rounds ($K=16$). Previously, in \Cref{fig:tissue0}, we explored the advantages of larger $K$ values. Here, our focus shifts to determining if similar trends are observable across different $K$ values. Through comprehensive evaluations on various datasets and multiple $K$ settings, we have confirmed that lower learning rates in \algname{SPPM-AS} result in slower convergence speeds; however, they also lead to a smaller final convergence neighborhood.

\subsection{Additional experiments on hierarchical FL}\label{sec:hierarchy-fl}
In \Cref{fig:abs1_hierarchical} of the main text, we detail the total communication cost for hierarchical Federated Learning (FL) utilizing parameters $c_1=0.1$ and $c_2=1$ on the \texttt{a6a} dataset. Our findings reveal that \algname{SPPM-AS} achieves a significant reduction in communication costs, amounting to $94.87\%$, compared with the conventional FL setting where $c_1=1$ and $c_2=1$, which shows a 74.36\% reduction. In this section, we extend our analysis with comprehensive evaluations on additional datasets, namely \texttt{ijcnn1.bz2}, \texttt{a9a}, and \texttt{mushrooms}. Beyond considering $c_1=0.1$, we further explore the impact of reducing the local communication cost from each client to the corresponding hub to $c_1=0.05$. The results, presented in \Cref{fig:tissue11} and the continued \Cref{fig:tissue11_2}, reinforce our observation: hierarchical FL consistently leads to further reductions in communication costs. A lower $c_1$ parameter correlates with even greater savings in communication overhead. These results not only align with our expectations but also underscore the efficacy of our proposed \algname{SPPM-AS} in cross-device FL settings.

\begin{figure*}[!htbp]
	\centering
	\begin{subfigure}[b]{0.31\textwidth}
		\centering
		\includegraphics[width=1.0\textwidth, trim=0 0 0 0, clip]{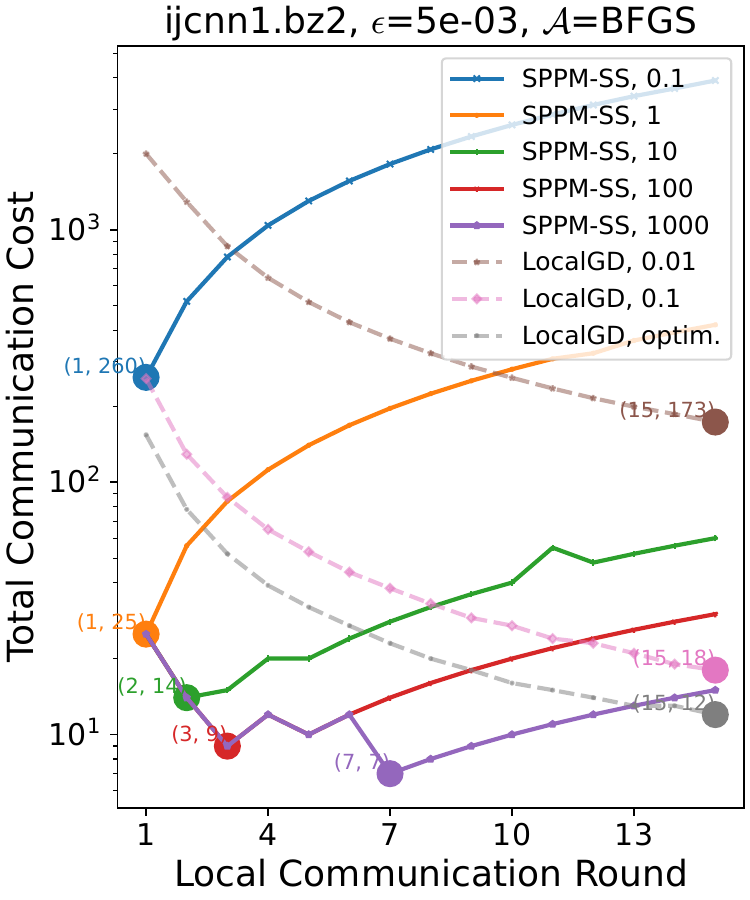}
	\end{subfigure}
	\hfill
	\centering
	\begin{subfigure}[b]{0.31\textwidth}
		\centering
		\includegraphics[width=1.0\textwidth, trim=0 0 0 0, clip]{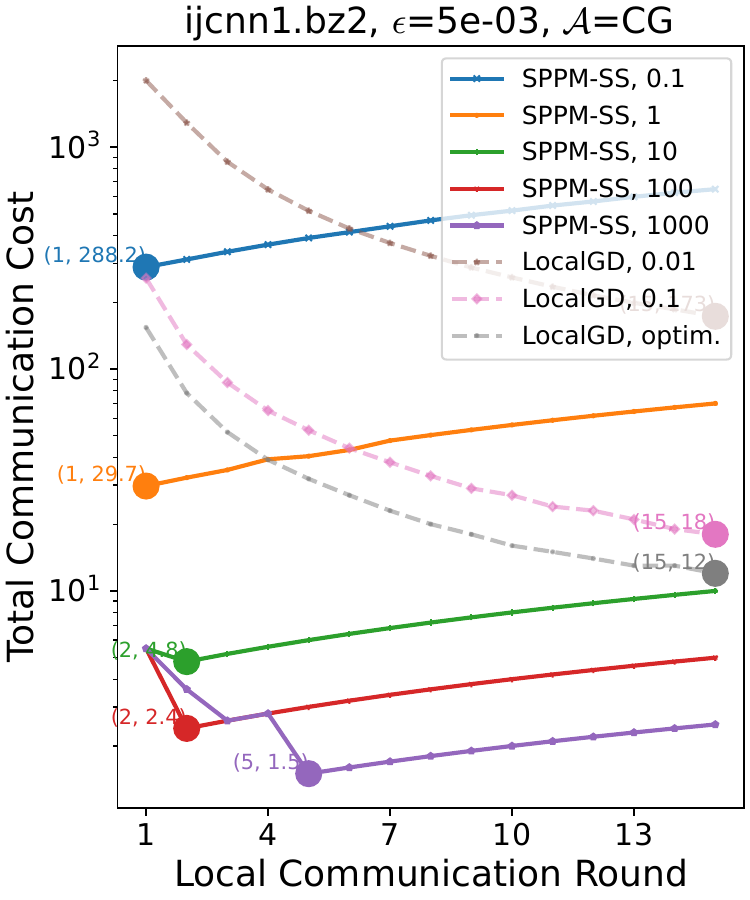}
	\end{subfigure}
	\hfill
	\begin{subfigure}[b]{0.31\textwidth}
		\centering
		\includegraphics[width=1.0\textwidth, trim=0 0 0 0, clip]{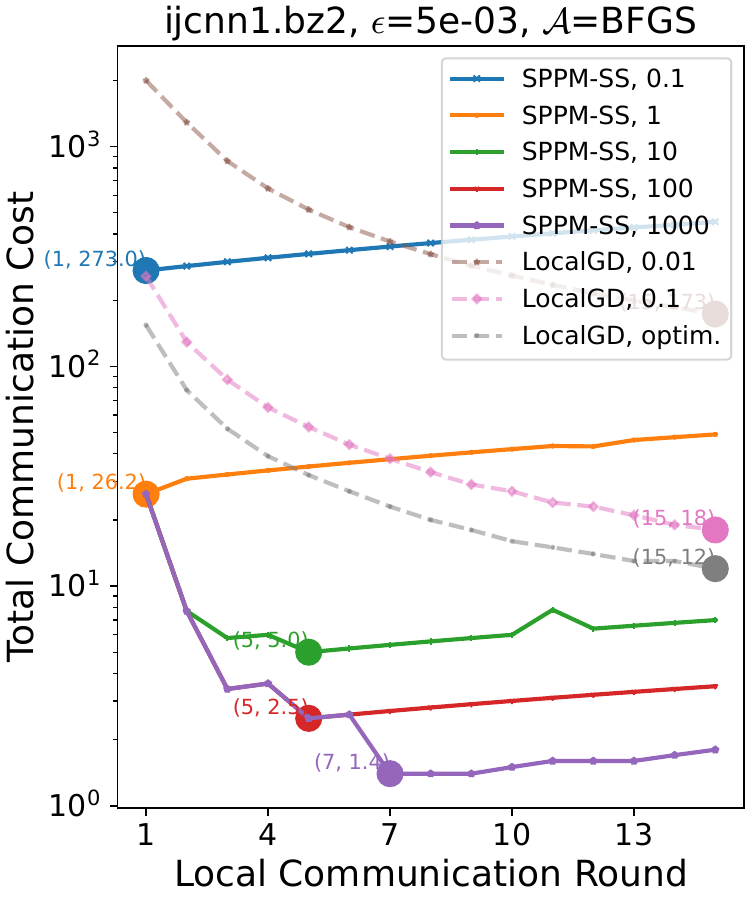}
	\end{subfigure}
	% ##############################
	%	\vspace{-5mm}
	\hfill
	\caption{The total communication cost is analyzed with respect to the number of local communication rounds. For \algname{LocalGD}, $K$ represents the local communication round used for finding the prox of the current model. In the case of \algname{LocalGD}, we slightly abuse the x-axis to represent the total number of local iterations, as no local communication is required. We calculate the total communication cost needed to reach a fixed global accuracy $\epsilon$, such that $\sqn{x_t - x_\star} < \epsilon$. \algname{LocalGD, optim} denotes the use of the theoretically optimal stepsize for \algname{LocalGD} with minibatch sampling. Comparisons are made between different prox solvers (\algname{CG} and \algname{BFGS}).
	}
	\label{fig:tissue11}
\end{figure*}

\begin{figure*}[!htbp]
	\centering
	\begin{subfigure}[b]{0.30\textwidth}
		\centering
		\includegraphics[width=1.0\textwidth]{img/cohort_squeeze/main/BFGS_clusters_100_a9a_False_True_1e-06_15_1000.0_minibatch_sppm_round_0.005main.pdf}
	\end{subfigure}
	\hfill
	\centering
	\begin{subfigure}[b]{0.30\textwidth}
		\centering
		\includegraphics[width=1.0\textwidth]{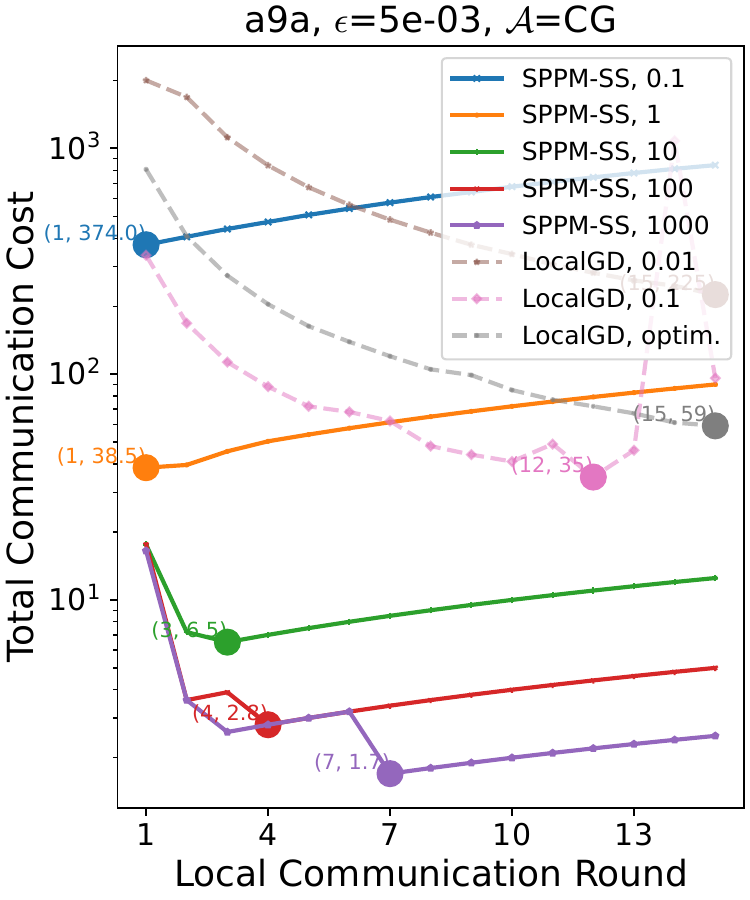}
	\end{subfigure}
	\hfill
	\begin{subfigure}[b]{0.30\textwidth}
		\centering
		\includegraphics[width=1.0\textwidth]{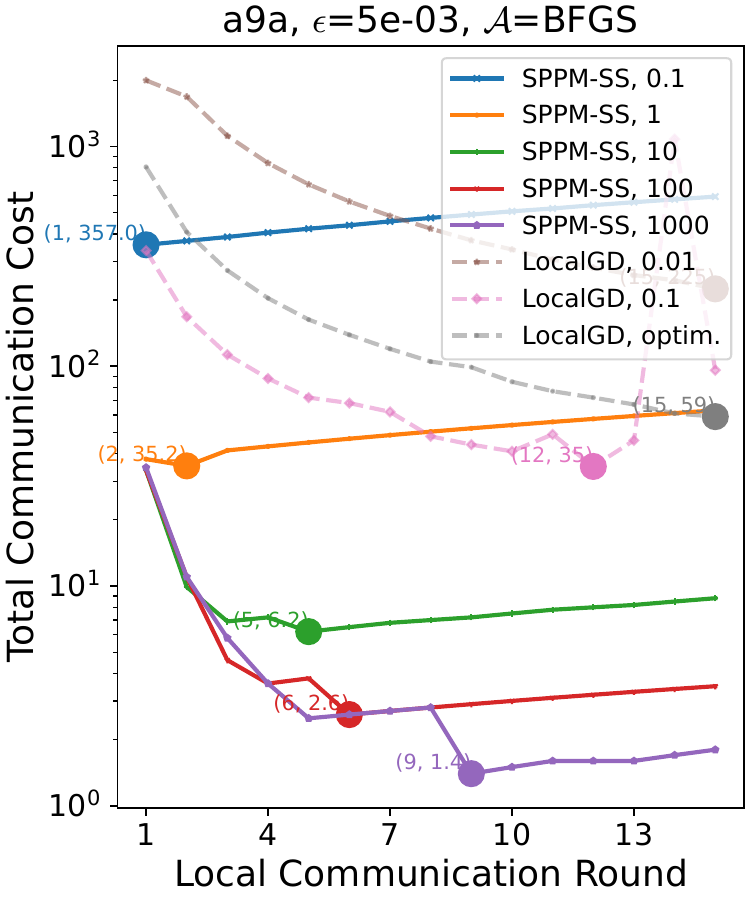}
	\end{subfigure}	
	\centering
	\begin{subfigure}[b]{0.30\textwidth}
		\centering
		\includegraphics[width=1.0\textwidth]{img/cohort_squeeze/main/BFGS_clusters_100_mushrooms_False_True_1e-06_15_1000.0_minibatch_sppm_round_0.005main.pdf}
		\caption{standard FL, $c_1=1, c_2=0$}
	\end{subfigure}
	\hfill
	\centering
	\begin{subfigure}[b]{0.30\textwidth}
		\centering
		\includegraphics[width=1.0\textwidth]{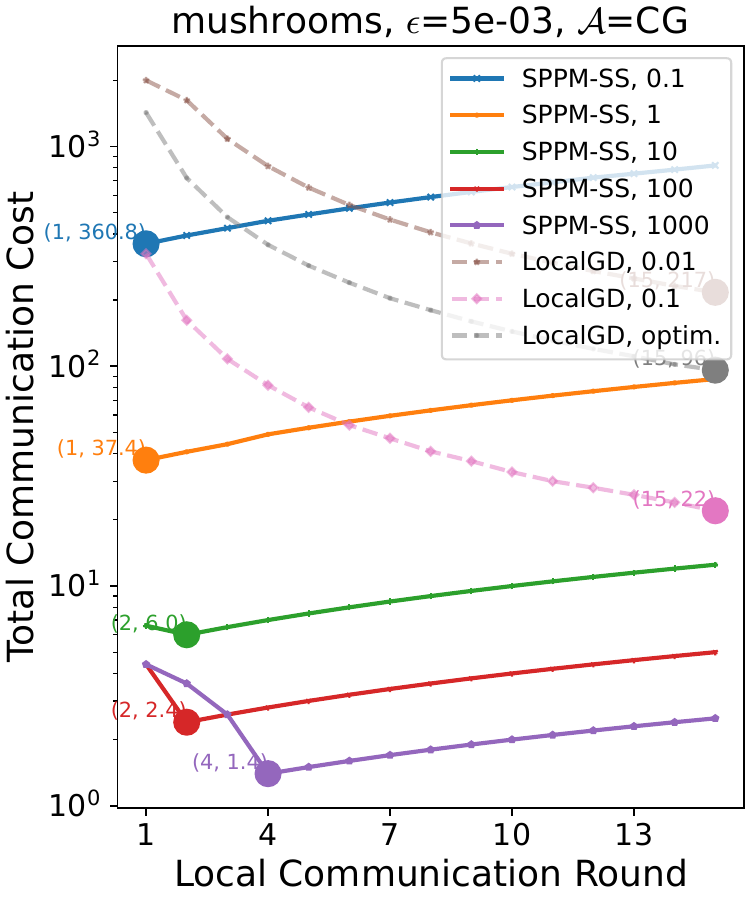}
		\caption{hierarchical FL, $c_1=0.1, c_2=1$}
	\end{subfigure}
	\hfill
	\begin{subfigure}[b]{0.30\textwidth}
		\centering
		\includegraphics[width=1.0\textwidth]{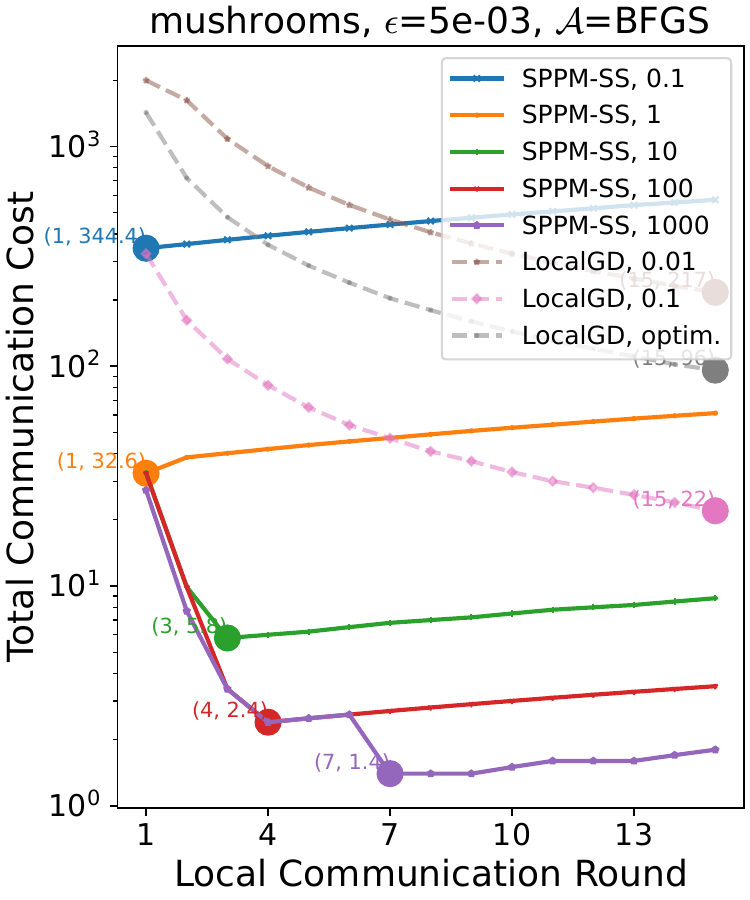}
		\caption{hierarchical FL, $c_1=0.05, c_2=1$}
	\end{subfigure}
	%	\vspace{-5mm}
	\hfill
	\caption{Total communication cost with respect to the local communication round.}
	\label{fig:tissue11_2}
\end{figure*}  

\section{Additional neural network experiments}\label{sec:nn_additional}
\subsection{Experiment Details} For our neural network experiments, we used the \texttt{FEMNIST} dataset \citep{leaf}. Each client was created by uniformly selecting from user from original dataset, inherently introducing heterogeneity among clients. We tracked and reported key evaluation metrics—training and testing loss and accuracy—after every 5 global communication rounds. The test dataset was prepared by dividing each user's data into a 9:1 ratio, following the partitioning approach of the FedLab framework \citep{FedLab}. For the \algname{SPPM-AS} algorithm, we selected \algname{Adam} as the optimizer for the proximal operator. The learning rate was determined through a grid search across the following range: $[0.0001, 0.0005, 0.001, 0.005, 0.01, 0.05, 0.1, 0.5]$. The model architecture comprises a CNN with the following layers: Conv2d(1, 32, 5), ReLU, Conv2d(32, 64, 5), MaxPool2d(2, 2), a fully connected (FC) layer with 128 units, ReLU, and another FC layer with 128 units, as specified in Table \ref{tab:CNN}. Dropout, learning rate scheduling, gradient clipping, etc., were not used to improve the interpretability of results.

We explore various values of targeted training accuracy, as illustrated in \Cref{fig:dl_accuracy}. This analysis helps us understand the impact of different accuracy thresholds on the model's performance. For instance, we observe that as the target accuracy changes, \algname{SPPM-NICE} consistently outperforms \algname{LocalGD} in terms of total communication cost. As the target accuracy increases, the performance gap between these two algorithms also widens. Additionally, we perform ablation studies on different values of \(c_1\), as shown in \Cref{fig:dl_c_1}, to assess their effects on the learning process. Here, we note that with \(c_2 = 0.2\), \algname{SPPM-NICE} performs similarly to \algname{LocalGD}, suggesting that an increase in \(c_2\) value could narrow the performance gap between \algname{SPPM-NICE} and \algname{LocalGD}.
	
\begin{table}[!tb]
	\centering
	\caption{Architecture of the CNN model for \texttt{FEMNIST} symbol recognition.}
	\label{tab:CNN}
	\begin{tabular}{p{0.15\textwidth}p{0.15\textwidth}p{0.15\textwidth}p{0.15\textwidth}p{0.2\textwidth}}
		\toprule
		Layer       & Output Shape   & \# of Trainable Parameters & Activation & Hyperparameters                         \\ \midrule
		Input       & (28, 28, 1)    & 0                          &            &                                          \\ 
		Conv2d      & (24, 24, 32)   & 832                        & ReLU       & kernel size = 5; strides = (1, 1)        \\ 
		Conv2d      & (10, 10, 64)   & 51,264                     & ReLU       & kernel size = 5; strides = (1, 1)        \\ 
		MaxPool2d   & (5, 5, 64)     & 0                          &            & pool size = (2, 2)                       \\ 
		Flatten     & 6400           & 0                          &            &                                          \\
		Dense       & 128            & 819,328                    & ReLU       &                                          \\ 
		Dense       & 62            & 7,998                      & softmax    &                                         \\ \bottomrule
	\end{tabular}
\end{table}

\begin{figure}[!htbp]
	\centering
	\begin{subfigure}[b]{0.31\textwidth}
		\centering
		\includegraphics[width=1.0\textwidth, trim=0 0 0 0, clip]{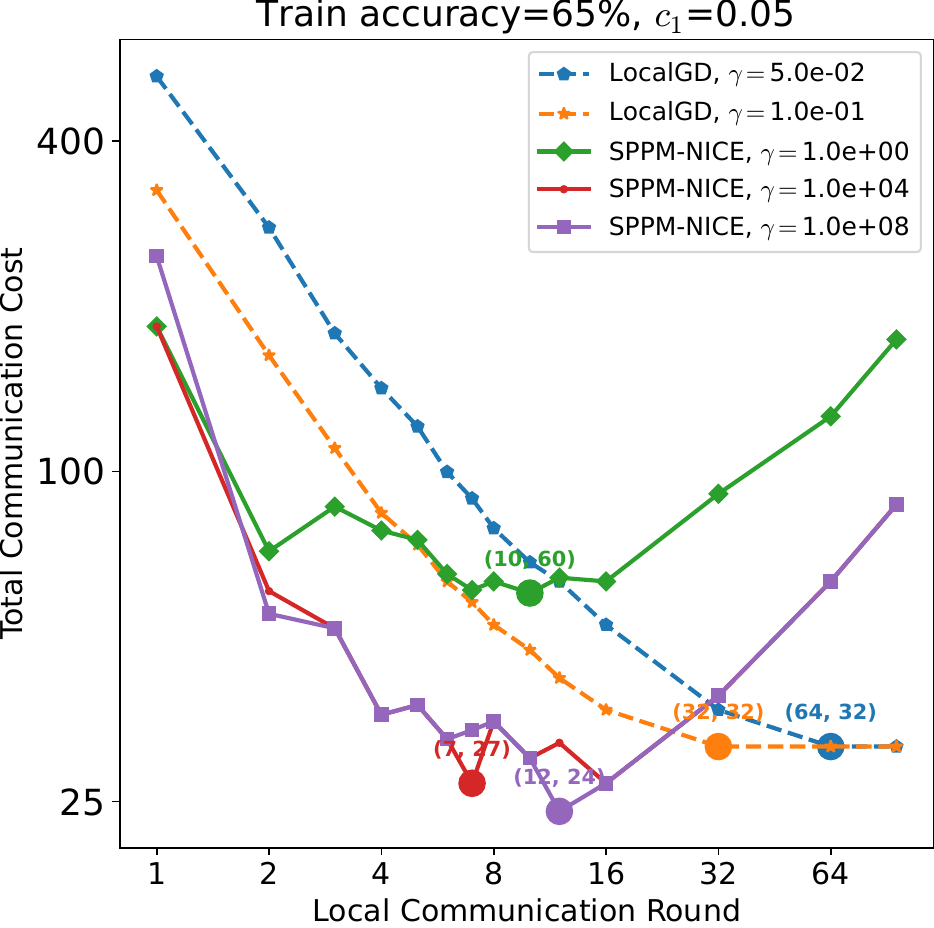}
	\end{subfigure}
	\hfill  
	\centering
	\begin{subfigure}[b]{0.31\textwidth}
		\centering
		\includegraphics[width=1.0\textwidth, trim=0 0 0 0, clip]{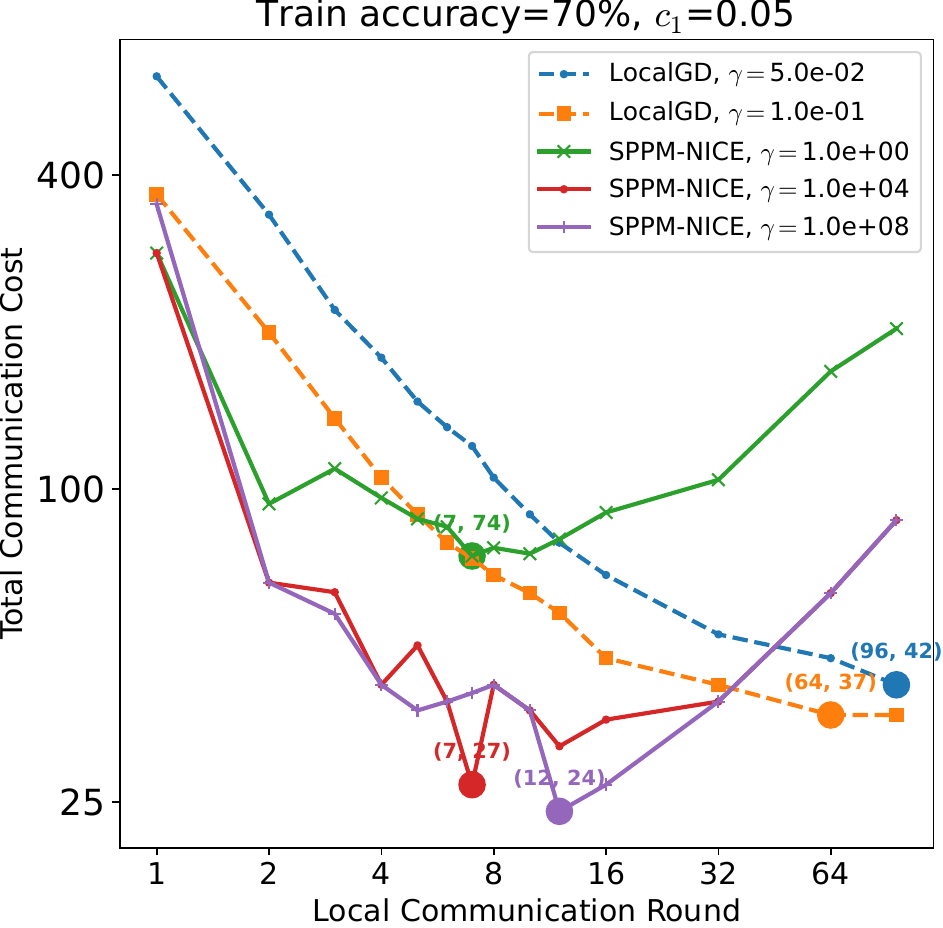}
	\end{subfigure}
	\hfill
	\begin{subfigure}[b]{0.31\textwidth}
		\centering
		\includegraphics[width=1.0\textwidth, trim=0 0 0 0, clip]{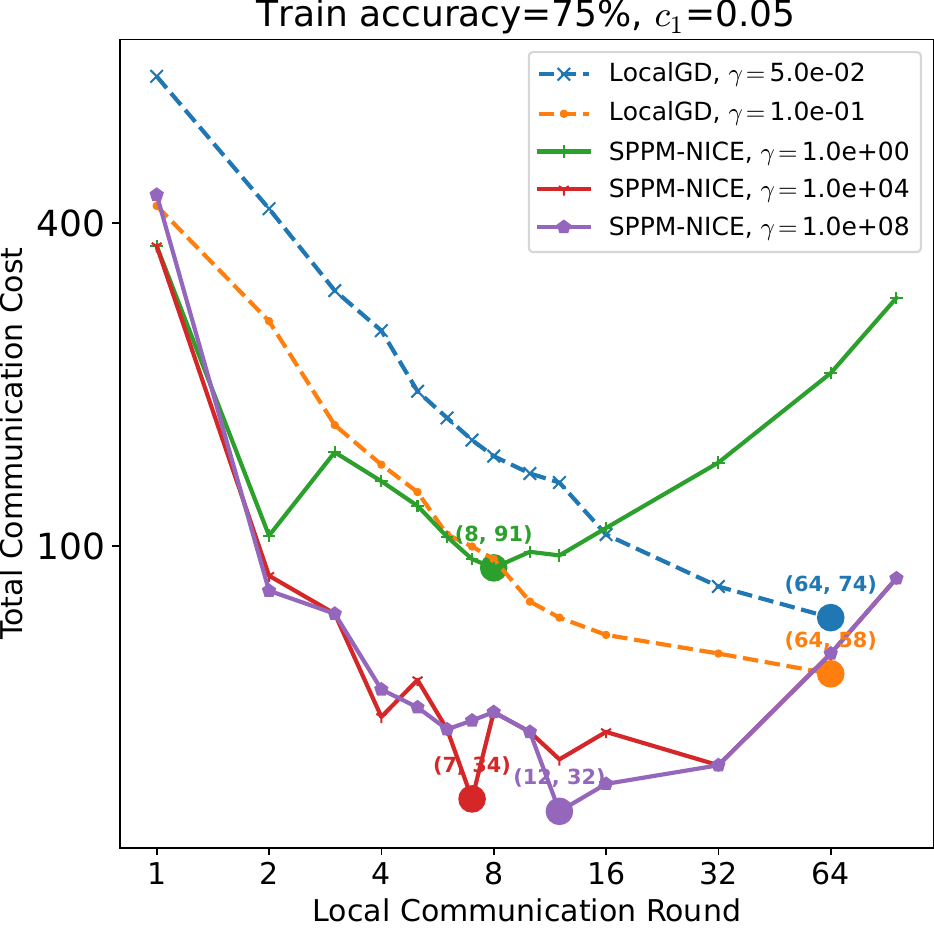}
	\end{subfigure}
	%	\vspace{-5mm}  
	\hfill
	\caption{Varying targeted training accuracy level for \algname{SPPM-AS}.}
	\label{fig:dl_accuracy}
\end{figure}  

\begin{figure}[!htbp]
	\centering
	\begin{subfigure}[b]{0.31\textwidth}
		\centering
		\includegraphics[width=1.0\textwidth, trim=0 0 0 0, clip]{img/cohort_squeeze/deep_learning/1-main.pdf}
	\end{subfigure}
	\hfill
	\centering
	\begin{subfigure}[b]{0.31\textwidth}
		\centering
		\includegraphics[width=1.0\textwidth, trim=0 0 0 0, clip]{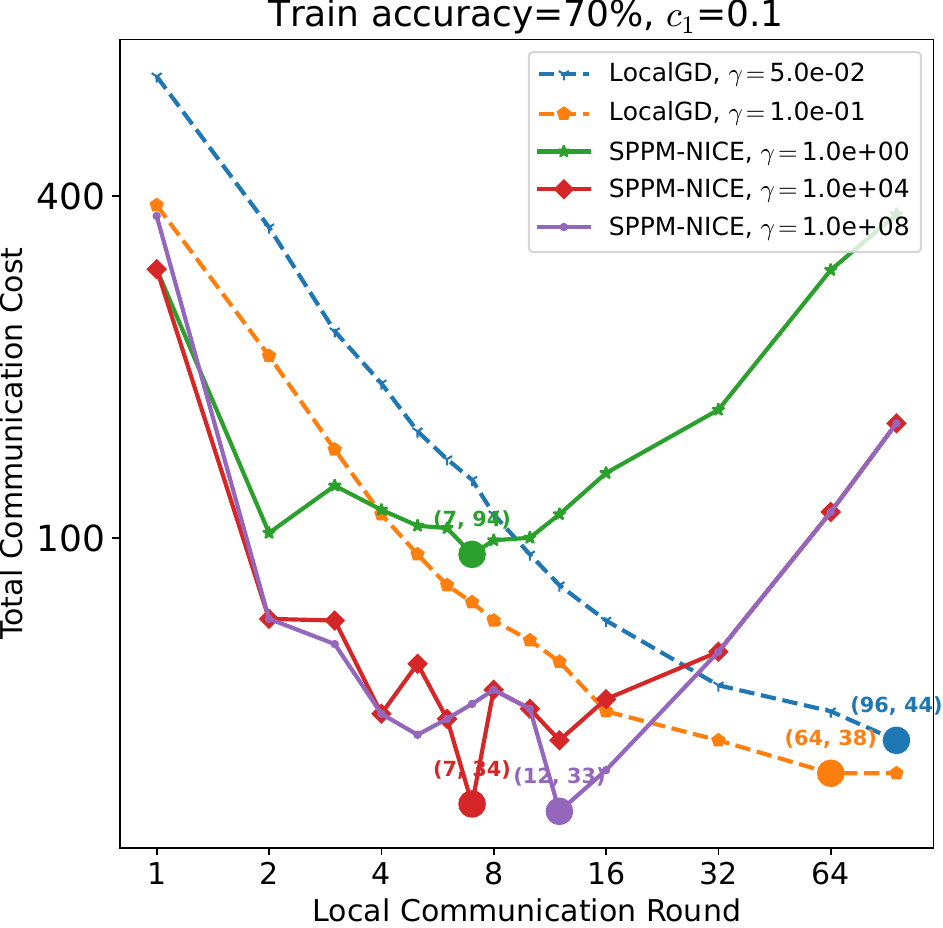}
		
	\end{subfigure}
	\hfill
	\begin{subfigure}[b]{0.31\textwidth}
		\centering
		\includegraphics[width=1.0\textwidth, trim=0 0 0 0, clip]{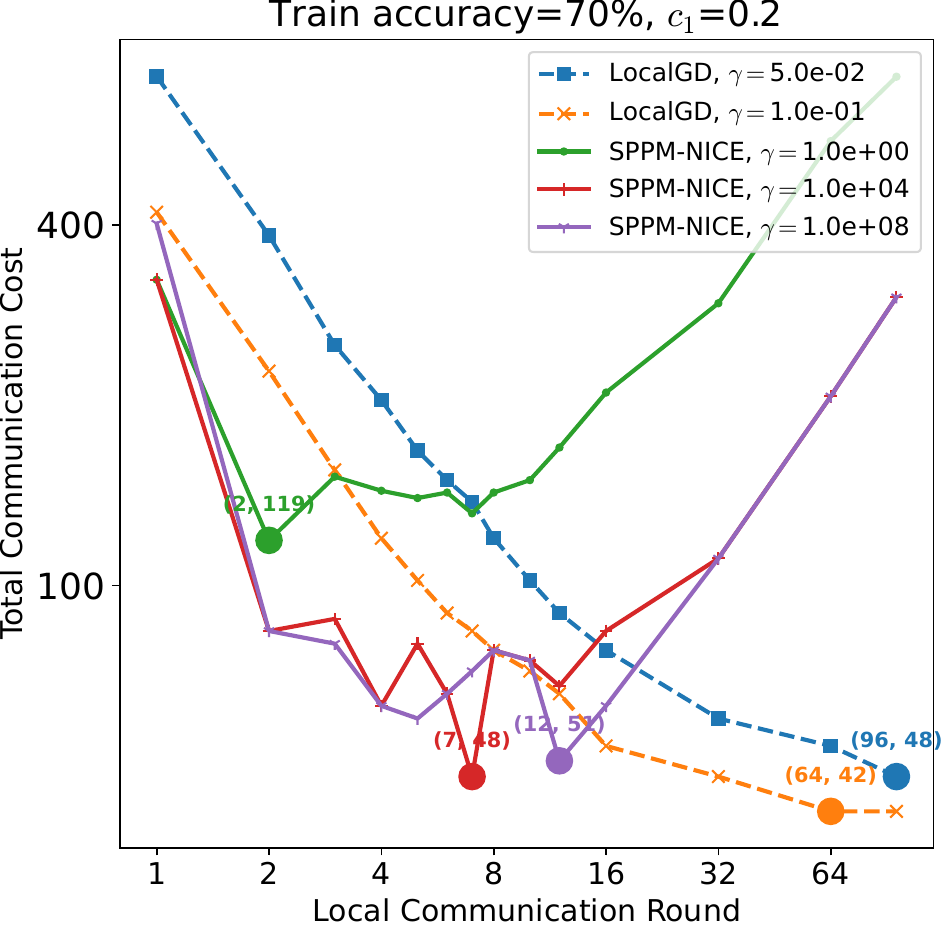}
	\end{subfigure}
	%	\vspace{-5mm}  
	\hfill
	\caption{Varying $c_1$ cost. }
	\label{fig:dl_c_1}
\end{figure}  
% \newpage
% \clearpage
\subsection{Convergence Analysis Compared with Baselines}
Further, we compare \algname{SPPM-AS}, \algname{SPPM}, and \algname{LocalGD} in \Cref{fig:dl2}, placing a particular emphasis on evaluating the total computational complexity. This measure gains importance in scenarios where communication rounds are of secondary concern, thereby shifting the focus to the assessment of computational resource expenditure.

\begin{figure}[!htbp]
	\centering
	\begin{subfigure}[b]{0.9\textwidth}
		\centering
		\includegraphics[width=1.0\textwidth, trim=0 0 0 0, clip]{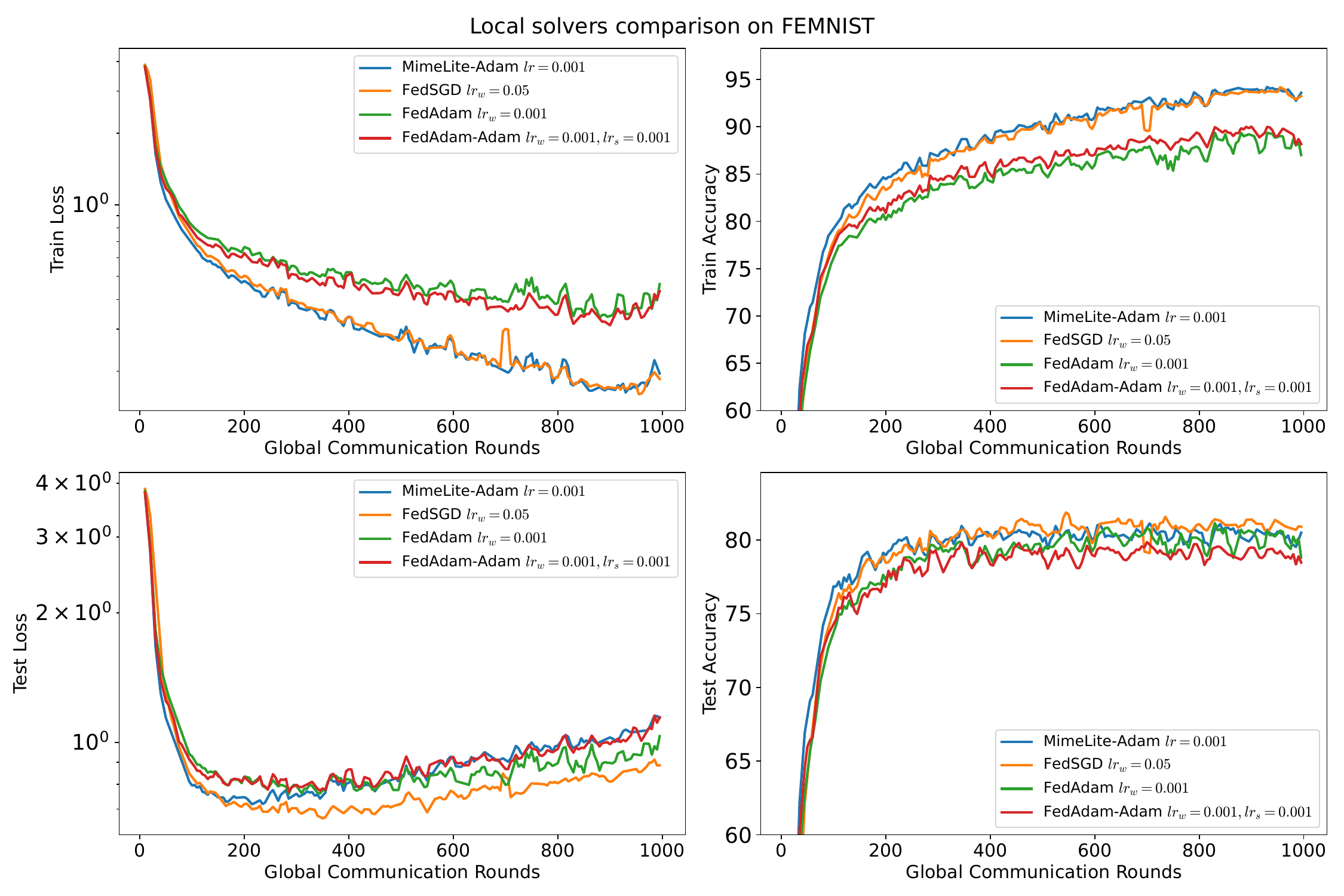}
	\end{subfigure}
	%	\vspace{-5mm}  
	\hfill
	\caption{Different local solvers for prox baselines for training a CNN model over 100 workers using data from the \texttt{FEMNIST} dataset. The number of local communication rounds is fixed at 3 and the number of worker optimizer steps is fixed at 3. Nice sampling with a minibatch size of 10 is used. $\gamma$ is fixed at 1.0.}
	\label{fig:dl_solvers_comparison}
\end{figure}
\subsection{Prox solvers baselines}
We compare baselines from \refeq{sec:local_solvers} for training a CNN model over 100 workers using data from the \texttt{FEMNIST} dataset, as shown in \Cref{fig:dl_solvers_comparison}. The number of local communication rounds and worker optimizer steps is consistent among various solvers for the purpose of fair comparison. All local solvers optimize the local objective, which is prox on the selected cohort. The solvers compared are: \algname{LocalGD} referred as \algname{FedSGD}\citep{FedAvg} - the Federated Averaging algorithm with SGD as the worker optimizer, \algname{FedAdam} - the Federated Averaging algorithm with Adam as the worker optimizer, \algname{FedAdam-Adam} based on the FedOpt framework \citep{reddi2020adaptive}, and finally \algname{MimeLite-Adam}, which is based on the \algname{Mime}~\citep{MIME} framework and the Adam optimizer. The hyperparameter search included a double-level sweep of the optimizer learning rates: $[0.00001, 0.0001, 0.001, 0.01, 0.1]$, followed by $[0.25, 0.5, 1.0, 2.5, 5] * lr_{\text{best}}$. One can see that all methods perform similarly, with \algname{MimeLite-Adam} and \algname{FedSGD} converging better on the test data.

\begin{figure}[!tb]
	\centering
	\begin{subfigure}[b]{0.45\textwidth}
		\centering
		\includegraphics[width=1.0\textwidth, trim=0 350 0 0, clip]{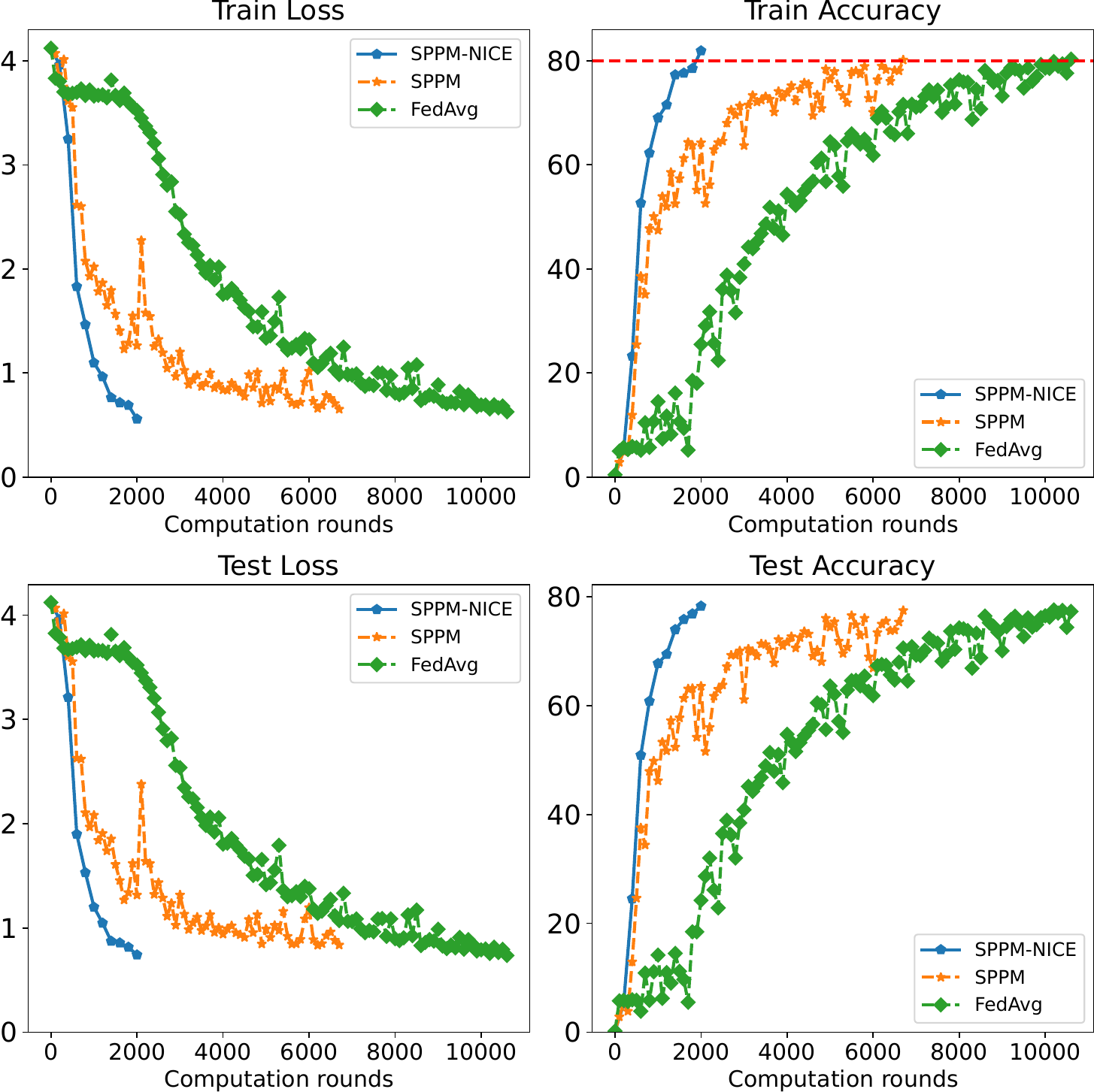}
	\end{subfigure}
	\begin{subfigure}[b]{0.45\textwidth}
		\centering
		\includegraphics[width=1.0\textwidth, trim=0 0 0 350, clip]{img/cohort_squeeze/deep_learning/2-computation_cost.pdf}
	\end{subfigure}
	\caption{Accuracy compared with baselines.}
	\label{fig:dl2}
	% \vspace{-2mm}
\end{figure}  

\section{Missing proof and additional theoretical analysis}
\subsection{Facts used in the proof}
\begin{fact}[Differentiation of integral with a parameter (theorem 2.27 from \cite{Folland1984RealAM})]\label{fact:dif-parameter}
	Suppose that $f: X \times[a, b] \rightarrow \mathbb{C}(-\infty<a<b<\infty)$ and that $f(\cdot, t): X \rightarrow \mathbb{C}$ is integrable for each $t \in[a, b]$. Let $F(t)=\int_X f(x, t) d \mu(x)$.
	\begin{enumerate}[label=\alph*.]
		\item Suppose that there exists $g \in L^1(\mu)$ such that $|f(x, t)| \leq g(x)$ for all $x, t$. If $\lim _{t \rightarrow t_0} f(x, t)=f\left(x, t_0\right)$ for every $x$, then $\lim _{t \rightarrow t_0} F(t)=F\left(t_0\right)$; in particular, if $f(x, \cdot)$ is continuous for each $x$, then $F$ is continuous.
		\item Suppose that $\partial f / \partial t$ exists and there is a $g \in L^1(\mu)$ such that $|(\partial f / \partial t)(x, t)| \leq$ $g(x)$ for all $x, t$. Then $F$ is differentiable and $F^{\prime}(x)=\int(\partial f / \partial t)(x, t) d \mu(x)$.
	\end{enumerate}
\end{fact}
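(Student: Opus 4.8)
The plan is to obtain both parts from the Dominated Convergence Theorem (DCT) via a sequential reduction, which is the standard route for this classical result (indeed the statement is quoted verbatim from Folland, Theorem 2.27, so I would ultimately defer to that reference for the details).

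For part (a), I would fix $t_0$ and take an arbitrary sequence $(t_n)$ in $[a,b]$ with $t_n \to t_0$. The hypotheses say $f(x,t_n) \to f(x,t_0)$ for every $x$, and $|f(x,t_n)| \le g(x)$ for all $n$, so $g \in L^1(\mu)$ is a single dominating function independent of $n$. DCT then gives $F(t_n) = \int_X f(x,t_n)\,d\mu(x) \to \int_X f(x,t_0)\,d\mu(x) = F(t_0)$. Since the sequence was arbitrary, $\lim_{t\to t_0} F(t) = F(t_0)$; and if $f(x,\cdot)$ is continuous for each $x$, this holds at every $t_0$, hence $F$ is continuous.

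For part (b), I would fix $t$ and take an arbitrary nonzero sequence $(h_n)$ with $h_n \to 0$ and $t + h_n \in [a,b]$, and set $\varphi_n(x) \eqdef \big(f(x,t+h_n) - f(x,t)\big)/h_n$. By definition of the partial derivative, $\varphi_n(x) \to (\partial f/\partial t)(x,t)$ pointwise. The decisive step is to produce an $n$-independent integrable majorant for $\varphi_n$: applying the mean value theorem to $s \mapsto f(x,s)$ on the interval between $t$ and $t+h_n$ — run on $\mathrm{Re}\,f$ and $\mathrm{Im}\,f$ separately since $f$ is complex-valued, using $|\partial_s \mathrm{Re}\,f|, |\partial_s \mathrm{Im}\,f| \le |\partial_s f| \le g$ — yields $|\varphi_n(x)| \le g(x)$. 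Then DCT applies (as in part (a)), and by linearity of the integral $\int_X \varphi_n(x)\,d\mu(x) = \big(F(t+h_n) - F(t)\big)/h_n$, so this difference quotient converges to $\int_X (\partial f/\partial t)(x,t)\,d\mu(x)$. As $(h_n)$ was arbitrary, $F$ is differentiable at $t$ with the stated formula.

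The main (essentially the only) obstacle is the domination in part (b): one must not simply invoke pointwise convergence but must exhibit the $n$-independent majorant $g$, and the mean value theorem does not apply verbatim to a complex-valued function, so it has to be split into real and imaginary parts. Everything else is a routine reduction to DCT.
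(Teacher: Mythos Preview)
Your proposal is correct and is the standard DCT-via-sequences argument; the paper itself does not prove this statement at all --- it is stated as a \emph{Fact} cited verbatim from Folland (Theorem~2.27) and used without proof, so there is nothing in the paper to compare against beyond the reference you already identify. One nitpick: after splitting into real and imaginary parts and applying the mean value theorem separately, you get $|\mathrm{Re}\,\varphi_n|\le g$ and $|\mathrm{Im}\,\varphi_n|\le g$, hence $|\varphi_n|\le 2g$ (or $\sqrt{2}\,g$) rather than $g$ itself --- but of course $2g\in L^1(\mu)$, so the domination still goes through.
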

\begin{fact}[Tower Property]\label{fact:tower}
	For any random variables $X$ and $Y$, we have
	\begin{align*}
		\ec{\ec{X|Y}} = \ec{X}.
	\end{align*}	
\end{fact}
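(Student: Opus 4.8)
The plan is to prove the \textbf{Tower Property} (law of iterated expectations) directly from the defining characterization of conditional expectation. Recall that, for an integrable random variable $X$ on a probability space $(\Omega, \mathcal{F}, P)$ and a random variable $Y$, the conditional expectation $\mathbb{E}[X \mid Y]$ is, by definition, any $\sigma(Y)$-measurable, integrable random variable $Z$ satisfying $\int_A Z \, dP = \int_A X \, dP$ for every $A \in \sigma(Y)$; such a $Z$ exists (by the Radon--Nikodym theorem applied to the measure $A \mapsto \int_A X\,dP$ on $\sigma(Y)$) and is unique up to a $P$-null set. The single observation driving the whole argument is that $\Omega$ itself belongs to $\sigma(Y)$, so we may take $A = \Omega$ in the defining identity.

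Carrying this out: first I would invoke integrability of $X$ to ensure $Z := \mathbb{E}[X \mid Y]$ is well defined and itself integrable, so that $\mathbb{E}[Z]$ is meaningful. Then, specializing the defining property to $A = \Omega$ gives
\begin{align*}
\mathbb{E}\bigl[\mathbb{E}[X \mid Y]\bigr] = \int_\Omega Z \, dP = \int_\Omega X \, dP = \mathbb{E}[X],
\end{align*}
which is exactly the claim. No estimates, inequalities, or limiting arguments are needed — the statement is an immediate specialization of the characterizing property, which is precisely why it is relied upon freely (e.g.\ via \Cref{fact:tower}) when unrolling the one-step recursions in the convergence proofs of \algname{SPPM-AS} and its variants.

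For a reader who wants to avoid measure theory, the same conclusion follows in the discrete case by interchanging summations, justified by absolute convergence when $X \in L^1$:
\begin{align*}
\mathbb{E}\bigl[\mathbb{E}[X\mid Y]\bigr]
&= \sum_y \mathbb{E}[X \mid Y = y]\, P(Y = y)\\
&= \sum_y \sum_x x\, P(X = x \mid Y = y)\, P(Y = y)\\
&= \sum_x x \sum_y P(X = x, Y = y) = \sum_x x\, P(X = x) = \mathbb{E}[X].
\end{align*}

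The main — and essentially only — obstacle is a pedantic one: making sure every object is well defined, i.e.\ that $X \in L^1$ so that $\mathbb{E}[X \mid Y]$ exists and is integrable, and keeping in mind that conditional expectation is defined only up to almost-sure equivalence, so the identity should be read as holding for every version of $\mathbb{E}[X \mid Y]$. Once the definition is in place, the proof is a one-line consequence of it, so there is no genuine analytic difficulty to resolve.
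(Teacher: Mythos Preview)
Your argument is correct: specializing the defining property of conditional expectation to $A=\Omega$ immediately yields the identity, and the discrete computation you give is a valid elementary alternative. The paper itself states this as a fact without proof, so there is no paper argument to compare against; your treatment is the standard one and more than suffices.
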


\begin{fact}[Every point is a fixed point \citep{SPPM}]\label{fact:fact1}
	Let $\varphi: \Rd \to \mbR$  be a convex differentiable function. Then 
	\begin{align*}
		\prox_{\gamma\varphi}(x + \gamma \nabla \varphi(x)) = x, \qquad \forall \gamma >0, \quad \forall x\in \Rd.
	\end{align*}
	In particular, if $x_{\star}$ is a minimizer of $\varphi$, then $\prox_{\gamma \varphi}(x_\star) = x_\star$.
\end{fact}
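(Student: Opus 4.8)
The plan is to prove Fact~\ref{fact:fact1} directly from the definition of the proximity operator together with the first-order optimality condition for its defining minimization problem. Recall that for $\gamma>0$ and the convex differentiable function $\varphi$,
\[
\prox_{\gamma\varphi}(z) = \argmin_{y\in\Rd}\Phi_z(y), \qquad \Phi_z(y) \eqdef \gamma\varphi(y) + \tfrac12\|z-y\|^2 .
\]
The function $\Phi_z$ is differentiable and $1$-strongly convex (the quadratic term contributes strong convexity, $\gamma\varphi$ is convex), so it has a unique minimizer $y^\star$, and that minimizer is characterized by $\nabla\Phi_z(y^\star)=0$, i.e.\ by
\[
\gamma\nabla\varphi(y^\star) + y^\star - z = 0 \quad\Longleftrightarrow\quad y^\star = z - \gamma\nabla\varphi(y^\star).
\]

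First I would set $z \eqdef x + \gamma\nabla\varphi(x)$ and check that $y^\star = x$ solves the displayed stationarity equation: substituting, $z - \gamma\nabla\varphi(x) = \bigl(x + \gamma\nabla\varphi(x)\bigr) - \gamma\nabla\varphi(x) = x$, so $x$ satisfies $x = z - \gamma\nabla\varphi(x)$. Since $\Phi_z$ has a \emph{unique} minimizer and $x$ satisfies its (necessary and sufficient) optimality condition, we conclude $\prox_{\gamma\varphi}\bigl(x + \gamma\nabla\varphi(x)\bigr) = x$, which is the first assertion. For the ``in particular'' part, if $x_\star$ minimizes the convex differentiable $\varphi$ then $\nabla\varphi(x_\star)=0$, hence $x_\star + \gamma\nabla\varphi(x_\star) = x_\star$, and applying the first assertion with $x = x_\star$ yields $\prox_{\gamma\varphi}(x_\star) = x_\star$.

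There is no real obstacle here; the proof is essentially a one-line verification once the optimality condition is written down. The only points that need a word of justification are that for the convex function $\Phi_z$ the stationarity condition $\nabla\Phi_z=0$ is both necessary and sufficient for global minimality, and that the minimizer is unique — the latter following from the $1$-strong convexity of $\Phi_z$. Both are standard facts about proximity operators of convex functions (as already invoked in the paper's preliminaries, e.g.\ in the definition of $\mathrm{prox}_{\gamma R}$), so no additional machinery is required.
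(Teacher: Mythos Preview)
Your proof is correct and follows essentially the same approach as the paper: write down the first-order optimality condition for the strongly convex prox subproblem, then verify that $x$ satisfies it when the input is $x+\gamma\nabla\varphi(x)$. You additionally spell out the ``in particular'' clause about minimizers, which the paper leaves implicit.
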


\begin{proof}
	Evaluating the proximity operator is equivalent to 
	\begin{align*}
		\prox_{\gamma \varphi}(y) = \argmin_{x\in \Rd}\left(\varphi(x) + \frac{1}{2\gamma}\sqn{x -y} \right).
	\end{align*}
	This is a strongly convex minimization problem for any $\gamma>0$, hence the (necessarily unique) minimizer $x=\prox_{\gamma \varphi} (y)$ of this problem satisfies the first-order optimality condition 
	\begin{align*}
		\nabla \varphi(x) + \frac{1}{\gamma} (x - y) = 0.
	\end{align*}
	Solving for $y$, we observe that this holds for $y=x + \gamma \nabla \phi(x)$. Therefore, $x = \prox_{\gamma \varphi}(x + \gamma \nabla \varphi (x))$.
\end{proof}

\begin{fact}[Contractivity of the prox \citep{mishchenko2022proximal}]\label{fact:fact2}
	If $\varphi$ is differentiable and $\mu$-strongly convex, then for all $\gamma >0$ and for any $x, y\in \Rd$ we have 
	\begin{align*}
		\norm{\prox_{\gamma\varphi}(x) - \prox_{\gamma\varphi}(y)}^2 \leq \frac{1}{(1+\gamma \mu)^2}\norm{x - y}^2.
	\end{align*}
\end{fact}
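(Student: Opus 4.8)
The plan is to follow the classical argument that characterizes each proximal point through its first-order optimality condition and then exploits the decomposition of $\varphi$ into its strongly convex quadratic part and a purely convex remainder. First I would set $u \eqdef \prox_{\gamma\varphi}(x)$ and $v \eqdef \prox_{\gamma\varphi}(y)$; since the objective $z \mapsto \varphi(z) + \frac{1}{2\gamma}\sqn{z-x}$ is strongly convex for every $\gamma>0$, each of $u,v$ is the unique minimizer and satisfies the stationarity condition $\nabla\varphi(u) + \frac{1}{\gamma}(u-x)=0$, i.e. $x = u + \gamma\nabla\varphi(u)$, and similarly $y = v + \gamma\nabla\varphi(v)$. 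This is exactly the identity already extracted in the proof of Fact~\ref{fact:fact1}.

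Next I would introduce $h \eqdef \varphi - \frac{\mu}{2}\norm{\cdot}^2$, which is convex because $\varphi$ is $\mu$-strongly convex, so that $\nabla\varphi = \nabla h + \mu\,\mathrm{Id}$. Substituting this into the two optimality identities and subtracting gives
\begin{equation*}
x - y = (1+\gamma\mu)(u-v) + \gamma\big(\nabla h(u) - \nabla h(v)\big).
\end{equation*}
Taking squared norms and expanding,
\begin{equation*}
\sqn{x-y} = (1+\gamma\mu)^2\sqn{u-v} + 2\gamma(1+\gamma\mu)\ev{u-v,\ \nabla h(u)-\nabla h(v)} + \gamma^2\sqn{\nabla h(u)-\nabla h(v)}.
\end{equation*}
By monotonicity of $\nabla h$ (convexity of $h$) the cross term is nonnegative, and the last term is manifestly nonnegative, so $\sqn{x-y} \geq (1+\gamma\mu)^2\sqn{u-v}$. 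Dividing by $(1+\gamma\mu)^2 > 0$ yields the claimed contraction.

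There is essentially no serious obstacle here; the one point requiring care is obtaining the sharp factor $(1+\gamma\mu)^{-2}$ rather than the weaker $(1+2\gamma\mu)^{-1}$ that one would get by applying strong monotonicity of $\nabla\varphi$ directly and discarding the $\gamma^2\sqn{\nabla\varphi(u)-\nabla\varphi(v)}$ term. Isolating the quadratic part of $\varphi$ \emph{before} squaring is precisely what recovers the extra $\gamma^2\mu^2\sqn{u-v}$ contribution. I would also note at the outset that neither the decomposition nor the optimality conditions demand any quantitative smoothness of $\varphi$: mere differentiability makes $\nabla\varphi$ and $\nabla h$ well defined, and convexity of $h$ supplies the monotonicity inequality without a Lipschitz bound, so the statement holds exactly as phrased.
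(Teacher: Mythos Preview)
Your argument is correct and gives the sharp constant $(1+\gamma\mu)^{-2}$. Note, however, that the paper does not supply its own proof of this fact: it is stated with a citation to \citet{mishchenko2022proximal} and used as a black box, so there is no in-paper proof to compare against. Your decomposition $\varphi = h + \tfrac{\mu}{2}\norm{\cdot}^2$ followed by monotonicity of $\nabla h$ is the standard way to extract the exact contraction factor, and your remark about why naive strong monotonicity only yields $(1+2\gamma\mu)^{-1}$ is on point.
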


\begin{fact}[Recurrence (\citealp{SPPM}, Lemma 1)]\label{fact:fact3}
	Assume that a sequence $\{s_t\}_{t\geq 0}$ of positive real numbers for all $t\geq 0$ satisfies 
	\begin{align*}
		s_{t+1} \leq as_t + b,
	\end{align*}
	where $0<a<1$ and $b\geq 0$. Then the sequence for all $t\geq 0$ satisfies 
	\begin{align*}
		s_t \leq a^t s_0 + b\min\left\{t, \frac{1}{1 -a} \right\}.
	\end{align*}
\end{fact}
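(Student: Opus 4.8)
The plan is to establish the two-part bound $s_t \leq a^t s_0 + b\min\{t, \tfrac{1}{1-a}\}$ by unrolling the given recursion and then bounding the resulting geometric-type sum in two complementary ways. First I would iterate the inequality $s_{t+1} \leq a s_t + b$ down to $s_0$: a straightforward induction on $t$ gives
\begin{equation*}
s_t \leq a^t s_0 + b\sum_{j=0}^{t-1} a^j.
\end{equation*}
The base case $t=0$ is the trivial identity $s_0 \leq s_0$ (empty sum), and the inductive step substitutes the hypothesis for $s_t$ into $s_{t+1}\leq a s_t + b$ and collects powers of $a$. This reduces everything to controlling the partial sum $\sum_{j=0}^{t-1} a^j$.

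Next I would bound $\sum_{j=0}^{t-1} a^j$ by the minimum of two quantities. On one hand, since $0 < a < 1$, every term satisfies $a^j \leq 1$, so $\sum_{j=0}^{t-1} a^j \leq t$. On the other hand, the finite geometric series is dominated by the infinite one: $\sum_{j=0}^{t-1} a^j \leq \sum_{j=0}^{\infty} a^j = \tfrac{1}{1-a}$. Combining these, $\sum_{j=0}^{t-1} a^j \leq \min\{t, \tfrac{1}{1-a}\}$, and multiplying through by $b \geq 0$ and adding $a^t s_0$ yields the claim.

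There is no real obstacle here — the argument is entirely elementary, and the only thing to be slightly careful about is the bookkeeping of the induction (handling the empty sum at $t=0$ correctly) and noting that $b\geq 0$ is needed so that the inequality direction is preserved when we replace the exact sum by its upper bound. I would present the induction explicitly for the unrolling, then state the two bounds on the geometric sum in a single line each, and conclude. This matches the statement as given in \citet{SPPM}, Lemma~1, so the proof can be kept to a few lines.
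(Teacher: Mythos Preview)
Your proposal is correct and follows essentially the same approach as the paper: unroll the recursion to obtain $s_t \leq a^t s_0 + b\sum_{j=0}^{t-1} a^j$, then bound the partial geometric sum by both $t$ (using $a^j \leq 1$) and $\tfrac{1}{1-a}$ (using the infinite geometric series). The only cosmetic difference is that you frame the unrolling as an induction whereas the paper writes it out as a chain of inequalities, but the content is identical.
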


\begin{proof}
	Unrolling the recurrence, we get 
	\begin{align*}
		s_t \leq as_{t - 1} + b \leq a(as_{t -2} + b) + b \leq \cdots \leq a^t s_0 + b\sum_{i=0}^{t-1} a^i.
	\end{align*}
	
	We can now bound the sum $\sum_{i=0}^{t-1} a^i$ in two different ways. First, since $a<1$, we get the estimate 
	\begin{align*}
		\sum_{i=0}^{t-1} a^i \leq \sum^{t-1}_{i=0} 1 = t.
	\end{align*}
	Second, we sum a geometic series
	\begin{align*}
		\sum_{i=0}^{t-1} a^i\leq \sum_{i=0}^{\inf} a^i = \frac{1}{1-a}.
	\end{align*}
	
	Note that either of these bounds can be better. So, we apply the best of these bounds. Substituing the above two bounds gived the target inequality. 
\end{proof}

\subsection{Simplified proof of \algname{SPPM}}\label{sec:proof_simple}
We provide a simplified proof of \algname{SPPM}~\citep{SPPM} in this section. 
% Let's denote $f_{\xi}(x) \eqdef F_{\xi^1, \cdots, \xi^n}(x)$.
Using the fact that $x_\star = \prox_{\gamma f_{\xi_t}}(x_\star + \gamma \nabla f_{\xi_t}(x_\star))$ (see \Cref{fact:fact1}) and then applying contraction of the prox (\Cref{fact:fact2}), we get 

\begin{align*}
	\sqn{x_{t+1} - x_\star} &= \sqn{\prox_{\gamma f_{\xi_t}} - x_\star}\\
	&\stackrel{(\Cref{fact:fact1})}{=} \sqn{\prox_{\gamma f_{\xi_t}}(x_t) - \prox_{\gamma f_{\xi_t}}(x_\star + \gamma \nabla f_{\xi_t}(x_\star))}\\
	&\stackrel{(\Cref{fact:fact2})}{\leq} \frac{1}{(1+\gamma \mu)^2} \sqn{x_t - (x_\star + \gamma \nabla f_{\xi_t}(x_\star))}\\
	&= \frac{1}{(1+\gamma \mu)^2}\left(\sqn{x_t - x_\star} - 2\gamma \ev{\nabla f_{\xi_t}(x_\star), x_t - x_\star} + \gamma^2 \sqn{\nabla f_{\xi_t}(x_\star)} \right).
\end{align*} 

Taking expectation on both sides, conditioned on $x_t$, we get 

\begin{align*}
	\ec{\sqn{x_{t+1} - x_\star}|x_t} &\leq \frac{1}{(1+\gamma \mu)^2} \left(\sqn{x_t - x_\star} - 2\gamma \ev{\ec{\nabla f_{\xi_t}(x_\star)}, x_t -x_\star} + \gamma^2 \ec{\sqn{\nabla f_{\xi_t}(x_\star)}} \right)\\
	&=\frac{1}{(1+\gamma \mu)^2} \left(\sqn{x_t - x_\star} + \gamma^2 \sigma^2_{\star} \right),
\end{align*}

where we used the fact that $\ec{\nabla f_{\xi_t} (x_\star)} = \nabla f(x_\star) = 0$ and $\sigma^2_{\star} \eqdef \ec{\sqn{\nabla f_{\xi_t}(x_\star)}}$. Taking expectation again and applying the tower property (\Cref{fact:tower}), we get 

\begin{align*}
	\ec{\sqn{x_{t+1} - x_\star}} \leq \frac{1}{(1+\gamma\mu)^2}\left(\sqn{x_t - x_\star} + \gamma^2 \sigma^2_{\star} \right).
\end{align*}

It only remains to solve the above recursion. Luckily, that is exactly what \Cref{fact:fact3} does. In particular, we use it with $s_t = \ec{\sqn{x_t - x_\star}}, a = \frac{1}{(1+\gamma\mu)^2}$ and $b = \frac{\gamma^2 \sigma_{\star}^2}{(1+\gamma\mu)^2}$ to get 

\begin{align*}
	\ec{\sqn{x_t - x_\star}} &\stackrel{(\Cref{fact:fact3})}{\leq} \left(\frac{1}{1+\gamma\mu} \right)^{2t} \sqn{x_0 - x_\star} + \frac{\gamma^2 \sigma^2_{\star}}{(1+\gamma\mu)^2} \min\left\{t, \frac{(1+\gamma\mu)^2}{(1+\gamma\mu)^2 - 1} \right\}\\
	&\leq \left(\frac{1}{1+\gamma\mu} \right)^{2t} \sqn{x_0 - x_\star} + \frac{\gamma^2\sigma^2_{\star}}{(1+\gamma\mu)^2 - 1}\\
	&\leq \left(\frac{1}{1+\gamma\mu} \right)^{2t} \sqn{x_0 - x_\star} + \frac{\gamma\sigma^2_{\star}}{\gamma\mu^2 + 2\mu}.
\end{align*}

\subsection{Missing proof of Theorem \ref{thm:sppm_as}}
We first prove the following useful lemma. 

\begin{lemma}\label{lem:0002}
	Let $\phi_\xi: \mathbb{R}^d \rightarrow \mathbb{R}$ be differentiable functions for almost all $\xi \sim\mathcal{D}$, with $\phi_\xi$ being $\mu_\xi$-strongly convex for almost all $\xi \sim\mathcal{D}$. Further, let $w_\xi$ be positive scalars. Then the function $\phi:=\ec[\xi\sim\mathcal{D}]{ w_\xi \phi_\xi}$ is $\mu$-strongly convex with $\mu=\ec[\xi\sim\mathcal{D}]{w_\xi \mu_\xi}$.
\end{lemma}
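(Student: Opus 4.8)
\textbf{Proof plan for Lemma~\ref{lem:0002}.}

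The plan is to verify the defining inequality of $\mu$-strong convexity for $\phi := \ec[\xi\sim\mathcal{D}]{w_\xi \phi_\xi}$ directly, using the corresponding inequality for each $\phi_\xi$ and linearity of expectation. First I would record that, since $\phi_\xi$ is differentiable for almost every $\xi$ and $w_\xi > 0$, the function $\phi$ is differentiable with $\nabla \phi(x) = \ec[\xi\sim\mathcal{D}]{w_\xi \nabla \phi_\xi(x)}$; interchanging differentiation and expectation is justified by Fact~\ref{fact:dif-parameter} (part b), under the implicit integrability/domination hypotheses that accompany such a statement. Then for each fixed $\xi$ (in the almost-sure set), $\mu_\xi$-strong convexity gives, for all $x,y\in\mathbb{R}^d$,
\begin{equation*}
\phi_\xi(y) \geq \phi_\xi(x) + \langle \nabla \phi_\xi(x), y - x\rangle + \frac{\mu_\xi}{2}\|y-x\|^2.
\end{equation*}

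Next I would multiply this inequality by $w_\xi > 0$ (preserving the direction) and take expectation over $\xi\sim\mathcal{D}$ on both sides. By linearity of expectation, the right-hand side becomes
\begin{equation*}
\ec[\xi\sim\mathcal{D}]{w_\xi \phi_\xi(x)} + \Big\langle \ec[\xi\sim\mathcal{D}]{w_\xi \nabla \phi_\xi(x)},\, y - x\Big\rangle + \frac{\ec[\xi\sim\mathcal{D}]{w_\xi \mu_\xi}}{2}\|y-x\|^2,
\end{equation*}
which, using the definitions $\phi = \ec[\xi\sim\mathcal{D}]{w_\xi \phi_\xi}$, $\nabla \phi(x) = \ec[\xi\sim\mathcal{D}]{w_\xi \nabla \phi_\xi(x)}$, and setting $\mu := \ec[\xi\sim\mathcal{D}]{w_\xi \mu_\xi}$, is exactly $\phi(x) + \langle \nabla \phi(x), y-x\rangle + \frac{\mu}{2}\|y-x\|^2$. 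The left-hand side becomes $\phi(y)$. Hence $\phi(y) \geq \phi(x) + \langle \nabla \phi(x), y-x\rangle + \frac{\mu}{2}\|y-x\|^2$ for all $x,y$, which is the claimed $\mu$-strong convexity, provided we also note $\mu > 0$ since $w_\xi, \mu_\xi > 0$ and $\mathcal{D}$ is a genuine probability distribution.

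I expect the only delicate point to be the measure-theoretic bookkeeping: ensuring that $w_\xi\phi_\xi(x)$, $w_\xi\nabla\phi_\xi(x)$, and $w_\xi\mu_\xi$ are integrable so that all the expectations above are well-defined and finite, and that the "almost all $\xi$" exceptional sets (for differentiability and for strong convexity) can be discarded without affecting the integrals. These are standard and implicitly assumed in the surrounding text (indeed the lemma is invoked with finite sums or with $\mathcal{S}$-weighted averages, where integrability is automatic), so in the writeup I would simply remark that the interchange of expectation and differentiation follows from Fact~\ref{fact:dif-parameter} and that the inequality manipulation is monotone under the positive weights $w_\xi$. No genuinely hard step is anticipated; the lemma is a routine "expectation of strongly convex functions is strongly convex" fact, and the proof is essentially the three-line computation sketched above.
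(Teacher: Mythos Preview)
Your proposal is correct and follows essentially the same approach as the paper: write the pointwise strong-convexity inequality for each $\phi_\xi$, multiply by the positive weight $w_\xi$, take expectation, and identify the result as the strong-convexity inequality for $\phi$ with constant $\mu=\ec[\xi\sim\mathcal{D}]{w_\xi\mu_\xi}$. The paper's proof is exactly this three-line computation, without the extra measure-theoretic remarks you include; your added discussion of integrability and the use of Fact~\ref{fact:dif-parameter} is more careful but not required at the level of detail the paper adopts.
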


\begin{proof}
	By assumption,
	$$
	\phi_\xi(y)+\left\langle\nabla \phi_\xi(y), x-y\right\rangle+\frac{\mu_\xi}{2}\|x-y\|^2 \leq \phi_\xi(x), \quad \text{for almost all }\xi\in\mathcal{D},\forall x, y \in \mathbb{R}^d .
	$$
	
	This means that
	$$
	\ec[\xi\sim\mathcal{D}]{w_\xi\left(\phi_\xi(y)+\left\langle\nabla \phi_\xi(y), x-y\right\rangle+\frac{\mu_\xi}{2}\|x-y\|^2\right)} \leq \ec[\xi\sim\mathcal{D}]{w_\xi \phi_\xi(x)}, \quad \forall x, y \in \mathbb{R}^d,
	$$
	which is equivalent to
	$$
	\phi(y)+\langle\nabla \phi(y), x-y\rangle+\frac{\ec[\xi\sim\mathcal{D}]{w_\xi \mu_\xi}}{2}\|x-y\|^2 \leq \phi(x), \quad \forall x, y \in \mathbb{R}^d,
	$$
	
	So, $\phi$ is $\mu$-strongly convex.
\end{proof}

Now, we are ready to prove our main Theorem \ref{thm:sppm_as}.

\begin{proof}
	Let $C$ be any (necessarily nonempty) subset of $[n]$ such that $p_C>0$. Recall that in view of \Cref{eqn:8004} we have
	$$
	f_C(x)=\ec[\xi\sim\mathcal{D}]{\frac{I\left(\xi\in C\right)}{p_\xi} f_\xi(x)}
	$$
	i.e., $f_C$ is a conic combination of the functions $\left\{f_\xi: \xi \in C\right\}$ with weights $w_\xi=\frac{I\left(\xi\in C\right)}{p_\xi}$. Since each $f_\xi$ is $\mu_\xi$-strongly convex, \Cref{lem:0002} says that $f_C$ is  $\mu_C$-strongly convex with
	$$
	\mu_C:=\ec[\xi\sim\mathcal{D}]{\frac{I\left(\xi\in C\right)\mu_\xi}{p_\xi}} .
	$$
	
	So, every such $f_C$ is $\mu$-strongly convex with
	$$
	\mu=\mu_{\mathrm{AS}}:=\min _{C \subseteq[n], p_C>0}\ec[\xi\sim\mathcal{D}]{\frac{I\left(\xi\in C\right)\mu_\xi}{p_\xi}}.
	$$
	
	Further, the quantity $\sigma_{\star}^2$ from (2.3) is equal to
	$$
	\sigma_{\star}^2:=\mathrm{E}_{\xi \sim \mathcal{D}}\left[\left\|\nabla f_{\xi}\left(x_{\star}\right)\right\|^2\right] \stackrel{Eqn.~(\ref{eqn:8010})}{=} \sum_{C \subseteq[n], p_C>0} p_C\left\|\nabla f_C\left(x_{\star}\right)\right\|^2:=\sigma_{\star, \mathrm{AS}}^2 .
	$$
	
	% It now only remains to apply Theorem 2.1 .
	Incorporating \Cref{sec:proof_simple} into the above equation, we prove the theorem.
\end{proof}

\subsection{Theory for expectation formulation}\label{sec:exp-formulation}
We will formally define our optimization objective, focusing
on minimization in expectation form. We consider
\begin{align}\label{eqn:obj_1_exp}
	\min_{x\in \Rd} {f(x)\eqdef \ec[\xi\sim \mathcal{D}]{f_{\xi}(x)} },
\end{align}
where $f_{\xi}: \Rd \to \mbR$, $\xi\sim \mathcal{D}$ is a random variable following distribution $\mathcal{D}$. 
\begin{assumption}\label{asm:differential_exp}
	Function $f_{\xi}: \Rd \to \mbR$  is differentiable for almost all samples $\xi \sim \mathcal{D}$.
\end{assumption}

This implies that $f$ is differentiable. We will implicitly assume that the order of differentiation and expectation can be swapped \footnote{This assumption satisfies the conditions required for the theorem about differentiating an integral with a parameter (\Cref{fact:dif-parameter}).}, which means that 
\begin{align*}
	\nabla f(x) \stackrel{Eqn.~(\ref{eqn:obj_1})}{=} \nabla \ec[\xi\sim \mathcal{D}]{f_{\xi}(x)} = \ec[\xi\sim \mathcal{D}]{\nabla f_{\xi}(x)}.
\end{align*}

\begin{assumption}\label{asm:strongly_convex_exp}
	Function $f_\xi: \Rd \to \mbR$  is $\mu$-strongly convex for almost all samples $\xi\sim \mathcal{D}$, where $\mu > 0$. That is 
	\begin{align*}
		f_{\xi}(y) + \ev{\nabla f_{\xi}, x - y} + \frac{\mu}{2}\sqn{x - y} \leq f_{\xi}(x),
	\end{align*} for all $x, y \in \Rd$.
\end{assumption}

This implies that $f$ is $\mu$-strongly convex, and hence $f$ has a unique minimizer, which we denote by $x_\star$. We know that $\nabla f(x_\star) = 0$. Notably, we do \emph{not} assume $f$ to be $L$-smooth.

Let $\mathcal{S}$ be a probability distribution over all \emph{finite} subsets of $\mathbb{N}$. Given a random set $S\sim \mathcal{S}$, we define 
\begin{align*}
	p_i \eqdef \operatorname{Prob}(i \in {S}), \quad i \in \mathbb{N}.
\end{align*}

We will restrict our attention to proper and nonvacuous random sets. 

\begin{assumption}\label{asm:valid_sampling_exp}
	$S$ is proper (i.e., $p_i > 0$ for all $i\in \mathbb{N}$) and nonvacuous (i.e., $\operatorname{Prob}({S} = \emptyset) = 0$).
\end{assumption}

Let $C$ be the selected cohort. Given $\emptyset \neq C \subset\mathbb{N}$ and $i \in\mathbb{N}$, we define
\begin{align}\label{eqn:8001}
	v_i(C):= \begin{cases}\frac{1}{p_i} & i \in C \\ 0 & i \notin C,\end{cases}
\end{align}
and
\begin{align}\label{eqn:8004}
	f_C(x):=\ec[\xi\sim\mathcal{D}]{v_{\xi}(C) f_{\xi}(x)}\stackrel{Eqn.~(\ref{eqn:8001})}{=}\ec[\xi\sim\mathcal{D}]{\frac{I\left(\xi\in C\right)}{p_{\xi}} f_{\xi}(x)} .
\end{align}

Note that $v_i(S)$ is a random variable and $f_S$ is a random function. By construction, $\mathrm{E}_{S \sim \mathcal{S}}\left[v_i(S)\right]=1$ for all $i \in\mathbb{N}$, and hence
{
	\begin{align*}
		&\ec[{S} \sim \mathcal{S}]{f_{{S}}(x)} =\ec[{S} \sim \mathcal{S}]{\ec[\xi\sim\mathcal{D}]{v_{\xi}(C) \nabla f_{\xi}(x)}}\\
		&\qquad =\ec[\xi\sim\mathcal{D}]{\ec[{S}\sim \mathcal{S}]{v_{\xi}(S)} \nabla f_{\xi}(x)}=\ec[\xi\sim\mathcal{D}]{f_\xi(x)}=f(x).
\end{align*}}

Therefore, the optimization problem in \Cref{eqn:obj_1} is equivalent to the stochastic optimization problem
\begin{align}\label{eqn:obj_4}
	\min _{x \in \mathbb{R}^d}\left\{f(x):=\mathrm{E}_{S \sim \mathcal{S}}\left[f_S(x)\right]\right\} .
\end{align}

Further, if for each $C \subset\mathbb{N}$ we let $p_C:=\operatorname{Prob}(S=C), f$ can be written in the equivalent form
{\small
	\begin{align}\label{eqn:8010}
		f(x)=\ec[S \sim \mathcal{S}]{f_S(x)}=\sum_{C \subset\mathbb{N}} p_C f_C(x)=\sum_{C \subset\mathbb{N}, p_C>0} p_C f_C(x).
\end{align}}
\begin{theorem}[Main Theorem]\label{thm:sppm_as_exp}
	Let \Cref{asm:differential} (diferentiability) and \Cref{asm:strongly_convex} (strong convexity) hold. Let $S$ be a random set satisfying \Cref{asm:valid_sampling}, and define
	{
		\begin{align}\label{eqn:8003}
			\mu_{\mathrm{AS}}&:=\min _{C \subset\mathbb{N}, p_C>0} \ec[\xi\sim\mathcal{D}]{\frac{I\left(\xi\in C\right)\mu_\xi}{p_\xi}}, \notag\\ 
			\sigma_{\star, \mathrm{AS}}^2 &:=\sum_{C \subset\mathbb{N}, p_C>0} p_C\left\|\nabla f_C\left(x_{\star}\right)\right\|^2 .
	\end{align}}
	
	Let $x_0 \in \mathbb{R}^d$ be an arbitrary starting point. Then for any $t \geq 0$ and any $\gamma>0$, the iterates of \algname{SPPM-AS} (\Cref{alg:sppm_as}) satisfy
	{\small
		$$
		\mathrm{E}\left[\left\|x_t-x_{\star}\right\|^2\right] \leq\left(\frac{1}{1+\gamma \mu_{\mathrm{AS}}}\right)^{2t}\left\|x_0-x_{\star}\right\|^2+\frac{\gamma \sigma_{\star, \mathrm{AS}}^2}{\gamma \mu_{\mathrm{AS}}^2+2 \mu_{\mathrm{AS}}} .
		$$
	}
\end{theorem}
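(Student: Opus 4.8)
\textbf{Proof strategy for Theorem~\ref{thm:sppm_as_exp}.}
The plan is to reduce the expectation-formulation result to the finite-sum result already established, namely Theorem~\ref{thm:sppm_as} via its simplified proof in Section~\ref{sec:proof_simple}. The observation is that the reformulation \eqref{eqn:obj_4} rewrites the problem as $\min_x f(x) = \mathrm{E}_{S\sim\mathcal{S}}[f_S(x)]$, and \algname{SPPM-AS} applied to this is nothing but the plain \algname{SPPM} of \cite{SPPM} applied to the random function $f_S$. So the entire burden is to verify that the two structural constants needed by the \algname{SPPM} analysis — the strong convexity modulus of every realization $f_C$, and the gradient-noise quantity $\sigma_\star^2 = \mathrm{E}[\|\nabla f_S(x_\star)\|^2]$ — are exactly $\mu_{\mathrm{AS}}$ and $\sigma_{\star,\mathrm{AS}}^2$ as defined in \eqref{eqn:8003}.

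First I would prove the analogue of Lemma~\ref{lem:0002} for the expectation setting: if $\phi_\xi$ is $\mu_\xi$-strongly convex for almost every $\xi\sim\mathcal{D}$ and $w_\xi \geq 0$, then $\phi \eqdef \mathrm{E}_{\xi\sim\mathcal{D}}[w_\xi\phi_\xi]$ is $\mu$-strongly convex with $\mu = \mathrm{E}_{\xi\sim\mathcal{D}}[w_\xi\mu_\xi]$. The argument is identical to the one given for Lemma~\ref{lem:0002}: write down the strong convexity inequality for each $\phi_\xi$, multiply by $w_\xi$, take $\mathrm{E}_{\xi\sim\mathcal{D}}$ of both sides (using that expectation is monotone and linear, and commutes with $\nabla$ under Assumption~\ref{asm:differential_exp}), and recognize the result as the strong convexity inequality for $\phi$. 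Applying this with $w_\xi = I(\xi\in C)/p_\xi$ and $\phi_\xi = f_\xi$, which is precisely the conic combination appearing in \eqref{eqn:8004}, shows each $f_C$ is $\mu_C$-strongly convex with $\mu_C = \mathrm{E}_{\xi\sim\mathcal{D}}[I(\xi\in C)\mu_\xi/p_\xi]$; hence every realization is $\mu_{\mathrm{AS}}$-strongly convex where $\mu_{\mathrm{AS}} = \min_{C\subset\mathbb{N},\,p_C>0}\mu_C$, matching \eqref{eqn:8003}.

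Next I would identify the noise constant. By the reformulation \eqref{eqn:8010}, $\mathrm{E}_{S\sim\mathcal{S}}[\|\nabla f_S(x_\star)\|^2] = \sum_{C\subset\mathbb{N},\,p_C>0} p_C\|\nabla f_C(x_\star)\|^2 = \sigma_{\star,\mathrm{AS}}^2$, and also $\mathrm{E}_{S\sim\mathcal{S}}[\nabla f_S(x_\star)] = \nabla f(x_\star) = 0$ since $x_\star$ minimizes $f$. Then I would run the \algname{SPPM} recursion exactly as in Section~\ref{sec:proof_simple}: using Fact~\ref{fact:fact1} to write $x_\star = \prox_{\gamma f_{S_t}}(x_\star + \gamma\nabla f_{S_t}(x_\star))$, Fact~\ref{fact:fact2} (contractivity of the prox for the $\mu_{\mathrm{AS}}$-strongly convex $f_{S_t}$) to bound $\|x_{t+1}-x_\star\|^2$, expanding the square, conditioning on $x_t$ and taking expectation over $S_t$ so the cross term vanishes and $\|\nabla f_{S_t}(x_\star)\|^2$ becomes $\sigma_{\star,\mathrm{AS}}^2$, then the tower property (Fact~\ref{fact:tower}) and the recurrence lemma (Fact~\ref{fact:fact3}) with $a = (1+\gamma\mu_{\mathrm{AS}})^{-2}$ and $b = \gamma^2\sigma_{\star,\mathrm{AS}}^2(1+\gamma\mu_{\mathrm{AS}})^{-2}$, followed by the same two simplifications of the geometric-series bound used there to arrive at the stated inequality.

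\textbf{Main obstacle.} The conceptual content is entirely in the two identifications above; the rest is a verbatim replay. The only genuine subtlety — and the point I would be most careful about — is the measure-theoretic bookkeeping: justifying that $\nabla$ and $\mathrm{E}_{\xi\sim\mathcal{D}}$ commute (needed both in the strong-convexity lemma and to assert $\nabla f(x_\star)=0$) and that ``for almost all $\xi$'' statements survive integration. Under Assumption~\ref{asm:differential_exp} together with the domination hypotheses implicitly assumed in the excerpt (Fact~\ref{fact:dif-parameter} is cited for exactly this purpose), these are routine, but they should be invoked explicitly rather than glossed over, since in the finite-sum case they were automatic.
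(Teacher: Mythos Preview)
Your proposal is correct and follows essentially the same route as the paper: the paper's Lemma~\ref{lem:0002} is already stated in the expectation form (for almost all $\xi\sim\mathcal{D}$), so the proof of Theorem~\ref{thm:sppm_as} given there---identify $\mu_C$ via that lemma, take the minimum over $C$ with $p_C>0$, identify $\sigma_\star^2$ via \eqref{eqn:8010}, and then invoke the \algname{SPPM} recursion of Section~\ref{sec:proof_simple}---carries over verbatim to the expectation formulation, which is exactly your plan. Your flagged measure-theoretic subtlety (interchanging $\nabla$ and $\mathrm{E}_{\xi\sim\mathcal{D}}$) is the one place where the expectation case differs from the finite-sum case, and the paper handles it by the standing assumption footnoted after Assumption~\ref{asm:differential_exp} together with Fact~\ref{fact:dif-parameter}.
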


\subsection{Missing proof of iteration complexity of \algname{SPPM-AS}}\label{sec:proof_iteration_complexity}
We have seen above that accuracy arbitrarily close to (but not reaching) $\nicefrac{\sigma^2_{\star, \mathrm{AS}}}{\mu_{\mathrm{AS}}^2}$ can be achieved via a single step of the method, provided the stepsize $\gamma$ is large enough. Assume now that we aim for $\epsilon$ accuracy where $\epsilon \leq \nicefrac{\sigma^2_{\star, \mathrm{AS}}}{\mu_{\mathrm{AS}}^2}$. Using the inequality $1-k\leq \exp(-k)$ which holds for all $k>0$, we get 

\begin{align*}
	\left(\frac{1}{1+\gamma \mu_{\mathrm{AS}}}\right)^{2 t}=\left(1-\frac{\gamma \mu}{1+\gamma \mu_{\mathrm{AS}}}\right)^{2 t} \leq \exp \left(-\frac{2 \gamma \mu_{\mathrm{AS}}t}{1+\gamma \mu_{\mathrm{AS}}}\right)
\end{align*}

Therefore, provided that
$$
t \geq \frac{1+\gamma \mu_{\mathrm{AS}}}{2 \gamma \mu_{\mathrm{AS}}} \log \left(\frac{2\left\|x_0-x_{\star}\right\|^2}{\varepsilon}\right),
$$
we get $\left(\frac{1}{1+\gamma \mu_{\mathrm{AS}}}\right)^{2 t}\left\|x_0-x_{\star}\right\|^2 \leq \frac{\varepsilon}{2}$. Furthermore, as long as $\gamma \leq \frac{2 \varepsilon \mu_{\mathrm{AS}}}{2 \sigma_{\star, \mathrm{AS}}^2-\varepsilon \mu_{\mathrm{AS}}^2}$ (this is true provided that the more restrictive but also more elegant-looking condition $\gamma \leq \nicefrac{\varepsilon \mu_{\mathrm{AS}}}{\sigma_{\star, \mathrm{AS}}^2}$ holds), we get
$
\frac{\gamma \sigma_{\star, \mathrm{AS}}^2}{\gamma \mu_{\mathrm{AS}}^2+2 \mu_{\mathrm{AS}}} \leq \frac{\varepsilon}{2} .
$
Putting these observations together, we conclude that with the stepsize $\gamma= \nicefrac{\varepsilon\mu_{\mathrm{AS}}}{\sigma_{\star, \mathrm{AS}}^2}$, we get
$
\mathrm{E}\left[\left\|x_t-x_{\star}\right\|^2\right] \leq \varepsilon
$
provided that
\begin{align*}
	&t \geq \frac{1+\gamma \mu_{\mathrm{AS}}}{2 \gamma \mu_{\mathrm{AS}}} \log \frac{2\left\|x_0-x_{\star}\right\|^2}{\varepsilon} =\left(\frac{\sigma_{\star, \mathrm{AS}}^2}{2 \varepsilon \mu_{\mathrm{AS}}^2}+\frac{1}{2}\right) \log \left(\frac{2\left\|x_0-x_{\star}\right\|^2}{\varepsilon}\right) .
\end{align*}

\subsection{$\sigma_{\star, \mathrm{NICE}}^2(\tau)$ and $\mu_{\mathrm{NICE}}(\tau)$ are Monotonous Functions of $\tau$}\label{sec:proof_nice}
\begin{lemma}
	For all $0\leq\tau\leq n-1$: 
	\begin{enumerate}
		\item $\mu_{\mathrm{NICE}}(\tau + 1)\geq\mu_{\mathrm{NICE}}(\tau)$, 
		\item $\sigma_{\star, \mathrm{NICE}}^2(\tau)=\frac{\frac{n}{\tau}-1}{n-1}\sigma_{\star, \mathrm{NICE}}^2(1)\leq\frac{1}{\tau}\sigma_{\star, \mathrm{NICE}}^2(1)$. 
	\end{enumerate}
	\begin{proof}
		\begin{enumerate}
			\item Pick any $1 \leq \tau<n$, and consider a set $C$ for which the minimum is attained in
			$$
			\mu_{\mathrm{NICE}}(\tau+1)=\min _{C \subseteq[n],|C|=\tau+1} \frac{1}{\tau+1} \sum_{i \in C} \mu_i .
			$$
			
			Let $j=\arg \max _{i \in C} \mu_i$. That is, $\mu_j \geq \mu_i$ for all $i \in C$. Let $C_j$ be the set obtained from $C$ by removing the element $j$. Then $\left|C_j\right|=\tau$ and
			$$
			\mu_j=\max _{i \in C} \mu_i \geq \max _{i \in C_j} \mu_i \geq \frac{1}{\tau} \sum_{i \in C_j} \mu_i.
			$$
			
			By adding $\sum_{i \in C_j} \mu_i$ to the above inequality, we obtain
			$$
			\mu_j+\sum_{i \in C_j} \mu_i \geq \frac{1}{\tau} \sum_{i \in C_j} \mu_i+\sum_{i \in C_j} \mu_i .
			$$
			
			Observe that the left-hand side is equal to $\sum_{i \in C} \mu_i$, and the right-hand side is equal to $\frac{\tau+1}{\tau} \sum_{i \in C_j} \mu_i$. If we divide both sides by $\tau+1$, we obtain
			$$
			\frac{1}{\tau+1} \sum_{i \in C} \mu_i \geq \frac{1}{\tau} \sum_{i \in C_j} \mu_i.
			$$
			
			Since the left-hand side is equal to $\mu_{\mathrm{NICE}}(\tau+1)$, and the right hand side is an upper bound on $\mu_{\mathrm{NICE}}(\tau)$, we conclude that $\mu_{\mathrm{NICE}}(\tau+1) \geq \mu_{\mathrm{NICE}}(\tau)$.
			\item
			In view of \eqref{eqn:8004} we have
			\begin{eqnarray}
				f_C(x) = \sum_{i \in C} \frac{1}{n p_i} f_i(x) .
			\end{eqnarray}
			
			\begin{eqnarray}
				\sigma_{\star, \mathrm{AS}}^2 
				&=& \mathrm{E}_{S \sim \mathcal{S}} \sb{\sqnorm{\sum_{i \in S} \frac{1}{n p_i} \nabla f_i(x_{\star})}}
				= \mathrm{E}_{S \sim \mathcal{S}} \sb{\sqnorm{\sum_{i \in S} \frac{1}{\tau} \nabla f_i(x_{\star})}}\notag\\
			\end{eqnarray}
			
			Let $\chi_{i}$ be the random variable defined by
			\begin{eqnarray}
				\chi_{j}=\left\{\begin{array}{ll}
					1 & j \in S \\
					0 & j \notin S.
				\end{array}\right.
			\end{eqnarray}
			It is easy to show that
			\begin{eqnarray}
				\Exp{\chi_{j}} = \operatorname{Prob}(j \in S)=\fr{\tau}{n}.
			\end{eqnarray}
			Let fix the cohort S. Let $\chi_{i j}$ be the random variable defined by
			\begin{eqnarray}
				\chi_{ij}=\left\{\begin{array}{ll}
					1 & i \in S \text { and } j \in S \\
					0 & \text { otherwise}.
				\end{array}\right.
			\end{eqnarray}
			Note that
			\begin{eqnarray}
				\chi_{ij}=\chi_{i} \chi_{j}.
			\end{eqnarray}
			Further, it is easy to show that
			\begin{eqnarray}
				\Exp{\chi_{ij}}=\operatorname{Prob}(i \in S, j \in S)=\fr{\tau(\tau-1)}{n(n-1)}.
			\end{eqnarray}
			Denote $a_i := \nabla f_i(x_{\star}).$
			
			\begin{eqnarray*}
				\Exp{\sqnorm{\fr{1}{\tau} \sum_{i \in S} a_i}} 
				&=&\fr{1}{\tau^{2}} \Exp{\sqnorm{\sum_{i \in S} a_{i}}}\\
				&=&\fr{1}{\tau^{2}} \Exp{\sqnorm{\sumin \chi_{i} a_{i}}} \\
				&=& \fr{1}{\tau^{2}} \Exp{\sumin \sqnorm{\chi_{i} a_{i}}+\sum_{i \neq j}\left\langle\chi_{i} a_{i}, \chi_{j} a_{j}\right\rangle} \\
				&=& \fr{1}{\tau^{2}} \Exp{\sumin\sqnorm{\chi_{i} a_{i}}+\sum_{i \neq j} \chi_{ij}\left\langle a_{i}, a_{j}\right\rangle} \\
				&=& \fr{1}{\tau^{2}} \sumin \Exp{\chi_{i}}\sqnorm{a_{i}}+\sum_{i \neq j} \Exp{\chi_{ij}}\left\langle a_{i}, a_{j}\right\rangle\\
				&=& \fr{1}{\tau^{2}}\left(\fr{\tau}{n} \sumin\sqnorm{a_{i}}+\fr{\tau(\tau-1)}{n(n-1)} \sum_{i \neq j}\left\langle a_{i}, a_{j}\right\rangle\right) \\
				&=&\fr{1}{\tau n} \sumin\sqnorm{a_{i}}+\fr{\tau-1}{\tau n(n-1)} \sum_{i \neq j}\left\langle a_{i}, a_{j}\right\rangle \\
				&=& \fr{1}{\tau n} \sumin\sqnorm{a_{i}}+\fr{\tau-1}{\tau n(n-1)}\left(\sqnorm{\sumin a_{j}}-\sumin \sqnorm{a_{i}}\right) \\
				&=&\fr{n-\tau}{\tau(n-1)} \fr{1}{n} \sumin\sqnorm{a_{i}}+\fr{n(\tau-1)}{\tau(n-1)}\sqnorm{\fr{1}{n} \sumin a_{i}}\\
				&=&\fr{n-\tau}{\tau(n-1)} \fr{1}{n} \sumin\sqnorm{\nabla f_i(x_{\star})}+\fr{n(\tau-1)}{\tau(n-1)}\sqnorm{\fr{1}{n} \sumin \nabla f_i(x_{\star})}\\
				&=&\fr{n-\tau}{\tau(n-1)} \fr{1}{n} \sumin\sqnorm{\nabla f_i(x_{\star})}\\
				&\le&\fr{1}{\tau} \fr{1}{n} \sumin\sqnorm{\nabla f_i(x_{\star})}
			\end{eqnarray*}
			
		\end{enumerate}
	\end{proof}
\end{lemma}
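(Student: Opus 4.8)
The plan is to prove the two claims separately, since they are of quite different nature: the first is a short combinatorial monotonicity argument, and the second is a second-moment computation over a uniformly random subset together with the optimality condition $\nabla f(x_\star)=0$.

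For the first claim, $\mu_{\mathrm{NICE}}(\tau+1)\geq\mu_{\mathrm{NICE}}(\tau)$, I would start from a set $C\subseteq[n]$ with $|C|=\tau+1$ attaining the minimum in $\mu_{\mathrm{NICE}}(\tau+1)=\min_{|C|=\tau+1}\frac{1}{\tau+1}\sum_{i\in C}\mu_i$. The one observation that does all the work is that the \emph{largest} element of any set dominates the average of the rest: letting $j=\arg\max_{i\in C}\mu_i$ and $C_j=C\setminus\{j\}$, we have $\mu_j\geq\max_{i\in C_j}\mu_i\geq\frac{1}{\tau}\sum_{i\in C_j}\mu_i$. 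Adding $\sum_{i\in C_j}\mu_i$ to both sides turns the left side into $\sum_{i\in C}\mu_i$ and the right side into $\frac{\tau+1}{\tau}\sum_{i\in C_j}\mu_i$; dividing by $\tau+1$ yields $\frac{1}{\tau+1}\sum_{i\in C}\mu_i\geq\frac{1}{\tau}\sum_{i\in C_j}\mu_i\geq\mu_{\mathrm{NICE}}(\tau)$, where the last step uses $|C_j|=\tau$. Since the left-hand side equals $\mu_{\mathrm{NICE}}(\tau+1)$ by the choice of $C$, we are done.

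For the second claim, I would write $\sigma_{\star,\mathrm{NICE}}^2(\tau)=\mathbb{E}_{S}\big\|\frac{1}{\tau}\sum_{i\in S}\nabla f_i(x_\star)\big\|^2$ using $f_C(x)=\sum_{i\in C}\frac{1}{np_i}f_i(x)$ with $p_i=\tau/n$ for nice sampling. Introduce indicator variables $\chi_i=\mathbf{1}_{\{i\in S\}}$, for which $\mathbb{E}[\chi_i]=\tau/n$ and, for $i\neq j$, $\mathbb{E}[\chi_i\chi_j]=\operatorname{Prob}(i\in S,\,j\in S)=\frac{\tau(\tau-1)}{n(n-1)}$. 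Writing $a_i=\nabla f_i(x_\star)$ and expanding the square gives $\mathbb{E}\big\|\tfrac1\tau\sum_{i\in S}a_i\big\|^2=\tfrac{1}{\tau^2}\big(\sum_i\mathbb{E}[\chi_i]\|a_i\|^2+\sum_{i\neq j}\mathbb{E}[\chi_i\chi_j]\langle a_i,a_j\rangle\big)$. Substituting the moments, using $\sum_{i\neq j}\langle a_i,a_j\rangle=\big\|\sum_i a_i\big\|^2-\sum_i\|a_i\|^2$, and then invoking $\sum_i a_i=n\nabla f(x_\star)=0$ (valid because strong convexity makes $x_\star$ the unique minimizer), the cross terms collapse and the expression simplifies to $\frac{n-\tau}{\tau(n-1)}\cdot\frac1n\sum_i\|a_i\|^2$. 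Since the $\tau=1$ case gives $\sigma_{\star,\mathrm{NICE}}^2(1)=\frac1n\sum_i\|a_i\|^2$, this is exactly $\frac{n/\tau-1}{n-1}\,\sigma_{\star,\mathrm{NICE}}^2(1)$; the final inequality $\frac{n/\tau-1}{n-1}\leq\frac1\tau$ is immediate from $\tau\big(\frac n\tau-1\big)=n-\tau\leq n-1$.

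The main obstacle is the bookkeeping in the second-moment computation: correctly identifying $\operatorname{Prob}(i,j\in S)$ for nice sampling, and — more importantly — remembering to use $\nabla f(x_\star)=0$ to annihilate the $\big\|\sum_i a_i\big\|^2$ term, without which the formula would not simplify to a clean multiple of $\sigma_{\star,\mathrm{NICE}}^2(1)$. The first claim, by contrast, is essentially a one-line trick once one thinks to strip off the maximizing element. A minor point to flag is the edge case: the $\sigma^2$ identity implicitly requires $1\le\tau\le n-1$ so that division by $\tau$ is meaningful, while the $\mu$-monotonicity part is fine for all $0\le\tau\le n-1$ with the usual convention.
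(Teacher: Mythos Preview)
Your proposal is correct and matches the paper's proof essentially step for step: the same ``remove the maximizing element'' argument for part~1, and the same indicator-variable second-moment expansion with $\mathbb{E}[\chi_i\chi_j]=\tfrac{\tau(\tau-1)}{n(n-1)}$ followed by the use of $\sum_i\nabla f_i(x_\star)=0$ for part~2. Your remark about needing $\tau\geq 1$ for the $\sigma^2$ identity is also apt.
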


\subsection{Missing proof of Lemma \ref{lem:vr_ss}} 
For ease of notation, let $a_i=\nabla f_i\left(x_{\star}\right)$ and $\hat{z}_j=\left|C_j\right| a_{\xi_j}$, and recall that
\begin{align}\label{eqn:412}
	\sigma_{\star, \mathrm{SS}}^2=\mathrm{E}_{\xi_1, \ldots, \xi_b}\left[\left\|\frac{1}{n} \sum_{j=1}^b \hat{z}_j\right\|^2\right].
\end{align}
where $\xi_j \in C_j$ is chosen uniformly at random. Further, for each $j \in[b]$, let $z_j=\sum_{i \in C_j} a_i$. Observe that $\sum_{j=1}^b z_j=\sum_{j=1}^b \sum_{i \in C_j} a_i=\sum_{i=1}^n a_i=\nabla f\left(x_{\star}\right)=0$. Therefore,
\begin{align}\label{eqn:413}
	\left\|\frac{1}{n} \sum_{j=1}^b \hat{z}_j\right\|^2 & =\frac{1}{n^2}\left\|\sum_{j=1}^b \hat{z}_j-\sum_{j=1}^b z_j\right\|^2\notag \\
	& =\frac{b^2}{n^2}\left\|\frac{1}{b} \sum_{j=1}^b\left(\hat{z}_j-z_j\right)\right\|^2\notag \\
	& \leq \frac{b^2}{n^2} \frac{1}{b} \sum_{j=1}^b\left\|\hat{z}_j-z_j\right\|^2\notag \\
	& =\frac{b}{n^2} \sum_{j=1}^b\left\|\hat{z}_j-z_j\right\|^2,
\end{align}

where the inequality follows from convexity of the function $u \mapsto\|u\|^2$. Next,
\begin{align}\label{eqn:414}
	\left\|\hat{z}_j-z_j\right\|^2=\left\|\left|C_j\right| a_{\xi_j}-\sum_{i \in C_j} a_i\right\|^2=\left|C_j\right|^2\left\|a_{\xi_j}-\frac{1}{\left|C_j\right|} \sum_{i \in C_j} a_i\right\|^2 \leq\left|C_j\right|^2 \sigma_j^2 .
\end{align}

By combining \Cref{eqn:412}, \Cref{eqn:413} and \Cref{eqn:414}, we get
$$
\begin{aligned}
	& \sigma_{\star, \mathrm{SS}}^2 \stackrel{Eqn.~(\ref{eqn:412})}{=} \mathrm{E}_{\xi_1, \ldots, \xi_b}\left[\left\|\frac{1}{n} \sum_{j=1}^b \hat{z}_j\right\|^2\right] \\
	& \stackrel{Eqn.~(\ref{eqn:413})}{\leq} \quad \mathrm{E}_{\xi_1, \ldots, \xi_b}\left[\frac{b}{n^2} \sum_{j=1}^b\left\|\hat{z}_j-z_j\right\|^2\right] \\
	& \stackrel{Eqn.~(\ref{eqn:414})}{\leq} \quad \mathrm{E}_{\xi_1, \ldots, \xi_b}\left[\frac{b}{n^2} \sum_{j=1}^b\left|C_j\right|^2 \sigma_j^2\right] \\
	& =\frac{b}{n^2} \sum_{j=1}^b\left|C_j\right|^2 \sigma_j^2 .
\end{aligned}
$$

The last expression can be further bounded as follows:
$$
\frac{b}{n^2} \sum_{j=1}^b\left|C_j\right|^2 \sigma_j^2 \leq \frac{b}{n^2}\left(\sum_{j=1}^b\left|C_j\right|^2\right) \max _j \sigma_j^2 \leq \frac{b}{n^2}\left(\sum_{j=1}^b\left|C_j\right|\right)^2 \max _j \sigma_j^2=b \max _j \sigma_j^2,
$$
where the second inequality follows from the relation $\|u\|_2 \leq\|u\|_1$ between the $L_2$ and $L_1$ norms, and the last identity follows from the fact that $\sum_{j=1}^b\left|C_j\right|=n$.

\subsection{Stratified sampling against block sampling and nice sampling}\label{sec:ss_vs_bs_nice}
In this section, we present a theoretical comparison of block sampling and its counterparts, providing a theoretical justification for selecting block sampling as the default clustering method in future experiments. Additionally, we compare various sampling methods, all with the same sampling size, $b$: $b$-nice sampling, block sampling with $b$ clusters, and block sampling, where all clusters are of uniform size $b$.

\begin{assumption}\label{asm:uniform-clustering}
	For simplicity of comparison, we assume $b$ clusters, each of the same size, $b$:
	\[
	\left|C_1\right|=\left|C_2\right|=\ldots=\left|C_b\right|=b.
	\]
\end{assumption}
It is crucial to acknowledge that, without specific assumptions, the comparison of different sampling methods may not provide meaningful insights. For instance, the scenario described in Lemma \ref{lem:vr_ss}, characterized by complete inter-cluster homogeneity, demonstrates that block sampling achieves a variance term, denoted as \(\sigma_{\star, \mathrm{SS}}^2\), which is lower than the variance terms associated with both block sampling and nice sampling. However, a subsequent example illustrates examples in which the variance term for block sampling surpasses those of block sampling and nice sampling.
\begin{example}\label{example:SS_worse_than_BS_NICE}
	Without imposing any additional clustering assumptions, there exist examples for any arbitrary $n$, such that $\sigma^2_{\star, \mathrm{SS}} \geq \sigma^2_{\star, \mathrm{BS}}$ and $\sigma^2_{\star, \mathrm{SS}} \geq \sigma^2_{\star, \mathrm{NICE}}$.
	\begin{proof}
		\textbf{Counterexample when SS is worse in neighborhood than BS}\\
		Assume we have such clustering and $\nabla f_i(x_\star)$ such that the centroids of each cluster are equal to zero: $\forall i \in [b]$, $\frac{1}{|C_i|}\sum_{j \in C_i}\nabla f_j(x_\star) = 0$. For instance, this can be achieved in the following case: The dimension is $d=2$, all clusters are of equal size $m$, then assign $\forall i \in [b]$, $\forall j \in C_i$, $\nabla f_j(x_\star) = \left(Re\left(\omega^{mj + i}\right), Im\left(\omega^{mj + i}\right)\right)$ where $\omega = \sqrt[n]{1} \in \mathbb{C}$. Let us calculate $\sigma_{\star, \mathrm{BS}}^2$:
		\begin{align*}
			&\sigma_{\star, \mathrm{BS}}^2 := \sum_{j=1}^b q_j \left\| \sum_{i \in C_j} \frac{1}{np_i} \nabla f_i(x_\star) \right\|^2 = \\
			&= \frac{1}{n^2} \sum_{j=1}^b \frac{|C_j|^2}{q_j} \left\| \frac{1}{|C_j|} \sum_{i \in C_j} \nabla f_i(x_\star) \right\|^2 = 0.
		\end{align*}
		As a result:
		\[\sigma_{\star, \mathrm{BS}}^2 = 0 \leq \sigma_{\star, \mathrm{SS}}^2.\]
		\textbf{Counterexample when SS is worse in neighborhood than NICE}\\
		Here, we employ a similar proof technique as in the proof of Lemma \ref{lem:SS_vs_NICE}.
		Let us choose such clustering $\mathcal{C}_{b, \mathrm{SS}, \max} = \argmax_{\mathcal{C}_b} \sigma^2_{\star, \mathrm{SS}}(\mathcal{C}_b)$. Denote $\mathbf{i}_b \eqdef (i_1, \cdots, i_b)$, $\mathbf{C}_b \eqdef C_1 \times \cdots \times C_b$, and $S_{\mathbf{i}_b} \eqdef \left\| \frac{1}{\tau} \sum_{i \in \mathbf{i}_b} \nabla f_i(x_\star) \right\|$. 
		\begin{align*}
			&\sigma_{\star, \mathrm{NICE}}^2 = \frac{1}{C(n, \tau)} \sum_{C \subseteq [n], |C| = \tau} \left\| \frac{1}{\tau} \sum_{i \in C} \nabla f_i(x_\star) \right\|^2 \\
			&= \frac{1}{C(n, b)} \sum_{\mathbf{i}_b \subseteq [n]} S_{\mathbf{i}_b} \\
			&\stackrel{1}{=} \frac{1}{\#_{\text{clusterizations}}} \sum_{\mathcal{C}_b} \frac{1}{b^b} \sum_{\mathbf{i}_b \in \mathbf{C}_b} S_{\mathbf{i}_b} \\
			&= \frac{1}{\#_{\text{clusterizations}}} \sum_{\mathcal{C}_b} \sigma^2_{\star, \mathrm{SS}}(\mathcal{C}_b) \\
			&\stackrel{2}{\leq} \sigma^2_{\star, \mathrm{SS}}(\mathcal{C}_{b, \mathrm{SS}, \max}).
		\end{align*}
		Equation 1 holds because, in every clusterization $\mathcal{C}_b$, there are $\frac{1}{b^b}$ possible sample combinations $\mathbf{i}_b$. Due to symmetry, one can conclude that each combination $S_{\mathbf{i}_b}$ is counted the same number of times. Equation 2 follows from the definition of $\mathcal{C}_{b, \mathrm{SS}, \max}$. \\
		For illustrative purposes, we can demonstrate this effect with a specific example. Let $n=4$ and define $\forall i~ a_i = \nabla f_i(x^*) \in \mathbb{R}^2$. Let $a_1 = (0, 1)^T$, $a_2 = (1, 0)^T$, $a_3 = (0, -1)^T$, and $a_4 = (-1, 0)^T$. Then fix clustering $\mathcal{C}_b = \left\{C_1 = \{a_1, a_3\}, C_2 = \{a_2, a_4\}\right\}$. Then:
		\begin{align*}
			&\sigma_{\star, \mathrm{SS}}^2 = \frac{1}{4} \sum_{\mathbf{i}_b \in \mathcal{C}_b} \left\| \frac{a_{i_1} + a_{i_2}}{2} \right\|^2 \\
			&= \frac{1}{4} \sum_{\mathbf{i}_b \in \mathcal{C}_b} \left\| (\pm\frac{1}{2}, \pm\frac{1}{2}) \right\|^2 \\
			&= \frac{1}{2}.
		\end{align*}
		\begin{align*}
			&\sigma_{\star, \mathrm{NICE}}^2 = \frac{1}{C(4, 2)} \sum_{i < j} \left\| \frac{a_{i} + a_{j}}{2} \right\|^2 \\
			&= \frac{1}{6} \sum_{i<j} \left\| \frac{a_{i} + a_{j}}{2} \right\|^2 \\
			&= \frac{1}{6} \left( \left[ \left\| \frac{a_1 + a_3}{2} \right\|^2 + \left\| \frac{a_2 + a_4}{2} \right\|^2 \right] + 2 \times \left\| \frac{a_{i_1} + a_{i_2}}{2} \right\|^2 \right) \\
			&= \frac{1}{6} \left( 0 + 2 \times 2 \times \frac{1}{2} \right) \\
			&= \frac{1}{3} \\
			&= \frac{2}{3} \times \sigma_{\star, \mathrm{SS}}^2 \\
			&\leq \sigma_{\star, \mathrm{SS}}^2
		\end{align*}
		
	\end{proof}
\end{example}
To select the optimal clustering, we will choose the clustering that minimizes $\sigma^2_{\star, \mathrm{SS}}$.
\begin{definition}[Stratified sampling optimal clustering]\label{def:SS-clustering}
	Denote the clustering of workers into blocks as $\mathcal{C}_b := \{C_1, C_2, \ldots, C_b\}$, such that the disjoint union of all clusters $C_1 \cup C_2 \cup \ldots \cup C_b = [n]$. Define \emph{block sampling Optimal Clustering} as the clustering configuration that minimizes $\sigma_{\star, \mathrm{SS}}^2$, formally given by:
	\[
	\mathcal{C}_{b, \mathrm{SS}} := \argmin_{\mathcal{C}_b} \sigma^2_{\star, \mathrm{SS}}(\mathcal{C}_b).
	\]
\end{definition}
\begin{restatable}{lemma}{lemma5}
	% \begin{theorem}\label{thm:SS_vs_NICE-BS}
		Given \Cref{asm:uniform-clustering}, the following holds: $\sigma_{\star, \mathrm{SS}}^2\left(\mathcal{C}_{b, \mathrm{SS}}\right) \leq \sigma_{\star, \mathrm{NICE}}^2$ ~for arbitrary $b$. Moreover, the variance within the convergence neighborhood of stratified sampling is less than or equal to that of nice sampling: $\frac{\gamma \sigma^2_{\star, \mathrm{SS}}}{\gamma \mu^2_{\mathrm{SS}} + 2\mu_{\mathrm{SS}}}\left(\mathcal{C}_{b, \mathrm{SS}}\right) \leq \frac{\gamma \sigma^2_{\star, \mathrm{NICE}}}{\gamma \mu^2_{\mathrm{NICE}} + 2\mu_{\mathrm{NICE}}}.$
		\begin{proof}
			\begin{enumerate}
			\item Denote $\mathbf{i}_b \eqdef (i_1, \cdots, i_b)$, $\mathbf{C}_b \eqdef C_1 \times \cdots \times C_b$, and $S_{\mathbf{i}_b} \eqdef \left\|\frac{1}{\tau}\sum_{i \in \mathbf{i}_b} \nabla f_i(x_\star)\right\|$. 
			\begin{align*}
				&\sigma_{\star, \mathrm{NICE}}^2 = \frac{1}{C(n, \tau)} \sum_{C \subseteq [n], |C| = \tau} \left\|\frac{1}{\tau}\sum_{i \in C} \nabla f_i(x_\star)\right\|^2 \\
				&= \frac{1}{C(n, b)} \sum_{\mathbf{i}_b \subseteq [n]} S_{\mathbf{i}_b} \\
				&\stackrel{1}{=} \frac{1}{\#_{\text{clusterizations}}} \sum_{\mathcal{C}_b} \frac{1}{b^b} \sum_{\mathbf{i}_b \in \mathbf{C}_b} S_{\mathbf{i}_b} \\
				&= \frac{1}{\#_{\text{clusterizations}}} \sum_{\mathcal{C}_b} \sigma^2_{\star, \mathrm{SS}}(\mathcal{C}_b) \\
				&\stackrel{2}{\geq} \sigma^2_{\star, \mathrm{SS}}(\mathcal{C}_{b, \mathrm{SS}, \min})
			\end{align*}
			Equation 1 holds because, in every clusterization $\mathcal{C}_b$, there are $\frac{1}{b^b}$ possible sample combinations $\mathbf{i}_b$. Due to symmetry, one can conclude that each combination $S_{\mathbf{i}_b}$ is counted the same number of times. Equation 2 follows from the definition of $\mathcal{C}_{b, \mathrm{SS}, \min}$ as the clustering that minimizes $\sigma^2_{\star, \mathrm{SS}}$, according to \Cref{def:SS-clustering}.
			\item The neighborhood size for SPPM-AS is given by $\frac{\gamma \sigma^2_{\star, \mathrm{AS}}}{\gamma \mu^2_{\mathrm{AS}} + 2\mu_{\mathrm{AS}}}$, denoted as $U_{\mathrm{AS}}$ for simplicity. 
			Define:
			\begin{align*}
				\mu_{\mathrm{NICE}(b)} &:= \min_{\substack{C \subseteq [n] \\ |C|=b}} \frac{1}{b} \sum_{i \in C} \mu_i, \\
				\mu_{\mathrm{SS}} &:= \min_{\mathbf{i}_b \in \mathbf{C}_b} \sum_{j=1}^b \frac{\mu_{i_j} |C_j|}{n} \stackrel{\text{Asm.~10}}{=} \min_{\mathbf{i}_b \in \mathbf{C}_b} \sum_{j=1}^b \frac{\mu_{i_j} b}{b^2} = \min_{\mathbf{i}_b \in \mathbf{C}_b} \frac{1}{b} \sum_{j=1}^b \mu_{i_j}.
			\end{align*}

			Using the definition of the set $\mathbf{C}_b := C_1 \times C_2 \times \cdots \times C_b$, we have $\mathbf{C}_b \subseteq \{ C \subseteq [n] \mid |C|=b \}$. Applying this fact, we obtain:
			\[
			\mu_{\mathrm{SS}} = \min_{\mathbf{i}_b \in \mathbf{C}_b} \frac{1}{b} \sum_{j \in \mathbf{i}_b} \mu_j \geq \mu_{\mathrm{NICE}(b)}.
			\]

			Combining the above with $\sigma_{\star, \mathrm{SS}}^2\left(\mathcal{C}_{b, \mathrm{SS}}\right) \leq \sigma_{\star, \mathrm{NICE}}^2$, we obtain that $U_{\mathrm{SS}}\left(\mathcal{C}_{b, \mathrm{SS}}\right) \leq U_{\mathrm{NICE}}$, demonstrating the variance reduction of SS compared to NICE.
			\end{enumerate}
		\end{proof}
	\end{restatable}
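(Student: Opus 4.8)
The plan is to establish the two inequalities separately: the first, $\sigma_{\star,\mathrm{SS}}^2(\mathcal{C}_{b,\mathrm{SS}})\le\sigma_{\star,\mathrm{NICE}}^2$, by an averaging/symmetry argument, and the second, the bound on the convergence neighborhood, by combining the first with an elementary monotonicity fact plus a comparison of the strong convexity constants. Throughout, Assumption~\ref{asm:uniform-clustering} forces $n=b^2$, and the relevant NICE sampling for the comparison has minibatch size $\tau=b$, so $\sigma_{\star,\mathrm{NICE}}^2=\sigma_{\star,\mathrm{NICE}}^2(b)$ and $\mu_{\mathrm{NICE}}=\mu_{\mathrm{NICE}}(b)$. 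First I would write $\sigma_{\star,\mathrm{NICE}}^2$ as the uniform average of $S_C:=\lVert\tfrac1b\sum_{i\in C}\nabla f_i(x_\star)\rVert^2$ over all $\binom{n}{b}$ subsets $C\subseteq[n]$ of size $b$, and observe that for a fixed clusterization $\mathcal{C}_b=\{C_1,\dots,C_b\}$ (with each $|C_j|=b$, hence $|C_j|/n=1/b$) the stratified-sampling formula collapses to $\sigma_{\star,\mathrm{SS}}^2(\mathcal{C}_b)=\tfrac1{b^b}\sum_{\mathbf{i}_b\in\mathbf{C}_b}S_{\mathbf{i}_b}$, i.e.\ the uniform average of $S_{\mathbf{i}_b}$ over the $b^b$ transversals $\mathbf{i}_b\in\mathbf{C}_b=C_1\times\cdots\times C_b$ (these are automatically distinct $b$-subsets since the blocks are disjoint). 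The crux is then to average $\sigma_{\star,\mathrm{SS}}^2(\mathcal{C}_b)$ over all clusterizations into equal-size blocks: by the symmetry of the $S_n$-action on $[n]$, every $b$-subset $C$ is a transversal of the same number $M$ of clusterizations, so $\sum_{\mathcal{C}_b}\sum_{\mathbf{i}_b\in\mathbf{C}_b}S_{\mathbf{i}_b}=M\sum_{|C|=b}S_C$, and since the total number of (clusterization, transversal) pairs is $(\#\text{clusterizations})\cdot b^b=\binom{n}{b}M$, the double average equals exactly $\tfrac1{\binom{n}{b}}\sum_{|C|=b}S_C=\sigma_{\star,\mathrm{NICE}}^2$. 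Because $\mathcal{C}_{b,\mathrm{SS}}$ minimizes $\sigma_{\star,\mathrm{SS}}^2(\cdot)$ over the finite set of clusterizations (Definition~\ref{def:SS-clustering}), its value is at most this average, giving the first claim.

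For the neighborhood bound, I would abbreviate the radius from Theorem~\ref{thm:sppm_as} as $U_{\mathrm{AS}}(\gamma)=\gamma\sigma_{\star,\mathrm{AS}}^2/(\gamma\mu_{\mathrm{AS}}^2+2\mu_{\mathrm{AS}})$ and note that, for fixed $\gamma>0$, $U_{\mathrm{AS}}$ is nondecreasing in $\sigma_{\star,\mathrm{AS}}^2\ge0$ and nonincreasing in $\mu_{\mathrm{AS}}>0$ (since $\mu\mapsto\gamma\mu^2+2\mu$ is increasing on $(0,\infty)$). Hence it suffices to pair the already-proved inequality $\sigma_{\star,\mathrm{SS}}^2(\mathcal{C}_{b,\mathrm{SS}})\le\sigma_{\star,\mathrm{NICE}}^2$ with $\mu_{\mathrm{SS}}\ge\mu_{\mathrm{NICE}}$. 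The latter uses the same containment as above: under Assumption~\ref{asm:uniform-clustering}, $\mu_{\mathrm{SS}}(\mathcal{C}_b)=\min_{\mathbf{i}_b\in\mathbf{C}_b}\tfrac1b\sum_{j=1}^b\mu_{i_j}$ for any clusterization, and since $\mathbf{C}_b\subseteq\{C\subseteq[n]:|C|=b\}$ as a family of index sets, the minimum over the smaller family is at least $\mu_{\mathrm{NICE}}=\min_{|C|=b}\tfrac1b\sum_{i\in C}\mu_i$; in particular this holds at $\mathcal{C}_{b,\mathrm{SS}}$. Substituting both inequalities into the monotonicity of $U_{\mathrm{AS}}$ yields $U_{\mathrm{SS}}(\mathcal{C}_{b,\mathrm{SS}})\le U_{\mathrm{NICE}}$, which is the second claim.

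The main obstacle I anticipate is making the symmetry/counting step of the first part fully rigorous — in particular verifying that the double sum over clusterizations and over transversals within each clusterization counts every $b$-subset $C$ with one and the same multiplicity $M$, so that the double average collapses \emph{exactly} (not merely up to an inequality) to $\sigma_{\star,\mathrm{NICE}}^2$. Care is needed about whether clusterizations are treated as ordered tuples $(C_1,\dots,C_b)$ or as unordered partitions, and about the fact that a transversal is constrained to pick one element from each distinct block — a constraint that only disappears after averaging over all clusterizations; since $\sigma_{\star,\mathrm{SS}}^2$ is symmetric in the blocks, either convention works, but the bookkeeping must be stated cleanly. Once that counting is pinned down the remainder is routine, as the monotonicity and $\mu$-comparison steps are direct.
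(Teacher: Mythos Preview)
Your proposal is correct and follows essentially the same approach as the paper: an averaging/symmetry argument to show $\sigma_{\star,\mathrm{NICE}}^2$ is the average of $\sigma_{\star,\mathrm{SS}}^2(\mathcal{C}_b)$ over all equal-size clusterizations (whence the minimum is at most the average), followed by the containment $\mathbf{C}_b\subseteq\{C:|C|=b\}$ to get $\mu_{\mathrm{SS}}\ge\mu_{\mathrm{NICE}}$ and conclude via monotonicity of the neighborhood in $(\sigma^2,\mu)$. If anything, your treatment of the counting step (introducing the common multiplicity $M$ and flagging the ordered-vs-unordered convention) is more careful than the paper's one-line symmetry appeal.
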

	\begin{example}
		Consider the number of clusters and the size of each cluster, with \(b=2\), under \Cref{asm:uniform-clustering}. Then, \(\sigma_{\star, \mathrm{SS}}^2\left(\mathcal{C}_{b, \mathrm{SS}}\right) \leq \sigma^2_{\star, \mathrm{BS}}\).
		\begin{proof}
			Let \(n=4\), \(b=2\). Denote \(\forall i \enspace a_i = \nabla f_i(x_*)\). Define \(S^2 := \sum_{i<j} \left\| \frac{a_i + a_j}{2} \right\|^2\).
			\begin{align*}
				&\sigma^2_{\star, \text{SS}} = \frac{1}{4} \left(S^2 - \left\| \frac{a_{C_1^1} + a_{C_1^2}}{2} \right\|^2 - \left\| \frac{a_{C_2^1} + a_{C_2^2}}{2} \right\|^2\right)\\
				&= \frac{1}{4} \left(S^2 - 2\sigma^2_{\star, \text{BS}}\right)
			\end{align*}
			$\mathcal{C}_{b, \mathrm{SS}}$ clustering minimizes \(\sigma^2_{\star, \text{SS}}\), thereby maximizing \(\sigma^2_{\star, \text{BS}}\). Thus,
			\begin{align*}
				&\sigma^2_{\star, \text{SS}} = \frac{1}{4} \left( \left[ \left\| \frac{a_{C_1^1} + a_{C_2^1}}{2} \right\|^2 + \left\| \frac{a_{C_1^2} + a_{C_2^2}}{2} \right\|^2 \right] + \left[ \left\| \frac{a_{C_1^1} + a_{C_2^2}}{2} \right\|^2 + \left\| \frac{a_{C_1^2} + a_{C_2^1}}{2} \right\|^2 \right] \right)\\
				&= \frac{1}{4} \left( 2\sigma^2_{\star, \text{BS}}\left((C_1^1, C_2^1), (C_1^2, C_2^2) \right) + 2\sigma^2_{\star, \text{BS}}\left((C_1^1, C_2^2), (C_1^2, C_2^1) \right) \right)\\
				&= \frac{1}{2} \left( \sigma^2_{\star, \text{BS}}\left((C_1^1, C_2^1), (C_1^2, C_2^2) \right) + \sigma^2_{\star, \text{BS}}\left((C_1^1, C_2^2), (C_1^2, C_2^1) \right) \right)\\
				&\leq \sigma^2_{\star, \text{BS}}.
			\end{align*}
		\end{proof}
	\end{example}
	
	However, it is possible that this relationship might hold more generally. Empirical experiments for different configurations, such as \(b=3\), support this possibility. For example, with \(n=9\), \(b=3\), and \(d=10\), Python simulations where gradients \(\nabla f_i\) are sampled from \(\mathcal{N}(0, 1)\) and \(\mathcal{N}(e, 1)\) across \(1000\) independent trials, show that \(\sigma^2_{\star, \text{SS}} \leq \sigma^2_{\star, \text{BS}}\).
	Question of finding theoretical proof for arbitraty $n$ remains open and has yet to be addressed in the existing literature.

	\subsection{Different approaches of federated averaging}\label{sec:fedavg-sppm}
	Proof of Theorem \ref{thm:FedProx-SPPM-AS}:
	\begin{proof}
		\begin{align*}
			\sqn{x_{t} - x_\star} &= \sqn{\sum_{i\in S_{t}}\frac{1}{\left|S_{t}\right|}\prox_{\gamma f_{i}}(x_{t-1}) - \frac{1}{\left|S_{t}\right|}\sum_{i\in S_{t}} x_\star}\\
			&\stackrel{(\Cref{fact:fact1})}{=} \sqn{\sum_{i\in S_{t}}\frac{1}{\left|S_{t}\right|}\left[\prox_{\gamma f_{i}}(x_{t-1}) - \prox_{\gamma f_{i}}(x_\star + \gamma \nabla f_{i}(x_\star))\right]}\\
			&\stackrel{Jensen}{\leq}\sum_{i\in S_{t}}\frac{1}{\left|S_{t}\right|}\sqn{\left[\prox_{\gamma f_{i}}(x_{t-1}) - \prox_{\gamma f_{i}}(x_\star + \gamma \nabla f_{i}(x_\star))\right]}\\
			&\stackrel{(\Cref{fact:fact2})}{\leq}\sum_{i\in S_{t}}\frac{1}{\left|S_{t}\right|}\frac{1}{(1+\gamma \mu_i)^2} \sqn{x_{t-1} - (x_\star + \gamma \nabla f_{i}(x_\star))}
		\end{align*} 
		\begin{align*}
			&\ec[S_t\sim\mathcal{S}]{\sqn{x_{t}-x_\star}|x_{t-1}}\\
			&\leq\ec[S_t\sim\mathcal{S}]{\sum_{i\in S_{t}}\frac{1}{\left|S_{t}\right|}\frac{1}{(1+\gamma \mu_{i})^2} \sqn{\left(x_{t-1} - x_\star\right) - \gamma \nabla f_{i}(x_\star))}|x_{t-1}}\\
			&\stackrel{\text{Young, }\alpha_i>0}{\leq}\ec[S_t\sim\mathcal{S}]{\sum_{i\in S_{t}}\frac{1}{\left|S_{t}\right|}\frac{1}{(1+\gamma \mu_{i})^2} \left(\left(1+\alpha_i\right)\sqn{x_{t-1} - x_\star} + \left(1+\alpha_i^{-1}\right)\sqn{\gamma \nabla f_{i}(x_\star))}\right)|x_{t-1}}\\
			&\stackrel{\alpha_i=\gamma\mu_i}{=}\ec[S_t\sim\mathcal{S}]{\sum_{i\in S_{t}}\frac{1}{\left|S_{t}\right|}\frac{1}{(1+\gamma \mu_{i})^2} \left(\left(1+\gamma\mu_i\right)\sqn{x_{t-1} - x_\star} + \left(1+\frac{1}{\gamma\mu_i}\right)\sqn{\gamma \nabla f_{i}(x_\star))}\right)|x_{t-1}}\\
			&=\ec[S_t\sim\mathcal{S}]{\sum_{i\in S_{t}}\frac{1}{\left|S_{t}\right|}\left(\frac{1}{1+\gamma \mu_{i}} \sqn{x_{t-1} - x_\star} + \frac{\gamma}{(1 + \gamma\mu_i)\mu_i}\sqn{\nabla f_{i}(x_\star))}\right)|x_{t-1}}\\
			&=\ec[S_t\sim\mathcal{S}]{\frac{1}{\left|S_{t}\right|}\sum_{i\in S_{t}}\frac{1}{1+\gamma \mu_{i}}|x_{t-1}}\sqn{x_{t-1} - x_\star} + \ec[S_t\sim\mathcal{S}]{\frac{1}{\left|S_{t}\right|}\sum_{i\in S_{t}}\frac{\gamma}{(1 + \gamma\mu_i)\mu_i}\sqn{\nabla f_{i}(x_\star))}|x_{t-1}}\\
		\end{align*}
		By applying tower property one can get the following: 
		\begin{align*}
			&\ec[S_t\sim\mathcal{S}]{\sqn{x_{t}-x_\star}}\\
			&=\ec[S_t\sim\mathcal{S}]{\frac{1}{\left|S_{t}\right|}\sum_{i\in S_{t}}\frac{1}{1+\gamma \mu_{i}}}\sqn{x_{t-1} - x_\star} + \ec[S_t\sim\mathcal{S}]{\frac{1}{\left|S_{t}\right|}\sum_{i\in S_{t}}\frac{\gamma}{(1 + \gamma\mu_i)\mu_i}\sqn{\nabla f_{i}(x_\star))}}\\
			&=A_\mathcal{S}\sqn{x_{t-1} - x_\star} + B_\mathcal{S}.
		\end{align*}
		where  $A_\mathcal{S}\eqdef\ec[S_t\sim\mathcal{S}]{\frac{1}{\left|S_{t}\right|}\sum_{i\in S_{t}}\frac{1}{1+\gamma \mu_{i}}}$ and $B_\mathcal{S}\eqdef\ec[S_t\sim\mathcal{S}]{\frac{1}{\left|S_{t}\right|}\sum_{i\in S_{t}}\frac{\gamma}{(1 + \gamma\mu_i)\mu_i}\sqn{\nabla f_{i}(x_\star))}}$.
		By directly applying \Cref{fact:fact3}:
		\[
		\ec[S_t\sim\mathcal{S}]{\sqn{x_{t}-x_\star}}\leq A_\mathcal{S}^t\sqn{x_0-x_\star}+\frac{B_\mathcal{S}}{1-A_\mathcal{S}}.
		\]
	\end{proof}

		\begin{restatable}[Inexact formulation of \algname{SPPM-AS}]{lemma}{lemma2}\label{lemma:sppm-as-inexact}
		% \begin{lemma}[Inexact formulation of \algname{SPPM-AS}]\label{lemma:sppm-as-inexact}
			Let $b > 0\in \mathbb{R}$ and define $\widetilde{\prox}_{\gamma f}(x)$ such that $\forall x\sqn{\widetilde{\prox}_{\gamma f}(x) - \prox_{\gamma f}(x)}\leq b$. Let \Cref{asm:differential} and \Cref{asm:strongly_convex} hold. Let $x_0\in\mathbb{R}^d$ be an arbitrary starting point. Then for any $t\geq0$ and any $\gamma>0$, $s>0$, the iterates of \algname{SPPM-AS} satisfy
			$$
			\ec{\sqn{x_{t} - x_\star}} \leq \left(\frac{1+s}{(1+\gamma\mu)^2}\right)^t\sqn{x_0-x_\star}+\frac{\left(1+s\right)\left(\gamma^2 \sigma^2_{\star} + s^{-1}b(1+\gamma\mu)^2\right)}{\gamma^2\mu^2+2\gamma\mu-s}.
			$$
		\end{restatable}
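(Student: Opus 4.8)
The plan is to mimic the clean proof of exact \algname{SPPM} given earlier in the excerpt (the ``simplified proof of \algname{SPPM}'' in Section~\ref{sec:proof_simple} and the proof of Theorem~\ref{thm:sppm_as}), but to carry the proximal approximation error $b$ through every step. First I would fix an iteration and condition on $x_t$. Writing $x_{t+1} = \widetilde{\prox}_{\gamma f_{S_t}}(x_t)$, I would split
$$
x_{t+1} - x_\star = \bigl(\widetilde{\prox}_{\gamma f_{S_t}}(x_t) - \prox_{\gamma f_{S_t}}(x_t)\bigr) + \bigl(\prox_{\gamma f_{S_t}}(x_t) - x_\star\bigr),
$$
then apply Young's inequality $\|a+b\|^2 \le (1+s^{-1})\|a\|^2 + (1+s)\|b\|^2$ with the free parameter $s>0$ from the statement. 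The first term is bounded by $(1+s^{-1}) b$ using the defining property of $\widetilde{\prox}$. For the second term I would use Fact~\ref{fact:fact1} (every point is a fixed point, so $\prox_{\gamma f_{S_t}}(x_\star) = \prox_{\gamma f_{S_t}}(x_\star + \gamma\nabla f_{S_t}(x_\star))$ because $\nabla f(x_\star)=0$ in expectation) together with Fact~\ref{fact:fact2} (contractivity of the prox for $\mu$-strongly convex functions, where $\mu = \mu_{\mathrm{AS}}$ via Lemma~\ref{lem:0002}) to get
$$
\|\prox_{\gamma f_{S_t}}(x_t) - x_\star\|^2 \le \frac{1}{(1+\gamma\mu)^2}\|x_t - (x_\star + \gamma\nabla f_{S_t}(x_\star))\|^2.
$$

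Next I would expand the squared norm on the right, take conditional expectation over $S_t \sim \mathcal{S}$, and use $\mathrm{E}[\nabla f_{S_t}(x_\star)] = \nabla f(x_\star) = 0$ to kill the cross term, leaving $\|x_t - x_\star\|^2 + \gamma^2\sigma_\star^2$ where $\sigma_\star^2 = \mathrm{E}[\|\nabla f_{S_t}(x_\star)\|^2] = \sigma_{\star,\mathrm{AS}}^2$. Combining, I obtain the one-step recursion
$$
\mathrm{E}\bigl[\|x_{t+1}-x_\star\|^2 \mid x_t\bigr] \le \frac{1+s}{(1+\gamma\mu)^2}\|x_t - x_\star\|^2 + \frac{(1+s)\,\gamma^2\sigma_\star^2}{(1+\gamma\mu)^2} + (1+s^{-1})b,
$$
and then apply the tower property to remove the conditioning. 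Finally I would invoke Fact~\ref{fact:fact3} (the recurrence lemma $s_{t+1} \le a s_t + c$ with $a = \frac{1+s}{(1+\gamma\mu)^2}$ and $c = \frac{(1+s)\gamma^2\sigma_\star^2}{(1+\gamma\mu)^2} + (1+s^{-1})b$), which gives $s_t \le a^t s_0 + \frac{c}{1-a}$. A short algebraic simplification of $\frac{c}{1-a}$, clearing the denominator $(1+\gamma\mu)^2$, yields $\frac{(1+s)(\gamma^2\sigma_\star^2 + s^{-1}b(1+\gamma\mu)^2)}{(1+\gamma\mu)^2 - (1+s)}$; expanding $(1+\gamma\mu)^2 - 1 - s = \gamma^2\mu^2 + 2\gamma\mu - s$ matches the claimed neighborhood exactly.

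The main obstacle—really the only place one must be careful—is the interplay between the Young's-inequality parameter $s$ and the contraction factor: one needs $a < 1$, i.e.\ $1+s < (1+\gamma\mu)^2$, equivalently $\gamma^2\mu^2 + 2\gamma\mu - s > 0$, for Fact~\ref{fact:fact3} to apply and for the stated bound to be meaningful (the denominator must be positive). I would note this implicit admissibility condition on $s$ and $\gamma$. A secondary subtlety is making sure the approximation property of $\widetilde{\prox}$ is used \emph{per realization} of $S_t$ (it holds for all $x$ and all functions $f_{S_t}$, so this is fine) and that $\mu_{\mathrm{AS}}$ and $\sigma_{\star,\mathrm{AS}}^2$ are the correct constants entering via the same reduction (Lemma~\ref{lem:0002} and Equation~\eqref{eqn:8010}) used in Theorem~\ref{thm:sppm_as}; everything else is routine bookkeeping parallel to the exact case.
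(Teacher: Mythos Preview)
Your proposal is correct and follows essentially the same route as the paper's proof: decompose via Young's inequality with parameter $s$, bound the inexactness by $(1+s^{-1})b$, apply the fixed-point and contractivity facts to the exact-prox term, take expectation to kill the cross term, and unroll with the recurrence lemma. You even flag the implicit requirement $\gamma^2\mu^2 + 2\gamma\mu - s > 0$ for the bound to be meaningful, which the paper's proof leaves tacit.
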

	\begin{proof}[Proof of Lemma \ref{lemma:sppm-as-inexact}]
		We provide more general version of \algname{SPPM} proof
		\begin{align*}
			\sqn{x_{t+1} - x_\star} &= \sqn{\widetilde{\prox}_{\gamma f_{\xi_t}(x_t)} - \prox_{\gamma f_{\xi_t}}(x_t)  + \prox_{\gamma f_{\xi_t}}(x_t) - x_\star}\\
			&\stackrel{Young, s>0}{\leq}(1+s^{-1})\sqn{\widetilde{\prox}_{\gamma f_{\xi_t}}(x_t) - \prox_{\gamma f_{\xi_t}}}(x_t) + (1+s)\sqn{\prox_{\gamma f_{\xi_t}}(x_t) - x_\star}\\
			&\leq (1 + s^{-1}) b + (1+s)\sqn{\prox_{\gamma f_{\xi_t}}(x_t) - x_\star}.
		\end{align*} 
		Then proof follows same path as proof Theorem \ref{thm:sppm_as} and we get
		\begin{align*}
			&\ec{\sqn{x_{t+1} - x_\star}} \leq(1+s^{-1})b + (1+s) \frac{1}{(1+\gamma\mu)^2}\left(\sqn{x_t - x_\star} + \gamma^2 \sigma^2_{\star} \right)\\
			&=\frac{1+s}{(1+\gamma\mu)^2}\left(\sqn{x_t - x_\star}  + \left[\gamma^2 \sigma^2_{\star} + s^{-1}b(1+\gamma\mu)^2\right]\right).
		\end{align*}
azc		
		It only remains to solve the above recursion. Luckily, that is exactly what \Cref{fact:fact3} does. In particular, we use it with $s_t = \ec{\sqn{x_t - x_\star}}, A = \frac{1+s}{(1+\gamma\mu)^2}$ and $B = \frac{\left(1+s\right)\left(\gamma^2 \sigma^2_{\star} + s^{-1}b(1+\gamma\mu)^2\right)}{(1+\gamma\mu)^2}$ to get 
		\begin{align*}
			\ec{\sqn{x_t - x_\star}}
			&\leq A^t\sqn{x_0 - x_\star} + B\frac{1}{1-A}\\
			&\leq A^t\sqn{x_0 - x_\star} + B\frac{(1+\gamma\mu)^2}{(1+\gamma\mu)^2-1-s}\\
			&\leq A^t\sqn{x_0-x_\star}+\frac{\left(1+s\right)\left(\gamma^2 \sigma^2_{\star} + s^{-1}b(1+\gamma\mu)^2\right)}{(1+\gamma\mu)^2-1-s}\\
			&=\left(\frac{1+s}{(1+\gamma\mu)^2}\right)^t\sqn{x_0-x_\star}+\frac{\left(1+s\right)\left(\gamma^2 \sigma^2_{\star} + s^{-1}b(1+\gamma\mu)^2\right)}{\gamma^2\mu^2+2\gamma\mu-s}.
		\end{align*}
	\end{proof}

            \chapter{Appendix to Chapter \ref{chapter_symwanda}}
\label{chapter_appendix_symwanda}
\thispagestyle{empty}

\section{Missing Proofs}
\subsection{Proof of \Cref{lemma:lm1}}
By using the definition of $g(\widetilde{\vW})$ in \Cref{obj1}, we have

$$
\begin{aligned}
g(\widetilde{\vW}) & =\sqrt{\sum_{k=1}^c\left\|\vX\left(\widetilde{\vW}_{: k}-\vW_{: k}\right)\right\|_2^2}+\sqrt{\sum_{j=1}^b\left\|\left(\widetilde{\vW}_{j:}-\vW_{j:}\right) \vY\right\|_2^2} \\
& =\sqrt{\sum_{k=1}^c \sum_{i=1}^a\left(\vX_{i:}\left(\widetilde{\vW}_{: k}-\vW_{: k}\right)\right)^2}+\sqrt{\sum_{j=1}^b \sum_{l=1}^d\left(\left(\widetilde{\vW}_{j:}-\vW_{j:}\right) \vY_{: l}\right)^2} \\
& =\sqrt{\sum_{k=1}^c \sum_{i=1}^a\left(\sum_{j=1}^b \vX_{i j}\left(\widetilde{\vW}_{j k}-\vW_{j k}\right)\right)^2}+\sqrt{\sum_{j=1}^b \sum_{l=1}^d\left(\sum_{k=1}^c\left(\widetilde{\vW}_{j k}-\vW_{j k}\right) \vY_{k l}\right)^2}
\end{aligned}
$$

Now say we want to prune away just a single weight $\vW_{j k}$. That is, we want to set $\widetilde{\vW}_{j k}=0$ and $\widetilde{\vW}_{j^{\prime} k^{\prime}}=\vW_{j^{\prime} k^{\prime}}$ for all $\left(j^{\prime}, k^{\prime}\right) \neq(j, k)$. For such a weight matrix $\widetilde{\vW}_{j k}$ the expression for $f(\widetilde{\vW})$ simplifies to

$$
\begin{aligned}
& g(\widetilde{\vW})=\sum_{i=1}^a\left(\sum_{j^{\prime}=1}^b \vX_{i j^{\prime}}\left(\widetilde{\vW}_{j^{\prime} k}-\vW_{j^{\prime} k}\right)\right)^2+\sum_{l=1}^d\left(\sum_{k^{\prime}=1}^c\left(\widetilde{\vW}_{j k^{\prime}}-\vW_{j k^{\prime}}\right) \vY_{k^{\prime} l}\right)^2 \\
& =\sqrt{\sum_{i=1}^a\left(\vX_{i j}\left(\widetilde{\vW}_{j k}-\vW_{j k}\right)+\sum_{j^{\prime} \neq j} \vX_{i j^{\prime}}\left(\widetilde{\vW}_{j^{\prime} k}-\vW_{j^{\prime} k}\right)\right)^2}\\
&\qquad +\sqrt{\sum_{l=1}^d\left(\left(\widetilde{\vW}_{j k}-\vW_{j k}\right) \vY_{k l}+\sum_{k^{\prime} \neq k}\left(\widetilde{\vW}_{j k}-\vW_{j k}\right) \vY_{k l}\right)^2} \\
& =\sqrt{\sum_{i=1}^a(\vX_{i j}\left(0-\vW_{j k}\right)+\sum_{j^{\prime} \neq j} \vX_{i j^{\prime}} \underbrace{\left(\vW_{j^{\prime} k}-\vW_{j^{\prime} k}\right)}_{=0})^2}\\
&\qquad +\sqrt{\sum_{l=1}^d(\left(0-\vW_{j k}\right) \vY_{k l}+\sum_{k^{\prime} \neq k} \underbrace{\left(\widetilde{\vW}_{j k}-\vW_{j k}\right)}_{=0} \vY_{k l})^2} \\
& =\sqrt{\sum_{i=1}^a\left(-\vX_{i j} \vW_{j k}\right)^2}+\sqrt{\sum_{l=1}^d\left(-\vW_{j k} \vY_{k l}\right)^2} \\
& =\sqrt{\sum_{i=1}^a \vX_{i j}^2 \vW_{j k}^2}+\sqrt{\sum_{l=1}^d \vW_{j k}^2 \vY_{k l}^2} \\
& =\left|\vW_{j k}\right|\left(\left\|\vX_{: j}\right\|_2+\left\|\vY_{k:}\right\|_2\right) \eqdef \vS_{j k}.
\end{aligned}
$$

\subsection{Proof of Theorem \ref{thm:main2}}
\begin{itemize}
    \item Assume it is possible to choose matrices $\vX \in \mbR^{a\times b}$ and $\vY \in \mbR^{c\times d}$ such that the identity 
            \begin{equation}\label{eqn1}
                \norm{\vX_{:k}}_2 + \norm{\vY_{j:}}_2 = \alpha_{jk} \eqdef \frac{1}{\norm{\vW_{j:}}_1}+ \frac{1}{\norm{\vW_{:k}}_1}
            \end{equation} 
            holds for all $j, k$. \emph{This is always possible!} 
            
            Indeed, if we choose $a=b$, and let the $j$-th row of $\vX$ be of the form $\vX_{:j} \eqdef t_{j} (1; \cdots; 1)\in \mbR^{b\times 1}$, where $t_j = \frac{1}{\sqrt{b} \norm{\vW_{j:}}_1}$, then $\norm{\vX_{j:}}_2 = t_j \sqrt{b} = \frac{1}{\norm{\vW_{j:}}_1}$. 
            
            Similarly, if we choose $d=c$, and let the $k$-th column of $\vY$ be of the form $\vY_{:k} \eqdef s_k(1,\cdots,1)\in \mbR^{1\times c}$, where $s_k = \frac{1}{\sqrt{c}\norm{\vW_{:k}}_1}$, then $\norm{\vY_{:k}}_2 = s_k\sqrt{c}= \frac{1}{\norm{\vW_{:k}}_1}$.
            
            So, \Cref{eqn1} holds. In this case, our score matrix \Cref{eqn0} reduces to the plug-and-play method \algname{RIA} \citep{RIA}.
\end{itemize}

\begin{itemize}
    \item Another (even simpler) possiblity for constructing matrices $\vX, \vY$ such that \Cref{eqn1} holds is as follows. Let $a=b$, and let $\vX = \Diag(\norm{\vW_{1:}}^{-1}_1, \cdots, \norm{\vW_{b:}}^{-1}_1)$. 
            Clearly, for all $j=1, \cdots, b$ we have $\norm{\vX_{j:}}_2 = \frac{1}{\norm{\vW_{j:}}_1}$. 
            
            Similarly, let $d=c$, and let $\vY = \Diag(\norm{\vW_{:1}}^{-1}_1, \cdots, \norm{\vW_{:c}}^{-1}_1)$. Clearly, for all $k=1, \cdots, c$, we have $\norm{\vY_{:k}}_2 = \frac{1}{\norm{\vW_{:k}}}_1$. 
            
            Therefore, $\norm{\vX_{:j}}_2 + \norm{\vY_{k:}}_2 = \frac{1}{\norm{\vW_{j:}}_1} + \frac{1}{\norm{\vW_{:k}}_1}$ for all $j, k$. So again, our score matrix (\ref{eqn0}) reduces to the plug-and-play method in \cite{RIA}.
\end{itemize}

\subsection{Proof of \Cref{lem:general2}}
Recall that in \Cref{sec:general_solution} $\vD_{\vX} \in \mbR^{b\times b}$ and $\vD_{\vY} \in \mbR^{c\times c}$ are diagonal matrices with entries defined as $\left(\mathbf{D}_{\mathbf{X}}\right)_{ii} = x_i = \left\|\mathbf{W}_{i:}\right\|_1^{-1}$ and $\left(\mathbf{D}_{\mathbf{Y}}\right)_{ii} = y_i = \left\|\mathbf{W}_{:i}\right\|_1^{-1}$ respectively, and $\mathbf{A}\in \mathbb{R}^{a\times b}$ and $\mathbf{B}\in \mbR^{c\times d}$ are arbitrary matrices. 
We first compute $\mathbf{A} \mathbf{D}_{\mathbf{X}}$. This product scales each column of $\mathbf{A}$ by the corresponding $x_i$. Specifically, for the $j$-th column, this operation is expressed as:
\[
\left(\mathbf{A} \mathbf{D}_{\mathbf{X}}\right)_{:j} = x_j \mathbf{A}_{:j}.
\]
The $\ell_2$-norm of this column is then given by:
\[
\left\|\left(\mathbf{A} \mathbf{D}_{\mathbf{X}}\right)_{:j}\right\|_2 = x_j \left\|\mathbf{A}_{:j}\right\|_2 = \frac{\left\|\mathbf{A}_{:j}\right\|_2}{\left\|\mathbf{W}_{j:}\right\|_1}.
\]

Next, we compute $\mathbf{D}_{\mathbf{Y}} \mathbf{B}$. In this computation, each row of $\mathbf{B}$ is scaled by the corresponding $y_i$. For the $k$-th row, the scaling is represented as:
\[
\left(\mathbf{D}_{\mathbf{Y}} \mathbf{B}\right)_{k:} = y_k \mathbf{B}_{k:}.
\]
The $\ell_2$-norm of this row is:
\[
\left\|\left(\mathbf{D}_{\mathbf{Y}} \mathbf{B}\right)_{k:}\right\|_2 = y_k \left\|\mathbf{B}_{k:}\right\|_2 = \frac{\left\|\mathbf{B}_{k:}\right\|_2}{\left\|\mathbf{W}_{:k}\right\|_1}.
\]

Finally, we consider the sum of these norms:
\[
\left\|\left(\mathbf{A} \mathbf{D}_{\mathbf{X}}\right)_{:j}\right\|_2 + \left\|\left(\mathbf{D}_{\mathbf{Y}} \mathbf{B}\right)_{k:}\right\|_2 = \frac{\left\|\mathbf{A}_{:j}\right\|_2}{\left\|\mathbf{W}_{j:}\right\|_1} + \frac{\left\|\mathbf{B}_{k:}\right\|_2}{\left\|\mathbf{W}_{:k}\right\|_1}.
\]

The first term involves scaling the $j$-th column of $\mathbf{A}$ by $x_j$, with the resulting norm being the original column norm divided by the $\ell_1$-norm of the corresponding weights in $\mathbf{W}$. Similarly, the second term scales the $k$-th row of $\mathbf{B}$ by $y_k$, with the resulting norm also being the original row norm divided by the $\ell_1$-norm of the corresponding weights in $\mathbf{W}$.

\subsection{Proof of \Cref{lem:generalized_p_norm}}
We aim to construct $\mathbf{X}_{: j}$ to be proportional to $\mathbf{W}_{j:}^{\top}$. A natural choice is to set
$$
\mathbf{X}_{: j} = c \cdot \mathbf{W}_{j:}^{\top},
$$
where $c$ is a scalar to be determined. A similar condition applies when considering $\mathbf{Y}_{k:}$. The central task is to compute the corresponding scaling factor $c$ for both $\mathbf{X}$ and $\mathbf{Y}$.

To determine $c$, we choose it such that
$$
\left\|\mathbf{X}_{: j}\right\|_2 = \left\|c \cdot \mathbf{W}_{j:}^{\top}\right\|_2 = \left\|\mathbf{W}_{j:}\right\|_p^{-1}.
$$

We now compute the $\ell_2$-norm of $\mathbf{X}_{: j}$:
$$
\left\|c \cdot \mathbf{W}_{j:}^{\top}\right\|_2 = |c| \cdot \left\|\mathbf{W}_{j:}^{\top}\right\|_2 = |c| \cdot \left\|\mathbf{W}_{j:}\right\|_2.
$$

Setting this equal to $\left\|\mathbf{W}_{j:}\right\|_p^{-1}$, we have:
$$
|c| \cdot \left\|\mathbf{W}_{j:}\right\|_2 = \left\|\mathbf{W}_{j:}\right\|_p^{-1}.
$$

Solving for $c$, we obtain:
$$
c = \frac{1}{\left\|\mathbf{W}_{j:}\right\|_p} \cdot \frac{1}{\left\|\mathbf{W}_{j:}\right\|_2}.
$$

Using this value of $c$, we define $\mathbf{X}_{: j}$ as:
$$
\mathbf{X}_{: j} = \frac{1}{\left\|\mathbf{W}_{j:}\right\|_p} \cdot \frac{1}{\left\|\mathbf{W}_{j:}\right\|_2} \cdot \mathbf{W}_{j:}^{\top}.
$$

This construction ensures that
$$
\left\|\mathbf{X}_{: j}\right\|_2 = \left\|\mathbf{W}_{j:}\right\|_p^{-1}.
$$

Similarly, for $\mathbf{Y}$, we have:
$$
\mathbf{Y}_{k:} = \frac{1}{\left\|\mathbf{W}_{:k}\right\|_p} \cdot \frac{1}{\left\|\mathbf{W}_{:k}\right\|_2} \cdot \mathbf{W}_{:k}^{\top},
$$
which satisfies \Cref{eqn:pnorm2}.

By combining these results, we conclude the proof of \Cref{lem:generalized_p_norm}.

\subsection{Proof of \Cref{lem:random_unit_vector_scaling}}

Let $\mathbf{u}$ be any unit vector in $\ell_2$-norm, i.e., $\norm{\mathbf{u}}_2 = 1$. Construct $\vX_{:j} = \norm{\vW_{j:}}_p^{-1} \mathbf{u}$. Then by using the definition of the $\ell_2$-norm, we have 

$$
\left\|\mathbf{X}_{: j}\right\|_2=\| \| \mathbf{W}_{j:}\left\|_p^{-1} \mathbf{u}\right\|_2=\left|\left\|\mathbf{W}_{j:}\right\|_p^{-1}\right|\|\mathbf{u}\|_2=\left\|\mathbf{W}_{j:}\right\|_p^{-1} \cdot 1=\left\|\mathbf{W}_{j:}\right\|_p^{-1}.
$$

Hence, we obtain $\left\|\mathbf{X}_{: j}\right\|_2=\left\|\mathbf{W}_{j:}\right\|_p^{-1}$, which is exactly as desired. 

Similarly, let $\mathbf{v}$ be any unit vector in $\ell_2$-norm, we have $|\vW_{jk}|\cdot \|\mathbf{W}_{:k}\|^{-1}_p$.

Put them together, we prove \Cref{lem:random_unit_vector_scaling}.

\subsection{Proof of \Cref{lem:stochria}}
Given that $\mathbf{X}_{:j}$ and $\mathbf{Y}_{k:}$ are vectors to be constructed, $\mathbf{W}$ is a matrix, and $S_j$ and $S_k$ are randomly sampled index sets from the $j$-th row and $k$-th column of $\mathbf{W}$, respectively, each with cardinality $\tau$, our task is to construct $\mathbf{X}_{:j}$ and $\mathbf{Y}_{k:}$ with specific norms. Specifically, the goal is to construct $\mathbf{X}_{:j}$ and $\mathbf{Y}_{k:}$ such that:
$$
\left\| \mathbf{X}_{:j} \right\|_2 + \left\| \mathbf{Y}_{k:} \right\|_2 = \frac{1}{\left\| \mathbf{W}_{j:S_j} \right\|_1} + \frac{1}{\left\| \mathbf{W}_{S_k:k} \right\|_1},
$$
where $\mathbf{W}_{j:S_j}$ denotes the entries of the $j$-th row of $\mathbf{W}$ at indices in $S_j$, and $\mathbf{W}_{S_k:k}$ denotes the entries of the $k$-th column of $\mathbf{W}$ at indices in $S_k$.

% We have the following construction steps.
% ### Step 1: Construct $\mathbf{X}_{:j}$

We first define the support vector $\mathbf{e}_{S_j}$ of appropriate size (equal to the number of rows in $\mathbf{X}$) as:
$$
(\mathbf{e}_{S_j})_i = \begin{cases}
    \frac{1}{\sqrt{\tau}}, & \text{if } i \in S_j, \\
    0, & \text{otherwise}.
\end{cases}
$$

The vector $\mathbf{e}_{S_j}$ has non-zero entries only at indices in $S_j$, each equal to $\frac{1}{\sqrt{\tau}}$, ensuring that the $\ell_2$-norm of $\mathbf{e}_{S_j}$ is 1:
$$
\left\| \mathbf{e}_{S_j} \right\|_2 = \sqrt{ \sum_{i \in S_j} \left( \frac{1}{\sqrt{\tau}} \right)^2 } = \sqrt{ \tau \cdot \left( \frac{1}{\sqrt{\tau}} \right)^2 } = 1.
$$

To construct $\mathbf{X}_{:j}$, we set:
$$
\mathbf{X}_{:j} = \frac{1}{\left\| \mathbf{W}_{j:S_j} \right\|_1} \cdot \mathbf{e}_{S_j}.
$$

A basic verification shows that the $\ell_2$-norm of $\mathbf{X}_{:j}$ is:
$$
\left\| \mathbf{X}_{:j} \right\|_2 = \frac{1}{\left\| \mathbf{W}_{j:S_j} \right\|_1} \cdot \left\| \mathbf{e}_{S_j} \right\|_2 = \frac{1}{\left\| \mathbf{W}_{j:S_j} \right\|_1} \cdot 1 = \frac{1}{\left\| \mathbf{W}_{j:S_j} \right\|_1}.
$$

% ### Step 2: Construct $\mathbf{Y}_{k:}$

Similarly, we define the support vector $\mathbf{e}_{S_k}$ of appropriate size (equal to the number of columns in $\mathbf{Y}$) as:
$$
(\mathbf{e}_{S_k})_i = \begin{cases}
    \frac{1}{\sqrt{\tau}}, & \text{if } i \in S_k, \\
    0, & \text{otherwise}.
\end{cases}
$$

To construct $\mathbf{Y}_{k:}$, we set:
$$
\mathbf{Y}_{k:} = \frac{1}{\left\| \mathbf{W}_{S_k:k} \right\|_1} \cdot \mathbf{e}_{S_k}^\top.
$$

% ### Step 3: Sum of Norms

Adding the norms:
$$
\left\| \mathbf{X}_{:j} \right\|_2 + \left\| \mathbf{Y}_{k:} \right\|_2 = \frac{1}{\left\| \mathbf{W}_{j:S_j} \right\|_1} + \frac{1}{\left\| \mathbf{W}_{S_k:k} \right\|_1},
$$
which matches the desired expression.

\textbf{Alternative construction using $\ell_1$ and $\ell_2$ norms.}

By definition:
$$
\left\| \mathbf{W}_{j:S_j} \right\|_1 = \sum_{i \in S_j} |w_{j i}|, \quad \left\| \mathbf{W}_{j:S_j} \right\|_2 = \sqrt{ \sum_{i \in S_j} w_{j i}^2 }.
$$

We can construct $\mathbf{X}_{:j}$ as:
$$
\mathbf{X}_{:j} = \frac{1}{\left\| \mathbf{W}_{j:S_j} \right\|_1} \cdot \frac{1}{\left\| \mathbf{W}_{j:S_j} \right\|_2} \cdot \mathbf{W}_{j:S_j}^\top,
$$
where $\mathbf{W}_{j:S_j}^\top$ is a vector with entries:
$$
(\mathbf{W}_{j:S_j}^\top)_i = \begin{cases}
    w_{j i}, & \text{if } i \in S_j, \\
    0, & \text{otherwise}.
\end{cases}
$$

Similarly, we can construct $\mathbf{Y}_{k:}$ as:
$$
\mathbf{Y}_{k:} = \frac{1}{\left\| \mathbf{W}_{S_k:k} \right\|_1} \cdot \frac{1}{\left\| \mathbf{W}_{S_k:k} \right\|_2} \cdot \mathbf{W}_{S_k:k}^\top,
$$
where $\mathbf{W}_{S_k:k}^\top$ is a vector with entries:
$$
(\mathbf{W}_{S_k:k}^\top)_i = \begin{cases}
    w_{i k}, & \text{if } i \in S_k, \\
    0, & \text{otherwise}.
\end{cases}
$$

Putting everything together, we prove \Cref{lem:stochria}.

% \textbf{Notes}

% \begin{itemize}
%     \item \textbf{Random Sampling:} $S_j$ and $S_k$ are randomly selected index sets from the $j$-th row and $k$-th column of $\mathbf{W}$, respectively. Each set has cardinality $\tau$.
%     \item \textbf{Normalization:} The vectors $\mathbf{e}_{S_j}$ and $\mathbf{e}_{S_k}$ are constructed to have $\ell_2$-norm equal to 1.
%     \item \textbf{Relation to $\mathbf{W}$:} The construction directly relates $\mathbf{X}_{:j}$ and $\mathbf{Y}_{k:}$ to the entries of $\mathbf{W}$ at the sampled indices.
% \end{itemize}

% Therefore, we have constructed $\mathbf{X}_{:j}$ and $\mathbf{Y}_{k:}$ such that the sum of their $\ell_2$-norms equals the sum of the inverses of the $\ell_1$-norms of the selected entries from $\mathbf{W}$.

\section{Symmetric Wanda Variant with Squared Frobenius Norms}\label{sec:squared_frobenius}
 
Choose $\varepsilon \in(0,1]$. Given $\vX \in \mathbb{R}^{a \times b}, \vW \in \mathbb{R}^{b \times c}$ and $\vY \in \mathbb{R}^{c \times d}$, define

$$
g^\prime(\widetilde{\vW}):=\|\vX(\widetilde{\vW}-\vW)\|_F^2+\|(\widetilde{\vW}-\vW) \vY\|_F^2,
$$

and consider solving the problem

$$
\begin{aligned}
\operatorname{mininimize} & g^\prime(\widetilde{\vW}) \quad \text { subject to } & \operatorname{Mem}(\widetilde{\vW}) \leq \varepsilon \operatorname{Mem}(\vW), \widetilde{\vW} \in \mathbb{R}^{b \times c}.
\end{aligned}
$$

Note that

$$
\begin{aligned}
g^\prime(\widetilde{\vW}) & =\sum_{k=1}^c\left\|\vX\left(\widetilde{\vW}_{: k}-\vW_{: k}\right)\right\|_2^2+\sum_{j=1}^b\left\|\left(\widetilde{\vW}_{j:}-\vW_{j:}\right) \vY\right\|_2^2 \\
& =\sum_{k=1}^c \sum_{i=1}^a\left(\vX_{i:}\left(\widetilde{\vW}_{: k}-\vW_{: k}\right)\right)^2+\sum_{j=1}^b \sum_{l=1}^d\left(\left(\widetilde{\vW}_{j:}-\vW_{j:}\right) Y_{: l}\right)^2 \\
& =\sum_{k=1}^c \sum_{i=1}^a\left(\sum_{j=1}^b \vX_{i j}\left(\widetilde{\vW}_{j k}-\vW_{j k}\right)\right)^2+\sum_{j=1}^b \sum_{l=1}^d\left(\sum_{k=1}^c\left(\widetilde{\vW}_{j k}-\vW_{j k}\right) \vY_{k l}\right)^2
\end{aligned}
$$

Now say we want to prune away just a single weight $\vW_{j k}$. That is, we want to set $\widetilde{\vW}_{j k}=0$ and $\widetilde{\vW}_{j^{\prime} k^{\prime}}=\vW_{j^{\prime} k^{\prime}}$ for all $\left(j^{\prime}, k^{\prime}\right) \neq(j, k)$. For such a weight matrix $\widetilde{\vW}_{j k}$ the expression for $g^\prime(\widetilde{\vW})$ simplifies to

$$
\begin{aligned}
g^\prime(\widetilde{\vW}) & =\sum_{i=1}^a\left(\sum_{j^{\prime}=1}^b \vX_{i j^{\prime}}\left(\widetilde{\vW}_{j^{\prime} k}-\vW_{j^{\prime} k}\right)\right)^2+\sum_{l=1}^d\left(\sum_{k^{\prime}=1}^c\left(\widetilde{\vW}_{j k^{\prime}}-\vW_{j k^{\prime}}\right) \vY_{k^{\prime} l}\right)^2 \\
& =\sum_{i=1}^a\left(\vX_{i j}\left(\widetilde{\vW}_{j k}-\vW_{j k}\right)+\sum_{j^{\prime} \neq j} \vX_{i j^{\prime}}\left(\widetilde{\vW}_{j^{\prime} k}-\vW_{j^{\prime} k}\right)\right)^2\\
& \qquad +\sum_{l=1}^d\left(\left(\widetilde{\vW}_{j k}-\vW_{j k}\right) \vY_{k l}+\sum_{k^{\prime} \neq k}\left(\widetilde{\vW}_{j k}-\vW_{j k}\right) \vY_{k l}\right)^2 \\
& =\sum_{i=1}^a(\vX_{i j}\left(0-\vW_{j k}\right)+\sum_{j^{\prime} \neq j} \vX_{i j^{\prime}} \underbrace{\left(\vW_{j^{\prime} k}-\vW_{j^{\prime} k}\right)}_{=0})^2\\
&\qquad +\sum_{l=1}^d(\left(0-\vW_{j k}\right) \vY_{k l}+\sum_{k^{\prime} \neq k} \underbrace{\left(\widetilde{\vW}_{j k}-\vW_{j k}\right)}_{=0} \vY_{k l})^2 \\
& =\sum_{i=1}^a\left(-\vX_{i j} \vW_{j k}\right)^2+\sum_{l=1}^d\left(-\vW_{j k} \vY_{k l}\right)^2 \\
& =\sum_{i=1}^a \vX_{i j}^2 \vW_{j k}^2+\sum_{l=1}^d \vW_{j k}^2 \vY_{k l}^2 \\
& =\vW_{j k}^2\left(\left\|\vX_{: j}\right\|_2^2+\left\|Y_{k:}\right\|_2^2\right) \eqdef \vS_{j k}^2.
\end{aligned}
$$

Our proposal is to choose entry $(j, k)$ which the smallest score $\vS_{j k}$. Special cases:

1. If we choose $\vX=\mathbf{0} \in \mathbb{R}^{a \times b}$, then our pruning method reduces to "output" \algname{Wanda}:

$$
\vS_{j k}:=\left|\vW_{j k}\right|\left\|\vY_{k:}\right\|_2
$$

2. If we choose $\vY=\mathbf{0} \in \mathbb{R}^{c \times d}$, then our pruning method reduces to "input" \algname{Wanda}:

$$
\vS_{j k}:=\left|\vW_{j k}\right|\left\|\vX_{: j}\right\|_2.
$$

3. If we choose $\vX=\vW^{\top} \in \mathbb{R}^{c \times b}(a=c)$ and $\vY=\vW^{\top} \in \mathbb{R}^{c \times b}(d=b)$, then our score matrix becomes

$$
\vS_{j k} \stackrel{(27)}{=}\left|\vW_{j k}\right| \sqrt{\left\|\vX_{: j}\right\|_2^2+\left\|\vY_{k:}\right\|_2^2}=\left|\vW_{j k}\right| \sqrt{\left\|\vW_{j:}\right\|_2^2+\left\|\vW_{: k}\right\|_2^2}
$$

Letting $\vG_{j k}^2:=\frac{1}{b+c}\left(\left\|\vW_{j:}\right\|_2^2+\left\|\vW_{: k}\right\|_2^2\right)$, note that

$$
\begin{aligned}
\|\vG\|_F^2 & =\sum_{j=1}^b \sum_{k=1}^c \vG_{j k}^2 \\
& =\frac{1}{b+c} \sum_{j=1}^b \sum_{k=1}^c\left(\left\|\vW_{j:}\right\|_2^2+\left\|\vW_{: k}\right\|_2^2\right) \\
& =\frac{1}{b+c}\left(\sum_{j=1}^b \sum_{k=1}^c\left\|\vW_{j:}\right\|_2^2+\sum_{k=1}^c \sum_{j=1}^b\left\|\vW_{: k}\right\|_2^2\right) \\
& =\frac{1}{b+c}\left(c \sum_{j=1}^b\left\|\vW_{j:}\right\|_2^2+b \sum_{k=1}^c\left\|\vW_{: k}\right\|_2^2\right) \\
& =\frac{1}{b+c}\left(c\|\vW\|_F^2+b\|\vW\|_F^2\right) \\
& =\|\vW\|_F^2
\end{aligned}
$$

Clearly,

$$
\frac{\vS_{j k}^2}{(b+c)\|\vW\|_F^2} \stackrel{}{=} \frac{\vW_{j k}^2 \vG_{j k}^2}{\|\vW\|_F^2}
$$

4. Assume it is possible to choose matrices $\vX \in \mathbb{R}^{a \times b}$ and $\vY \in \mathbb{R}^{c \times d}$ such that the identity

$$
\sqrt{\left\|\vX_{j:}\right\|_2^2+\left\|\vY_{: k}\right\|_2^2}=\alpha_{j k}:=\frac{1}{\left\|\vW_{j:}\right\|_1}+\frac{1}{\left\|\vW_{: k}\right\|_1}
$$

holds for all $j, k$ (note that this is not always possible!). In this case, our score matrix reduces to the plug-and-play method of \cite{RIA}.

\section{Additional Experiments}
\subsection{Implementation Details}
Our selected baselines are implemented using the source code from \algname{Wanda} and \algname{RIA}. The default settings remain unchanged to ensure consistency. Notably, we explicitly set the sequence length to 2048 instead of using the maximum possible length to enable a fair comparison, following the strategy outlined in \algname{RIA}.

The training-free fine-tuning component is based on \algname{DSnoT}. We configure the maximum cycle count to 50 and set the update threshold to 0.1. The default power of variance for regrowing and pruning is set to 1. Additionally, we incorporate the regularized relative design, resulting in our modified approach, \algname{DSnoT}.

The seed for sampling the calibration data is set to 0. For N:M structural pruning, to enable an intuitive comparison, we use the standard approach without employing channel reallocation or linear sum assignment, as used in \algname{RIA}.

\subsection{Optimal $\ell_p$ Norm}\label{sec:optimal_p}
In this study, we further explore the influence of the $\ell_p$ norm, considering standard norms where $p \in [1, 2, 3, 4]$, as well as the $0$-norm and $\infty$-norm. The results are presented in \Cref{tab:p_norms}. We observed that higher $p$ values degrade performance, as reflected by the perplexity scores, with $p=1$ yielding the best results. This may be due to the fact that in pruning, significantly magnifying the differences between weights is not beneficial. Additionally, we found that both the $0$-norm and $\infty$-norm do not yield promising results, as they capture only partial, and often highly biased, information about the weights.

\begin{table}[!tb]
    \centering
    % \caption{\parbox{0.45\textwidth}{Perplexity scores on Wikitext-2 for \algname{p-norm}. The sparsity ratio is $50\%$, and all results correspond to $\alpha=1$. }}
    \caption{Perplexity scores on Wikitext-2 for \algname{p-norm}. The sparsity ratio is $50\%$, and all results correspond to $\alpha=1$. }
    \label{tab:p_norms}
    % \resizebox{0.5\textwidth}{!}{
    \begin{tabular}{l|ccccccc}
    \toprule
    p & LlaMA2-7b & LlaMA2-13b & LlaMA3-8b & OPT-1.3b\\ \midrule
    1 & \textbf{6.88} & \textbf{5.95} & \textbf{9.44} & \textbf{18.95}\\
    2 & 6.90 & 5.96 & 9.48 & 19.02\\
    3 & 6.95 & 6.01 & 9.57 & 19.66\\
    4 & 7.12 & 6.08 & 9.92 & 20.77\\ \midrule
    0 & 7.78 & 6.28 & 10.81 & 22.17\\
    $\infty$ & 8.60 & 6.80 & 11.28 & 24.92\\ \bottomrule
    \end{tabular}
\end{table}

\subsection{$\ell_p$ Norm Re-weighting}\label{sec:norm_p_reweighting}
In this section, we explore different $\ell_p$ norm re-weighting strategies. Our default re-weighting approach is defined in \Cref{eqn:pnorm2} and is referred to as $\mathrm{S1}$. Additionally, we investigate alternative strategies, denoted as $\mathrm{S2}$, $\mathrm{S3}$, and $\mathrm{S4}$, as specified below:

\begin{equation}
\begin{aligned}
    \mathrm{S2} \eqdef \mathbf{S}_{jk} &= |\mathbf{W}_{jk}| / (\|\mathbf{W}_{j:}\|_p + \|\mathbf{W}_{:k}\|_p),\\
    \mathrm{S3} \eqdef \mathbf{S}_{jk} &= |\mathbf{W}_{jk}| \cdot (\|\mathbf{W}_{j:}\|_p + \|\mathbf{W}_{:k}\|_p),\\
    \mathrm{S4} \eqdef \mathbf{S}_{jk} &= |\mathbf{W}_{jk}| / (\|\mathbf{W}_{j:}\|^{-1}_p + \|\mathbf{W}_{:k}\|^{-1}_p).\\ \notag
\end{aligned}
\end{equation}

The comparative results for these strategies are presented in \Cref{tab:p_norm_reweighting}. As shown, our default strategy ($\mathrm{S1}$) achieves the best performance, while the alternative designs fail to deliver improvements.

\begin{table}[!tb]
    \centering
    % \caption{\parbox{0.55\textwidth}{Perplexity scores on Wikitext-2 for \algname{$\ell_p$-norm} re-weighting with different strategies. The sparsity ratio is $50\%$, and all results are computed with $\alpha=0.5$ and $p=1$.}}
    \caption{Perplexity scores on Wikitext-2 for \algname{$\ell_p$-norm} re-weighting with different strategies. The sparsity ratio is $50\%$, and all results are computed with $\alpha=0.5$ and $p=1$.}
    \label{tab:p_norm_reweighting}
    % \resizebox{0.6\textwidth}{!}{
    \begin{tabular}{l|cccc}
    \toprule
    Strategy & LLaMA2-7b & LLaMA2-13b & LLaMA3-8b & OPT-1.3b\\ \midrule
    $\mathrm{S1}$ (default) & 6.81 & 5.83 & 9.34 & 18.08 \\
    $\mathrm{S2}$ & 6.99 & 5.91 & 9.58 & 19.01 \\
    $\mathrm{S3}$ & 9.32 & 6.87 & 17.31 & 31.66 \\
    $\mathrm{S4}$ & 14.51 & 20.78 & 30.47 & 53.17 \\ \bottomrule
    \end{tabular}
\end{table}

We hypothesize that the performance differences arise due to the relative magnitudes of the terms $\|\mathbf{W}_{j:}\|_p + \|\mathbf{W}_{:k}\|_p$ and $\|\mathbf{W}_{j:}\|^{-1}_p + \|\mathbf{W}_{:k}\|^{-1}_p$. Specifically, we assume that $\|\mathbf{W}_{j:}\|_p + \|\mathbf{W}_{:k}\|_p$ is typically large, while $\|\mathbf{W}_{j:}\|^{-1}_p + \|\mathbf{W}_{:k}\|^{-1}_p$ is generally small. Consequently, dividing by the former ($\mathrm{S2}$) or multiplying by the latter ($\mathrm{S4}$) reduces the magnitude of the pruning weights. We will provide statistical evidence to validate this assumption in subsequent sections.

\subsection{Influence of Sampling Ratios}\label{sec:sampling_ratios}
In this section, we examine the impact of varying sampling ratios in \algname{stochRIA}. It is important to note that these ratios are applied over $\min(b, c)$, where $b$ and $c$ represent the number of rows and columns in each layer, respectively. In \Cref{tab:stoch_res_diff_sampling_ratio}, we can see the performance of \algname{stochRIA} is generally stable and compares favorably to that of \algname{RIA} when sampling across entire rows and columns, particularly for $\beta \geq 0.05$. At this threshold and above, the performance is robust, occasionally even surpassing less noisy sampling configurations. However, at an extremely low ratio of $\beta = 0.01$, there is a significant performance decline. Consequently, we have set $\beta = 0.1$ as the default setting for our experiments.

\begin{table}[!tb]
    \centering
    % \caption{\parbox{0.5\textwidth}{Perplexity scores on Wikitext-2 for \algname{stochRIA} with different sampling ratios. The sparsity ratio is $50\%$, and all results correspond to $\alpha=1$. We highlight those performance drops over 0.1 as significant.}}
    \caption{Perplexity scores on Wikitext-2 for \algname{stochRIA} with different sampling ratios. The sparsity ratio is $50\%$, and all results correspond to $\alpha=1$. We highlight those performance drops over 0.1 as significant.}
    \label{tab:stoch_res_diff_sampling_ratio}
    % \resizebox{0.55\textwidth}{!}{
    \begin{tabular}{l|ccccccc}
    \toprule
    ratio ($\beta$) & LlaMA2-7b & LlaMA2-13b & LlaMA3-8b & OPT-1.3b\\ \midrule 
    1 & 6.91 & 5.95 & 9.45 & 18.88\\     \midrule
    0.9 & 6.91 & 5.95 & 9.43 & 18.87\\
    0.5 & 6.90 & 5.95 & 9.42 & 18.84\\
    0.1 & 6.91 & 5.95 & 9.46 & 18.78\\
    0.05 & 6.91 & 5.96 & 9.47 & 18.91\\
    0.01 & 6.98 & 6.00 & 9.69 {\small \color{red} -0.24} & 19.36 {\small \color{red} -0.48}\\
    \bottomrule
    \end{tabular}
\end{table}

\begin{table}[!tb]
    \centering
    % \caption{\parbox{0.65\textwidth}{\ft Hyperparameter Ablations on LLaMA3-8b. Each row shows the non-default hyperparameter values compared to the best-performing method.}}
    \caption{\ft Hyperparameter Ablations on LLaMA3-8b. Each row shows the non-default hyperparameter values compared to the best-performing method.}
    \label{tab:ft_hyper_abl}
    \resizebox{1.0\textwidth}{!}{
    \begin{tabular}{l|c|cccccc}
    \toprule
    base & setting & $p$ & grow relative? & $\gamma_1$ & prune relative? & $\gamma_2$ & perplexity$\downarrow$\\ \midrule
    \multirow{8}{*}{\algname{Wanda}} & best & 2 & \cmark & 0 & \xmark & 0.0001 & 18.99\\ \cmidrule{2-8}
      & \multirow{2}{*}{$p$} & 1 & & & & & 19.04\\ 
      & & $\infty$ & &&&& 18.99\\ \cmidrule{2-8}
      & \multirow{2}{*}{$\gamma$} & & & & & 0 & 18.99\\ 
      & & & &&& 0.001 & 18.99\\ \cmidrule{2-8}
      & \multirow{3}{*}{relative} & & \xmark & & \xmark && 19.49\\
      & & & \xmark & & \cmark && 19.25\\
      & & & \cmark & & \cmark && 19.63\\ \cmidrule{1-8}
    \multirow{8}{*}{\algname{RIA}} & best & 2 & \xmark & 0 & \cmark & 0.001 & 20.50\\ \cmidrule{2-8}
      & \multirow{2}{*}{$p$} & 1 & & & & & 25.61\\ 
      & & $\infty$ & &&&& 20.51\\ \cmidrule{2-8}
      & \multirow{2}{*}{$\gamma$} & & & & & 0 & 20.51\\ 
      & & & &&& 0.0001 & 20.52\\ \cmidrule{2-8}
      & \multirow{3}{*}{relative} & & \xmark & & \xmark && 21.33\\
      & & & \cmark & & \xmark && 22.16\\
      & & & \cmark & & \cmark && 22.60\\
      \bottomrule
    \end{tabular}}
\end{table}
  
\subsection{Analysis of \ft Hyperparameters}\label{sec:ablation_ft_parameters}
In \Cref{sec:training_free_fine_tuning}, we introduced the equations for our proposed \ft method, specifically \Cref{eqn:ft01} and \Cref{eqn:ft02}. This method primarily involves three key hyperparameters: the regularization penalty $\gamma_1, \gamma_2$ and the norm type $p$. Additionally, we consider whether to apply relative importance reweighting during the growing or pruning phases—or during both. Given the number of hyperparameters, understanding their interactions can be computationally expensive and time-consuming.

To address this complexity, we adopt a systematic approach by performing a random search over 20 different combinations of hyperparameter settings. These combinations include: $p \in \{1, 2, \infty\}$, $\gamma_1 \in \{0, 0.0001, 0.001\}$, $\gamma_2 \in \{0, 0.0001, 0.001\}$, and binary choices for relative reweighting (True/False) during both the growing and pruning phases. For each of the 20 trials on the same model, we identify the best-performing combination and treat its hyperparameters as the "ground truth." We then evaluate the behavior under different scenarios and report the results in \Cref{tab:ft_hyper_abl}.

Our findings reveal several notable insights:

\begin{itemize}
    \item Norm type $p$: The smooth $\ell_p$-norm with $p = 2$ consistently achieves the best performance. Compared to the non-differentiable $\ell_1$-norm, which underperforms due to its non-smooth nature, and the $\ell_{\infty}$-norm, which focuses only on the largest values and ignores smaller differences, the $\ell_p$-norm with $p = 2$ balances sensitivity and robustness effectively.
    
    \item  Relative importance reweighting: Applying relative reweighting during either the growing or pruning phase improves performance significantly—yielding a 0.5 improvement on \algname{Wanda} and 0.83 on \algname{RIA}. However, applying reweighting to both phases simultaneously leads to substantial performance degradation, with a 0.64 and 2.1 drop on \algname{Wanda} and \algname{RIA}, respectively.
    
    \item  Regularization penalty $\gamma$: The impact of $\gamma$ is minimal, as variations in its value result in only marginal differences in performance. This finding highlights the greater importance of the relative reweighting strategy.
\end{itemize}

% Appendix for Papers

\refstepcounter{chapter}%
% \chapter*{\thechapter \quad Papers Submitted and Under Preparation}
\chapter*{\thechapter \quad Papers Accepted and Submitted}
Here is a list of papers accepted (14) and submitted (4) during my PhD.

\noindent $\bullet$ \textbf{Kai Yi}, Peter Richtárik. ``Symmetric Pruning of Large Language Models". \emph{arXiv preprint} arXiv:2501.18980 (2025). \emph{ICLR 2025 Workshop on Sparsity in LLMs} (SLLM).\\
\noindent $\bullet$ \textbf{Kai Yi}, Georg Meinhardt, Laurent Condat, and Peter Richtárik. "Fedcomloc: Communication-efficient distributed training of sparse and quantized models." \emph{arXiv preprint} arXiv:2403.09904 (2024).\\
\noindent $\bullet$ Meinhardt, Georg, \textbf{Kai Yi}, Laurent Condat, and Peter Richtárik. ``Prune at the Clients, Not the Server: Accelerated Sparse Training in Federated Learning." \emph{arXiv preprint} arXiv:2405.20623 (2024).\\
\noindent $\bullet$ Vladimir Malinovskii, Denis Mazur, Ivan Ilin, Denis Kuznedelev, Konstantin Pavlovich Burlachenko, \textbf{Kai Yi}, Dan Alistarh, Peter Richtárik. ``PV-Tuning: Beyond Straight-Through Estimation for Extreme LLM Compression." Oral presentation at \emph{The Thirty-eighth Annual Conference on Neural Information Processing Systems} (NeurIPS 2024). \\
% $\bullet$ \textbf{Kai Yi}, Peter Richtárik. ``pFedLLM: Towards Structural Pruning in Federated Large Language Models." \emph{Under Preparation}, 2024.\\
$\bullet$ \textbf{Kai Yi}, Timur Kharisov, Igor Sokolov, and Peter Richtárik. ``Cohort Squeeze: Beyond a Single Communication Round per Cohort in Cross-Device Federated Learning." arXiv preprint arXiv:2406.01115 (2024). {Oral} presentation at \emph{International Workshop on Federated Foundation Models In Conjunction with NeurIPS 2024} (FL@FM-NeurIPS'24).\\
$\bullet$ \textbf{Kai Yi}, Nidham Gazagnadou, Peter Richtárik, and Lingjuan Lyu. ``FedP3: Federated Personalized and Privacy-friendly Network Pruning under Model Heterogeneity." In \textit{The Twelfth International Conference on Learning Representations (ICLR)}. 2024.\\
$\bullet$ Wenxuan Zhang, Paul Janson, \textbf{Kai Yi}, Ivan Skorokhodov, and Mohamed Elhoseiny. ``Continual Zero-Shot Learning through Semantically Guided Generative Random Walks." In \textit{Proceedings of the IEEE/CVF International Conference on Computer Vision (ICCV)}, pp. 11574-11585. 2023.\\
% $\bullet$ Author 1 Name, Author 2 Name, and Author 3 Name, {``Article Title"}, \emph{Submitted to Conference/Journal Name}, Mon. Year.
$\bullet$ \textbf{Kai Yi}, Paul Janson, and Mohamed Elhoseiny. ``Domain-aware continual zero-shot learning." In \textit{Out Of Distribution Generalization in Computer Vision Workshop of ICCV}, 2023.\\
$\bullet$ \textbf{Kai Yi}, Laurent Condat, and Peter Richtárik. ``Explicit personalization and local training: Double communication acceleration in federated learning." \emph{Transactions on Machine Learning Research} (TMLR), 2025.\\
% $\bullet$ \textbf{Kai Yi}, Yaoliang Yu. ``Provable Optimal Newton-type Methods for Bilevel Optimization." \textit{Under Preparation}, 2023.\\
$\bullet$ Condat Laurent, \textbf{Kai Yi}, and Peter Richtárik. ``EF-BV: A unified theory of error feedback and variance reduction mechanisms for biased and unbiased compression in distributed optimization." \textit{Advances in Neural Information Processing Systems (NeurIPS)} 35 (2022): 17501-17514.\\
$\bullet$ Grigory Malinovsky, \textbf{Kai Yi}, and Peter Richtárik. ``Variance reduced proxskip: Algorithm, theory and application to federated learning." \emph{Advances in Neural Information Processing Systems} 35 (2022): 15176-15189.\\
$\bullet$ \textbf{Kai Yi}, Xiaoqian Shen, Yunhao Gou, and Mohamed Elhoseiny. ``Exploring hierarchical graph representation for large-scale zero-shot image classification." In \emph{European Conference on Computer Vision (ECCV)}, pp. 116-132. Cham: Springer Nature Switzerland, 2022.\\
$\bullet$ Jun Chen, Han Guo, \textbf{Kai Yi}, Boyang Li, and Mohamed Elhoseiny. ``Visualgpt: Data-efficient adaptation of pretrained language models for image captioning." In \emph{Proceedings of the IEEE/CVF Conference on Computer Vision and Pattern Recognition (CVPR)}, pp. 18030-18040. 2022.\\
$\bullet$ \textbf{Kai Yi}, Divyansh Jha, Ivan Skorokhodov, and Mohamed Elhoseiny. ``Language-Guided Imaginative Walks: Generative Random Walk Deviation Loss for Unseen Class Recognition using Text Descriptions." In \textit{Learning with Limited Labelled Data for Image and Video Understanding Workshop of CVPR}, 2022.\\
$\bullet$ Divyansh Jha, \textbf{Kai Yi}, Ivan Skorokhodov, and Mohamed Elhoseiny. ``Creative Walk Adversarial Networks: Novel Art Generation with Probabilistic Random Walk Deviation from Style Norms." In \emph{13th International Conference on Computational Creativity (ICCC)}, 2022.\\
$\bullet$ \textbf{Kai Yi}, Yungeng Zhang, Jianye Pang, Xiangrui Zeng, Min Xu. ``Learning To Disentangle Semantic Features From cryo-ET with 3D Spatial Generative Network". \emph{Technical Report}, 2021.\\
$\bullet$ Yuchen Zeng, Gregory Howe, \textbf{Kai Yi}, Xiangrui Zeng, Jing Zhang, Yi-Wei Chang, and Min Xu. ``Unsupervised Domain Alignment Based Open Set Structural Recognition of Macromolecules Captured By Cryo-Electron Tomography." In \emph{2021 IEEE International Conference on Image Processing (ICIP)}, pp. 106-110. IEEE, 2021.\\
$\bullet$ Mohamed Elhoseiny*, \textbf{Kai Yi}*, and Mohamed Elfeki. ``Cizsl++: Creativity inspired generative zero-shot learning." \emph{arXiv preprint} arXiv:2101.00173 (2021).

		\endgroup
% $$$$$$$$$$$$$$$$$  Appendix ends  $$$$$$$$$$$$$$

\end{singlespace}
\end{document}